\documentclass[12pt]{uwathes}
\usepackage[noepsf,unsectioned]{uwamaths}
\usepackage{graphicx}          
\usepackage{lscape}            
\usepackage{color}

\usepackage{multirow,tabularx,dcolumn,url}
\usepackage{algpseudocode}
\usepackage{algorithm, algorithmicx}

\usepackage[scriptsize]{subfigure}
\usepackage{morefloats}

\usepackage{mathtools,amsmath,amssymb,bm}
\usepackage{amsthm}
\usepackage[square, numbers, comma, sort&compress]{natbib} 

\setlength\fboxsep{0pt}
\setlength\fboxrule{0.5pt}

\newcommand{\blankpage}{
\newpage
\mbox{}
\newpage
}

\graphicspath{{./Figures/}} 


\oneandhalfspacing             
\makesymbols   
\title{\LARGE Hyperspectral Imaging and Analysis for Sparse Reconstruction and Recognition}     
\author{Zohaib Khan}   
\date{} 
\bibliographystyle{uwa}

\begin{document}

\maketitle
\textcolor{white}{a}
\thispagestyle{empty}
\vspace{8cm}

\textcolor{white}{a} \hspace{2cm}{\Large \copyright \hspace{2mm} Copyright 2014 by Zohaib Khan}

\clearpage

\textcolor{white}{a}
\thispagestyle{empty}

\vspace{8cm}

\textcolor{white}{a} \hspace{1cm}{\em \large Dedicated to my grandmother, late Bilqees Begum}

\textcolor{white}{a}\clearpage

\setcounter{page}{0}

%
%

\begin{abstract}
Hyperspectral imaging, also known as imaging spectroscopy, captures a data cube of a scene in two spatial and one spectral dimension. Hyperspectral image analysis refers to the operations which lead to quantitative and qualitative characterization of a hyperspectral image. This thesis contributes to hyperspectral imaging and analysis methods at multiple levels.

In a tunable filter based hyperspectral imaging system, the recovery of spectral reflectance is a challenging task due to limiting filter transmission, illumination bias and band misalignment. This thesis proposes a hyperspectral imaging technique which adaptively recovers spectral reflectance from raw hyperspectral images captured by automatic exposure adjustment. A spectrally invariant self similarity feature is presented for cross spectral hyperspectral band alignment. Extensive experiments on an in-house developed multi-illuminant hyperspectral image database show a significant reduction in the mean recovery error.

The huge spectral dimension of hyperspectral images is a bottleneck for efficient and accurate hyperspectral image analysis. This thesis proposes spectral dimensionality reduction techniques from the perspective of spectral only, and spatio-spectral information preservation.
The proposed Joint Sparse PCA selects bands from spectral only data where pixels have no spatial relationship. The joint sparsity constraint is introduced in the PCA regression formulation for band selection. Application to clustering of ink spectral responses is demonstrated for forensic document analysis. Experiments on an in-house developed writing ink hyperspectral image database prove that a higher ink mismatch detection accuracy can be achieved using relatively fewer bands by the proposed band selection method.

Joint Group Sparse PCA is proposed for band selection from spatio-spectral data where pixels are spatially related. The additional group sparsity takes the spatial context into account for band selection. Application to compressed hyperspectral imaging is demonstrated where a test hyperspectral image cube can be reconstructed by sensing only a sparse selection of bands. Experiments on four hyperspectral image datasets including an in-house developed face database verify that the lowest reconstruction error and the highest recognition accuracy is achieved by the proposed compressed sensing technique.

An application of the proposed band selection is also presented in an end-to-end framework of hyperspectral palmprint recognition. An efficient representation and binary encoding technique is proposed for selected bands of hyperspectral palmprint which outperforms state-of-the-art in terms of equal error rates on three databases.
\end{abstract}

\thispagestyle{empty}
\newpage
\setcounter{page}{0}
\begin{acknowledgements}
I begin by thanking Almighty Allah for making me achieve this milestone. I cannot be more thankful in this world than to my parents, grandparents, siblings and relatives who wished and prayed for my success. I also thank my lovely wife who was a great motivation for me to finish my PhD (and get married!).

I am indebted to the unconditional support of my supervisors Dr.~Ajmal Mian and Dr.~Faisal Shafait, through all times, highs and lows. Without their presence, this dream could not be realized. They trained me to undertake research, provided feedback at regular intervals and navigated me through the course of PhD. I am also grateful to Dr.~Yiqun Hu for his co-supervision in the first two years of PhD.

I owe a huge thanks to Prof.~Robyn Owens who provided an insightful directive on my research, crucial to shape the thesis towards the end. I also thank Dr.~Arif Mahmood who reviewed one of my important research contribution. I am grateful to all the anonymous peers who reviewed my numerous submissions to conferences and journals. I am enormously appreciative of the reviewers of this thesis whose timely feedback resulted in great improvement to the final version of this thesis.

One of the most important aspect of this thesis was hyperspectral datasets collection. I thank my supervisors for motivating and encouraging me to collect these datasets. I am extremely thankful to Muhammad Uzair for his support in collection of the hyperspectral face dataset. I thank all the participants, who volunteered for research data collection. I am also grateful to the graduate research coordinator and the head of school for their valuable support as mentors. I thank the administration and support staff at the school who deserve due recognition of their efforts.

I acknowledge the contribution of the external research groups and universities for making their spectral datasets publicly available for research. They are: Carnegie Mellon University (hyperspectral face data), Hong Kong Polytechnic University (multispectral palm data, hyperspectral palm data and hyperspectral face data), Chinese Academy of Sciences Institute of Automation (multispectral palm data), Columbia University (multispectral image data), Harvard University (hyperspectral image data) and Simon Fraser University (hyperspectral illuminant data).

In the end, I would thankfully acknowledge all funding institutions, without whom quality research is inconceivable. This research was sponsored by The Australian Research Council (ARC Grant DP110102399 and DP0881813) and The University of Western Australia (IPRS and UWA Grant 00609 10300067).
\end{acknowledgements}

\thispagestyle{empty}
\newpage
\setcounter{page}{0}      
\tableofcontents
\listoftables         
\listoffigures        
\printsymbols         

\blankpage
\addcontentsline{toc}{chapter}{List of Publications}
\section*{\center \Large List of Publications}

\section*{\normalsize International Journal Publications}

\begin{enumerate}

\item[{[}1{]}] Zohaib~Khan, F.~Shafait and A.~Mian,``Joint Group Sparse PCA for Compressed Hyperspectral Imaging'', \emph{IEEE Trans. Image Processing (under review)}, 2014. (Chapter 5)
\item[{[}2{]}] Zohaib~Khan, F.~Shafait and A.~Mian,``Automatic Ink Mismatch Detection for Forensic Document Analysis'', \emph{Pattern Recognition (under review)}, 2014. (Chapter 6)
\item[{[}3{]}] Zohaib~Khan, F.~Shafait, Y.~Hu and A.~Mian,``Multispectral Palmprint Encoding and Recognition'', \emph{eprint arXiv:1402.2941}, 2014. (Chapter 7)



\hspace{-1.1cm} {\bf \normalsize International Conference Publications (\emph{Fully Refereed})}

\item[{[}6{]}] Zohaib~Khan, F.~Shafait and A.~Mian, ``Adaptive Spectral Reflectance Recovery Using Spatio-Spectral Support from Hyperspectral Images'', \emph{International Conference on Image Processing}, 2014.

\emph{The preliminary ideas and results of this paper were refined and extended to contribute to Chapter 3 of this thesis}.

\item[{[}5{]}] Zohaib~Khan, A.~Mian and Y.~Hu, ``Contour Code: Robust and Efficient Multispectral Palmprint Encoding for Human Recognition'', \emph{International Conference on Computer Vision}, 2011.

\emph{The preliminary ideas and results of this paper were refined and extended to contribute towards} [3]
\emph{which forms Chapter 7 of this thesis}.

\item[{[}6{]}] Zohaib~Khan, F.~Shafait and A.~Mian, ``Hyperspectral Imaging for Ink Mismatch Detection'', \emph{International Conference on Document Analysis and Recognition}, 2013.

\emph{The preliminary ideas and results of this paper were refined and extended to contribute towards} [2]
\emph{which forms Chapter 6 of this thesis}.

\item[{[}7{]}] Zohaib~Khan, Y.~Hu and A.~Mian, ``Facial Self Similarity for Sketch to Photo Matching'', \emph{Digital Image Computing: Techniques and Applications}, 2012.

\emph{The idea of self similarity descriptor in this paper was refined and extended to contribute to Chapter 4 of this thesis}

\item[{[}8{]}] Zohaib~Khan, F.~Shafait and A.~Mian, ``Hyperspectral Document Imaging: Challenges and Perspectives'', \emph{5th International Workshop on Camera-Based Document Analysis and Recognition}, 2013.

\emph{This paper presents an evaluation of the camera based hyperspectral document imaging. The findings of this study contribute towards} [2] {\em which forms Chapter 6 of this thesis}.

\item[{[}9{]}] Zohaib~Khan, F.~Shafait and A.~Mian, ``Towards Automated Hyperspectral Document Image Analysis'', \emph{2nd International Workshop on Automated Forensic Handwriting Analysis}, 2013.
\emph{This paper highlights the potential of hyperspectral imaging in various applications, especially document analysis}.

\end{enumerate}

\textbf{Note}: According to the 2013 ranking of the Computing Research and Education Association of Australasia, CORE, The International Conference on Computer Vision (ICCV) is ranked A$^{*}$ (flagship conference). The International Conference on Document Analysis and Recognition (ICDAR) is ranked A (excellent conference). The International Conference on Image Processing (ICIP) and Digital Image Computing: Techniques and Applications (DICTA) are ranked B (good conference).

%

\mainmatter           

\chapter{Introduction} 
\label{Chapter1} 



The human eye can sense light in the visible range ($\sim$400nm-700nm) of electromagnetic spectrum.
Given its trichromatic design, the human eye is only capable of sensing three primary colors (red, green and blue). This causes metamerism in humans, i.e.~they are unable to distinguish between two apparently similar colors. For instance, two materials with slightly different physical properties may appear identical in color to the naked eye due to metamerism. Moreover, the human eye is only capable of sensing a small range of the electromagnetic spectrum. This limits our ability to seek information beyond the visual range, such as the infra red and the ultraviolet ranges.

Machine vision is free from the limitations of RGB vision. It can benefit from a wide range of the spectrum, both visible and beyond visible range by \emph{hyperspectral imaging}. Hyperspectral imaging levies machine vision from the curse of metamerism and creates opportunities for use in automatic color vision tasks like object detection, segmentation and recognition. It has the capacity to sense more than just three primary colors which offers increased fidelity in sensing the spectral properties of materials.
However, hyperspectral imaging brings its own challenges. Before raw hyperspectral images can be used, a challenge is to separate the true reflectance from the illumination of the scene. This research problem can be termed as the estimation of illumination from hyperspectral images for spectral reflectance recovery.
Unlike RGB images, hyperspectral images are generally captured in a time multiplexed manner, i.e.~each band is captured sequentially, one after the other. During acquisition, small movement of the objects can introduce spatial misalignment of pixels between the consecutive bands which results in spectral noise. Therefore, hyperspectral images cannot be normalized unless the spectral reflectance is recovered and the bands are accurately registered. This thesis investigates preprocessing techniques for normalization of hyperspectral images.

Dimensionality of the data plays a critical role in hyperspectral image analysis. One of the most important question is to see which subset of bands are more informative relative to the rest of the bands. Reduction of bands can subsequently reduce the cost of sensors, the computational cost of analysis, and result in significant performance gains.
This thesis proposes novel band selection techniques for spatial and spatio-spectral hyperspectral image analysis. Application to reconstruction and recognition of objects, biometrics and document analysis are demonstrated to evaluate the superiority of the proposed techniques.

\section{Applications}

Sophisticated hyperspectral imaging systems are open to a number of applications in art, archeology, medical imaging, food inspection, forensics and biometrics. In food quality assessment, hyperspectral imaging can be used to identify premature diseases and defects. For example, rottenness of fruit and meat can usually be detected once visible marks become apparent or a specific odor is released. Hyperspectral imaging can identify such anomalies ahead of time and save huge investments in large scale crops by timely action.
The same quality of hyperspectral imaging can be of benefit in identifying the ripeness of fruit/vegetable in a crop in-vivo. Thus, it avoids the need to pluck out samples and dispatch for analysis in a laboratory.

Hyperspectral imaging is of great value in identification and separation of mineral sources. It can also distinguish writings made in different ink for forensic investigation. Thus unlike destructive forensic examination it allows preservation of a forensic evidence. It can also separate different pigments in a painting or a historical artifact useful for restoration.

Multi-modal biometrics is yet another emerging research area. The ability of hyperspectral imaging to capture the superficial and subsurface information of a human face, palm and fingerprint has translated into research in multispectral biometrics. Such complementary information is relatively more secure and cannot be easily forged to break through a security system.

\begin{figure*}[t]
\centering
\includegraphics[width=1\linewidth]{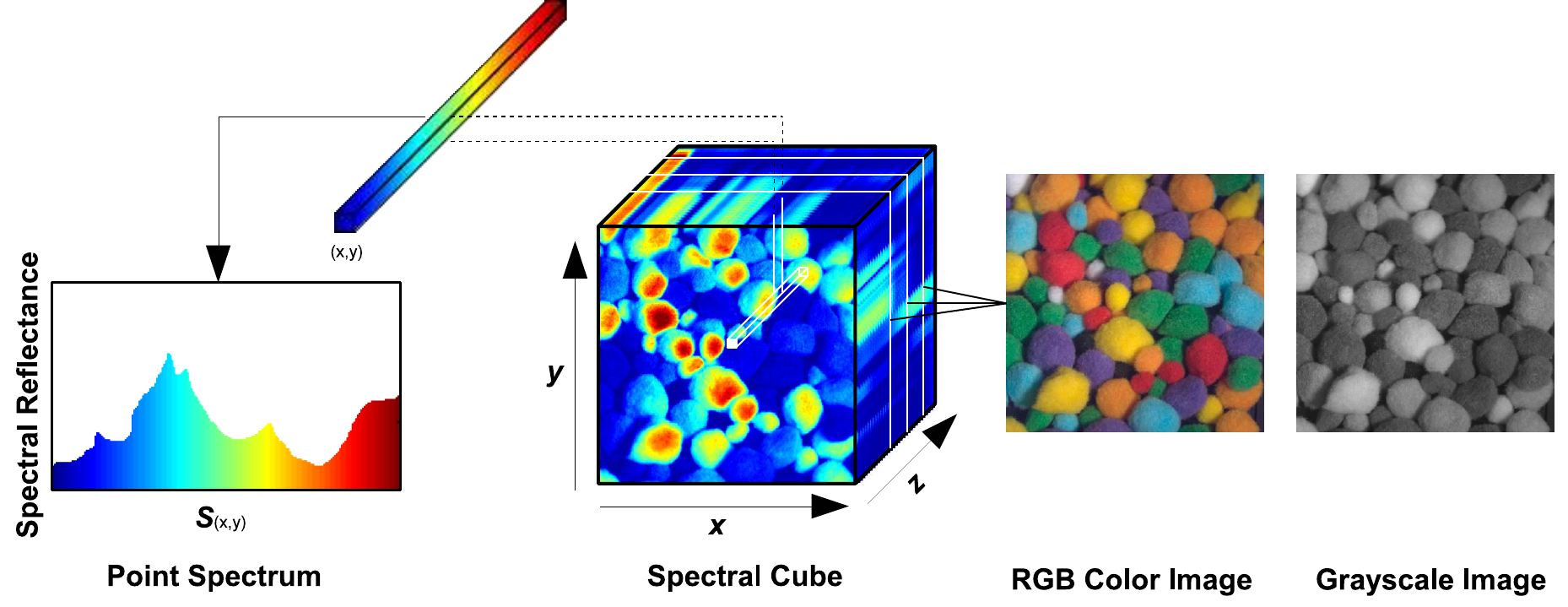}
\caption[Hyperspectral image explained]{A hyperspectral image is represented as a 3D image cube (shown in pseudo-colors). Spectral response at the $(x,y)$ spatial location of the spectral cube. An RGB image is rendered from three bands of the spectral cube. A grayscale image is rendered by averaging the spectral cube.}
\label{fig:MSI}
\end{figure*}

\section{Definitions}

Before outlining the contents of this thesis, it is important to clarify some frequently used terms in hyperspectral image analysis. A \emph{hyperspectral image}, $I(x,y,z)$ has two spatial ($x$ and $y$) and one spectral ($z$) dimension, where $(x,y)$ corresponds to the scene position and $z$ denotes narrow spectral band (see Figure~\ref{fig:MSI}). A \emph{band} refers to a two dimensional slice of a hyperspectral image across the spectral dimension $(z)$. For example, an RGB image has three bands that roughly correspond to the red, green and blue channels of the electromagnetic spectrum. The \emph{spectral response} or \emph{spectral reflectance} is a one dimensional vector of a spatial point on the spectral cube.

%

Spectral images are often classified based on the number of bands. A \emph{multispectral image} has more bands than RGB image, which may lie anywhere in the electromagnetic spectrum. A \emph{hyperspectral image} is a series of contiguous bands, greater in number than multispectral image. The difference between multispectral and hyperspectral is somewhat ambiguous in the literature. There is no consensus in the literature on the number of bands beyond which a multispectral image is considered a hyperspectral image. In this thesis, distinction is made in the use of terms multispectral and hyperspectral mainly with regards to the number of bands. At certain places, the term spectral imaging is used in general to refer to both multi or hyperspectral forms. Table~\ref{tab:differences} lists the major differences between multispectral and hyperspectral images.

\begin{table}[h]
\caption{Differences between multispectral and hyperspectral images.}
\label{tab:differences}
\footnotesize
\begin{center}
\begin{tabular}{r|l}\hline
\textbf{Multispectral image}            & \textbf{Hyperspectral image} \\ \hline
Few bands                               & Many bands            \\
Low spectral resolution (FWHM$\ge10nm$) & High spectral resolution (FWHM$=1$ to $10 nm$)\\
Bands may not be contiguous             & Bands are contiguous \\
Sensor cost and complexity is low       & Sensor cost and complexity is high\\ \hline
\end{tabular}\\
\scriptsize{\emph{FWHM: Full Width at Half Maximum is a measure of the band width.}}
\end{center}
\end{table}

\section{Thesis Structure}

Before each chapter is summarized, an overview of the thesis is presented which is illustrated in Figure~\ref{fig:layout}. In Chapter~\ref{Chapter2} a comprehensive review of hyperspectral imaging and analysis techniques is presented alongside a description of the core concepts in this thesis. Chapter~\ref{Chapter3} presents a hyperspectral imaging and illuminant estimation technique for spectral reflectance recovery. Chapter~\ref{Chapter4} proposes a cross spectral registration method for spatial alignment of hyperspectral images. Chapter~\ref{Chapter5} constitutes a technique for band selection from group structured data with application to compressed hyperspectral imaging and recognition. Chapter~\ref{Chapter6} presents a band selection technique for non-structured data with application to hyperspectral ink mismatch detection. Chapter~\ref{Chapter7} presents a representation and matching technique for hyperspectral palmprint recognition. Chapter~\ref{Chapter8} concludes the thesis with a proposal of future work.

\subsection[Background]{Background (Chapter~2)}

This chapter gives an overview of the hyperspectral imaging and analysis techniques. In the first part of the chapter, some of the most important concepts relevant to foundation of this thesis are briefly discussed. This includes description of regression, regularization and multivariate data analysis to the extent required for the developments in this thesis. In the second part of the chapter, hyperspectral imaging techniques in the current literature are categorized and explained. A taxonomy of hyperspectral imaging methods is presented based on their operating principles and device composition. Some interesting applications of hyperspectral imaging alongside brief discussion of hyperspectral image analysis techniques are presented to highlight the motivation of this research.

\subsection[Spectral Reflectance Recovery]{Spectral Reflectance Recovery from Hyperspectral Images\\(Chapter~3)}

A non-uniform ambient illumination modulates the spectral reflectance of a scene. Tunable filters pose an additional constraint of throughput, which limits the radiant intensity measured by the camera sensor. This results in variable signal-to-noise ratio in spectral bands making accurate recovery of spectral reflectance a challenging task. In this chapter, a novel method for the recovery of spectral reflectance from hyperspectral images is proposed. It adaptively considers the spatio-spectral context of data into account while estimating the scene illumination. The adaptive illumination estimation is improvised by variable exposure imaging which automatically compensates for the SNR of captured hyperspectral images. The proposed spectral reflectance recovery method is evaluated in both simulated and real illumination scenarios. Experiments show that the adaptive illuminant estimation and variable exposure imaging reduce mean error by 13\% and 35\%, respectively.

\begin{landscape}
\begin{figure}
\centering
\includegraphics[width=1\linewidth]{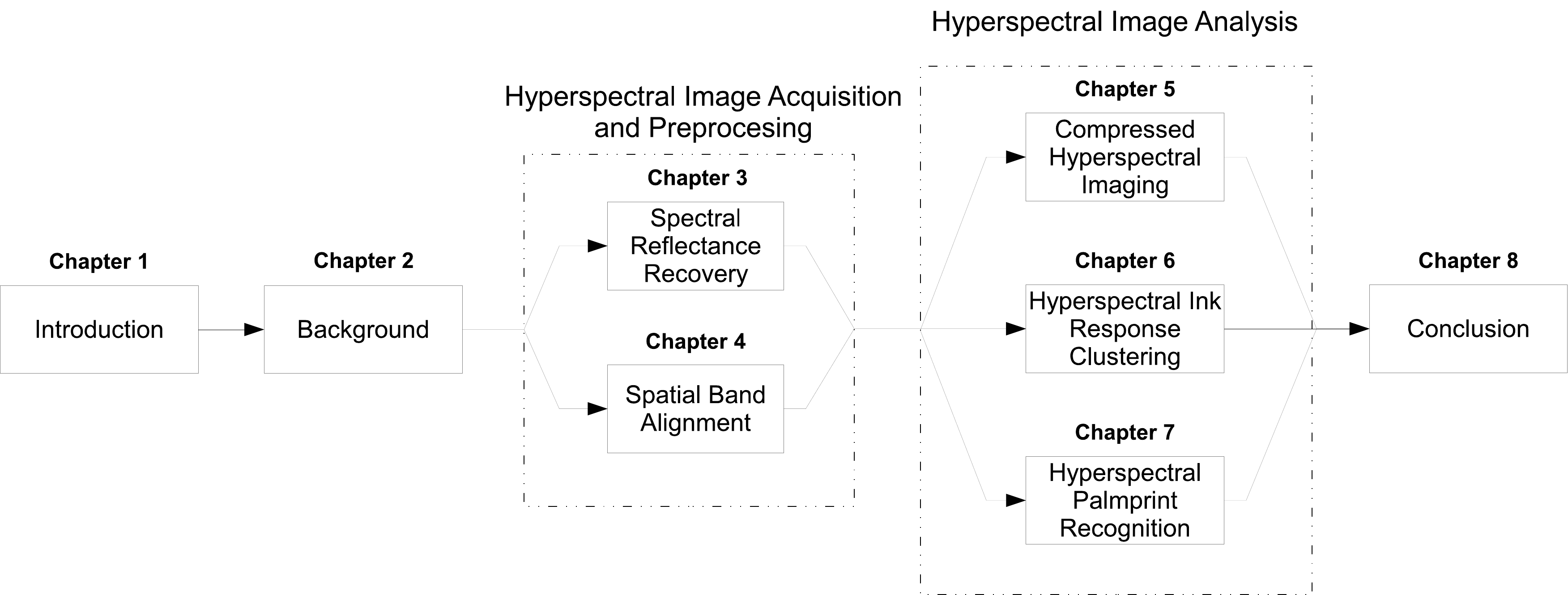}
\caption[Organization of chapters in this thesis]{Organization of chapters in this thesis. The first block comprises Chapters~\ref{Chapter3}~and~\ref{Chapter4} which deal with the normalization of hyperspectral images. The second block comprises of Chapters~\ref{Chapter5},~\ref{Chapter6} and~\ref{Chapter7} on reconstruction and recognition from hyperspectral images which pertain to the major contribution of the thesis.}
\label{fig:layout}
\end{figure}
\end{landscape}

\subsection[Cross-Spectral Registration]{Cross-Spectral Registration of Hyperspectral Face Images\\(Chapter~4)}

Spatial misalignment of hyperspectral images is a challenging phenomenon that can occur during image acquisition of live objects. The consecutive bands of a hyperspectral image are not registered and hence their spectra is not reliable. The spectral variation between bands is the main challenge that poses a hyperspectral image registration problem. In this chapter, a cross spectral similarity based descriptor is proposed for registration of hyperspectral image bands. Self similarity is highly robust to the underlying image modality and hence, particularly useful for hyperspectral images. Experiments are conducted on hyperspectral face images that have misalignment due to movement of subjects. The results indicate that the proposed cross spectral similarity based registration accurately realigns the bands of a hyperspectral face image.

\subsection[Compressed Hyperspectral Imaging]{Joint Group Sparse PCA for Compressed Hyperspectral Imaging (Chapter~5)}

Band selection from hyperspectral images where both spatial and spectral information are contextually important is crucial to hyperspectral image analysis. Current band selection techniques look at one factor at a time, i.e. if the selection relies on the spatial information, the spectral context is ignored and vice versa. In this chapter, this research gap is bridged by proposing a novel band selection technique which applies to spatio-spectral data. Group sparsity is introduced in PCA basis to define spatial context. Joint sparsity is simultaneously enforced to result in spectral band selection. The end result is Joint Group Sparse PCA (JGSPCA) which selects bands based on the spatio-spectral information of the hyperspectral images. The JGSPCA algorithm is validated on the problem of compressed hyperspectral imaging where JGSPCA basis is learned from training data and the hyperspectral images are reconstructed after sensing only a sparse set of bands. Experiments are performed on several publicly available hyperspectral image datasets, including the Harvard and CAVE scene database, CMU and UWA face databases. The reconstruction and recognition results show that the proposed JGSPCA consistently outperforms Sparse PCA and Group Sparse PCA.

\clearpage
\subsection[Hyperspectral Ink Mismatch Detection]{Joint Sparse PCA for Hyperspectral Ink Mismatch Detection\\(Chapter~6)}

In hyperspectral images where the spatial context is not meaningful to reconstruction, a variant of sparse PCA is proposed which solely deals with joint sparsity for band selection from hyperspectral images. A novel joint sparse band selection technique is proposed for hyperspectral ink mismatch detection by clustering of ink spectral responses. Ink mismatch detection provides important clues to forensic document examiners by identifying if some part (e.g. signature) of a note was written with a different ink compared to the rest of the note. An end-to-end camera-based hyperspectral document imaging system is designed for collection of a database of handwritten notes. Algorithmic solutions are presented to the challenges in camera-based hyperspectral document imaging. Extensive experiments show that the proposed technique selects the most fewer and informative bands for ink mismatch detection, compared to a sequential forward band selection approach.

\subsection[Hyperspectral Palmprint Recognition]{Hyperspectral Palmprint Recognition (Chapter~7)}

Palmprints have emerged as a new entity in multi-modal biometrics for human identification and verification. Hyperspectral palmprint images captured in the visible and infrared spectrum not only contain the wrinkles and ridge structure of a palm, but also the underlying pattern of veins; making them a highly discriminating biometric identifier. In this chapter, a representation and encoding scheme for robust and accurate matching of hyperspectral palmprints is proposed. To facilitate compact storage of the feature, a binary hash table structure is designed that allows for efficient matching in large databases. Comprehensive experiments for both identification and verification scenarios are performed on three public datasets -- two captured with a contact-based sensor (PolyU-MS and PolyU-HS dataset), and the third with a contact-free sensor (CASIA-MS dataset). Recognition results in various experimental setups show that the proposed method consistently outperforms existing state-of-the-art methods. Error rates achieved by our method are the lowest reported in literature on all datasets and clearly indicate the viability of hyperspectral imaging in palmprint recognition.

\clearpage

\section{Research Contributions}

The major contributions of the thesis are summarized as follows
\begin{itemize}
\item An automatic exposure adjustment based hyperspectral imaging technique is proposed for illumination recovery. The efficacy of the technique is demonstrated by comparison to traditional fixed exposure imaging in recovery of illumination.
\item An illuminant estimation and reflectance recovery technique from hyperspectral images is presented. The accuracy of the technique is validated in simulated and real illumination hyperspectral scenes of an in-house developed multi illuminant hyperspectral scene database.
\item A self similarity based descriptor is proposed for cross spectral hyperspectral image registration. The algorithm caters for the inter-band misalignments during hyperspectral face image acquisition.
\item Joint Sparse Principal Component Analysis (JSPCA) is proposed which jointly preserves the spectral responses of the hyperspectral images. An application to band selection for hyperspectral ink mismatch detection is demonstrated on an in-house developed database.
\item Joint Group Sparse Principal Component Analysis (JGSPCA) is presented which jointly preserves the spatio-spectral structure of hyperspectral images. An application to compressed hyperspectral imaging and hyperspectral face recognition is demonstrated on various datasets, including an in-house developed hyperspectral face database.
\item A multidirectional feature encoding and binary hash table matching technique is proposed for hyperspectral palmprint recognition. The proposed Joint Group Sparse PCA is used for band selection from hyperspectral palmprint images which outperforms existing band selection techniques.
\end{itemize}


\chapter{Background} 
\label{Chapter2} 



This chapter presents some of the foundational concepts and ideas that are crucial to the understanding of the developments proposed in this thesis. In Section~\ref{sec:sparse}, linear regression, regularization and principal component analysis which are the core ideas concerning reconstruction and recognition techniques are briefly introduced. In Section~\ref{sec:hs-imaging}, the multispectral and hyperspectral imaging techniques developed in the past are presented. This study paves the way for the hyperspectral imaging technique presented in this thesis. In Section~\ref{sec:hs-overview}, a brief survey of the spectral image analysis in computer vision and pattern recognition is provided. The scope of this survey is limited to the multispectral and hyperspectral imaging systems used in ground-based computer vision applications. Therefore, high cost and complex sensors for remote sensing, astronomy, and other geo-spatial applications are excluded from the discussion.

\section{Sparse Reconstruction and Recognition}
\label{sec:sparse}

Supervised learning aims to model the relationship between the observed data $\mathbf{x}$ (predictor) and the external factor $\mathbf{y}$ (response). There are two main tasks in supervised learning, \emph{regression} and \emph{classification}. If the aim is to predict a continuous response variable, the task is known as regression. Otherwise, if the aim of prediction is to classify the observations into a discrete set of labels, the task is classification.

\subsection{Linear Regression}

Linear regression aims to model the relationship between a response variable and one or more predictor variables by adjusting the linear model parameters so as to reduce the sum of squared residuals to a minimum. Consider a data matrix $\mathbf{X} = {[\mathbf{x}_1,\mathbf{x}_2,...,\mathbf{x}_n]}^\intercal, \in \mathbb{R}^{n \times p}$ and its corresponding response vector $\mathbf{y} \in \mathbb{R}^n$. Linear regression ($\mathbf{y} \approx \mathbf{Xw}$) can be cast as a convex optimization problem by minimizing the following objective function
\begin{equation}
\underset{\mathbf{w}}{\arg \min} {\|\mathbf{y}-\mathbf{Xw}\|}^2~,
\end{equation}
where $\mathbf{w} \in \mathbb{R}^p$ are the model parameters or simply regression coefficients. The model$\mathbf{w}$ can be used to predict the response of a new data point. However, this form of linear regression is sensitive to noise and any outlier data sample is likely to bias the model prediction.

\subsection{Regularized Regression}

If the observation matrix is affected with noise or there are less number of predictor variables compared to the number of samples ($p<n$), the regression model is overfitted. One solution in statistical learning is to shrink the regression coefficients by penalizing the norm of $\mathbf{w}$
\begin{equation}
\underset{\mathbf{w}}{\arg \min} {\|\mathbf{y}-\mathbf{Xw}\|}^2+\lambda\|\mathbf{w}\|^2~.
\end{equation}
The added ridge penalty terms shrinks to coefficients corresponding to noisy predictors so as to reduce the residual error. The parameter $\lambda$ controls the bias/variance tradeoff of the model. Higher value of $\lambda$ results in lower bias and higher variance.

Consider a regression problem with $k$ tasks, such that the response variable is a vector $\mathbf{Y} = {[\mathbf{y}_1,\mathbf{y}_2,...,\mathbf{y}_n]}^\intercal, \in \mathbb{R}^{n \times k}$. The target is to seek $k$ regression vectors $\mathbf{W} = [\mathbf{w}_1,\mathbf{w}_2,...,\mathbf{w}_k], \in \mathbb{R}^{p \times k}$ which involves multiple regression tasks. A multi-task regression problem ($\mathbf{Y} \approx \mathbf{XW}$) can be formulated as
\begin{equation}
\underset{\mathbf{W}}{\arg \min} {\|\mathbf{Y}-\mathbf{XW}\|}_F^2+\lambda\|\mathbf{W}\|^2~,
\end{equation}
where $\|.\|_F$ is the Frobenius norm defined as $\sqrt{\sum_i \sum_j w_{ij}^2}$.

\subsection{Sparse Multi-Task Regression}

Each coefficient of a regression vector corresponds to the linear combination of all the predictor variables to get an approximate response. In some instances, it is required to use only a few predictor variables which are most informative to the approximation of a response variable. Sparsity inducing norms allow only a few non-zero coefficients in a regression vector, while achieving the closest approximation to the response variable.
\begin{equation}
\label{eq:sparseMTR}
\underset{\mathbf{W}}{\arg \min} {\|\mathbf{Y}-\mathbf{XW}\|}_F^2+\lambda\,\psi(\mathbf{W})~.
\end{equation}
The first term of the objective function can be interpreted as the reconstruction loss term which minimizes the difference between the data and its approximate representation. The function $\psi(.)$ is a cost function aimed at forcing the representation (linear combination) to be sparse. It could generally be
\begin{itemize}
\item The $\ell_0$ pseudo norm, ${\|\mathbf{w}\|}_0 \triangleq n\{i | \quad w_i \neq 0 \}$ (non-convex)
\item The $\ell_1$ norm, ${\|\mathbf{w}\|}_1 \triangleq \sum_{i=1}^{p} |w_i|$ (convex)
\end{itemize}
The $\ell_0$ norm is non-differentiable and its solution is NP-hard~\cite{natarajan1995sparse}. A convex relaxation to the $\ell_0$ norm in the form of $\ell_1$ norm is most common choice for sparsity~\cite{tibshirani1996regression,zou2006adaptive,meinshausen2006high}.

\subsection{Principal Component Analysis}

Principal Component Analysis (PCA) is a useful transformation for data interpretation and visualization. It highlights the patterns of data distribution, and the interaction of various factors that make the data. It also allows a simplified graphical representation of high dimensional data by reducing the least significant dimensions of the transformed data. The principal components of a data can be computed in many different ways. The Karhunen-Lo$\grave{e}$ve Transform~\cite{jolliffe2005principal}, Singular Value Decomposition(SVD), and the Power Method~\cite{davidson1975iterative} are some of the well known tools.

The SVD based PCA computation is explained further because of its widespread use and better numerical accuracy. Consider a data matrix $\mathbf{X} \in \mathbb{R}^{n \times p}$ of $n$ observations and $p$ features. Each row of $\mathbf{X}$ is an observation, each column corresponds to a feature. Before any further steps, it is important to normalize the data matrix by subtracting the mean $\bar{\mathbf{x}} \in \mathbb{R}^{p}$ from each row of $\mathbf{X}$. This results in a centralized data whose mean is zero. Then, SVD of the data matrix is computed. It is a form of matrix factorization technique and an efficient and accurate tool for computing all the eigenvalues/eigenvectors of a matrix. Many algorithms have been implemented for its efficient computation which are present in statistical libraries of most programming languages.

\begin{figure}[h]
\centering
\subfigure[Data]{\includegraphics[width=0.3\linewidth]{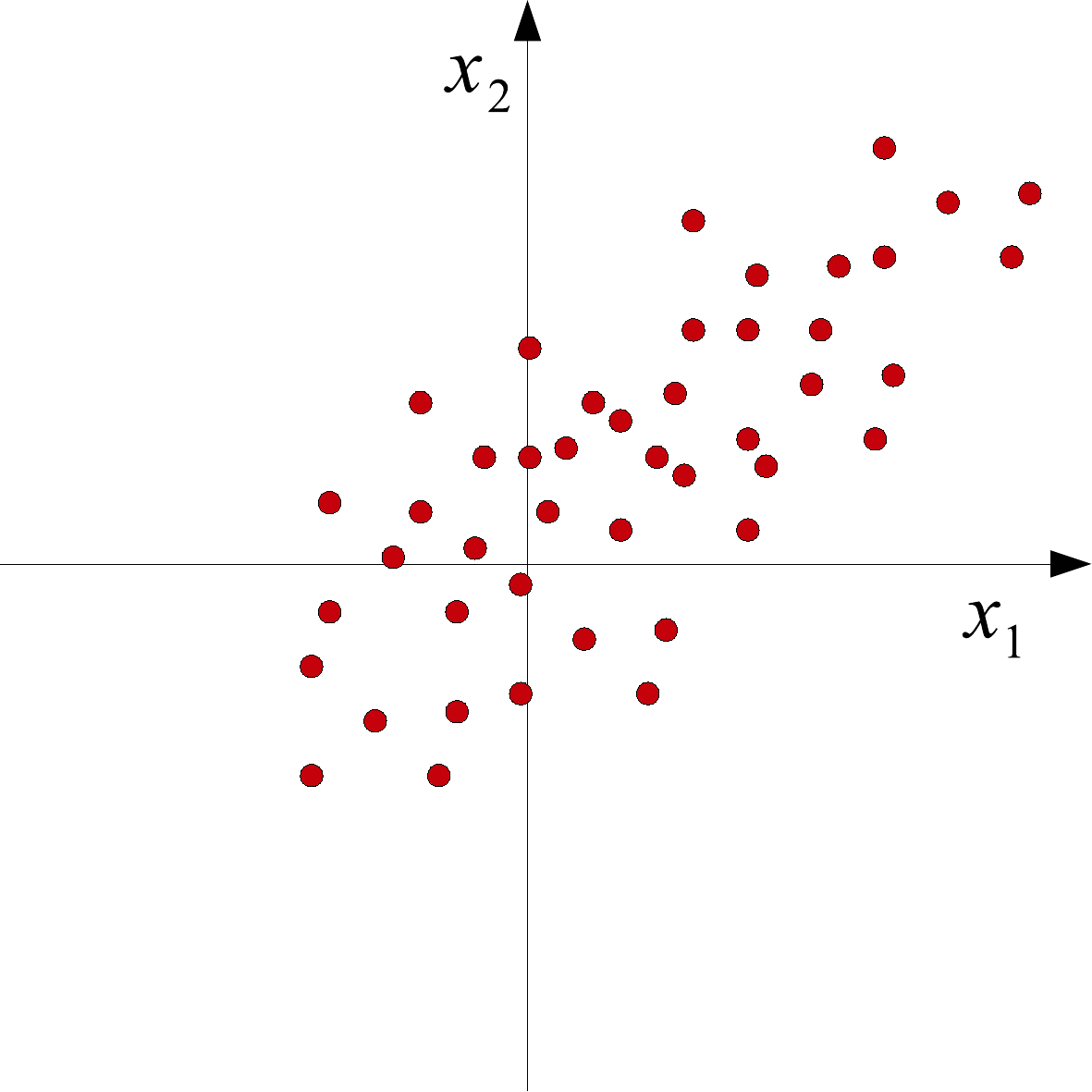}}
\subfigure[PC directions]{\includegraphics[width=0.3\linewidth]{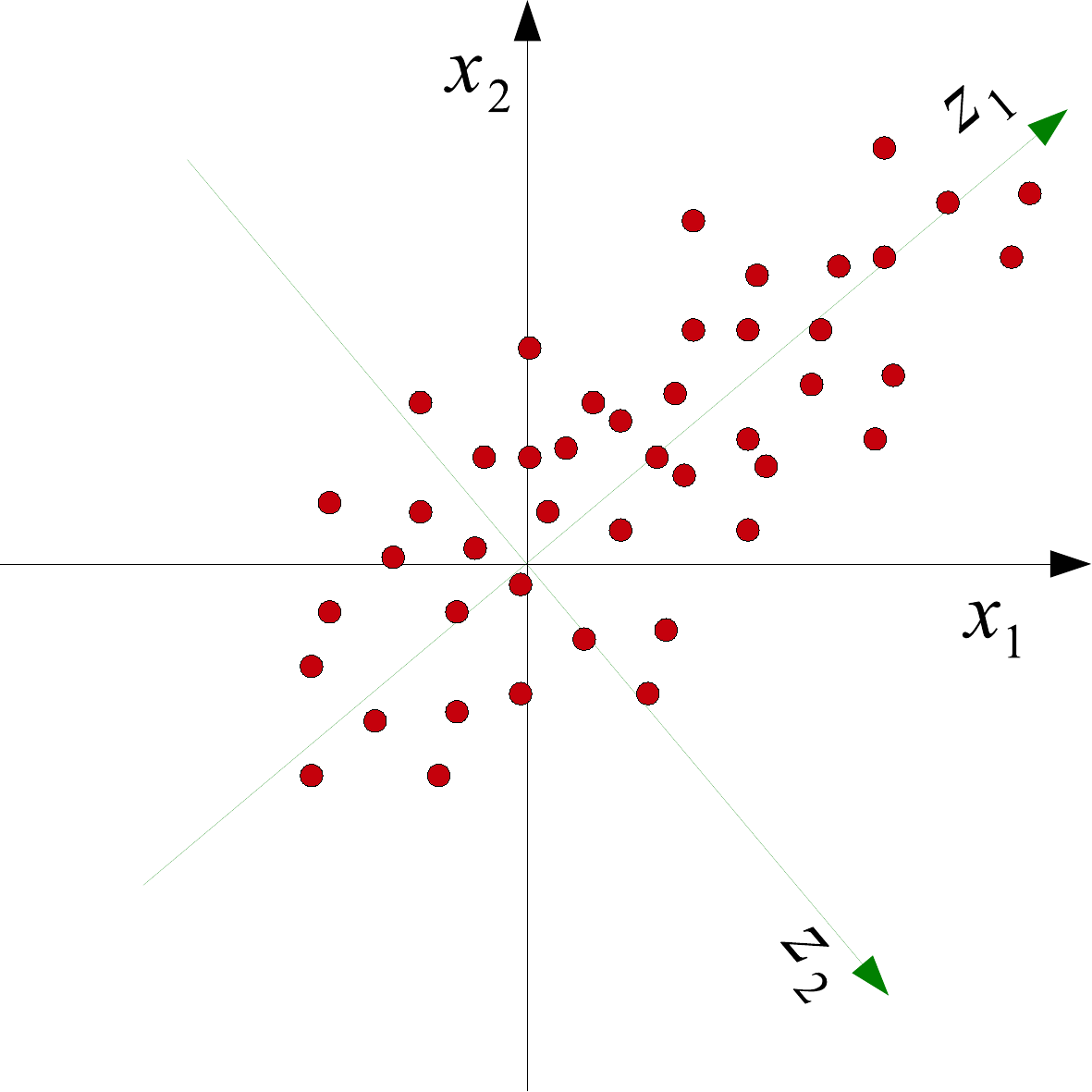}}
\subfigure[Projection on $1^{\textrm{st}}$ PC]{\includegraphics[width=0.3\linewidth]{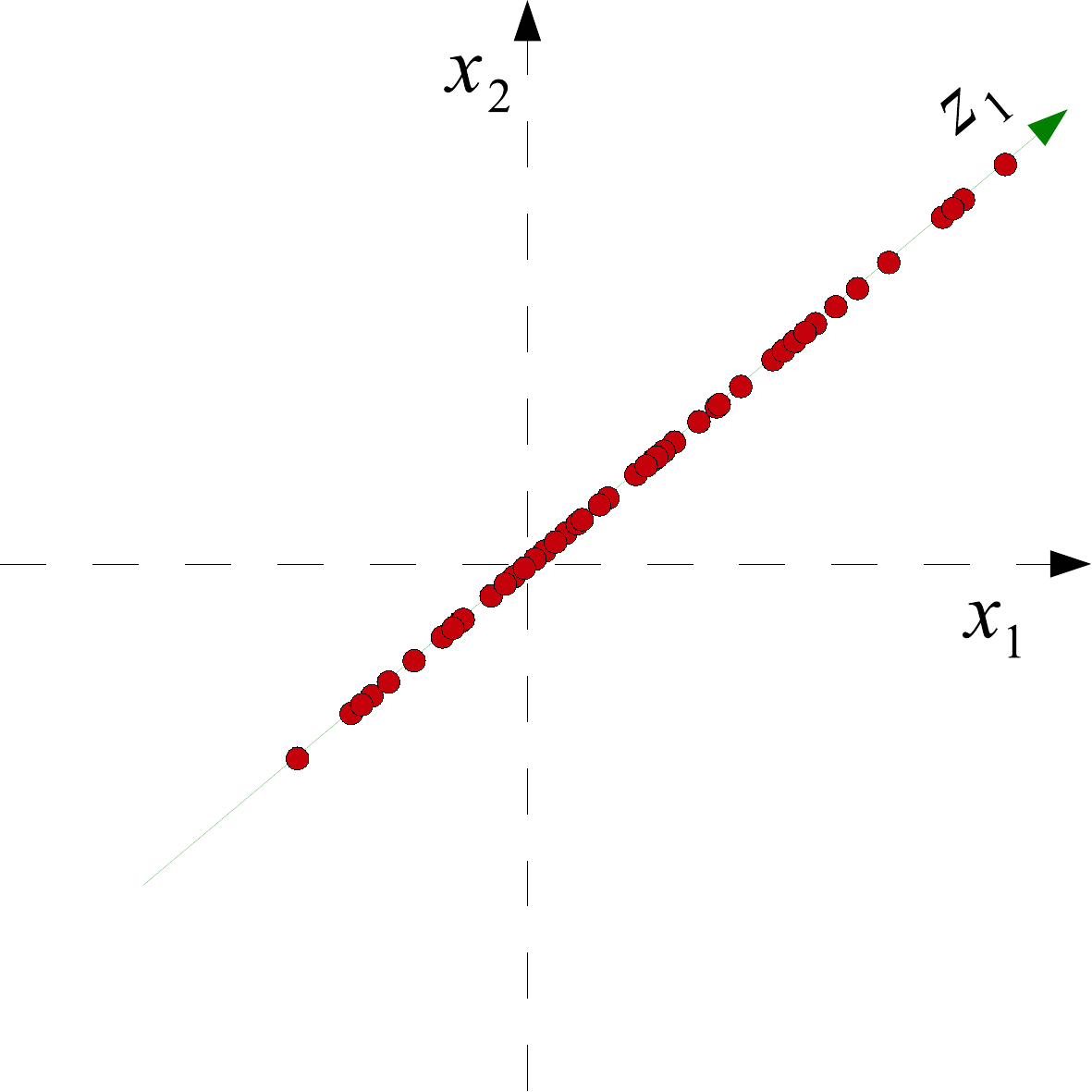}}
\caption[Principal component analysis.]{Principal component analysis. Note that the principal components are orthogonal and the new axes ($z_1,z_2$) is a rotation of original axes ($x_1,x_2$). The first PC direction is aligned with the direction of maximum variation of the data. The second is aligned with the next maximum, which is orthogonal to the first PC direction. Projection of the data on the first PC direction reduces dimensions of the original data.}
\label{fig:pca}
\end{figure}
\clearpage

\noindent The SVD factorizes a data matrix such that
\begin{equation}
\mathbf{X} = \mathbf{USV}^\intercal~,
\end{equation}
where $\mathbf{S}$ is a positive diagonal matrix of singular values (square root of eigenvalues) of $\mathbf{X}$. $\mathbf{U} \in \mathbb{R}^{n \times p}$ is a (row) orthonormal matrix also known as the left singular matrix. $\mathbf{V} \in \mathbb{R}^{p \times p}$ is a (column) orthonormal matrix which has the eigenvectors of matrix $\mathbf{X}$. The eigenvectors are generally referred to as the  basis vectors in the context of PCA. The eigenvalue corresponding to each eigenvector determines the contribution of that principal component in the variance of data.

The original data matrix can be projected on the PCA subspace as $\mathbf{Z}=\mathbf{XV} \in \mathbb{R}^{n \times k}$, where $k$ is the number of PC dimensions to retain. Figure~\ref{fig:pca} shows PCA on an example data.

\subsection{PCA Example: Portland Cement Data}

Let us begin with PCA on an example dataset. The Portland Cement Data~\cite{woods1932effect} contains the relative proportion of 4 ingredients in 13 different samples of cement and the heat of cement hardening after 180 days. The data is given in Table~\ref{tab:data-pca}.

\begin{table}[h]
\caption{The Portland Cement Data.}
\label{tab:data-pca}
\centering
{\footnotesize
\begin{tabular}{|c|c|c|c|c||r|} \hline
Sample& tricalcium& tricalcium & tetracalcium & beta-dicalcium & heat \\
No.& aluminate & silicate & aluminoferrite &  silicate  & \\
$3CaO.Al_2O_3$ &  $3CaO.SiO_2$ &  $4CaO.Al_2O_3.Fe_2O_3$ & $2CaO.SiO_2$ & (cal/gm) \\ \hline
1  & 7  & 26 & 6  & 60& 78.5    \\
2  & 1  & 29 & 15 & 52& 74.3    \\
3  & 11 & 56 & 8  & 20& 104.3   \\
4  & 11 & 31 & 8  & 47& 87.6    \\
5  & 7  & 52 & 6  & 33& 95.9    \\
6  & 11 & 55 & 9  & 22& 109.2   \\
7  & 3  & 71 & 17 & 6 & 102.7   \\
8  & 1  & 31 & 22 & 44& 72.5    \\
9  & 2  & 54 & 18 & 22& 93.1    \\
10 & 21 & 47 & 4  & 26& 115.9   \\
11 & 1  & 40 & 23 & 34& 83.8    \\
12 & 11 & 66 & 9  & 12& 113.3   \\
13 & 10 & 68 & 8  & 12& 109.4   \\ \hline
\end{tabular}}
\end{table}

\begin{figure}[h]
\centering
\includegraphics[width=0.8\linewidth]{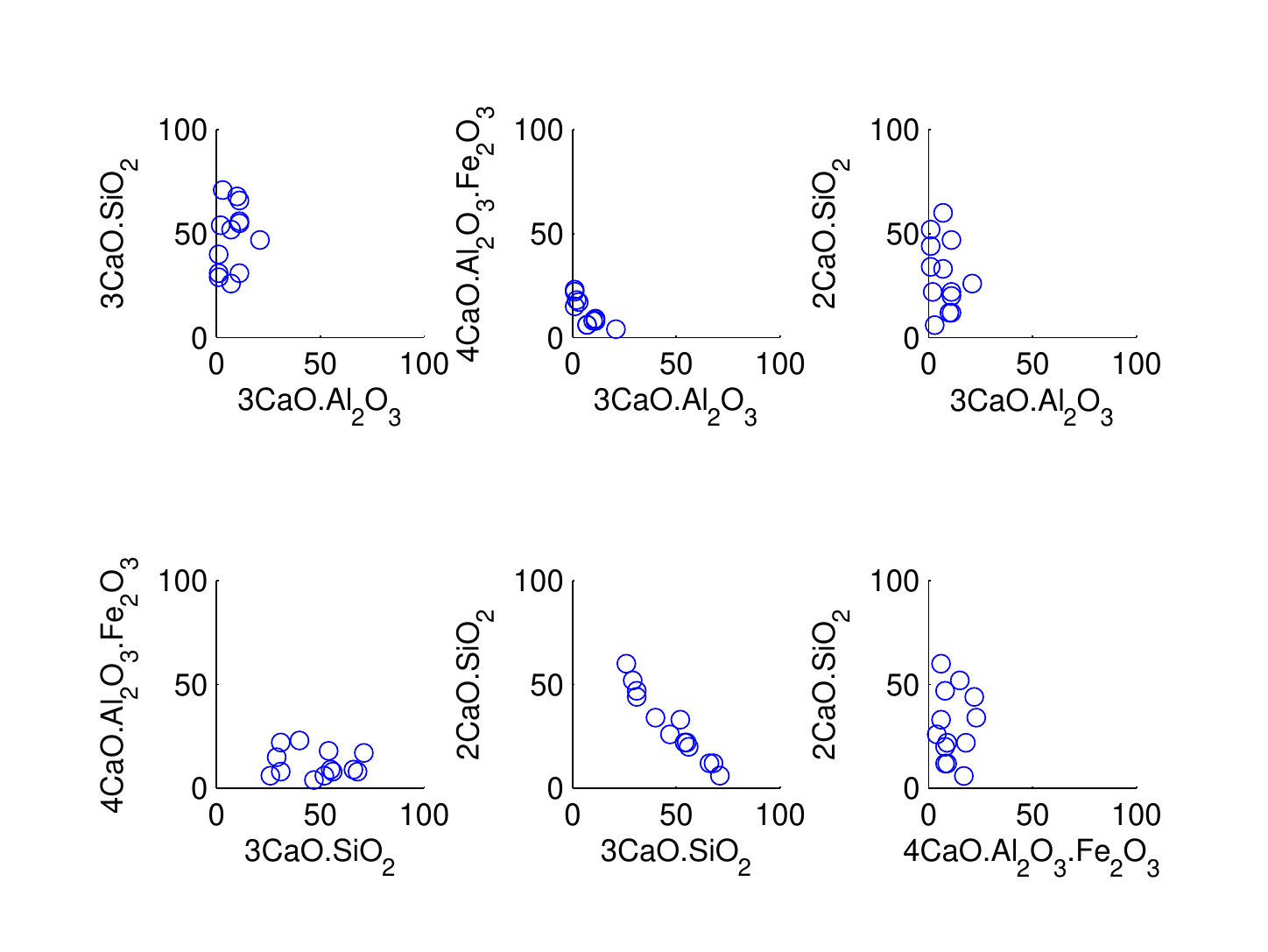}
\caption[Pairwise analysis of the ingredient proportions in data.]{Pairwise analysis of the ingredient proportions in data. Each pair can be interpreted individually but no clear inferences can be extracted on the overall interaction of the ingredients.}
\label{fig:pair}
\end{figure}

Notice that the data is 4 dimensional ($\mathbf{X} \in \mathbb{R}^{13 \times 4}, n=13, p=4$) and it is not possible to graphically observe the distribution of ingredient proportions altogether. It is desirable to know which ingredients are a better indicator of the heat of cement hardening. In order to observe the data graphically, only two (at most 3) ingredient proportions can be observed at a time as shown in Figure~\ref{fig:pair}. A different trend can be observed for each pair of variables. However, an overall picture of the distribution of data cannot be visually perceived. PCA makes it feasible to visualize the interaction of such data by dimensionality reduction.


In order to compute the dimensions required to be retained, a comparison of cumulative variance preserved against the number of principal components used is generally employed. The cumulative variance of the first $k$ PCA basis can be calculated as

\begin{equation}
\sigma_k = \sum_{i=1}^{k} \frac{(S_{ii})^2}{\sum_{j=1}^p(S_{jj})^2}~,
\end{equation}
where $S_{ii}$ is the $i^\textrm{th}$ eigenvalue from the diagonal matrix $\mathbf{S}$. For the cement data, it can be observed in Figure~\ref{fig:var} that the first two principal components are sufficient to explain most variation of the data (98\%).


The original data $\mathbf{X}$ can now be transformed to PCA space by $\mathbf{Z}=\mathbf{XV}$. Now it is possible to graphically represent the transformed data using the first two principal components as shown in Figure~\ref{fig:proj}.

\begin{figure}[h]
\centering
\subfigure[Total explained variance]{\label{fig:var}\includegraphics[width=0.49\linewidth]{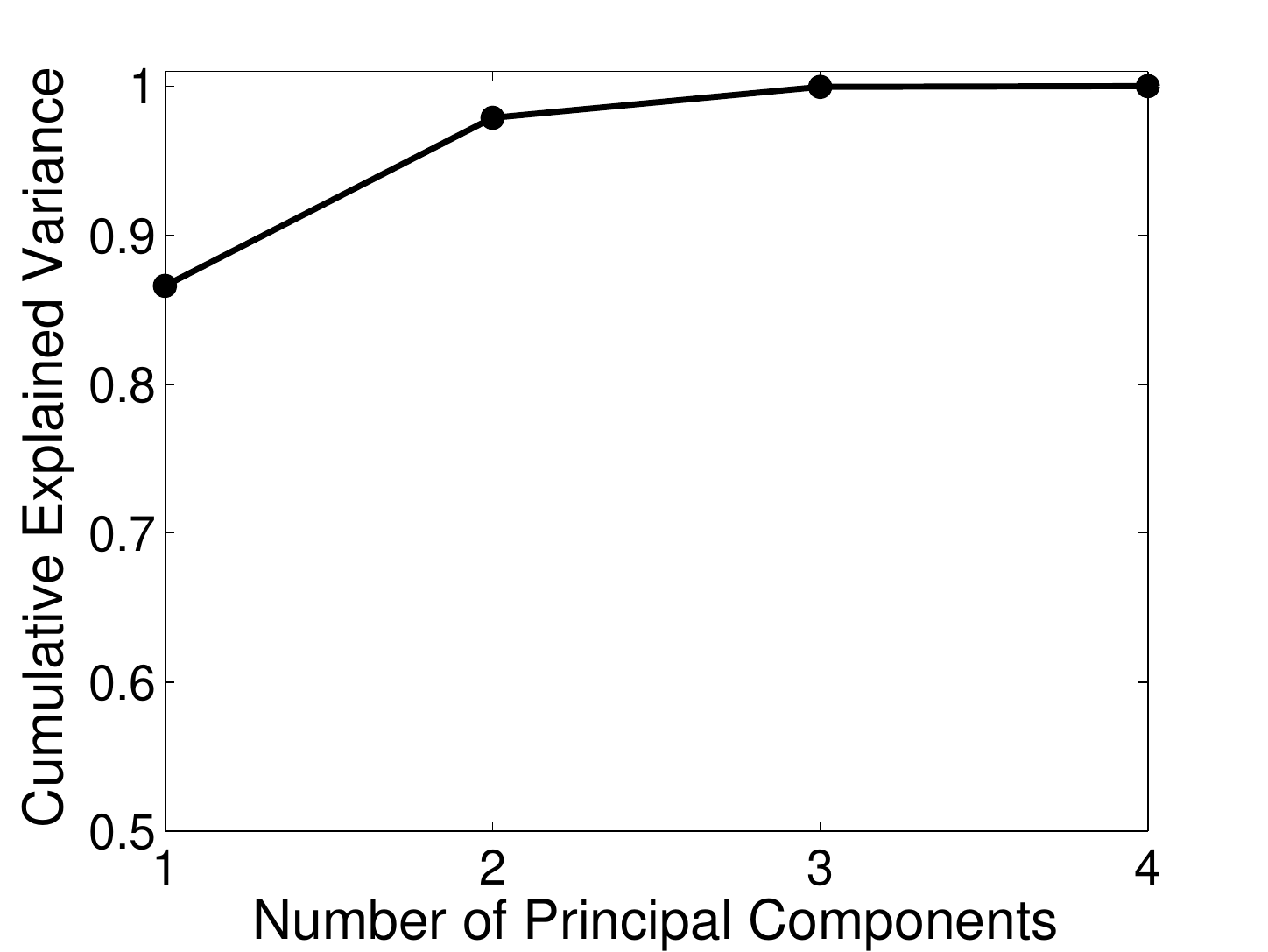}}
\subfigure[Principal component scores]{\label{fig:proj}\includegraphics[width=0.49\linewidth]{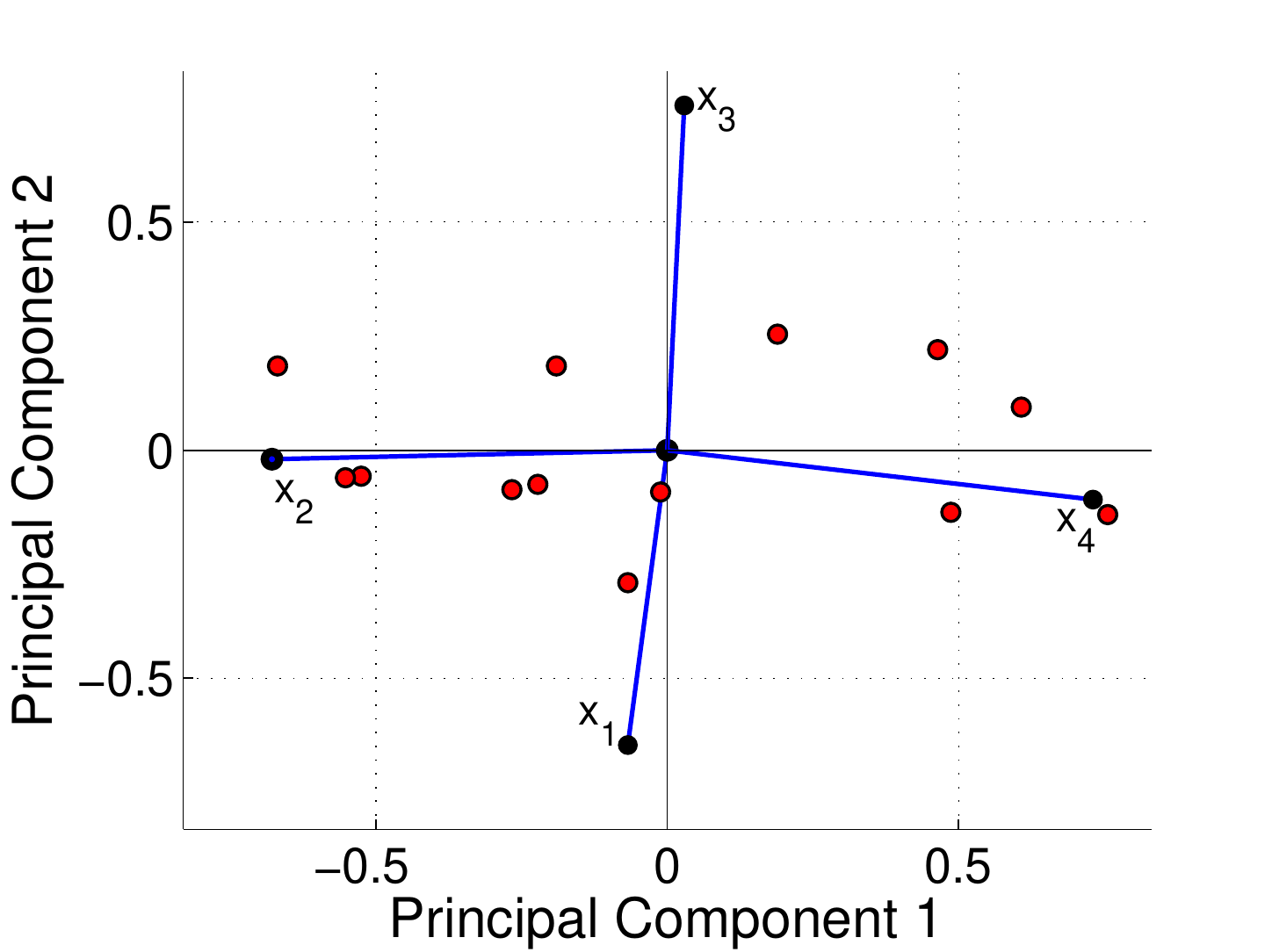}}
\caption[Explained variance and principal component scores]{(a) Cumulative variance explained by the principal components. (b) Plot of the first two principal components of the data. Notice that the second and fourth ingredients contribute the most to the first principal component.}
\label{fig:pca-final}
\end{figure}

\section{Hyperspectral Imaging}
\label{sec:hs-imaging}

The human eye exhibits a trichromatic vision. This is due to the presence of three types of photo-receptors called \emph{Cones} which are sensitive to different wavelength ranges in the visible range of the electromagnetic spectrum~\cite{martin2009retinal}. Conventional imaging sensors and displays (like cameras, scanners and monitors) are developed to match the response of the trichromatic human vision so that they deliver the same perception of the image as in a real scene. This is why an RGB image constitutes three spectral measurements per pixel.

Most of the computer vision systems do not make full use of the spectral information and only consider grayscale or color images for scene analysis. There is evidence that machine vision tasks can take the advantage of image acquisition in a wider range of electromagnetic spectrum and higher spectral resolution by capturing more information in a scene. Hyperspectral imaging captures spectral reflectance of a scene in a wide spectral range. The images can cover visible, infrared, or a combination of both ranges of the electromagnetic spectrum (see Figure~\ref{fig:em-spec}). It also provides selectivity in the choice of frequency bands for specific tasks. Satellite based spectral imaging sensors have long been used in astronomical and remote sensing applications. Due to the high cost and complexity of these sensors, various methods have been introduced to utilize conventional imaging systems combined with a few off-the-shelf optical devices for spectral imaging.

\begin{figure}[h]
\centering
\includegraphics[width=1\linewidth]{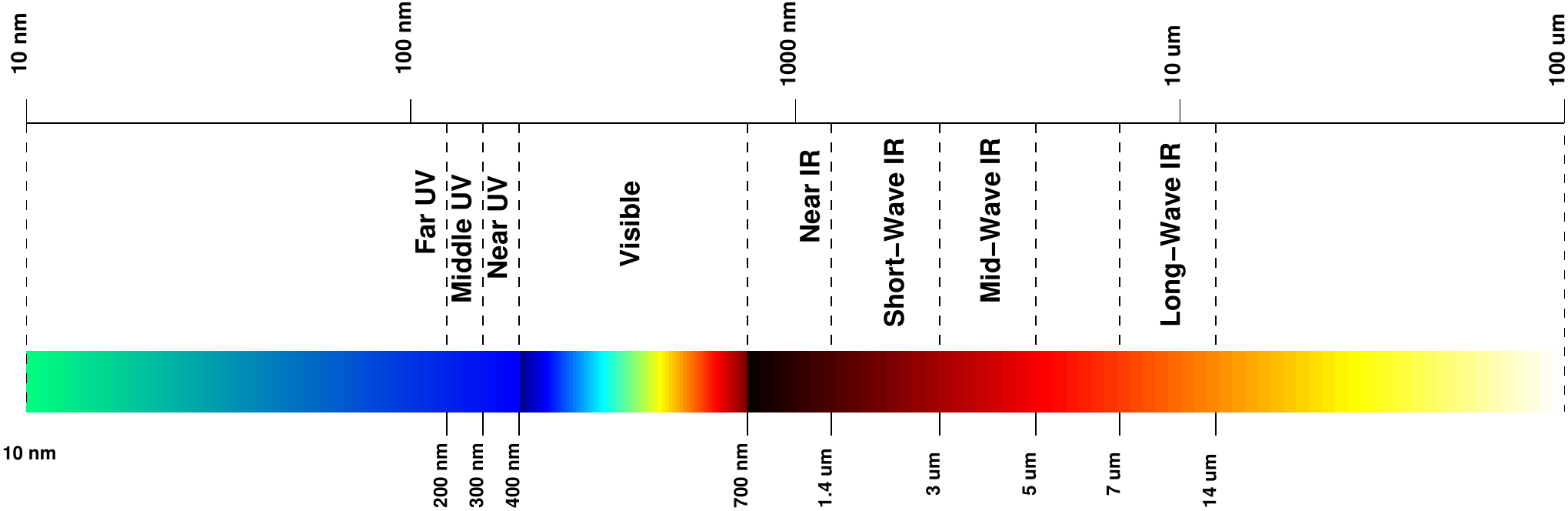}
\caption{The electromagnetic spectrum.}
\label{fig:em-spec}
\end{figure}

%

%

\subsection{Bandpass Filtering}

In filter based approach, the objective is to allow light in a specific wavelength range to pass through the filter and reach the imaging sensor. This phenomenon is illustrated in Figure~\ref{fig:Spec_Filter}. This can be achieved by using optical devices generally named bandpass filters or simply filters. The filters can be categorized into two types depending on the filter operating mechanism. The first type is the \emph{\emph{tunable}} filter or specifically the electrically tunable filter. The pass-band of such filters can be electronically tuned at a very high speed which allows for measurement of spectral data in a wide range of wavelengths. The second type is the \emph{non-tunable} filters. Such filters have a fixed pass-band of frequencies and are not recommended for use in time constrained applications. These filters require physical replacement either manually, or mechanically by a filter wheel. However, they are easy to use in relatively simple and unconstrained applications.


\subsubsection{Tunable Filters}

A common approach to acquire multispectral images is by sequential replacement of bandpass filters between a scene and the imaging sensor. The process of filter replacement can be mechanized by using a wheel of filters. Such filters are useful where time factor is not critical and the goal is to image a static scene. Kise et al.~\cite{kise2010multispectral} developed a three band multispectral imaging system by using interchangeable filter design; two in the visible range (400-700nm) and one in the near infrared range (700-1000nm). The interchangeable filters allowed for selection of three bands. The prototype was applied to the task of poultry contamination detection.

Electronically tunable filters come in different base technologies. One of the most common is the \emph{Liquid Crystal Tunable Filter} (LCTF). The LCTF is characterized by its wide bandwidth, variable transmission efficiency and slow tuning time. On the other hand, the \emph{Acousto-Optical Tunable Filter} (AOTF) is known for narrow bandwidth, low transmission efficiency and faster tuning time. For a detailed description of the composition and operating principles of the tunable filters, the readers are encouraged to read~\cite{gat2000imaging,poger2001multispectral}.

Fiorentin et al.~\cite{fiorentin2009multispectral} developed a spectral imaging system using a combination of CCD camera and LCTF in the visible range with a resolution of 5 nm. The device was used in the analysis of accelerated aging of printing color inks. The system was also applicable of monitoring the variation (especially fading) of color in artworks with the passage of time. The idea can be extended to other materials that undergo spectral changes due to illumination exposure, such as document paper and ink.

Comelli et al.~\cite{comelli2008portable} developed a portable UV-fluorescence spectral imaging system to analyze painted surfaces. The imaging setup comprised a UV-florescence source, an LCTF and a low noise CCD sensor. A total of 33 spectral images in the range (400-720nm) in 10nm steps were captured. The accuracy of the system was determined by comparison with the fluorescence spectra of three commercially available fluorescent samples measured with a bench-top spectro-fluorometer. The system was tested on a 15th century renaissance painting to reveal latent information related to the pigments used for finishing decorations in painting at various times.


%

\begin{figure}[t]
\centering
\includegraphics[width=1\linewidth]{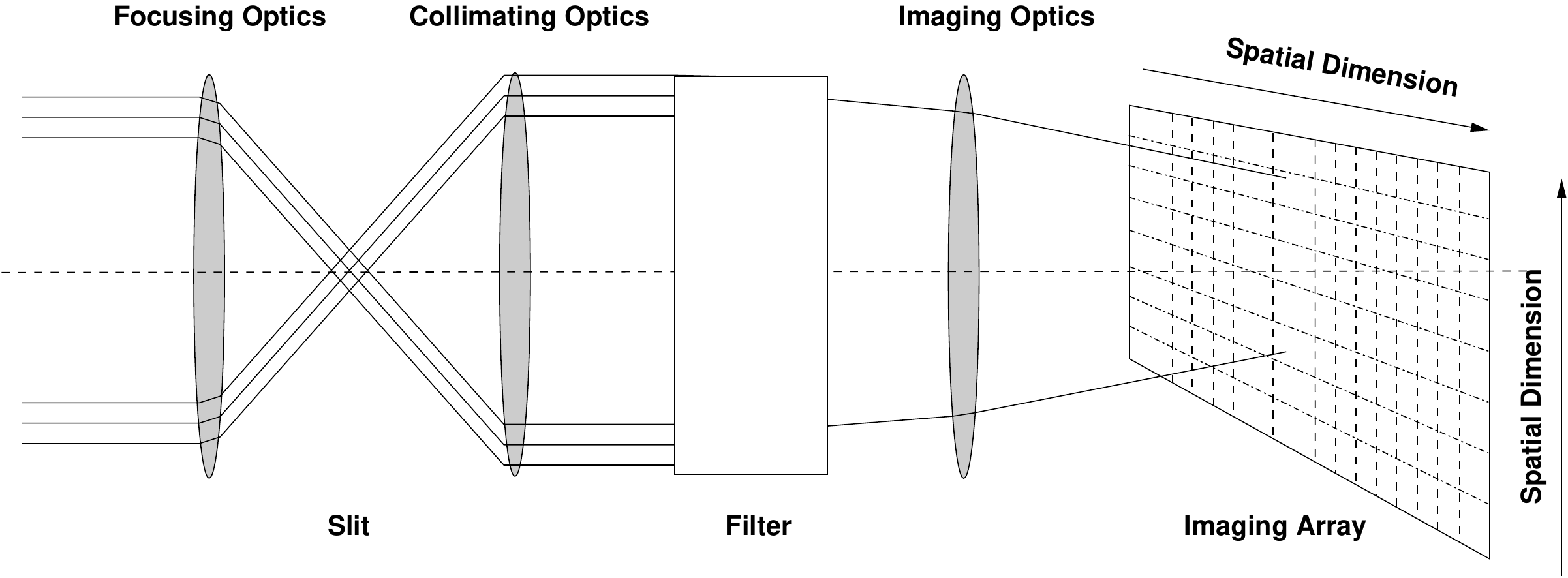}
\caption[Bandpass filtering]{In bandpass filtering, the filter allows only a specific wavelength of light to pass through, resulting in a single projection of the scene at a particular frequency.}
\label{fig:Spec_Filter}
\end{figure}

\subsubsection{Tunable Illumination}

Another approach to acquire multispectral images is by sequential tuning of bandpass filters between a scene and the illumination source. The illumination sources in different spectral bands (colors) are sequentially switched on and off to disseminate light of a specific wavelength. LED illuminations are a useful component of a spectrally variable illumination source. They are commonly available in different colors (wavelengths) for use in economical multispectral imaging systems.

A low cost, high speed system for biomedical spectral imaging is developed by Sun et al.~\cite{sun2010low}. This system comprises of a monochrome CCD camera, a high power LED illumination source and a microcontroller for synchronization. LEDs of different wavelength illumination (Red, Green, Blue) are triggered sequentially at high speeds by the microcontroller to acquire multispectral images. At a full resolution of 640 x 480 pixels, the system can capture 14-bit multispectral images at 90 frames per second. In an experimental trial, images from the cortical surface of a live rat whose brain was injected with a fluorescent calcium indicator were taken to observe its responses to electrical forepaw stimulus.

Another low cost solution to spectral imaging has been developed by Mathews et al.~\cite{mathews2008design}. This system comprised of a single large format CCD and an array of 18 lenses coupled with spectral filters. The system was able to capture multispectral images simultaneously in 17 spectral bands at a maximum resolution of 400 x 400 pixels. It was developed to observe the blood oxygenation levels in tissues for quick assessment of burns.

Park et al.~\cite{park2007multispectral} developed a multispectral imaging system comprising of a conventional RGB camera and two multiplexed illumination sources made up of \emph{white, red, amber, green} and \emph{blue} LEDs to acquire multispectral videos in visible range at 30fps. They showed that the continuous spectral reflectance of a point in a scene can be recovered by using a linear model for spectral reflectance with a reasonable accuracy. The recovered spectral measurements have been applied to the problems of material segmentation and spectral relighting. The system has been implemented in a dark controlled environment with only the multiplexed illumination sources which is likely to degrade in daylight situation.

Tunable illumination sources can also be designed by introducing different color filters in front of a uniform illumination source. Chi et al.~\cite{chi2010multi} presented a novel multispectral imaging technique using an optimized wideband illumination. A set of 16 filters were placed in the front of an illumination source used for active spectral imaging. They showed reconstruction of the spectral reflectance of objects in indoor environment in ambient illumination. Shen et al.~\cite{shen2008optimal} proposed an eigen-vector and virtual imaging based method to recover the spectral reflectance of objects in multispectral images using representative color samples for training.


\subsection{Chromatic Dispersion}

In chromatic dispersion, the objective is to decompose an incoming ray of light into its spectral constituent as shown in Figure~\ref{fig:Chrom_Disp}. This can be achieved by optical devices like diffraction prisms, gratings, \emph{grisms} (grating and prism combined) and interferometers. Chromatic dispersion can be further categorized based on refraction and interference phenomena.

\begin{figure}[h]
\centering
\includegraphics[width=1\linewidth]{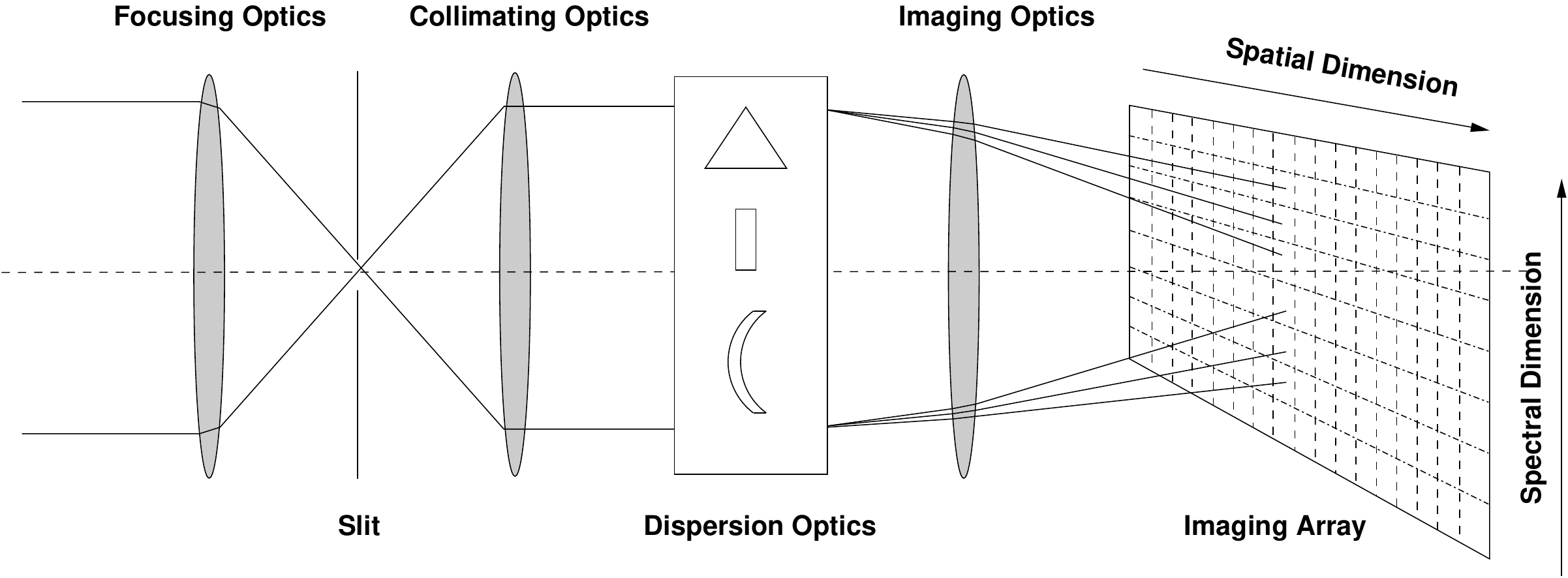}
\caption[Chromatic dispersion]{In chromatic dispersion, the dispersion optics disperses the incoming light into its constituents which are projected onto the imaging plane.}
\label{fig:Chrom_Disp}
\end{figure}

\subsubsection{Refraction Optics}

Refraction is an intrinsic property of glass-like materials such as prisms. A prism separates the incoming light ray into its constituent colors. Du et al.~\cite{du2009prism} proposed a prism-based multispectral imaging system in the visible and infrared bands. The system used an occlusion mask, a triangular prism and a monochromatic camera to capture multispectral image of a scene. Multispectral images were captured at high spectral resolution while trading off the spatial resolution. The use of occlusion mask also reduced the amount of light available to the camera and thus decreased the signal to noise ratio (SNR). The prototype was evaluated for the tasks of human skin detection and material discrimination.


Gorman et al.~\cite{gorman2010generalization} developed an \emph{Image Replicating Imaging Spectrometer} (IRIS) using an arrangement of a \emph{Birefringent Spectral De-multiplexer} (BSD) and off-the-shelf compound lenses to disperse the incoming light into its spectral components. The system was able to acquire spectral images in a snapshot. It could be configured to capture 8, 16 or 32 bands by increasing the number of stages of the BSD. High spectral resolution was achieved by trading-off spatial resolution since a 2D detector was used. The Field-of-View however, was limited by the width of the prism used in the BSD.

\subsubsection{Interferometric Optics}

Optical devices such as interferometers can be used as light dispersion devices by constructive and destructive interference. Burns et al.~\cite{burns1996analysis} developed a seven-channel multispectral imaging device using 50nm bandwidth interference filters and a standard CCD camera. Mohan et al.~proposed the idea of \emph{Agile Spectral Imaging} which used a diffraction grating to disperse the incoming rays~\cite{mohan2008agile}. A geometrical mask pattern allowed specific wavelengths to pass through and reach the sensor.

Descour et al.~\cite{descour1995computed} presented a \emph{Computed Tomography Imaging Spectrometer (CTIS)} using three sinusoidal phase gratings to disperse light into multiple directions and diffraction orders. Assuming the dispersed images to be two dimensional projections of three dimensional multispectral cube, the multispectral cube was reconstructed using maximum-likelihood expectation maximization algorithm. Their prototype was able to reconstruct multispectral images of a simple target in the visible range (470-770nm).

\section{Hyperspectral Image Analysis}
\label{sec:hs-overview}

During the past several years spectral imaging has found its utility in various ground-based applications, some of which are listed in Table~\ref{tab:applications}. The use of spectral imaging in archeological artifacts restoration has shown promising results. It is now possible to read the old illegible historical manuscripts by restoration using spectral imaging~\cite{baronti1997principal}. This was a fairly difficult task for a naked eye due to its capability restricted to the visible spectrum. Similarly, spectral imaging has also been applied to the task of material discrimination. This is because of the physical property of a material to reflect a specific range of wavelengths giving it a spectral signature which can be used for material identification~\cite{thai2002invariant}. The greatest advantage of spectral imaging in such applications is that it is non-invasive and thus does not affect the material under analysis compared to other invasive techniques which inherently affect the material under observation.

\begin{table}[h]
\caption{Applications of spectral imaging in different areas.}
\label{tab:applications}
\begin{center}
\begin{tabular}{l|l}
\hline
\textbf{Areas}      & \textbf{Applications} \\ \hline \hline
Art and Archeology  & Analysis of works of art, historical artifact restoration \\ \hline
Medical Imaging     & MRI imaging, microscopy, biotechnology \\ \hline
Security            & Surveillance, biometrics, forensics \\ \hline
\end{tabular}
\end{center}
\end{table}

\subsection{Security Applications}
\label{sec:spectral-biometrics}

The bulk of computer vision research for security applications revolves around monochromatic imaging. Recently, different biometric modalities have taken advantage of spectral imaging for reliable and improved recognition. The recent work in palmprint, face, fingerprint, and iris recognition using spectral imaging is briefly discussed below.

\subsubsection{Palmprint Recognition}

Palmprints have emerged as a popular choice for human access control and identification. Interestingly, palmprints have even more to offer when imaged under different spectral ranges. The line pattern is captured in the visible range while the vein pattern becomes apparent in the near infrared range. Both line and vein information can be captured using a spectral imaging system such as those developed by Han et al.~\cite{han2008multispectral} or Hao et al.~\cite{hao2008multispectral}.

Multispectral palmprint recognition system of Han et al.~\cite{han2008multispectral} captured images under four different illuminations (red, green, blue and infrared). The first two bands (blue and green) generally showed only the line structure, the red band showed both line and vein structures, whereas the infrared band showed only the vein structure. These images can be fused for subsequent matching and recognition. The contact-free imaging system of Hao et al.~\cite{hao2008multispectral} acquires multispectral images of a palm under six different illuminations. The contact-free nature of the system offers more user acceptability while maintaining a reasonable accuracy. The accuracy achieved by multispectral palmprints is much higher compared to traditional monochromatic systems.

\subsubsection{Fingerprint Recognition}

Fingerprints have established as one of the most reliable biometrics and are in common use around the world. Fingerprints can yield even more robust features when captured under a multispectral sensor. Rowe et al.~\cite{rowe2005multispectral} developed a spectral imaging sensor for fingerprint imaging. The system comprised of illumination source of multiple wavelengths (400, 445, 500, 574, 610 and 660nm) and a monochrome CCD of 640x480 resolution. They showed in comparison to traditional sensors, spectral imaging sensors are less affected by moisture content of skin. Recognition based on multispectral fingerprints outperformed traditional fingerprints.

\subsubsection{Face Recognition}

Face recognition has an immense value in human identification and surveillance. The spectral response of human skin is a distinct feature which is largely invariant to the pose and expression~\cite{pan2003face} variation. Moreover, multispectral images of faces are less susceptible to variations in illumination sources and their directions~\cite{chang2008multispectral}. Multispectral face recognition systems generally use a monochromatic camera coupled with a \emph{Liquid Crystal Tunable Filter} (LCTF) in the visible and/or near-infrared range.


\subsubsection{Iris Recognition}

Iris is another unique biometric used for person authentication. Boyce et al.~\cite{boyce2006multispectral} explored multispectral iris imaging in the visible electromagnetic spectrum and compared it to the near-infrared in a conventional iris imaging systems. The use of multispectral information for iris enhancement and segmentation resulted in improved recognition performance.

\subsection{Material Identification}
\label{sec:spectral-material}

Naturally existing materials show a characteristic spectral response to incident light. This property of a material can distinguish it from other materials. The use of multispectral techniques for imaging the works of arts like paintings allows segmentation and classification of painted parts. This is based on the pigment physical properties and their chemical composition~\cite{baronti1997principal}.

\subsubsection{Pigment Identification}

Pelagotti et al.~\cite{pelagotti2008multispectral} used multispectral imaging for analysis of paintings. They collected multispectral images of a painting in UV, Visible and Near IR band. It was possible to differentiate among different color pigments which appear similar to the naked eye based on spectral reflectance information.

\subsubsection{Ice Accumulation Detection}

Gregoris et al.~\cite{gregoris2004multispectral} exploited the characteristic reflectance of ice in the infrared band to detect ice on various surfaces which is difficult to inspect manually. The developed prototype called \emph{MD Robotics' Spectral Camera system} could determine the type, level and location of the ice contamination on a surface. The prototype system was able to estimate thickness of ice ($<$0.5mm) in relation to the measured spectral contrast. Such system may be of good utility for aircraft/space shuttle ice contamination inspection and road condition monitoring in snow conditions.

\subsubsection{Medical Image Analysis}

Multispectral imaging has critical importance in magnetic resonance imaging. Multispectral magnetic resonance imagery of brain is in wide use in medical science. Various tissue types of the brain are distinguishable by virtue of multispectral imaging which aids in medical diagnosis~\cite{taxt1994multispectral}.

\subsubsection{Concrete Moisture Estimation}

Clemmensen et al.~\cite{clemmensen2010comparison} used multispectral imaging to estimate the moisture content of sand used in concrete. It is a very useful technique for non-destructive in-vivo examination of freshly laid concrete. A total of nine spectral bands was acquired in both visual and near infrared range. Zawada et al.~\cite{zawada2003image} proposed a novel underwater multispectral imaging system named \emph{LUMIS} (Low light level Underwater Multispectral Imaging System) and demonstrated its use in study of phytoplankton and bleaching experiments.

\subsubsection{Food Quality Inspection}

Fu et al.~\cite{fu2011discriminant} identified optimal absorption band segments to characterize the dissimilarity between materials using probabilistic and supervised learning. They optimal absorption feature band segments were used for discrimination of normal and rusted wheat and classification of dry fruit via hyperspectral imaging.

Spectrometry techniques can identify the fat content in meat, as it can be economical, efficient and non-invasive compared to traditional analytical chemistry methods~\cite{thodberg1996review}. For this purpose, near-infrared spectrometers have been used to measure the spectrum of light transmitted through a sample of minced meat.

\chapter[Spectral Reflectance Recovery from Hyperspectral Images]{Spectral Reflectance Recovery from Hyperspectral Images} 

\label{Chapter3} 

The appearance of a scene changes with the spectrum of ambient illumination~\cite{golz2002influence}. The human visual system has an intrinsic capability of recognizing colored objects under different illuminations~\cite{land1977retinex}. In machine vision systems, it is desirable to remove the effect of illumination, so as to measure the true spectral reflectance of the objects in a scene\cite{katravsnik2013method}. This is because in object detection, segmentation~\cite{brill1990image} and recognition~\cite{healey1994global}, an illumination invariant view of the object is critical to achieving accurate results~\cite{gegenfurtner2003cortical}.

Color constancy refers to the removal of the extrinsic color cast by an illumination in a scene~\cite{gijsenij2011computational}. It is synonymously viewed as the recovery of spectral reflectance under certain assumptions on scene illumination~\cite{maloney1986color}. A bulk of color constancy research is focused on dealing with the trichromatic images~\cite{li2011evaluating,van2007edge,van2007using,forsyth1990novel,gevers2000color}. With the advances in sensor technology, hyperspectral imaging is claiming profound interest in medicine, art and archeology, and computer vision~\cite{wang2003detection,calcagni2011multispectral,baronti1997principal,pelagotti2008multispectral}. In hyperspectral imaging, recovery of spectral reflectance remains a challenge in a much higher dimension~\cite{gu2013efficient}. Figure~\ref{fig:sample-images} illustrates this phenomenon in the analysis of art works, such as paintings. Although, color constancy has been explored for remotely sensed hyperspectral images~\cite{wiemker1997color}, only few studies investigated the problem with focus on ground based hyperspectral imaging systems~\cite{hashimoto2011multispectral,fauch2010recovery,shen2007reflectance}.

In contrast to color imaging, hyperspectral imaging involves complex optical components that capture the reflectance spectra in narrow bands. The basic principle of spectral imaging is to disperse/filter incoming light with dispersion optics or bandpass filters. Chromatic dispersion using prisms~\cite{du2009prism}, grating~\cite{mohan2008agile} or interferometers~\cite{burns1996analysis} separates light into its constituent colors and simultaneously acquires a spatial and a spectral dimension. The second spatial dimension is acquired by moving the imaging system. Therefore, it involves motion which inherently suffers from noise. In contrast, filter based spectral imaging simultaneously acquires two spatial dimensions, whereas the spectral dimension is sequentially acquired by tuning the filter frequency. This method is suitable for static objects, which are of interest pertaining to ground based hyperspectral imaging systems.

\clearpage

\begin{figure}[h]
\centering
\includegraphics[width=0.26\linewidth]{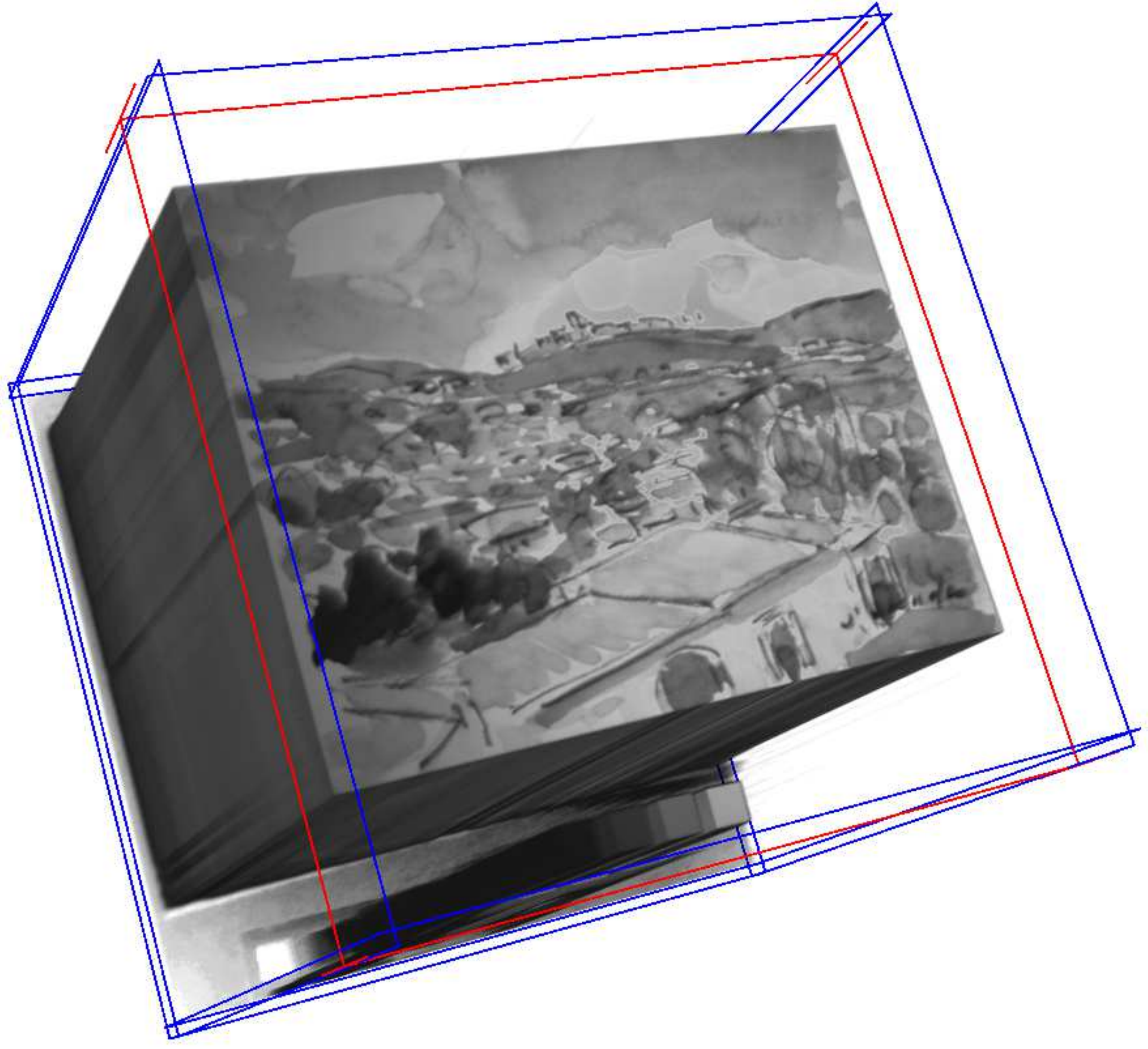}
\subfigure[Uniform]{\includegraphics[width=0.2\linewidth]{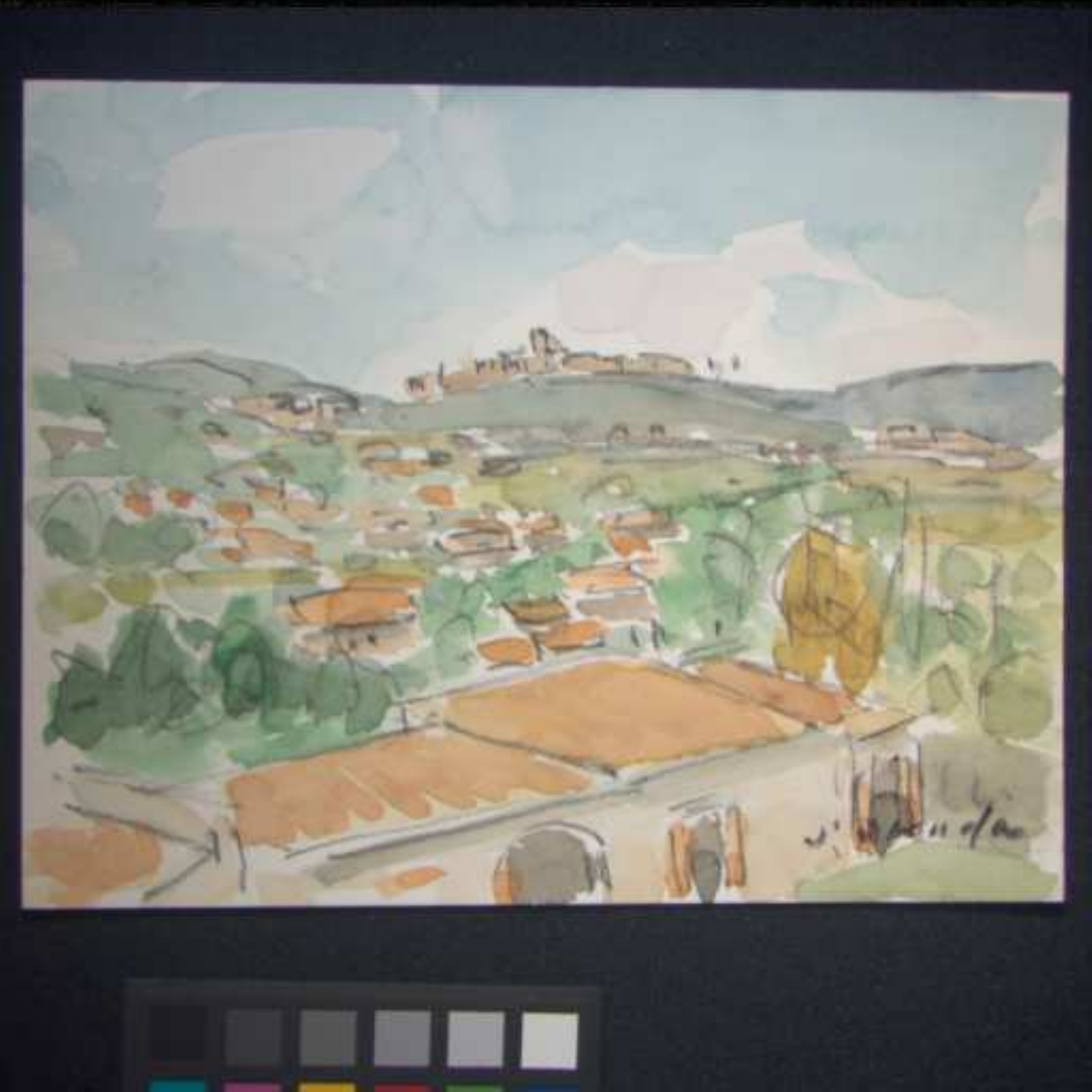}}
\subfigure[Fluorescent]{\includegraphics[width=0.2\linewidth]{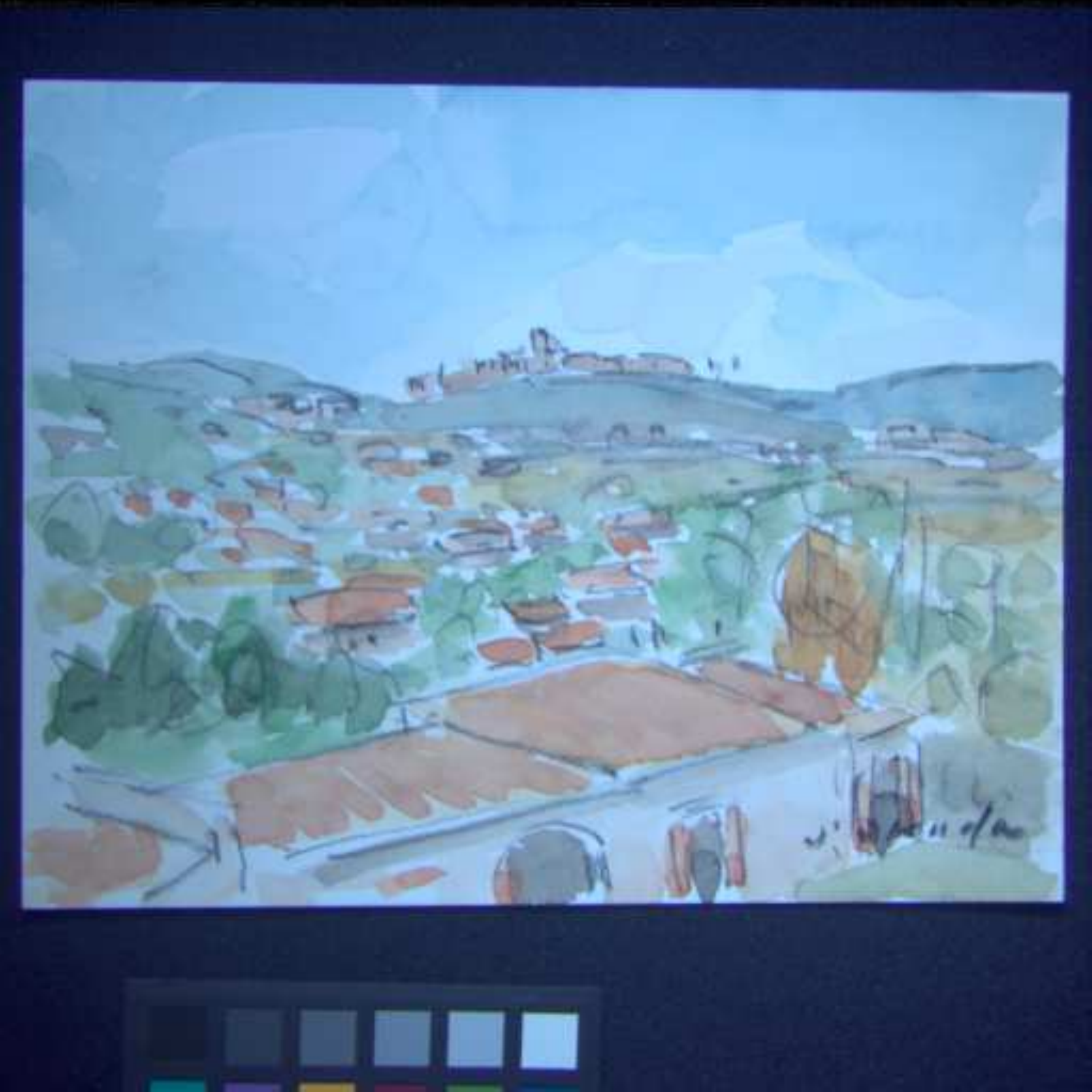}}
\subfigure[Halogen]{\includegraphics[width=0.2\linewidth]{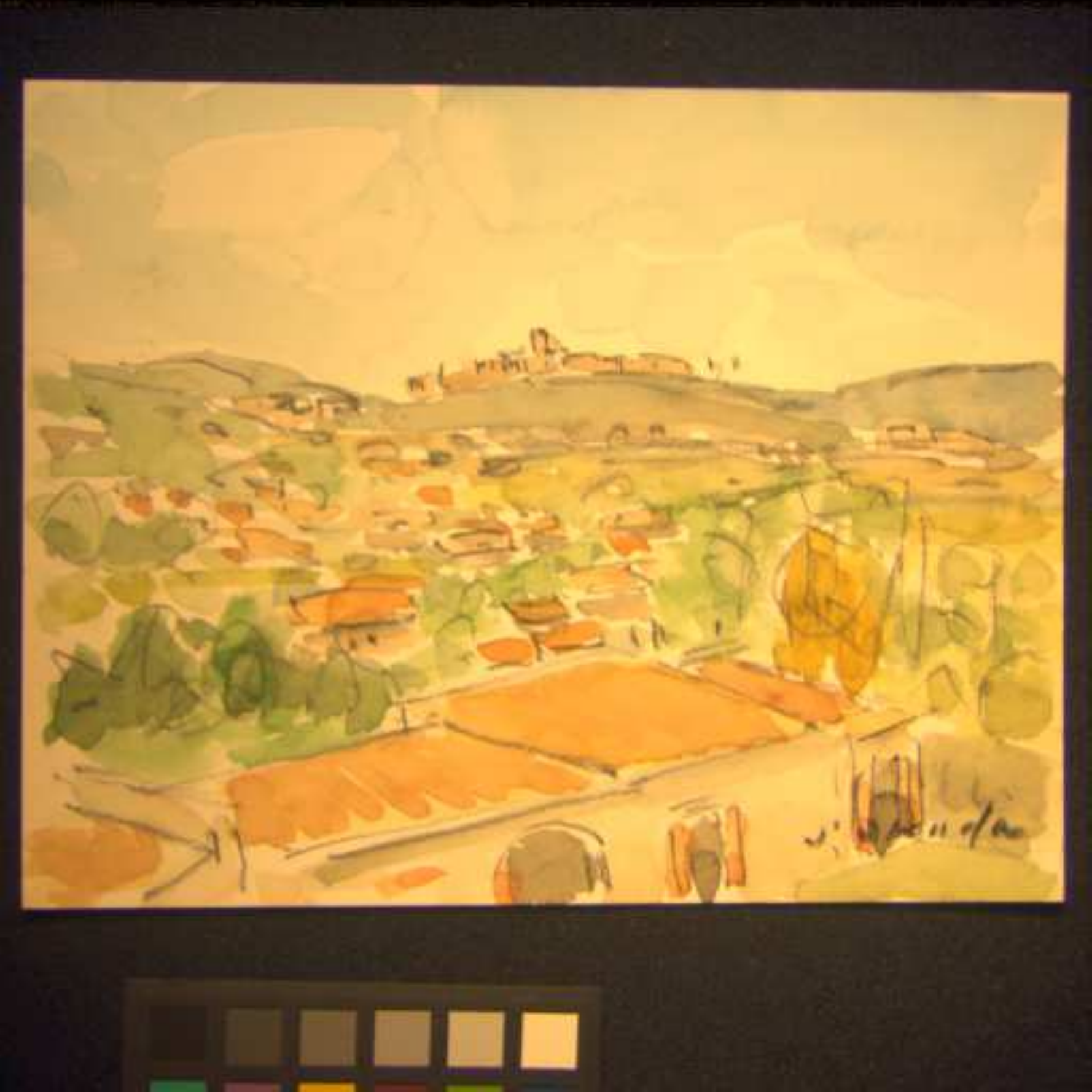}}
\caption[Hyperspectral image of a painting under different illuminations]{Hyperspectral image cube of a watercolor painting. Appearance of the painting under different illuminations is visualized as RGB rendering. Observe that the apparent colors of the painting under non-uniform illumination are significantly different from the actual colors viewed under a uniform illumination.}
\label{fig:sample-images}
\end{figure}

Electronically tunable filters, such as the Liquid Crystal Tunable Filter (LCTF)~\cite{alboon2008flat,zheng2011three}, are primarily designed for ground based hyperspectral imaging systems~\cite{gat2000imaging}. However, the LCTF suffers from very low transmission at shorter wavelengths (blue region) and very high transmission at longer wavelengths (red region) of the visible spectrum. Due to this modulating factor, the radiant energy received at the sensor varies with respect to the wavelength. This eventually degrades illuminant estimation through color constancy in the affected wavelength ranges. Therefore, radiometric compensation of an LCTF hyperspectral imaging system is crucial for accurate recovery of the spectral reflectance of a scene.

In this chapter, we propose a method for accurate spectral reflectance recovery from hyperspectral images. First, we show how illumination in a hyperspectral image can be estimated by color constancy. We then improve illuminant estimation based on two important properties of hyperspectral images, correlation between the nearby bands and apriori identification of illuminant type from the image. Second, we show how illuminant estimation in the first step can be improved by a modified form of hyperspectral imaging. We propose a variable exposure hyperspectral imaging technique for measurement of the scene spectral reflectance. The variable exposure compensates for the non-linearities of the optical components in a hyperspectral imaging system. The technique improves signal-to-noise ratio of hyperspectral images which subsequently results in better illuminant recovery through color constancy. We evaluate and compare the algorithms on two hyperspectral image databases and present a thorough experimental analysis. Experiments on real and simulated data show better reflectance recovery using the proposed imaging and illuminant estimation technique.


\section{Hyperspectral Color Constancy}
\label{sec:color}

\subsection{Adaptive Illuminant Estimation}
\label{sec:meth}

Assuming Lambertian (diffused) surface reflectance, the hyperspectral image of a scene can be modeled as follows. The formation of an $\lambda$ band hyperspectral image $I(x,y,z),z=1,2,...,\lambda$ of a scene is mainly dependent on three physiological factors i.e.~the illuminant spectral power distribution (SPD) $L(x,y,z)$, the scene spectral reflectance $S(x,y,z)$, and the system response $C(x,y,z)$ which combines both the sensor spectral sensitivity $q(x,y,z)$ (quantum efficiency) and the filter transmission $F(z)$ such that $C(x,y,z)=q(x,y,z)F(z)$. Considering the illumination and the sensor spectral sensitivity to be spatially invariant, one can concisely represent them as $L(z)$ and $C(z)$
\begin{equation}
\label{eq:hsi-model}
I(x,y) = \int_{z} L(z)S(x,y,z)C(z)dz~.
\end{equation}

Van de Weijer et al.~\cite{van2007edge} proposed a unified representation for a variety of color constancy methods. The illuminant spectra is estimated by different parameter values of the following formulation
\begin{equation}
\label{eq:unified}
\hat{L}(z:n,p,\sigma) = \frac{1}{\kappa}\int_y\int_x {\|\nabla^n I_\sigma(x,y)\|}_p\,dx\,dy~,
\end{equation}
where $n$ is the order of differential, $\|.\|_p$ is the Minkowski norm and $\sigma$ is the scale of the Gaussian filter such that $I_\sigma(x,y) = I(x,y) \ast G(x,y:\sigma)$ is the gaussian filtered image. Simply put, the Minkowsky norm of the aggregate gradient magnitude (e.g.~$n=2$) of each smoothed band is considered as its illumination value
\begin{equation}
\hat{L}(z:n,p,\sigma) = \frac{1}{\kappa}\left(\sum_{x}\sum_{y}\left(\sqrt{\frac{\partial^2 I_\sigma(x,y)}{\partial x^2}+\frac{\partial^2 I_\sigma(x,y)}{\partial y^2}}\right)^p\right)^{\frac{1}{p}}.
\end{equation}
The parameter $\kappa$ is a constant, valued such that the estimated illuminant spectra has a unit $\ell_2$ norm.

Figure~\ref{fig:illum} shows SPD of some common illumination sources, both artificial and natural. It can be observed that some illuminants are highly differentiable from others based on their SPD pattern. These SPDs can be broadly categorized into smooth or spiky. Most illumination sources generally exhibit smooth SPD (e.g.~daylight) where the spectral power gradually varies across consecutive bands. This implies that illumination estimated from neighboring bands is strongly related and can provide an improved illumination estimate. In contrast, for spiky illumination sources (e.g.~fluorescent), the spectral power undergoes sharp variation in certain bands. Therefore, the illumination estimated from nearby bands are weakly related.

\begin{figure}[h]
\centering
\includegraphics[trim = 75pt 3pt 95pt 20pt, clip, width=0.4\linewidth]{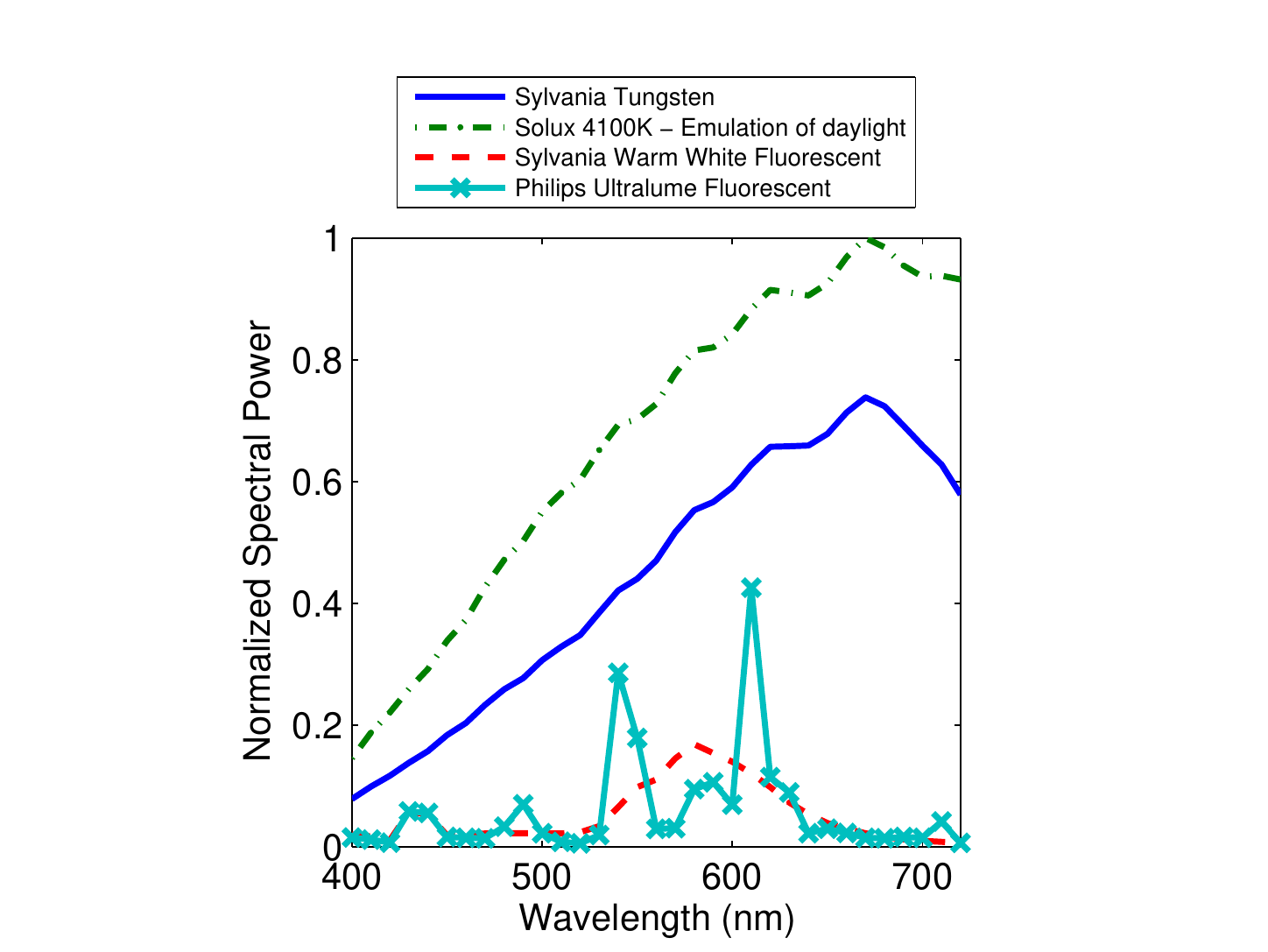}
\includegraphics[trim = 75pt 3pt 95pt 20pt, clip, width=0.4\linewidth]{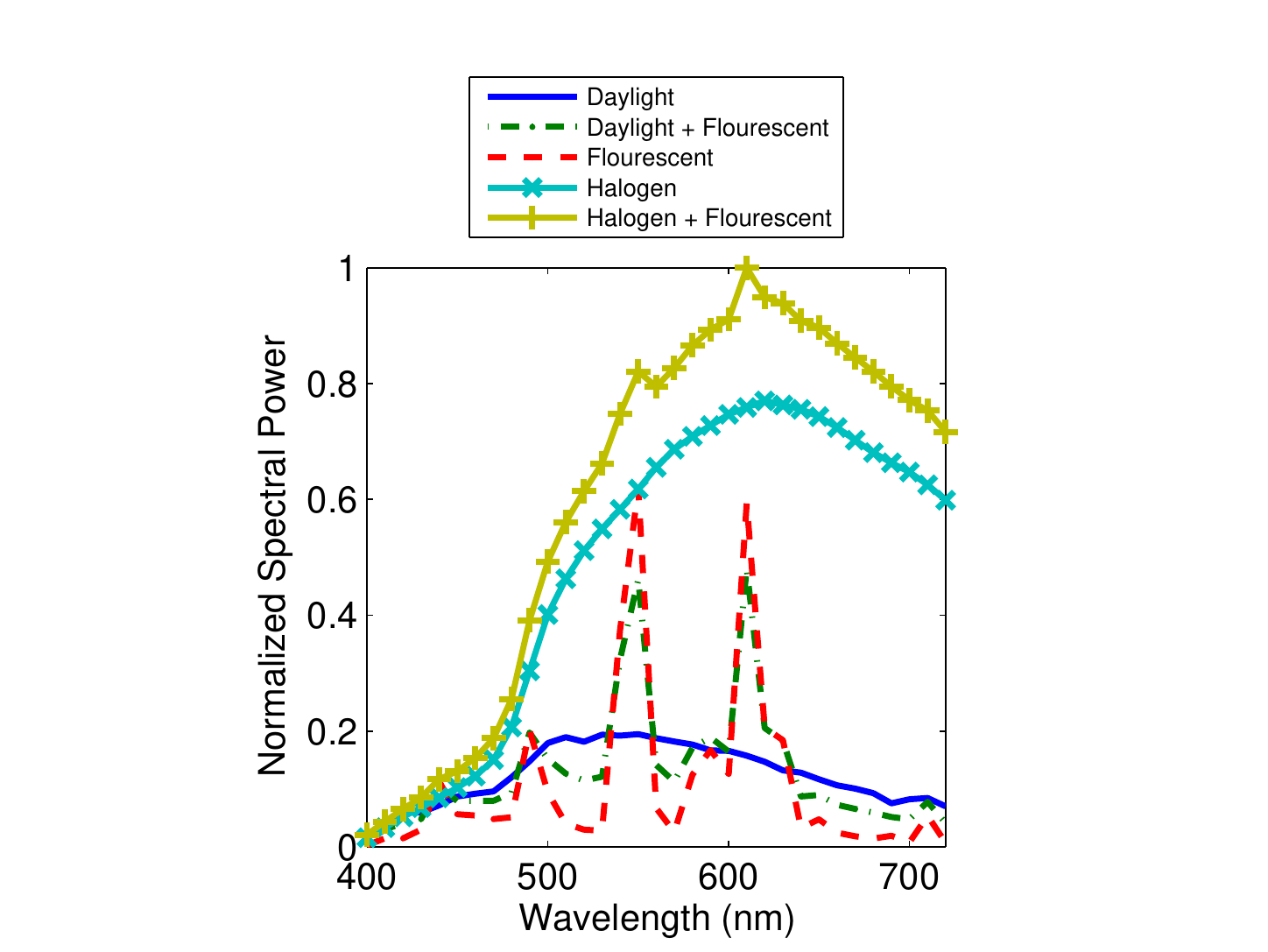}
\caption[Spectral Power Distribution of the illuminations]{SPD of the illuminations in simulated and real data. Illuminants in SFU data to generate simulated scenes of CAVE data (left). Illuminants measured in the real scenes of UWA data (right). Observe the diversity of illumination spectra in both cases.}
\label{fig:illum}
\end{figure}

To exploit this illumination differentiating characteristic, we devise an adaptive illumination estimation approach. First, an initial estimate of the illumination in a hyperspectral image is achieved using Equation~\ref{eq:unified}. Then, to detect whether the scene is lit by a smooth or a spiky illumination source, this initial estimate is then fed to a classifier. The classification is performed by a linear Support Vector Machine (SVM) which is trained on a set of illumination sources labeled as smooth or spiky. If the illumination is classified as smooth, the information in neighboring bands is used for an improved illumination estimate as follows.

Spatio-spectral information in hyperspectral images is useful for improving spectral reproduction and restoration~\cite{murakami2008color,mian2012hyperspectral}. We define a \emph{spatio-spectral support}, where each spectral band $I(x,y,z_{i})$ is supported by the neighboring bands $I(x,y,z_{i-\omega...,i+\omega})$, where $\omega={0,1,2,...}$ is the spectral support width. It is so called because the bands are spatially collated in the spectral dimension. An illumination estimate using spatio-spectral support can be achieved by modifying Equation (\ref{eq:unified})
\begin{equation}
L(z:n,p,\sigma) = \frac{1}{\kappa} \int_y\int_x {\|\nabla^n I^{z_\omega}_{\sigma}(x,y)\|}_p\,dx\,dy~,
\end{equation}
where $I^{z_\omega}=\{I^{z_0},I^{z_{\pm1}},...,I^{z_{\pm\omega}}\}$ is the set of neighboring bands, forming the spatio-spectral support as shown in Figure~\ref{fig:support}. Furthermore, it is intuitive to form a weighted spatio-spectral support such that the nearby bands carry more weight, whereas the bands farther away bear proportionally lesser weights with respect to the distance from the central band. Thus, by introducing weighting, the spatio-spectral support is updated as $I^{z_\omega}=\{w_0I^{z_0},w_1I^{z_{\pm1}},...,w_\omega I^{z_{\pm\omega}}\}$. A standard normal function is applied as weights for the spatio-spectral support.

\begin{figure}[t]
\centering
\includegraphics[width=0.75\linewidth]{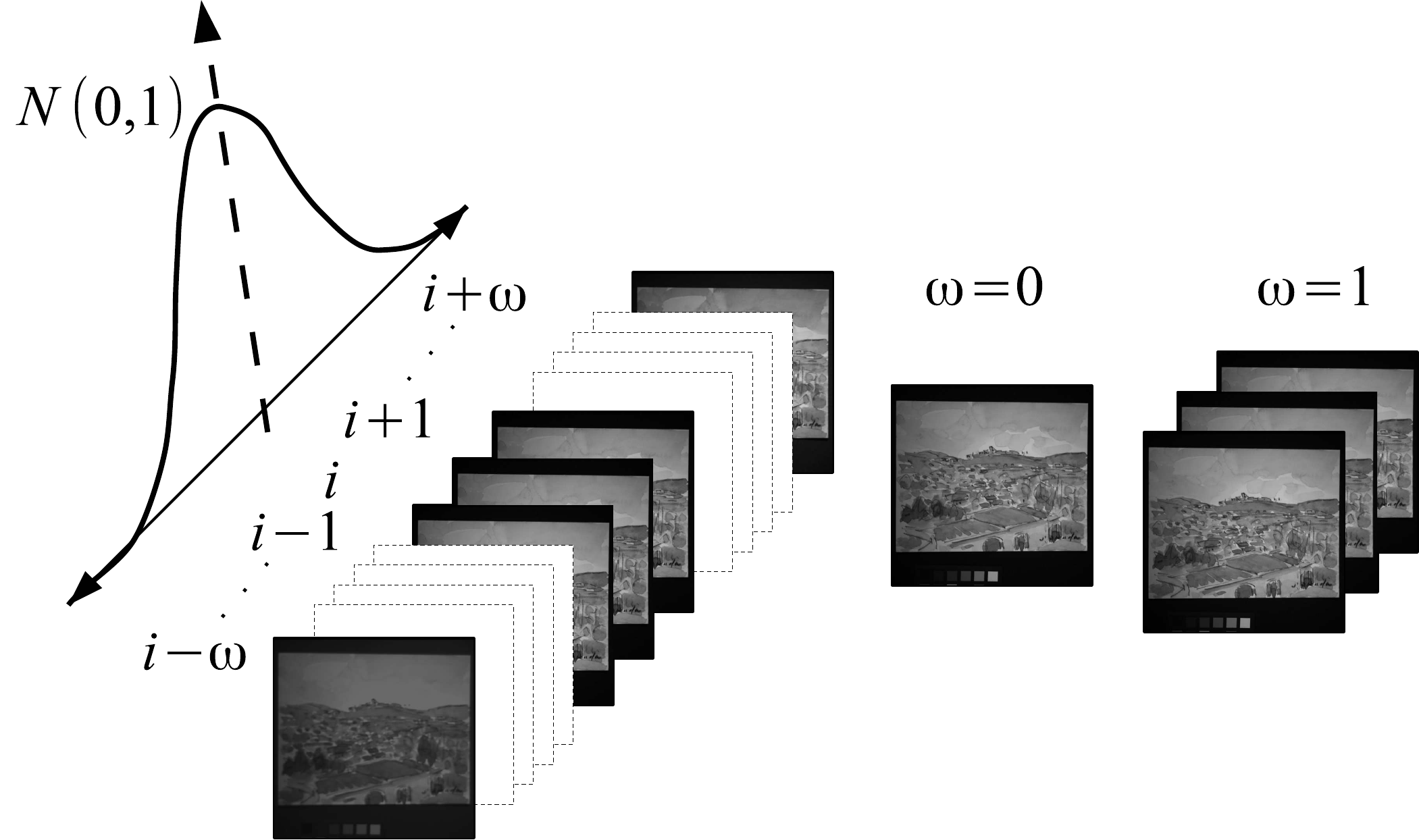}
\caption[Structure of spatio-spectral supports]{Structure of spatio-spectral supports of band $i$ for different instants of $\omega$. The spatio-spectral support is weighted by a standard normal distribution function.}
\label{fig:support}
\end{figure}

Generally, a simplified linear transformation is used to obtain an illumination corrected hyperspectral image.
\begin{equation}
\hat{I}(x,y,z) = M~I(x,y,z)~,
\end{equation}
where $M \in \mathbb{R}^{\lambda \times \lambda}$ is a diagonal matrix such that
\begin{equation}
M_{i,j} =
\begin{cases}
1 / \hat{L}(z_i) & \text{if $i=j$}\\
0 & \text{otherwise}
\end{cases}
\end{equation}
The \emph{angular error}~\cite{hordley2004re} is a widely used metric for benchmarking color constancy techniques. It has been shown to be a good perceptual indicator of the performance of color constancy algorithms~\cite{gijsenij2009perceptual}. The angular error is defined as the angle (in degrees) between the estimated illuminant spectra $(\hat{L})$, and the ground truth illuminant spectra $(L)$
\begin{equation}
\mathbf{\epsilon}= \arccos \left( \frac{L \cdot \hat{L}}{\|L\|\|\hat{L}\|} \right)~.
\end{equation}
The angular error ($\mathbf{\epsilon}$) is used for the evaluation of all algorithms presented in this work.

\subsection{Individual Color Constancy Methods}
\label{sec:ind}

Different combinations of the parameters $(n,p,\sigma)$, signify a unique hypothesis and translate into different illuminant estimation algorithms. \emph{Gray World} (GW)~\cite{buchsbaum1980spatial} assumes that the average image spectra is flat (uniform) while \emph{Gray Edge} (GE)~\cite{van2007edge} assumes that the mean spectra of the edges is flat so that the illuminant spectra can be estimated as the shift from respective deviation. Two common variants of the GE algorithm are the $1^\textrm{st}$ order gray edge (GE1) and the $2^\textrm{nd}$ order gray edge (GE2). \emph{White Point} (WP)~\cite{land1974retinex} assumes the presence of a white patch in the scene such that the maximum value in each band is the reflection of the illuminant from the white patch. \emph{Shades-of-Gray} (SoG)~\cite{finlayson2004shades} assumes that the $p^\textrm{th}$ norm of a scene is a shade of gray whereas the \emph{general Gray World} (gGW)~\cite{buchsbaum1980spatial} considers the $p^\textrm{th}$ norm of a scene after smoothing to be flat.

Although, a number of other algorithms can emanate from more sophisticated instantiations of the parameters $(n,p,\sigma)$, we restrict our scope only to the above mentioned widely accepted algorithms. A list of these algorithms along with their parameter values widely used in the literature~\cite{gijsenij2011computational,bianco2012color} are given in Table \ref{tab:algos}. We used the original authors' implementations of these algorithms\footnote{Color Constancy Algorithms:\\ \url{http://lear.inrialpes.fr/people/vandeweijer/code/ColorConstancy.zip}} after extension for use with hyperspectral images.

\begin{table}[h]
\caption[Parameters of color constancy methods]{Color constancy methods from different instantiations of parameters in Equation~\ref{eq:unified}}
\label{tab:algos}
\vspace{9pt}
\centering
\footnotesize
\begin{tabular}{|l|c|c|c|}
  \hline
  \textbf{Methods}                                          &  $n$  &  $p$   & $\sigma$ \\ \hline \hline
  Gray World (GW)~\cite{buchsbaum1980spatial}               &   0   &   1    &   0      \\ \hline
  White Point (WP)~\cite{land1974retinex}                   &   0   &$\infty$&   0      \\ \hline
  Shades of Gray (SoG)~\cite{finlayson2004shades}           &   0   &   4    &   0      \\ \hline
  general Gray World (gGW)~\cite{buchsbaum1980spatial}      &   0   &   9    &   9      \\ \hline
  $1^\textrm{st}$ order Gray Edge (GE1)~\cite{van2007edge}  &   1   &   1    &   6      \\ \hline
  $2^\textrm{nd}$ order Gray Edge (GE2)~\cite{van2007edge}  &   2   &   1    &   1      \\ \hline
\end{tabular}
\end{table}


\subsection{Combinational Color Constancy Methods}
\label{sec:cmb}
We also investigate few strategies to combine the outputs of different algorithms for extensive evaluation~\cite{shen2007improved,bianco2010automatic}. A simple combination is the average of the estimate of all $P$ individual algorithms ($P=6$). The assumption of such an algorithm would be that if majority of the $P$ algorithms produce correct estimate, the average estimate would also be close to the ground truth and vice versa. The average estimated illumination will therefore be,
\begin{equation}
\hat{L}_{_{\textrm{AVG}}} = \frac{1}{P}\sum_{i=1}^{P} \hat{L^i}~.
\end{equation}


It is possible to combine the outputs of all the algorithms, excluding the worst performing algorithm. The algorithm with the largest aggregate angular error between its estimate and the estimates obtained by the rest of the algorithms, is left out. Then the average of the rest of the algorithms is the L1O estimate~\cite{li2011evaluating}.

\begin{equation}
\hat{L}_{_{\textrm{L1O}}} = \frac{1}{P-1}\sum_{i=1}^{P} \hat{L^i},~~~i\neq\operatorname*{arg\,max}_{j}\left({\sum_{i=1}^{P-1} \mathbf{\varepsilon}_{i,j} }\right)~,
\end{equation}
where $\varepsilon \in \mathbb{R}^{P-1\times P}$ is the matrix obtained by computing the angular errors between all $P$ individual algorithms.

Different individual algorithms are likely to produce a dissimilar illumination estimate on a particular image depending on the illumination and scene contents. It is not known a priori, which algorithm suits a particular scenario. Correlation based combinations are deemed beneficial if they posses the following desirable properties. First, the algorithms' outputs should be uncorrelated. Second, both algorithms should be accurate overall. Thus, illumination estimates $\hat{L}_{_\textrm{X}}$ and $\hat{L}_{_\textrm{Y}}$ from two algorithms $X$ and $Y$ are combined as
\begin{equation}
\hat{L}_{_{\textrm{CbC}}} = \frac{\hat{L}_{_\textrm{X}}+\hat{L}_{_\textrm{Y}}}{2}~,
\end{equation}
where $\hat{L}_{_{\textrm{CbC}}}$ would be robust in case either $X$ or $Y$ produces an outlier estimate. Selection of algorithm $X$ and algorithm $Y$ is discussed later in experiments.

\section{Hyperspectral Imaging by Automatic Exposure Time Adjustment}
\label{sec:hs-autexpimg}

LCTF based hyperspectral imaging systems exhibit extremely low transmission levels at shorter wavelengths and the image sensor has a variable quantum efficiency in the visible range as shown in Figure~\ref{fig:sensors}. These factors result in dark and noisy images at shorter wavelengths due to very low energy received at the sensor. Therefore, color constancy algorithms are unable to accurately recover spectral reflectance, especially in bands having low signal to noise ratio. In order to radiometrically compensate the system in the affected wavelengths we present an automatic exposure time adjustment imaging technique. We investigate the exposure-intensity relationship to introduce a variable exposure factor in the basic hyperspectral image model. The variable exposure allows higher energy in shorter wavelengths and lower energy at longer wavelengths to achieve a net uniform energy received at the sensor. In this way, radiometric compensation is achieved, which results in better spectral recovery using color constancy methods.

\begin{figure}[t]
\footnotesize
\centering
\includegraphics[trim = 33pt 31pt 30pt 23pt, clip, width=1\linewidth]{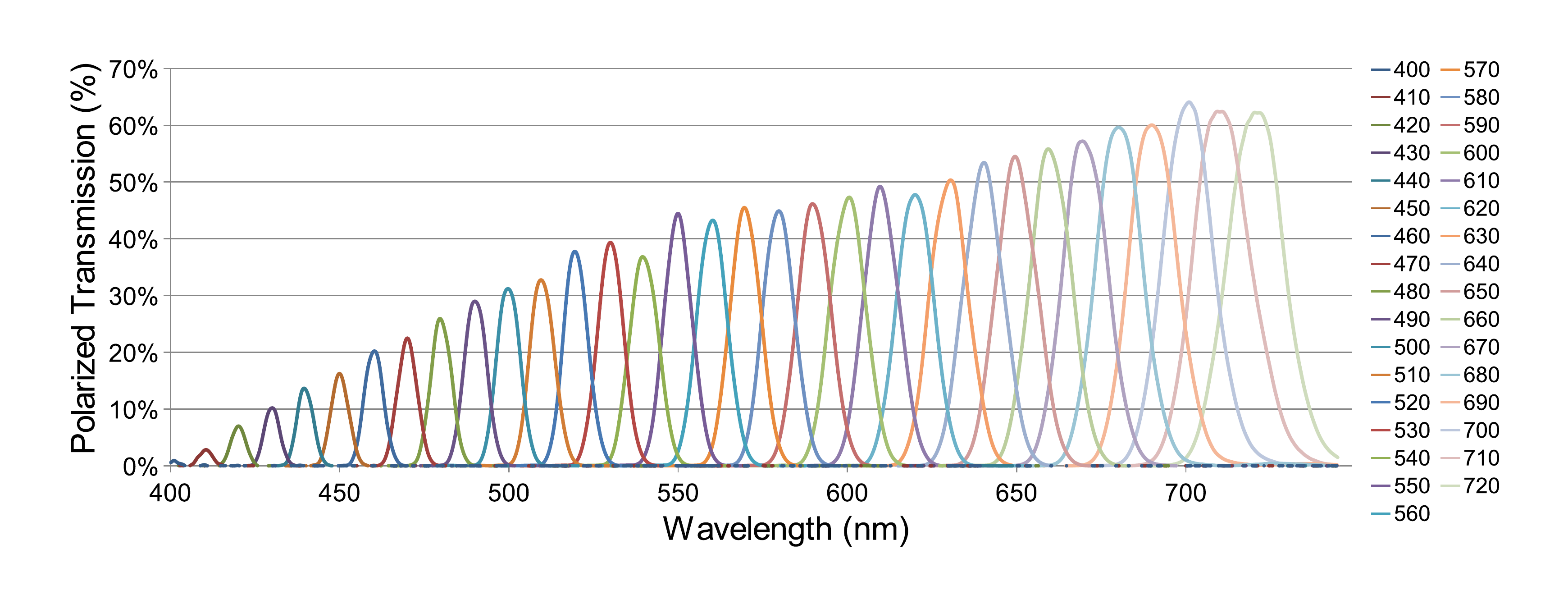}
\includegraphics[width=0.4\linewidth]{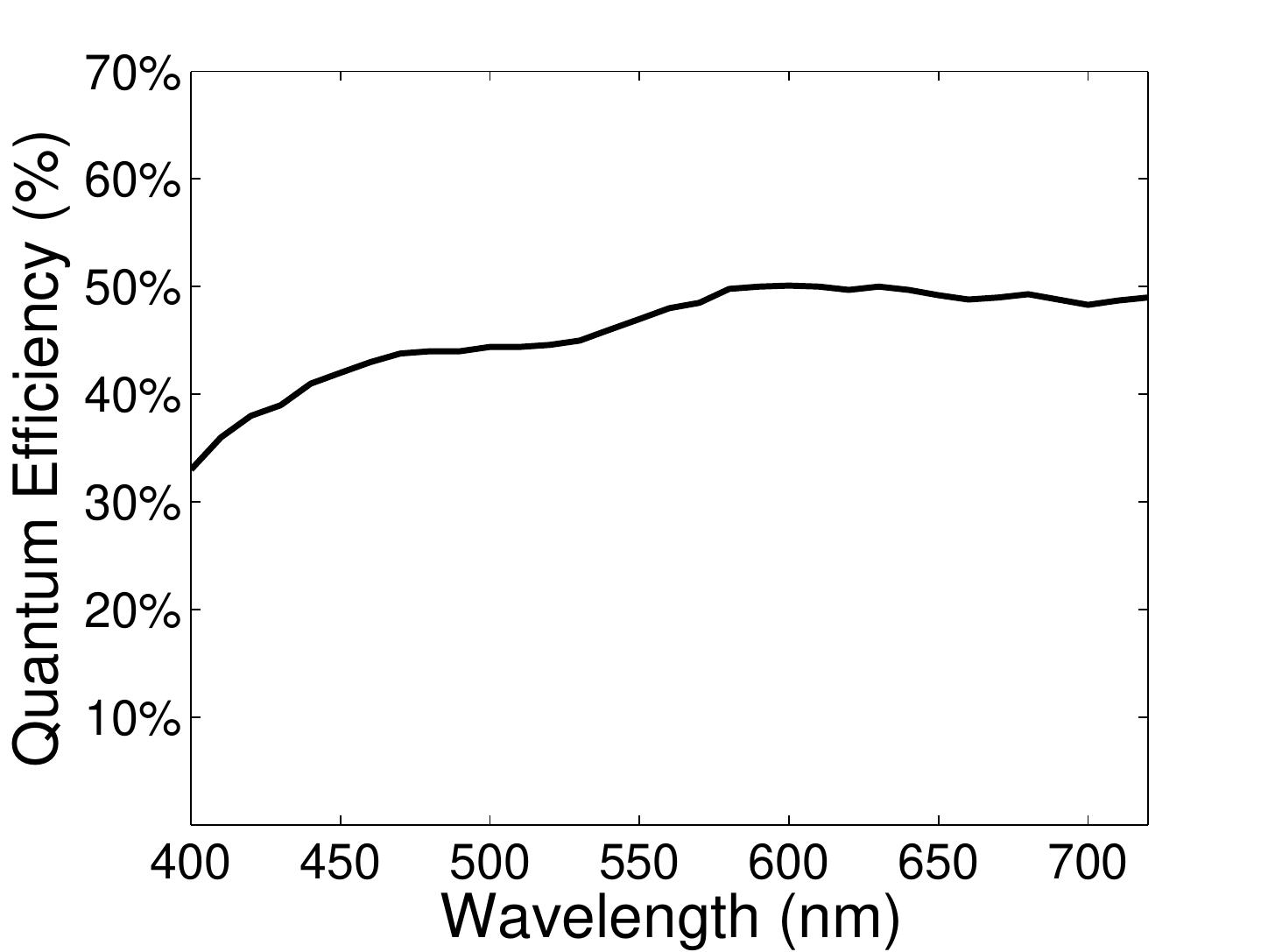}
\caption[Transmission functions of filter and sensor quantum efficiency]{Transmission functions of the LCTF at 10nm wavelength step and the quantum efficiency of the camera CCD versus wavelength.}
\label{fig:sensors}
\end{figure}

\subsection{Exposure-Intensity Relationship}
\label{sec:exp-int}

The relationship between measured intensity and the exposure time, photon flux and quantum efficiency of the camera sensor is
\begin{equation}
I(x,y) = t \int_{z} P(x,y,z) q(z)\,dz~.
\end{equation}
In the above equation, the factor $t$ (exposure time) is fixed and independent of $z$, $P(x,y,z)$ is the photon flux incident on the image sensor array and $q(z)$ is the quantum efficiency of the sensor. In order to control image intensity with exposure time, we can make $t$ variable such that it is a function of $z$. Therefore, the above equation changes to
\begin{equation}
I(x,y) = \int_{z} t(z) P(x,y,z) q(z)\,dz~.
\end{equation}
The exposure time is linearly related to the amount of photon flux incident on the sensor, given the illumination does not vary instantaneously. Moreover, the sensor quantum efficiency remains nearly constant, provided the sensor temperature is held fixed. In summary, if the exposure time is linearly varied, so does in effect, the radiance measured at the sensor pixel. We experimentally validate this relationship. In this experiment, the exposure time was linearly varied from minimum to maximum in discrete steps. In each step, an image of a white patch is acquired and the average response of the pixels is calculated. The minimum exposure is set as the device exposure lower limit $t_{\textrm{min}}$. The maximum exposure $t_{\textrm{max}}$ is a value such that the white pixels average just equals the absolute intensity scale maximum (255). The procedure is repeated for discrete center wavelengths.

In Figure~\ref{fig:exp-lin}-\ref{fig:exp-log} the plots of radiance against exposure times (both linear and log scale) are shown for 4 different wavelengths. We observe a linear relationship between the exposure time and the measured radiance. Furthermore, it can be seen that this relationship remains linear regardless of the wavelength of the incident light. Note that all curves are linear and the slope of the lines is a direct function of the wavelength.

\begin{figure}[h]
\footnotesize
\centering
\subfigure[]{\label{fig:exp-lin}\includegraphics[trim = 5pt 0pt 15pt 5pt, clip, width=0.43\linewidth]{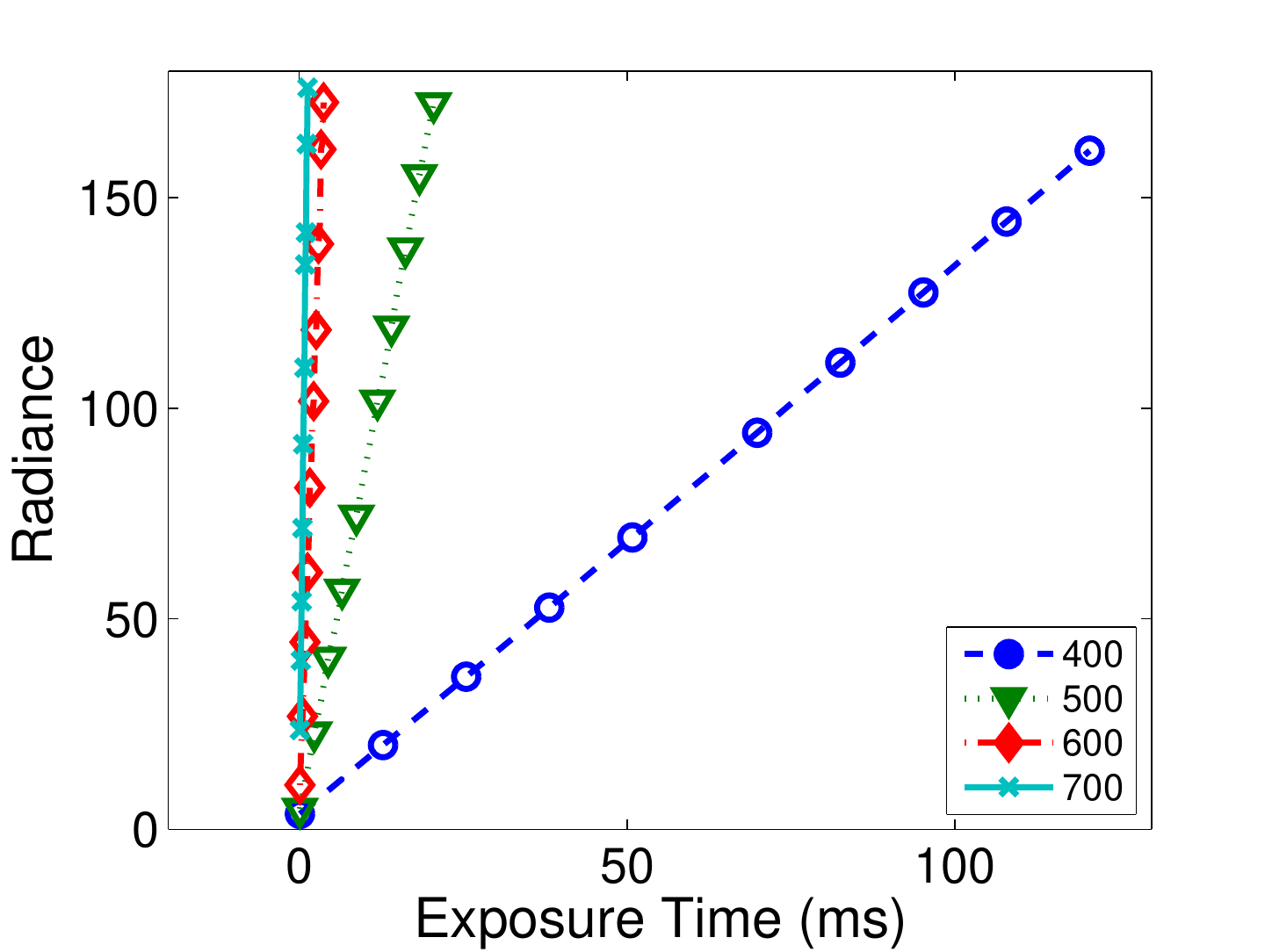}}
\subfigure[]{\label{fig:exp-log}\includegraphics[trim = 5pt 0pt 15pt 5pt, clip, width=0.43\linewidth]{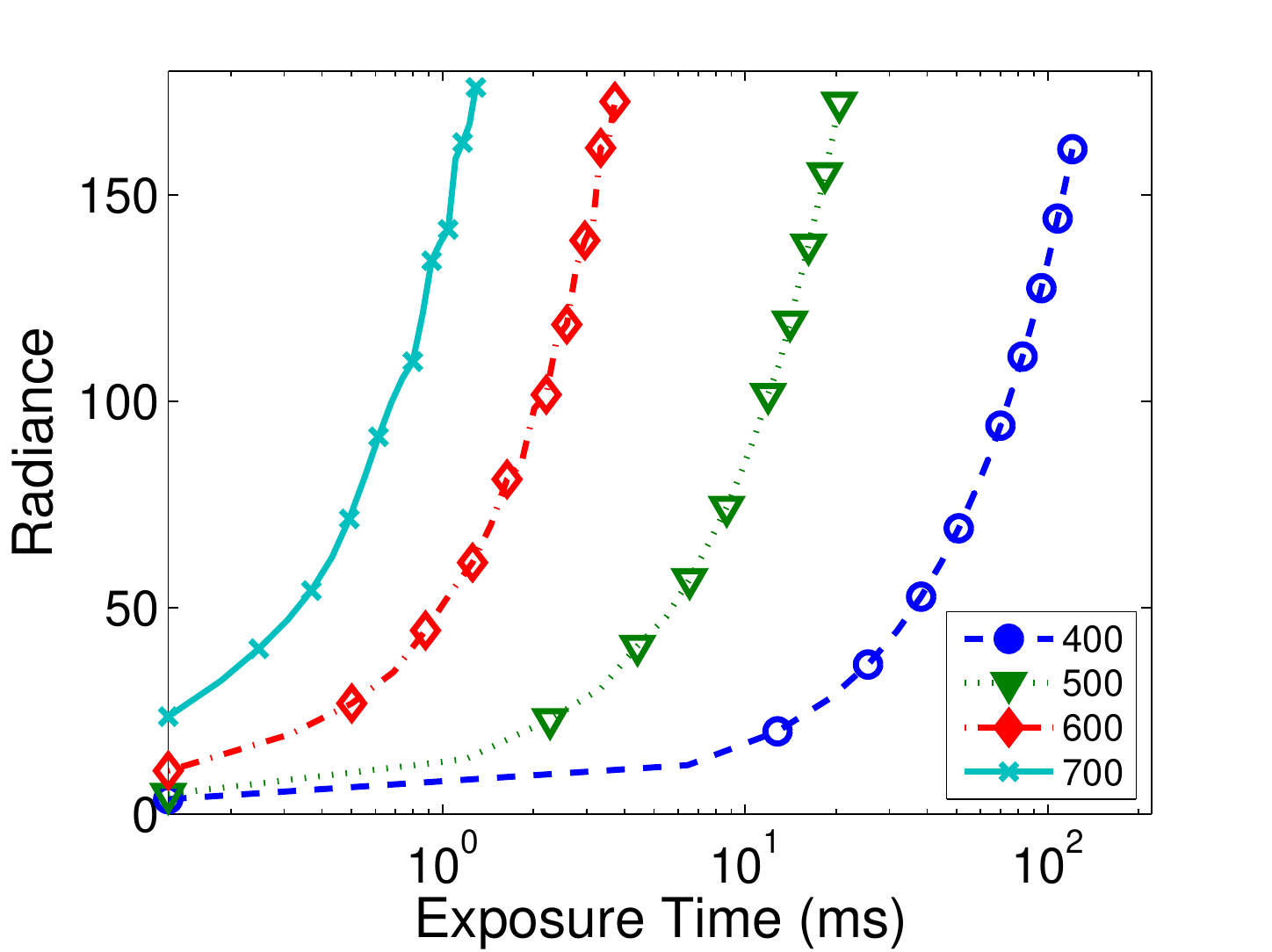}}
\subfigure[]{\label{fig:camera-exp-cc}\includegraphics[trim = 5pt 0pt 15pt 5pt, clip, width=0.43\linewidth]{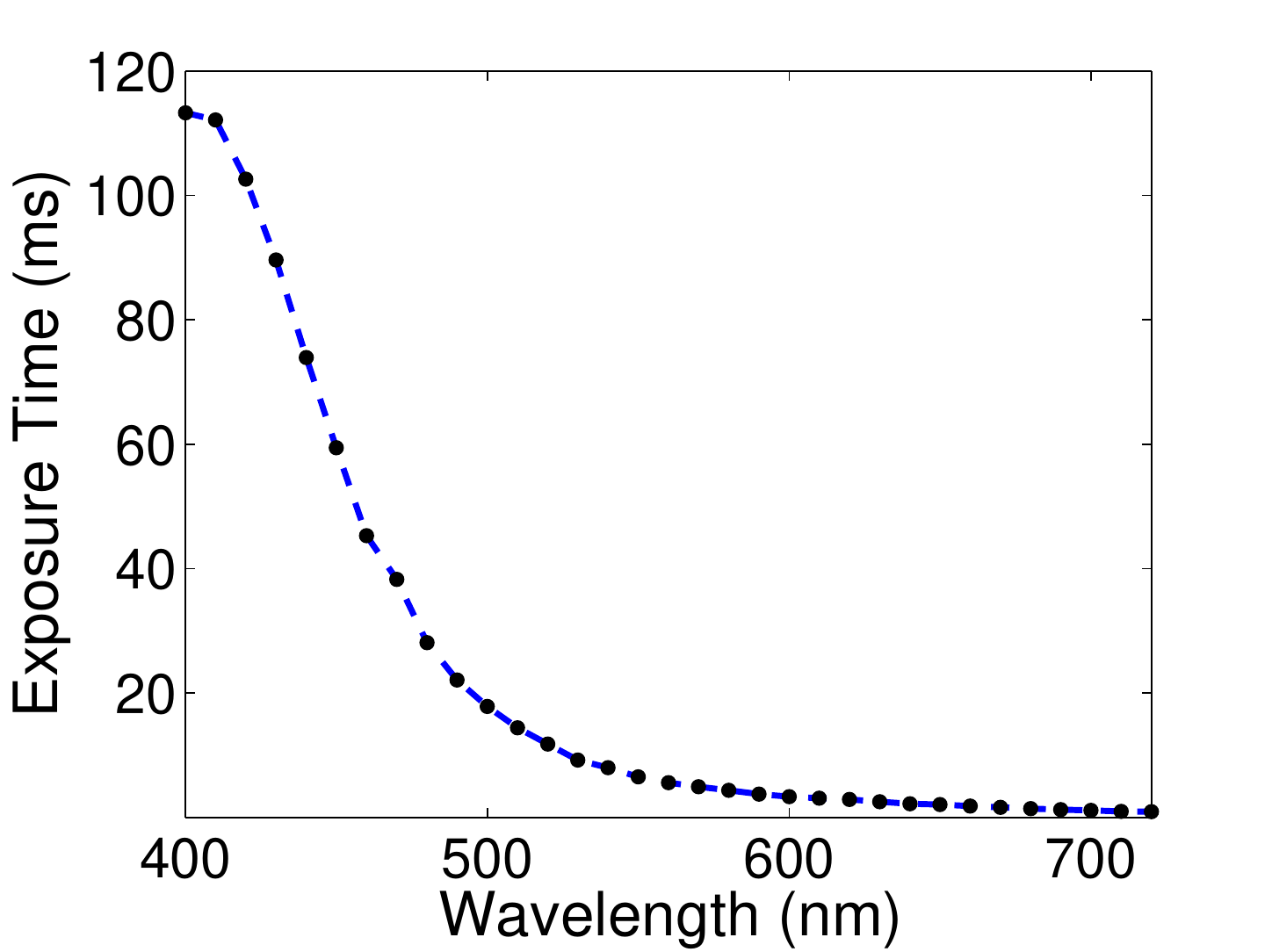}}
\caption[Exposure versus radiance relationship]{Exposure versus radiance relationship (a) linear scale and (b) log scale. (c) The exposure function against wavelength computed by Algorithm~\ref{alg:AETA} for a scene under halogen illumination.}
\label{fig:exposure}
\end{figure}

Based on the validated linear relationship, we now add the variable exposure factor in Equation~\ref{eq:hsi-model} to form our variable exposure hyperspectral image model
\begin{equation}
I(x,y) = \int_{z} t(z)L(z)S(x,y,z)C(z)\,dz~.
\end{equation}
In order to recover the true spectral responses, the exposure time should be an inverse function of the system response and the illuminant spectral power distribution.
\begin{align}
\label{eq:t-prop}
t(z) \propto \frac{1}{L(z)C(z)} \text{\quad or \quad} t(z) = \frac{b}{L(z)C(z)}
\end{align}
where $b$ is a constant of proportionality. In the following we present an automatic exposure time computation algorithm which implicitly computes $t(z)$ that compensates for the factors in Equation~\ref{eq:t-prop}.

\subsection{Automatic Exposure Time Computation}

We propose a bisection search algorithm for automatically computing variable exposure time that compensate for the factors in Equation~\ref{eq:t-prop}. For each band, the bisection search (Algorithm~\ref{alg:AETA}) returns an exposure time ($t_j$). This exposure time would result in a nearly flat response of the white patch. As a result of a flat response for a white patch, the true spectral reflectance of any object in the scene is captured.

The exposure search starts in the range $t_{\textrm{min}}, t_{\textrm{max}}$ which are the absolute camera exposure limits. It is assumed that the exposure time to achieve the required intensity value $\mu_{\textrm{req}}$ exists within this range. Then, for the $j^\textrm{th}$ band, $t_{\textrm{low}}, t_{\textrm{high}}$ are initialized with the absolute exposure limits. A bisection search then begins with the computation of a test exposure value $t_j$ which is the average of $t_{\textrm{high}}$ and $t_{\textrm{low}}$. An image of a white patch is captured with the test exposure $t_j$. If the average value of the patch $\mu_j$ is higher than the required value $\mu_{\textrm{req}}$, then $t_{\textrm{high}}$ is reduced, otherwise, $t_{\textrm{low}}$ is increased and the process is repeated. If the difference between the achieved and required averages is less than a tolerance ($e$) or the number of iterations is exhausted, the search is discontinued and the required exposure for the band is the last test exposure value. In general, the bisection search could easily converge in 4-10 iterations for each band.

\begin{algorithm}[h]  
\caption{Automatic Exposure Time Adjustment Algorithm}          
\label{alg:AETA}                           
\begin{minipage}[c]{0.75\linewidth}
\begin{algorithmic}                    
\Require {$t_{\textrm{min}}, t_{\textrm{max}}$} \Comment{absolute exposure time limits}
\State $\mu_{\textrm{req}}$ \Comment{required average intensity of white patch}
\State $\lambda$ \Comment{total number of bands}
\State $e$ \Comment{tolerance value}
\State $i_{\textrm{max}}$ \Comment{maximum no. of iterations per band}
\For{$j=1$ to $\lambda$}
\State $t_{\textrm{low}} \gets t_{\textrm{min}},\,t_{\textrm{high}} \gets t_{\textrm{max}}$
\State $t_j \gets t_{\textrm{low}}+(t_{\textrm{high}}-t_{\textrm{low}})/2$
\State $i \gets 0$
\Repeat \Comment{bisection search}
\State $I_j \in {\mathbb{R}^{m\times n}}=\texttt{getWhitePatch}(t_j)$
\State $\mu_{j} \gets \frac{1}{mn}\underset{x,y}{\sum} I_j$
\If{$\mu_{j} > \mu_{\textrm{req}}$} \Comment{exposure too high, decrease exposure}
\State $t_{\textrm{high}} \gets t_j$
\State $t_j \gets t_{\textrm{low}}+(t_j-t_{\textrm{low}})/2$
\Else                               \Comment{exposure too low, increase exposure}
\State $t_{low} \gets t_j$
\State $t_j \gets t_j+(t_{\textrm{high}}-t_j)/2$
\EndIf
\State $i\gets i+1$
\Until{$(|\mu_{j}-\mu_{\textrm{req}}|\leq e) \vee (i=i_{\textrm{max}})$}
\EndFor
\Ensure {$t \in \mathbb{R}^{\lambda}$} \Comment{computed exposure times}
\end{algorithmic}
\end{minipage}
\end{algorithm}

Although, for a given imaging system, $C(z)$ is fixed, $L(z)$ usually varies in different capture setups. Thus, an exposure function is automatically estimated for a given illumination to capture the true spectral response of a scene in-situ. Therefore, a single automatic calibration sequence is required to compute the exposure function. Figure~\ref{fig:camera-exp-cc} provides the computed exposure times for a scene under halogen illumination. Observe how larger exposure values are obtained for shorter wavelengths and vice versa, resulting in radiometric compensation of the system response and illumination.

\section{Hyperspectral Image Rendering for Visualization}

Hyperspectral images are essentially made of more than three bands. Since, human eye can only sense three colors (commonly referred to RGB), the hyperspectral images are visualized in two ways. One way is to use pseudo-color maps, which are mostly used in remotely sensed satellite captured hyperspectral images. A pseudo-color rendering is sufficient and favorable for visualizing the different class of materials that can be recognized from such images, e.g., land, water, vegetation, fire, roads and pavements etc. However, in ground based hyperspectral imaging of real world objects, it is preferred to visualize the hyperspectral images as they normally appear to a human observer, i.e. in RGB colors.

For rendering hyperspectral images into RGB, there are many different options. The first, which is somewhat standard is to use the CIE 1931 color space transformation created by International Commission on Illumination (CIE). The CIE XYZ standard color matching functions are analogous to the human cone responses which were experimentally computed in 1931~\cite{smith1931cie}.

\begin{equation}
X = \int_{z_1}^{z_2} I(z)\,\overline{x}(z)\,dz,~~
Y = \int_{z_1}^{z_2} I(z)\,\overline{y}(z)\,dz,~~
Z = \int_{z_1}^{z_2} I(z)\,\overline{z}(z)\,dz
\end{equation}
where $\bar{x}(z),\bar{y}(z)$ and $\bar{z}(z)$ are the chromatic response functions of a standard observer. $(z_1,z_2)$ is the spectral range, generally (400nm,720nm).

To visualize the images on color displays, a predefined linear transformation converts the images from the XYZ color space to the sRGB color space.

\begin{equation}
\begin{bmatrix}R\\G\\B\end{bmatrix}=
\begin{bmatrix}
3.2406&-1.5372&-0.4986\\
-0.9689&1.8758&0.0415\\
0.0557&-0.2040&1.0570
\end{bmatrix}
\begin{bmatrix}X\\Y\\Z\end{bmatrix}
\end{equation}

The second approach is to use a custom XYZ transformation function. The reason to use non-standard XYZ functions is that the standard CIE XYZ functions may not provide the optimal transformation for visualization of images. This is because different hyperspectral imaging systems suffer from camera and filter noise. This introduces deviation in measurement from real spectra that exists in the real world. Thus, a perceptually correct visualization in RGB requires specialized transformation functions. Such transformations may be characterized by Gaussian filters occurring in the vicinity of R,G and B. The mean and spread of the Gaussians determine the relative proportion of the red, green and blue colors.

\begin{equation}
\bar{x}(z) = a_{1}~\exp\left({-\frac{x-\mu_{z_1}}{\sqrt{2}\sigma_1}}\right)^2~,\\
\bar{y}(z) = a_{2}~\exp\left({-\frac{y-\mu_{z_2}}{\sqrt{2}\sigma_2}}\right)^2~,\\
\bar{z}(z) = a_{3}~\exp\left({-\frac{z-\mu_{z_3}}{\sqrt{2}\sigma_3}}\right)^2~,
\end{equation}
where $a$ is the peak filter response, $\mu_{z}$ is the center wavelength and $\sigma$ denotes the spread of a filter.
In this work, the parameters of custom XYZ to RGB transformation are, mean: $(\mu_1,\mu_2,\mu_3) = (640,550,450)$, variance: $(\sigma_1,\sigma_2,\sigma_3 = (40,40,40)$, peak: $(a_1,a_2,a_3) = (1,0.88,0.8)$

\begin{figure}[h]
\centering
\includegraphics[width=0.4\linewidth]{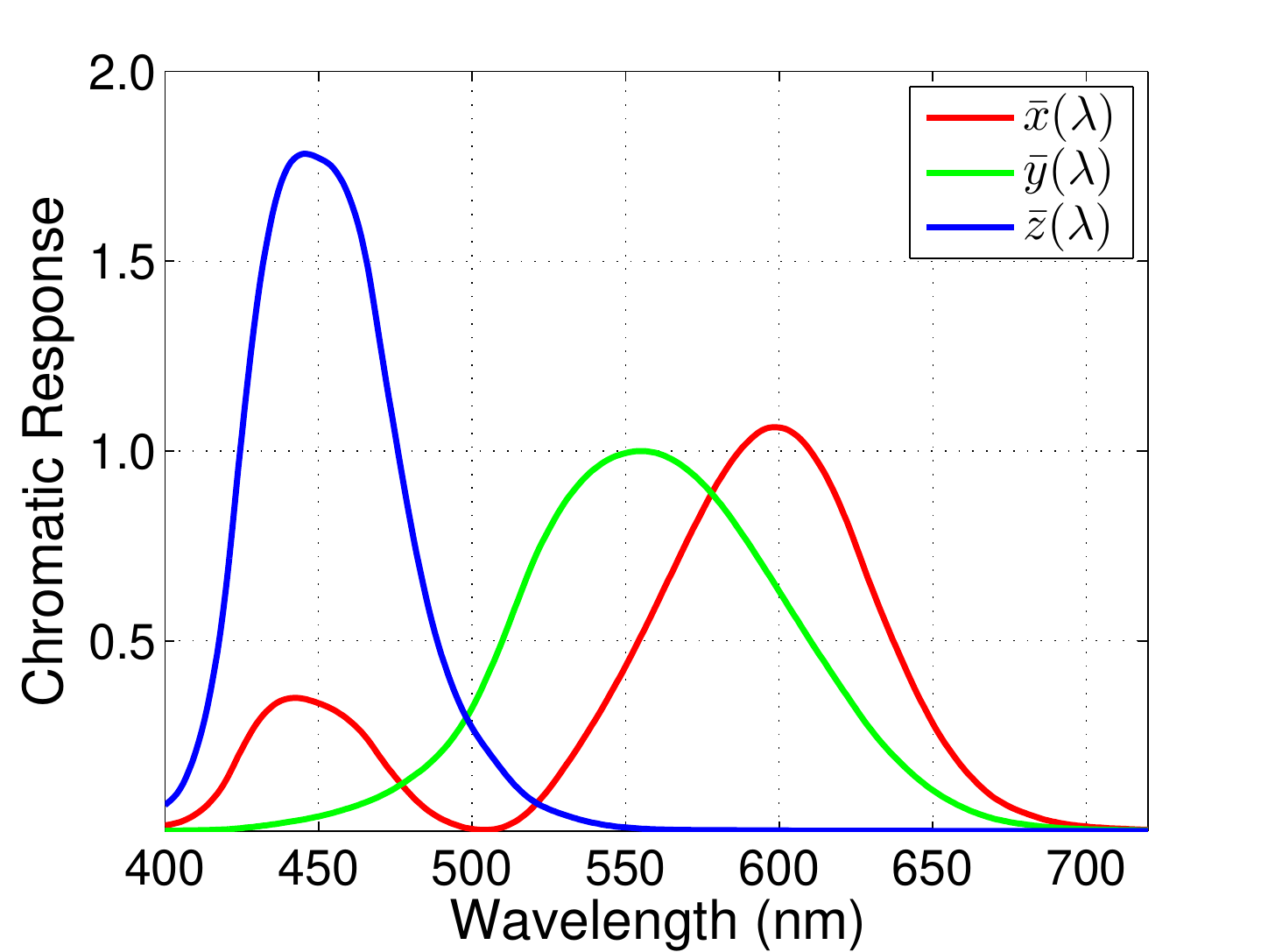}
\includegraphics[width=0.4\linewidth]{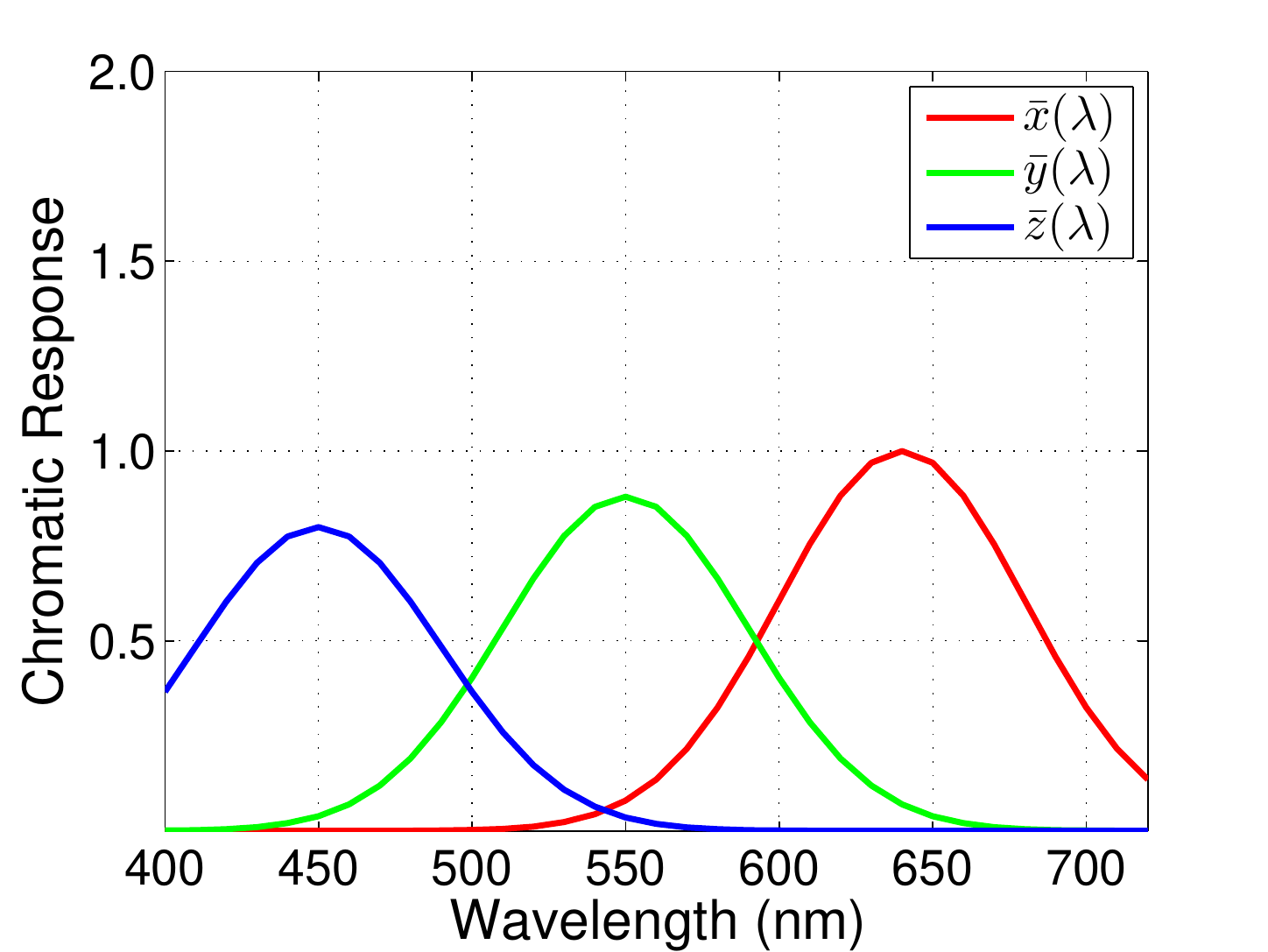}
\caption[Color matching functions]{The CIE standard observer color matching functions and the \emph{custom} color matching functions ($\bar{x}(z),\bar{y}(z),\bar{z}(z)$)}
\label{fig:XYZfcn}
\end{figure}

\section{Experimental Setup}
\label{sec:exp-setup}

\subsection{Imaging Setup}

The hyperspectral image acquisition setup is illustrated in Figure~\ref{fig:setup}. The system consists of a monochrome machine vision CCD camera from \emph{Basler Inc.} with a native resolution of $752\times 480$ pixels (8-bit). In front of the camera is a focusing lens (\emph{Fujinon} 1:1.4/25mm) followed by a \emph{VariSpec Inc.} liquid crystal tunable filter, operable in the range of 400-720 nm. The average tuning time of the filter is 50 ms. The filter bandwidth, measured in terms of the \emph{Full Width at Half Maximum (FWHM)} is 7 to 20nm which varies with the center wavelength. The scene is illuminated by a choice of different illuminations. For automatic exposure computation, the white patch from a 24 patch color checker from \emph{Xrite Inc.} was utilized. Note that the white patch is not utilized to spectrally calibrate a hyperspectral image by dividing each band by corresponding band of the white patch. This is because it would mean using the ground truth illumination of the scene.

\begin{figure}[h]
\centering
\includegraphics[width=0.48\linewidth]{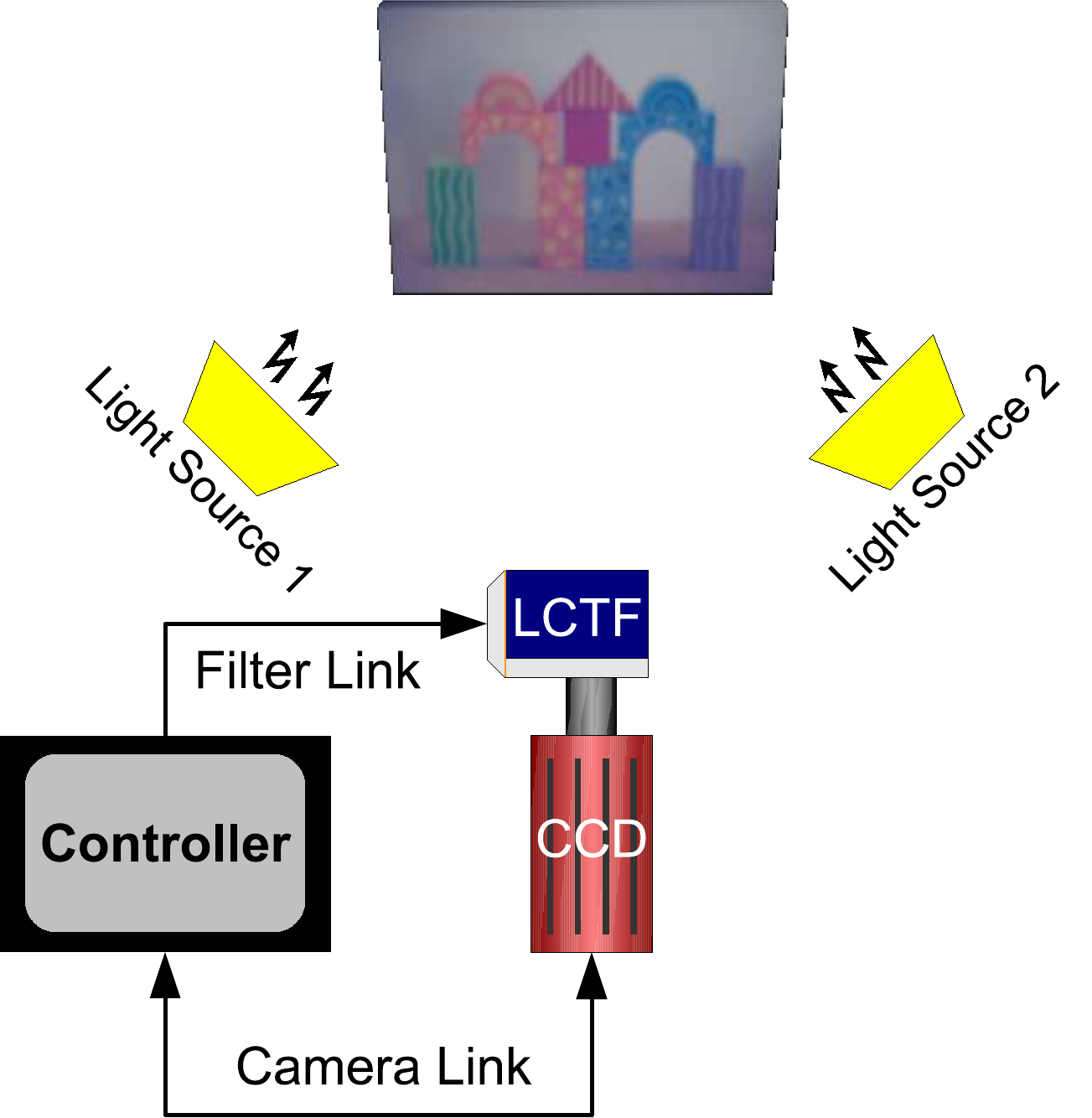}
\caption[Hyperspectral imaging setup]{The proposed LCTF based hyperspectral image acquisition setup.}
\label{fig:setup}
\end{figure}

\subsubsection{Variable Exposure Imaging}

Once the calibrated exposure times for each band are obtained using Algorithm~\ref{alg:AETA}, the hyperspectral images can be captured in a time multiplexed manner. The filter is tuned to the desired wavelength, followed by setting the camera to the required exposure for that wavelength. An image is acquired and the cycle is repeated for the rest of the bands. The whole hyperspectral cube can be acquired in around 6 seconds. Sample images captured using automatic exposure time adjustment are shown in Figure~\ref{fig:auto-exp-images}. Observe that the captured images are much close in visual appearance to the real world, implying that the bias of illumination, sensor and filter have been compensated to a great extent.

\begin{figure}[h]
\centering
\footnotesize{Scene 1~~~~~~~~~~~~~~~~~~~~~~~~~~~~Scene 2~~~~~~~~~~~~~~~~~~~~~~~~~~~~Scene 3} \\
\subfigure[Hyperspectral images captured with variable exposure]{\label{fig:auto-exp-images}
\includegraphics[width=0.3\linewidth]{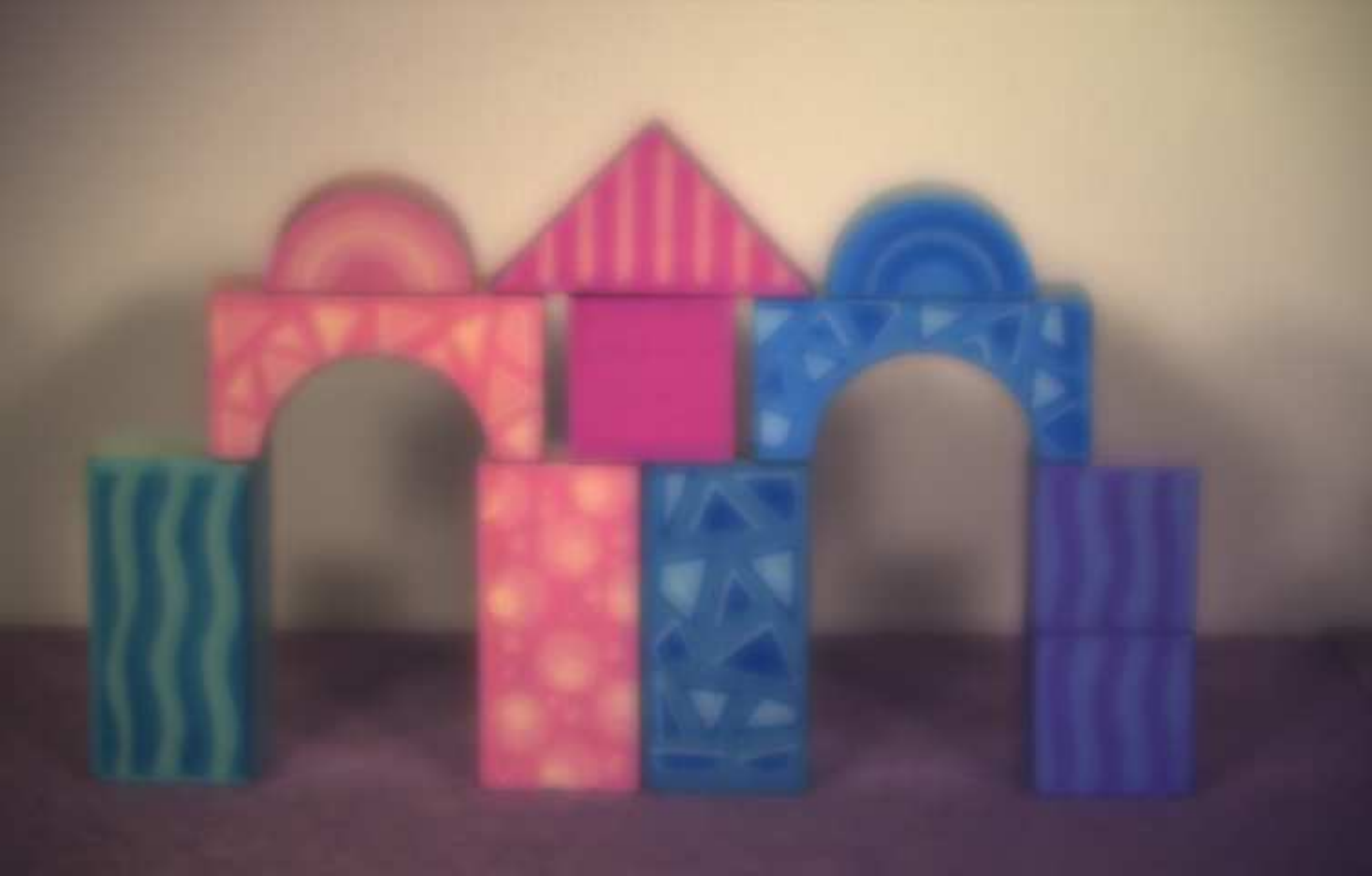}
\includegraphics[width=0.3\linewidth]{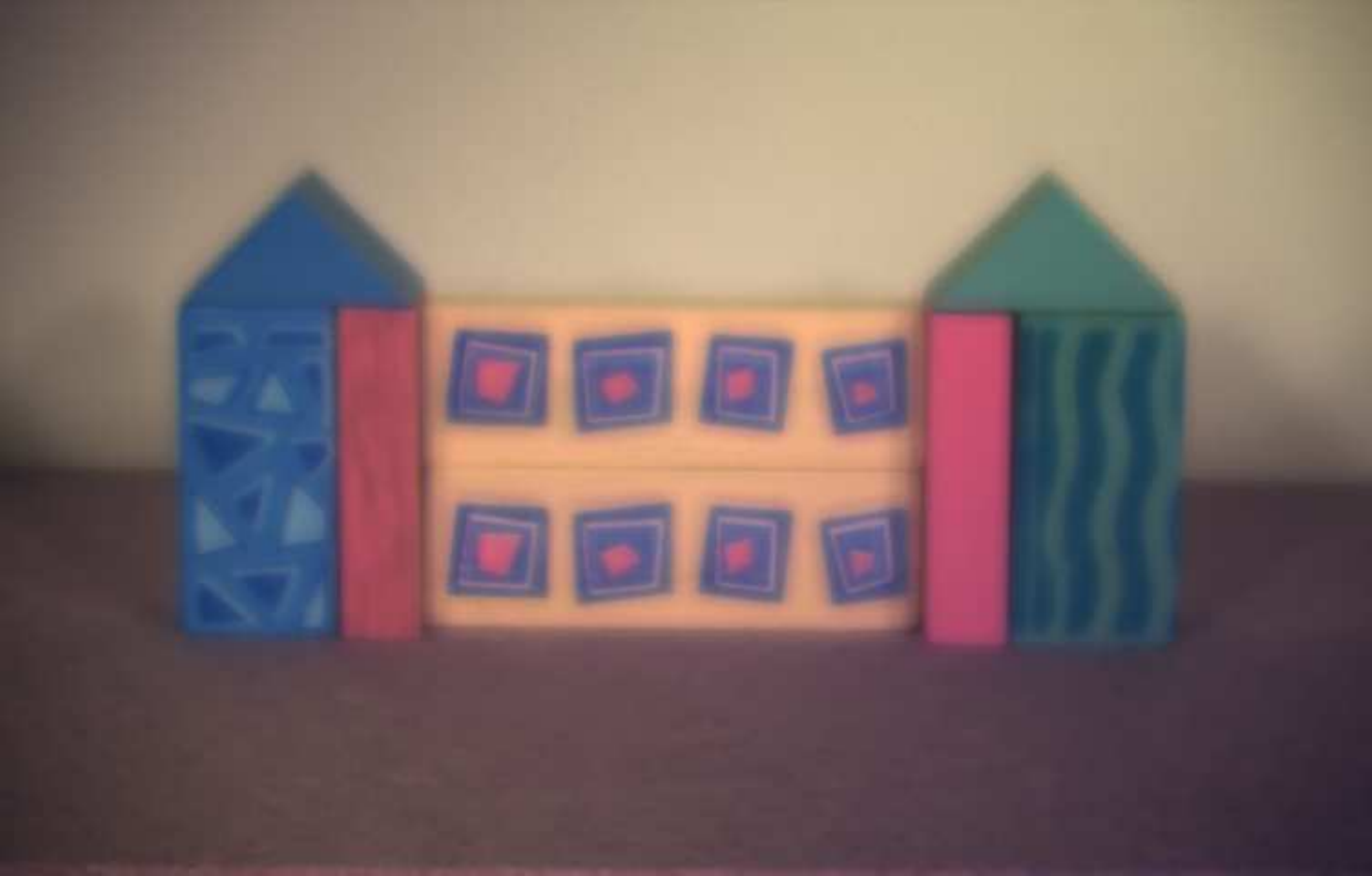}
\includegraphics[width=0.3\linewidth]{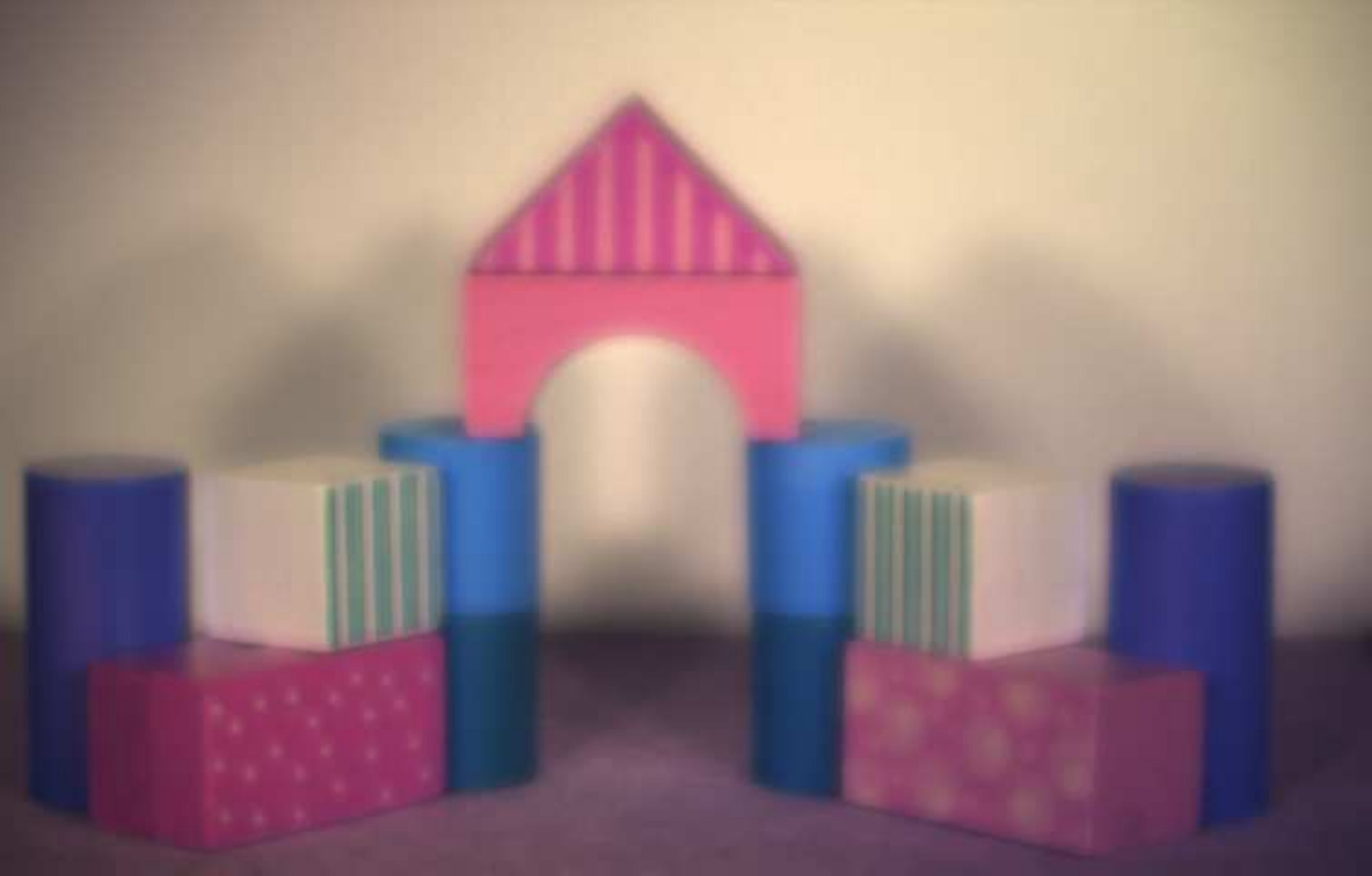}}\\
\subfigure[Hyperspectral images captured with fixed exposure]{\label{fig:const-exp-images}
\includegraphics[width=0.3\linewidth]{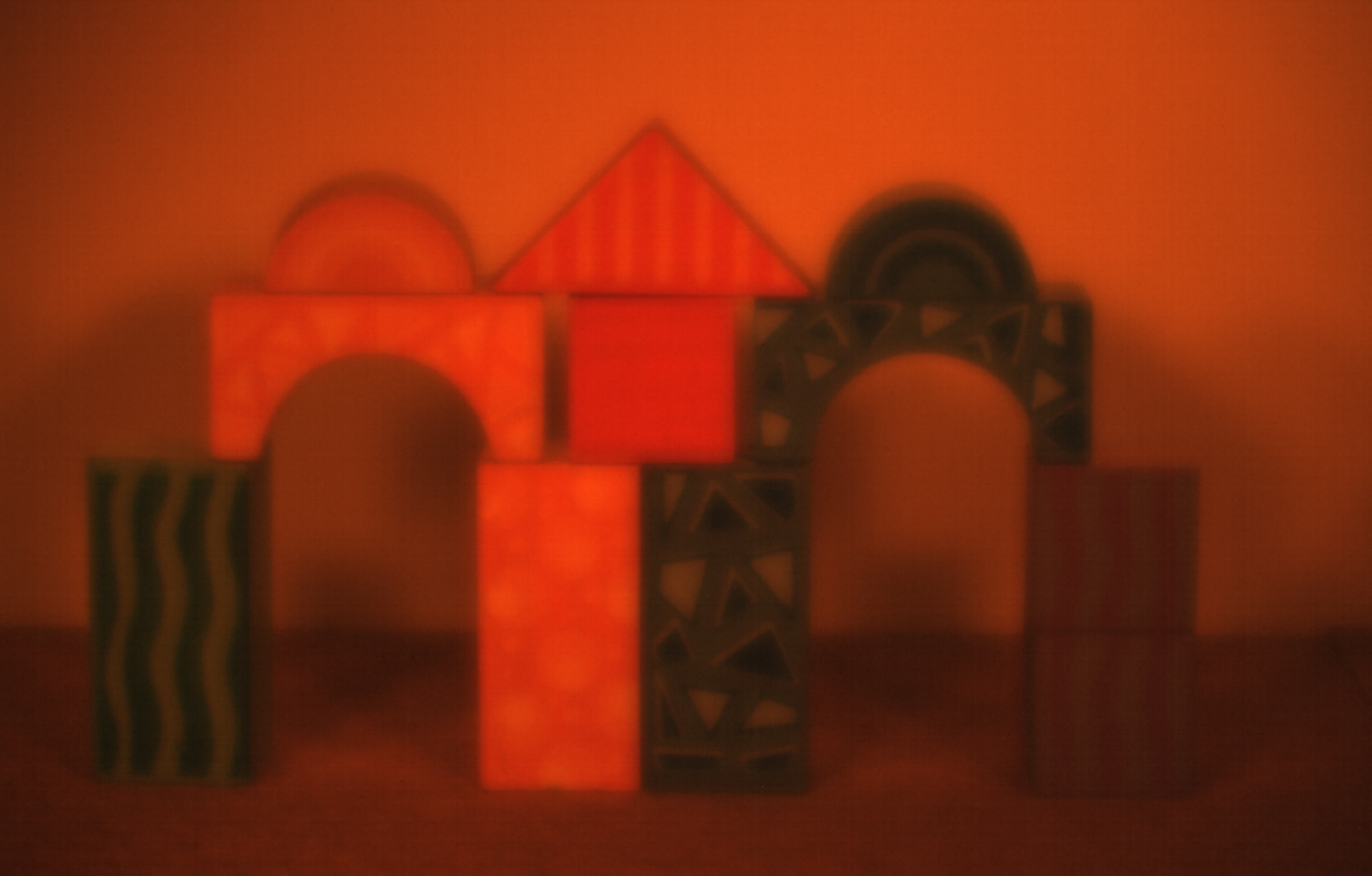}
\includegraphics[width=0.3\linewidth]{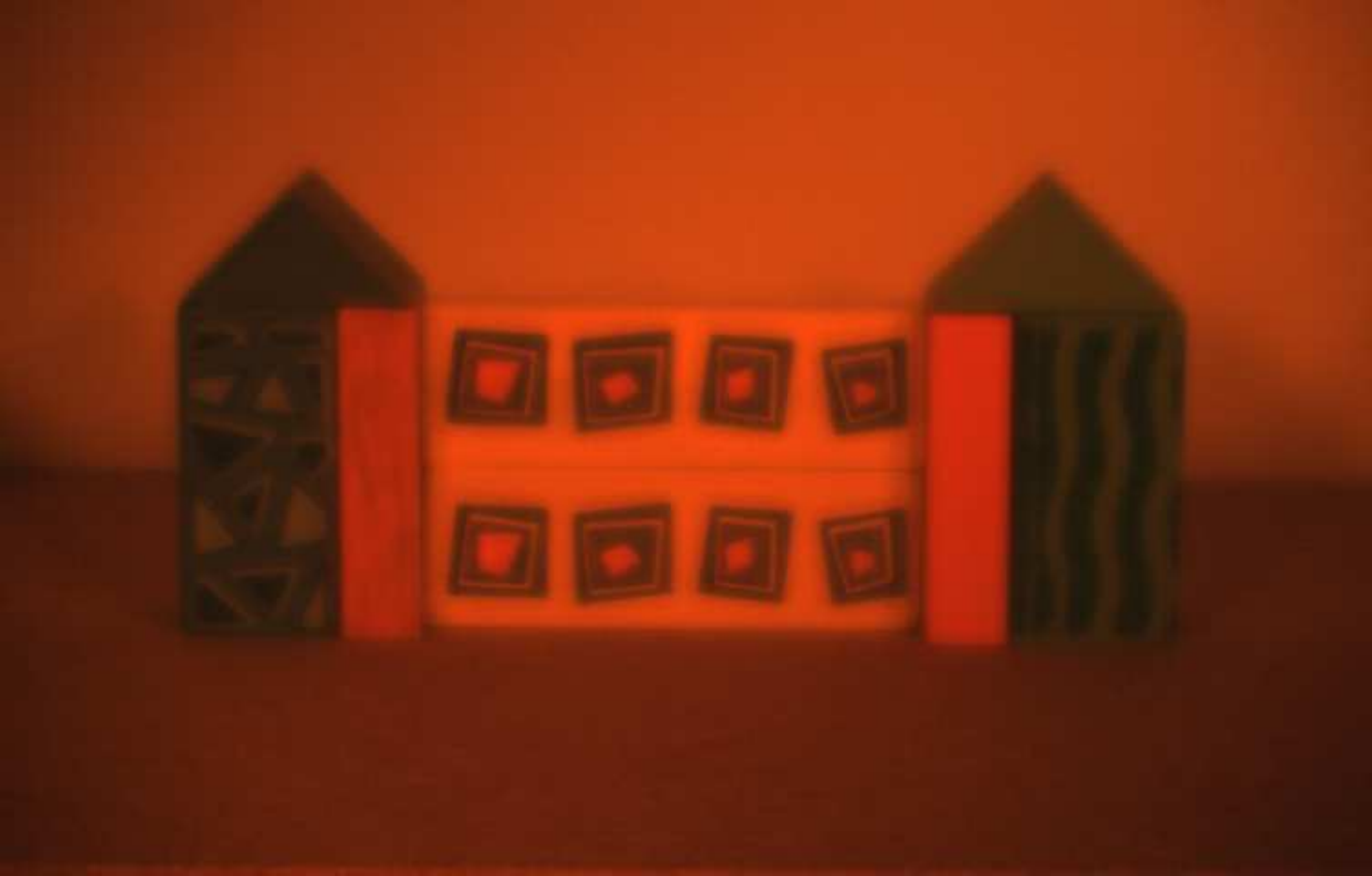}
\includegraphics[width=0.3\linewidth]{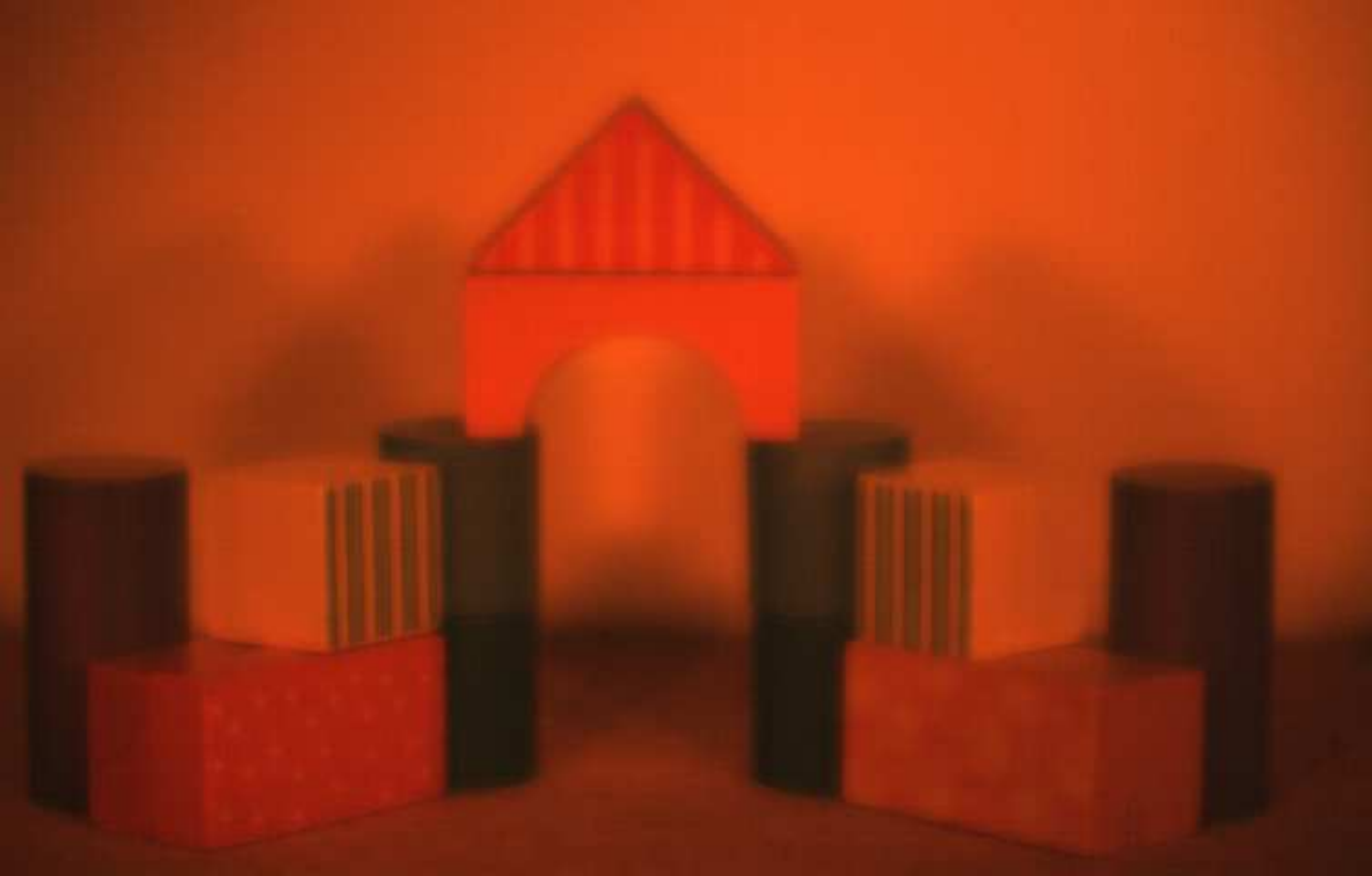}}\\
\subfigure[RGB images captured with a standard digital camera]{\label{fig:rgb-images}
\includegraphics[width=0.3\linewidth]{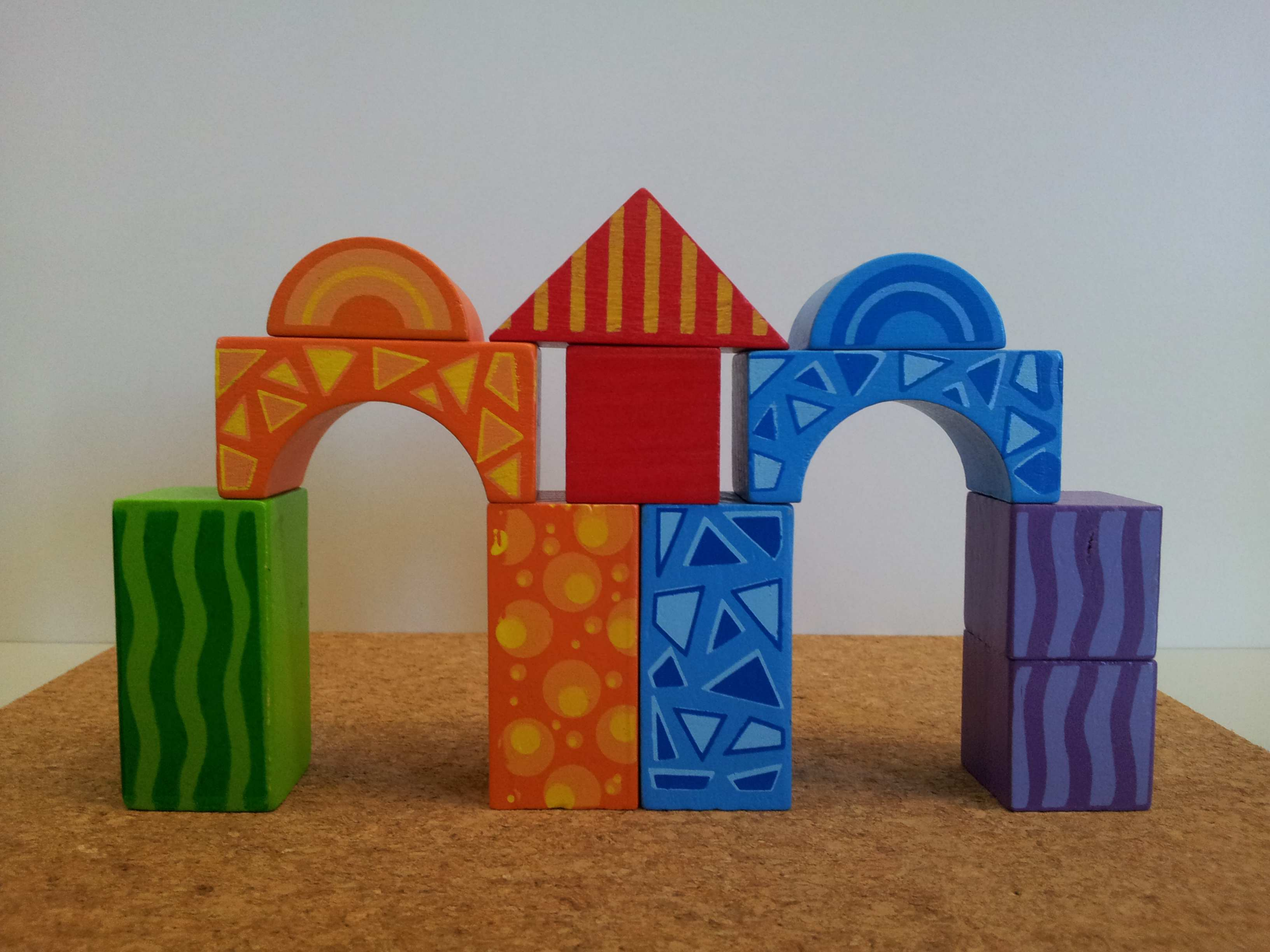}
\includegraphics[width=0.3\linewidth]{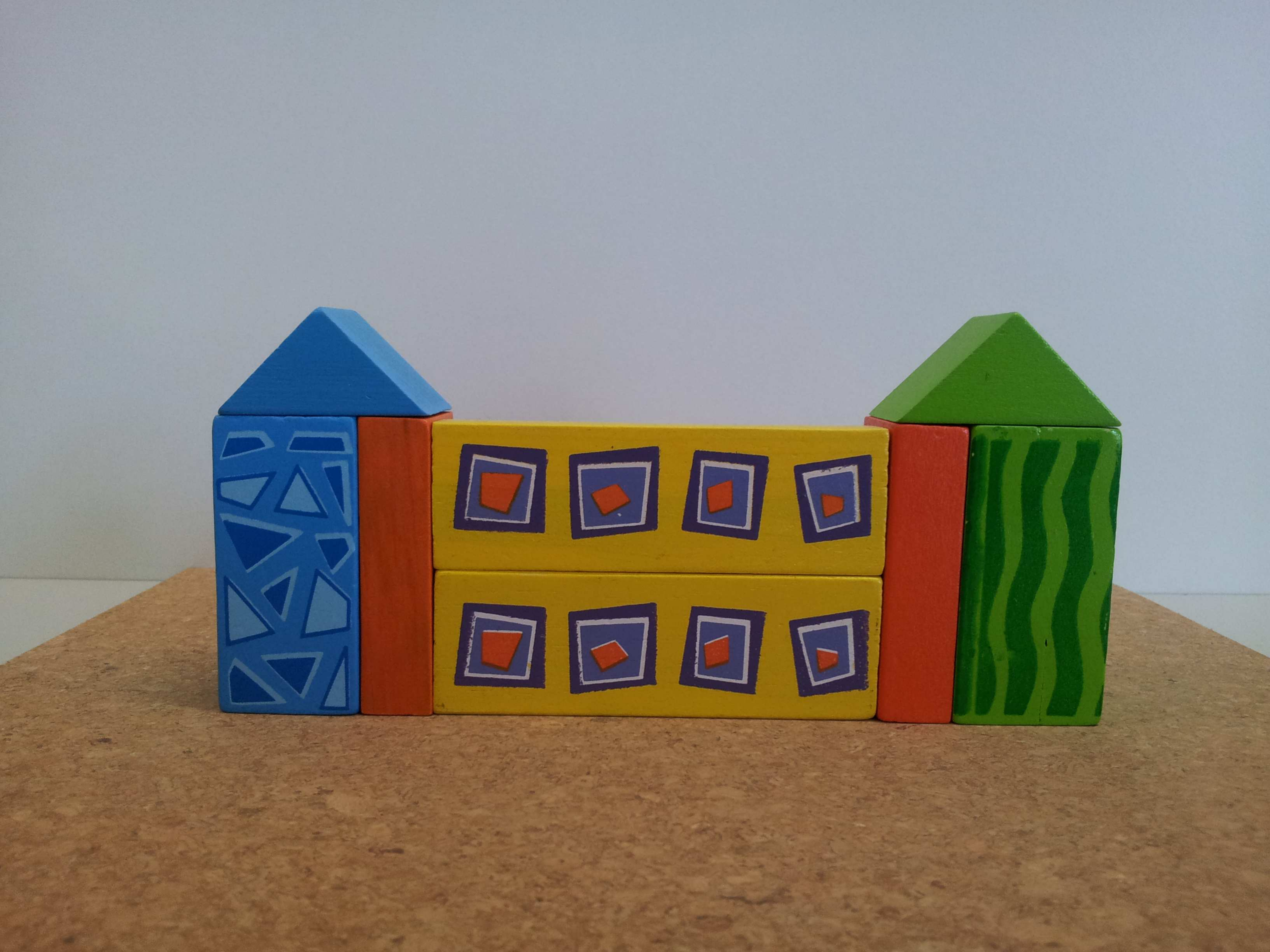}
\includegraphics[width=0.3\linewidth]{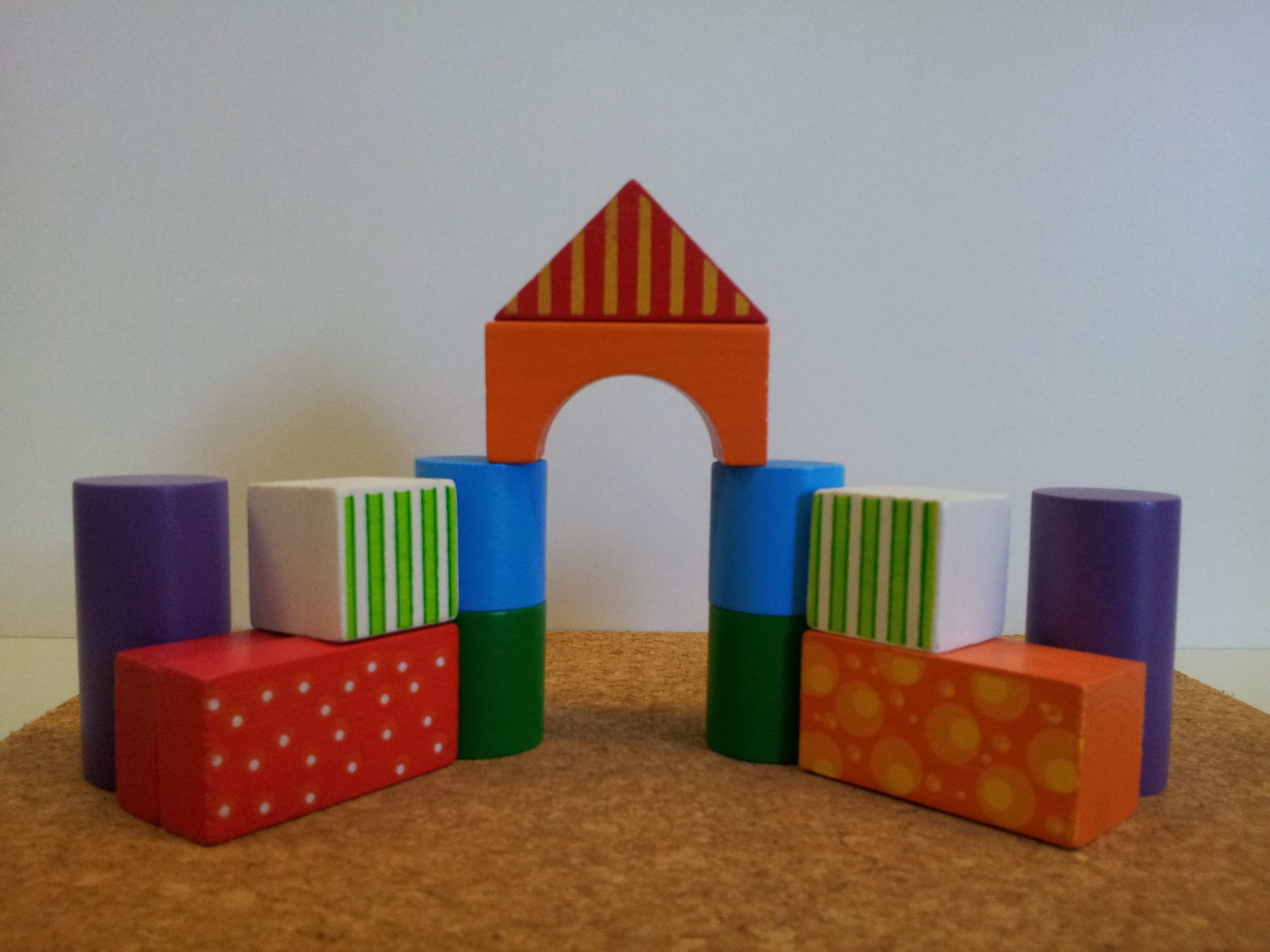}}
\caption[Sample hyperspectral images]{Hyperspectral scenes in the UWA multi-illuminant hyperspectral scene data captured by different methods.}
\label{fig:exp-images}
\end{figure}

\subsubsection{Fixed Exposure Imaging}

An important factor in fixed exposure imaging is to control the exposure time such that the image pixels do not saturate in any band. In a LCTF hyperspectral imaging system, various factors affect the final radiance value measured at the sensor pixel. If these factors are not taken into account, saturation may occur which results in loss of valuable information. Thus for capturing images, we expose the scene for the maximum possible exposure time, that just avoids saturation in any band. Thus, if $t_1, t_2,...,t_\lambda$ are the maximum allowable exposures for each band, then $t_{\textrm{opt}} = \min (t_1,...,t_\lambda)$ is the fixed exposure value for capturing all bands. Theoretically this ensures that no pixel in any band exceeds the camera intensity scale. In order to allow successful image capture in most lighting conditions, we keep the saturation intensity threshold to 180 which is much less than the hardware threshold value (255). This ensures a real intensity value per image pixel, even in adverse lighting conditions and in presence of noise.

Figure~\ref{fig:const-exp-images} shows the rendered hyperspectral images using fixed exposure. It can be observed that the rendered images are not visually similar in appearance to real world RGB images shown in Figure~\ref{fig:rgb-images}. There are two main reasons for this that introduce a combined effect. First is the illumination spectral power distribution which is low at shorter wavelengths and high at larger wavelengths. Second, the LCTF has a variable filter transmission, such that there is less transmission at shorter wavelengths and more at longer wavelengths. Due to this, there is more red illuminant power and the resulting images exhibit a reddish tone. The quantum efficiency of the camera sensor is also variable but not a limiting factor in this case.

\clearpage

\subsection{Dataset Specifications}

\subsubsection{Simulated Data}
In order to evaluate the hyperspectral color constancy algorithms, we perform experiments on simulated in addition to the real data. Experiments on simulated data are important because the true illumination is known for comparison with the estimated illumination. The hyperspectral images of simulated illumination scenes are synthesized from the publicly available CAVE multispectral image database\footnote{CAVE Multispectral Image Database\\ \url{www1.cs.columbia.edu/CAVE/projects/gap_camera/}} which contains true spectral reflectance images. It has 31 band hyperspectral images (420-720nm with 10nm steps) of 32 scenes consisting of a variety of objects at a resolution of $512 \times 512$ pixels. Each image has a color checker chart in place, masked out to avoid bias in illuminant estimation. The advantage of using this dataset is that the true spectral reflectance of the scenes is known, so that a scene can be illuminated by any light source of a known SPD.

The Simon Fraser University (SFU) hyperspectral dataset\footnote{{SFU Hyperspectral Set\\ \url{www.cs.sfu.ca/~colour/data/colour_constancy_synthetic_test_data/index.html}}}~\cite{barnard2002data} contains SPDs of 11 real illuminants in (\emph{Set A}) and 81 real illuminants in (\emph{Set B}). We make use of the \emph{Set A} illuminants to simulate real life lighting scenarios for the images in the CAVE database. The \emph{Set B} illuminants are used to train the SVM to classify the illuminants in \emph{Set A} as smooth or spiky. Each image of the database is illuminated by a source and the estimated illumination is recovered using all the algorithms. The difference between the estimated illuminant spectra compared to ground truth is then measured in terms of the angular error.

\subsubsection{Real Data}

Using the imaging setup described previously, we collected a hyperspectral image dataset of real world scenes. The UWA multi-illuminant hyperspectral scenes dataset contains images of different scenes captured under five real illuminations namely daylight, halogen, fluorescent, and two mixed illuminants, daylight-fluorescent and halogen-fluorescent. Each hyperspectral image has 33 bands in the range (400-720nm with 10nm steps). To create spatially and spectrally diverse scenes, we selected blocks of various shape and color, arranged to form 3 distinct structures.

\begin{figure}[t]
\centering
\includegraphics[trim = 100pt 15pt 115pt 2pt, clip, width=0.34\linewidth]{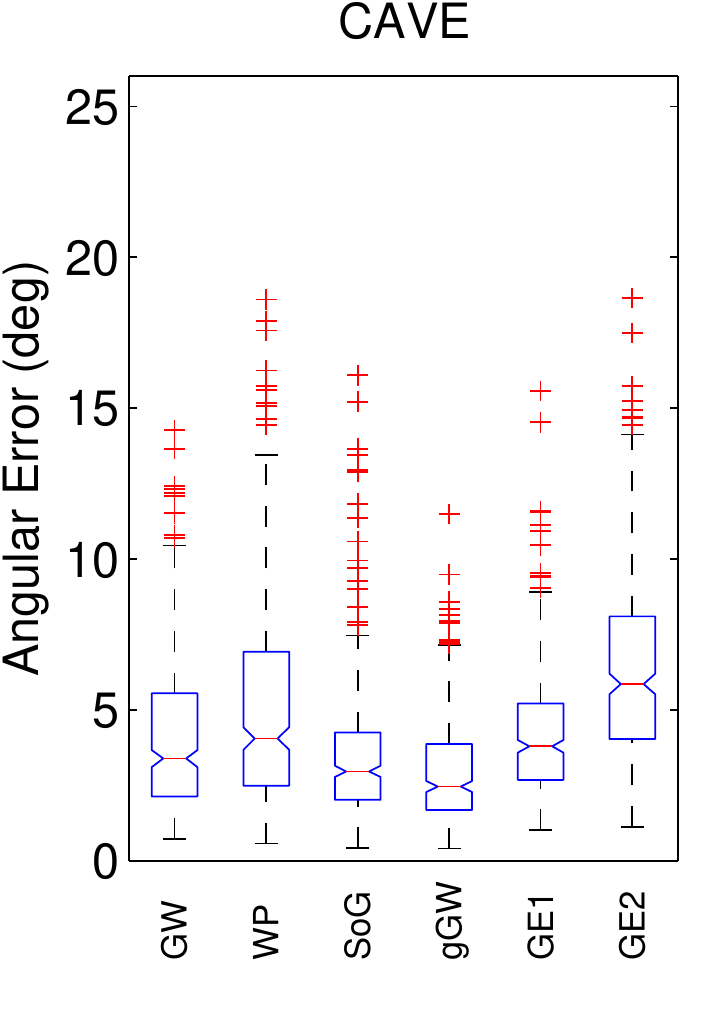}
\includegraphics[trim = 100pt 15pt 115pt 2pt, clip, width=0.34\linewidth]{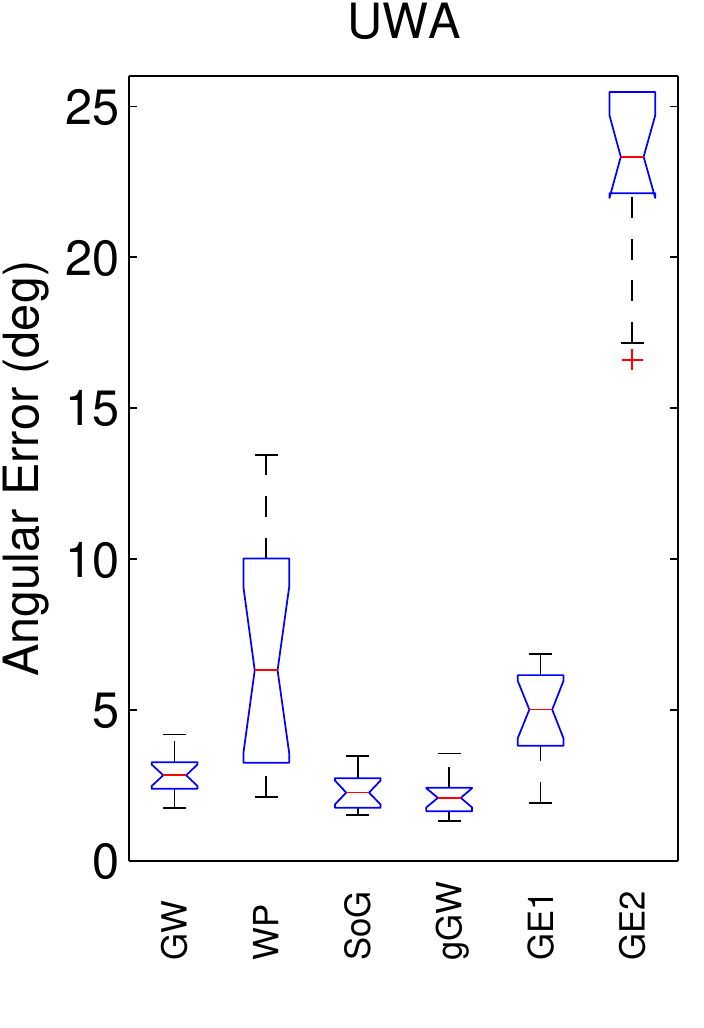}
\caption[Angular errors for individual algorithms]{Distribution of angular errors in simulated and real datasets. Observe that the GW, SoG and gGW algorithms achieve the lowest mean angular errors on both databases.}
\label{fig:boxplot-ind}
\end{figure}

\section{Results and Discussion}
\label{sec:results}

\subsection{Individual and Combinational Color Constancy Methods}
\label{sec:sing-comb}


In experiments, we first present the angular error distributions in the form of a boxplot\footnotemark~for all color constancy algorithms (see Figure~\ref{fig:boxplot-ind}). The results are without the adaptive spatio-spectral support.
We observe that the gGW algorithm achieves the lowest mean angular error (MAE). Analysis of the edge based color constancy algorithms GE1 and GE2 indicates that the first order derivative assumption holds better compared to the second order derivative. Overall, GW, WP, SoG and gGW exhibit comparable performances with slight variation.
\footnotetext{Boxplot: On each box, the central mark is the median, the lower and upper edge of the box are the 25th and 75th percentiles, respectively. The whiskers extend to the most extreme data points not considered outliers, and the outliers are plotted individually as red crosses. Two medians are significantly different at the $5\%$ significance level if their intervals do not overlap. Interval endpoints are the extremes of the notches.}

\begin{figure}[h]
\centering
\includegraphics[trim = 40pt 2pt 45pt 15pt, clip, width=0.66\linewidth]{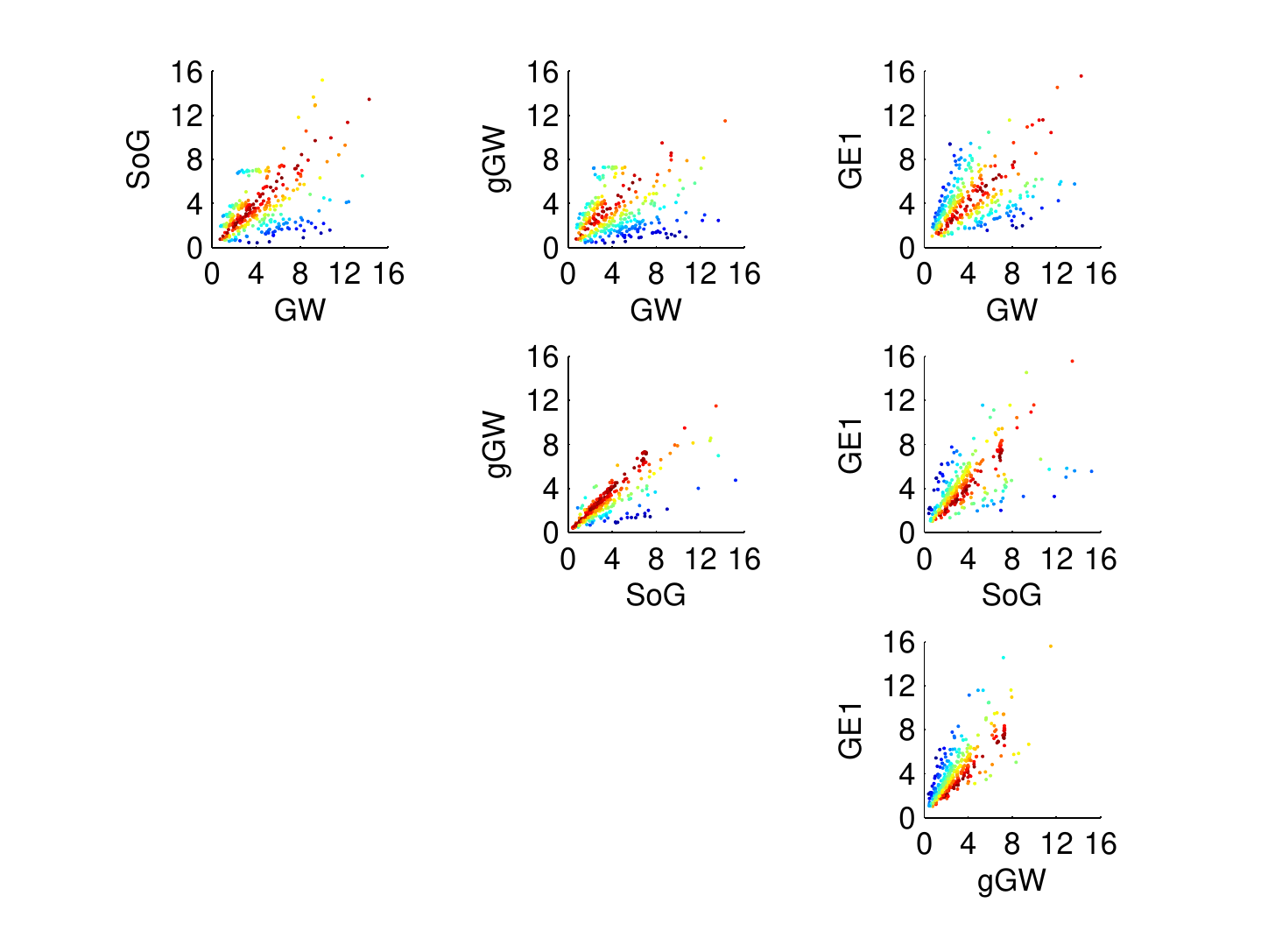}
\caption[Correlation of the angular errors for the best individual algorithm pairs]{Correlation of the angular errors for the best individual algorithm pairs on simulated data. Notice the errors of GW algorithm (top row) are least correlated with other algorithms (SoG, gGW, GE1) making them a preferred choice for CbC.}
\label{fig:corr-sing}
\end{figure}

\begin{figure}[h]
\centering
\includegraphics[trim = 100pt 15pt 115pt 2pt, clip, width=0.34\linewidth]{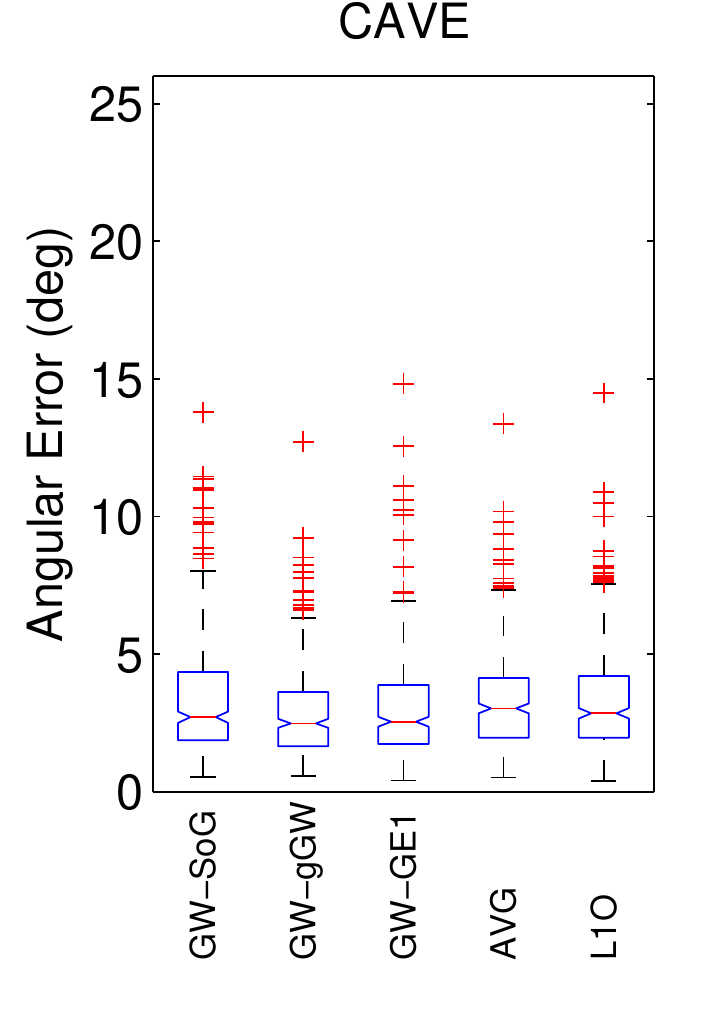}
\includegraphics[trim = 100pt 15pt 115pt 2pt, clip, width=0.34\linewidth]{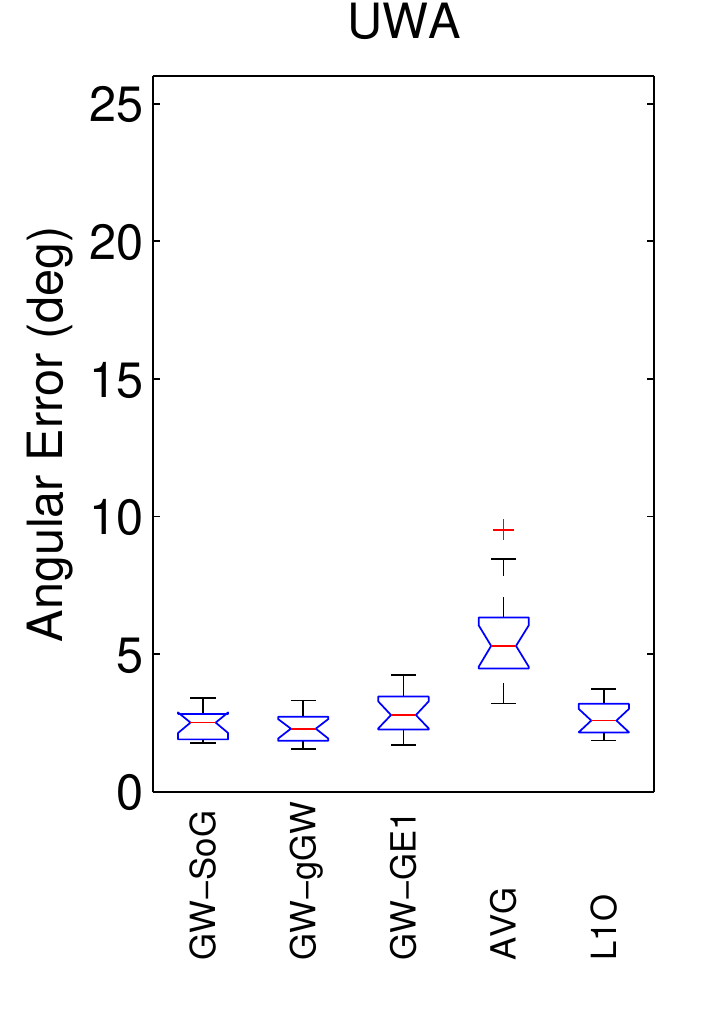}
\caption[Angular errors for combinational algorithms]{Distribution of angular errors for combinational algorithms on simulated and real data. The correlation based combinations outperform other combinational strategies.}
\label{fig:boxplot-cmb}
\end{figure}


As mentioned previously in Section~\ref{sec:cmb}, we analyze the error correlation of the four best performing individual algorithms. A close analysis of the scatter plots in Figure~\ref{fig:corr-sing} reveals that the output of GW algorithm is relatively less correlated with that of the other algorithms. On the other hand, the errors of SoG, gGW and GE1 are more correlated. This leads to the inference that GW combined with any of the other three algorithms should yield better illumination estimates. Therefore, we devise three correlation based combinations, GW-SoG, GW-gGW and GW-GE1 to include in the list of combinational algorithms. The distribution of angular errors for all combinational algorithms is shown in Figure~\ref{fig:boxplot-cmb}. The MAE of any combinational method is either better or equal to that of its respective individual algorithm. Another observation is that the correlation based combinational algorithms are robust to outlier prediction, compared to all other algorithms.

A qualitative comparison of the individual and combinational color constancy algorithms is shown in Figure~\ref{fig:qual-sing-comb}. In these examples, we observe that the error of the best performing combinational algorithm is smaller than the error of the best individual algorithm. Interestingly, the error of the worst performing combinational algorithm is much smaller than the error of the worst individual algorithm. This is a clear advantage of using combinational algorithms with small minimum and maximum error bounds for robust illumination estimates.

\begin{figure}[!h]
\renewcommand{\thesubfigure}{\relax}
\centering
\subfigure[Original ($18.77^{\circ}$)]{\includegraphics[width=0.155\linewidth]{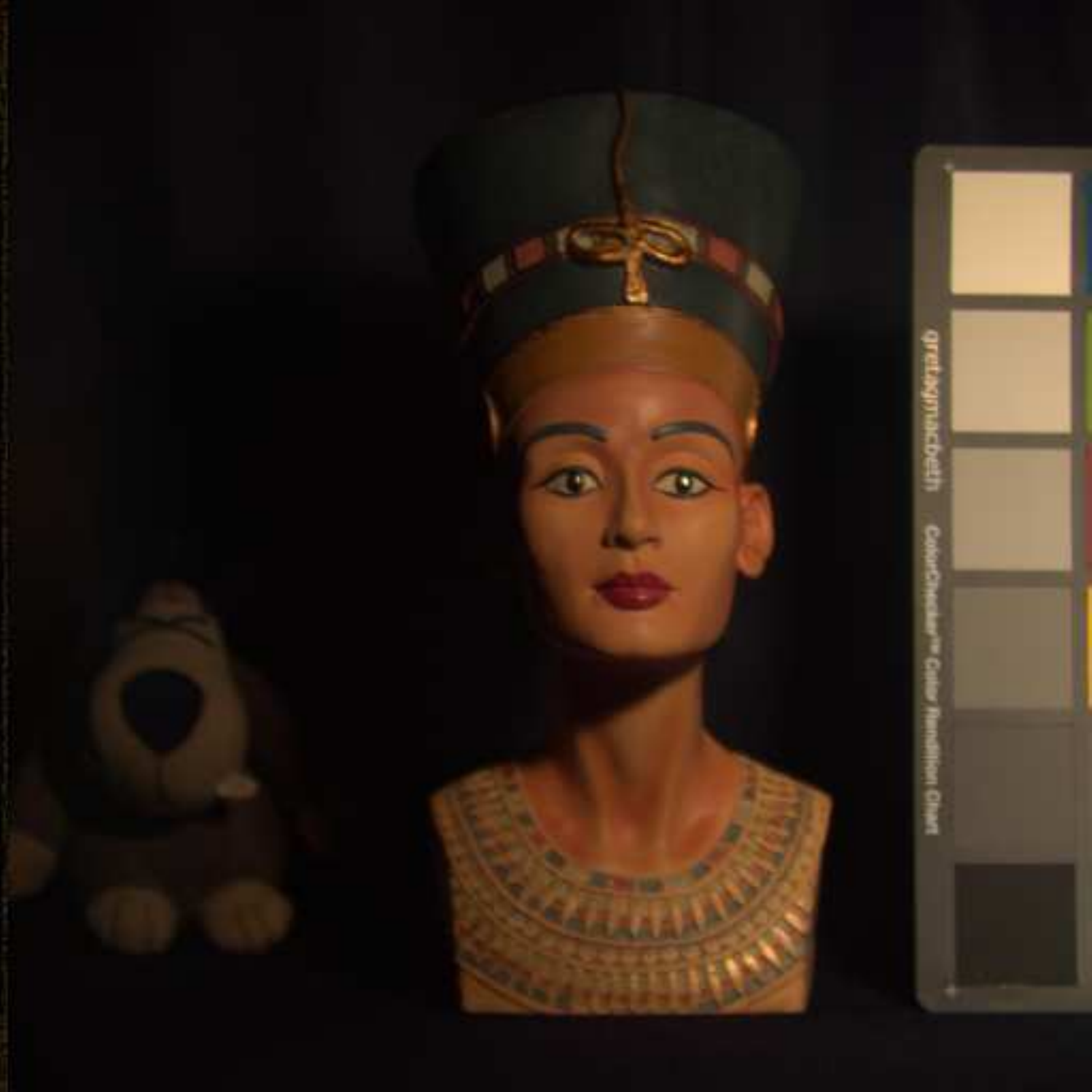}}
\subfigure[Ideal ($0^{\circ}$)]{\includegraphics[width=0.155\linewidth]{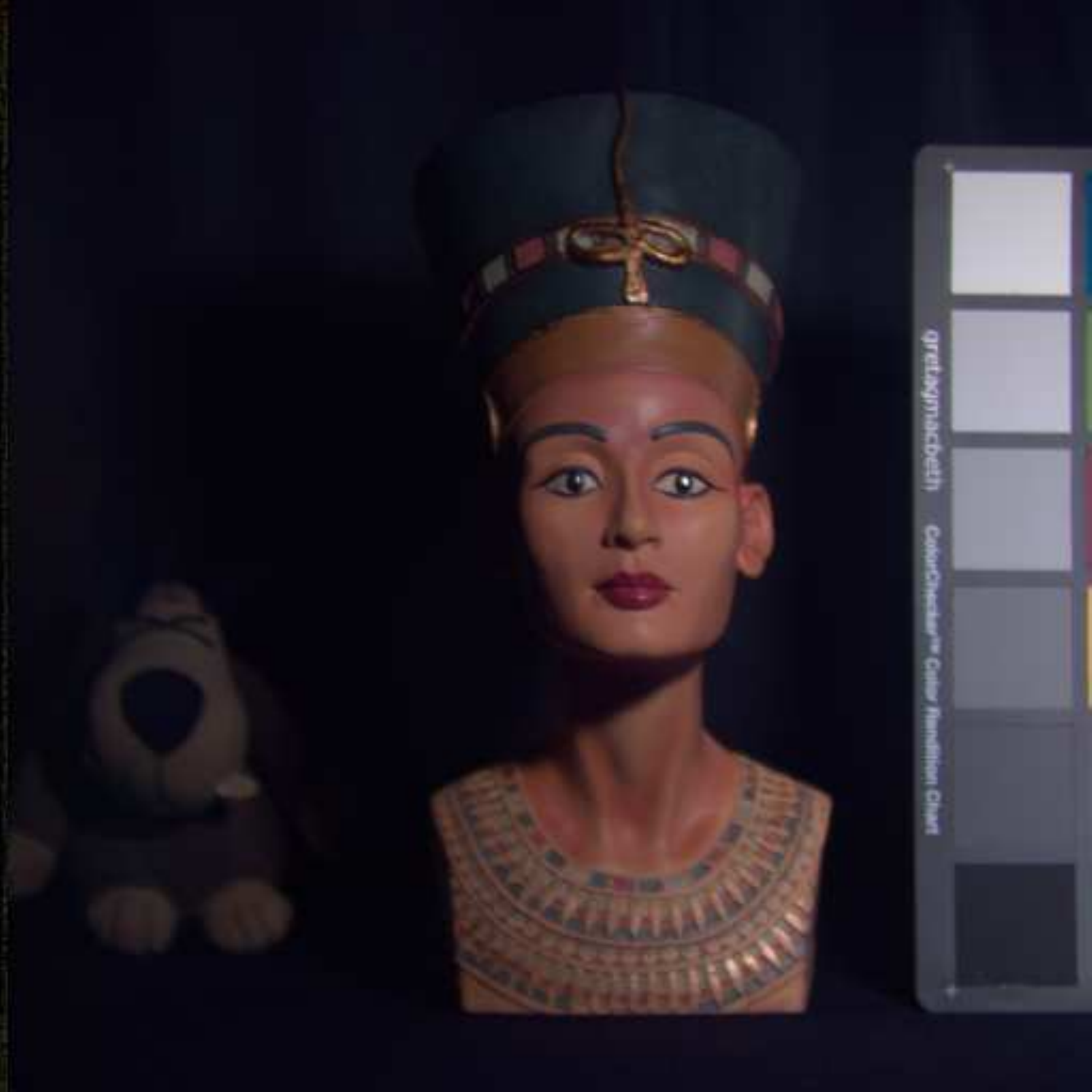}}
\subfigure[gGW ($1.73^{\circ}$)]{\includegraphics[width=0.155\linewidth]{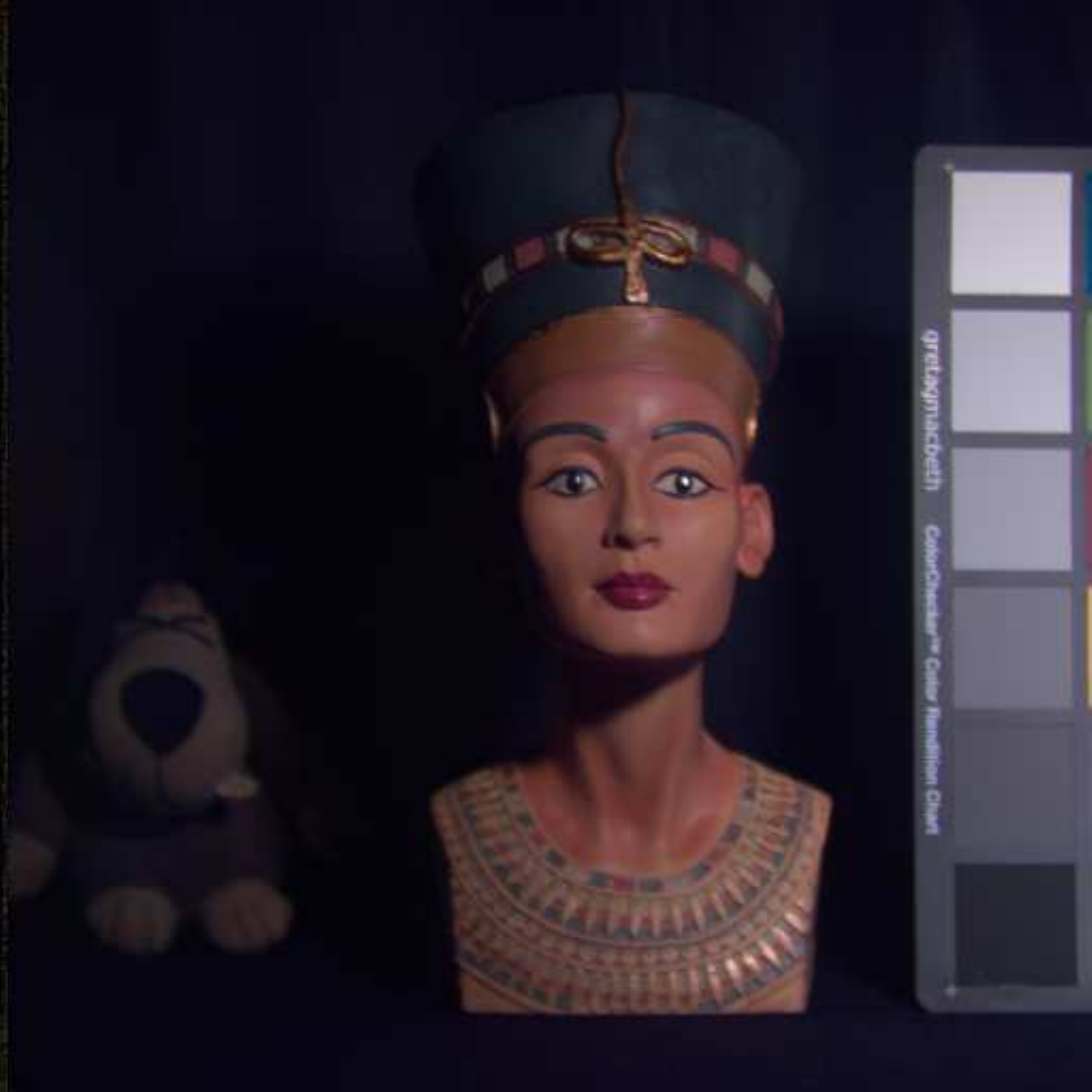}}
\subfigure[GW-gGW($2.32^{\circ}$)]{\includegraphics[width=0.155\linewidth]{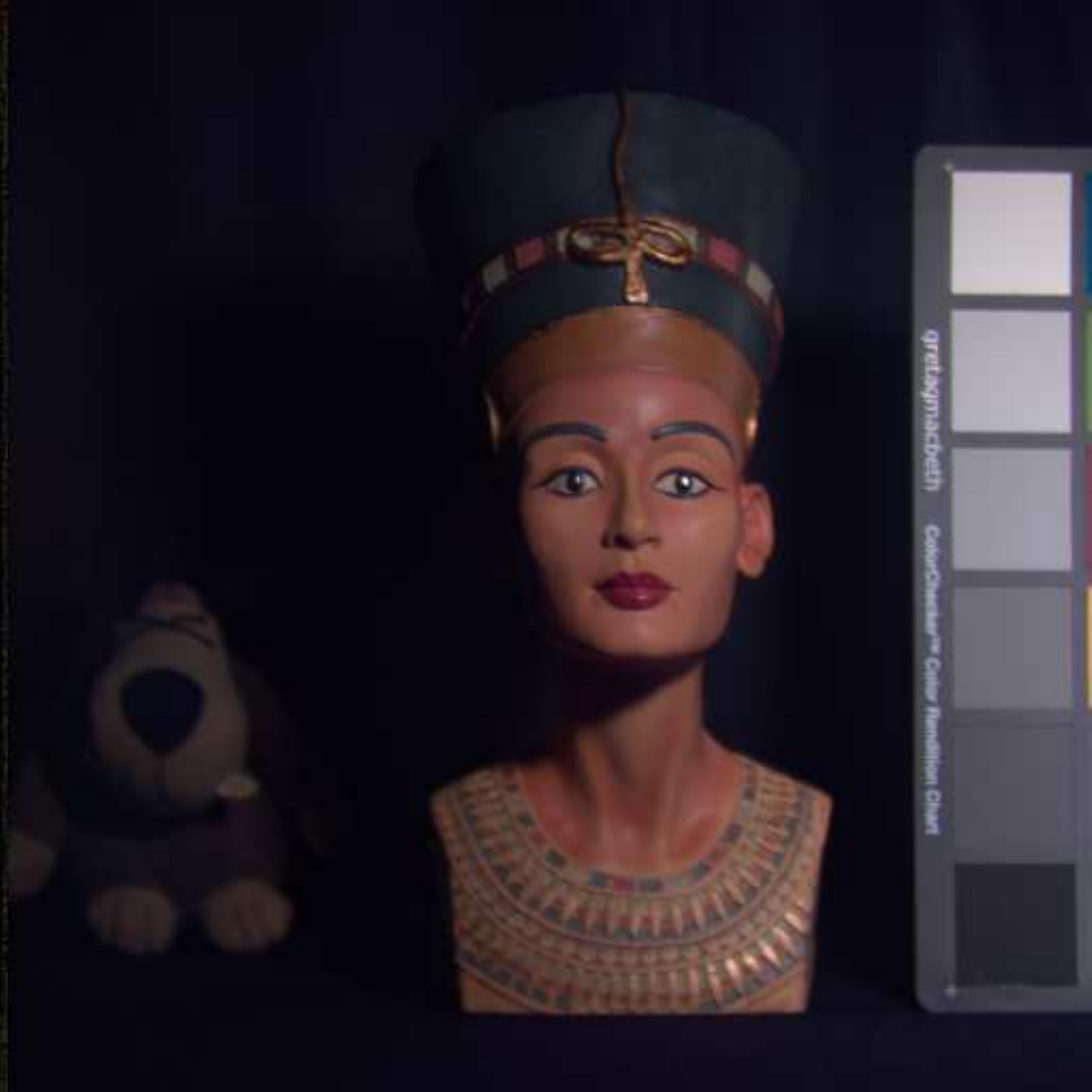}}
\subfigure[WP ($6.46^{\circ}$)]{\includegraphics[width=0.155\linewidth]{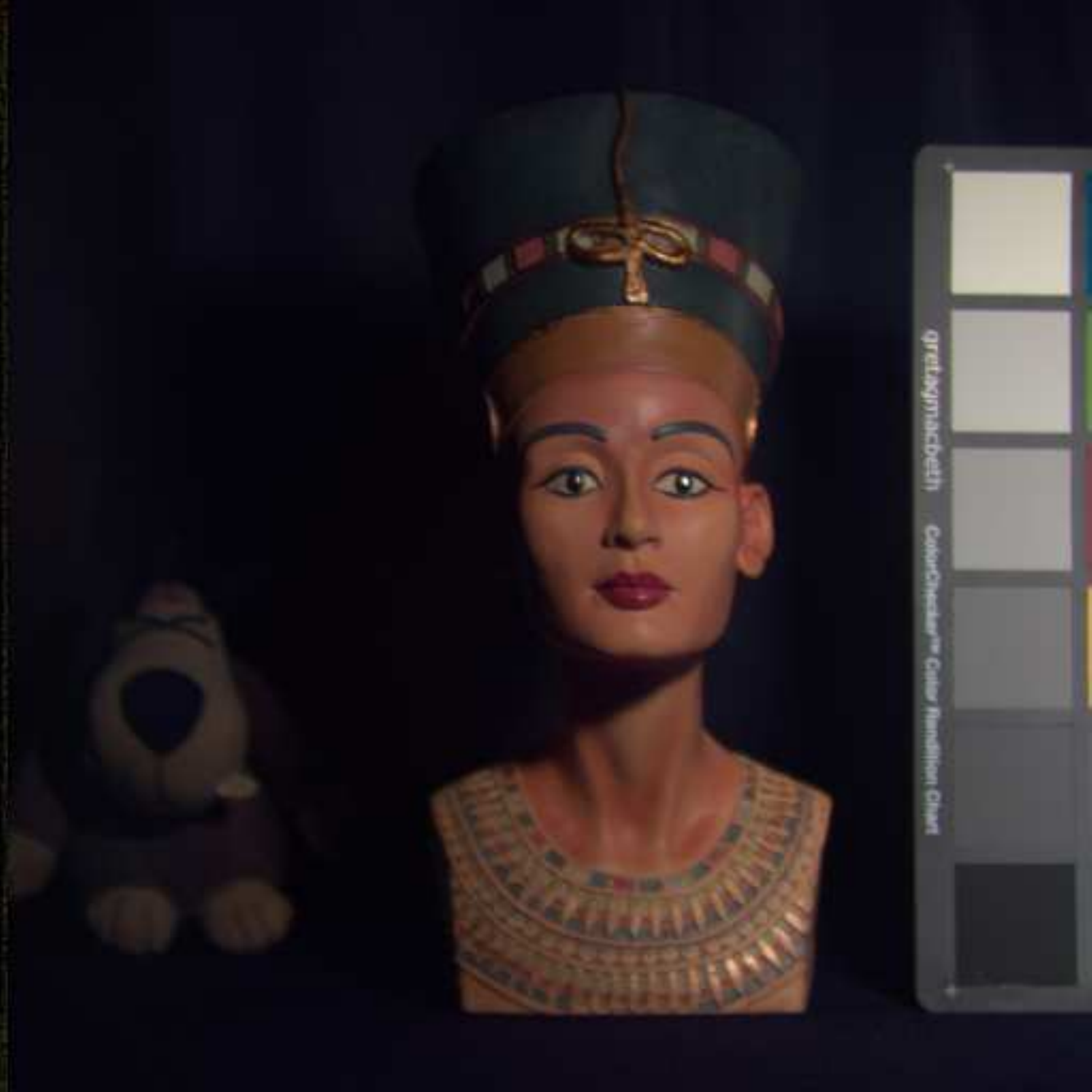}}
\subfigure[GW-SoG ($3.43^{\circ}$)]{\includegraphics[width=0.155\linewidth]{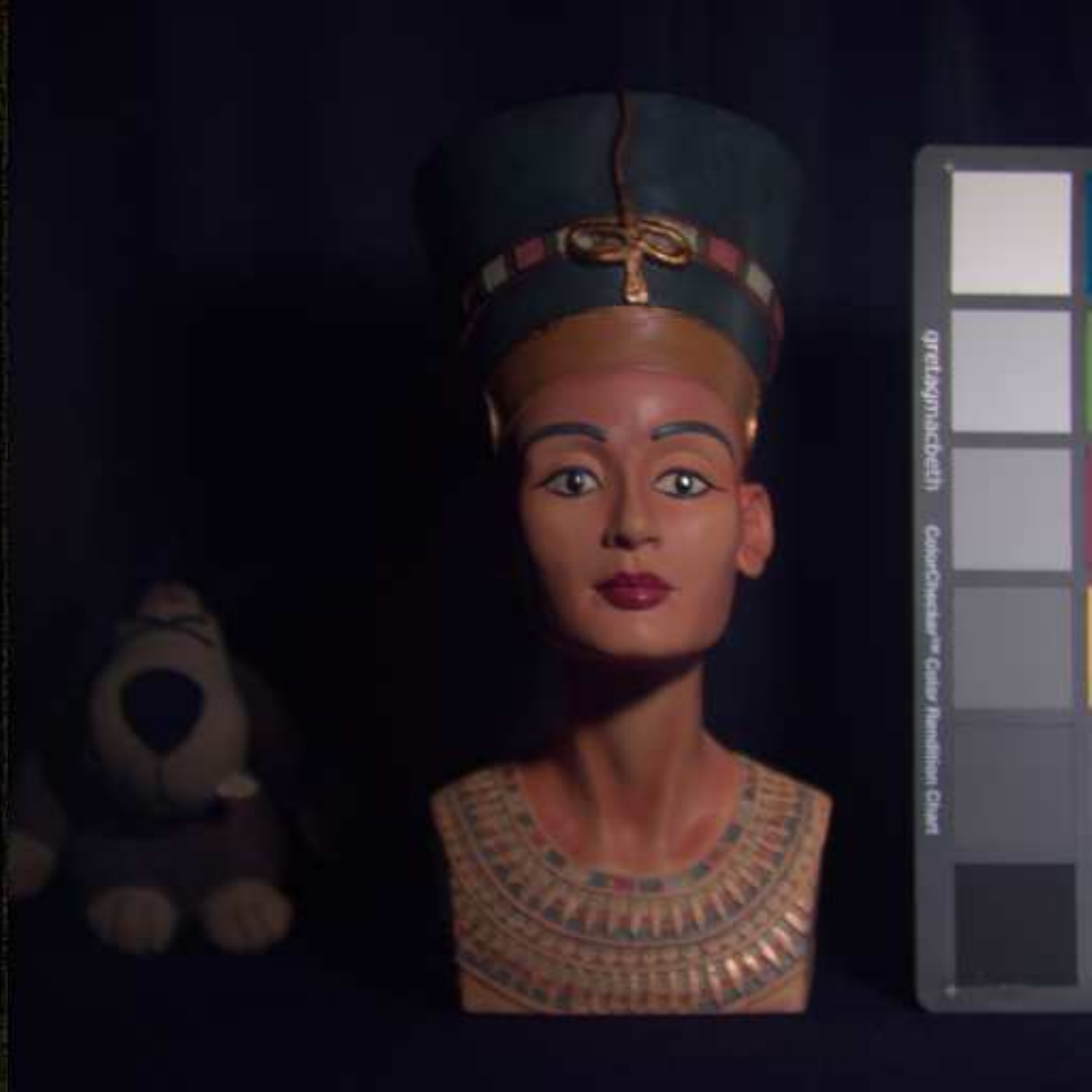}}\\
\vspace{10pt}
\subfigure[Original ($24.4^{\circ}$)]{\includegraphics[width=0.155\linewidth]{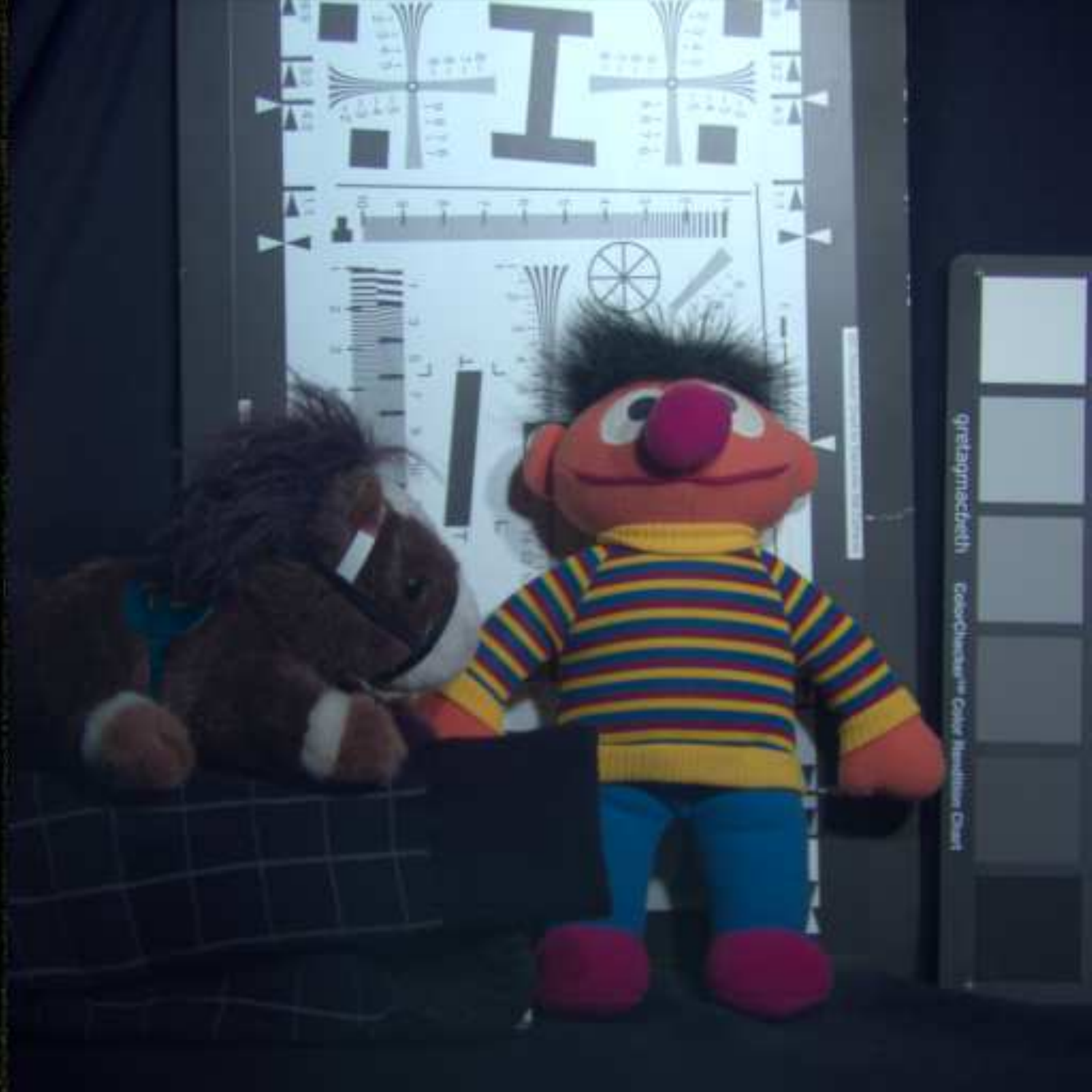}}
\subfigure[Ideal ($0^{\circ}$)]{\includegraphics[width=0.155\linewidth]{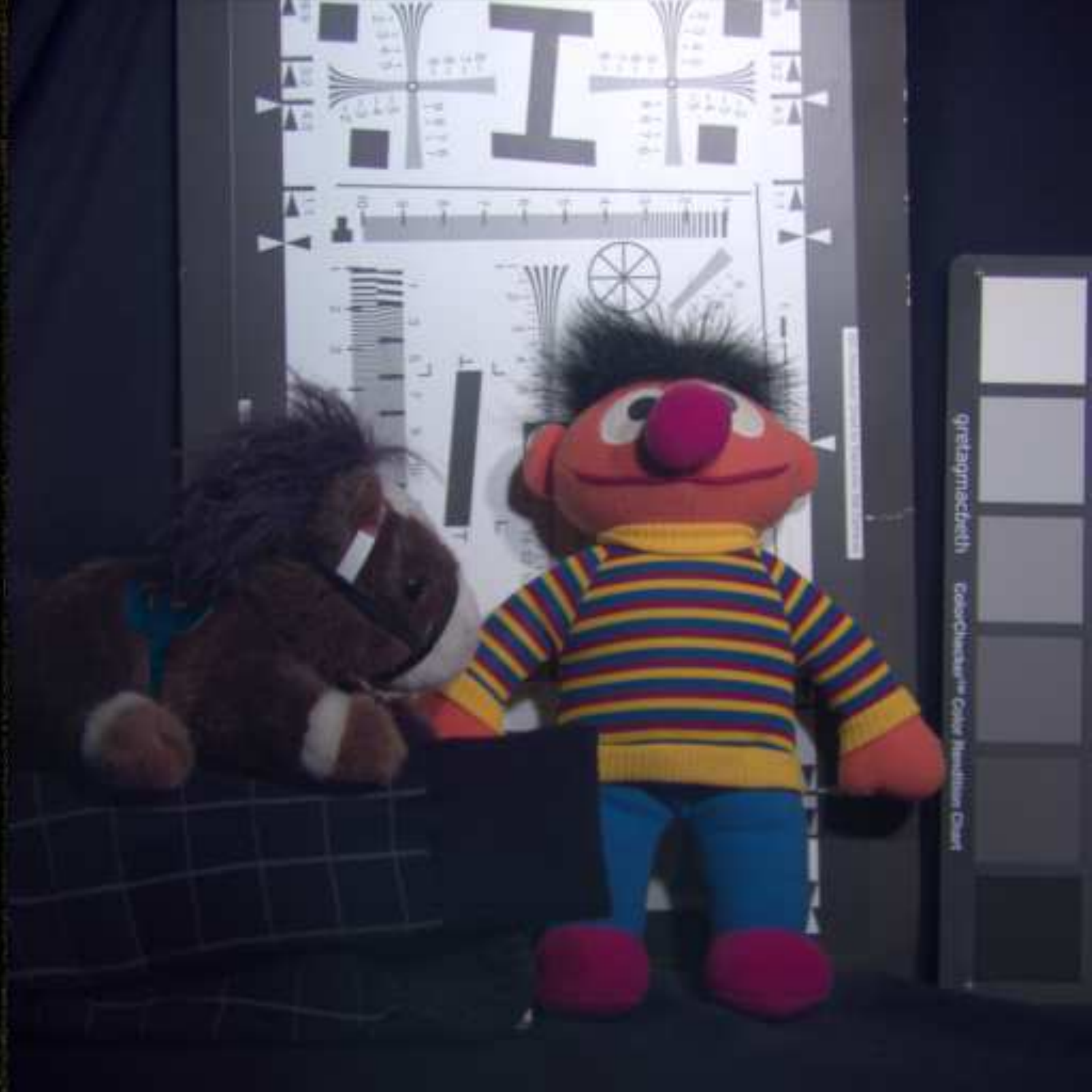}}
\subfigure[GW ($2.8^{\circ}$)]{\includegraphics[width=0.155\linewidth]{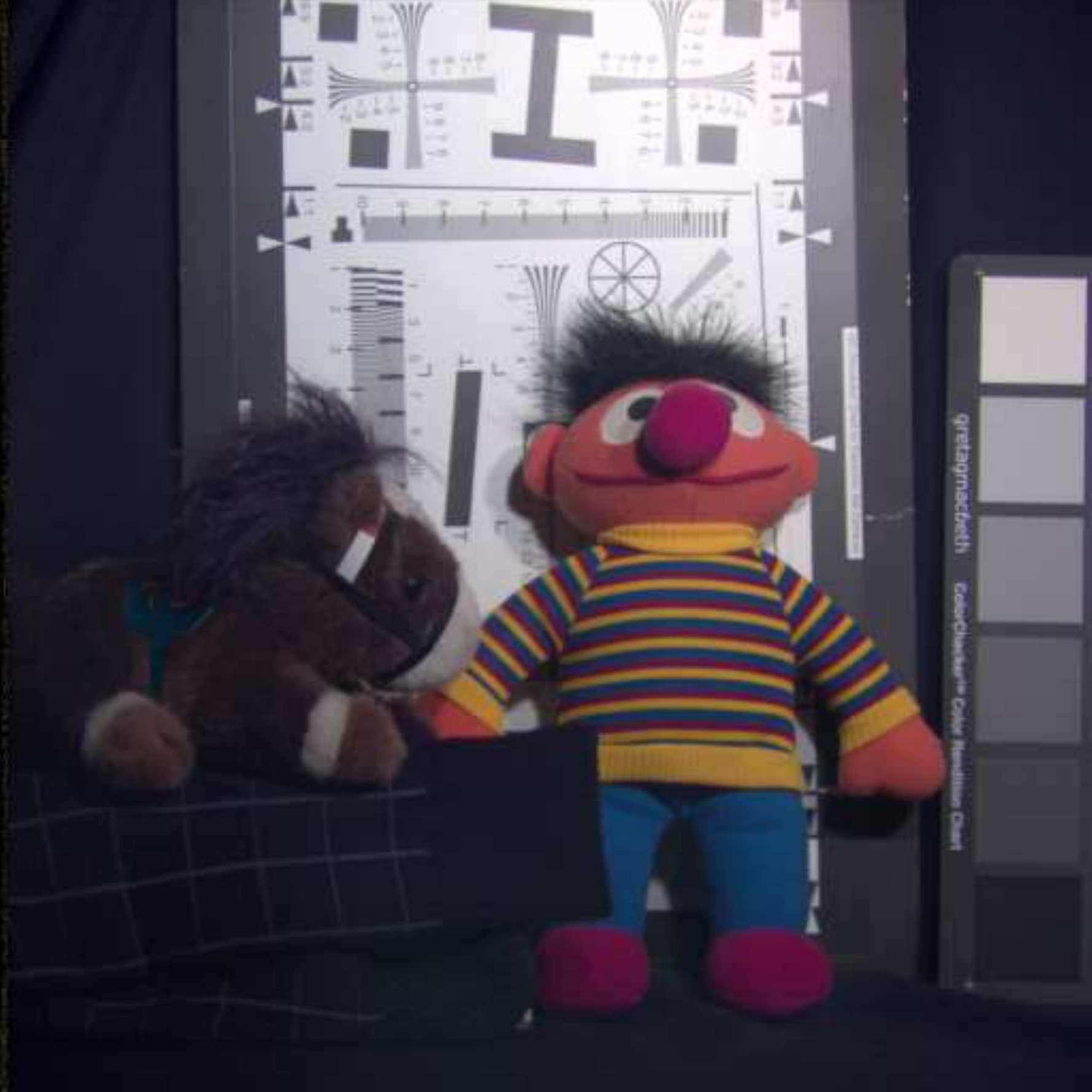}}
\subfigure[GW-gGW ($2.5^{\circ}$)]{\includegraphics[width=0.155\linewidth]{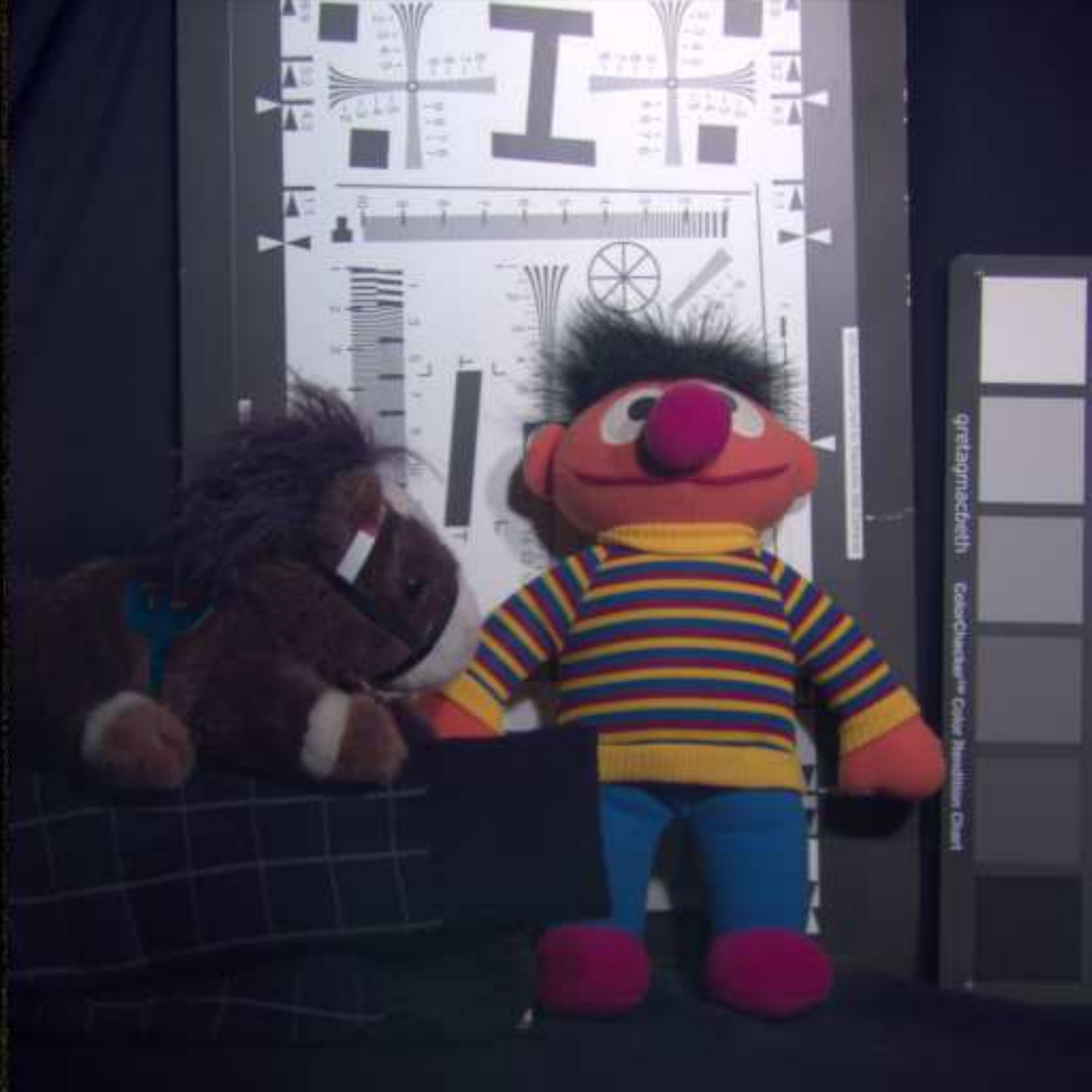}}
\subfigure[WP ($9.0^{\circ}$)]{\includegraphics[width=0.155\linewidth]{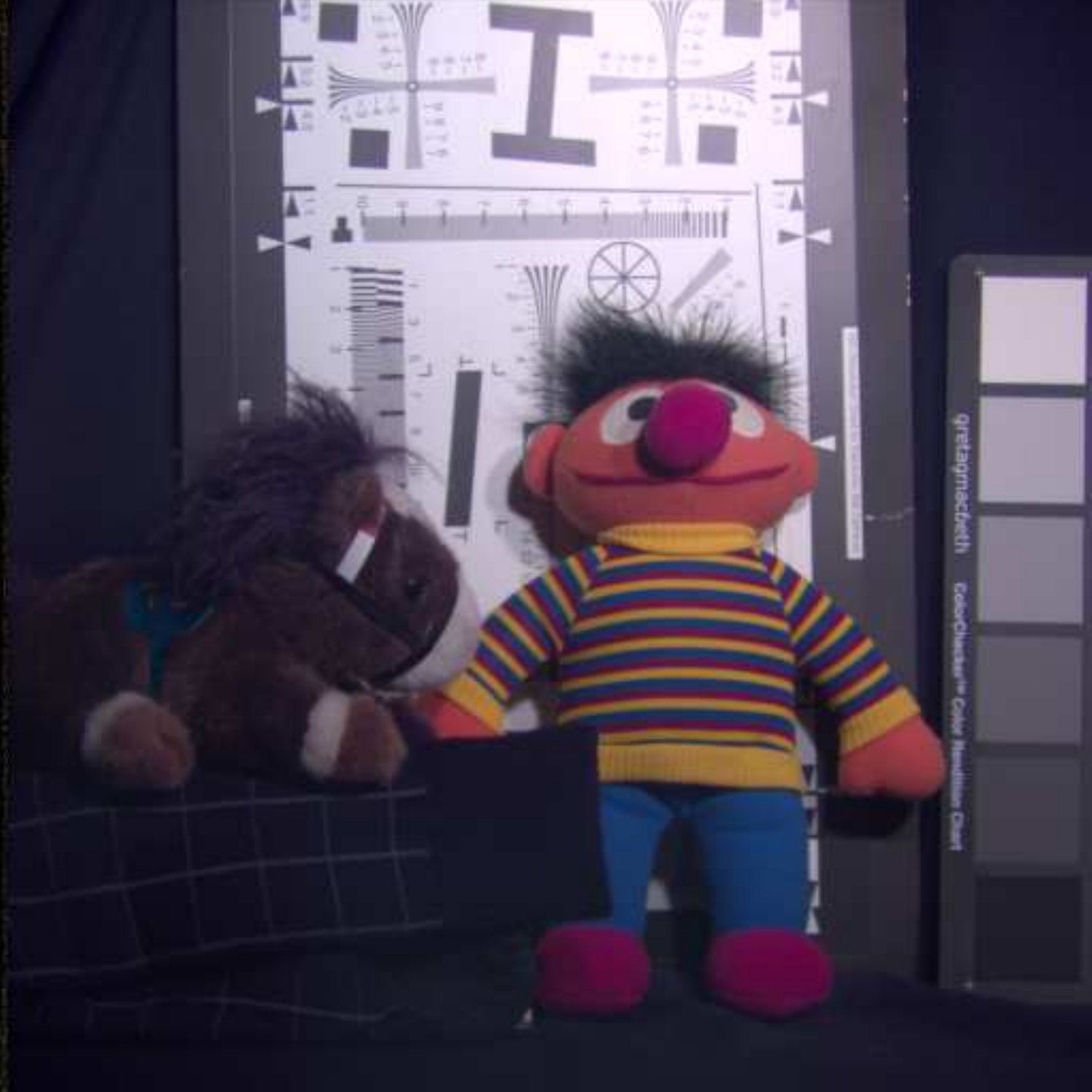}}
\subfigure[AVG ($4.4^{\circ}$)]{\includegraphics[width=0.155\linewidth]{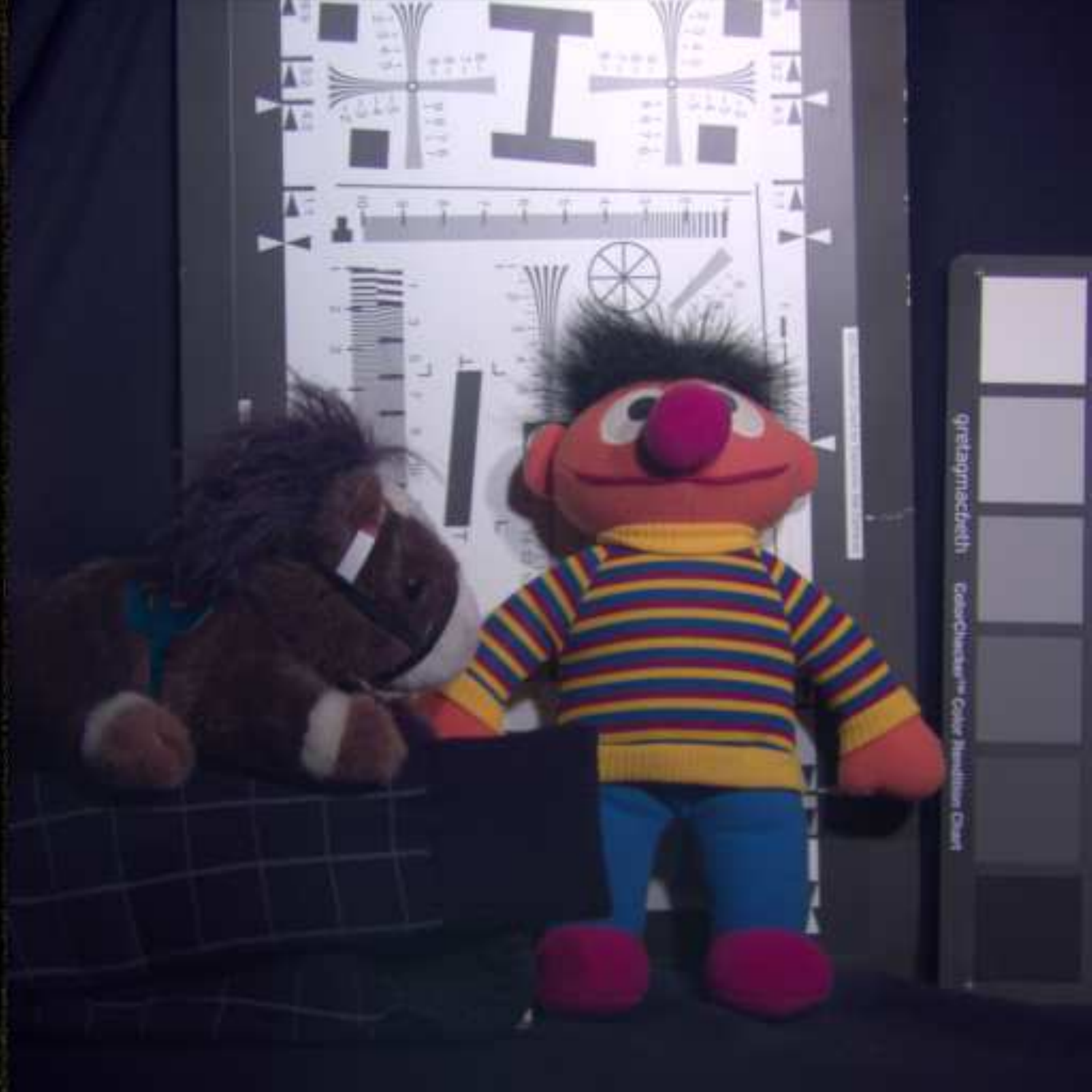}}\\
\vspace{10pt}
\subfigure[Original ($8.2^{\circ}$)]{\includegraphics[width=0.155\linewidth]{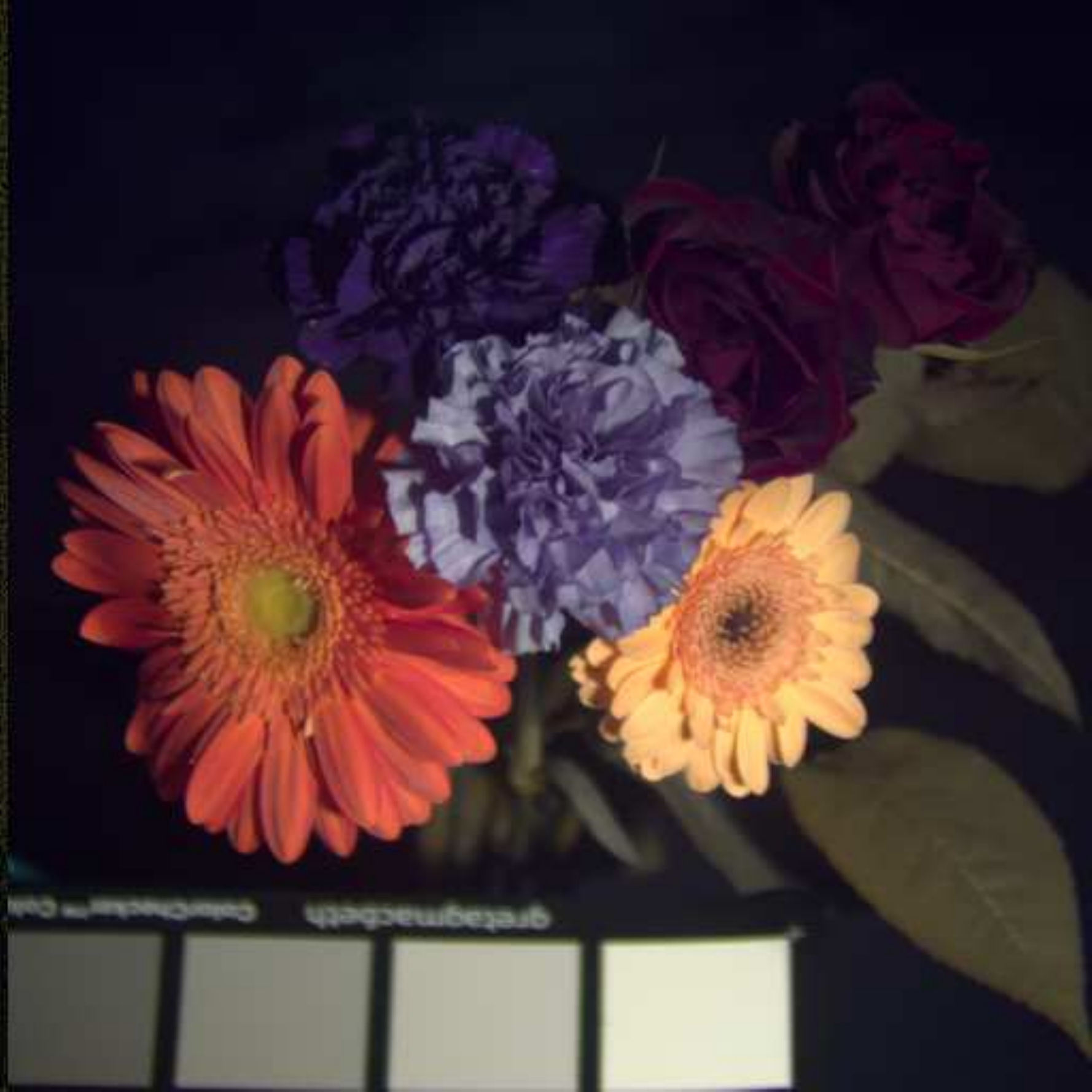}}
\subfigure[Ideal ($0^{\circ}$)]{\includegraphics[width=0.155\linewidth]{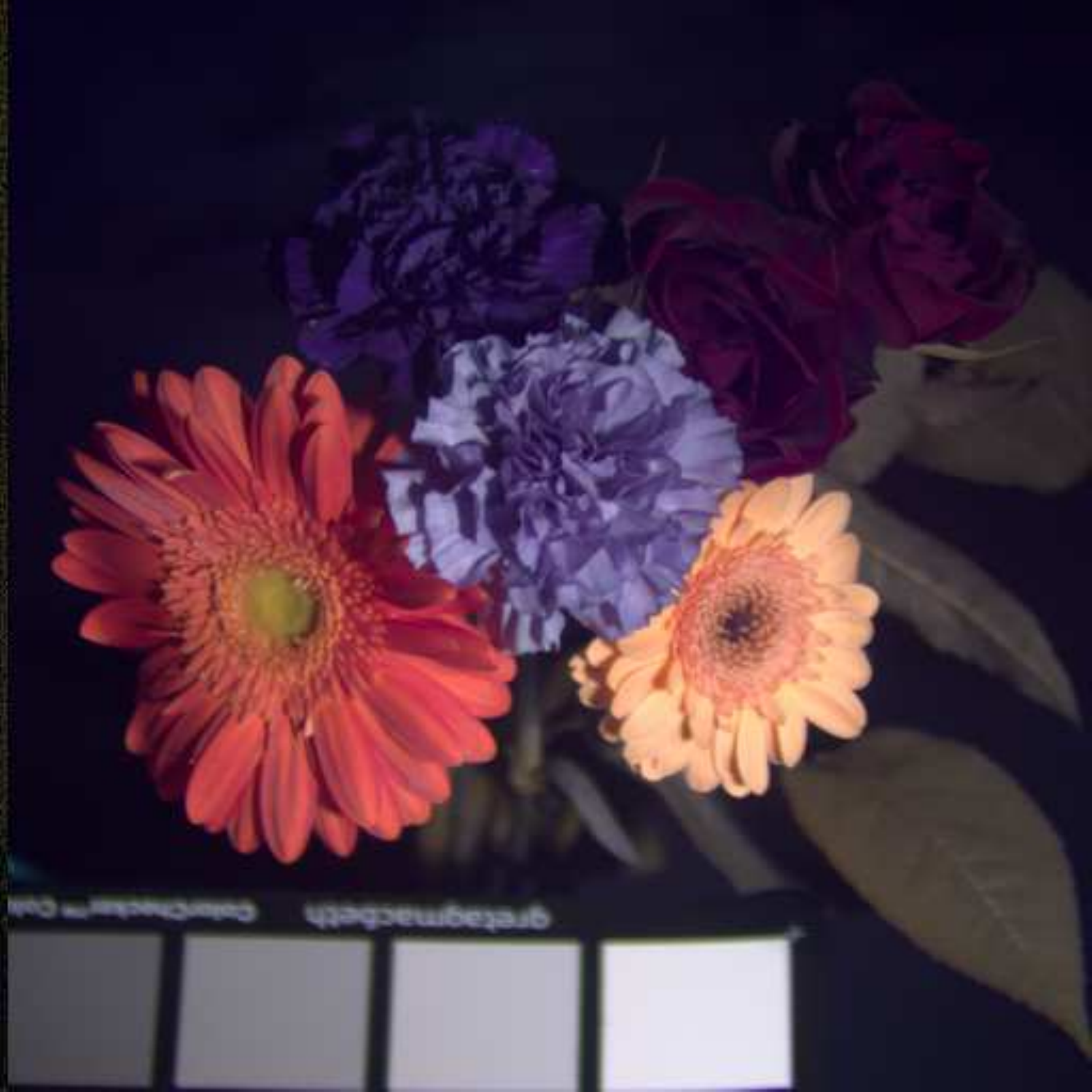}}
\subfigure[GW ($2.9^{\circ}$)]{\includegraphics[width=0.155\linewidth]{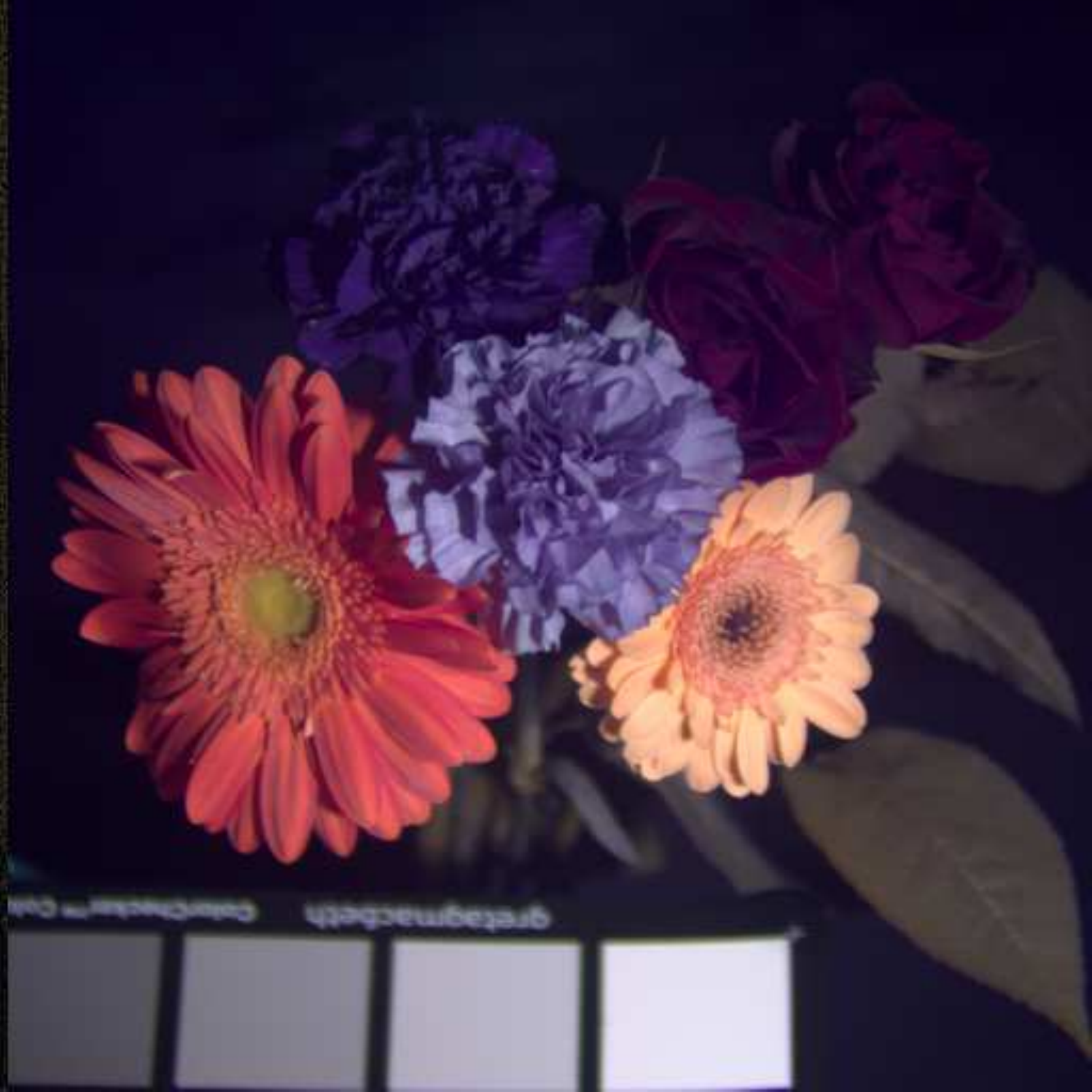}}
\subfigure[AVG ($2.9^{\circ}$)]{\includegraphics[width=0.155\linewidth]{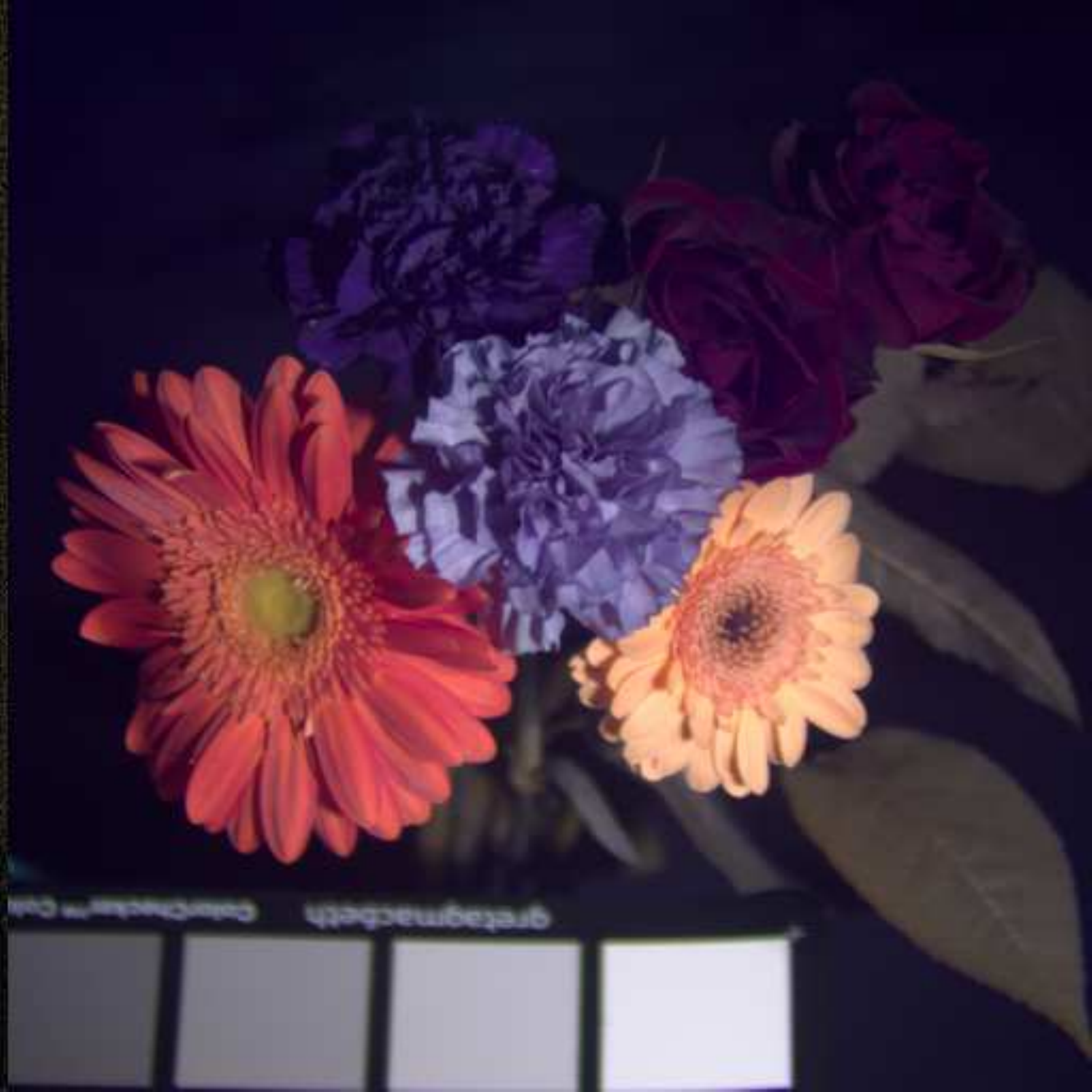}}
\subfigure[GE2 ($5.9^{\circ}$)]{\includegraphics[width=0.155\linewidth]{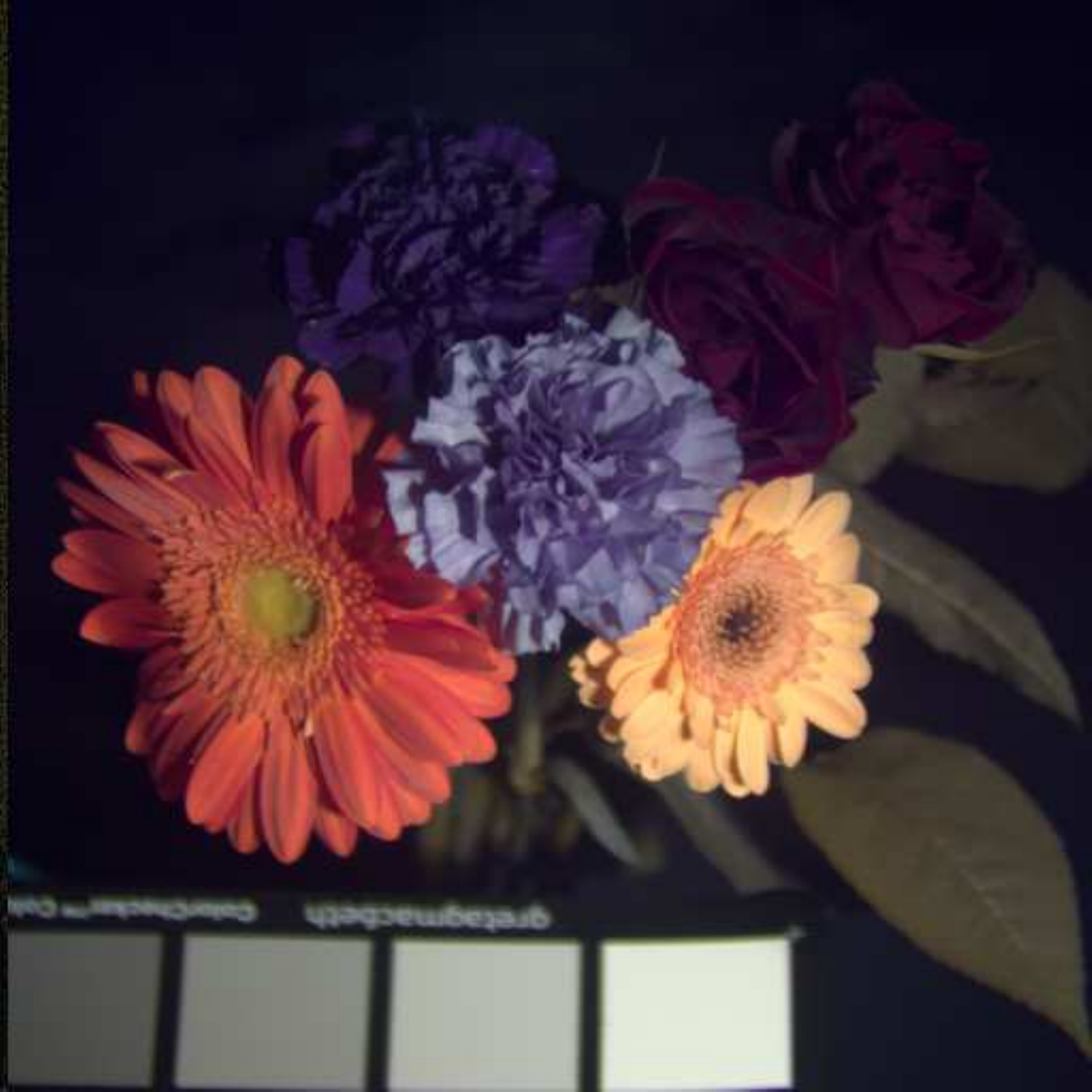}}
\subfigure[L1O ($3.6^{\circ}$)]{\includegraphics[width=0.155\linewidth]{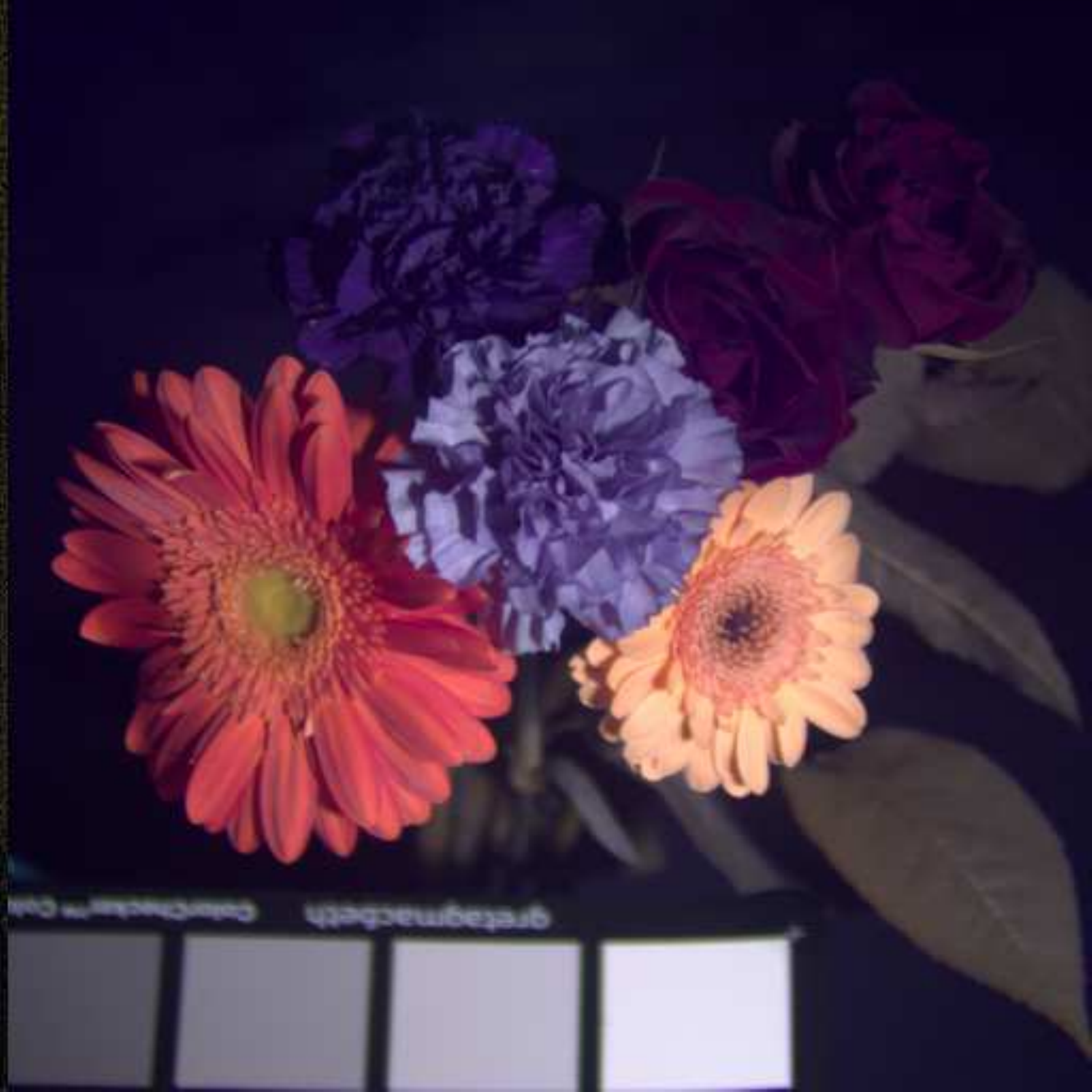}}\\
\vspace{10pt}
\subfigure[Original ($24.39^{\circ}$)]{\includegraphics[width=0.155\linewidth]{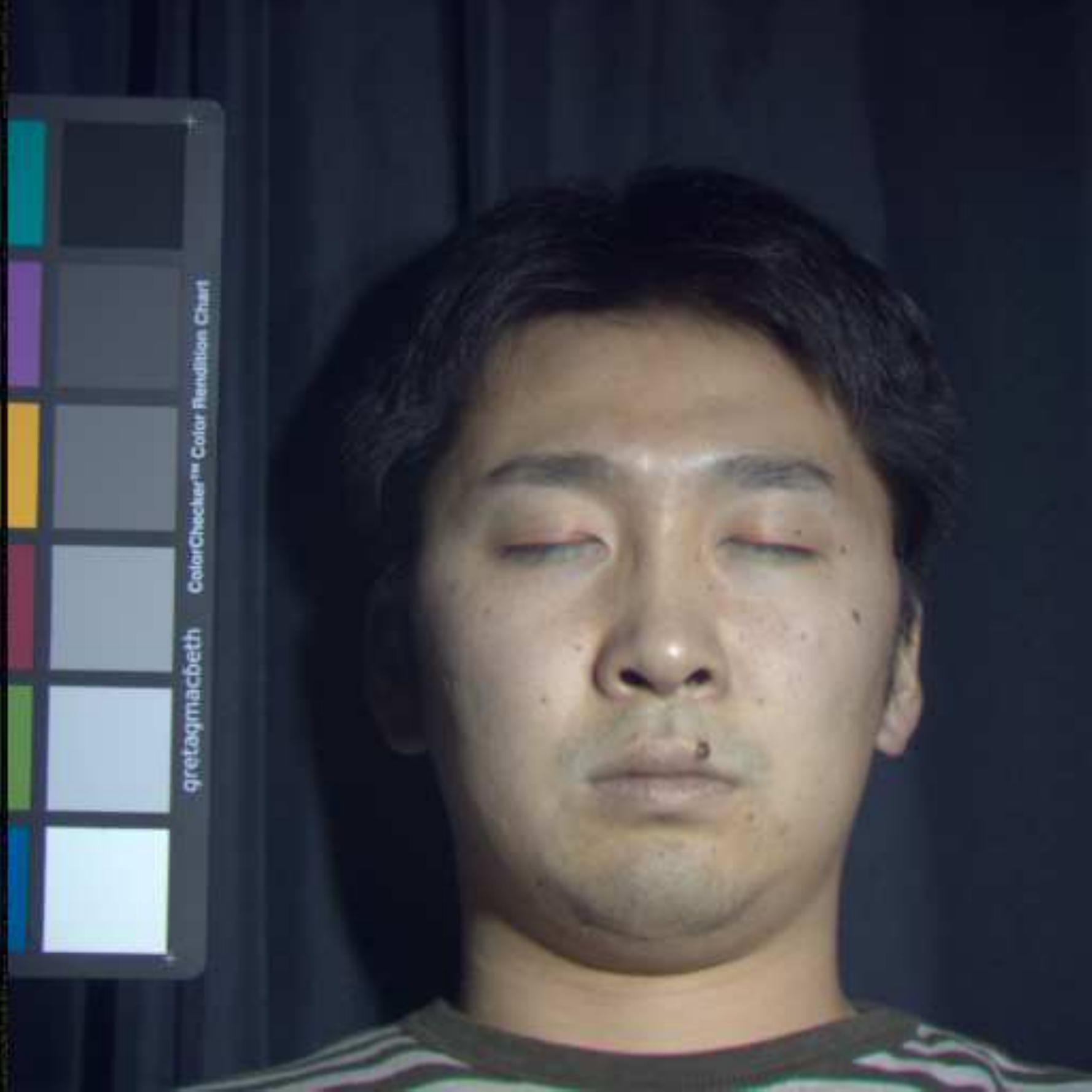}}
\subfigure[Ideal ($0^{\circ}$)]{\includegraphics[width=0.155\linewidth]{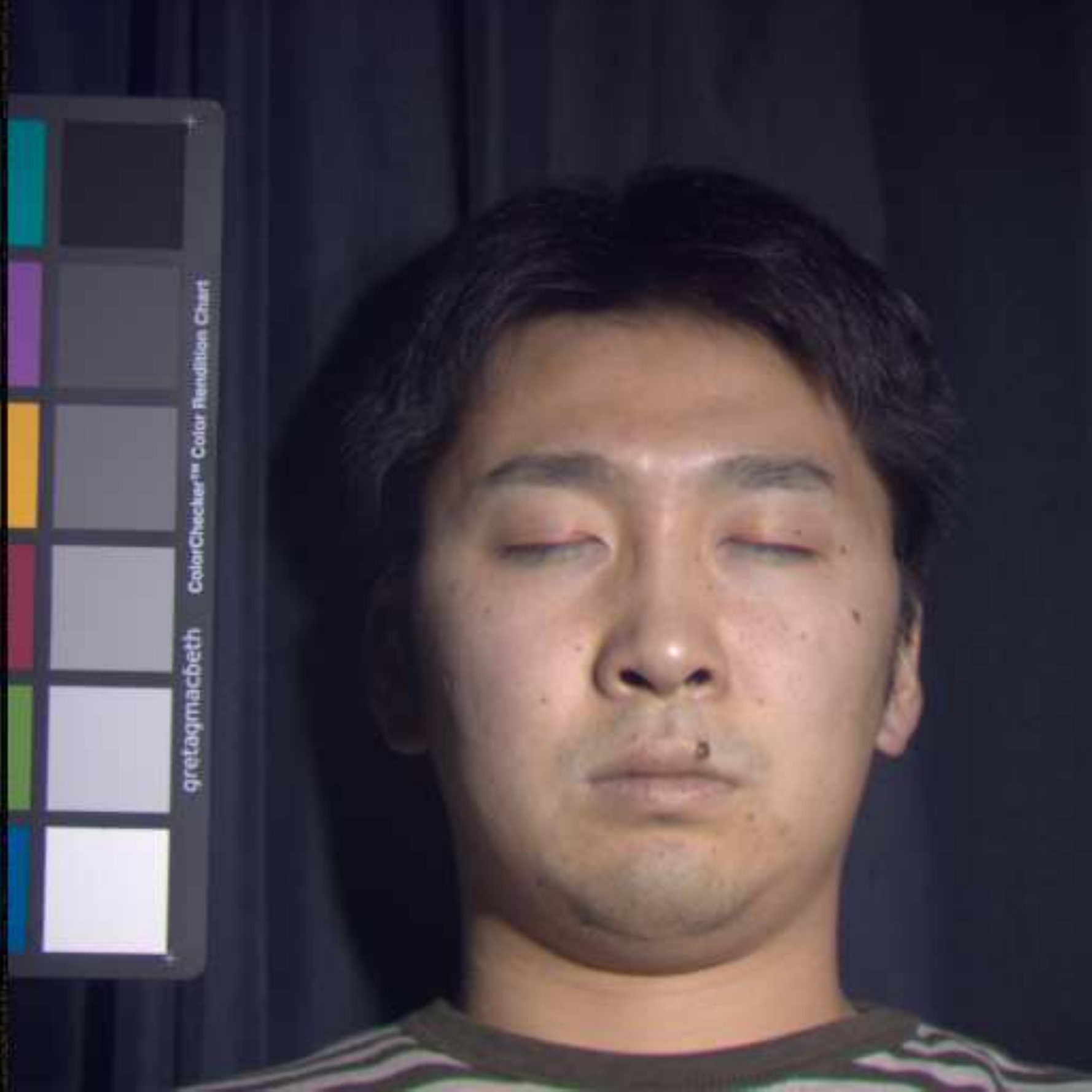}}
\subfigure[gGW ($3.02^{\circ}$)]{\includegraphics[width=0.155\linewidth]{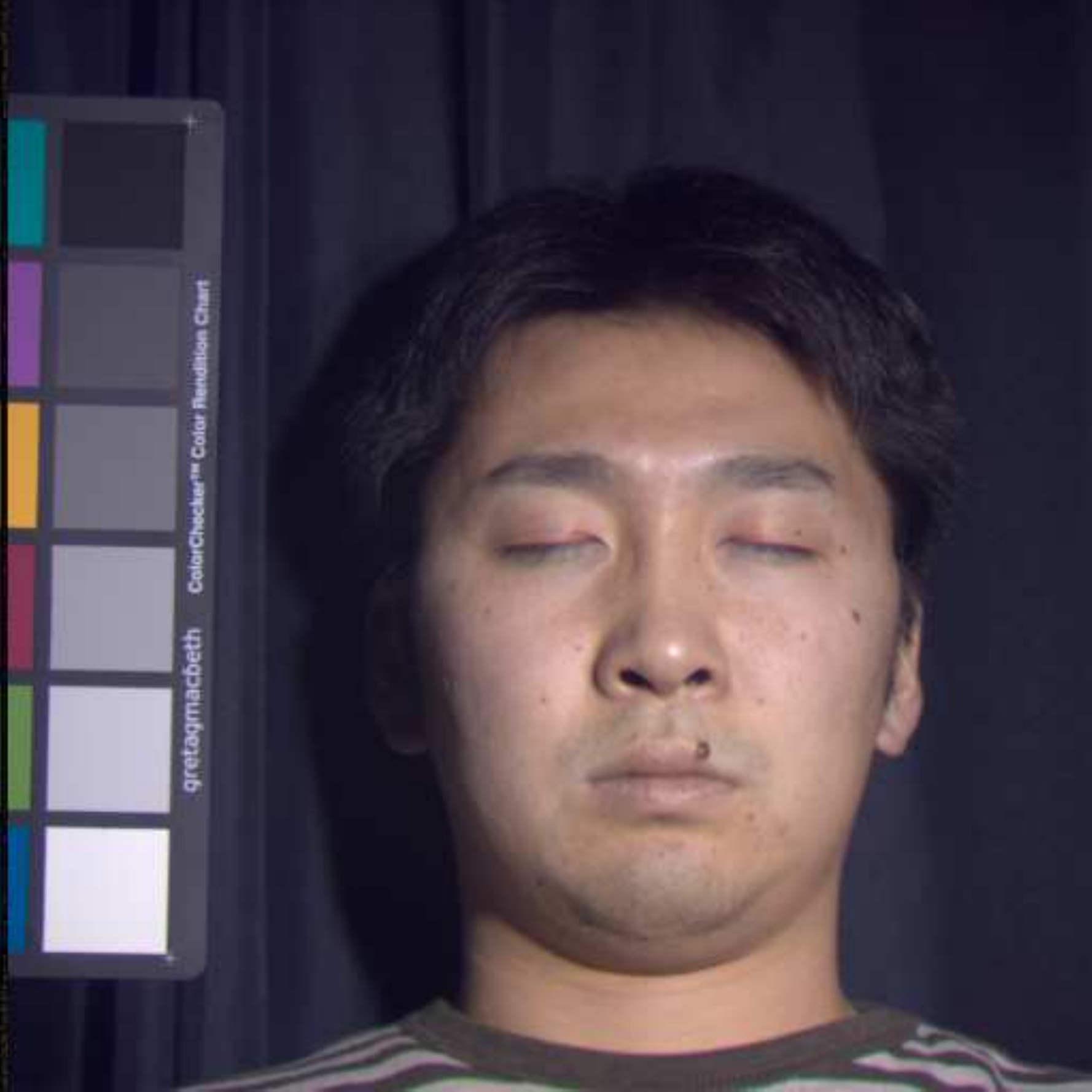}}
\subfigure[GW-gGW ($3.42^{\circ}$)]{\includegraphics[width=0.155\linewidth]{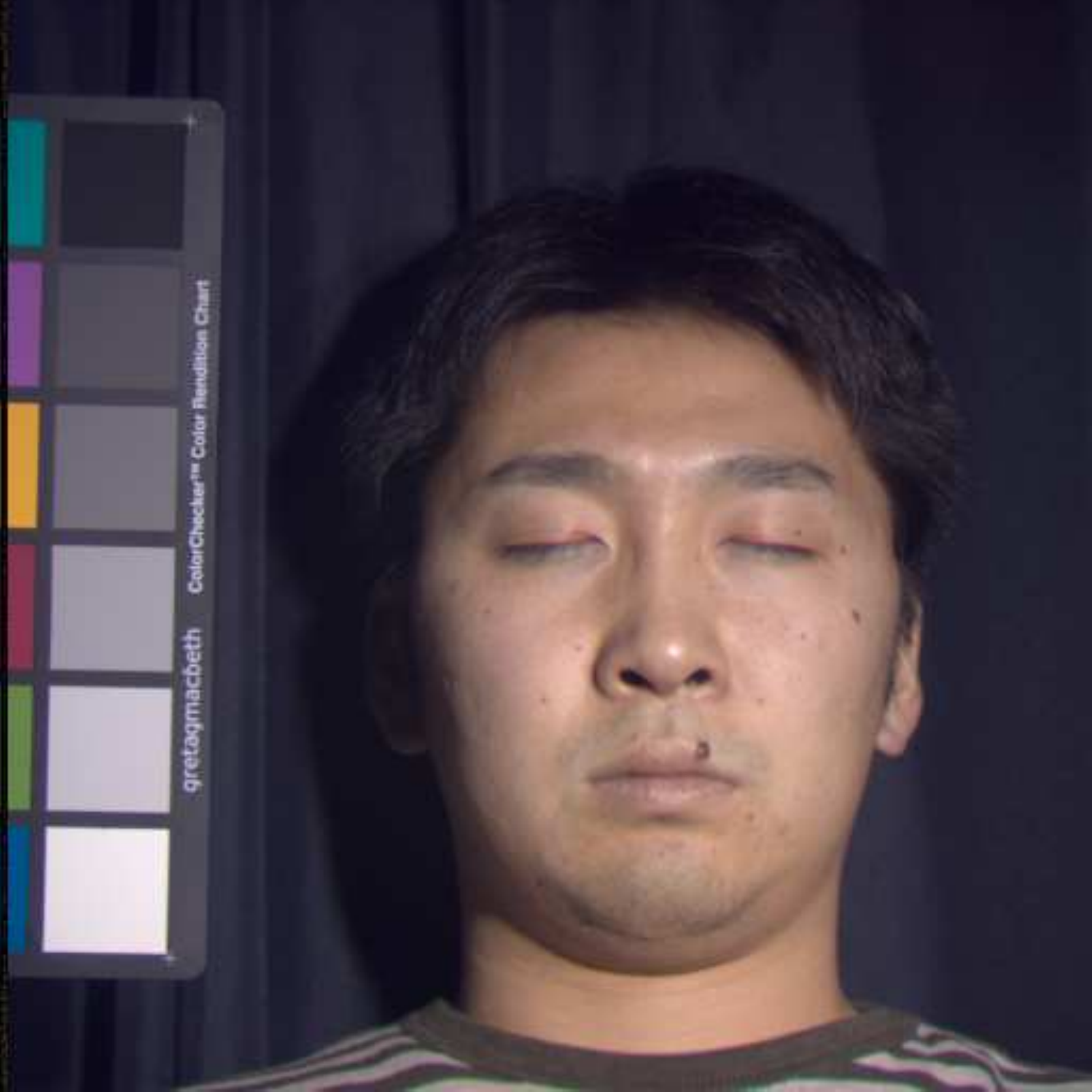}}
\subfigure[WP ($11.22^{\circ}$)]{\includegraphics[width=0.155\linewidth]{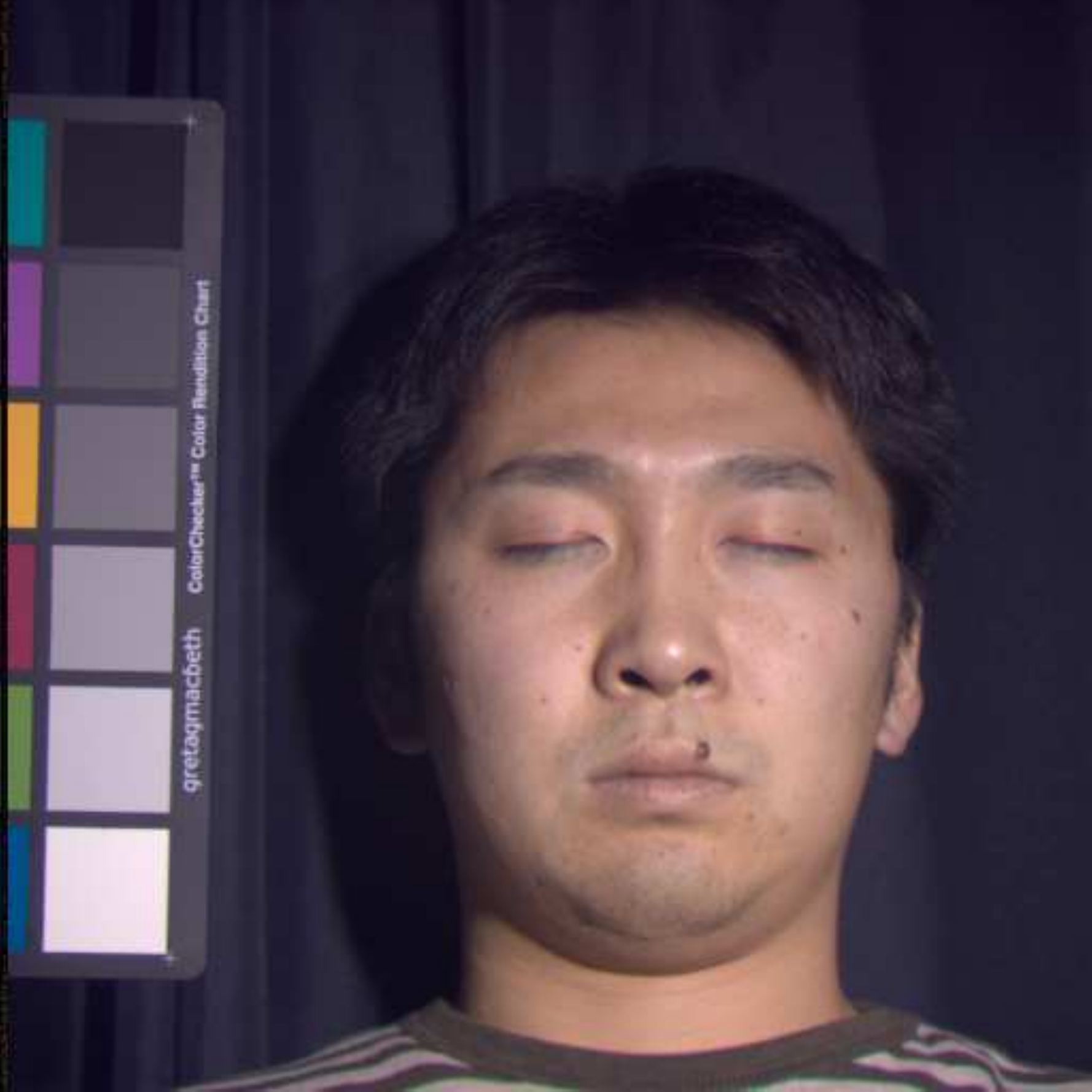}}
\subfigure[AVG ($5.27^{\circ}$)]{\includegraphics[width=0.155\linewidth]{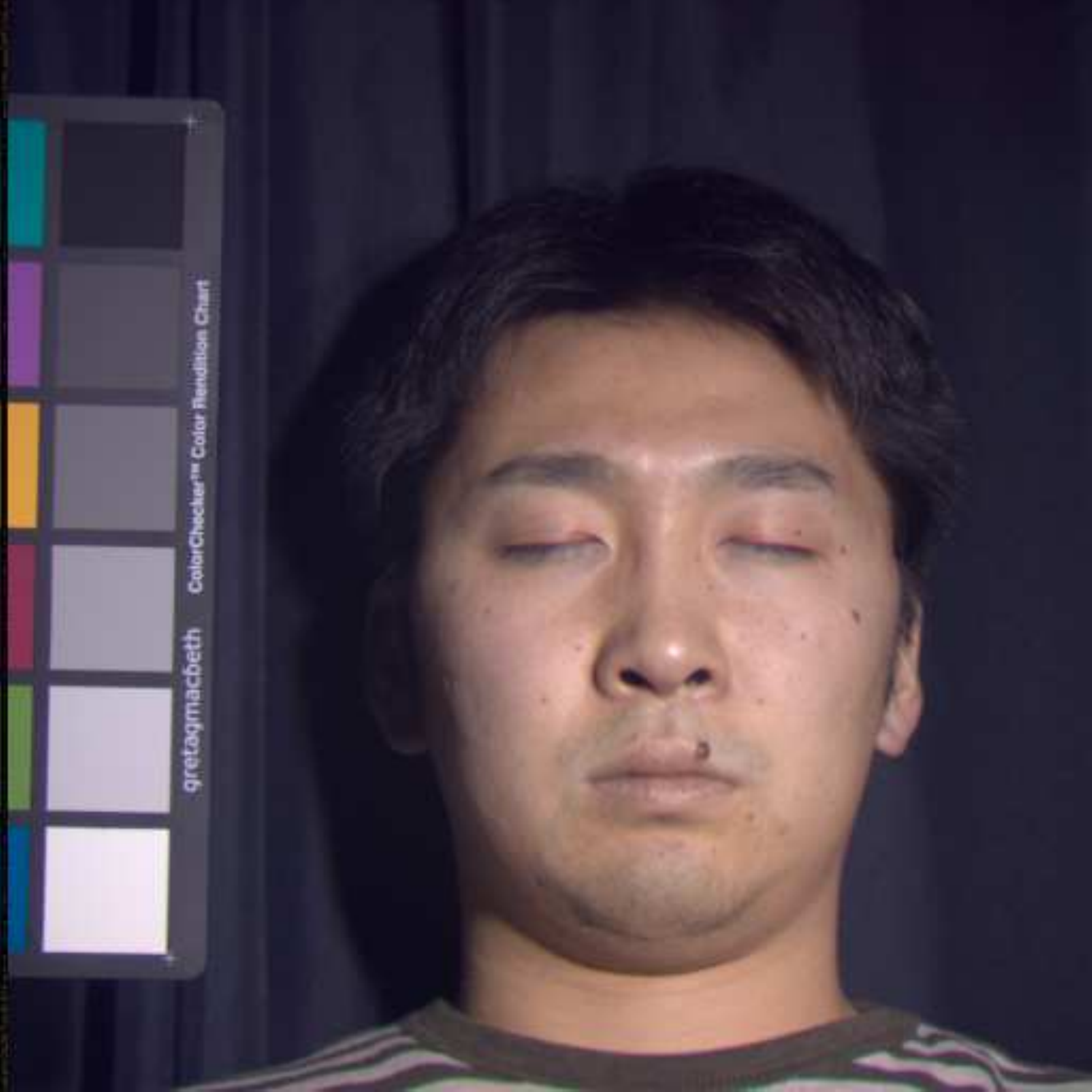}}\\
\vspace{10pt}
\subfigure[Original ($13.3^{\circ}$)]{\includegraphics[width=0.155\linewidth]{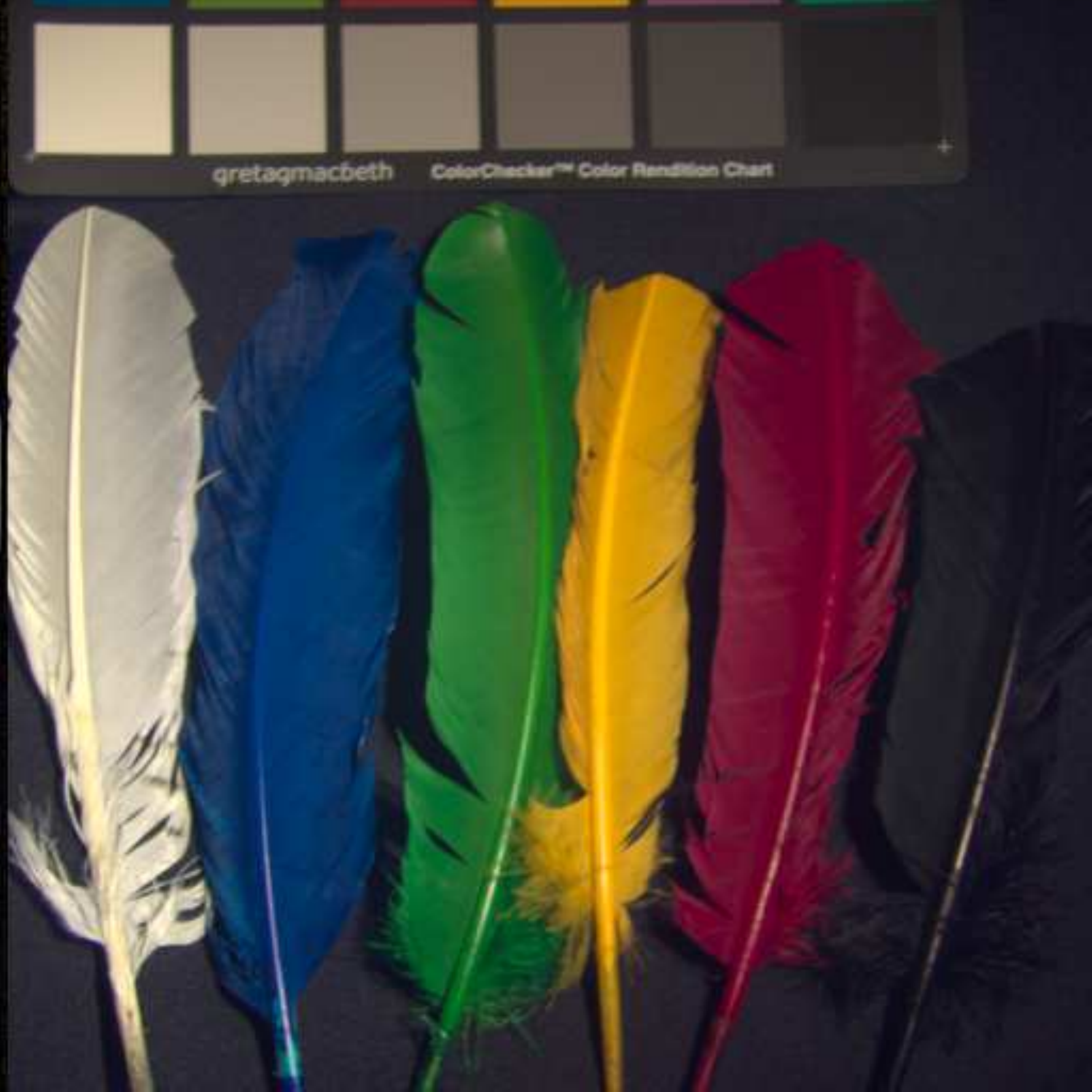}}
\subfigure[Ideal ($0^{\circ}$)]{\includegraphics[width=0.155\linewidth]{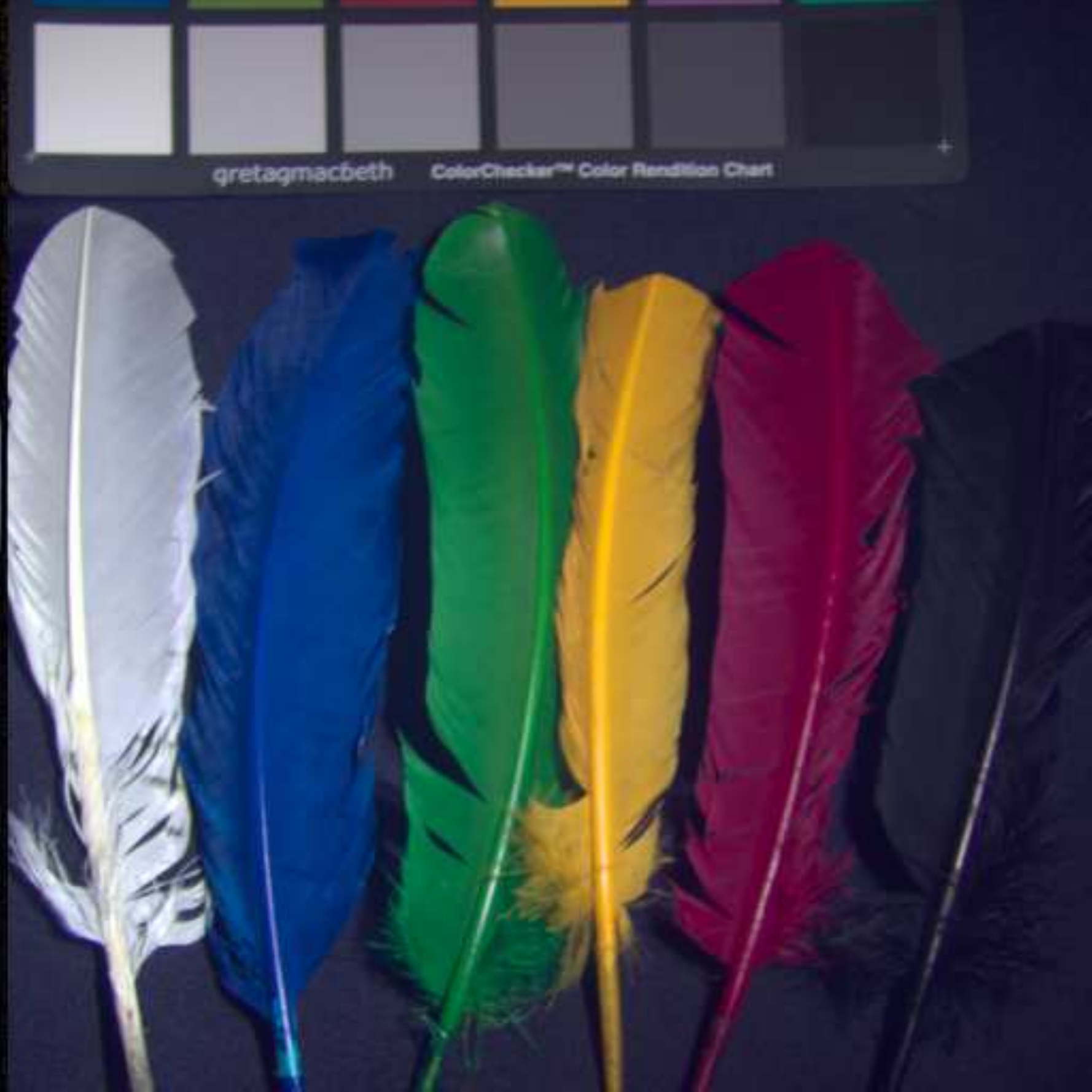}}
\subfigure[gGW ($2.2^{\circ}$)]{\includegraphics[width=0.155\linewidth]{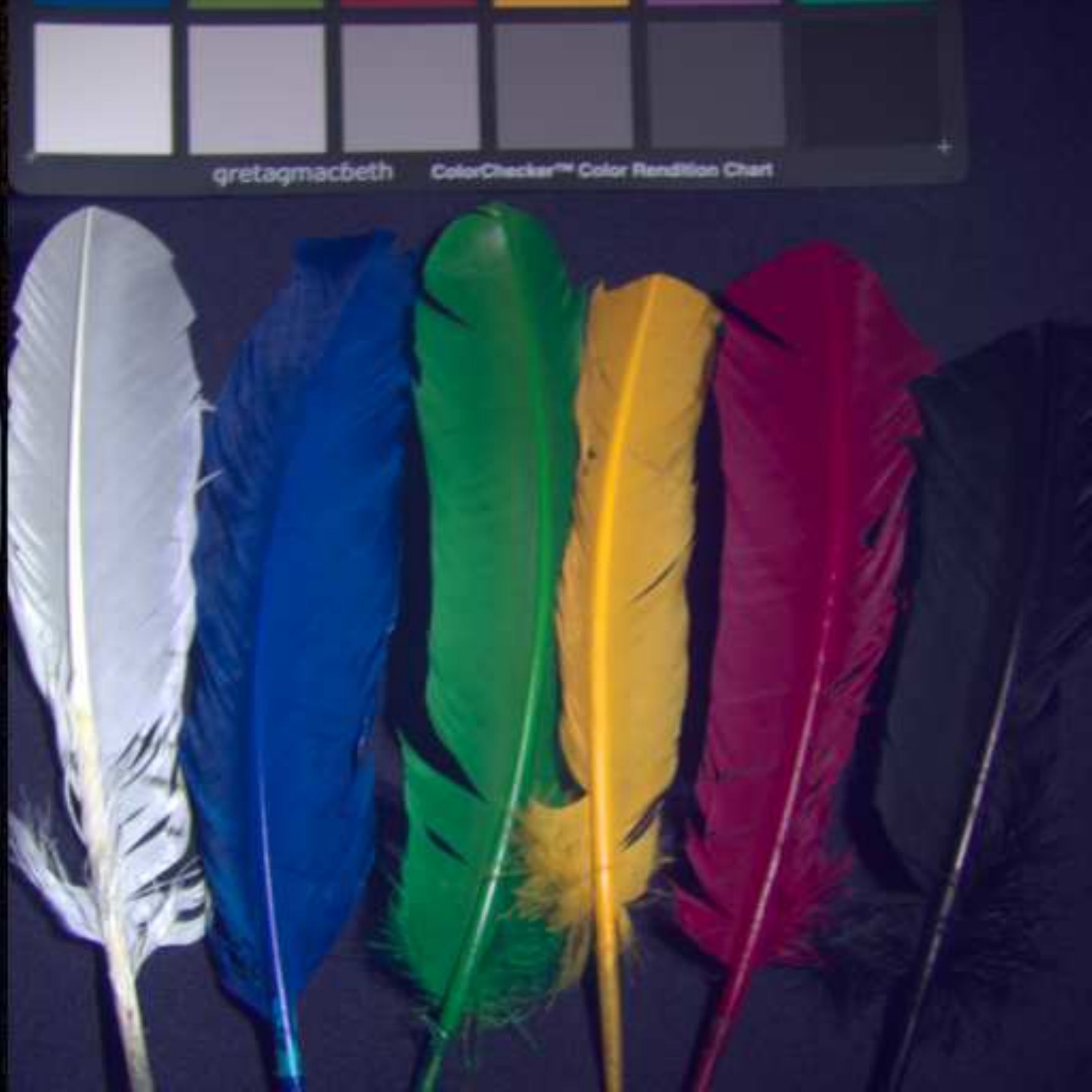}}
\subfigure[GW-gGW ($1.8^{\circ}$)]{\includegraphics[width=0.155\linewidth]{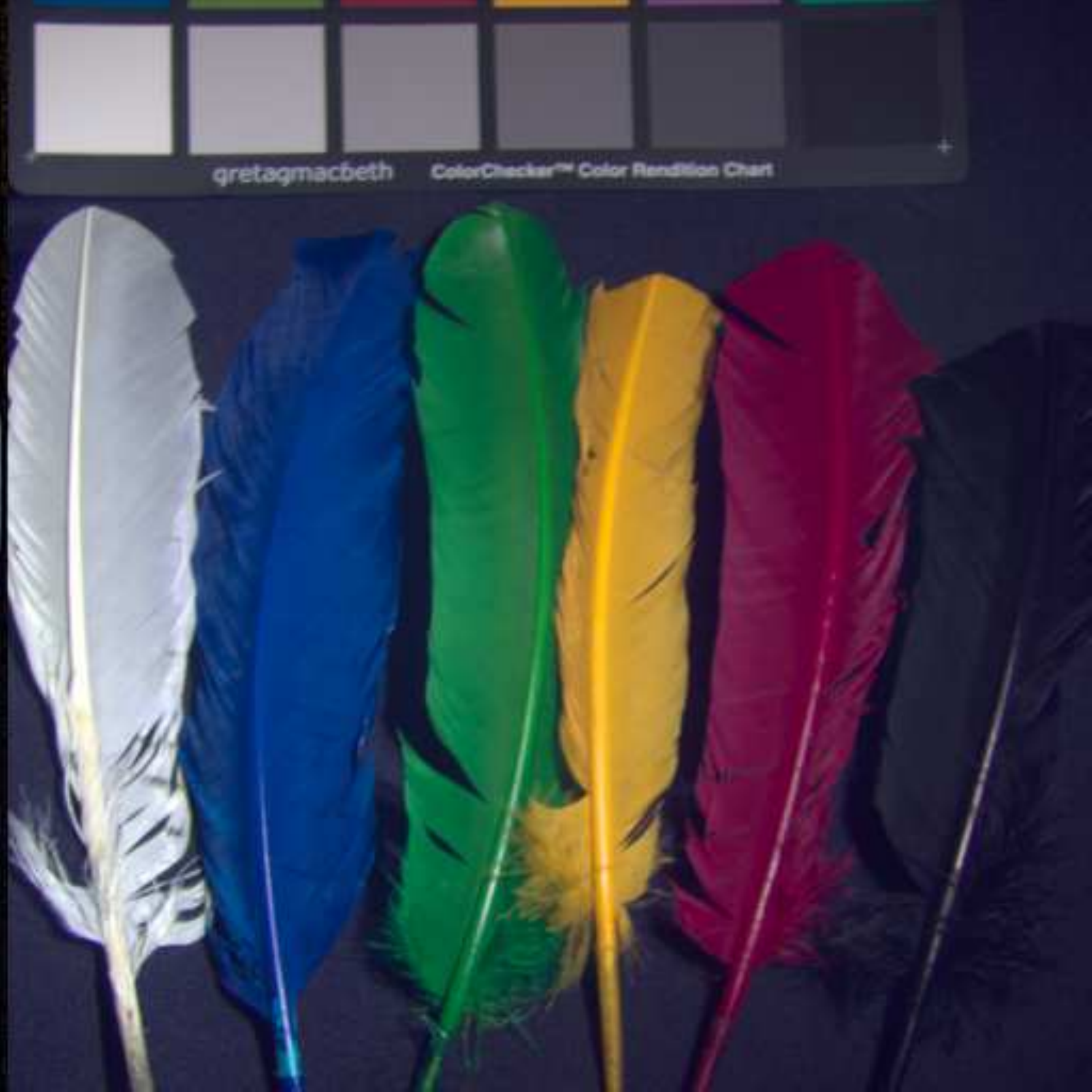}}
\subfigure[WP ($5.8^{\circ}$)]{\includegraphics[width=0.155\linewidth]{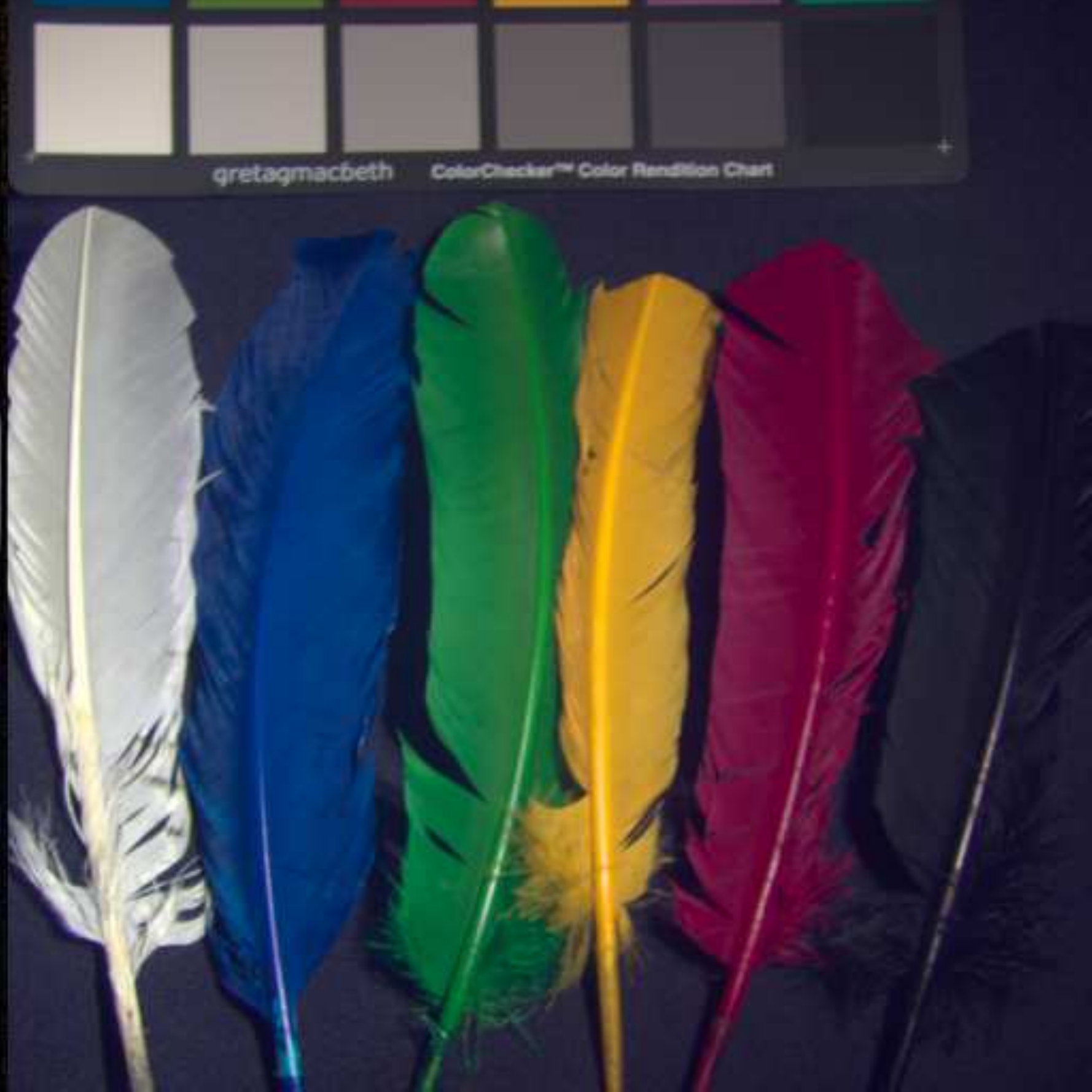}}
\subfigure[AVG ($2.7^{\circ}$)]{\includegraphics[width=0.155\linewidth]{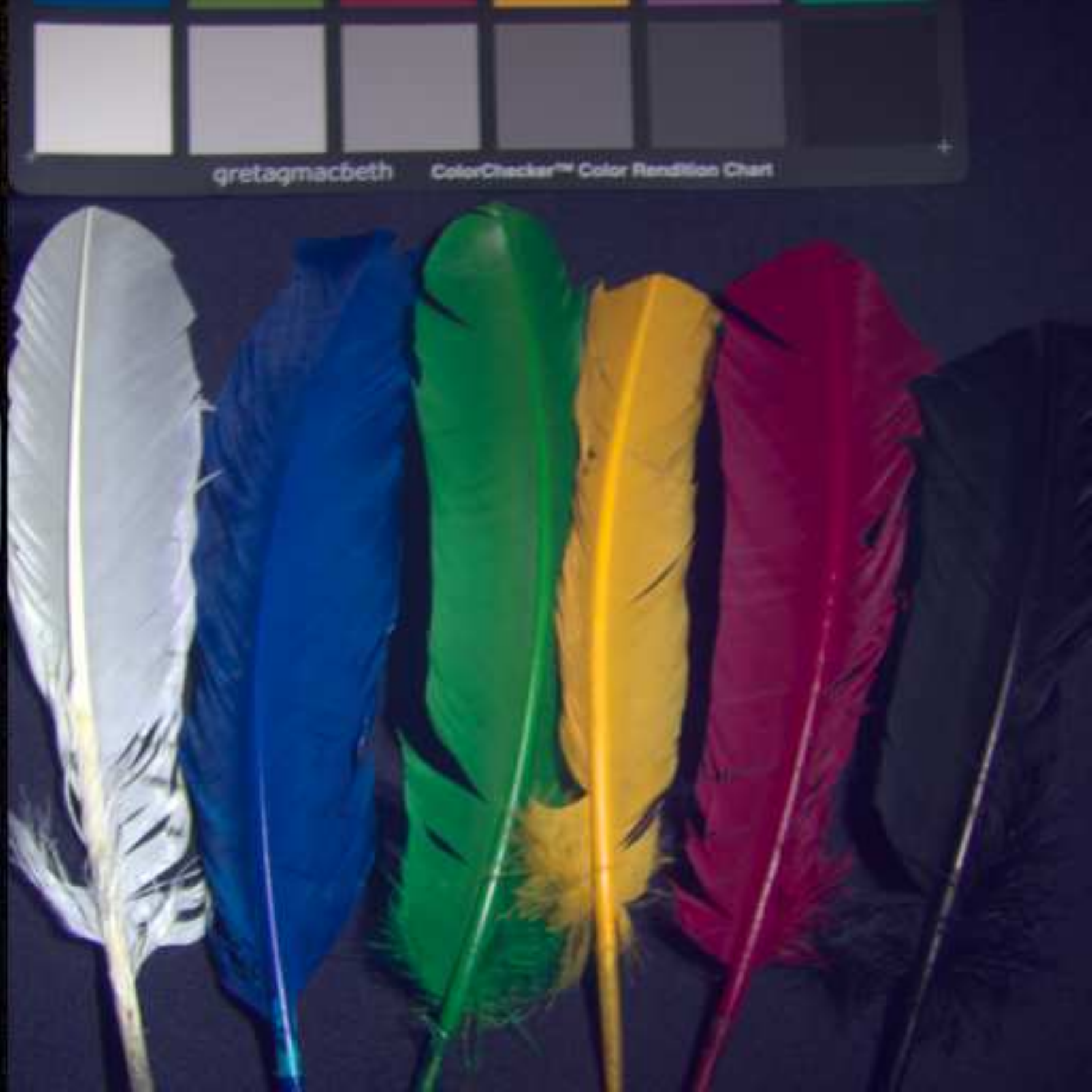}}
\caption[Qualitative comparison of individual and combinational algorithms]{(left to right) Original image with illumination bias and ideal recovery based on ground truth. Recovery with the \emph{best individual} algorithm, \emph{best combinational} algorithm, \emph{worst individual} algorithm and the \emph{worst combinational} algorithm, respectively.}
\label{fig:qual-sing-comb}
\end{figure}

\begin{figure}[h]
\centering
\includegraphics[trim = 90pt 15pt 125pt 2pt, clip, width=0.34\linewidth]{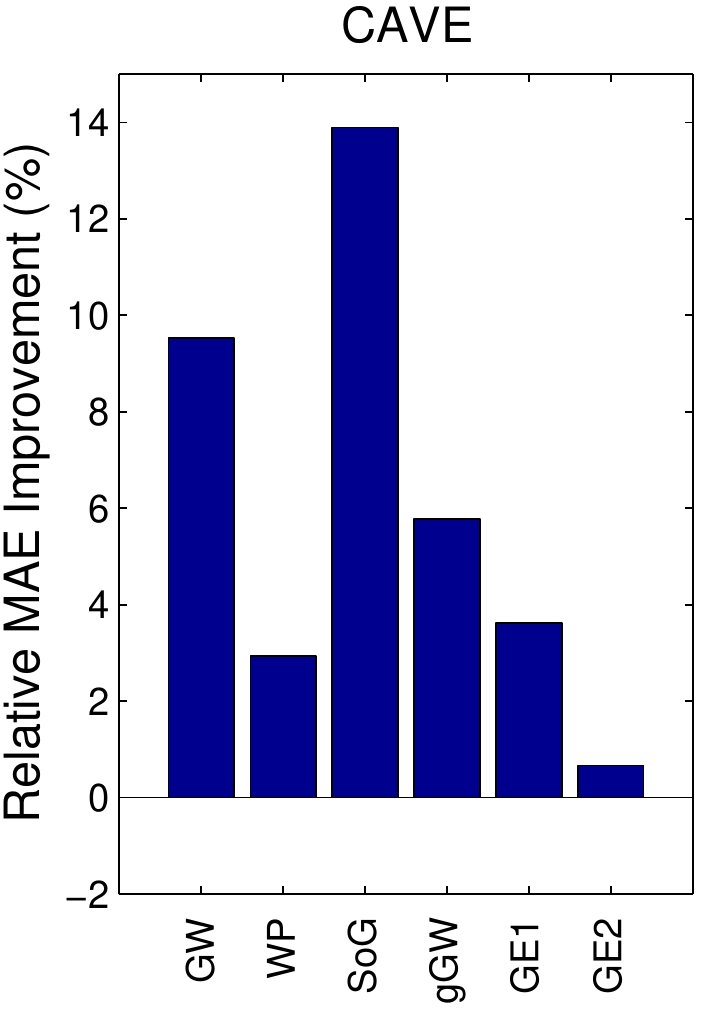}
\includegraphics[trim = 90pt 15pt 125pt 2pt, clip, width=0.34\linewidth]{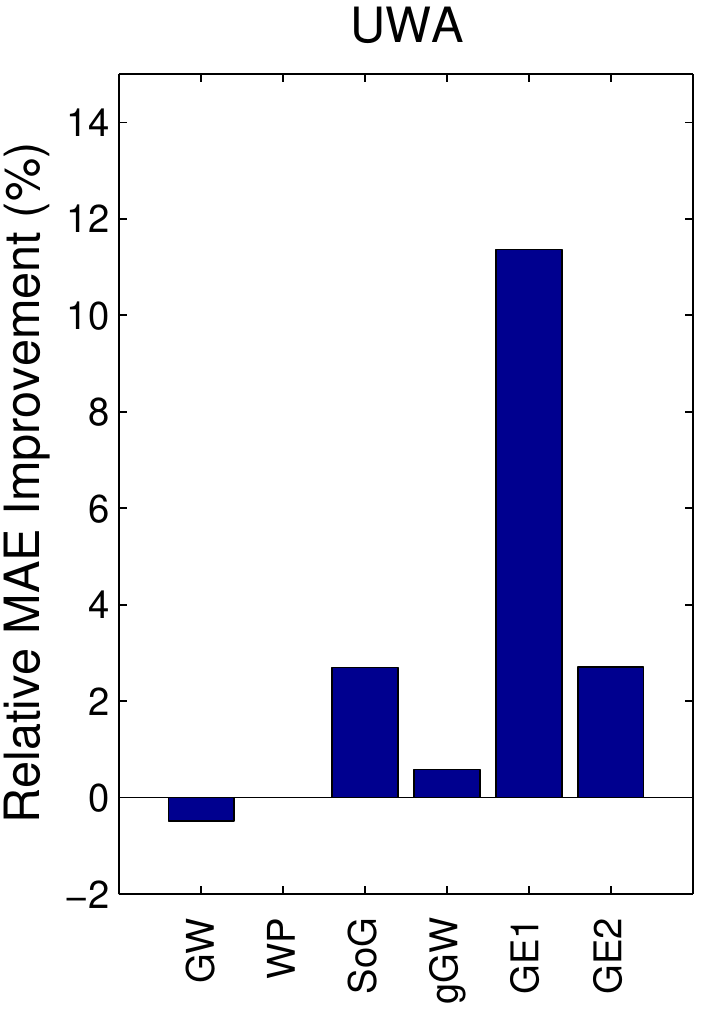}
\caption[Angular errors of non-adaptive and adaptive spatio-spectral support]{Relative MAE improvement between non-adaptive and adaptive spatio-spectral support on CAVE and UWA database}
\label{fig:rmaed-adapt-fixed}
\end{figure}

\subsection{Adaptive and Non-Adaptive Illuminant Estimation}
\label{sec:adapt-nonadapt}

We now analyze the effect of introducing the adaptive spatio-spectral support. We define relative MAE as the improvement in the mean angular error after introduction of adaptive spatio-spectral support
\begin{equation}
\Delta\bar{\epsilon}_{_{\textrm{rel}}}~(\%)= \frac{\bar{\epsilon}_{_{\textrm{nad}}}-\bar{\epsilon}_{_{\textrm{adp}}}}{\bar{\epsilon}_{_{\textrm{nad}}}} \times 100~.
\end{equation}
where $\bar{\epsilon}_{_{\textrm{nad}}}$ and $\bar{\epsilon}_{_{\textrm{adp}}}$ are the mean angular errors of non-adaptive and adaptive illuminant estimation respectively. A positive $\Delta\bar{\epsilon}_{_{\textrm{rel}}}$ indicates a decrease in the MAE (i.e.~improvement), by adaptive spatio-spectral support and vice versa. Figure~\ref{fig:rmaed-adapt-fixed} shows the relative MAE improvement for all algorithms. It can be observed that the algorithms show up to 13\% improvement after the introduction of adaptive spatio-spectral supports on the UWA and CAVE datasets. The superiority of adaptive spatio-spectral support is consistently demonstrated in most algorithms with different degrees of improvement. One can deduce that the color constancy assumptions of these algorithms is supportive for smooth illuminant estimation when the information from neighboring bands is integrated. In some instances, the adaptive spatio-spectral support brings no improvement. This can be attributed to the unpredictable SPD of illuminants in the real world, even though the classifier is trained on a subset of real world illuminants.


We also qualitatively analyze the color constancy results for the sample images shown in Figure~\ref{fig:qual-adapt-fixed}. The advantage of the adaptive approach is subtle but visually appreciable and numerically more prominent. A close look at the illumination SPD plots reveals that the adaptive approach results in a smoother estimate, closer to the ground truth. Even for the failing algorithm (WP) on an image, the adaptive approach still recovers better illumination estimate and results in lower angular error. Observe the high illumination bias in the real illuminant images compared to the simulated illuminant ones, whereas the illumination is almost perfectly recovered by both methods. Note that there is no difference in the angular errors in this case because the illuminant is correctly classified as spiky and both algorithms result in a similar estimate.

\begin{figure}[!h]
\renewcommand{\thesubfigure}{\relax}
\subfigure[Orig~$13.3^{\circ}$]{\includegraphics[width=0.168\linewidth]{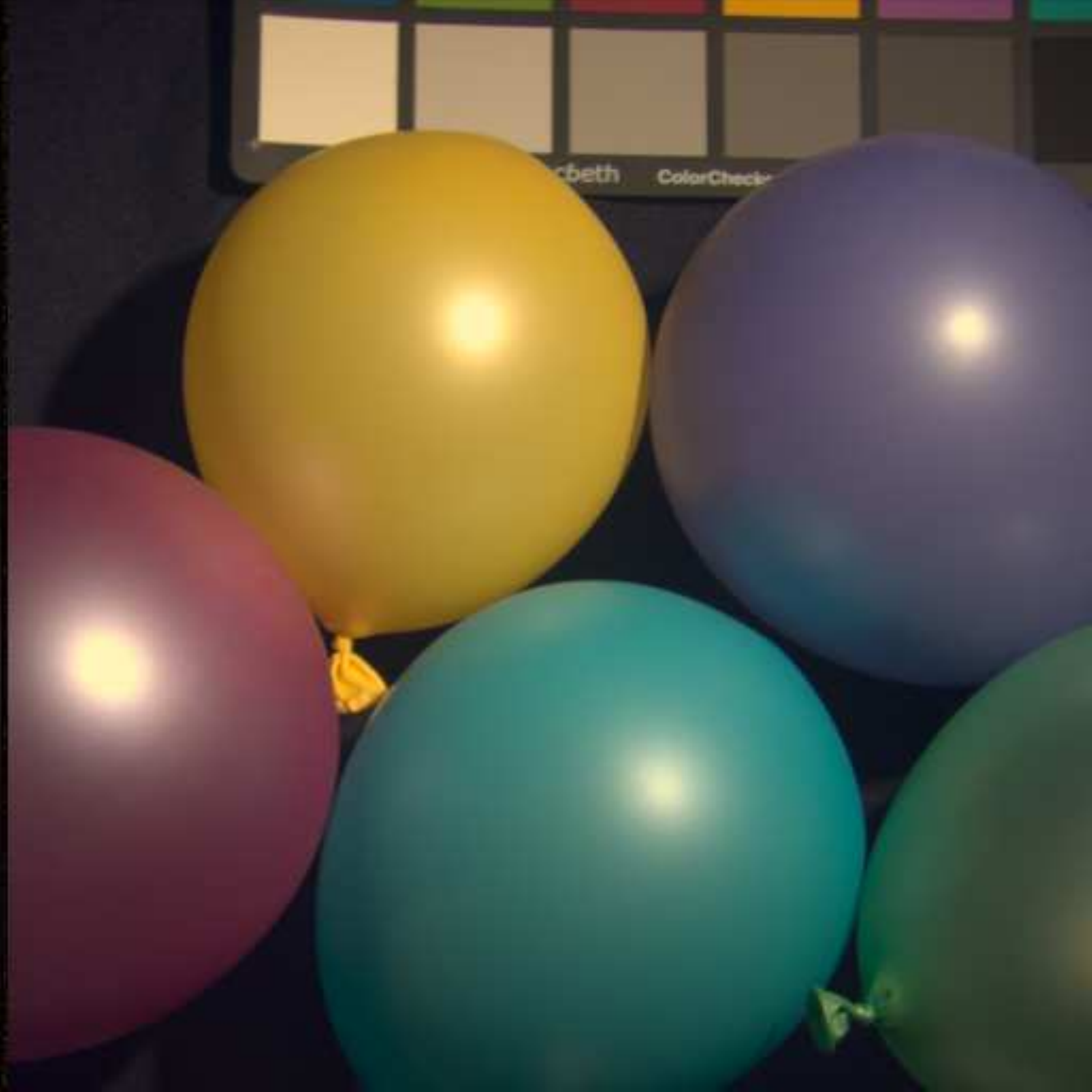}}
\subfigure[Ideal~$0^{\circ}$]{\includegraphics[width=0.168\linewidth]{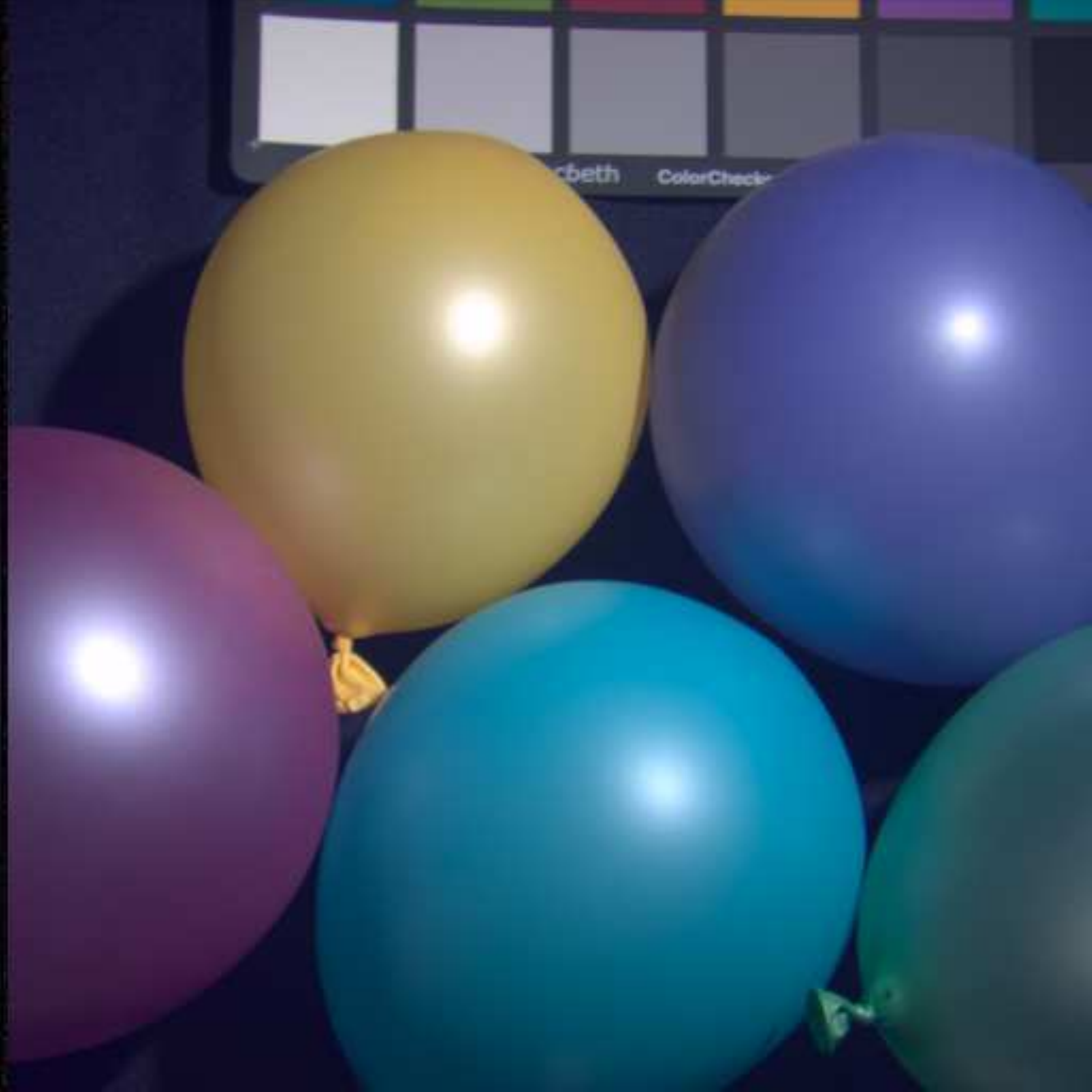}}
\subfigure[GW~$4.9^{\circ}$]{\includegraphics[width=0.168\linewidth]{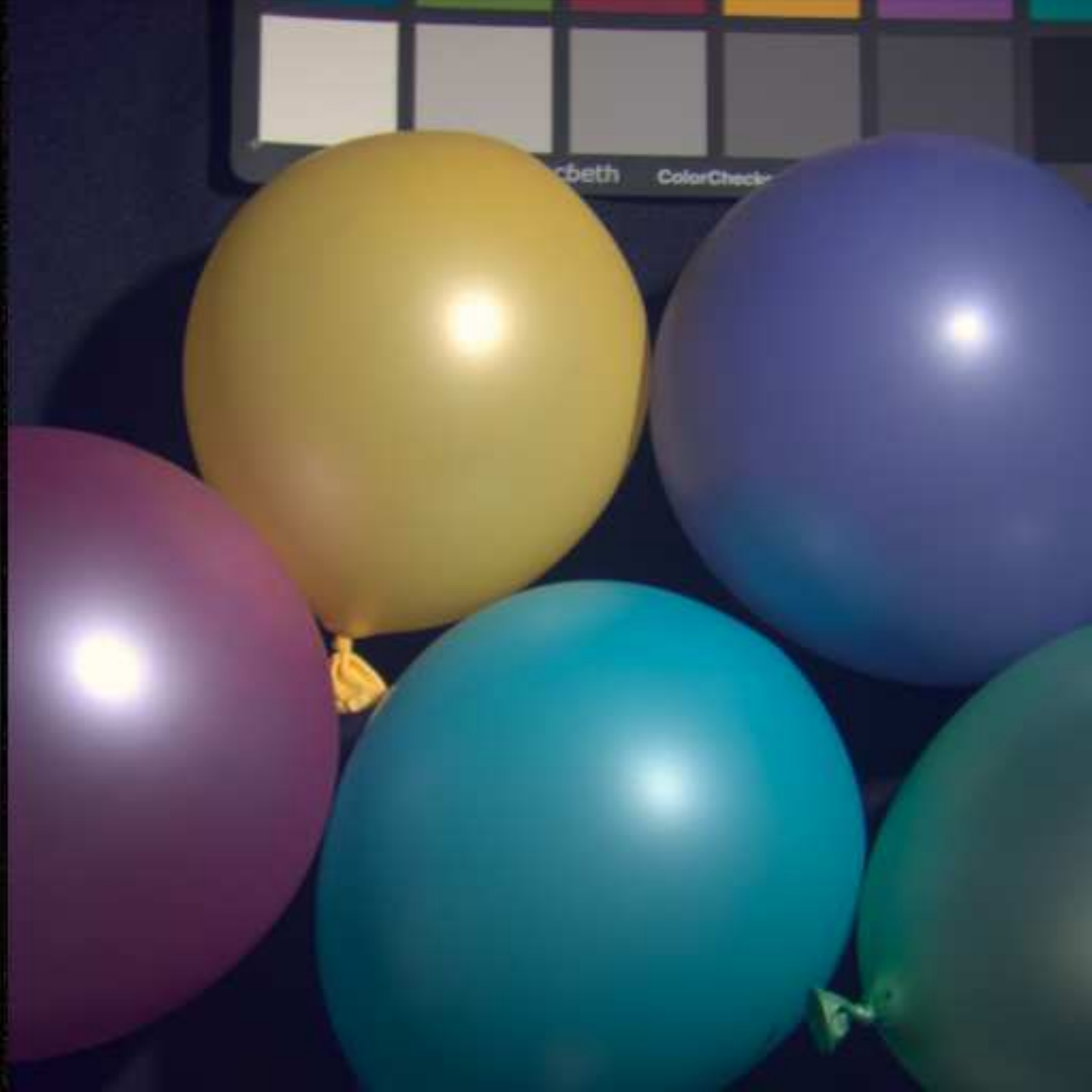}}
\subfigure[GW-a~$1.8^{\circ}$]{\includegraphics[width=0.168\linewidth]{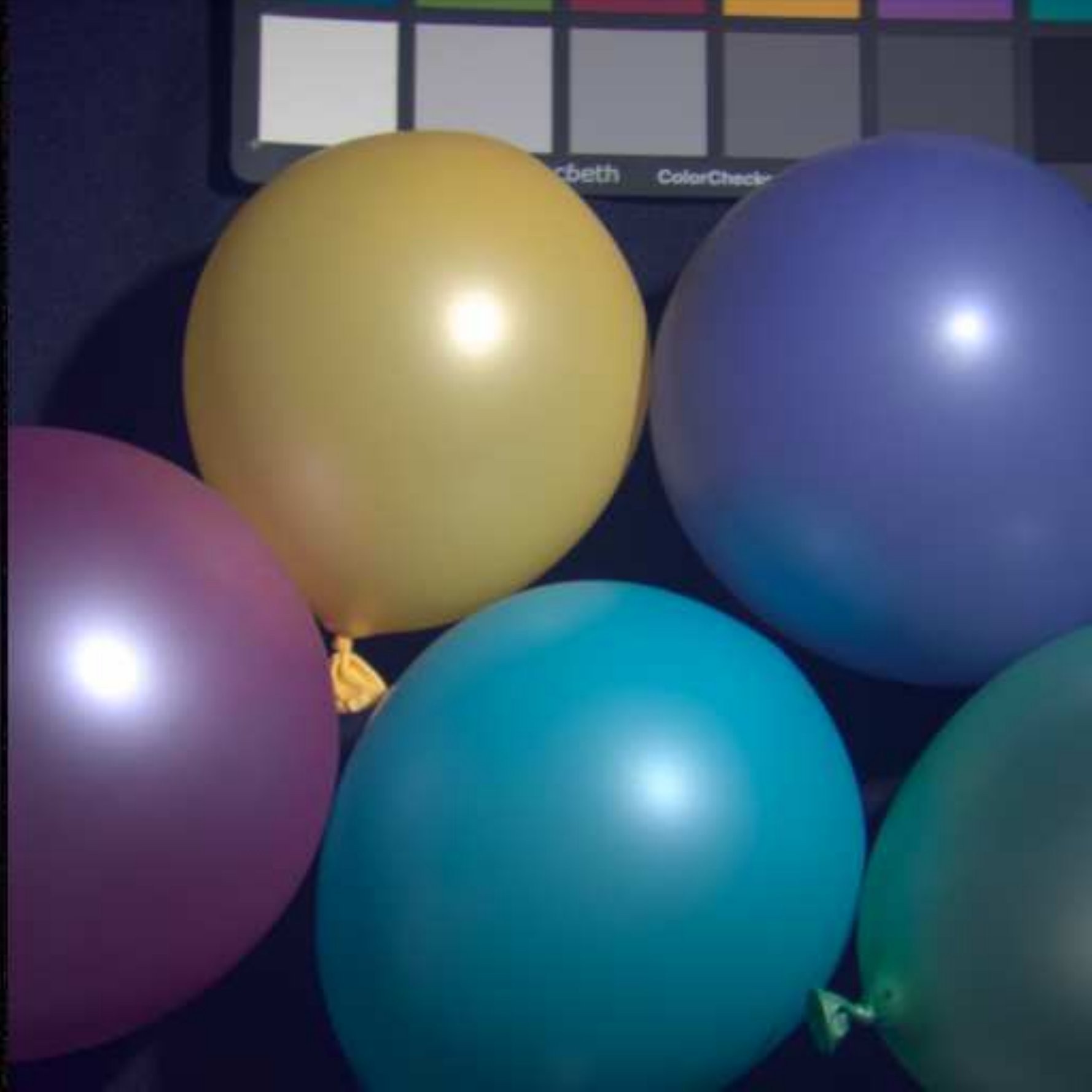}}
\subfigure[]{\includegraphics[trim = 0pt 50pt 0pt 0pt, width=0.30\linewidth]{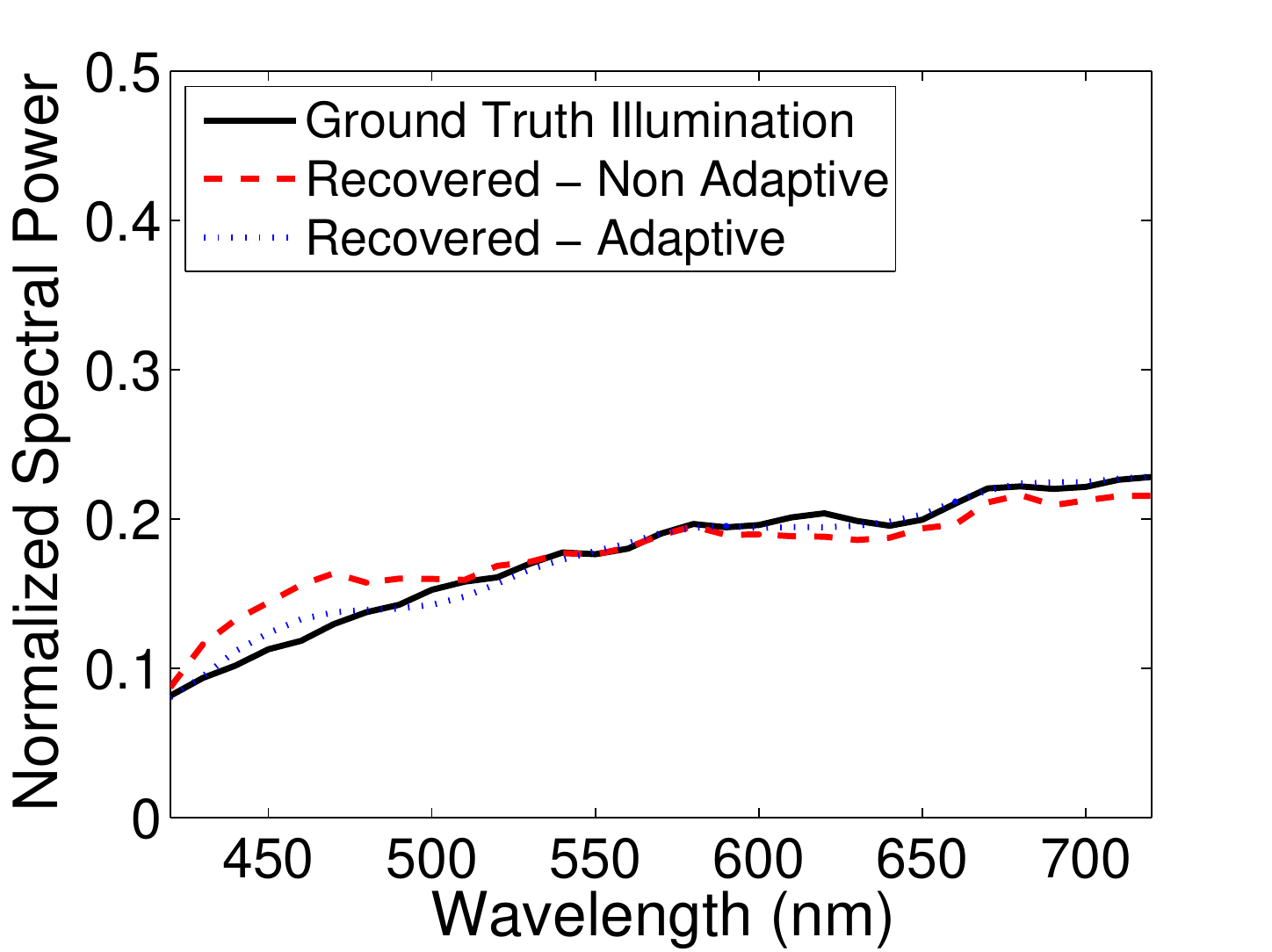}}\\ [-3pt]
\subfigure[Orig~$18.8^{\circ}$]{\includegraphics[width=0.168\linewidth]{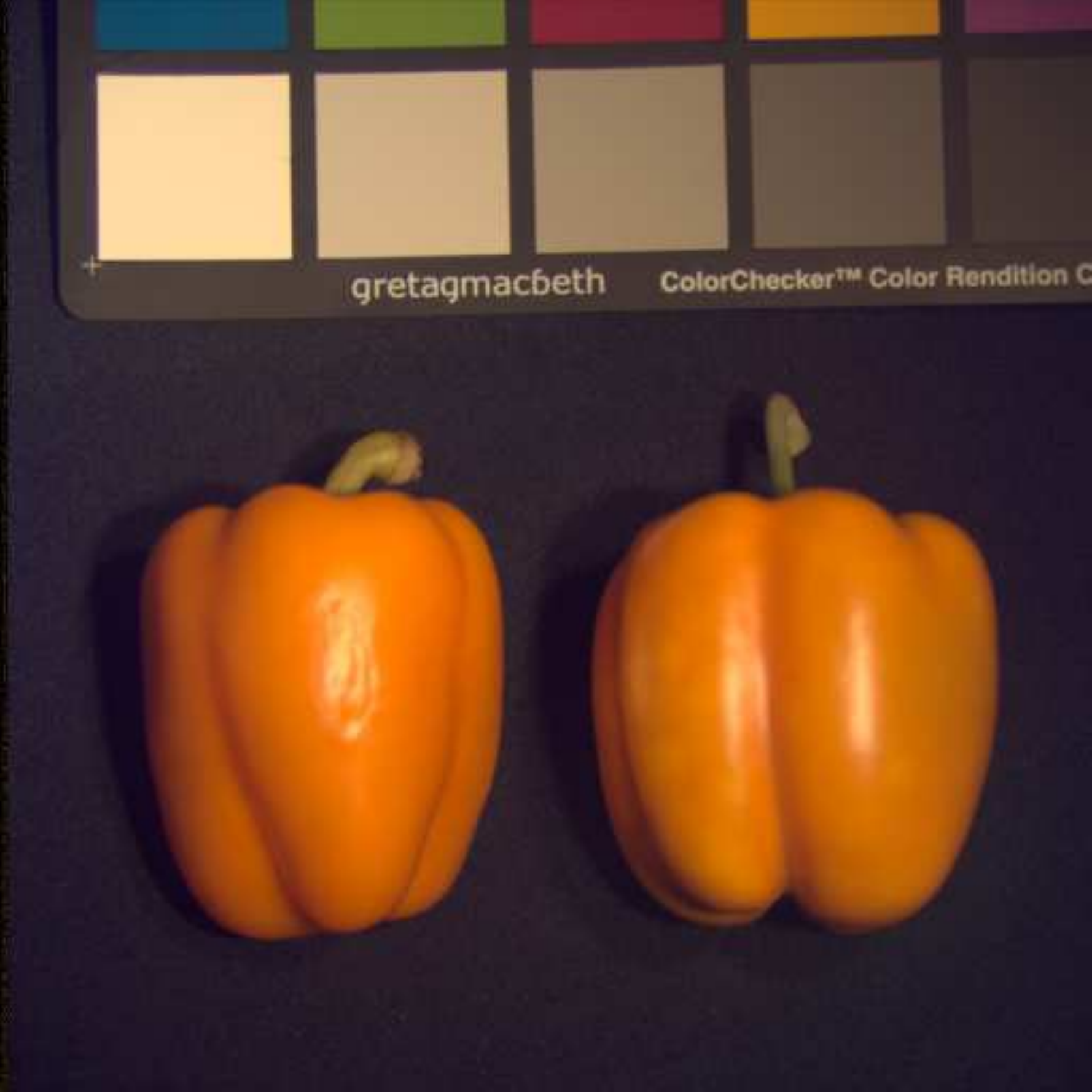}}
\subfigure[Ideal~$0^{\circ}$]{\includegraphics[width=0.168\linewidth]{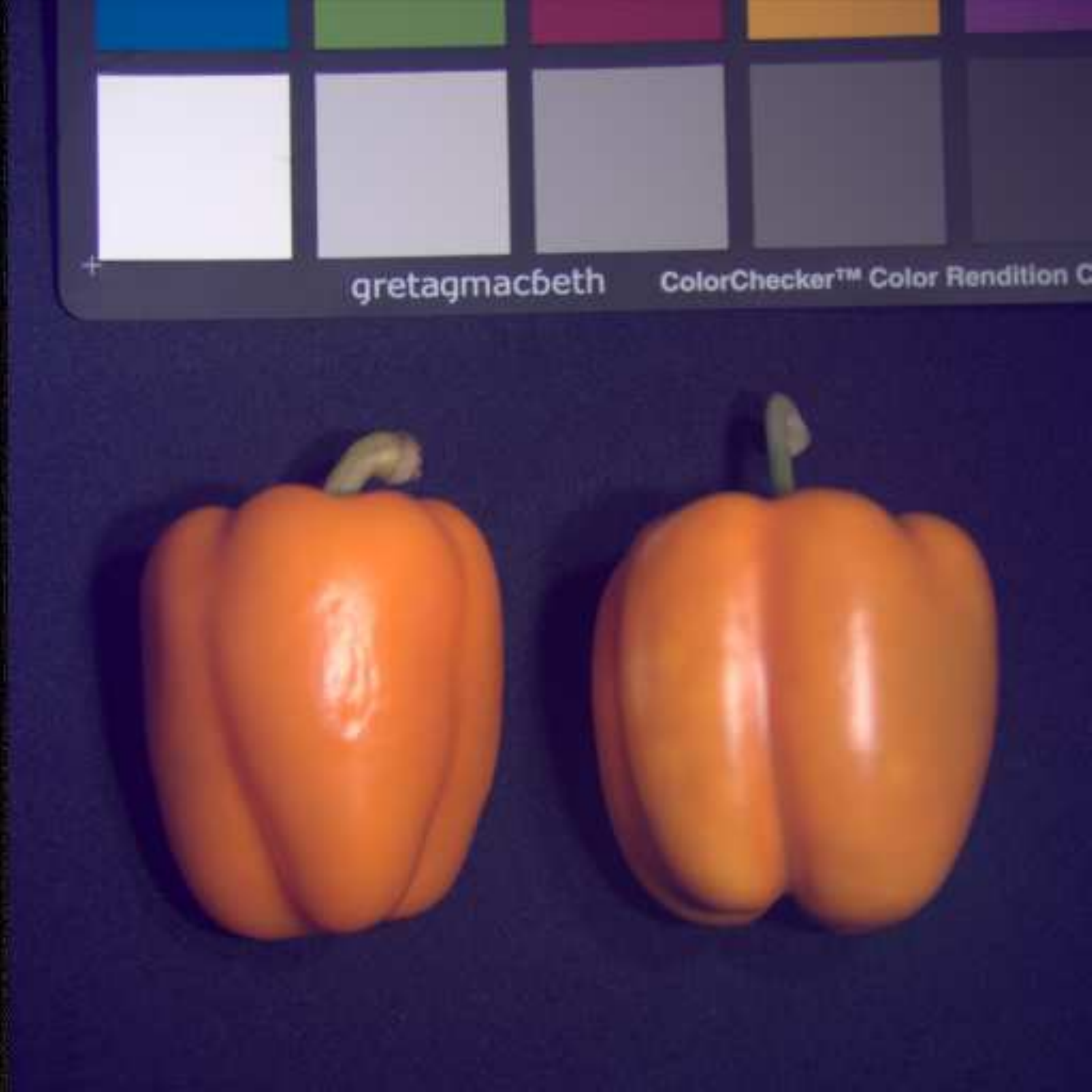}}
\subfigure[SoG~$4.3^{\circ}$]{\includegraphics[width=0.168\linewidth]{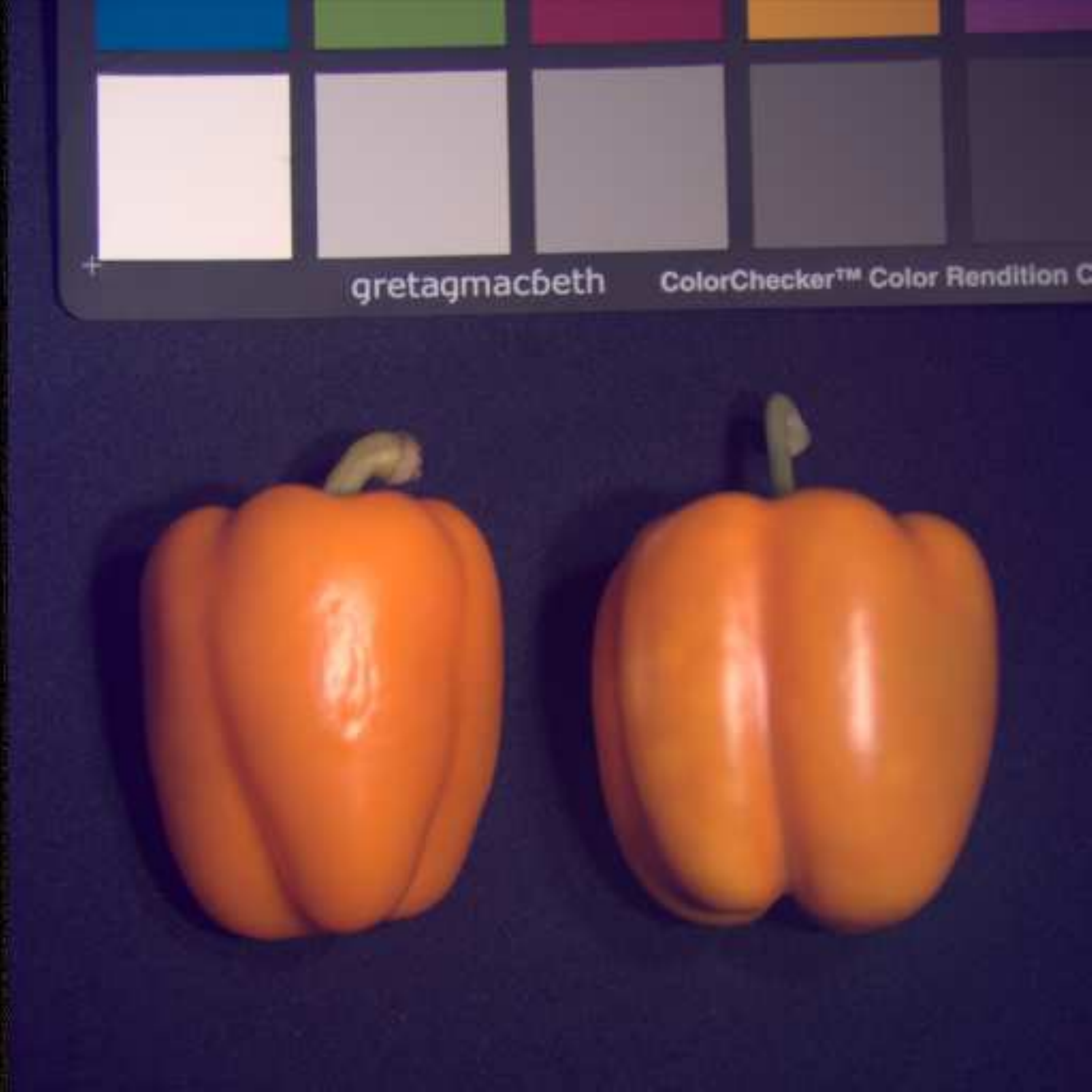}}
\subfigure[SoG-a~$1.4^{\circ}$]{\includegraphics[width=0.168\linewidth]{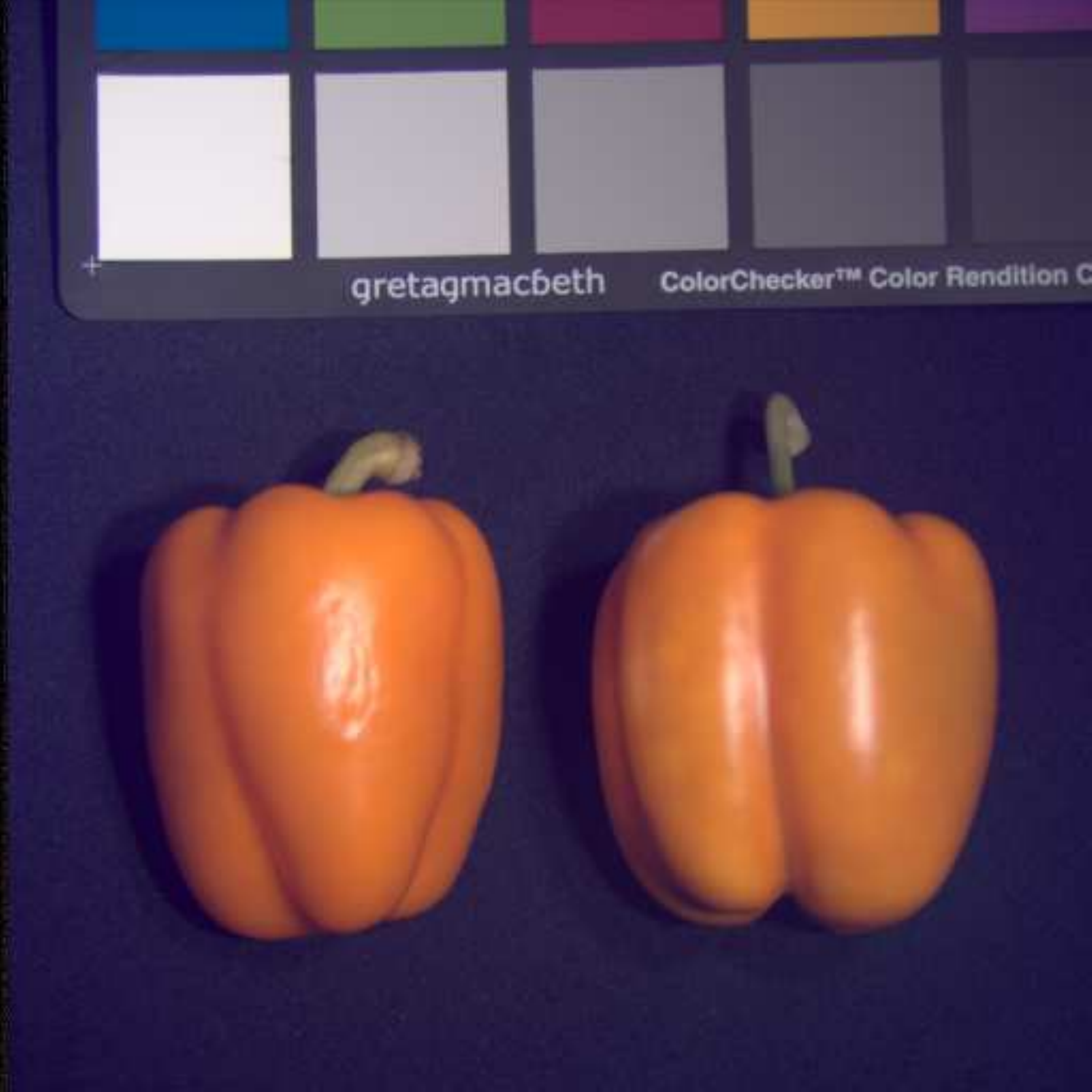}}
\subfigure[]{\includegraphics[trim = 0pt 50pt 0pt 0pt, width=0.30\linewidth]{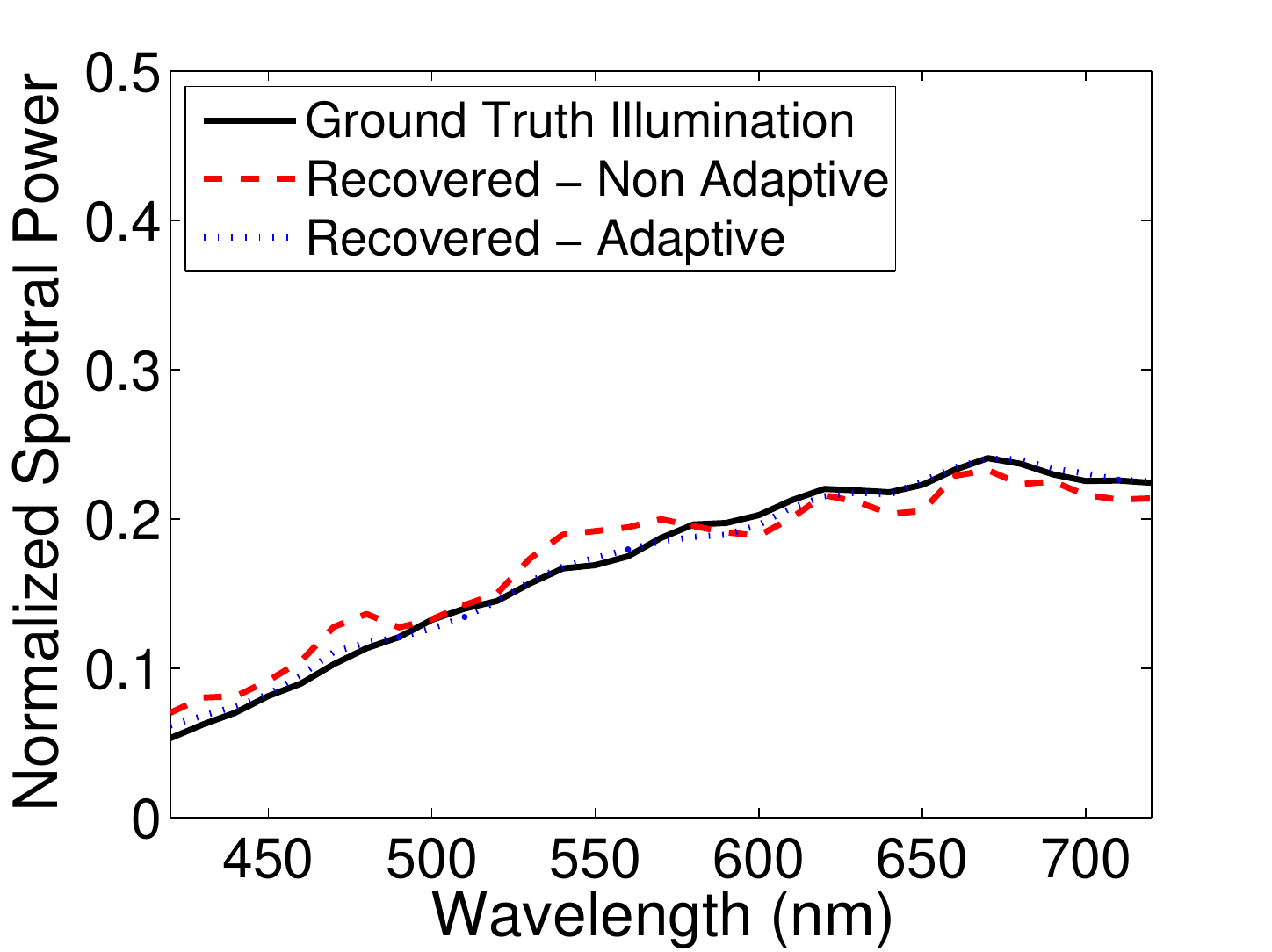}}\\ [-3pt]
\subfigure[Orig~$18.8^{\circ}$]{\includegraphics[width=0.168\linewidth]{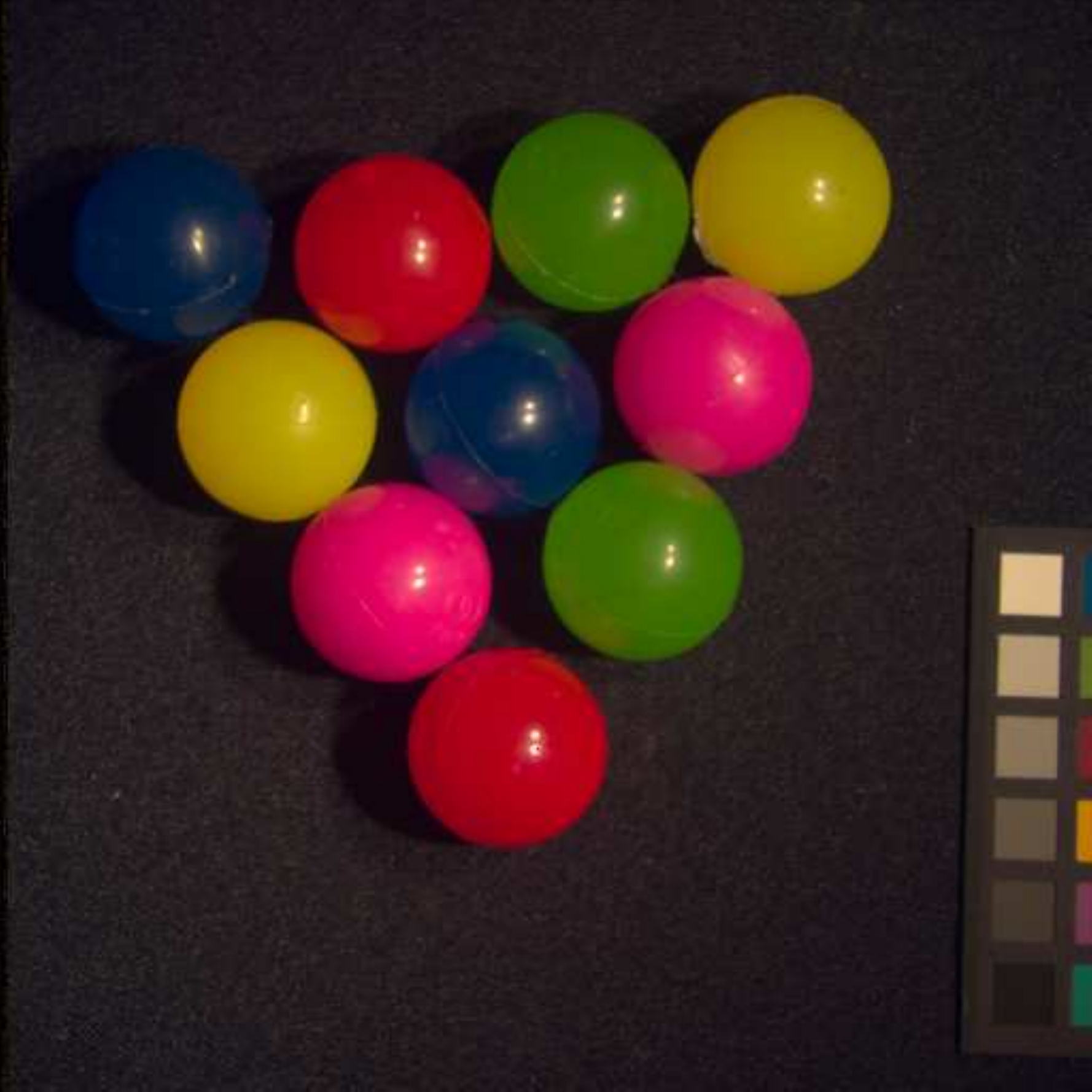}}
\subfigure[Ideal~$0^{\circ}$]{\includegraphics[width=0.168\linewidth]{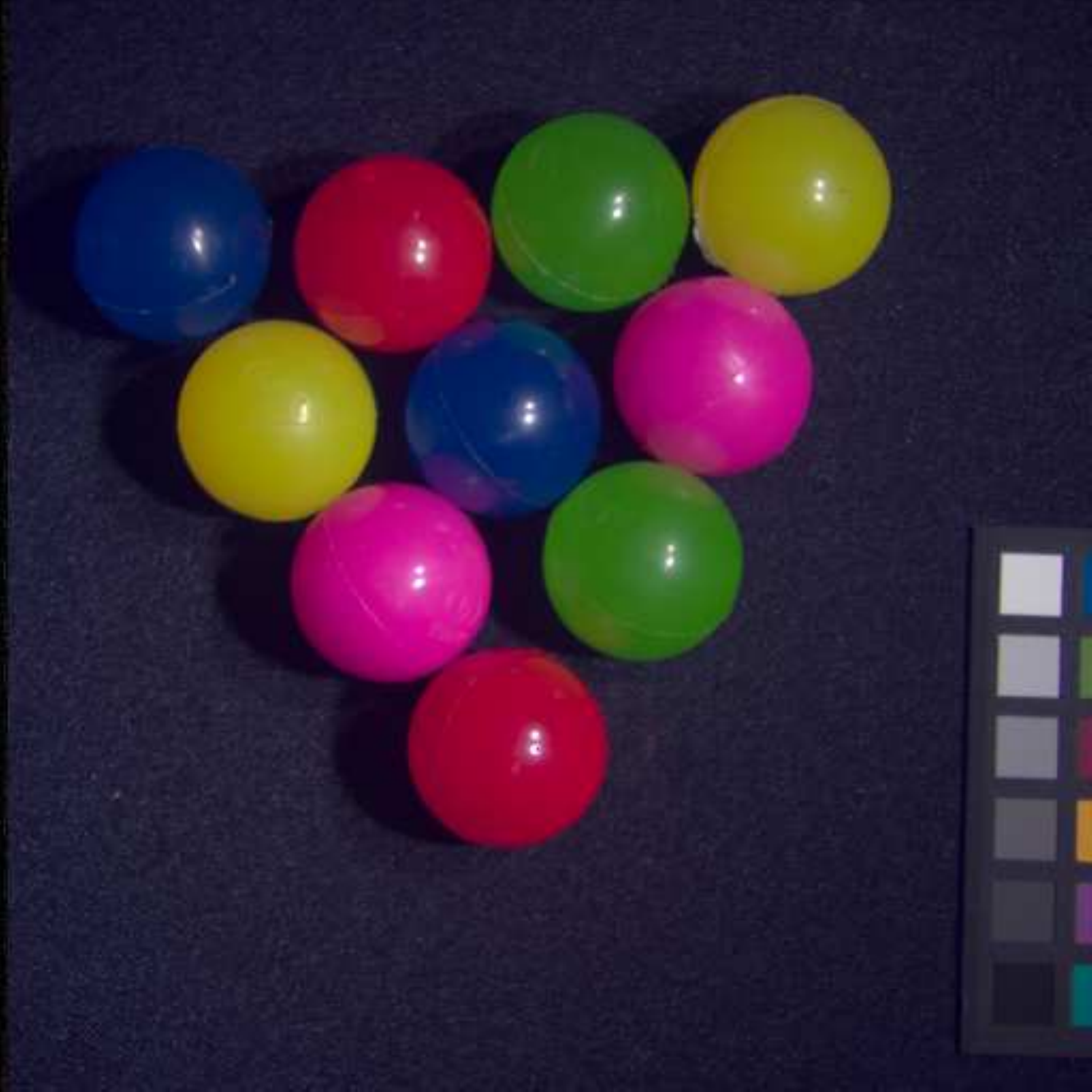}}
\subfigure[WP~$15.7^{\circ}$]{\includegraphics[width=0.168\linewidth]{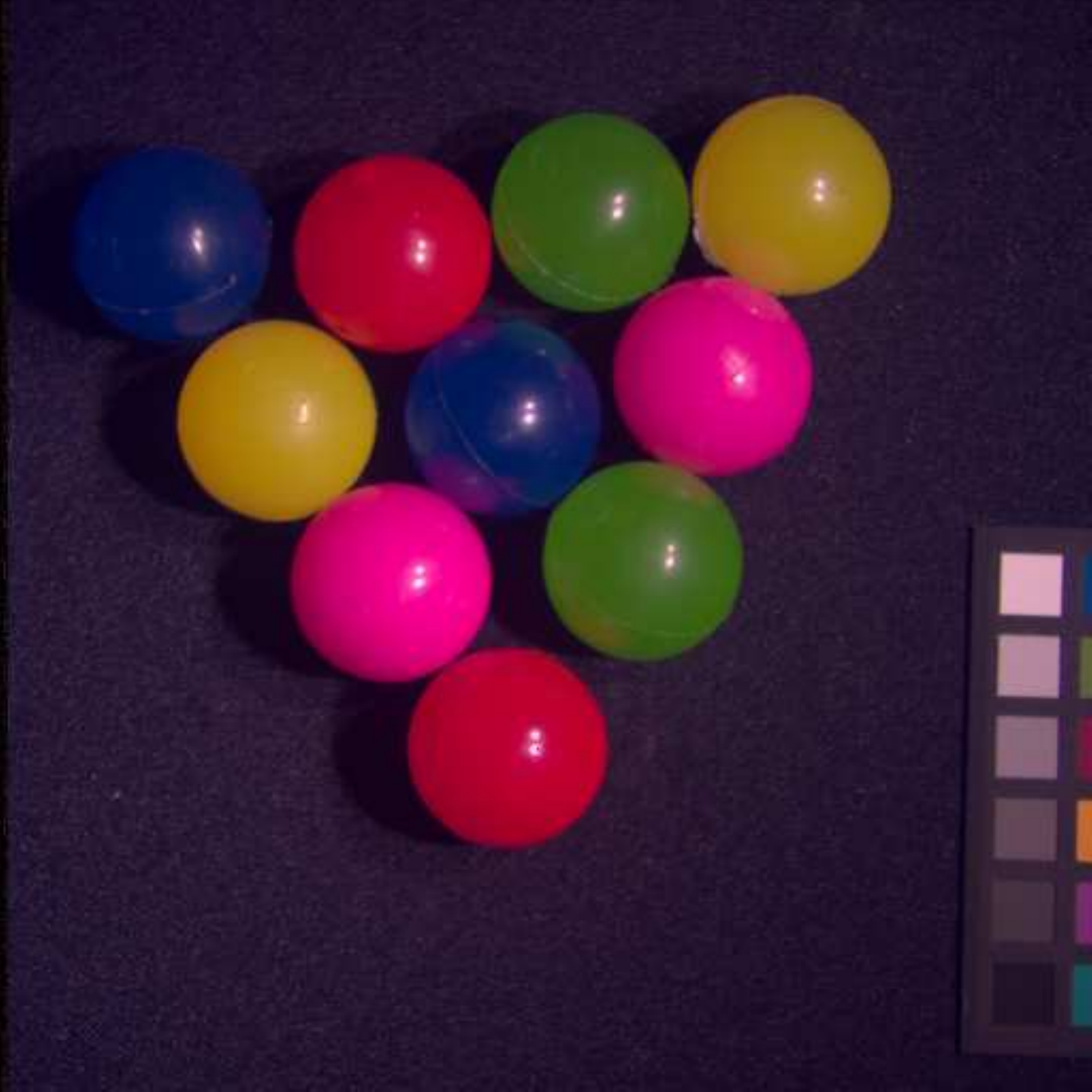}}
\subfigure[WP-a~$9.9^{\circ}$]{\includegraphics[width=0.168\linewidth]{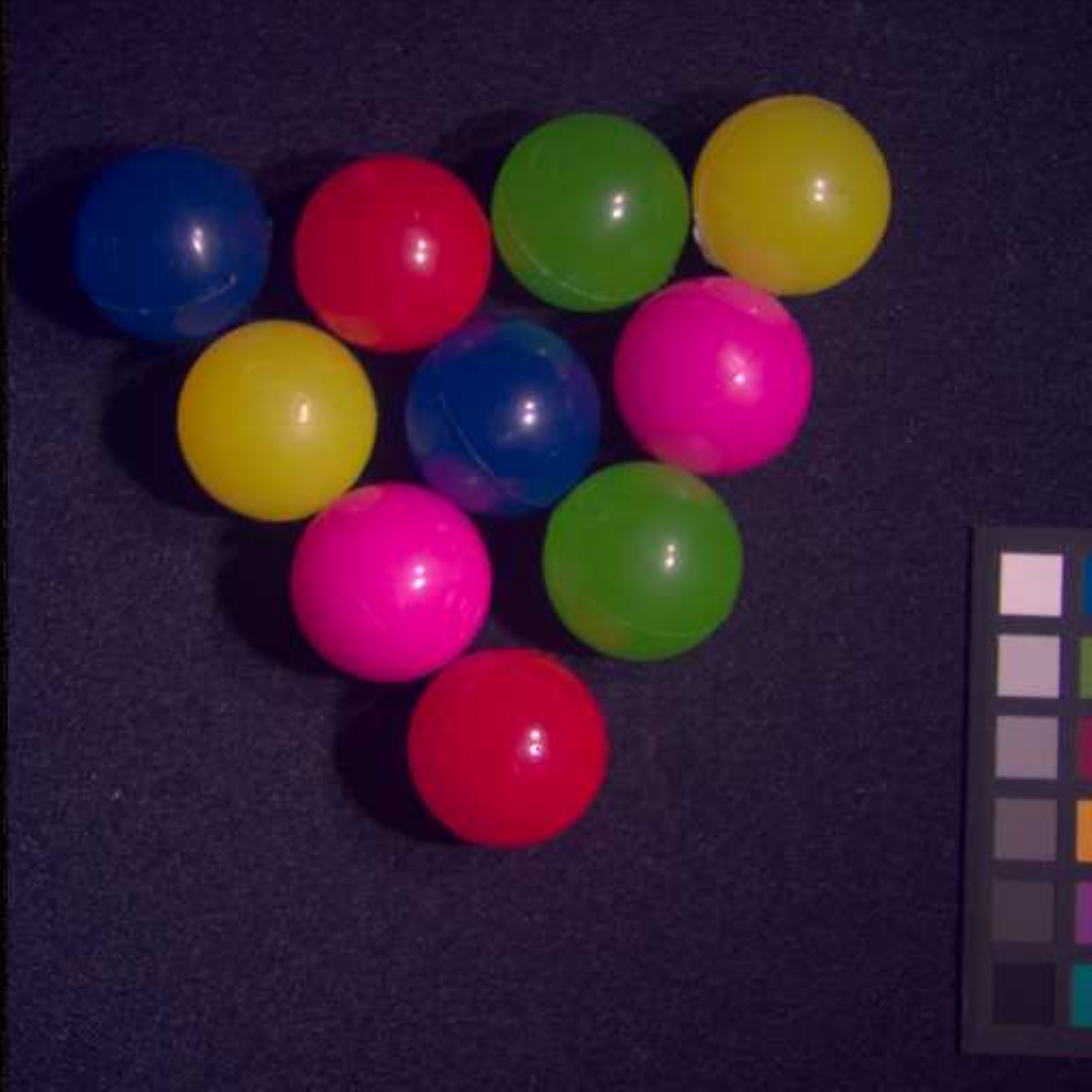}}
\subfigure[]{\includegraphics[trim = 0pt 50pt 0pt 0pt, width=0.30\linewidth]{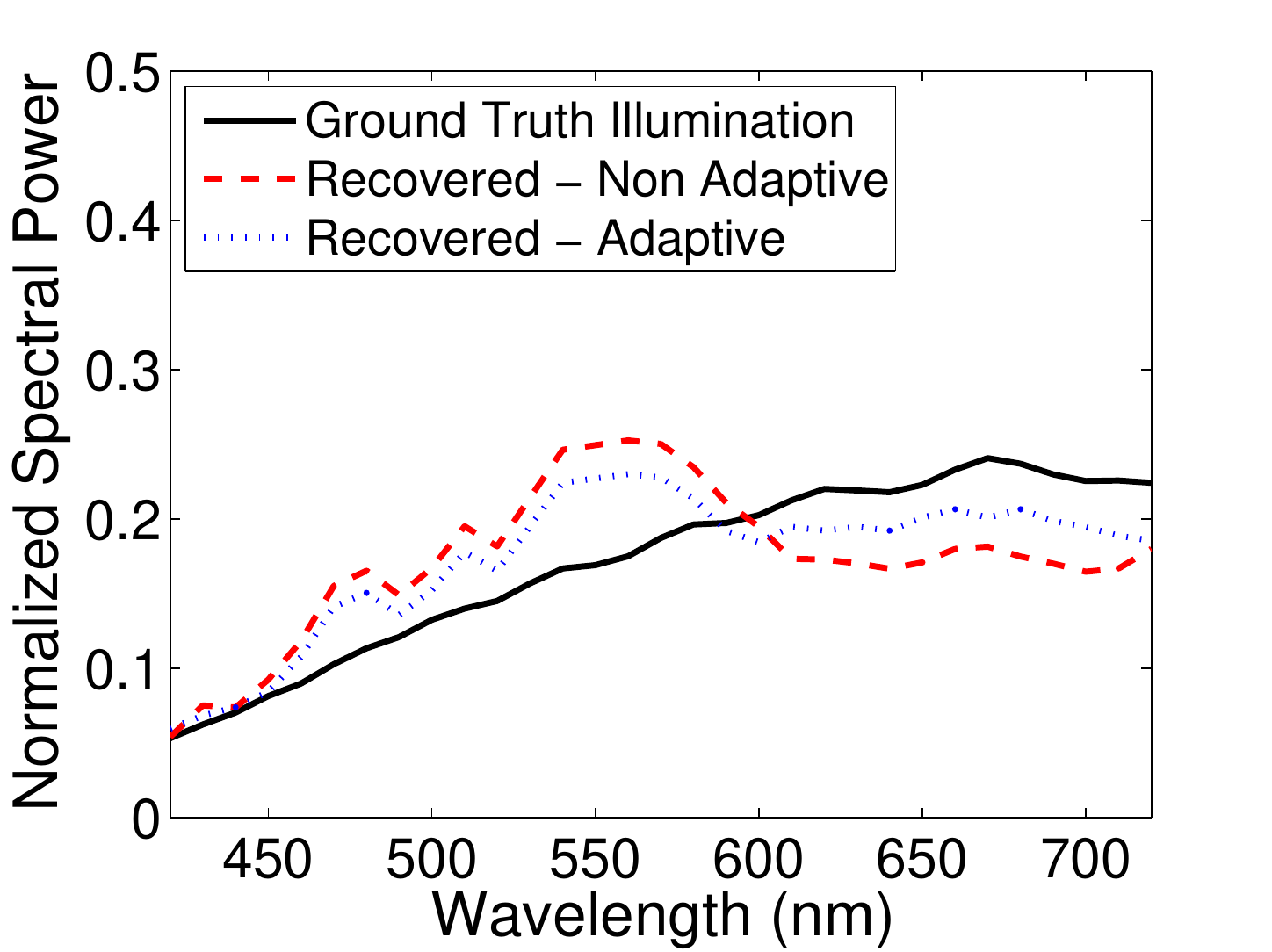}}\\ [-3pt]
\subfigure[Orig~$39.7^{\circ}$]{\includegraphics[width=0.168\linewidth]{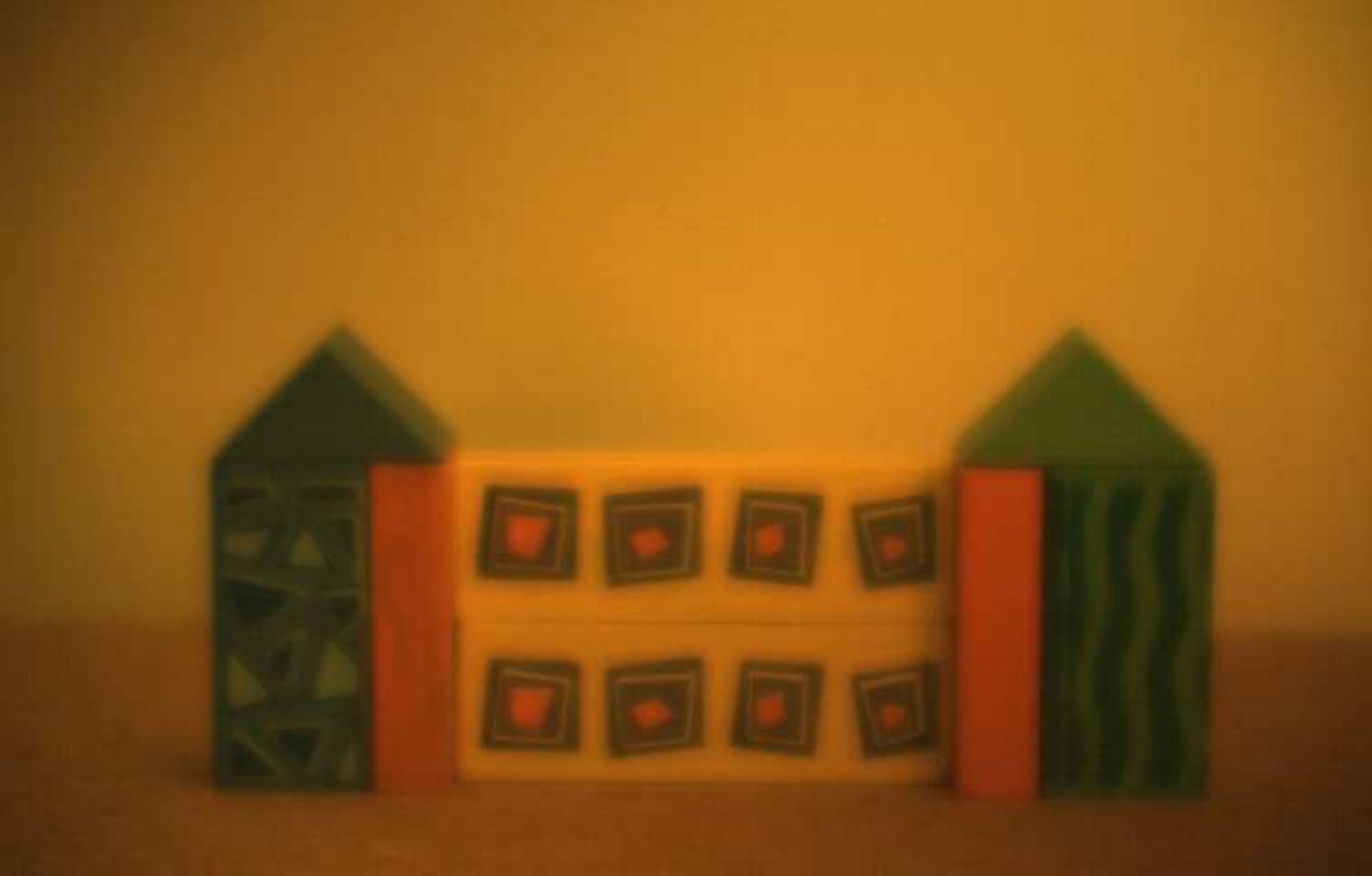}}
\subfigure[Ideal~$0^{\circ}$]{\includegraphics[width=0.168\linewidth]{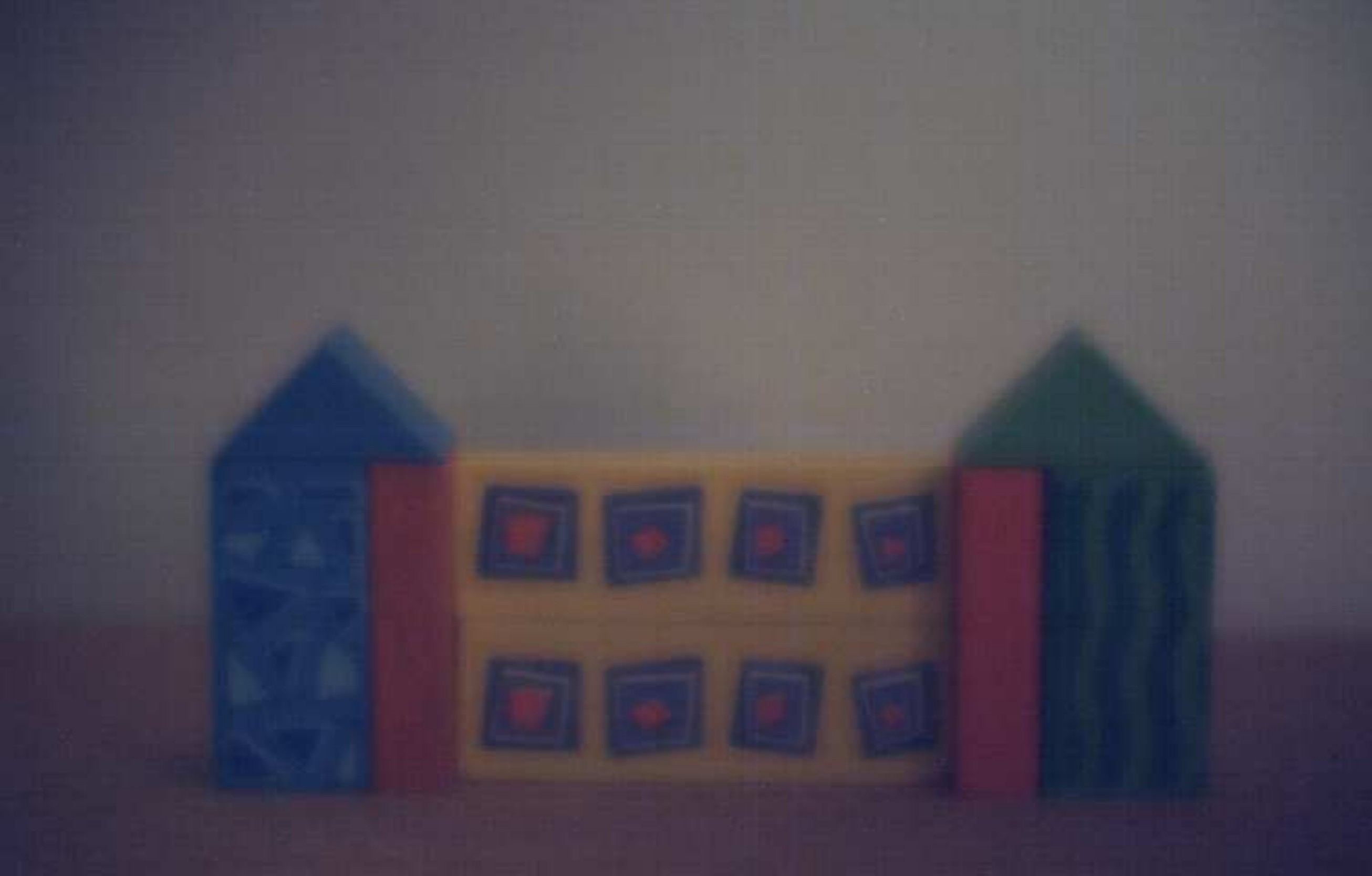}}
\subfigure[GW~$3.1^{\circ}$]{\includegraphics[width=0.168\linewidth]{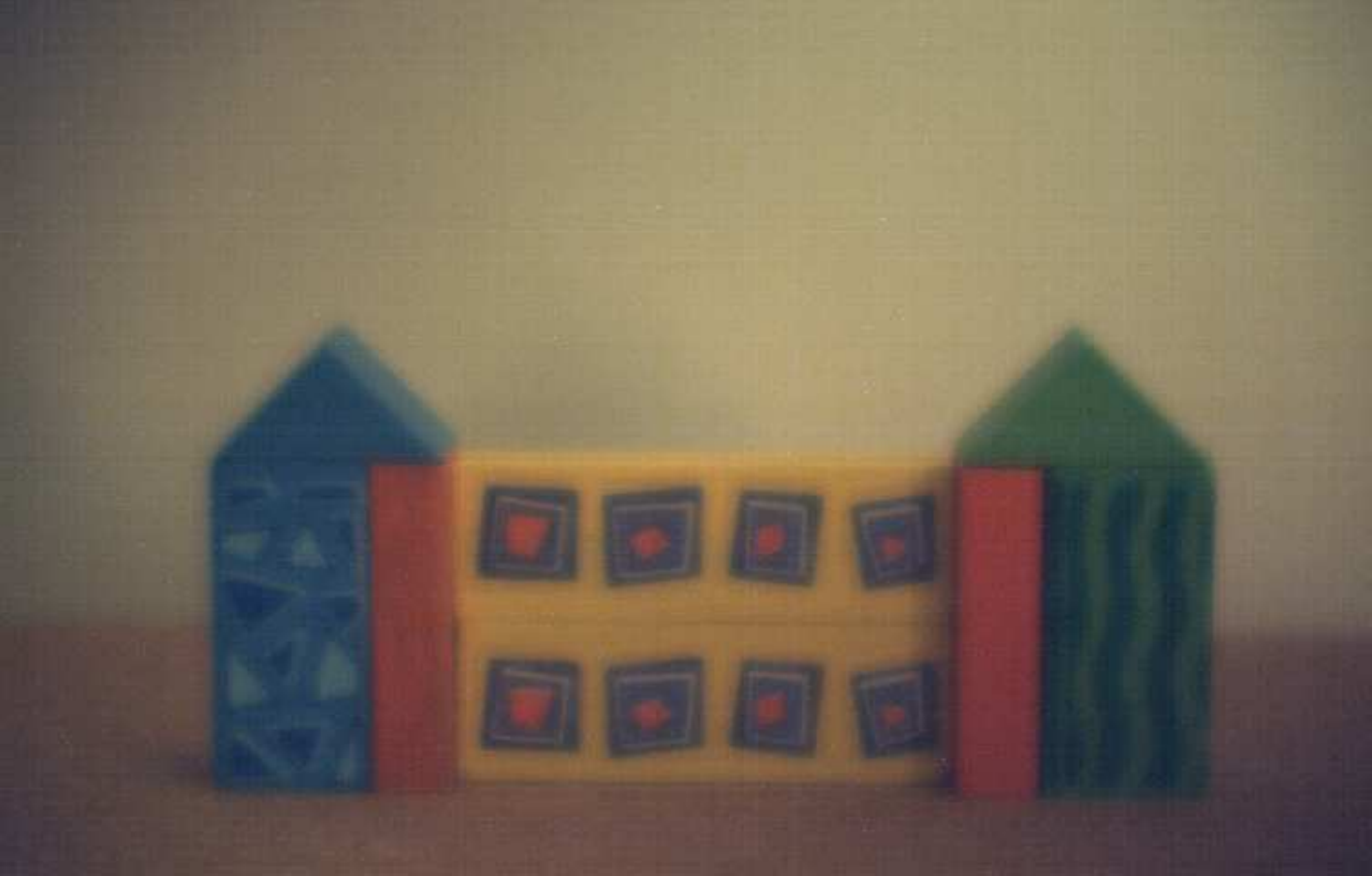}}
\subfigure[GW-a~$3.1^{\circ}$]{\includegraphics[width=0.168\linewidth]{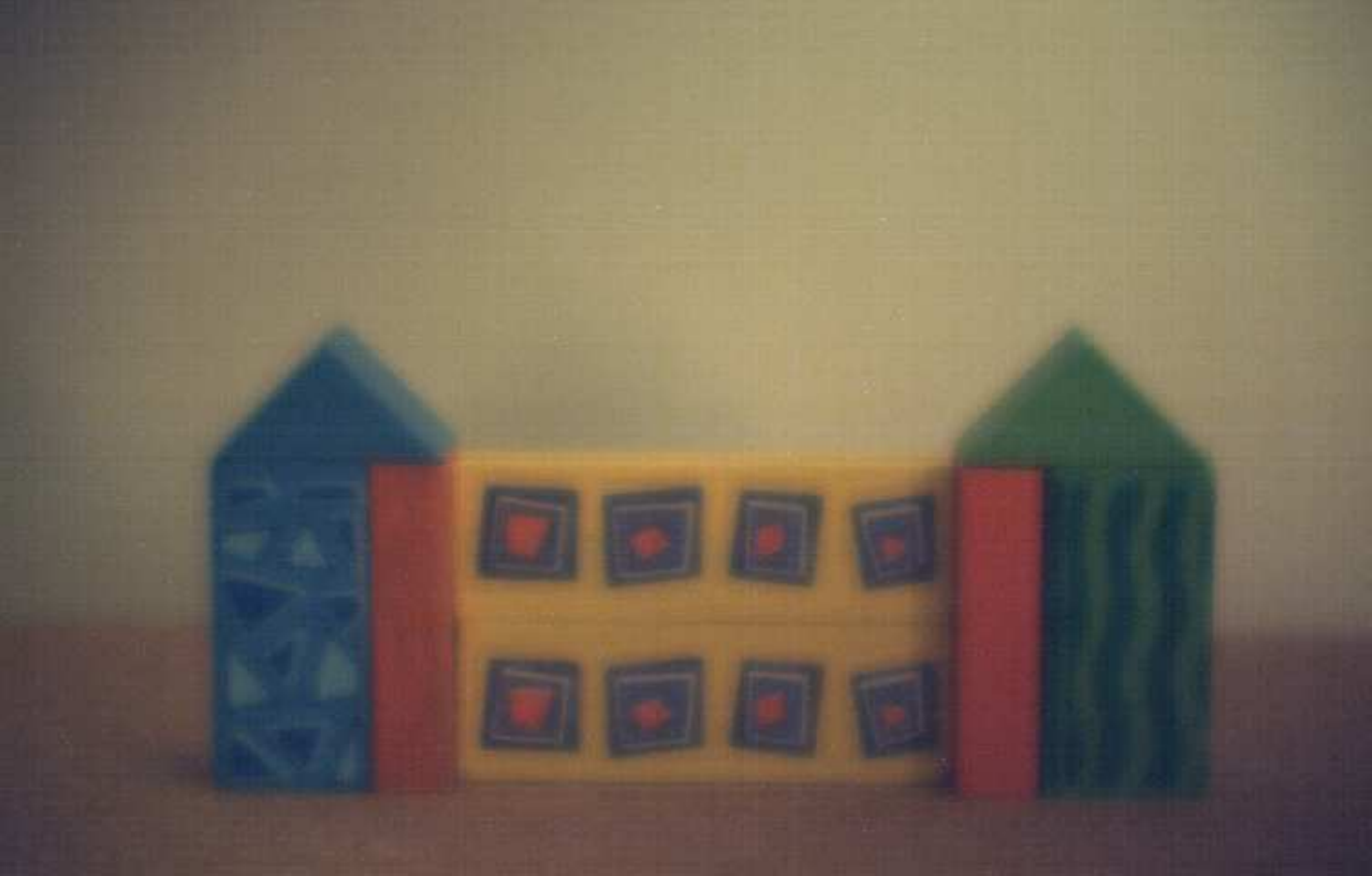}}
\subfigure[]{\includegraphics[trim = 0pt 100pt 0pt 0pt, width=0.30\linewidth]{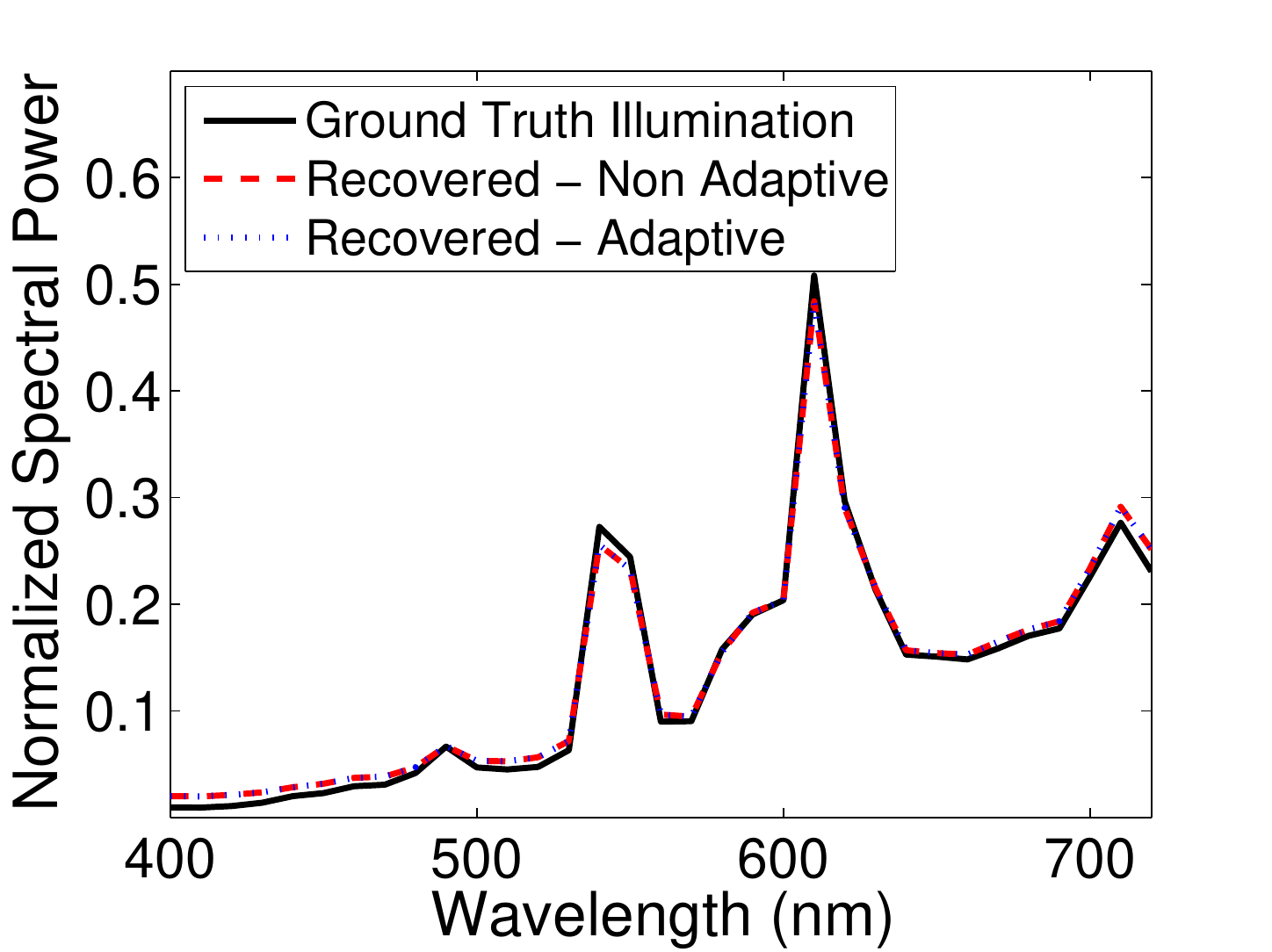}}\\
\subfigure[Orig~$44.5^{\circ}$]{\includegraphics[width=0.168\linewidth]{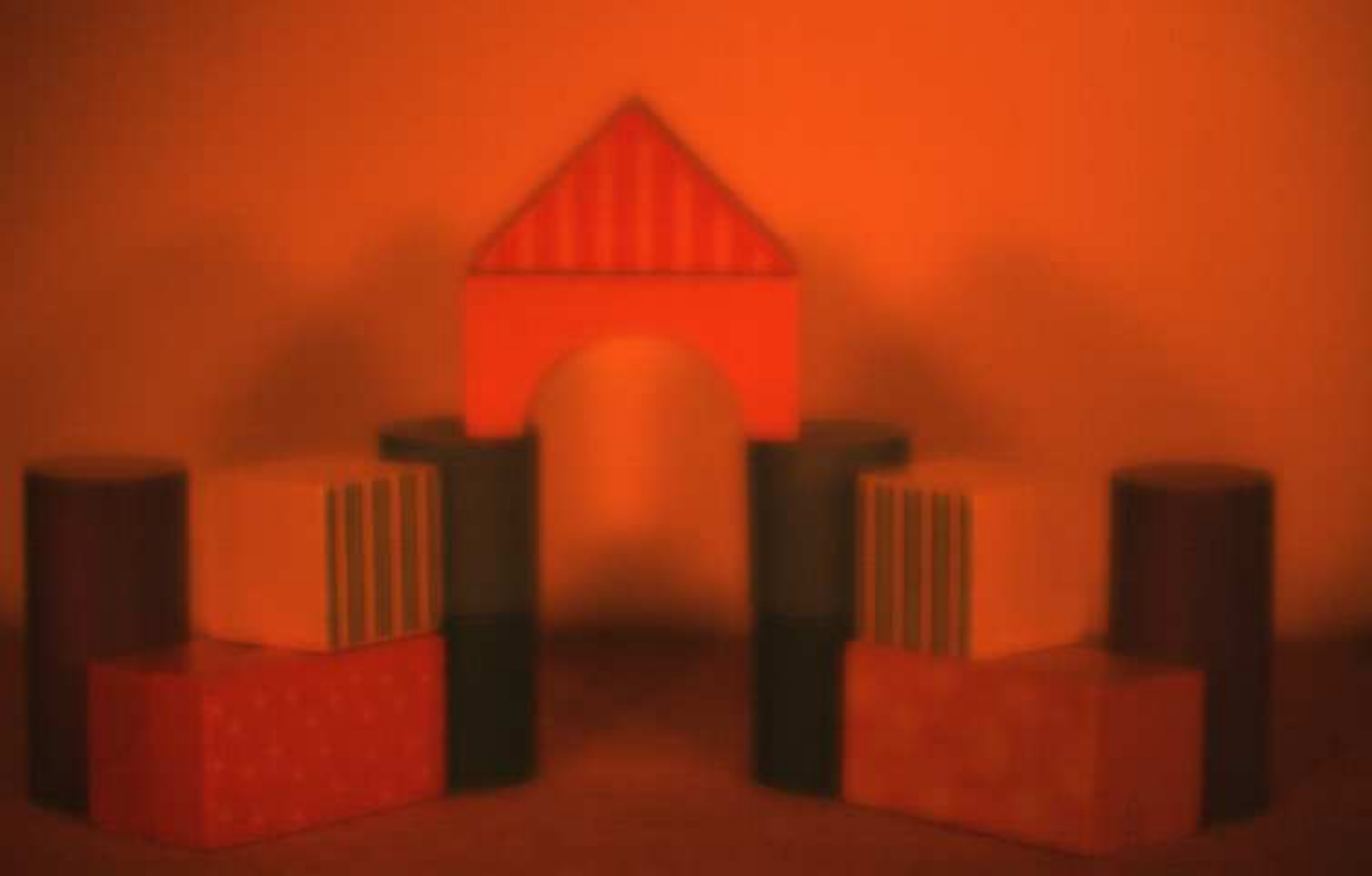}}
\subfigure[Ideal~$0^{\circ}$]{\includegraphics[width=0.168\linewidth]{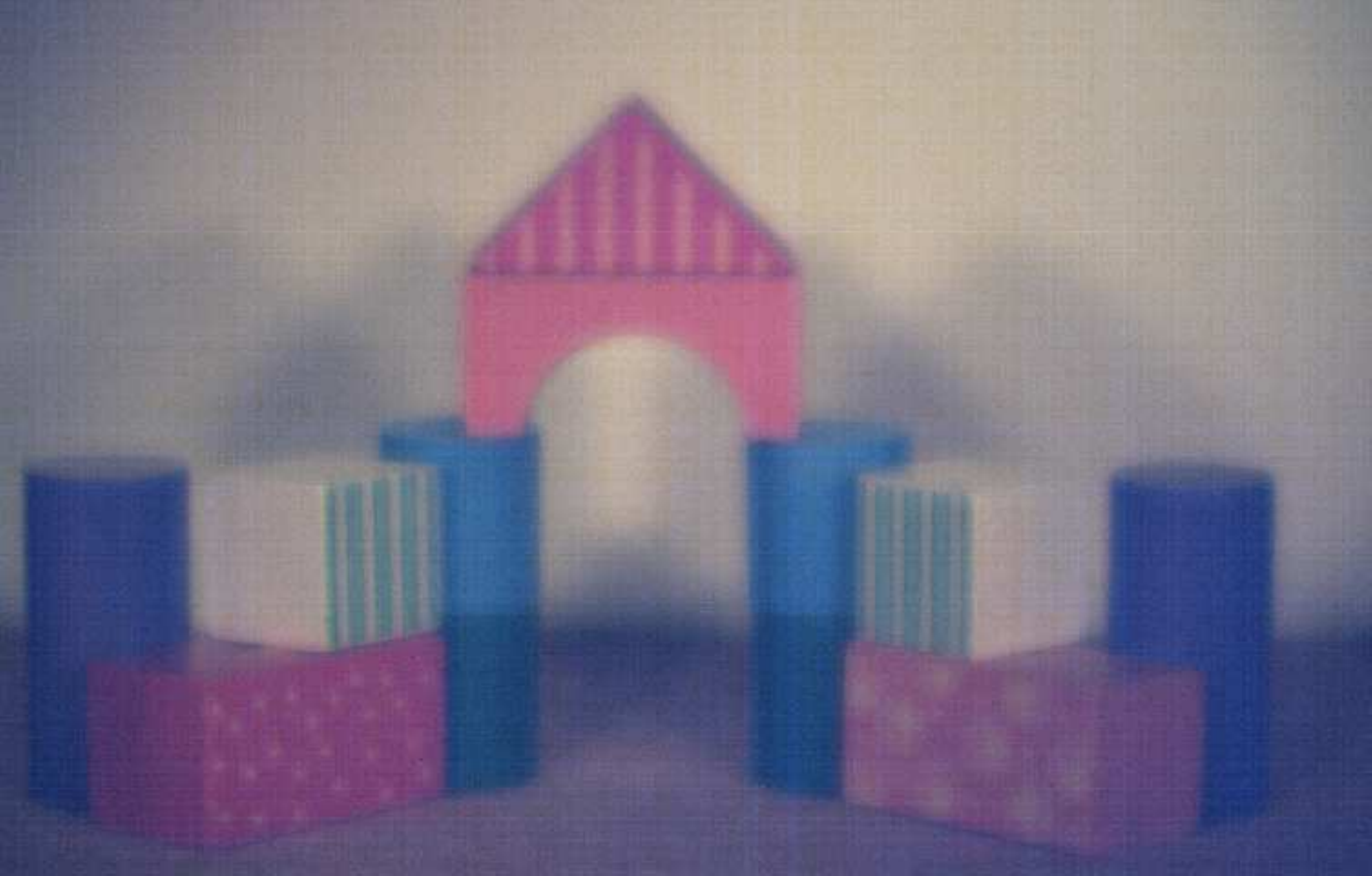}}
\subfigure[SoG~$2.3^{\circ}$]{\includegraphics[width=0.168\linewidth]{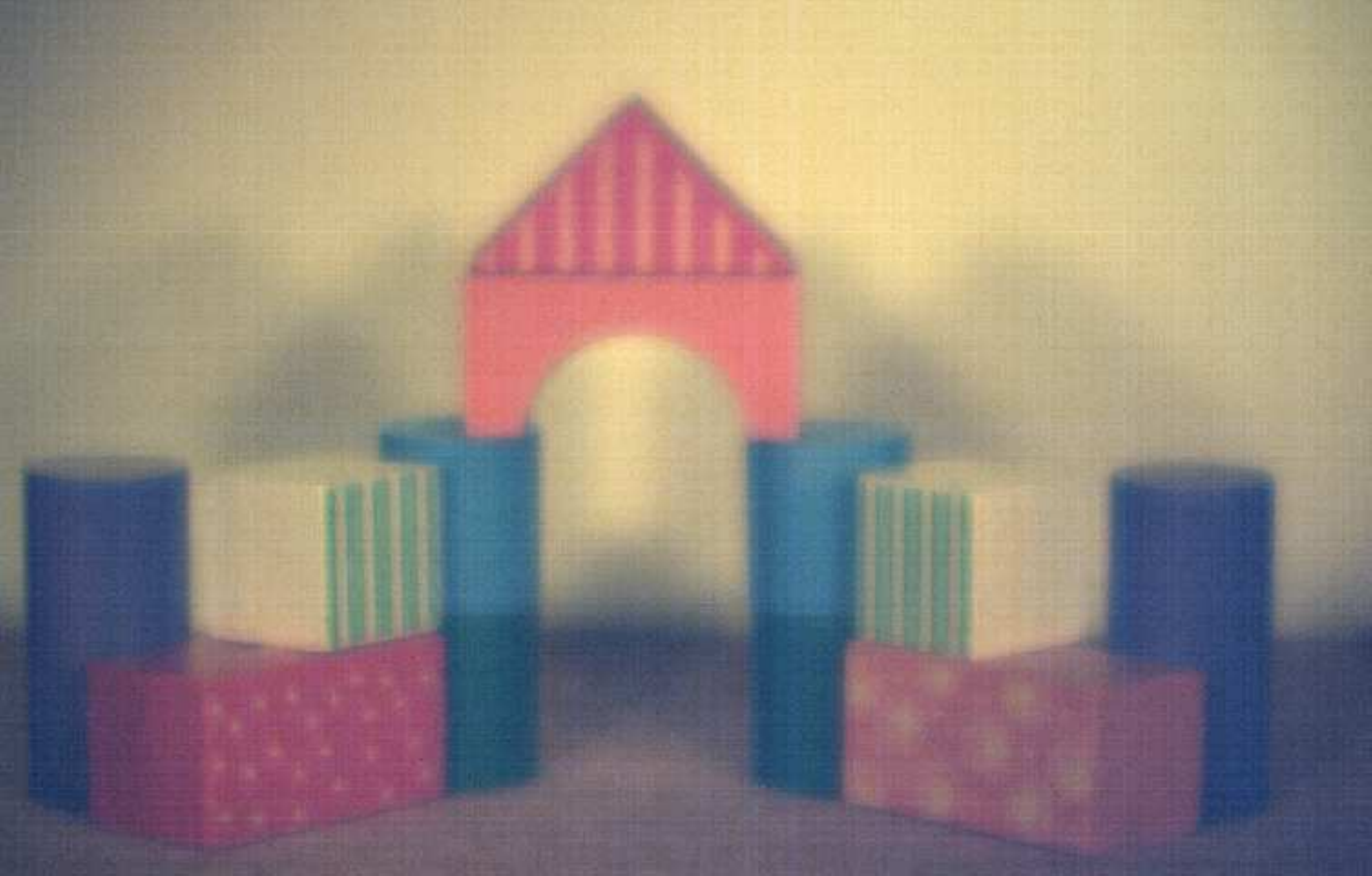}}
\subfigure[SoG-a~$1.9^{\circ}$]{\includegraphics[width=0.168\linewidth]{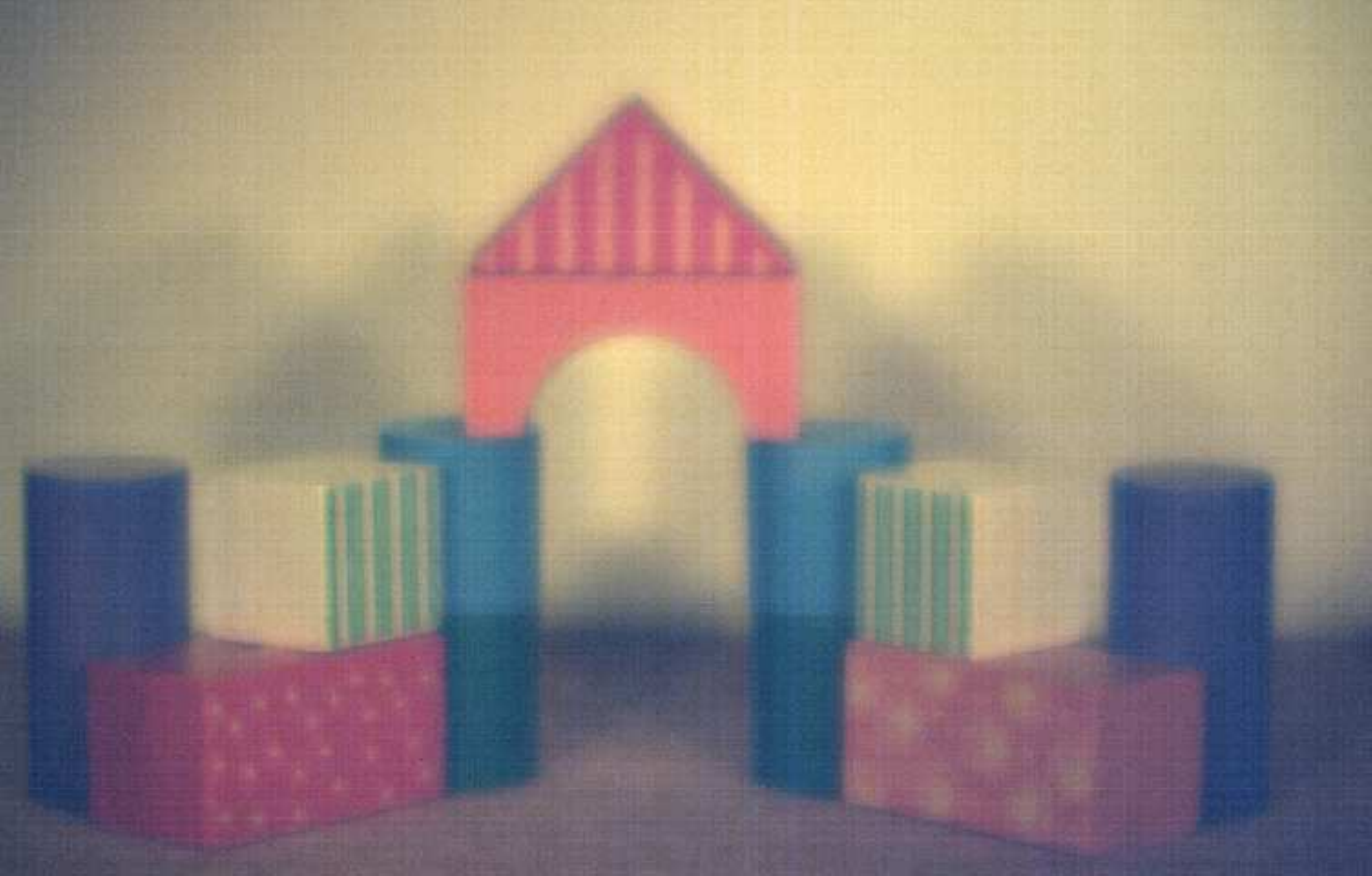}}
\subfigure[]{\includegraphics[trim = 0pt 100pt 0pt 0pt, width=0.30\linewidth]{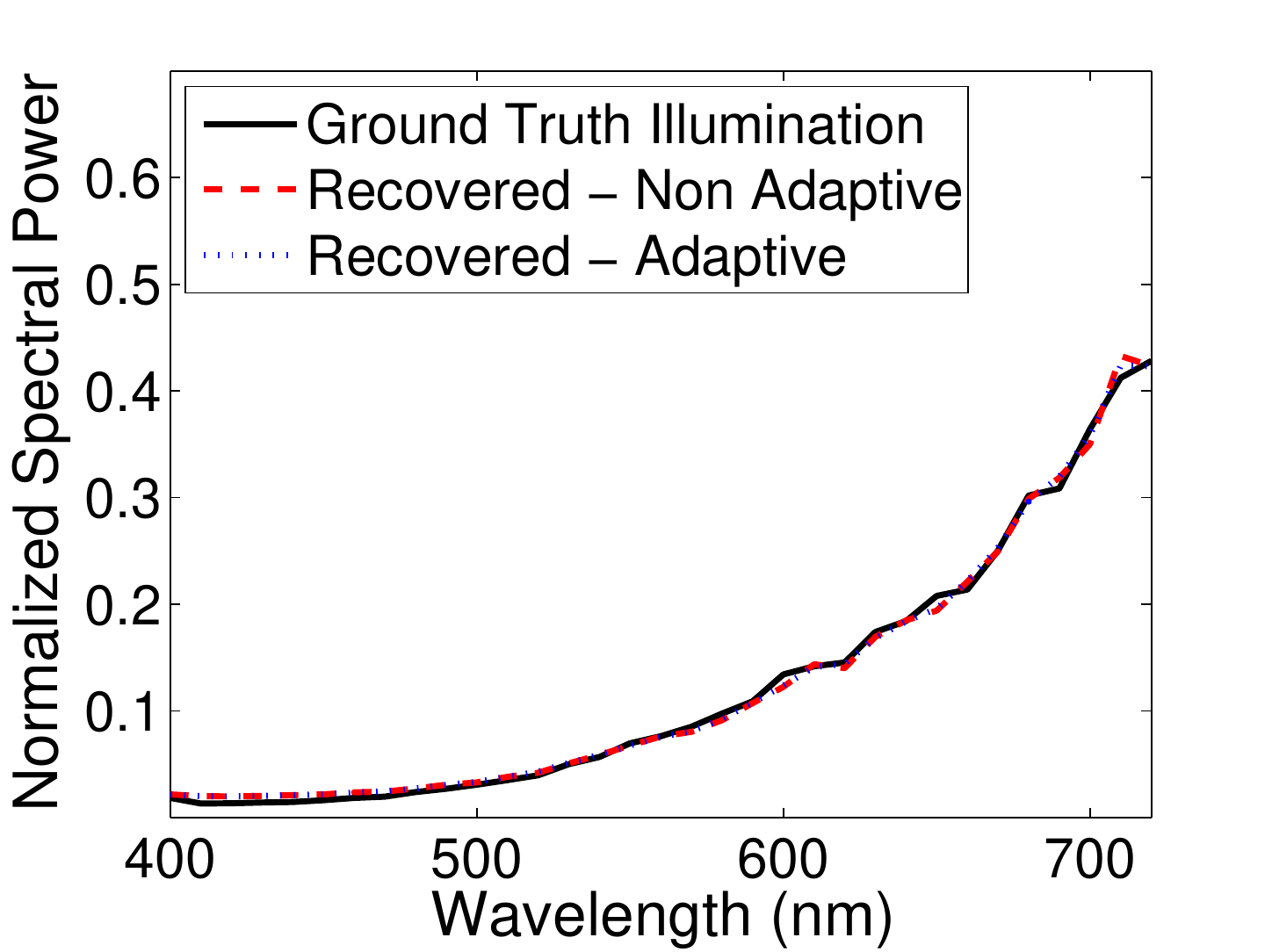}}
\caption[Qualitative comparison of spatio-spectral support]{(left to right) Original image with illumination bias, ideal recovery based on ground truth, recovery with non adaptive and adaptive (-a) spatio-spectral support, and plot of ground truth and recovered illumination SPDs}
\label{fig:qual-adapt-fixed}
\vspace{-10pt}
\end{figure}

\subsection{Color Constancy in Fixed and Variable Exposure Imaging}
\label{sec:cons-auto}

Finally, we evaluate how the two imaging techniques perform in recovering illumination spectra. For this purpose, color constancy experiments were performed on both fixed and variable exposure images, separately. The Relative MAE improvement between the two imaging methods is shown in Figure~\ref{fig:rmae-exposure} which is computed in this experiment as
\begin{equation}
\Delta\bar{\epsilon}_{_{\textrm{rel}}}~(\%)= \frac{\bar{\epsilon}_{_{\textrm{fix}}}-\bar{\epsilon}_{_{\textrm{var}}}}{\bar{\epsilon}_{_{\textrm{fix}}}} \times 100~.
\end{equation}
where $\bar{\epsilon}_{_{\textrm{fix}}}$ and $\bar{\epsilon}_{_{\textrm{var}}}$ are the mean angular errors on images captured by fixed and variable exposure methods respectively. It can be observed that there is always an improvement after using automatic exposure adjustment technique. Moreover, the improvement is highly appreciable in the gray edge based color constancy algorithms. One main reason which can be attributed to this improvement is that the edge based color constancy methods are sensitive to noise, which is much higher in fixed exposure images compared to variable exposure.

\begin{figure}[h]
\centering
\includegraphics[trim = 90pt 15pt 110pt 2pt, clip, width=0.34\linewidth]{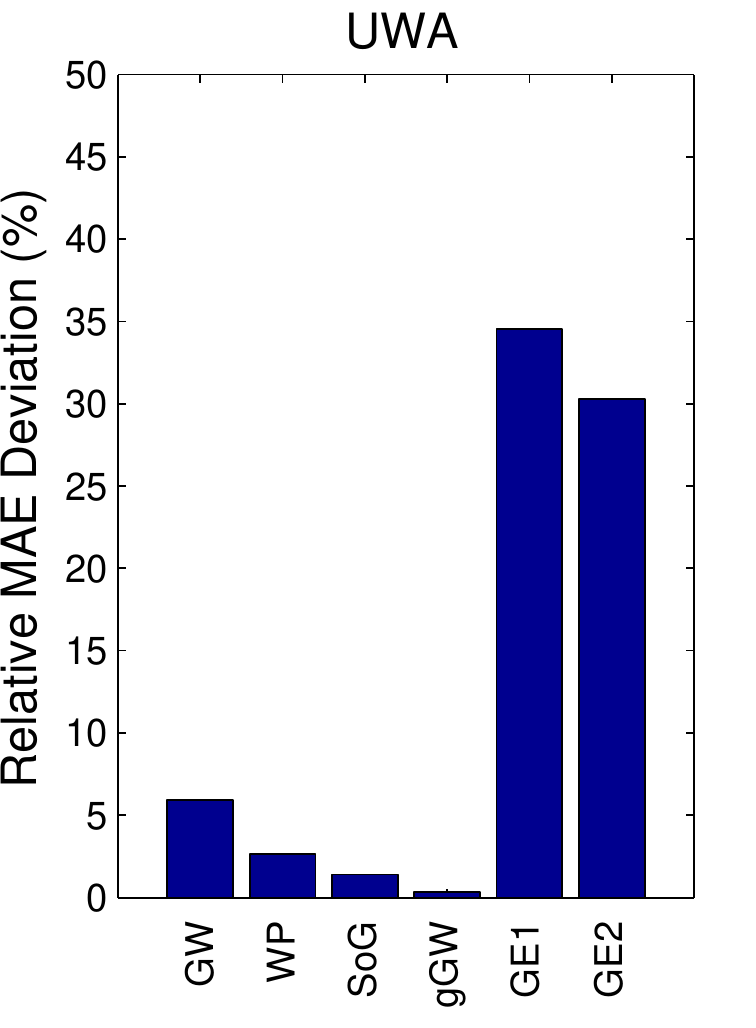}
\caption[Relative MAE improvement between fixed and variable exposure]{Relative MAE improvement between fixed and variable exposure on real data.}
\label{fig:rmae-exposure}
\end{figure}

The illumination estimated from an image $\hat{L}$ is normalized by the ground truth illumination $L$. Ideally, one should recover a uniform illumination, if the estimated illumination is exactly the same as the ground truth illumination. However, in reality there is always a difference between the two which can be measured in terms of the angular error against a uniform illuminant
\begin{equation}
\mathbf{\epsilon}= \arccos \left( \frac{L_{\textrm{uni}} \cdot \hat{L}_{\textrm{norm}}}{\|L_{\textrm{uni}}\|\|\hat{L}_{\textrm{norm}}\|} \right),~\hat{L}_{\textrm{norm}}=\frac{L}{\|L\|} \oslash \frac{\hat{L}}{\|\hat{L}\|}~,
\end{equation}
where $\oslash$ is the element by element division operator. The above error metric measures the deviation of color constancy normalized images from a uniform (flat) illumination.

\begin{figure}[!h]
\renewcommand{\thesubfigure}{\relax}
\subfigure[Orig~$44.6^{\circ}$]{\includegraphics[width=0.168\linewidth]{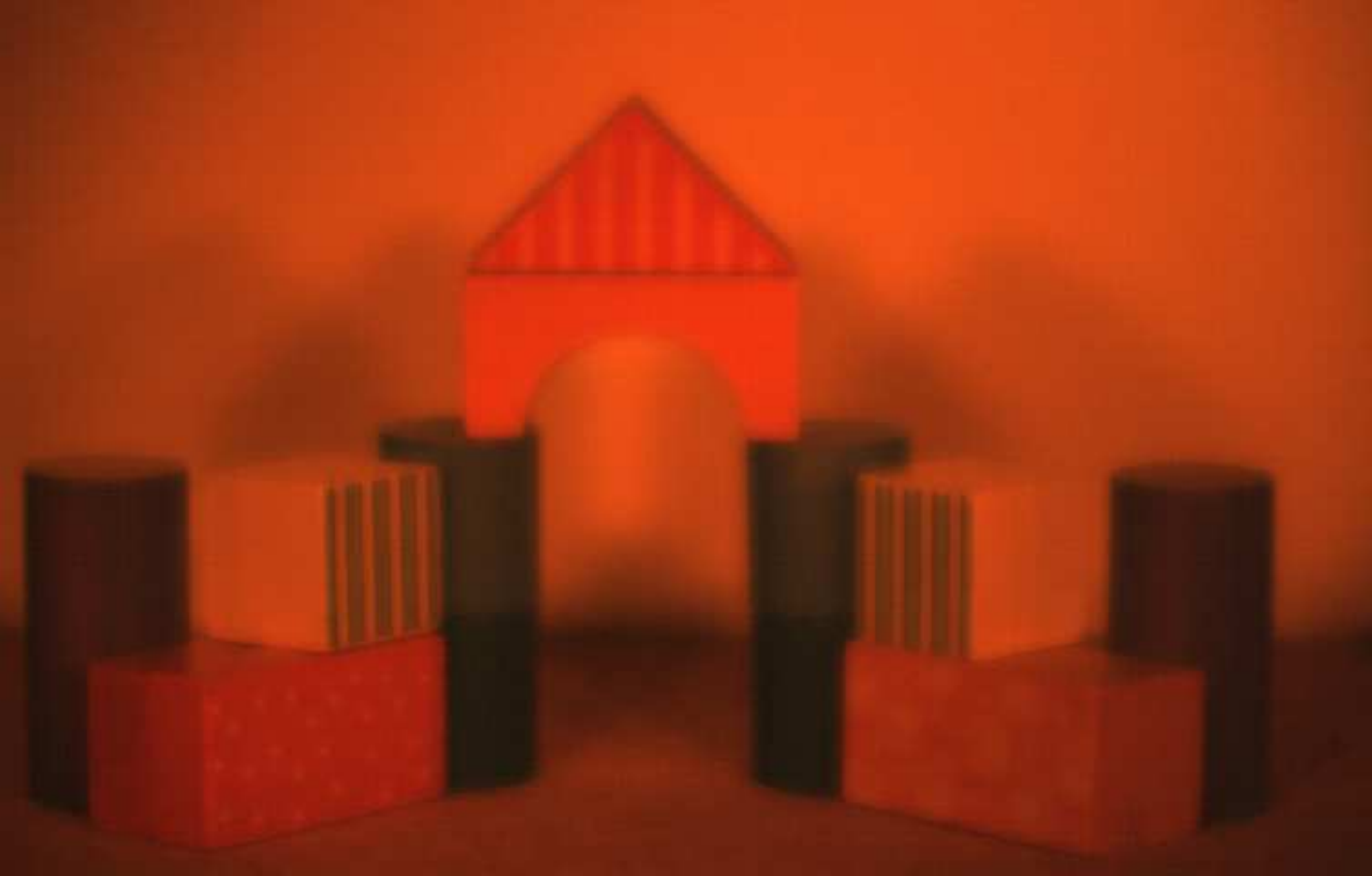}}
\subfigure[Ideal~$0^{\circ}$]{\includegraphics[width=0.168\linewidth]{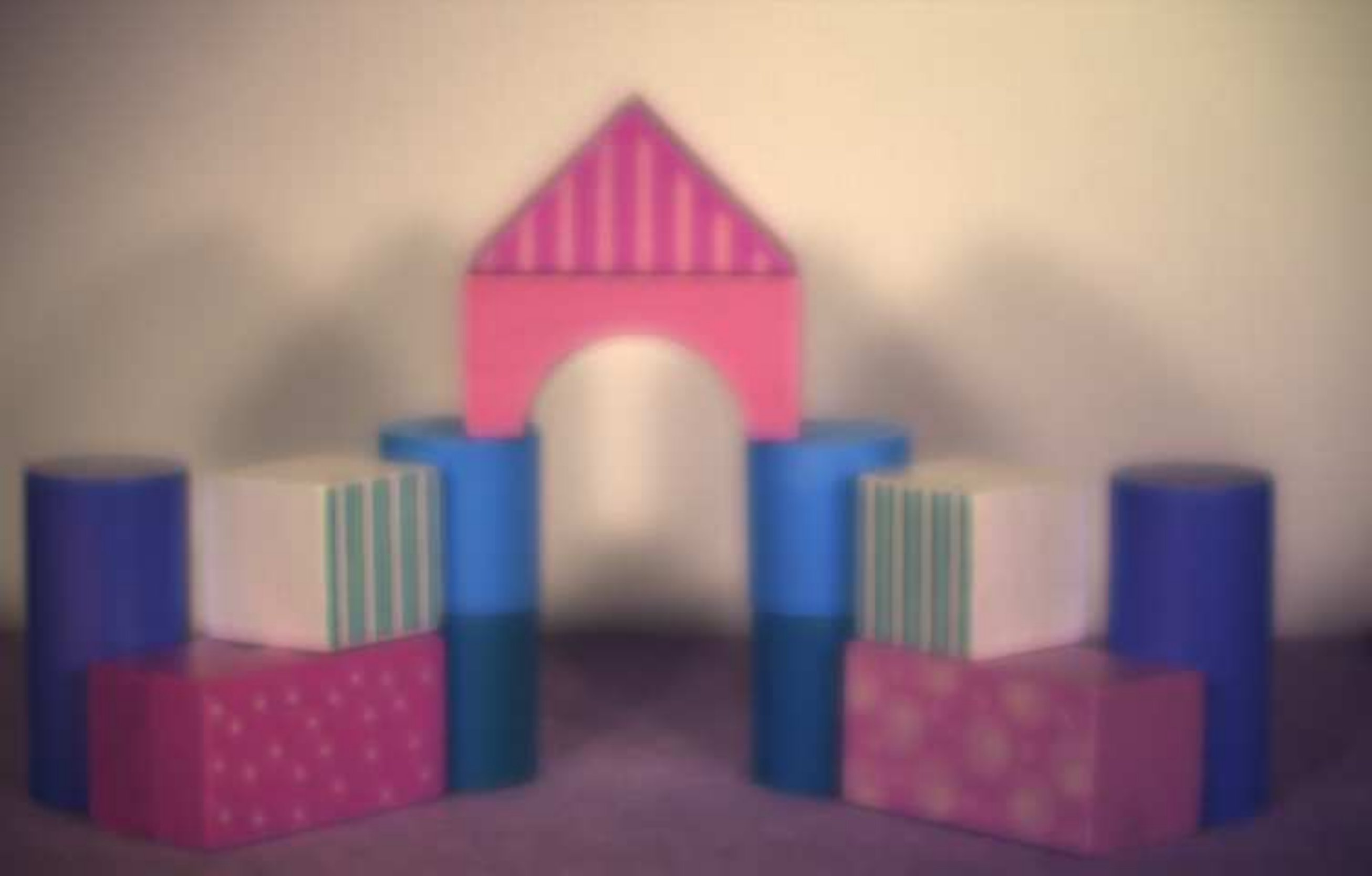}}
\subfigure[GE1-f~$18.7^{\circ}$]{\includegraphics[width=0.168\linewidth]{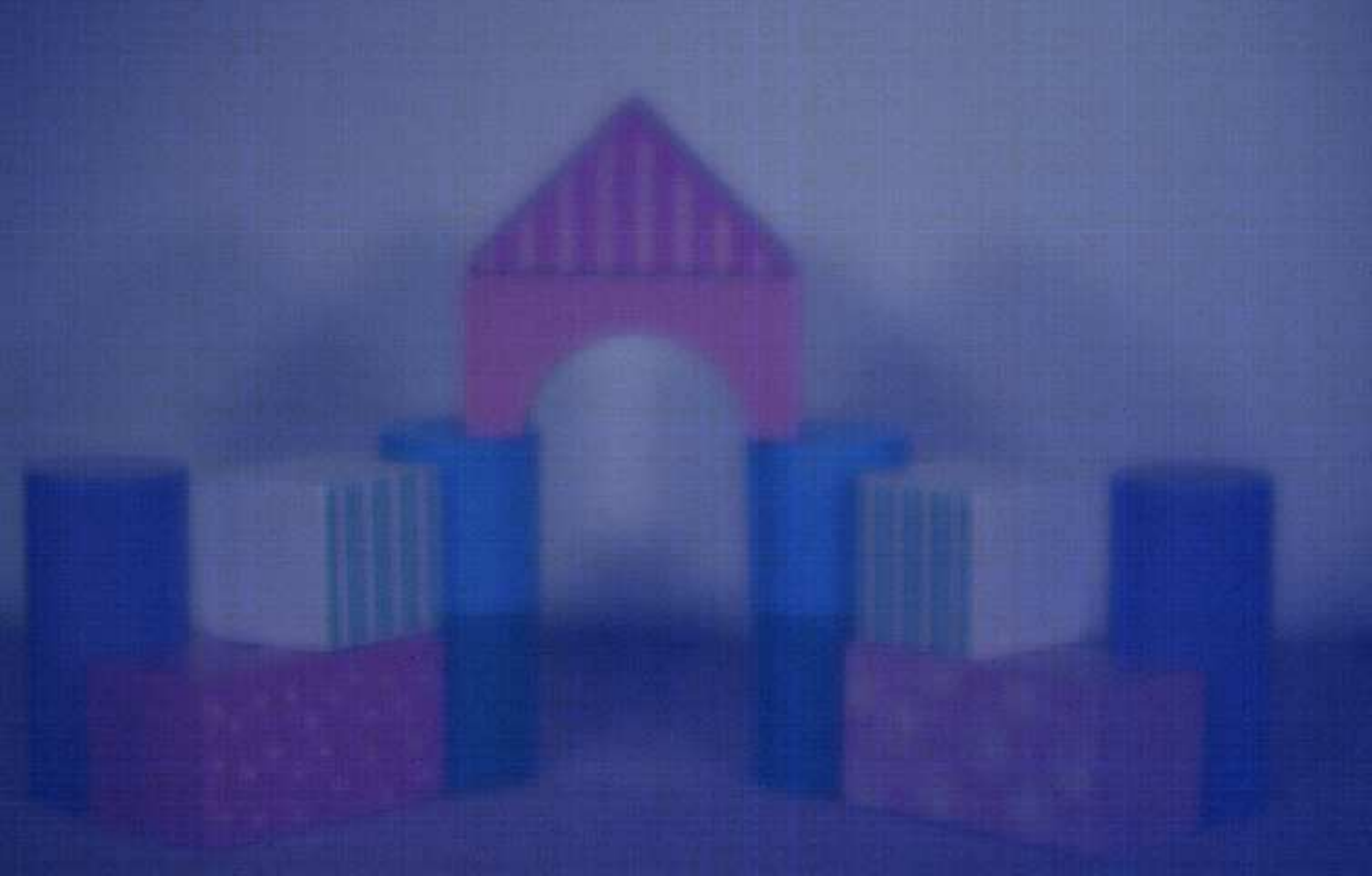}}
\subfigure[GE1-v~$6.9^{\circ}$]{\includegraphics[width=0.168\linewidth]{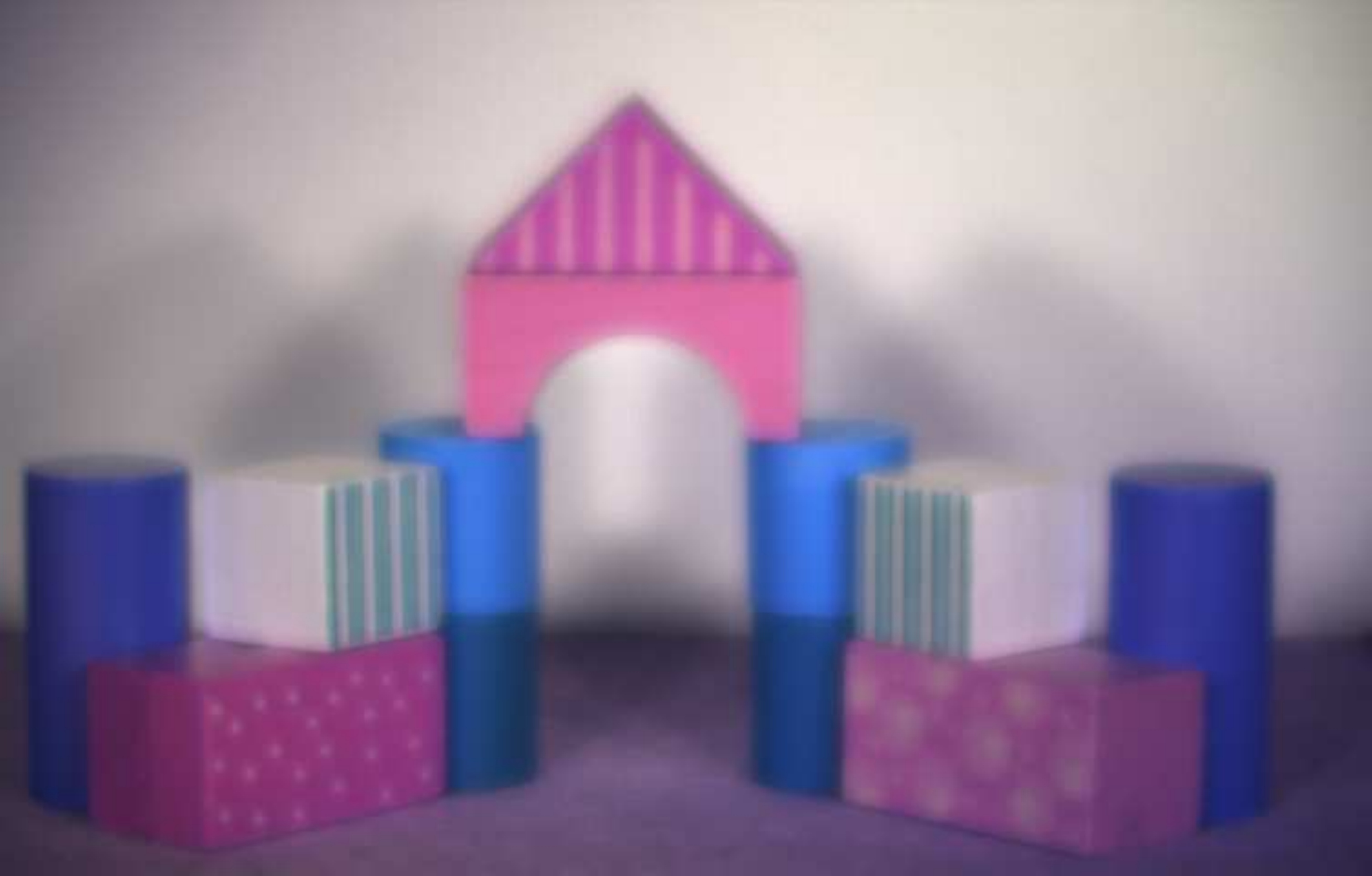}}
\subfigure[]{\includegraphics[trim = 0pt 100pt 0pt 0pt, width=0.30\linewidth]{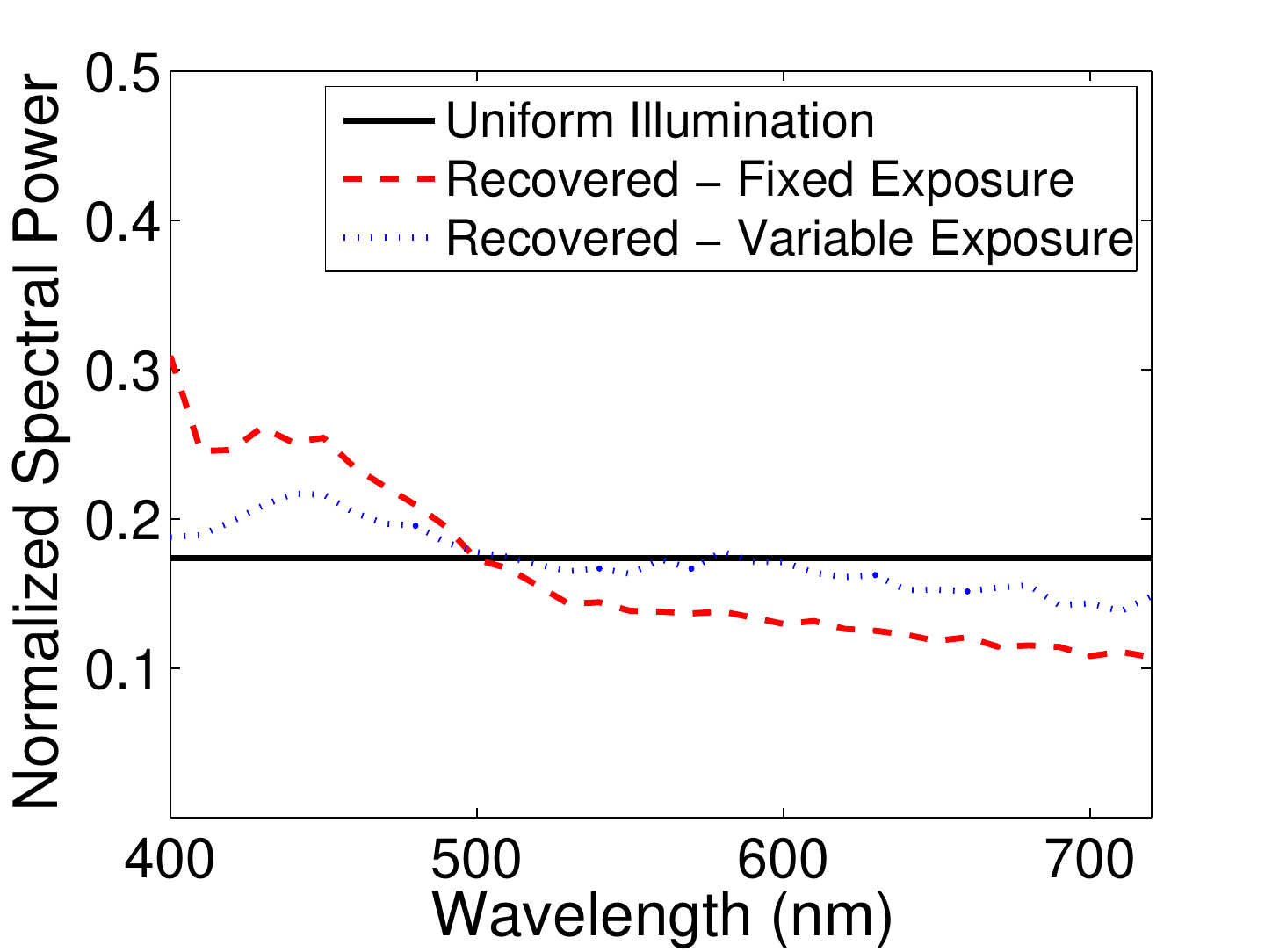}}\\ [4pt]
\subfigure[Orig~$44.5^{\circ}$]{\includegraphics[width=0.168\linewidth]{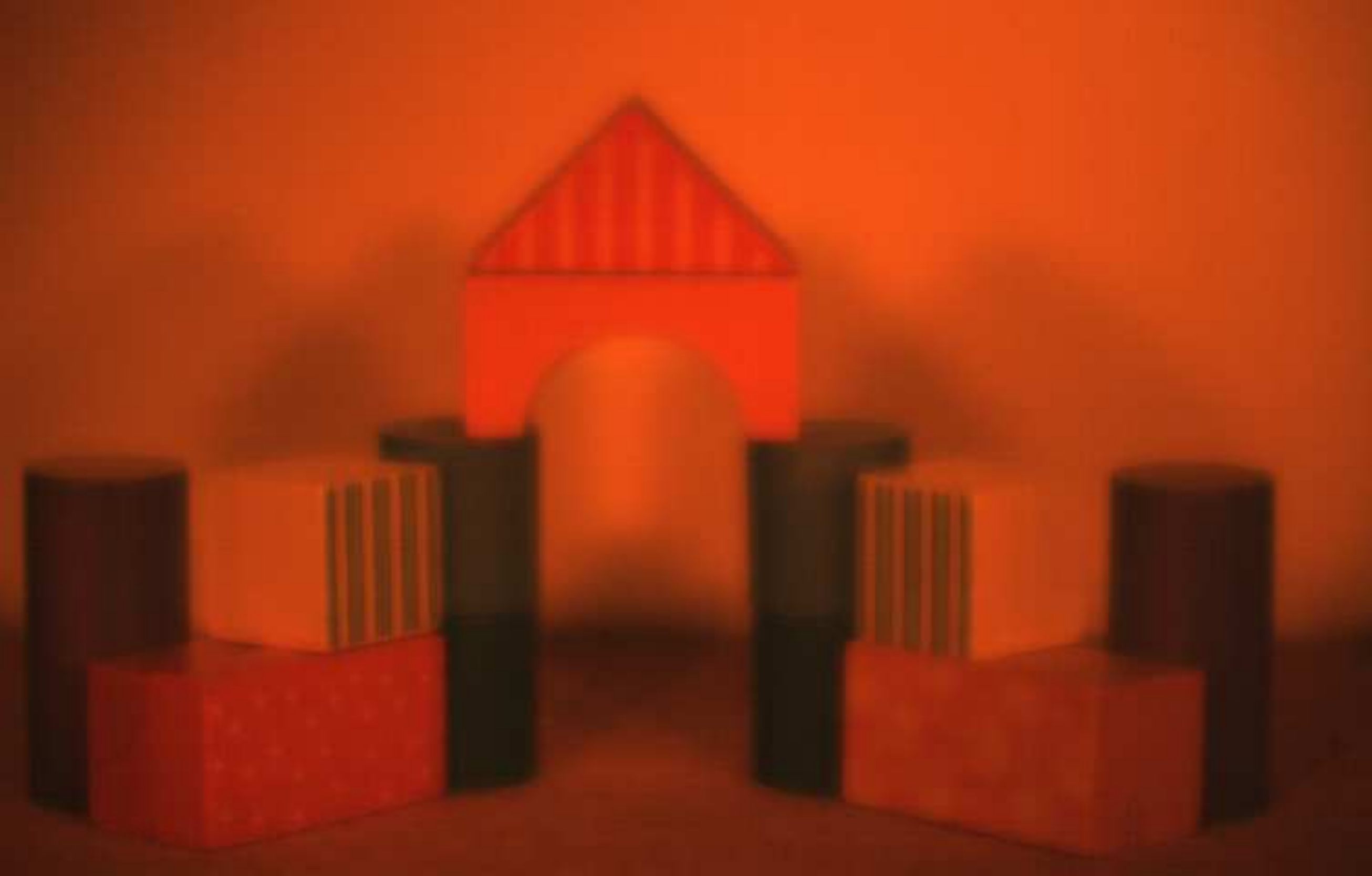}}
\subfigure[Ideal~$0^{\circ}$]{\includegraphics[width=0.168\linewidth]{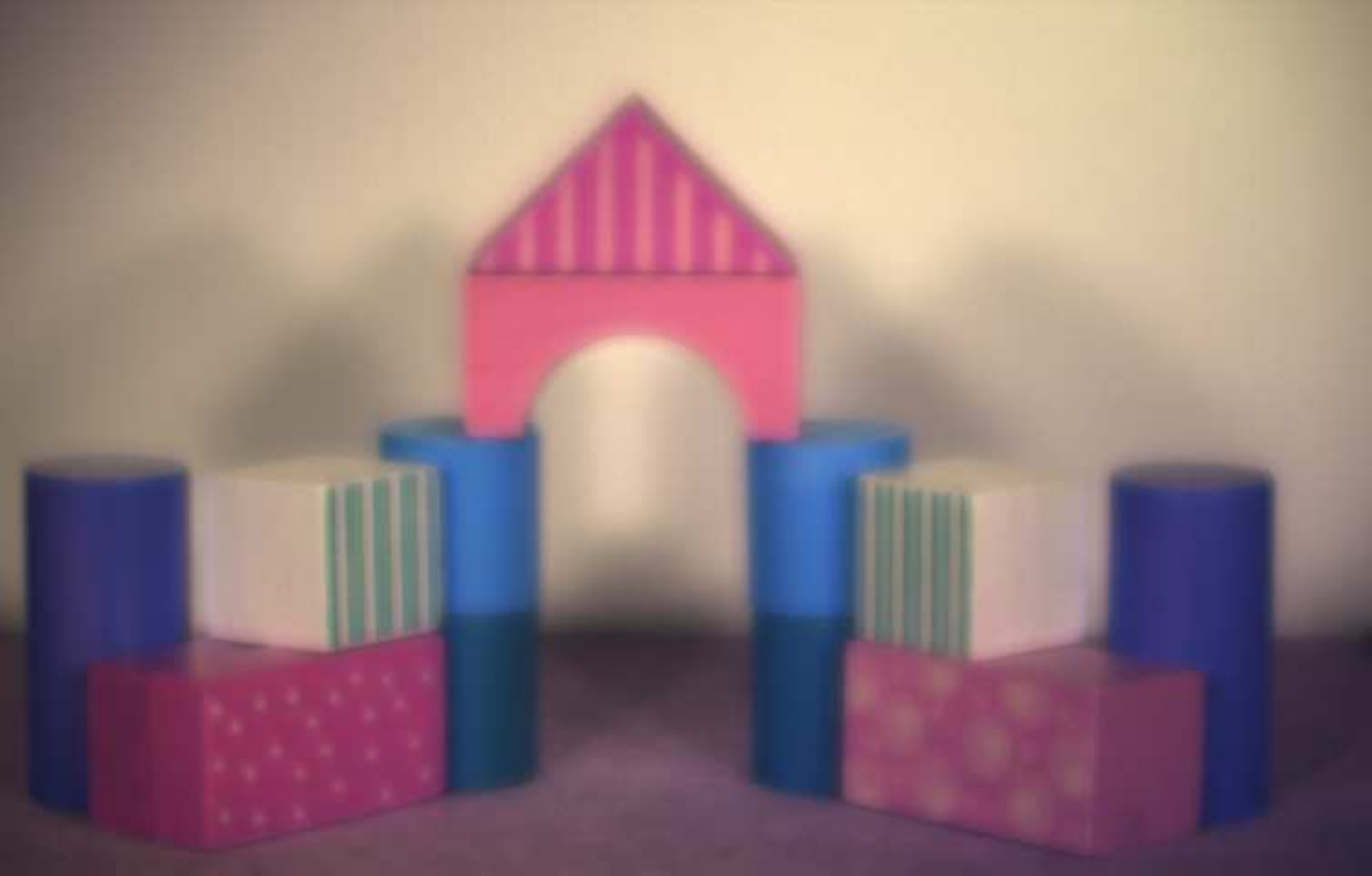}}
\subfigure[GW-f~$10.0^{\circ}$]{\includegraphics[width=0.168\linewidth]{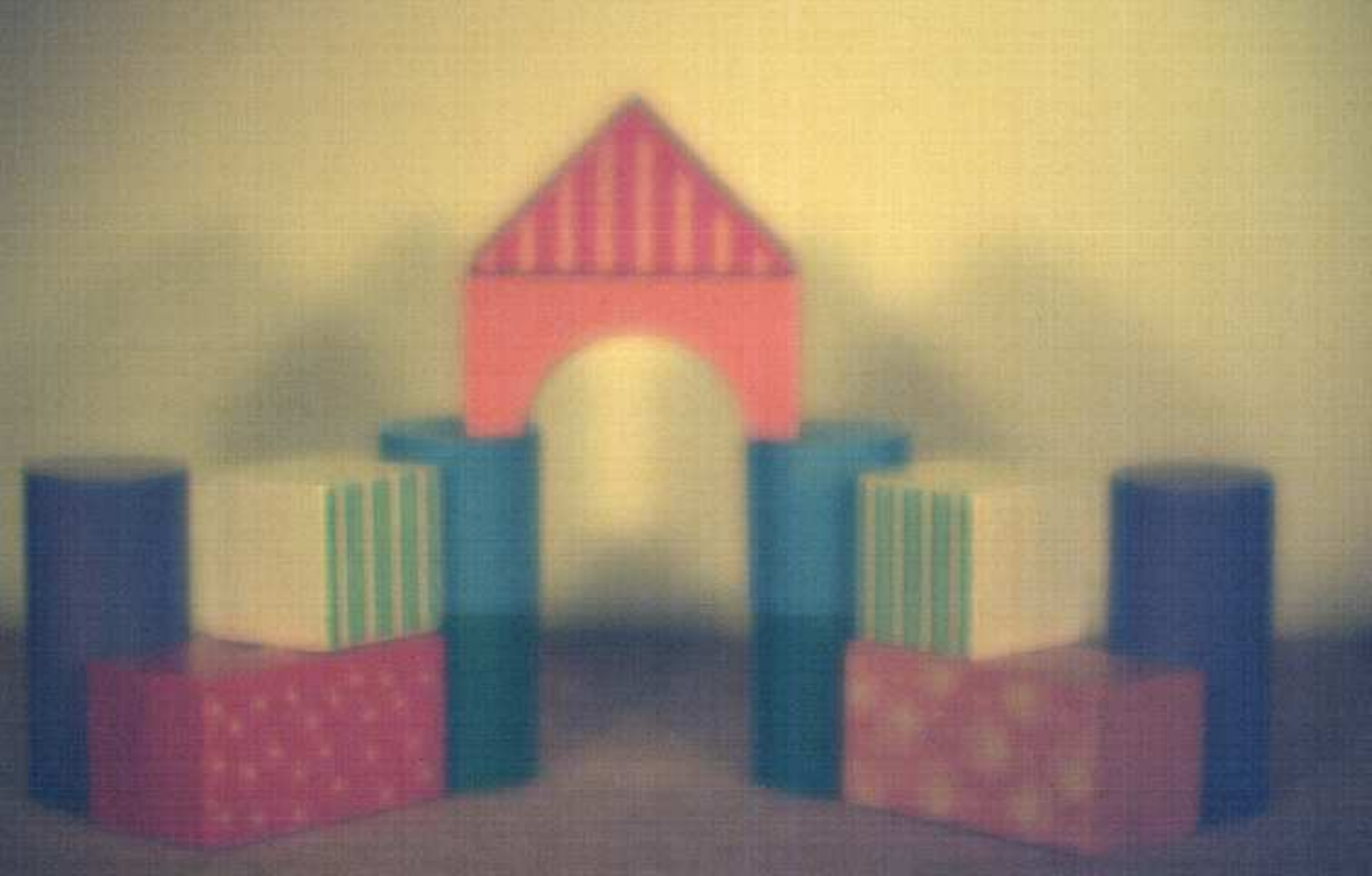}}
\subfigure[GW-v~$6.7^{\circ}$]{\includegraphics[width=0.168\linewidth]{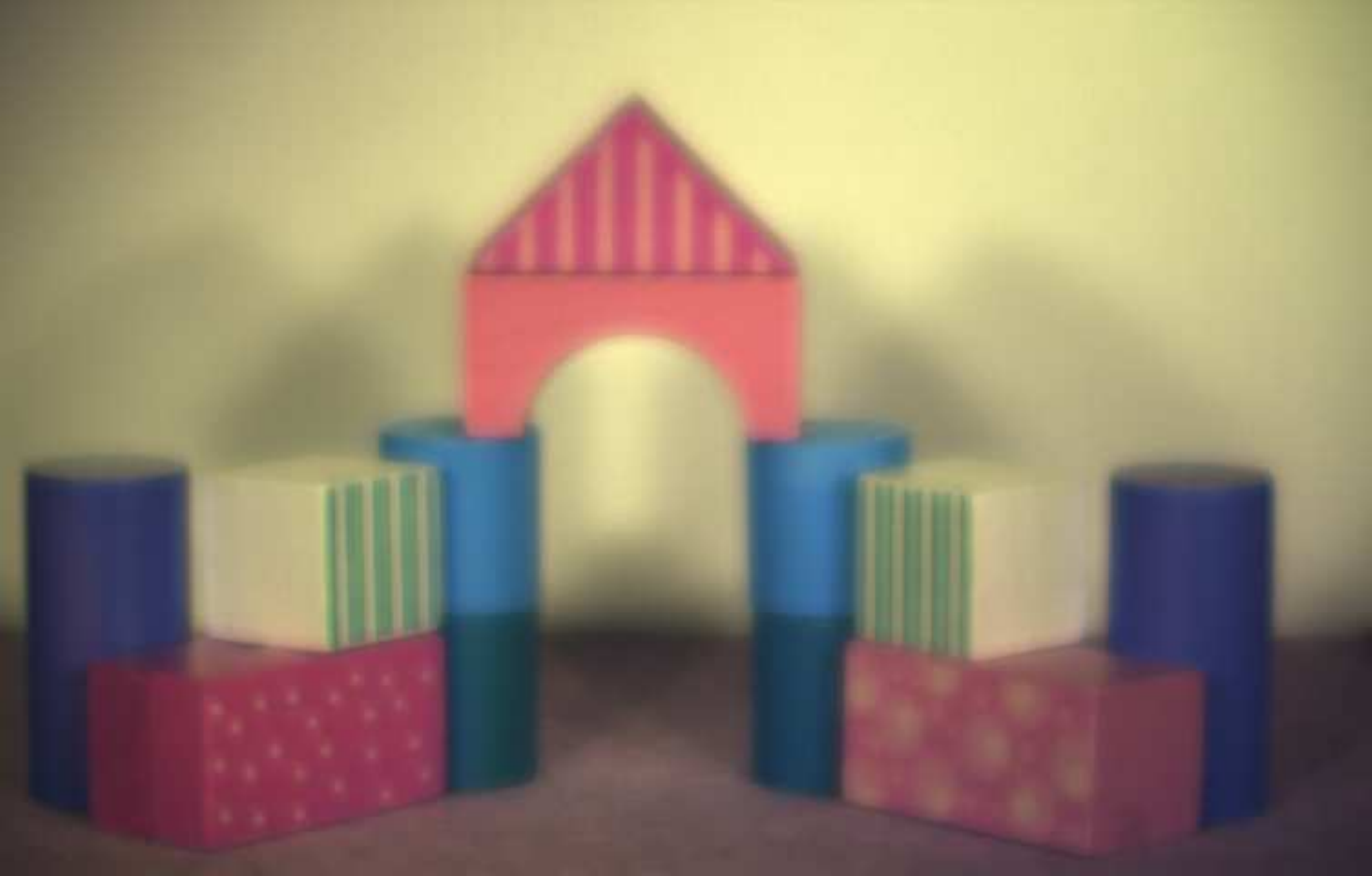}}
\subfigure[]{\includegraphics[trim = 0pt 100pt 0pt 0pt, width=0.30\linewidth]{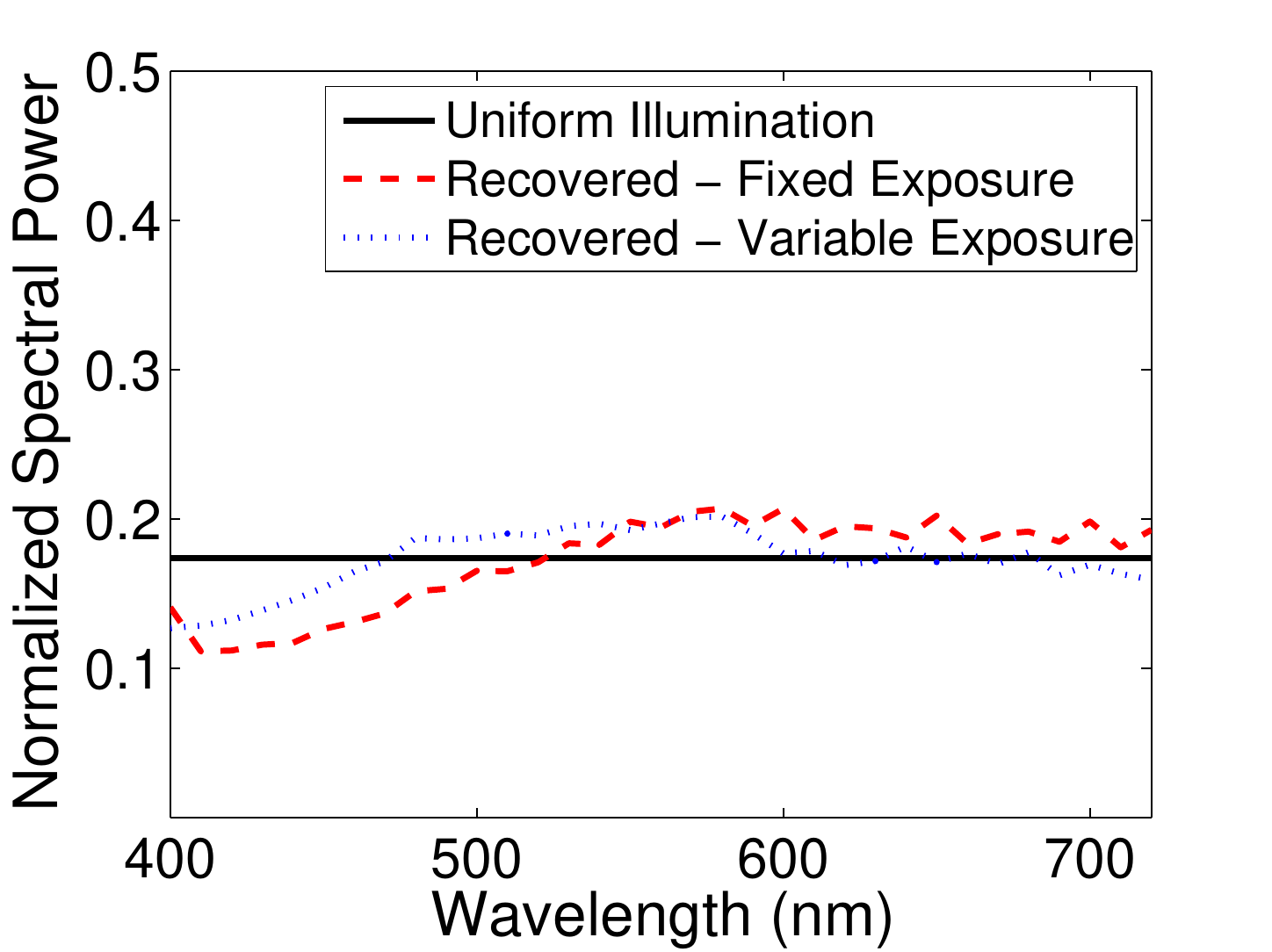}}\\ [4pt]
\subfigure[Orig~$44.5^{\circ}$]{\includegraphics[width=0.168\linewidth]{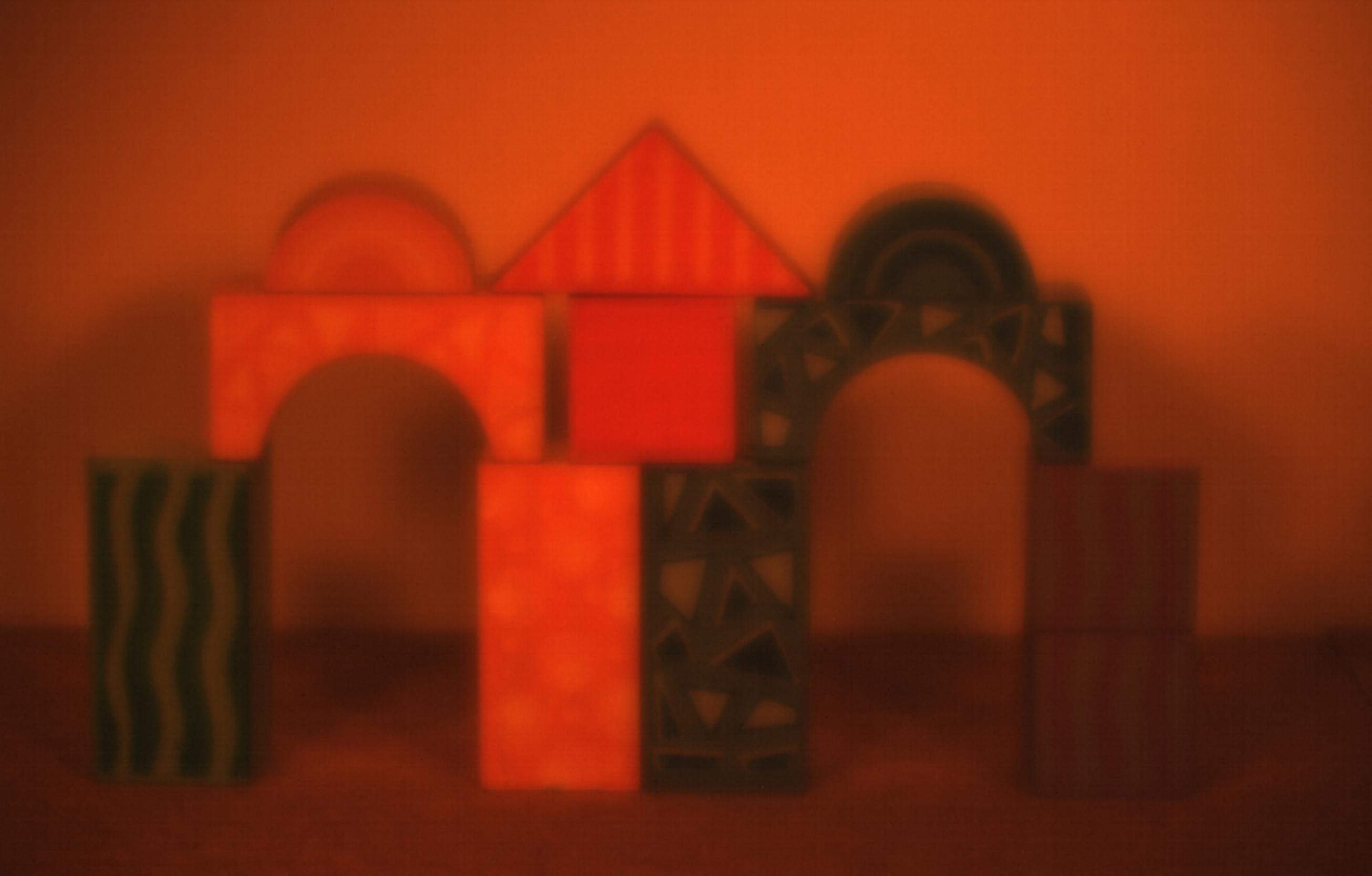}}
\subfigure[Ideal~$0^{\circ}$]{\includegraphics[width=0.168\linewidth]{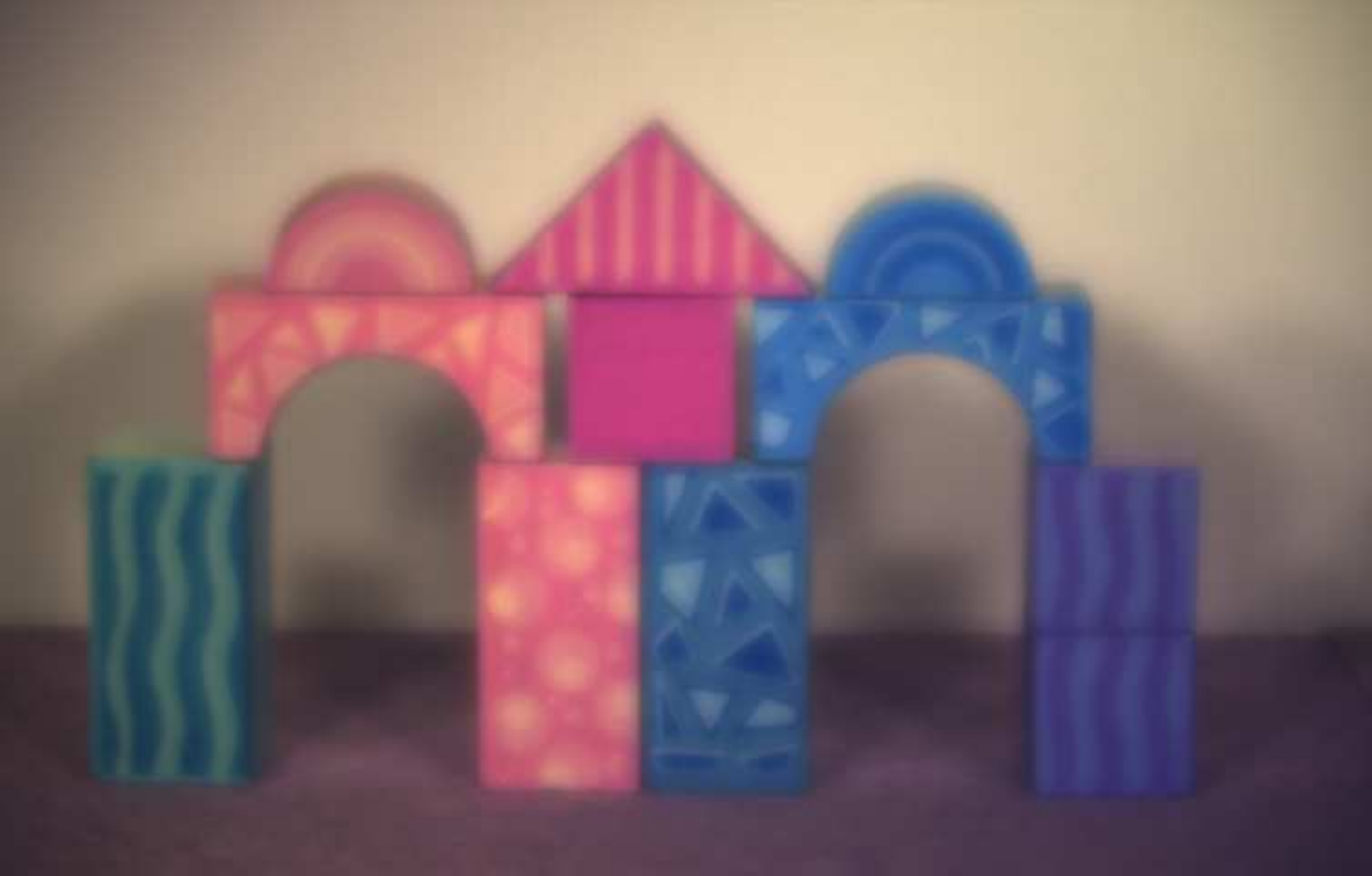}}
\subfigure[GE2-f~$33.5^{\circ}$]{\includegraphics[width=0.168\linewidth]{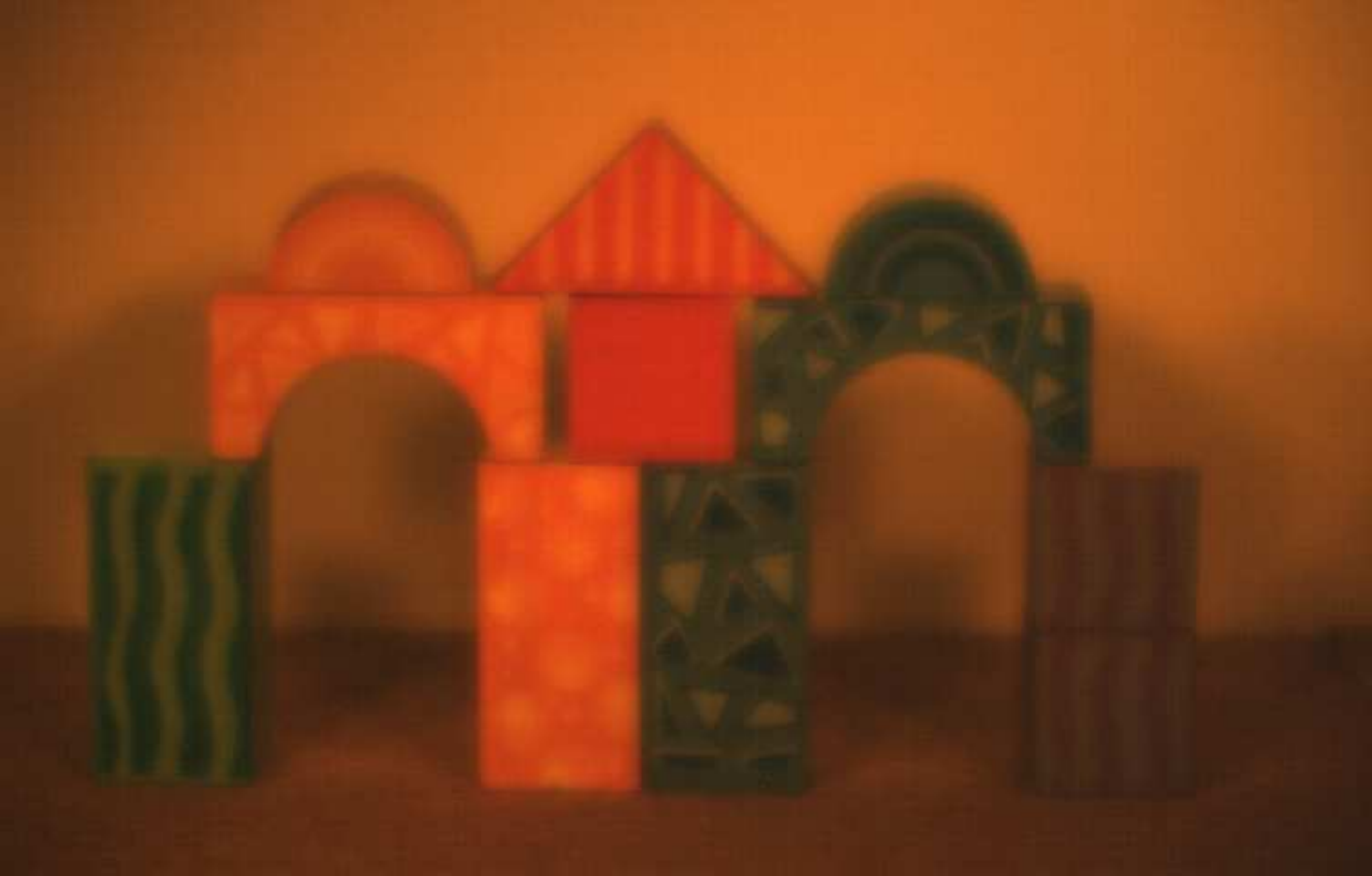}}
\subfigure[GE2-v~$8.4^{\circ}$]{\includegraphics[width=0.168\linewidth]{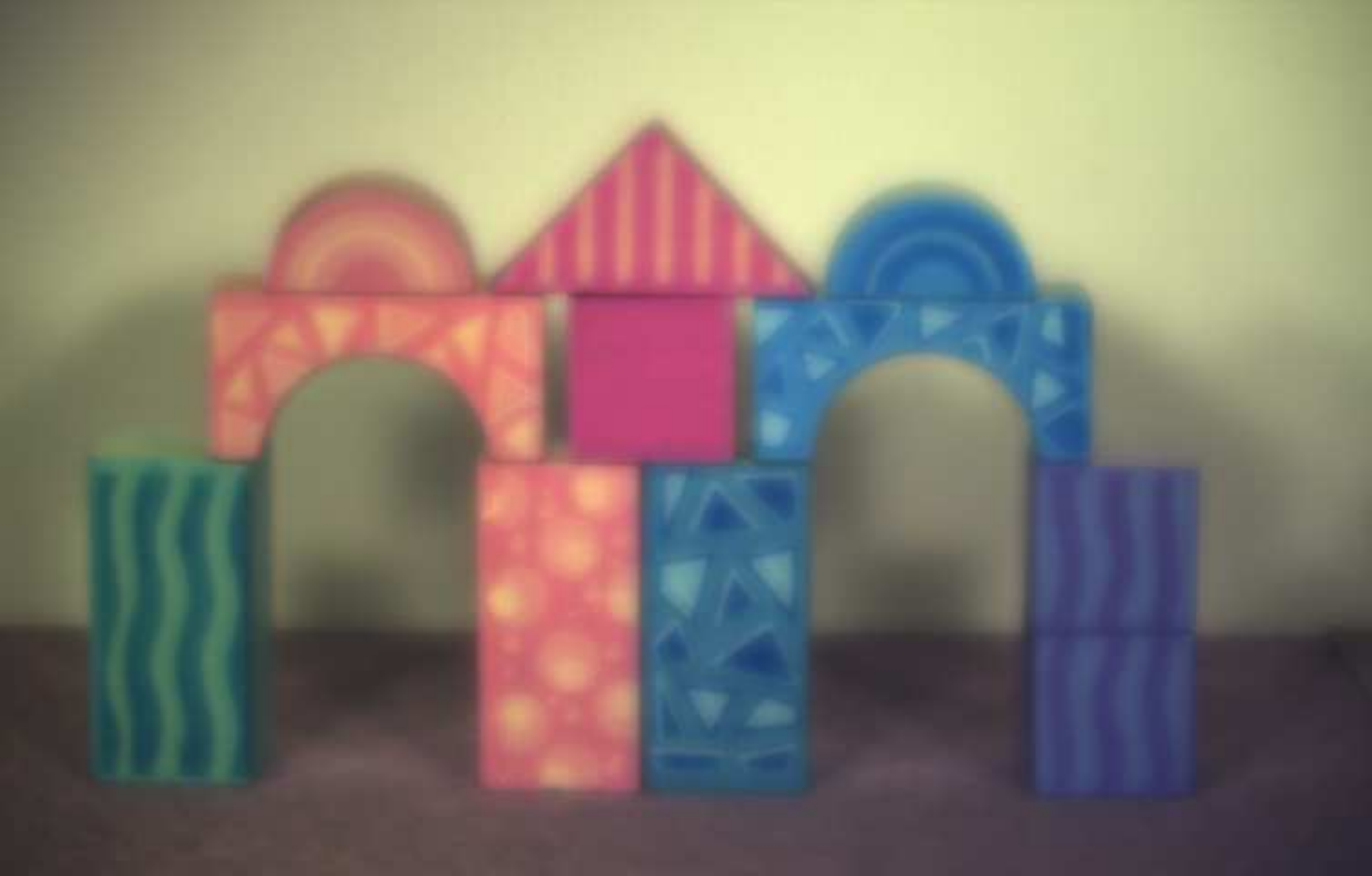}}
\subfigure[]{\includegraphics[trim = 0pt 100pt 0pt 0pt, width=0.30\linewidth]{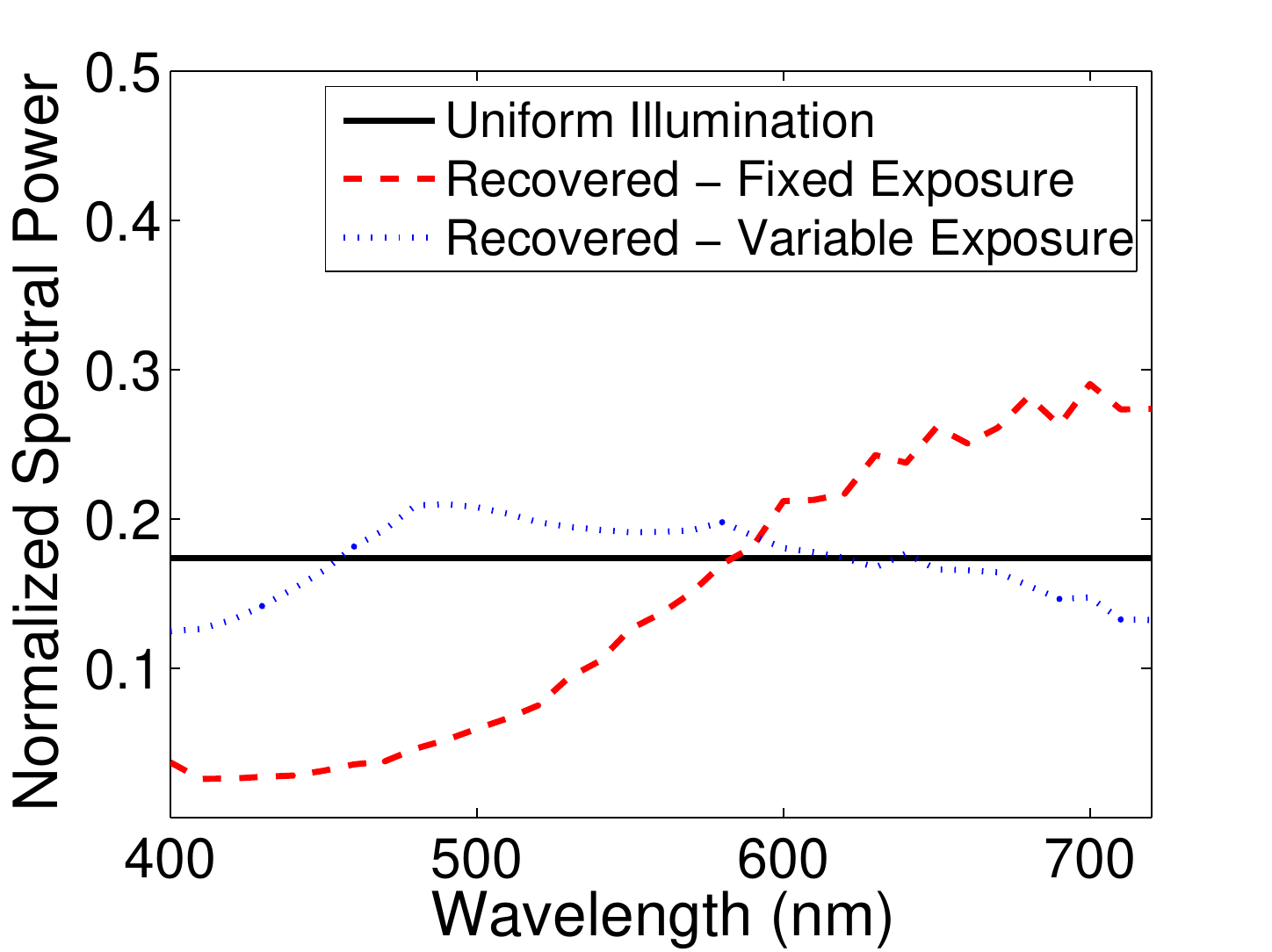}}\\ [4pt]
\subfigure[Orig~$44.5^{\circ}$]{\includegraphics[width=0.168\linewidth]{chapter_3/001_illum_05_orig_const}}
\subfigure[Ideal~$0^{\circ}$]{\includegraphics[width=0.168\linewidth]{chapter_3/001_illum_05_ideal_auto}}
\subfigure[WP-f~$12.8^{\circ}$]{\includegraphics[width=0.168\linewidth]{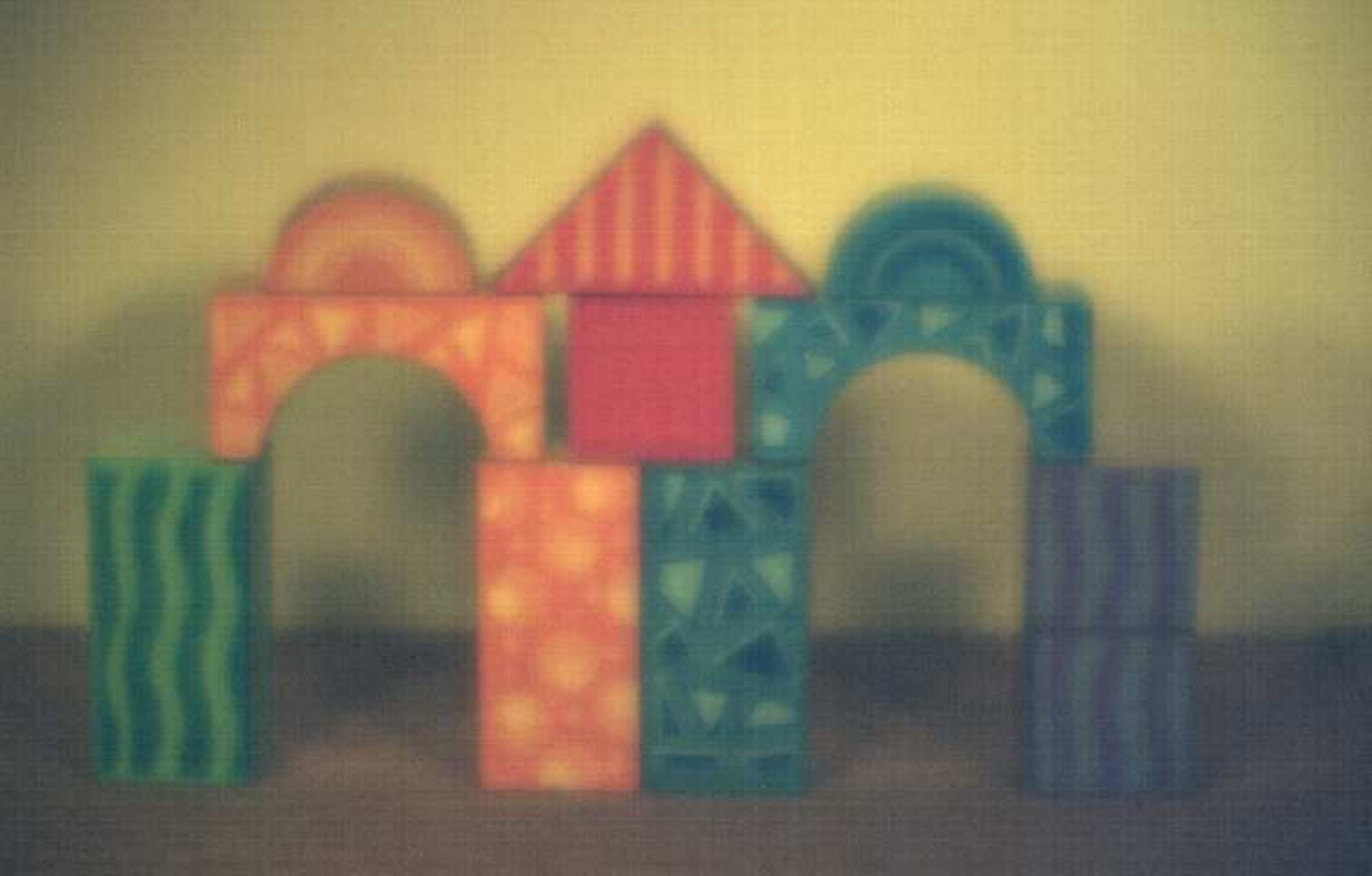}}
\subfigure[WP-v~$4.5^{\circ}$]{\includegraphics[width=0.168\linewidth]{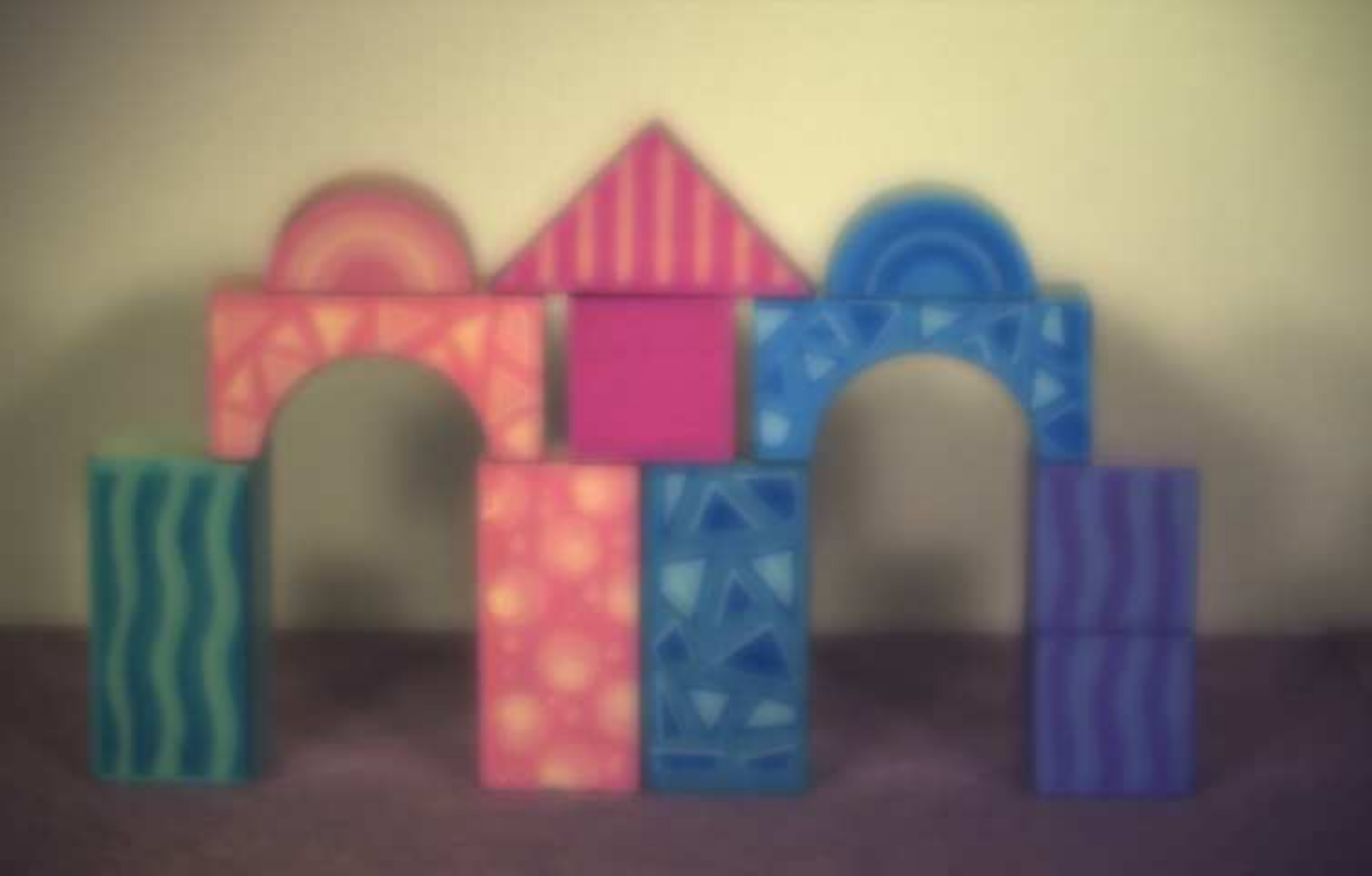}}
\subfigure[]{\includegraphics[trim = 0pt 100pt 0pt 0pt, width=0.30\linewidth]{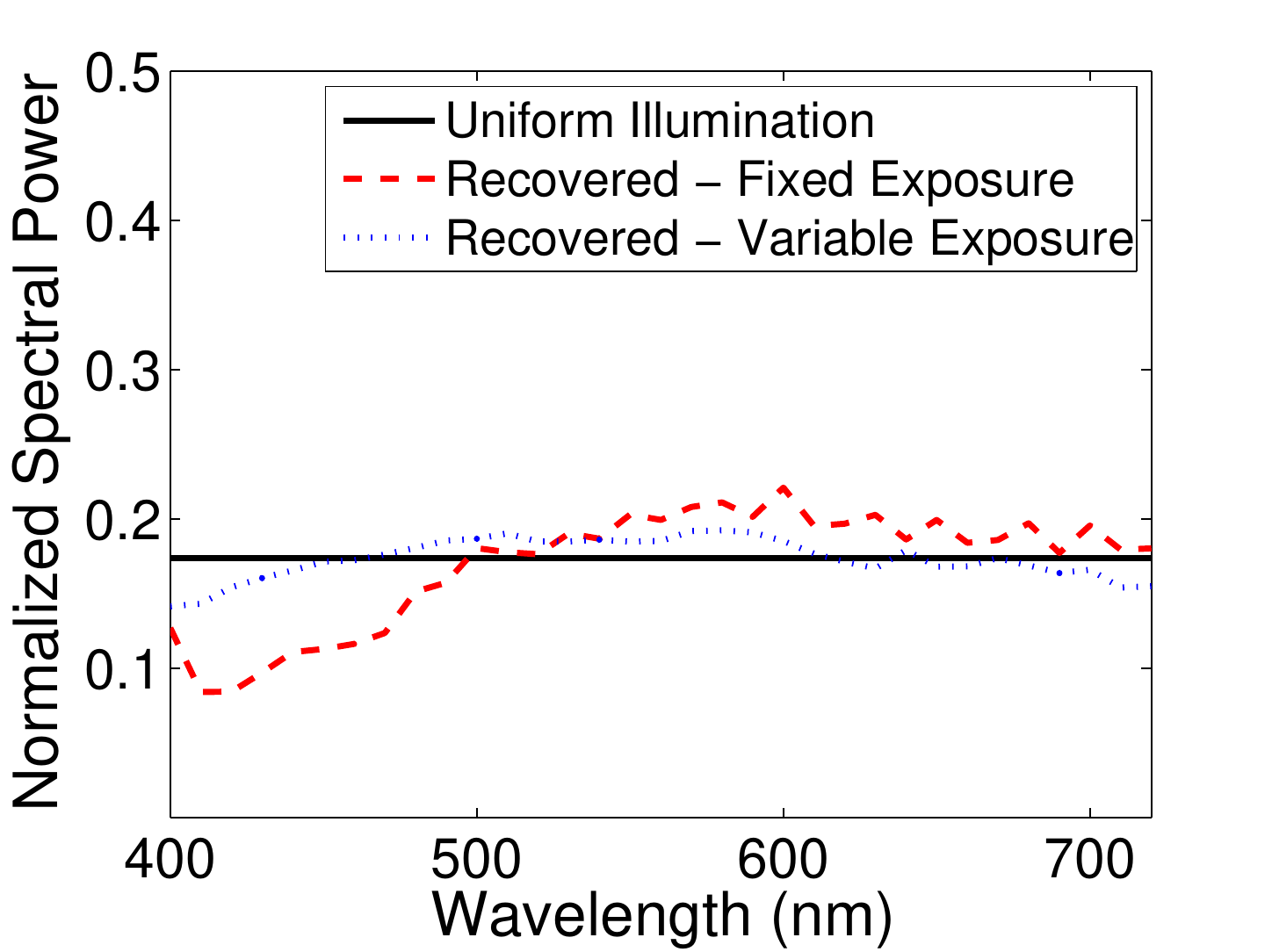}}
\caption[Qualitative comparison of fixed and variable exposure imaging]{(left to right) Original image with illumination bias (fixed exposure), ideal recovery based on ground truth, recovery with color constancy for fixed (-f) and variable (-v) exposure, and SPD of uniform and recovered illuminations.}
\label{fig:qual-const-auto}
\end{figure}

Figure~\ref{fig:qual-const-auto} shows sample scenes and their color constancy corrected images based on fixed and variable exposure.
We select the edge based color constancy algorithms (GE1 and GE2), which demonstrate highest Relative MAE improvement from fixed to variable exposure imaging. It can be observed that the variable exposure images are closer to the uniform illuminant after recovery by the same color constancy methods. The marked difference in image quality can also be observed in the corrected images.

\section{Conclusion}
\label{sec:conclusion}

We proposed a method for accurate recovery of spectral reflectance from an LCTF based hyperspectral imaging system. We investigated color constancy for illuminant estimation and proposed an adaptive illumination estimation technique, exploiting the properties of hyperspectral images. We also proposed an automatic exposure adjustment technique for compensating the bias of various optical factors involved in an LCTF based hyperspectral imaging system. Experiments were performed on an in house developed and a publicly available database of a variety of objects in simulated and real illumination conditions. It was observed that the identification of the illuminant a priori, is particularly useful for estimating illuminant sources with a smooth spectral power distribution. Our findings also suggest that automatic exposure adjustment based imaging followed by color constancy improves spectral reflectance recovery under different illuminations.

\chapter[Cross Spectral Registration of Hyperspectral Face Images]{Cross Spectral Registration\\of Hyperspectral Face Images} 

\label{Chapter4} 


A hyperspectral image contains multiple contiguous bands of a scene in narrow wavelength sections. Hyperspectral images are captured by either line scan or area scan sensors. Line scan spectral images are acquired by moving a line scanner across a scene to  sequentially capture each line of pixels. Area scan spectral imaging systems filter the incoming light in sections of the spectrum and acquire image of a scene. In line scan spectral imaging, each consecutive spectral line may be misaligned due to the non-uniform movement of the sensor. In a similar manner, in area scan spectral imaging, each consecutive captured band may be misaligned due to the movement of the scene. The misalignments may be of different nature depending on the nature of objects in a scene.

\begin{itemize}
\item Rigid: objects are of a definite shape, such that the distance between any two points remains the same under the influence of a force.
\item Non-Rigid: objects bear an indefinite shape such and are flexible when subjected to a force.
\end{itemize}

Cross spectral registration is a complex task in that each band is a different modality and a direct correspondence between the pixel intensity values cannot be made. This is because any given material has a different response to the incident light in each band of the electromagnetic spectrum. Thus, a major challenge in registration of spectral bands is to deal with the cross spectral differences. This problem can be regarded as a subset of heterogenous image registration where one image is in a different modality from the other image.

In this chapter, we focus on cross-spectral alignment of hyperspectral images of human faces. In our approach, we extract self similarity based features individually from local regions of the face. Self similarity features are obtained by correlating a small image patch within its larger neighborhood and therefore, remain relatively invariant to the cross spectral variation. For example, in Figure~\ref{fig:teaser}, the self correlation between the inner and outer patches of one band is more similar to the correlation between the inner and outer patches in the other band compared to a mere image difference. The proposed \emph{Cross Spectral Similarity (CSS)} descriptor implicitly reduces the cross spectral distance by using the notion of self similarity.

\begin{figure}[t]
\centering
\includegraphics[width=0.5\linewidth]{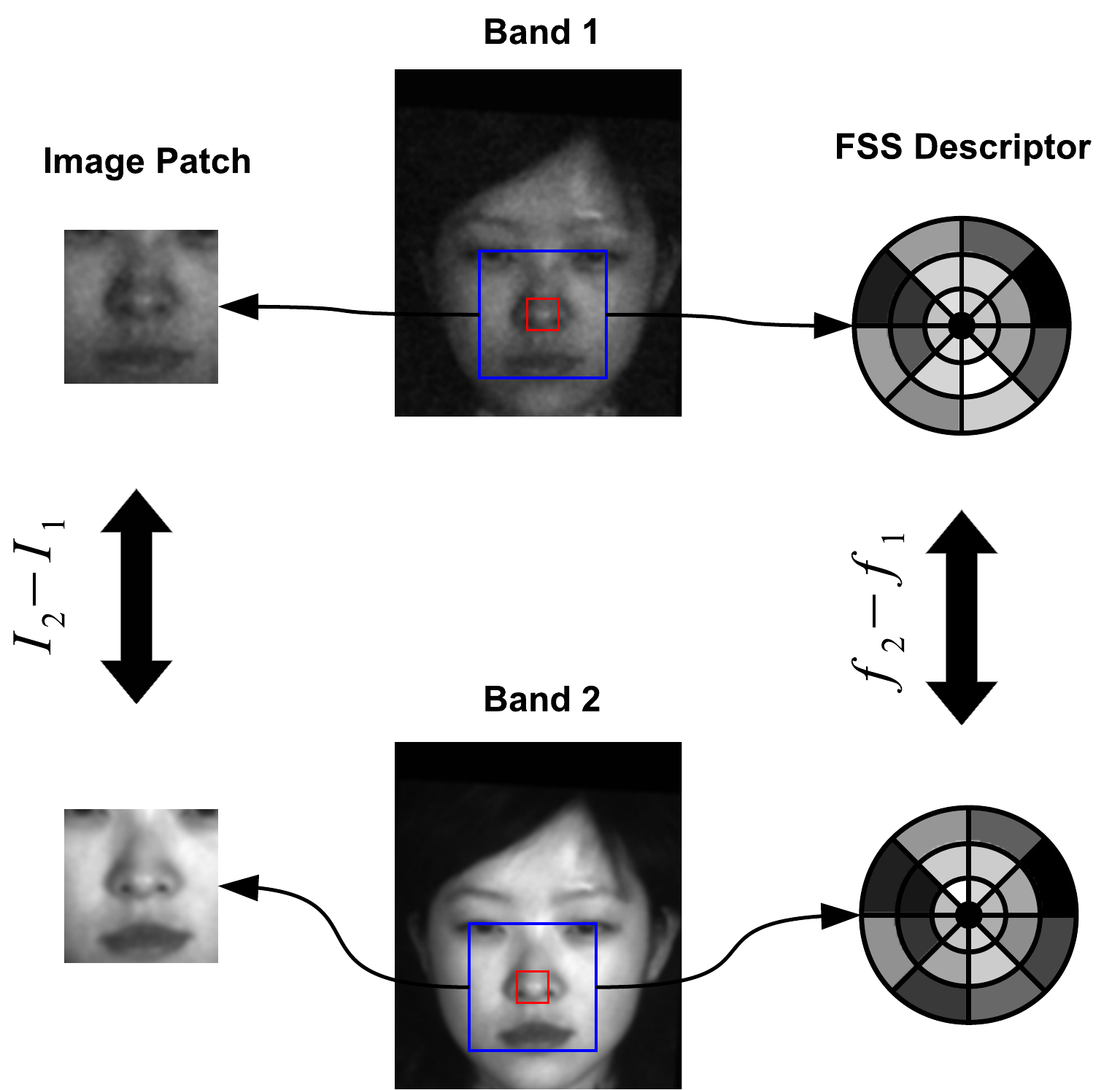}
\caption[Hyperspectral image spectral differences]{Different bands of a hyperspectral image significantly vary from each other. High photometric variation can be observed between the selected bands of a hyperspectral face image. Despite the spectral variation, the proposed \emph{Cross Spectral Similarity (CSS)} descriptor is similar at corresponding locations.}
\label{fig:teaser}
\end{figure}



Image registration has immense potential in hyperspectral image analysis~\cite{zitova2003image,maintz1998survey}. In image registration, a source image is registered to a target image. The source image is related to the target image by a transformation subject to some deformation constraints outlined by the registration technique. The registration outcome is dependent on the type of transformation under consideration. Affine transformations are restricted to translation, scale, rotation, shear and perspective between the target and source images. When such transformations are limited to rotation and translation, it is called rigid transformation. Non-rigid transformations can be arbitrary and are either a form of locally linear transformation or elastic deformation.

Phase correlation was used by Erives and Fitzgerald for hyperspectral image registration captured by a liquid crystal tunable filter~\cite{erives2006automatic}. They enhanced phase correlation for subpixel correspondence in cross spectral registration. Their results showed a 9.5\% improvement in normalized correlation compared to only phase correlation. They also extended the technique to cater for nonrigid misalignments by introducing localized phase correlation measure and geometric transformations~\cite{erives2007automated}. Zhao et al.~proposed an optical configuration which simultaneously captured diffracted and non-diffracted beams of an acousto optical tunable filter~\cite{zhao2013development}. The use of non-diffracted beam allows measurement of motion in different bands and improves the registration accuracy of spectral images.

Stone and Wolpov proposed a nonlinear prefiltering and thresholding technique that enhances the cross spectral correlation given there are significant similarities across the spectra~\cite{stone2002blind}. Their algorithm could correctly register 90\% of the images with a 10\% false positive rate. Fang et al.~presented an elastic registration method based on mutual information criteria and thin-plate spline interpolation~\cite{fang2012multi}. They experimented with different block sizes which controlled the allowable extent of deformation. An optimal block size resulted in the maximum mutual information and the highest registration accuracy in cross spectral registration.

One of the areas which is not well explored in cross spectral registration is the potential of feature descriptors. Most of the registration techniques either directly use image pixel based distance measure, correlation~\cite{pratt1974correlation}, or mutual information criteria~\cite{pluim2003mutual} for computation of registration error between two images. These criteria are reasonable for intensity images, however, spectral images are prone to photometric noise which makes the use of intensity images less effective. Local features provide a compact representation compared to image pixels and allow for efficient image registration. Moreover, they offer robustness to noise which is a major issue in hyperspectral images.

In our approach, we propose the \emph{Cross Spectral Similarity (CSS)} feature which is robust to the spectral differences between consecutive bands. It should be noted that our notion of self similarity is different to that of Shectman and Irani's~\cite{shechtman2007matching} which was mainly used for the task of image retrieval and associated applications.

%

\section{Proposed Method}

\subsection{Preprocessing}

A visual observation of the spectral variation between the different bands of a hyperspectral image suggests significant photometric variation. Moreover, due to the low throughput of filter in a spectral imaging system, some bands are affected with image noise. In order to reduce this noise, we consider spectral image smoothing using a Gaussian filter. The filtered image $\mathbf{D}$ is obtained by convolving the input image $\mathbf{I}$ with a Gaussian filter.
\begin{equation}
\mathbf{D}(x,y) = \mathbf{G}(x,y,\sigma) * \mathbf{I}(x,y)~,
\end{equation}
where $\sigma$ is the standard deviation of the Gaussian filter.

\subsection{Cross Spectral Similarity (CSS) Descriptor}
\label{sec:CSS}

The procedure to compute the CSS descriptor comprises of two main steps. The first step is to compute a self similarity surface at a location. The second step is to convert the similarity surface into a polar histogram. A detailed description of the proposed CSS descriptor computation is presented as follows.

\subsubsection{Self Similarity Surface Computation}

A uniform rectangular grid with a spacing of $\delta$ pixels is overlayed on the image. At each grid location $(x,y)$, a square window $\mathbf{A} \in \mathbb{R}^{w \times w}$ is sampled from the filtered image $\mathbf{D}$ as shown in Figure~\ref{fig:CSS-misalign}. Subsequently, a square patch $\mathbf{V} \in \mathbb{R}^{p \times p}, p<w$ is sampled from the center of $\mathbf{A}$. The correlation between $\mathbf{A}$ and $\mathbf{V}$ can be computed by various forms of correlation functions (sum of absolute differences, normalized cross correlation, sum of squared distances) which capture different order of the similarity. The \emph{sum of squared differences} (SSD) accomplishes the task efficiently and captures sufficient fidelity of correlation~\cite{shechtman2007matching}. The sum of squared differences $\mathbf{S}\in\mathbb{R}^{w\times w}$ between two images is computed as
\begin{equation}
\mathbf{S}(x,y) = \sum_{i=-\frac{p}{2}}^{\frac{p}{2}}\sum_{j=-\frac{p}{2}}^{\frac{p}{2}} (\mathbf{V}(i,j)-\mathbf{A}(i+x,j+y))^2~,
\end{equation}
which is a distance surface, however, we are interested in the computation of a similarity surface. Therefore, $\mathbf{S}$ is transformed into a correlation surface $\mathbf{C} \in \mathbb{R}^{w \times w}$ as
\begin{equation}
\mathbf{C}(x,y) = \exp \left(-\frac{\mathbf{S}(x,y)}{\max(\sigma_n,\sigma_a)} \right)~,
\end{equation}
where $\sigma_n$ is the estimated photometric noise variance which is an estimate of the average noise in the image. The parameter $\sigma_a$ is the local variance computed from the local window $\mathbf{A}$. Due to the spectral intensity variation between the consecutive bands, the correlations of similar regions may be differently scaled. Thus, $\mathbf{C}(x,y)$ is scaled by dividing with its Frobenius norm.
\begin{equation}
\hat{\mathbf{C}} = \frac{\mathbf{C}}{\|\mathbf{C}\|}_F~.
\end{equation}

\begin{figure}[!h]
\centering
\includegraphics[width=1\linewidth]{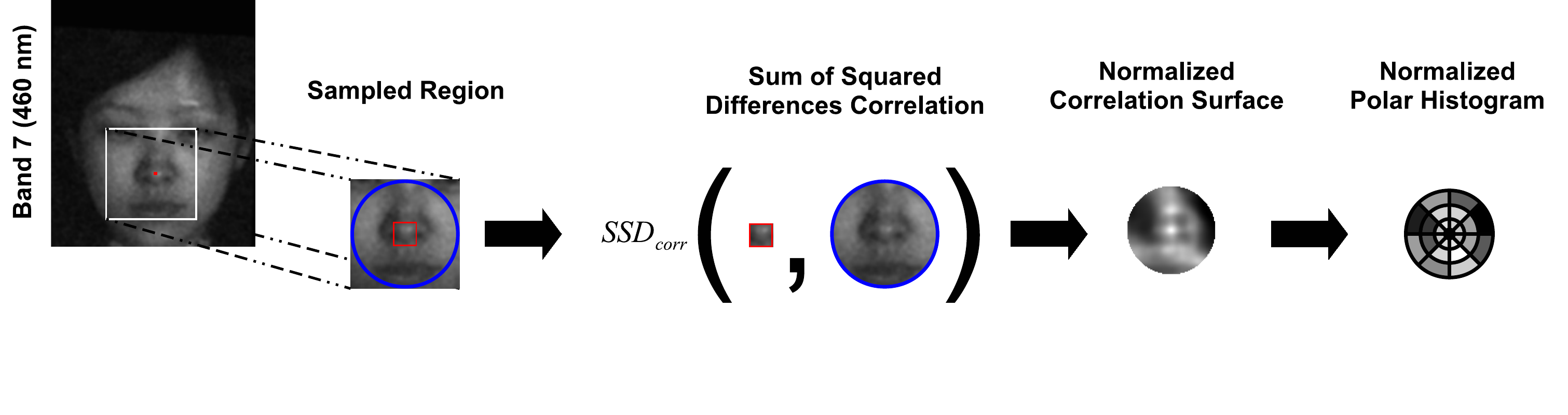}
\includegraphics[width=1\linewidth]{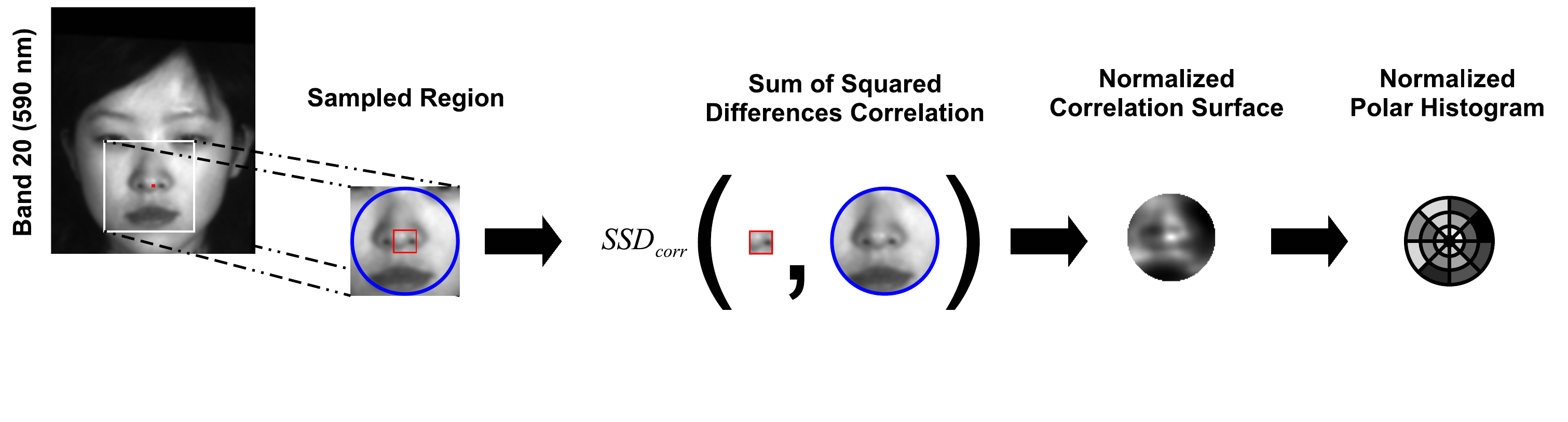}
\includegraphics[width=1\linewidth]{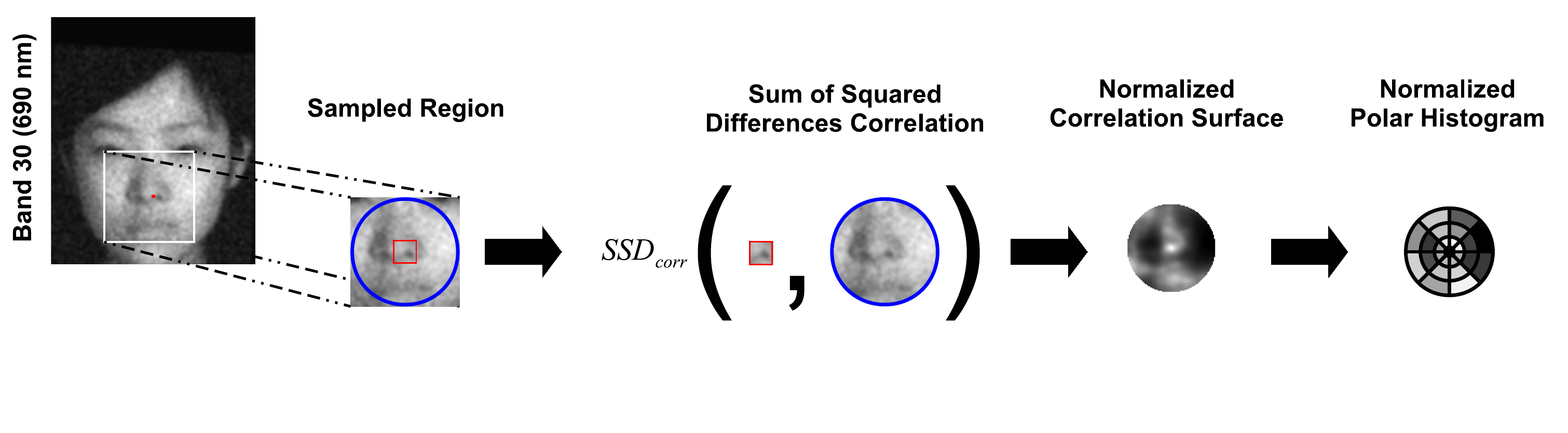}
\caption[CSS descriptor computation at different facial locations]{An example illustration of the CSS descriptor computation at the same absolute image locations of face in three different bands of a hyperspectral image. Notice that the extracted CSS descriptor in Bands 20 and 30 is different from that of Band 7 because the bands are slightly misaligned and the descriptors are extracted from different location. An inner patch (red) is correlated with the outer window (blue) in a circular vicinity. The resulting correlation surface is normalized and divided into $\theta$ angular and $\rho$ radial intervals (here, $\theta=8$ and $\rho=3$). Each bin value in the final descriptor is the average of the correlation pixels in that bin. The descriptor is normalized via min-max rule.}
\label{fig:CSS-misalign}
\end{figure}

\begin{figure}[!h]
\centering
\includegraphics[width=1\linewidth]{chapter_4/ex2_7}
\includegraphics[width=1\linewidth]{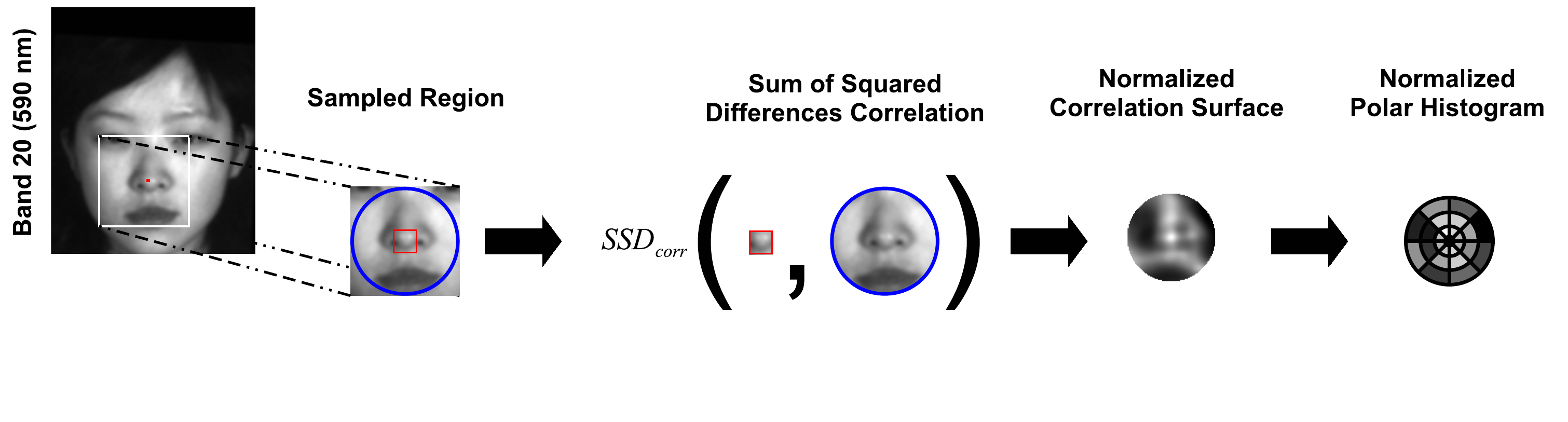}
\includegraphics[width=1\linewidth]{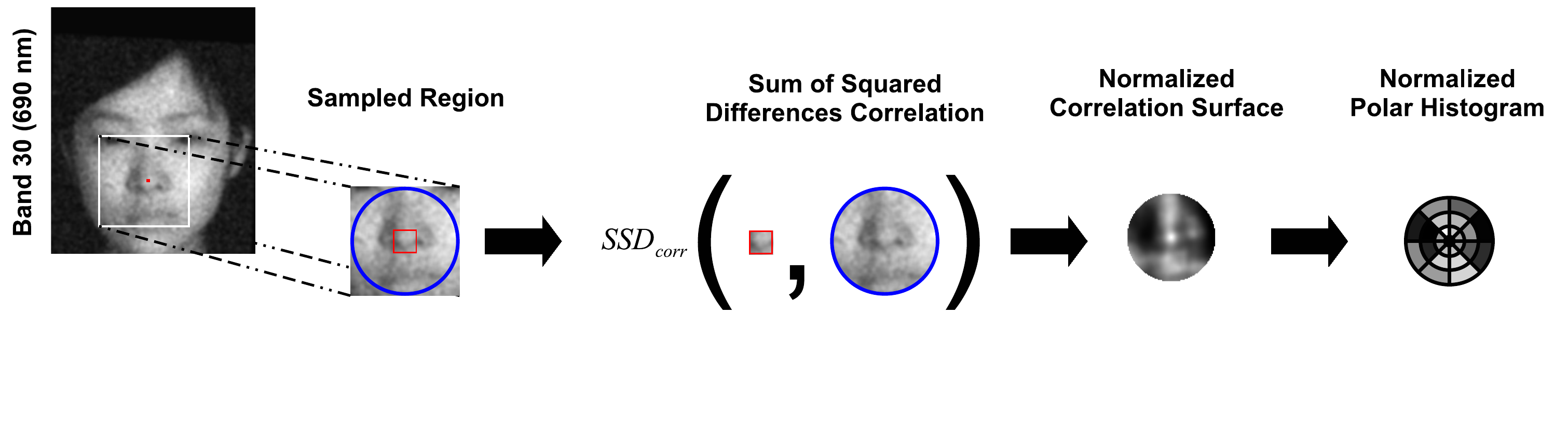}
\caption[CSS descriptor computation at same facial locations]{An illustration of the CSS descriptor computation at the same face locations in three different bands of a hyperspectral image. Observe that the final descriptors are much similar to each other.}
\label{fig:CSS-align}
\end{figure}
\subsubsection{Polar Histogram Conversion}

Registration of hyperspectral image of a face requires accurate spatial correspondence across spectral bands. Hence, we use a \verb"polar" representation to capture the spatial deformation of faces within a radial spatial distance. The similarity surface is pooled into a polar grid of $\theta$ angular and $\rho$ radial intervals as shown in Figure~\ref{fig:CSS-misalign}. The total number of descriptor bins is thus $b = \theta \times \rho$. We deviate from the original methodology of local self similarity~\cite{shechtman2007matching} due to limitations of the LSS descriptor in the scenario of face spectral image registration. We propose the usage of \verb"mean" instead of \verb"max" for pooling correlation pixels in histogram bins in order to make it sensitive to a local region instead of a single pixel (as \verb"max" would do). We use the \verb"mean" value of the correlation surface pixels falling in a bin, so as to get a vector $\mathbf{f} \in \mathbb{R}^b$,
\begin{equation}
\mathbf{f} = \frac{1}{b_i}\sum_{(x,y) \in i}\mathbf{\hat{\mathbf{C}}(x,y)}~,
\end{equation}
where $b_i$ is the number of pixels falling in the $i^\textrm{th}$ bin. The feature descriptor is then normalized in the range $[0,1]$ using the \verb"min-max" rule
\begin{equation}
\hat{\mathbf{f}} = \frac{\mathbf{f}-\min(\mathbf{f})}{\max(\mathbf{f})-\min(\mathbf{f})}~,
\end{equation}
where $\hat{\mathbf{f}}$ is a normalized CSS descriptor. It can be seen in Figure~\ref{fig:CSS-align} that the descriptors are much similar when extracted from similar facial locations in different bands.

Computation of the descriptor at all grid points gives $\mathbf{F}=\{\hat{\mathbf{f}}_1, \hat{\mathbf{f}}_2,...,\hat{\mathbf{f}}_n\}$ which is a global representation of all local CSS features.  We do not eliminate the \emph{uniform} and \emph{salient} descriptor points as in~\cite{shechtman2007matching} because all points on a face are important for registration. The \emph{uniform} points come from patches sampled from homogenous regions of a face, whereas, the \emph{salient} points exhibit low self similarity within a local neighborhood. This may provide enhanced results in objects of assorted categories but for objects of a specific class (e.g. faces) elimination of certain points is not beneficial.

\subsection{Grid Based Registration}

Grid based registration is one of the most successful technique for cross modality registration~\cite{rueckert1999nonrigid}. The registration starts with definition of an initial grid of control points over the images. A \emph{source image} is registered to a \emph{target image} by iteratively transforming the grid and computing the registration error between the target and source image. The source image is deformed using transformation matrix which locally maps the source image to the target image. The image registration error is iteratively minimized by an optimization algorithm.

In case of cross spectral registration, no band can be specifically designated as the target image because all bands are equally probable of misalignment. We follow an intuitive approach for defining the target image. Suppose $p$ bands of a spectral image need to be registered. We sequentially select the $i^\textrm{th}$ band ($i=2,3,...,p$) as the source image, and the mean of the remaining $p-i$ bands as the target image. Based on our observation, this is a relatively robust strategy as compared to sequentially registering $i^\textrm{th}$ band to its preceding $i-1^\textrm{th}$ band only. Since some spectral bands are highly affected by noise, sequential registration of those bands is erroneous. However, registration to the `mean of the remaining bands' is found robust to noise.

\subsection{CSS Matching}

Use of image self similarity has mainly been oriented towards indoor and outdoor object retrieval from real world scenes. Such objects exhibit high intra-class variations in addition to the inter-class variation. In case of cross spectral face registration all images are frontal faces in a close to neutral expression. Consequently, the inter spectral variation is expected to be low and hence, an accurate spatial correspondence is required between the descriptors sampled from the grid locations of consecutive bands.

The CSS descriptors are compared using the Euclidean distance measure as opposed to the computationally expensive \emph{ensemble matching}~\cite{shechtman2007matching} or \emph{Hough transform based voting}~\cite{chatfield2009efficient}. Given $\mathbf{f}_i,\mathbf{f}_j \in {\mathbb{R}}^b$, the descriptors for local image patches $i$ and $j$ respectively, the distance $y$ between the two descriptors is determined as
\begin{equation}
y_{ij} = \sum (\mathbf{f}_i-\mathbf{f}_j)^2~,
\end{equation}
which computes the Euclidean distance between $\mathbf{f}_i$ and $\mathbf{f}_j$.

%

\section{Experimental Results}

\subsection{Database}


The PolyU Hyperspectral Face database~\cite{di2010studies} is comprised of 151 hyperspectral image cubes of 47 individuals. The frontal faces of each subject were captured in several different sessions using a Varispec LCTF and halogen lights. Each image comprises 33 bands in 400 to 720nm range (10nm steps) with a spatial resolution of $220 \times 180$ pixels. The first 6 and last 3 bands of each hyperspectral image are discarded because of very high noise in these bands~\cite{di2010studies}. Therefore the actual hyperspectral images used in registration experiments contain 24 bands in 460 to 690nm range.

\subsection{Registration Results}

In this experiment we selected 113 images in the PolyU hyperspectral face database which had no visually noticeable misalignments across spectral bands. Now consider a raw spectral image $\mathbf{X}$ whose bands are aligned. The image $\mathbf{X}$ undergoes a simulated rigid transformation with forward transformation parameters ${X(t_{{x}_i},t_{{y}_i},\theta_i)}_i=1^p$ selected from a scaled random normal distribution to get the misaligned image $\mathbf{Z}$. The transformations are limited to a rigid (translation and rotation) as only slight variation is expected during acquisition of faces in consecutive bands. Then, the proposed registration algorithm registers the bands of $\mathbf{Z}$ and returns the reverse transformation parameters ${Y(t_{{x}_i},t_{{y}_i},\theta_i)}_i=1^p$ to get the registered image $\mathbf{Y}$. The registration error between $\mathbf{X}$ and $\mathbf{Y}$ is then computed as

\begin{align}
\label{eq:sim_errors}
e_\theta=& |X(\theta_i)-Y(\theta_i)|\\
e_r     =& \sqrt{(|X(t_{{x}_i})-Y(t_{{x}_i})|^2+|X(t_{{y}_i})-Y(t_{{y}_i})|^2)}
\end{align}
where $e_\theta$ is the rotational error and $e_r$ is the radial displacement error.

Figure~\ref{fig:reg-results-sim} shows the results of registration using the proposed method on sample image of PolyU database. Observe that the cross spectral misalignment has been significantly reduced. In Figure~\ref{fig:reg-error-sim} we present the distribution of rotation and translational variation between misaligned and aligned hyperspectral images using~\ref{eq:sim_errors}. It can be seen that the rotational errors are symmetric about $0^\circ$ and the displacement errors are a right tailed skewed distribution with a peak at 5 pixels which indicates improvement in registration errors.

\begin{landscape}
\begin{figure*}[t]
\centering
\subfigure[]{
\begin{minipage}{0.1\linewidth}{
\includegraphics[width=1\linewidth]{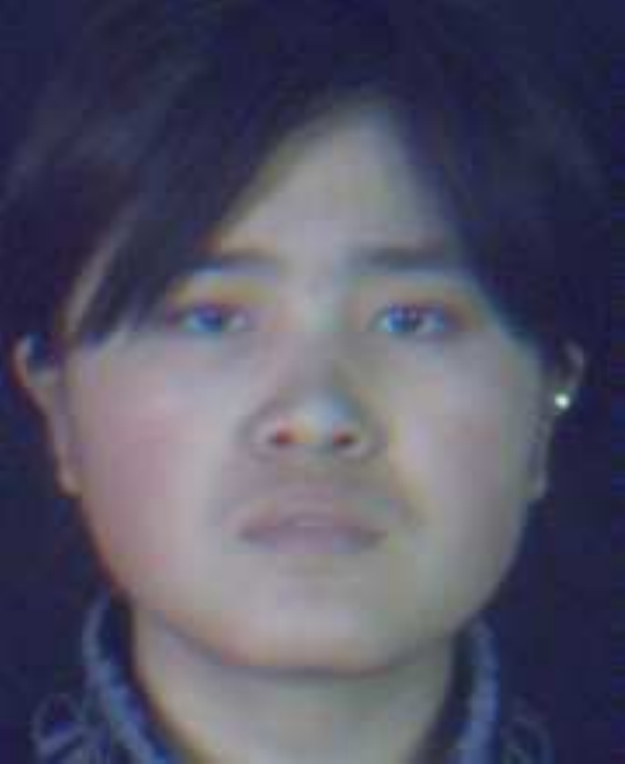}\\
\includegraphics[width=1\linewidth]{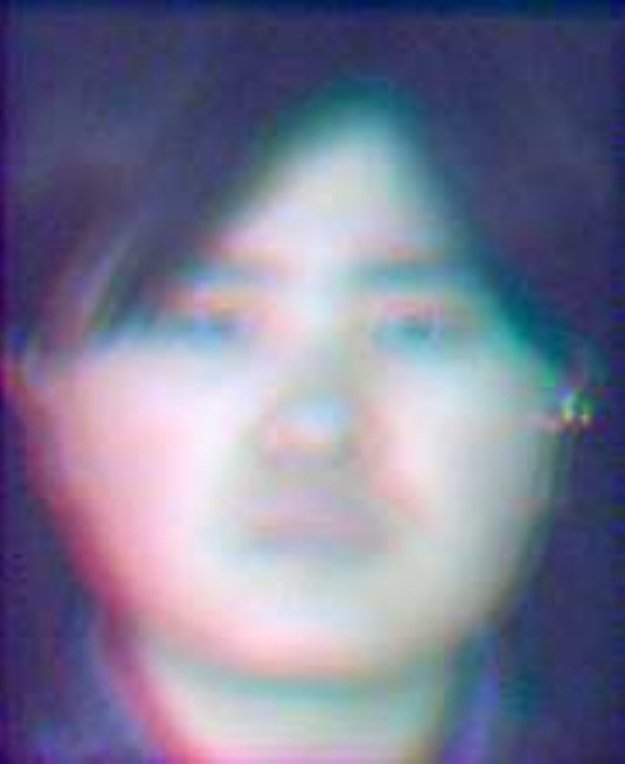}\\
\includegraphics[width=1\linewidth]{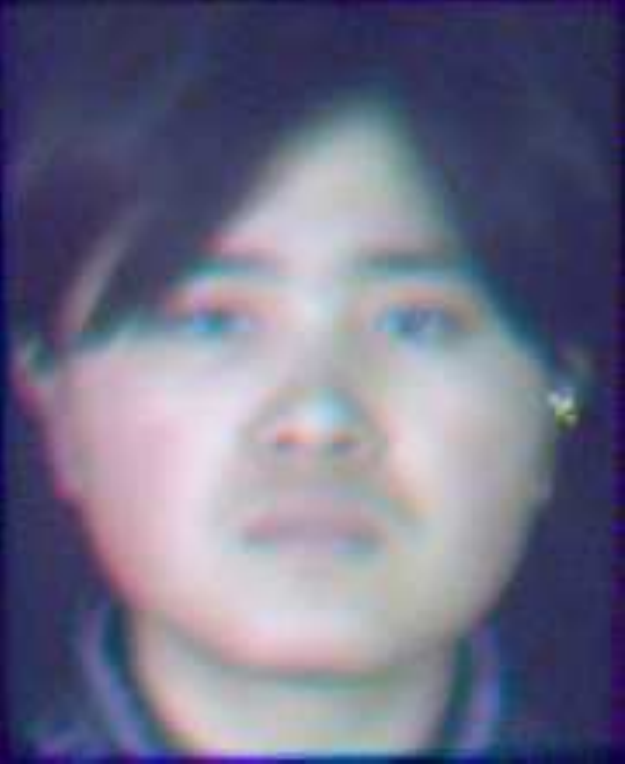}}
\end{minipage}}
\subfigure[]{
\begin{minipage}{0.1\linewidth}{
\includegraphics[width=1\linewidth]{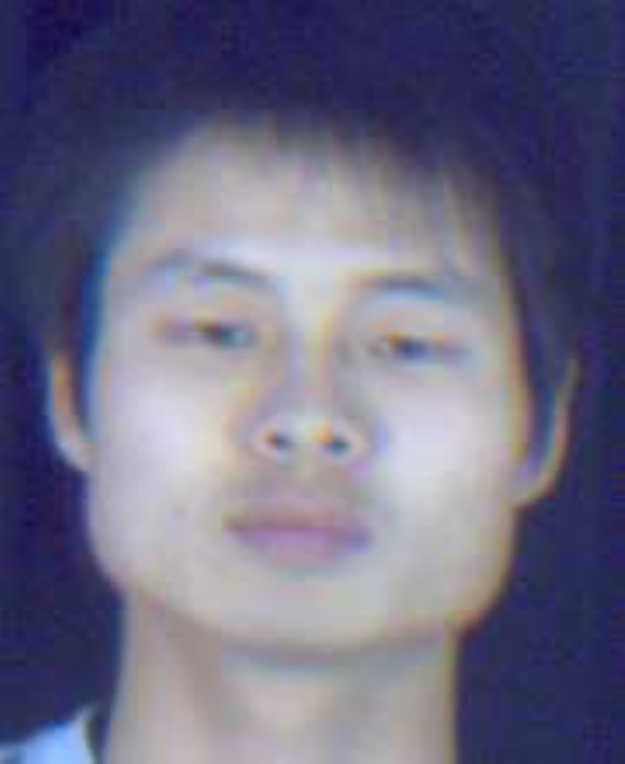}\\
\includegraphics[width=1\linewidth]{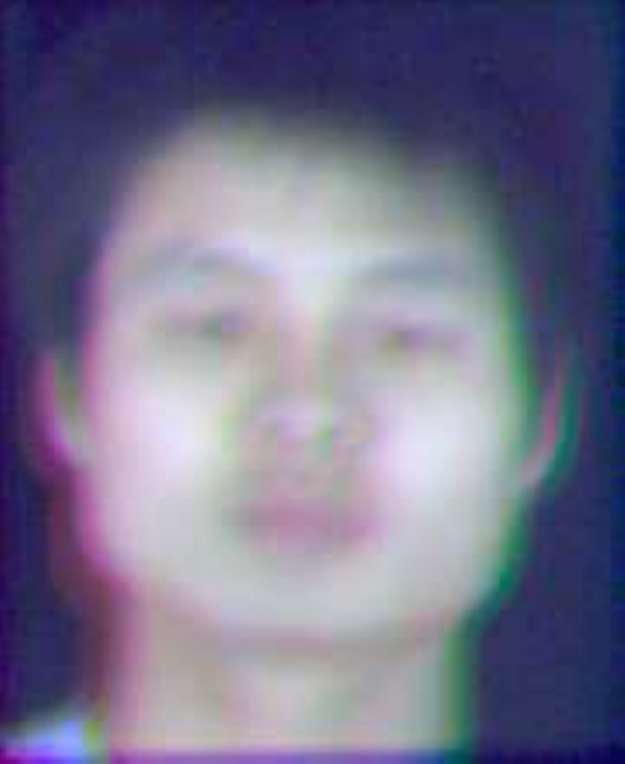}\\
\includegraphics[width=1\linewidth]{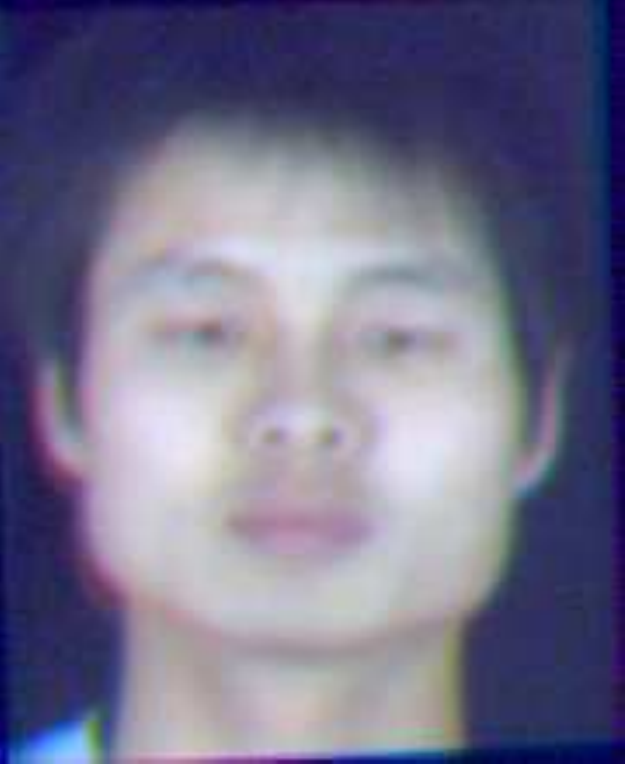}}
\end{minipage}}
\subfigure[]{
\begin{minipage}{0.1\linewidth}{
\includegraphics[width=1\linewidth]{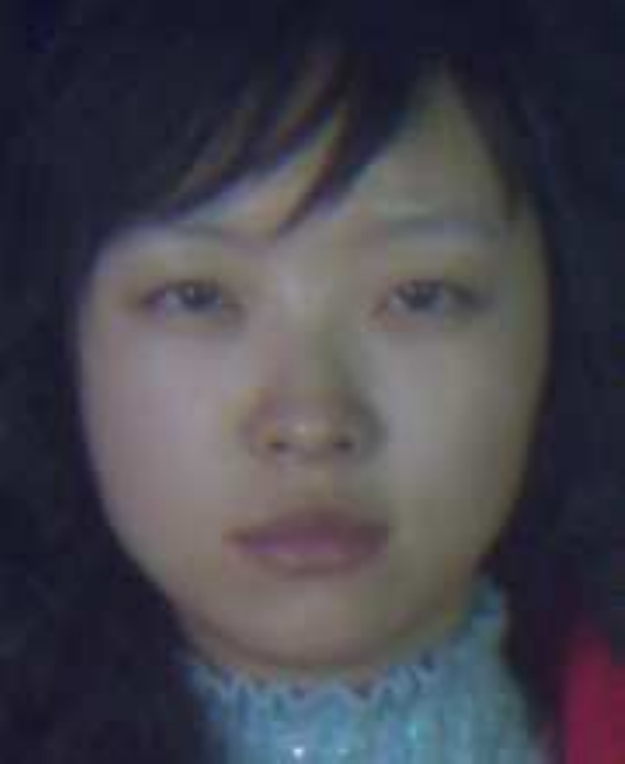}\\
\includegraphics[width=1\linewidth]{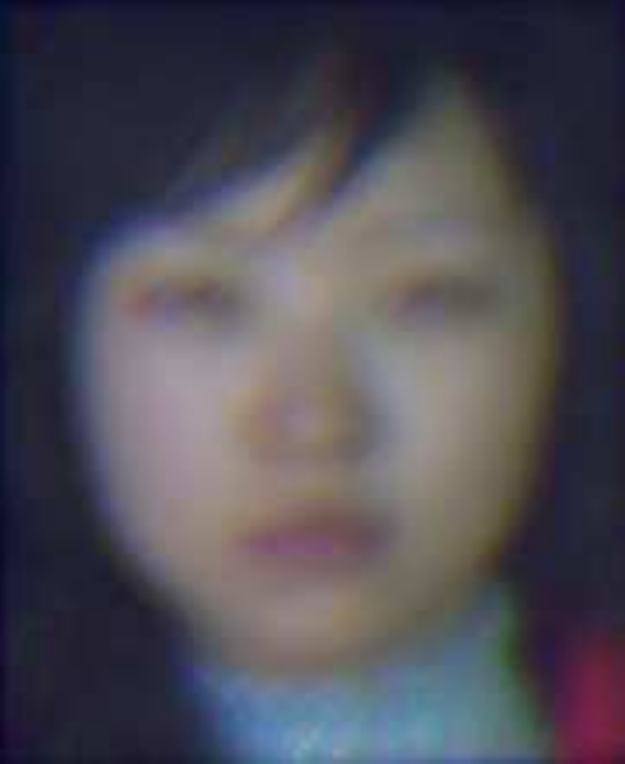}\\
\includegraphics[width=1\linewidth]{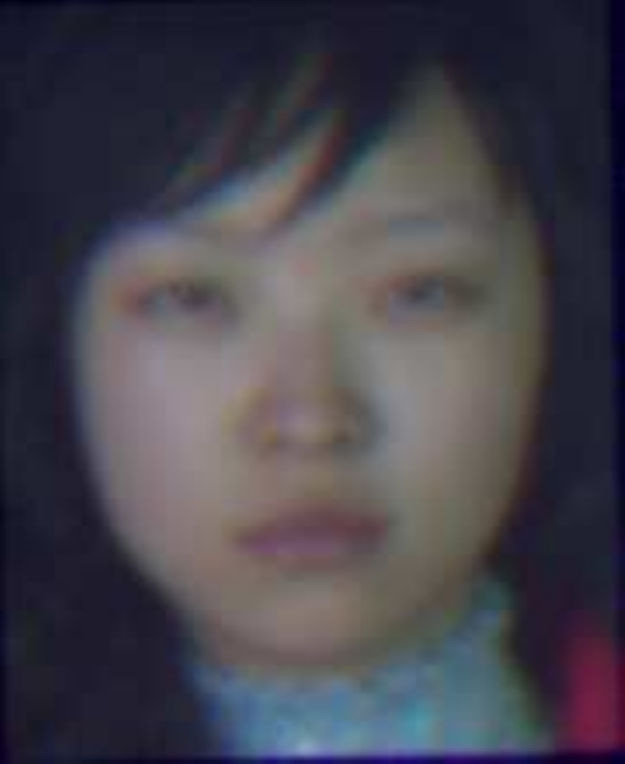}}
\end{minipage}}
\subfigure[]{
\begin{minipage}{0.1\linewidth}{
\includegraphics[width=1\linewidth]{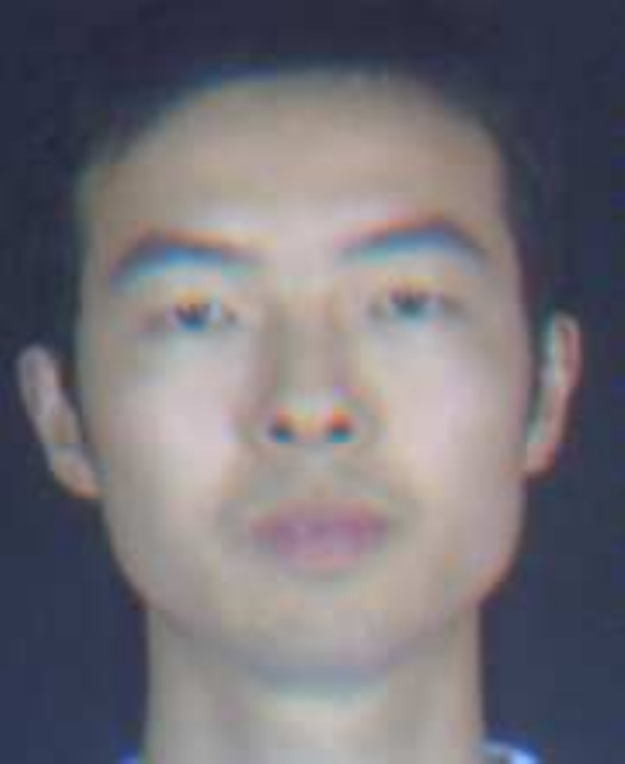}\\
\includegraphics[width=1\linewidth]{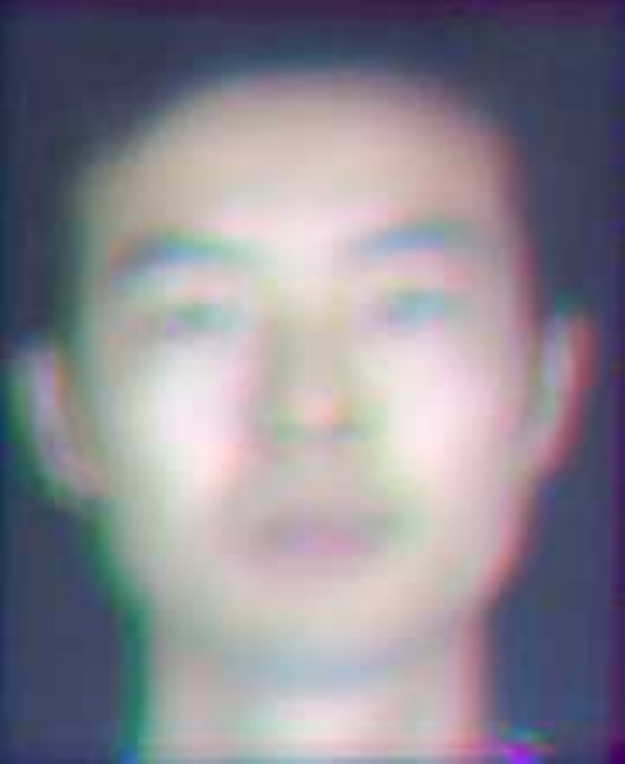}\\
\includegraphics[width=1\linewidth]{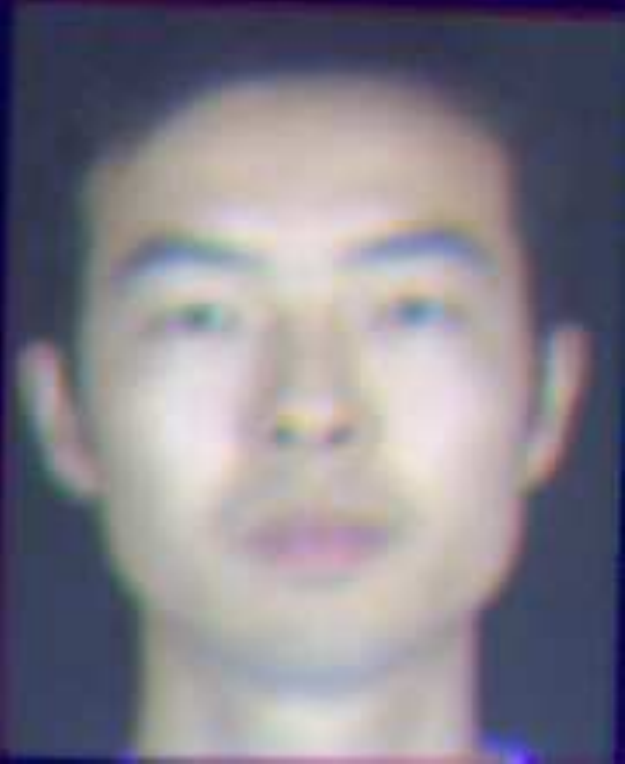}}
\end{minipage}}
\subfigure[]{
\begin{minipage}{0.1\linewidth}{
\includegraphics[width=1\linewidth]{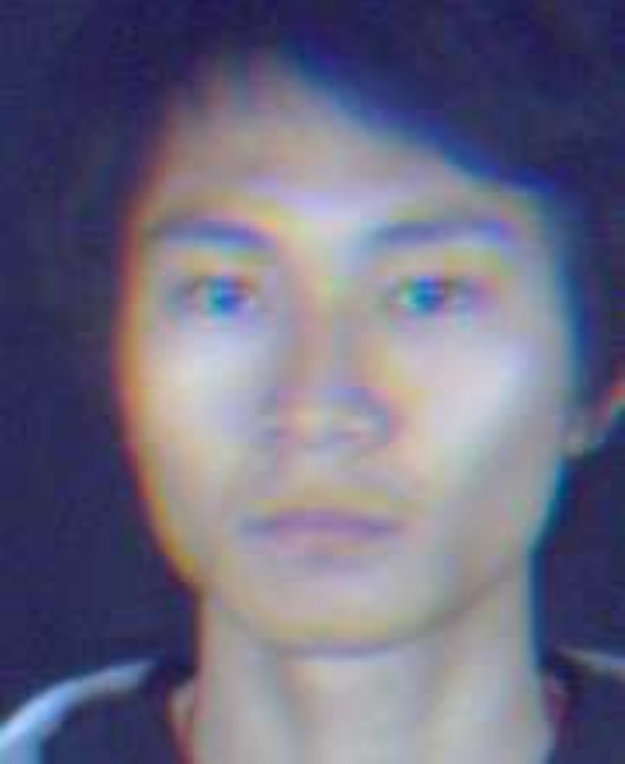}\\
\includegraphics[width=1\linewidth]{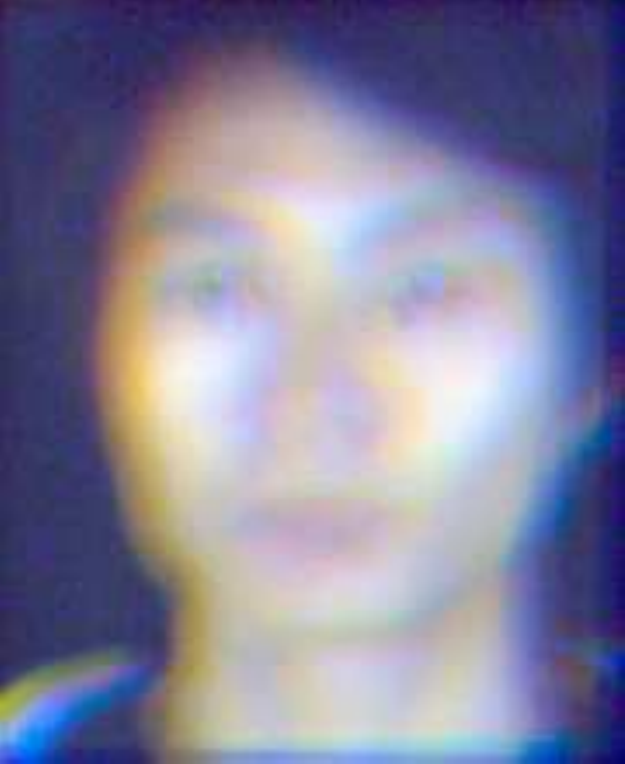}\\
\includegraphics[width=1\linewidth]{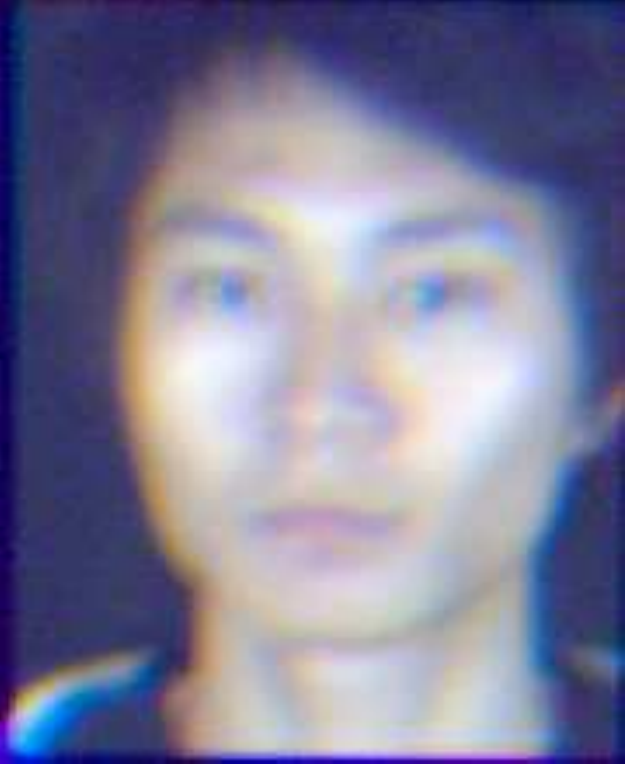}}
\end{minipage}}
\subfigure[]{
\begin{minipage}{0.1\linewidth}{
\includegraphics[width=1\linewidth]{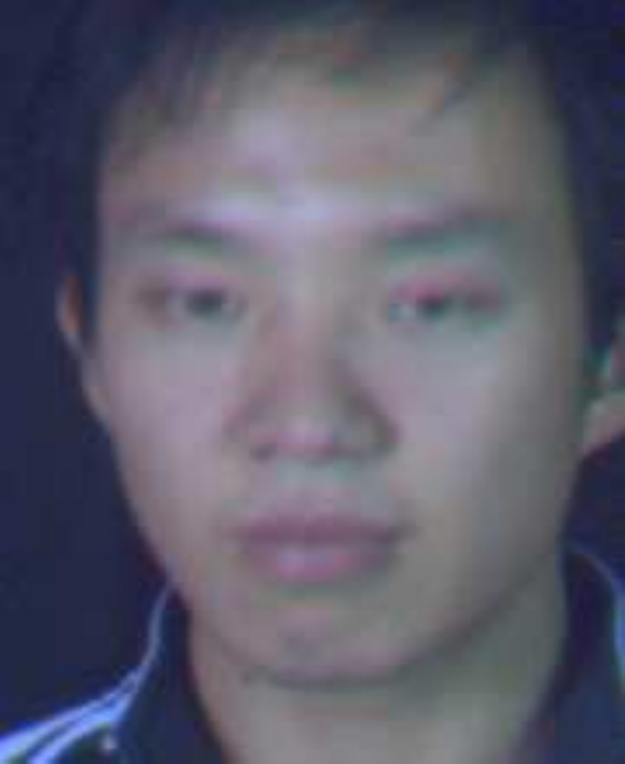}\\
\includegraphics[width=1\linewidth]{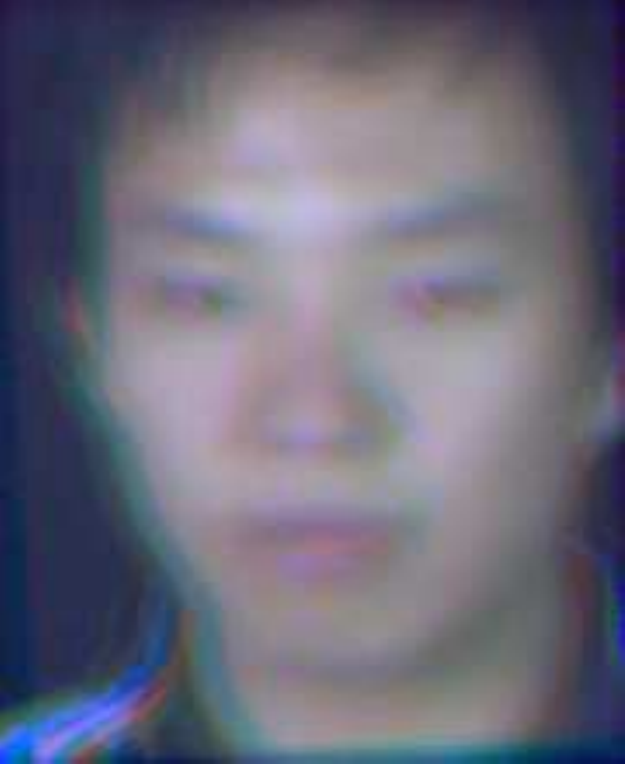}\\
\includegraphics[width=1\linewidth]{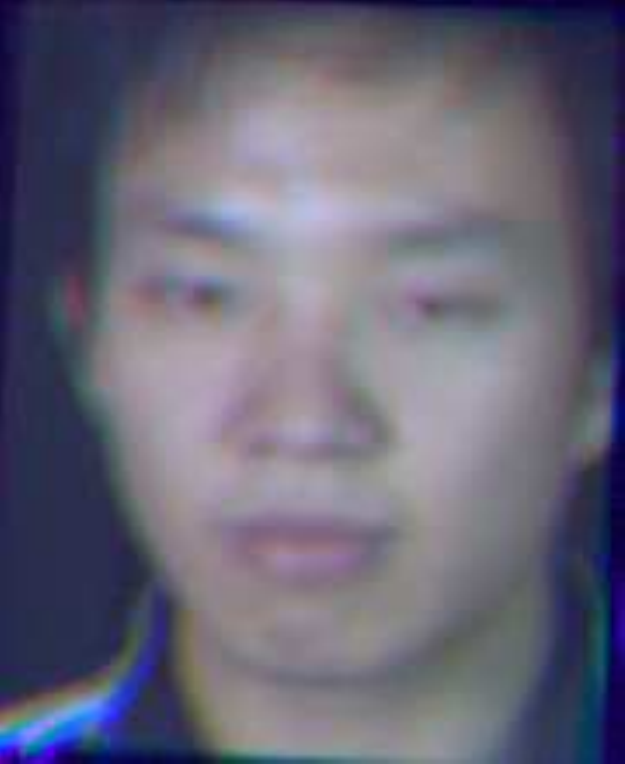}}
\end{minipage}}
\subfigure[]{
\begin{minipage}{0.1\linewidth}{
\includegraphics[width=1\linewidth]{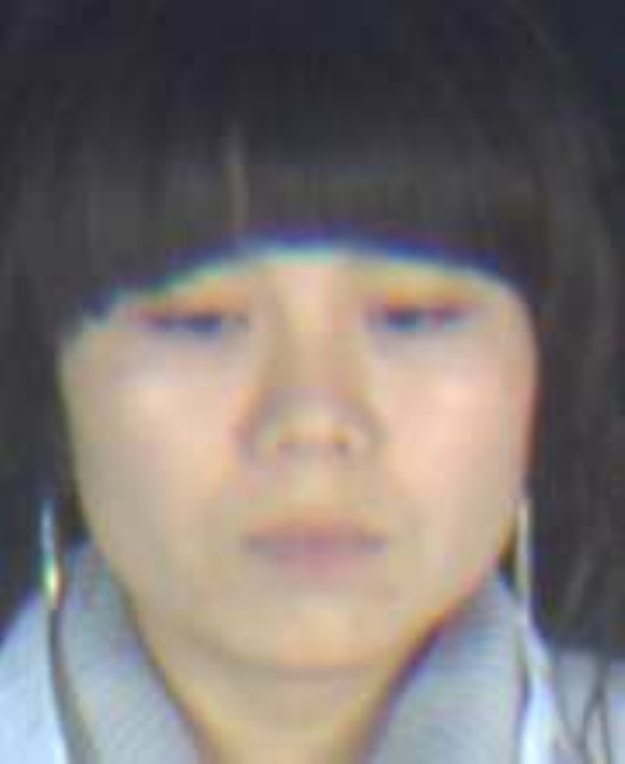}\\
\includegraphics[width=1\linewidth]{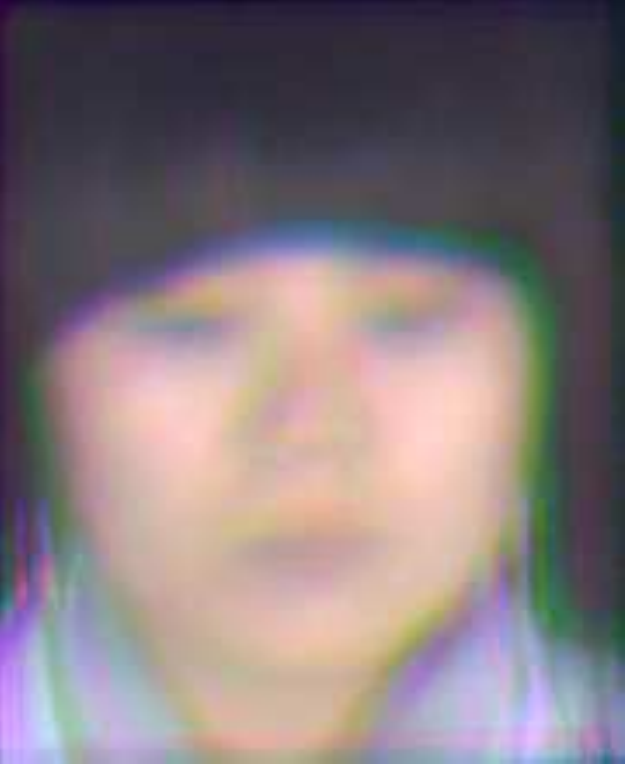}\\
\includegraphics[width=1\linewidth]{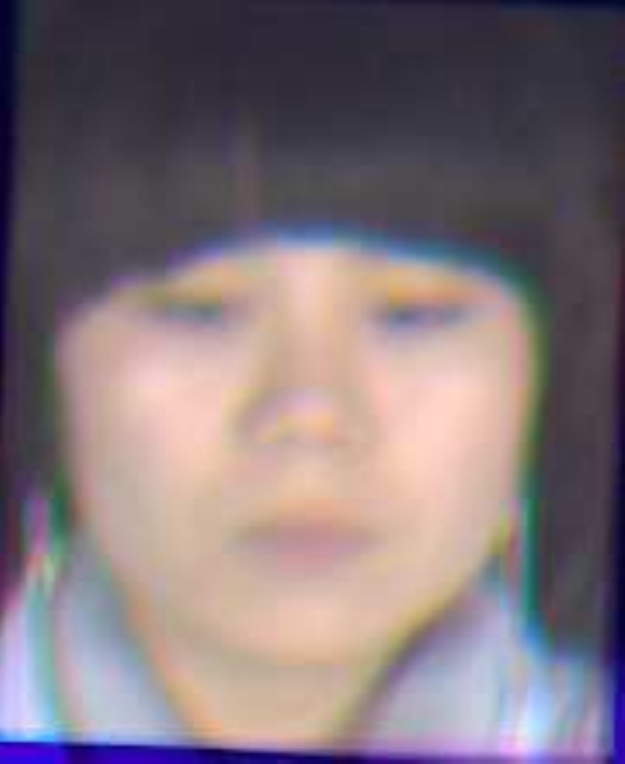}}
\end{minipage}}
\subfigure[]{
\begin{minipage}{0.1\linewidth}{
\includegraphics[width=1\linewidth]{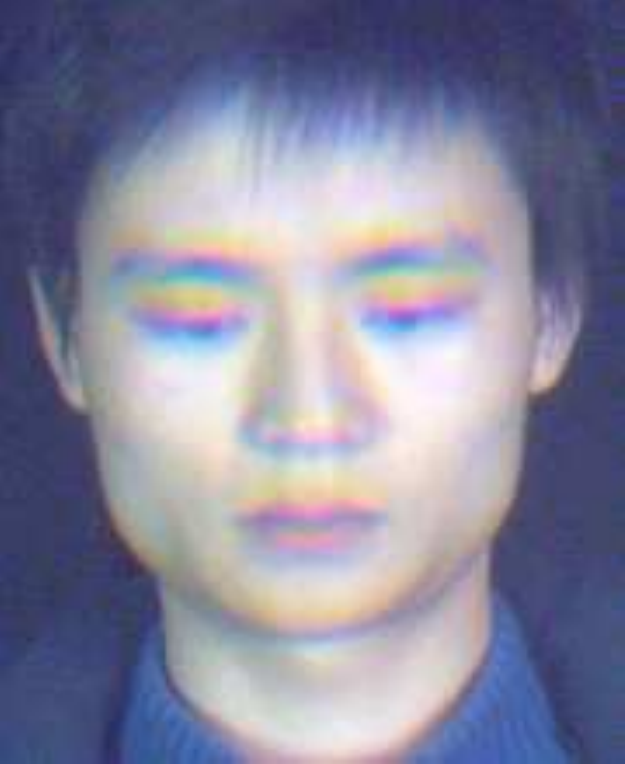}\\
\includegraphics[width=1\linewidth]{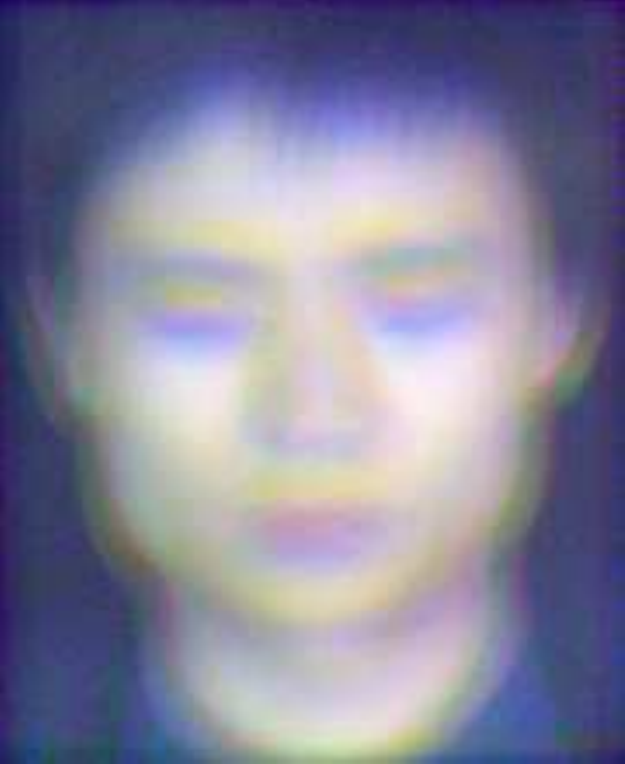}\\
\includegraphics[width=1\linewidth]{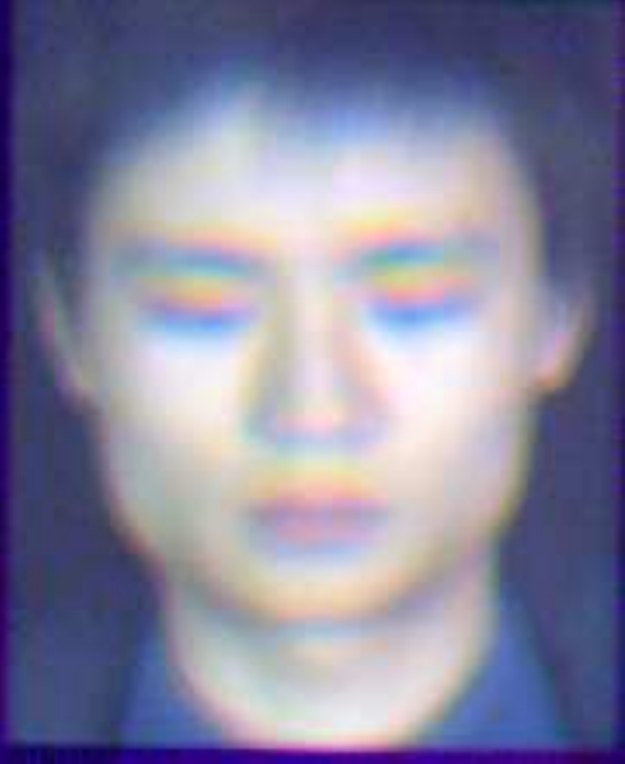}}
\end{minipage}}
\subfigure[]{
\begin{minipage}{0.1\linewidth}{
\includegraphics[width=1\linewidth]{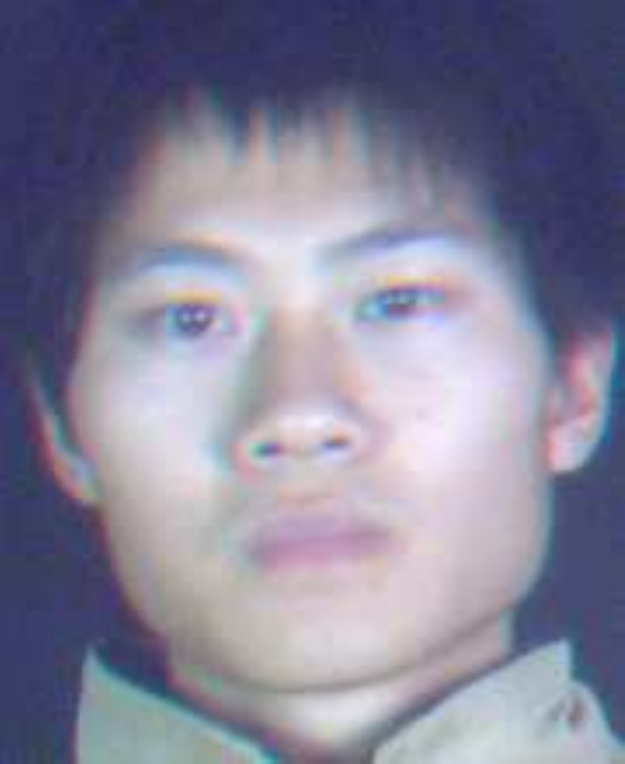}\\
\includegraphics[width=1\linewidth]{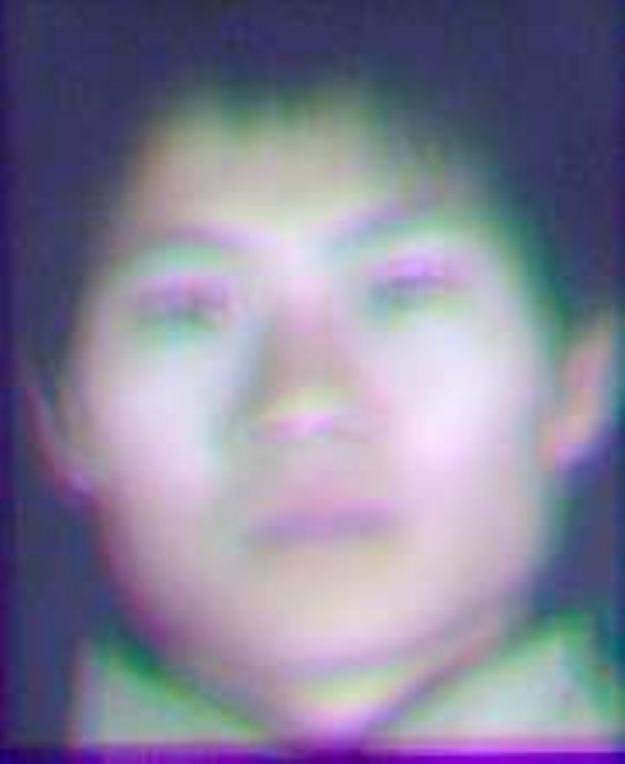}\\
\includegraphics[width=1\linewidth]{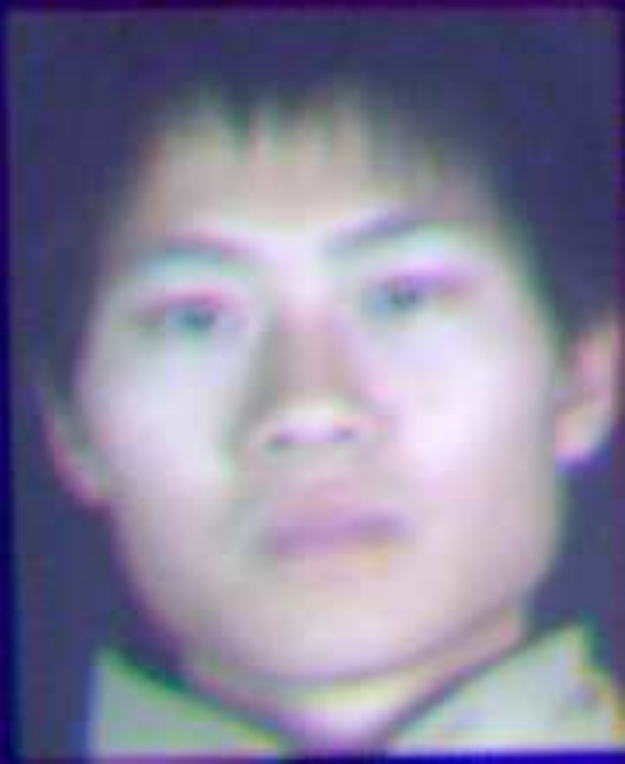}}
\end{minipage}}
\caption[Registration results in simulated experiment]{The registration result of a hyperspectral face image can be visually observed by rendering it as RGB image. Original image (top), misaligned image (center), and registered image (bottom). The registered images are sharper because the bands are aligned to each other compared to misaligned images.}
\label{fig:reg-results-sim}
\end{figure*}
\end{landscape}

\begin{landscape}
\begin{figure*}[t]
\centering
\subfigure[]{
\begin{minipage}{0.10\linewidth}{
\includegraphics[width=1\linewidth]{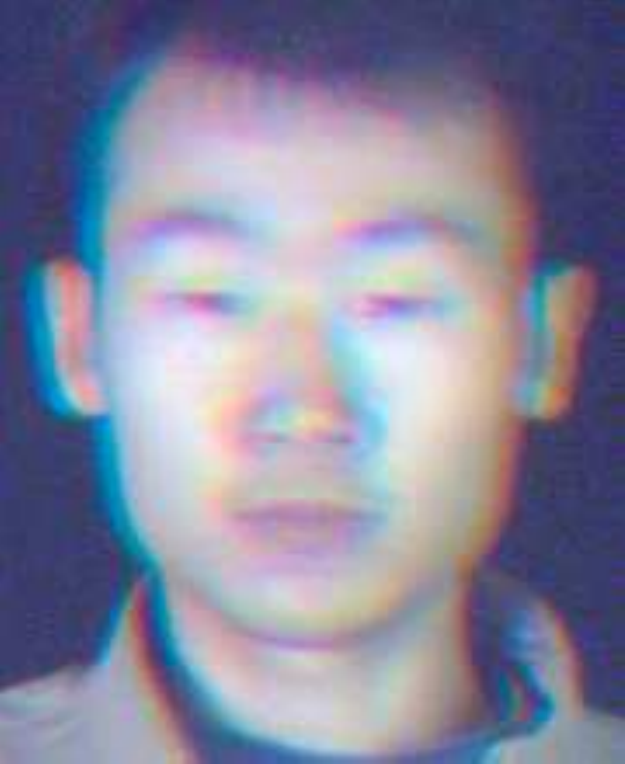}\\
\includegraphics[width=1\linewidth]{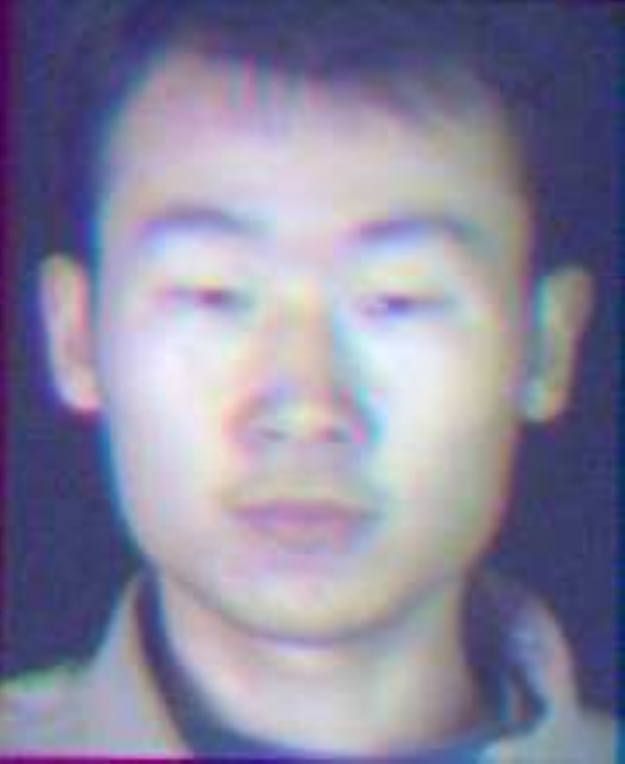}}
\end{minipage}}
\subfigure[]{
\begin{minipage}{0.10\linewidth}{
\includegraphics[width=1\linewidth]{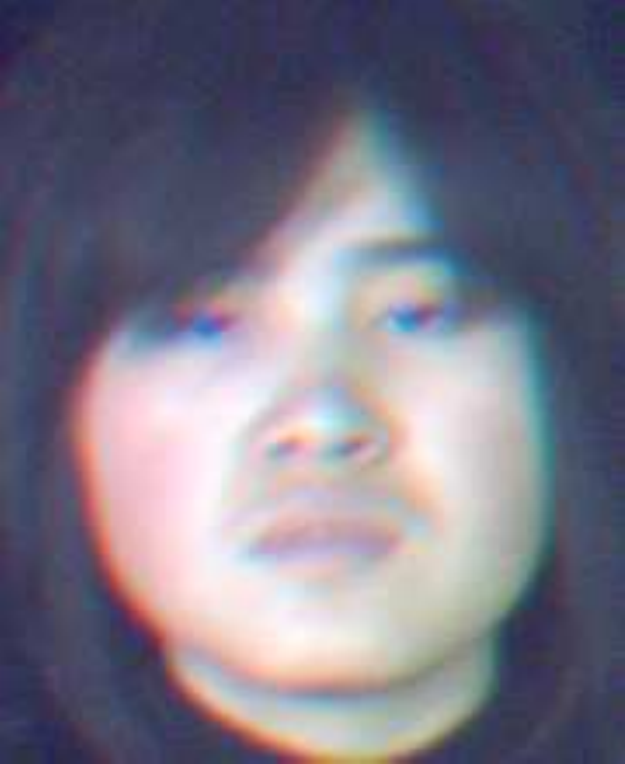}\\
\includegraphics[width=1\linewidth]{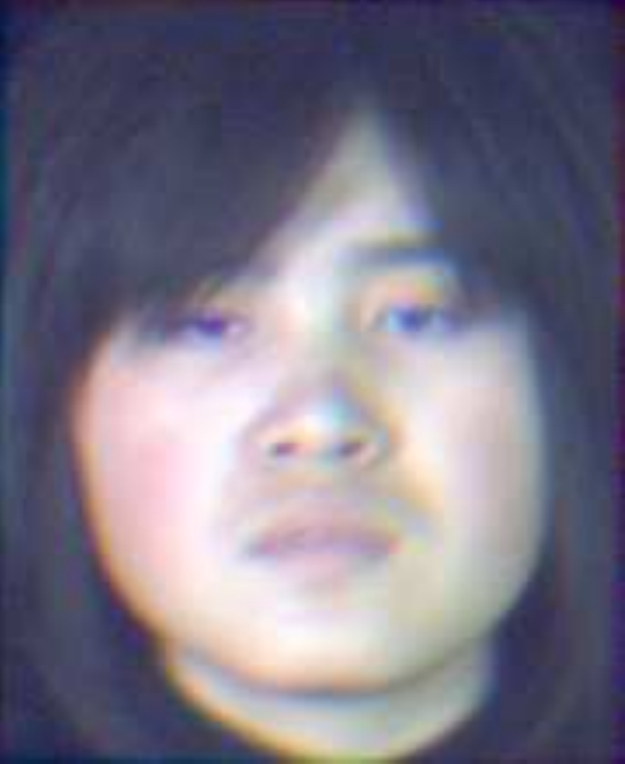}}
\end{minipage}}
\subfigure[]{
\begin{minipage}{0.10\linewidth}{
\includegraphics[width=1\linewidth]{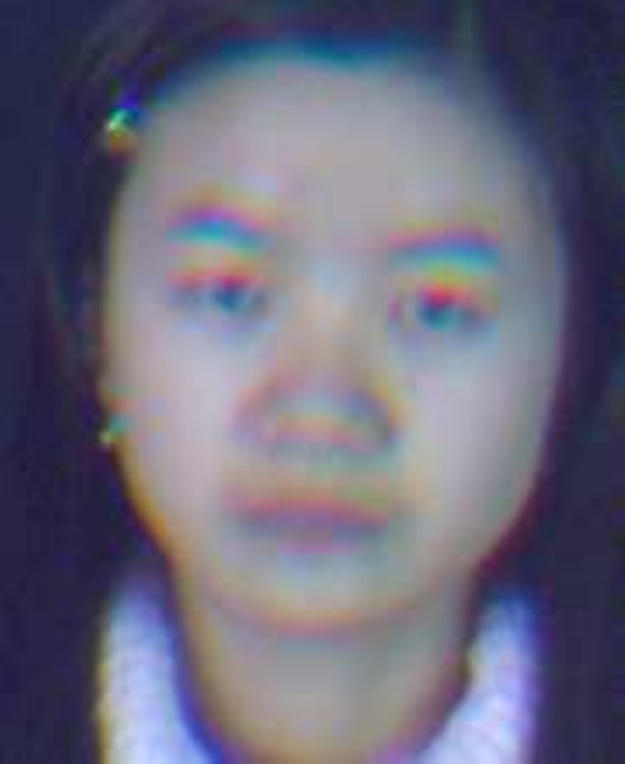}\\
\includegraphics[width=1\linewidth]{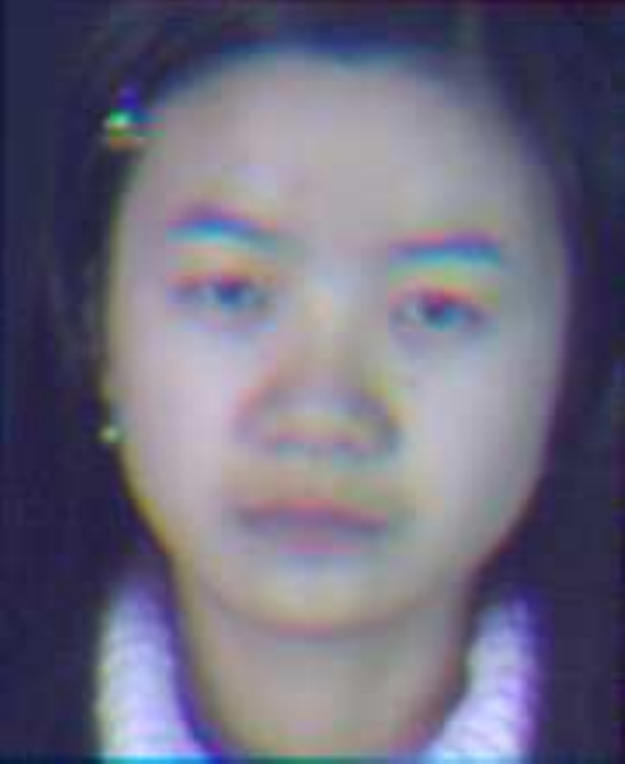}}
\end{minipage}}
\subfigure[]{
\begin{minipage}{0.10\linewidth}{
\includegraphics[width=1\linewidth]{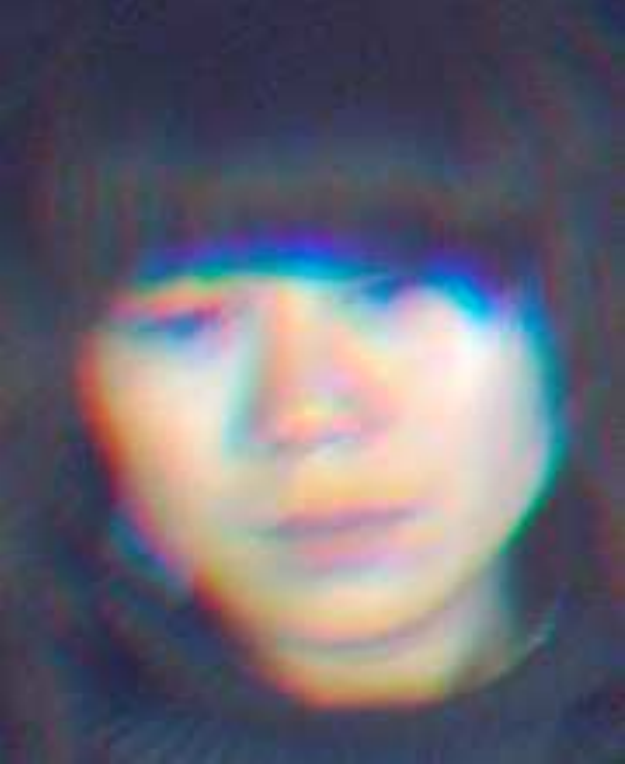}\\
\includegraphics[width=1\linewidth]{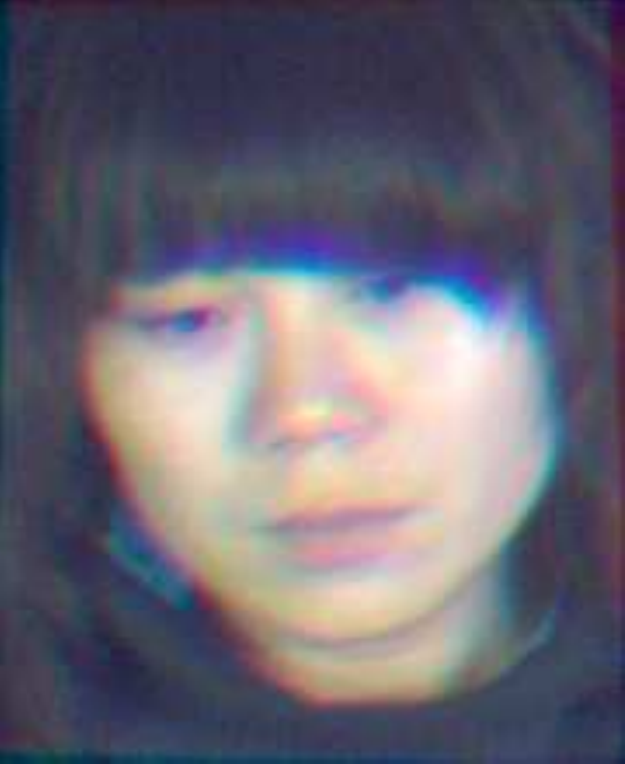}}
\end{minipage}}
\subfigure[]{
\begin{minipage}{0.10\linewidth}{
\includegraphics[width=1\linewidth]{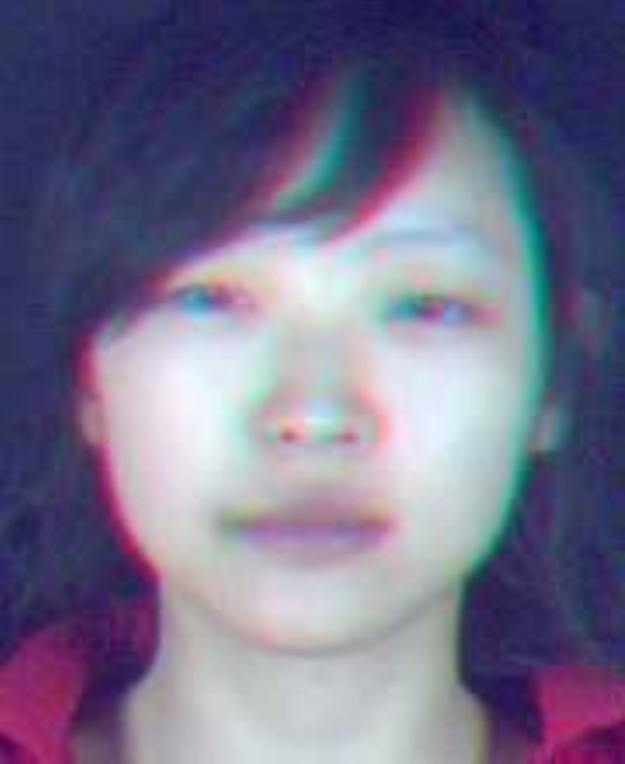}\\
\includegraphics[width=1\linewidth]{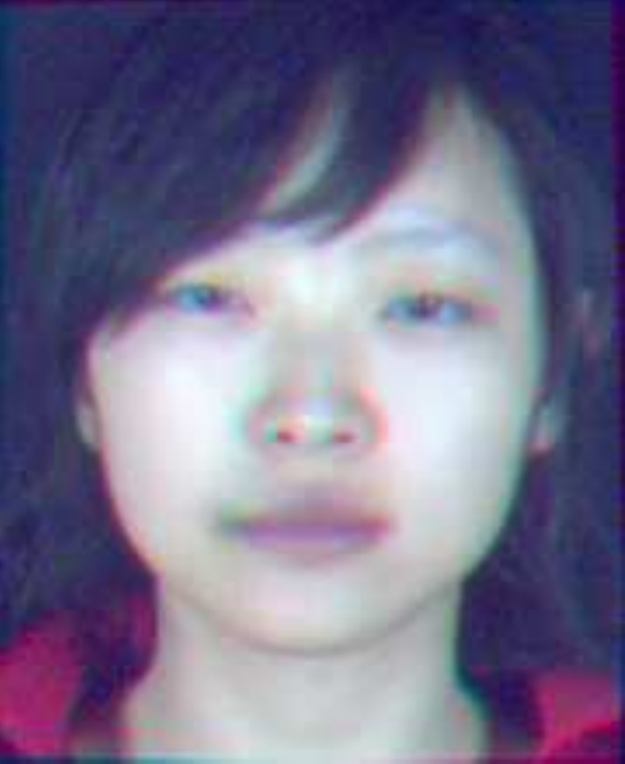}}
\end{minipage}}
\subfigure[]{
\begin{minipage}{0.10\linewidth}{
\includegraphics[width=1\linewidth]{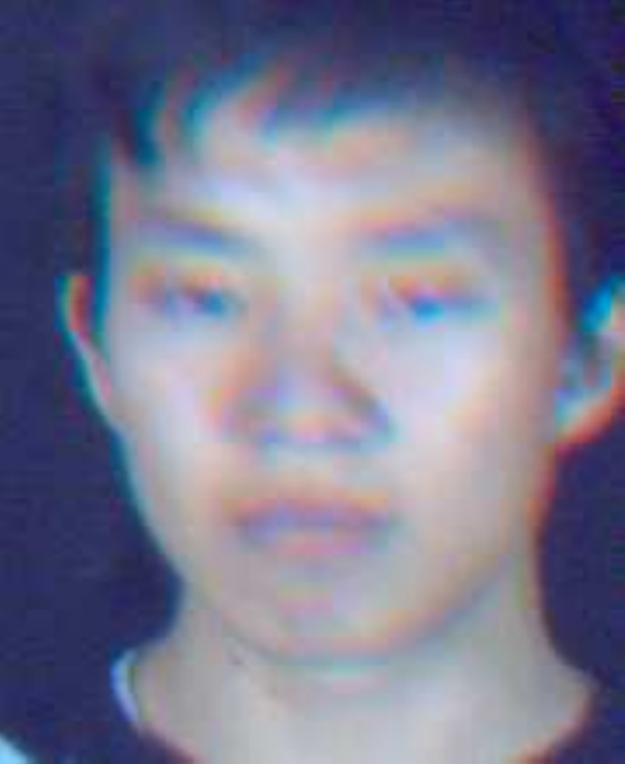}\\
\includegraphics[width=1\linewidth]{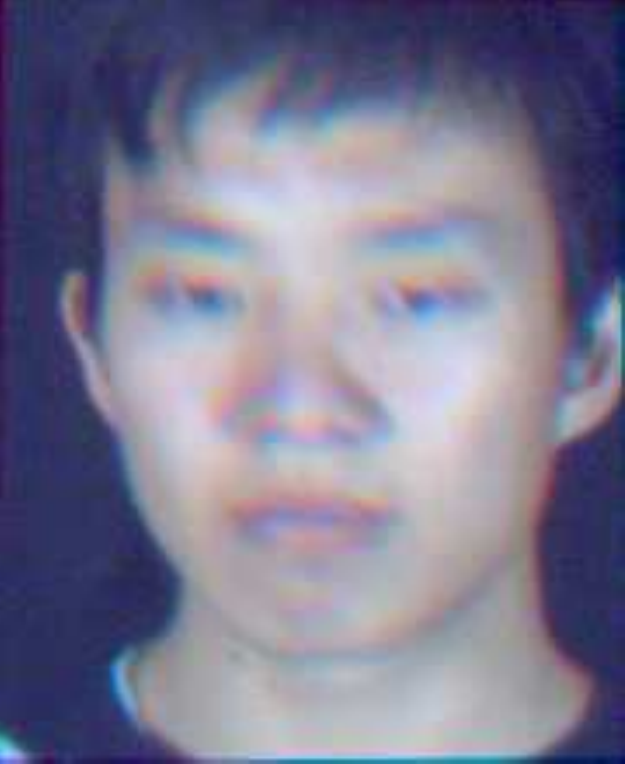}}
\end{minipage}}
\subfigure[]{
\begin{minipage}{0.10\linewidth}{
\includegraphics[width=1\linewidth]{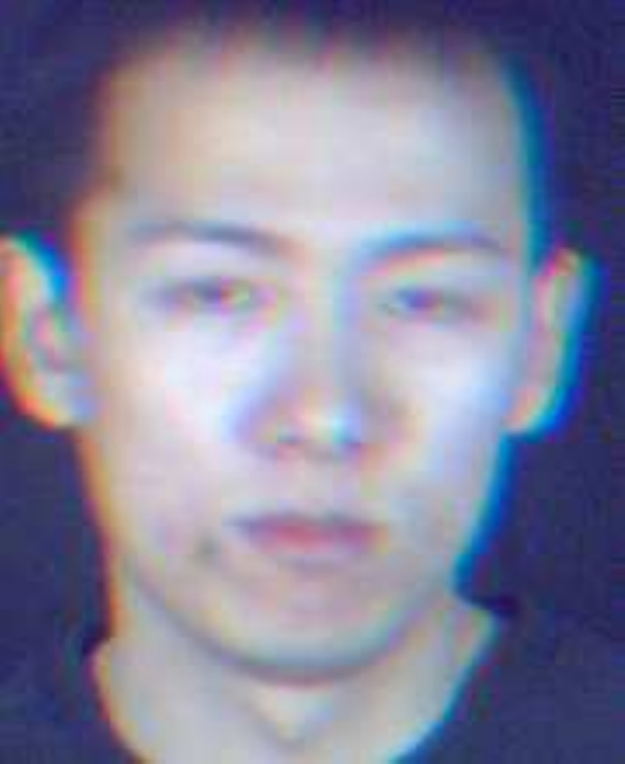}\\
\includegraphics[width=1\linewidth]{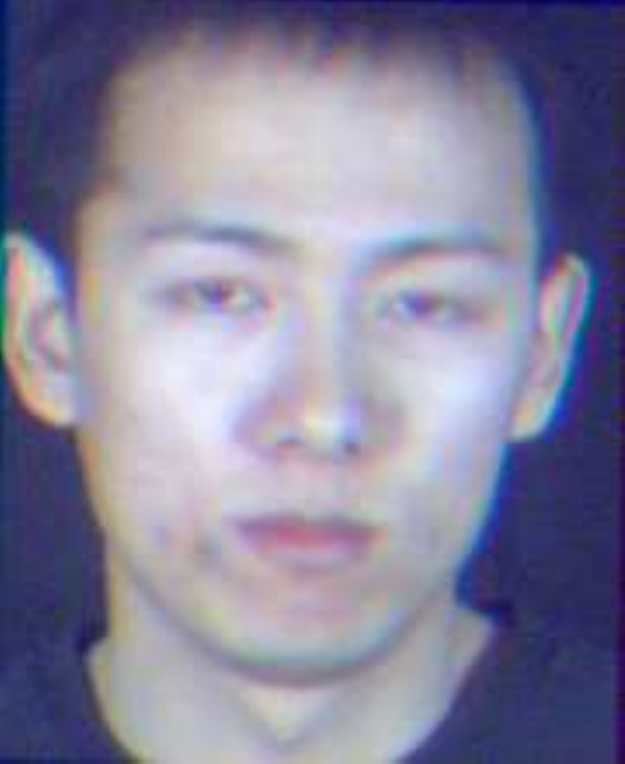}}
\end{minipage}}
\subfigure[]{
\begin{minipage}{0.10\linewidth}{
\includegraphics[width=1\linewidth]{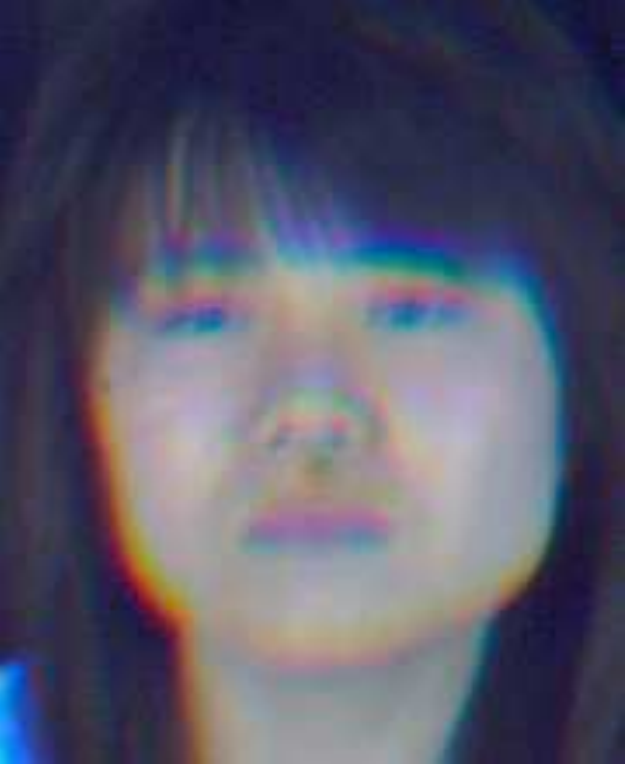}\\
\includegraphics[width=1\linewidth]{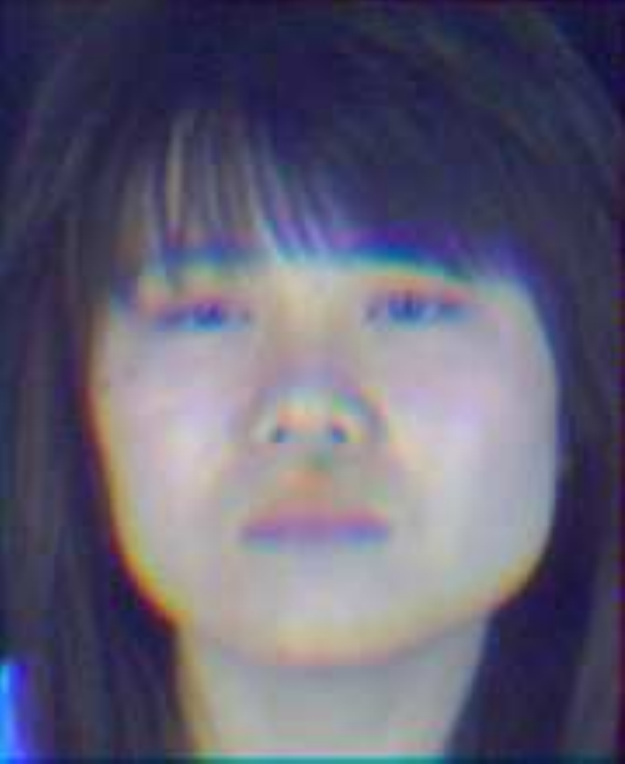}}
\end{minipage}}
\subfigure[]{
\begin{minipage}{0.10\linewidth}{
\includegraphics[width=1\linewidth]{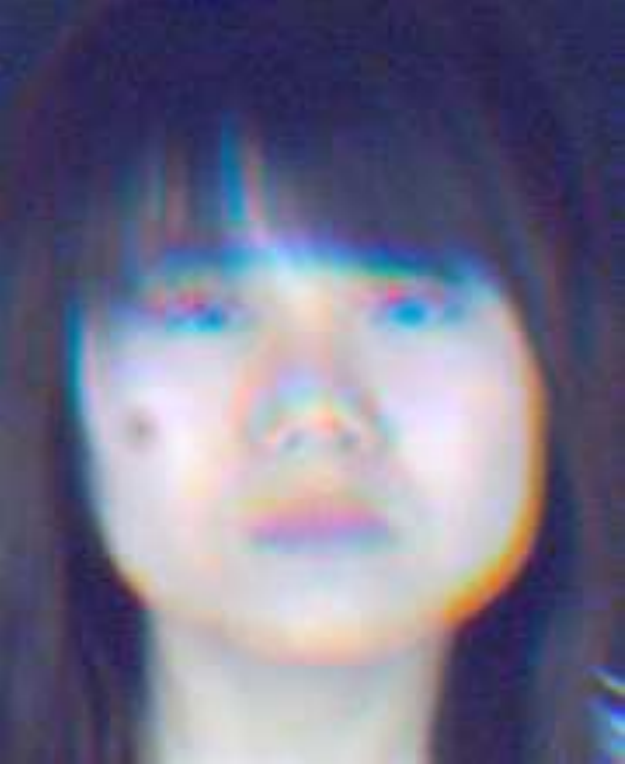}\\
\includegraphics[width=1\linewidth]{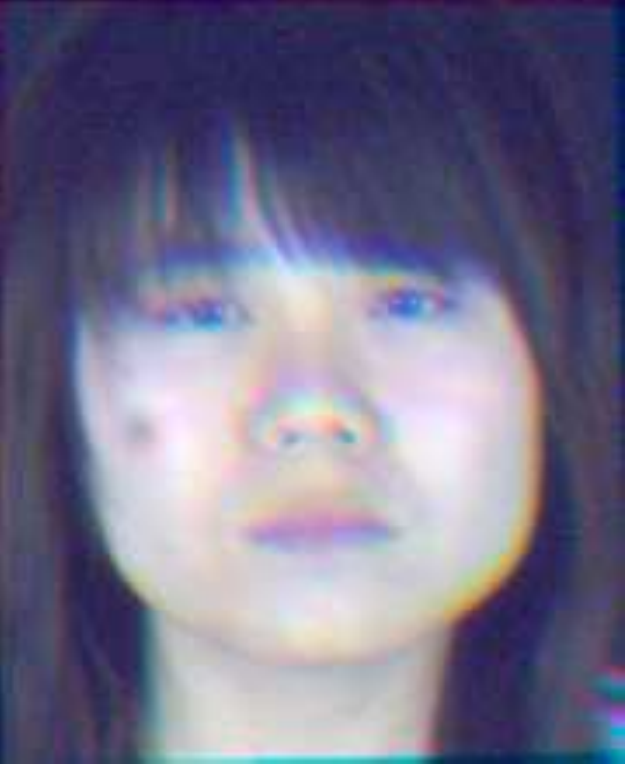}}
\end{minipage}}
\caption[Registration results in real experiment]{The registration result of a hyperspectral face image can be visually observed by rendering it as RGB image. The misaligned images are blurry especially in the areas of movement (facial outline, hair etc.). The registered images are sharper because the bands are relatively aligned.}
\label{fig:reg-results-real}
\end{figure*}
\end{landscape}

\begin{figure}[t]
\centering
\includegraphics[width=0.48\linewidth]{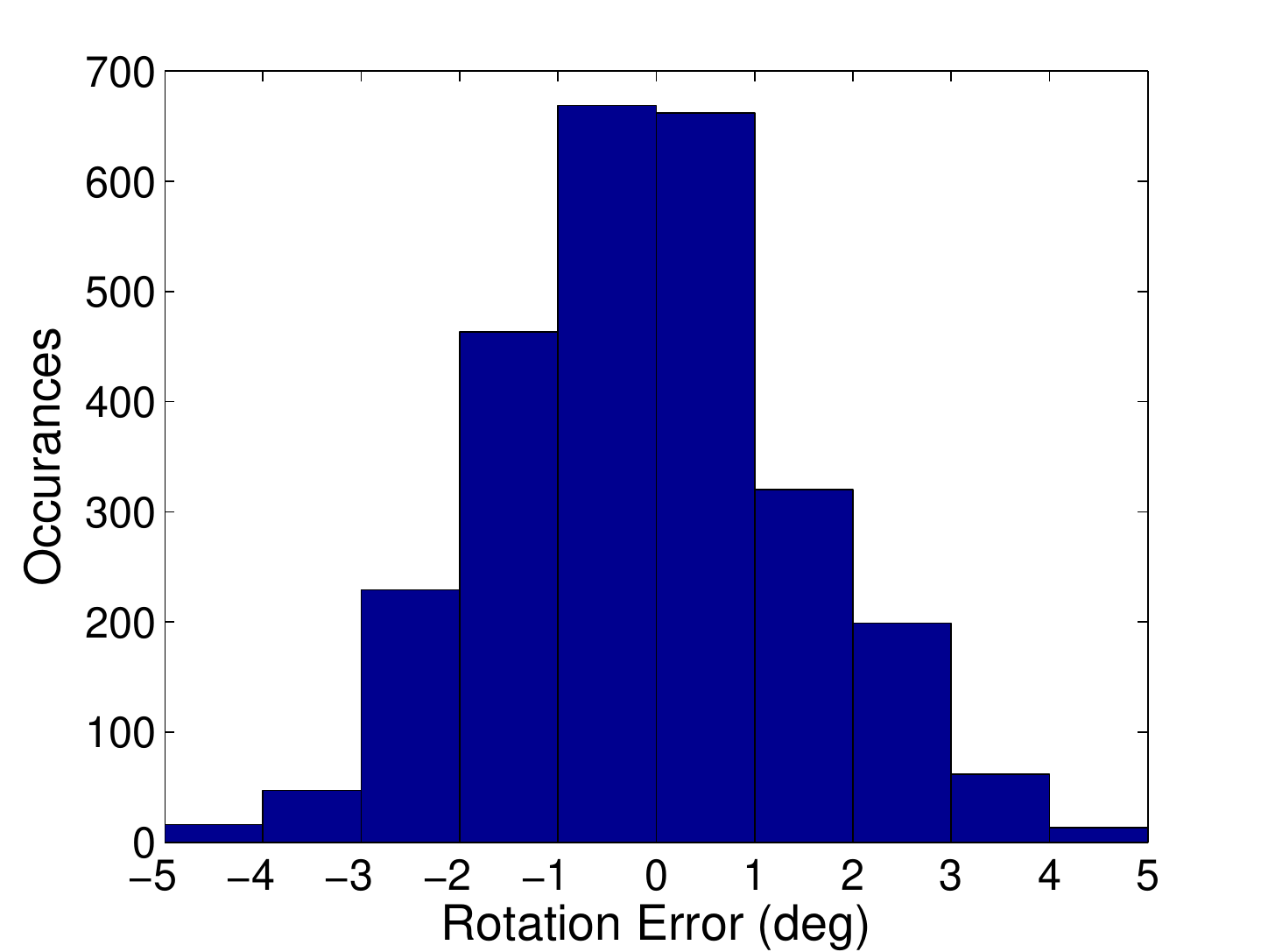}
\includegraphics[width=0.48\linewidth]{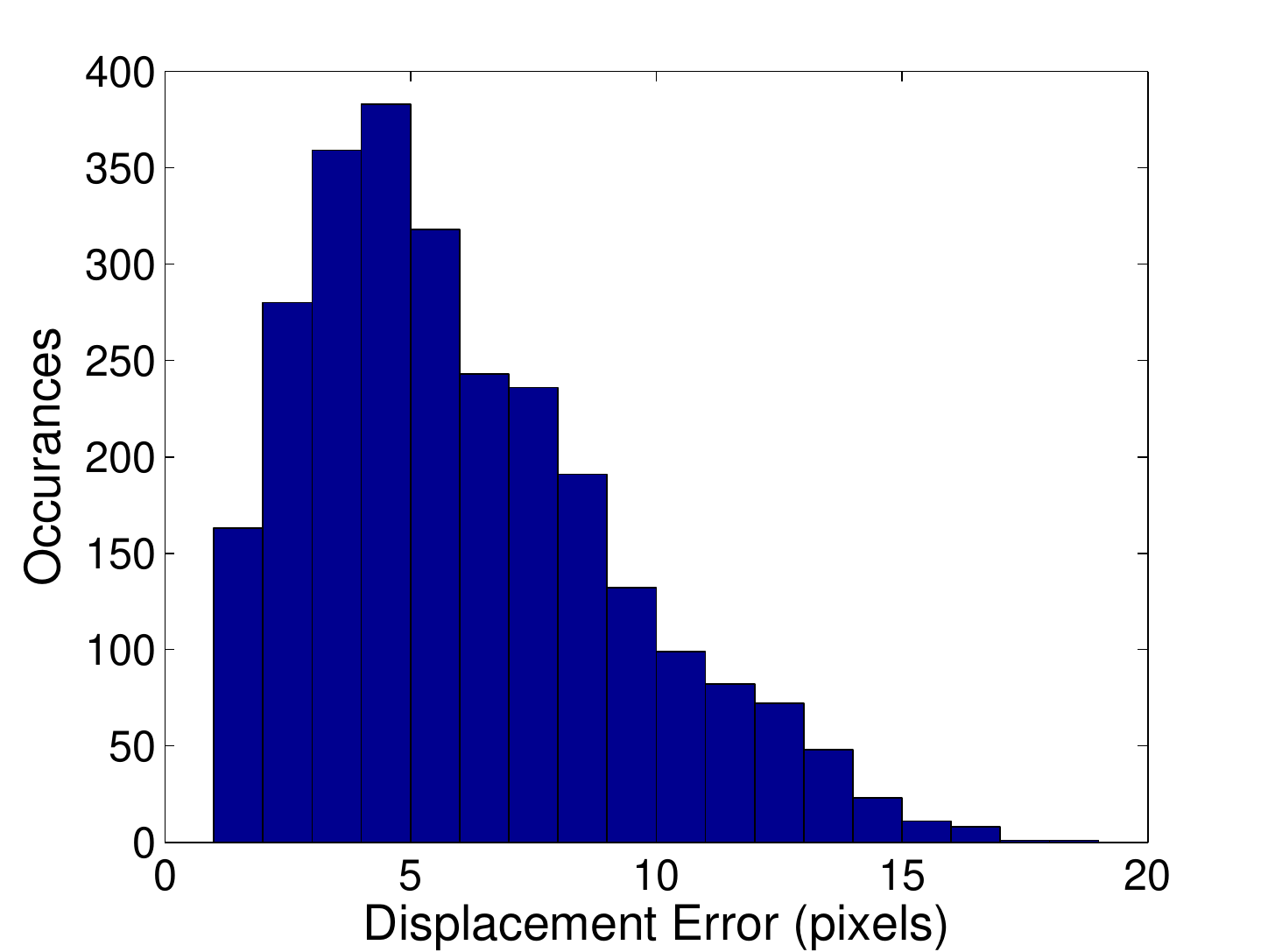}
\caption[Registration result of simulated images]{Errors in rotation and translation between the bands of 113 misaligned and aligned hyperspectral images. The rotational errors peak at zero degrees and translational error peak at 5 indicate improvement in cross spectral alignment.}
\label{fig:reg-error-sim}
\end{figure}

For the next experiment, we carefully selected 38 images from the PolyU hyperspectral face database which had noticeable misalignments during acquisition. Consider the raw spectral image $\mathbf{X}$ whose bands are misaligned, and the registered spectral image $\mathbf{Y}$ whose bands are aligned by the proposed algorithm. The proposed registration algorithm aligns the bands of hyperspectral face image to give the registered image $\mathbf{Y}$. From each cube $\mathbf{X}$ and $\mathbf{Y}$ we compute the sum of squared difference of the target image with the source image for each $k^\textrm{th}$ band, $k = 1,2,...,p$.
\begin{equation}
\mathbf{e}_x = \sum_i \sum_j {|\mathbf{X}_k(i,j)-\bar{\mathbf{X}}_k(i,j)|}^2~.
\end{equation}
Similarly, for registered cube $\mathbf{Y}$
\begin{equation}
\mathbf{e}_y = \sum_i \sum_j {|\mathbf{Y}_k(i,j)-\bar{\mathbf{Y}}_k(i,j)|}^2~,
\end{equation}
where $\mathbf{e}_x,\mathbf{e}_y \in \mathbb{R}^p$.
We compute the registration improvement from an unregistered spectral image $\mathbf{X}$ to its registered version $\mathbf{Y}$ as
\begin{equation}
\mathbf{e}_r = \frac{\mathbf{e}_x-\mathbf{e}_y}{\mathbf{e}_x}~.
\end{equation}

Figure~\ref{fig:reg-results-real} shows the registration results of real misaligned images. The improvement is observable after a close analysis. In order to numerically observe this improvement, we plot the improvement in sum of squared difference between consecutive bands. Figure~\ref{fig:reg-error-real} shows the improvement in $\mathbf{e}_r$ between consecutive bands of hyperspectral face images. It can be seen that most of the bands demonstrate a positive $\mathbf{e}_r$ which clearly indicates an improvement in cross spectral alignment.

\clearpage

\begin{figure}[h]
\centering
\includegraphics[width=0.4\linewidth]{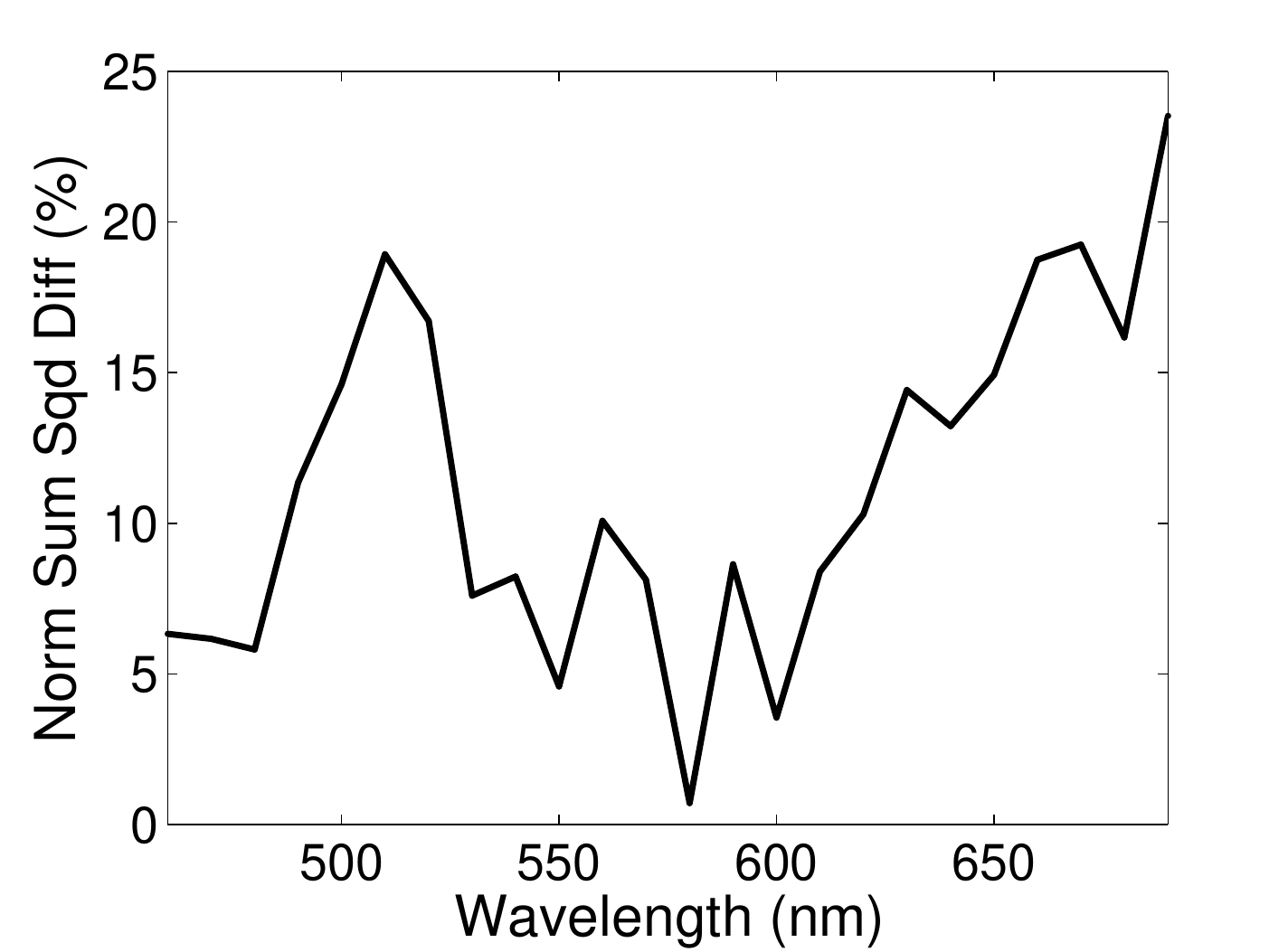}
\caption[Registration result of real images]{Average improvement in registration error between consecutive bands of 38 hyperspectral face images.}
\label{fig:reg-error-real}
\end{figure}

\section{Conclusion}
We presented the Cross Spectral Similarity feature for cross spectral image registration. We demonstrated that the extraction of self similarities across spectral bands holds promise in accurate alignment of hyperspectral images. Experiments on simulated and real misaligned hyperspectral images from the PolyU hyperspectral face database show the efficacy of the proposed approach in cross spectral registration with low registration errors. The proposed descriptor can be extended to other heterogenous image registration scenarios as it is extracted independently from the different modalities.


\chapter[Joint Group Sparse Principal Component Analysis]{Joint Group Sparse PCA\\for Compressed Hyperspectral Imaging} 

\label{Chapter5} 



Principal Component Analysis (PCA) is a powerful tool for unsupervised data analysis and visualization. PCA gives an orthogonal basis aligned with the directions of maximum variance of the data. It is useful for projecting the data onto a subspace defined by the most significant basis vectors. However, each principal component is a linear combination of \emph{all} features which makes measurement of all features essential. In some applications (e.g. spectral imaging), each sensed feature (band) may come at an additional cost of acquisition, processing and storage. Moreover, not all measured features may be important to the potential application with regards to the information in the signal and the relative noise. Therefore, it is desirable to select the most informative subset of the features to consequently reduce the cost of sensing the additional less informative features.

Sparse PCA enforces sparsity on the linear combination of input features used to compute the PCA basis. Zhou et al.~\cite{zou2006sparse} cast Sparse PCA as a regression type optimization problem and imposed the lasso constraint~\cite{tibshirani1996regression} to approximate the data with a sparse linear combination of the input features. Such an approach is good for interpretation of the data but would still require the measurement of all the input features. Other algorithms for computing Sparse PCA include the SCoTLASS algorithm~\cite{jolliffe2003modified} which aims at maximizing the Rayleigh quotient of the covariance matrix of the data using a non-convex optimization, the DSPCA algorithm~\cite{d2007direct} which solves a convex relaxation of the sparse PCA problem, the low rank matrix approximation method with sparsity constraint~\cite{shen2008sparse}, Sparse PCA with positivity constraints~\cite{zass2006nonnegative} and the generalized power method~\cite{journee2010generalized}. In all of these methods, the computation of each basis vector is dealt as an independent problem, the basis vectors are individually sparse but may not be jointly sparse.

Another aspect overlooked by Sparse PCA is the structure of the data in terms of groups of correlated features~\cite{huang2011learning}. For example, image pixels are organized on a rectangular grid exhibiting some sort of connectivity and neighborhood relationship. Similarly, gene expression data involves groups of genes corresponding to the same biological processes or sets of genes that are physical neighbors. It is sometimes desirable to encode relationship of the features in Sparse PCA, so that sparsity follows the group structure. Standard sparse solutions do not offer the incorporation of feature groups.

\clearpage

A rather obvious extension of the lasso formulation in Sparse PCA to Group Sparse PCA is to introduce the group lasso penalty ~\cite{yuan2006model,friedman2010note}. Group lasso uses the $\ell_1/\ell_2$ mixed vector norm to shrink all features in predefined groups with small magnitude to zero. Guo et al.~\cite{guo2010principal} proposed Sparse Fused PCA which derives group structures from feature correlation. They augment the Sparse PCA formulation~\cite{zou2006sparse} by an additional penalty term that encourages the coefficients of highly correlated features to be similar and subsequently fused. However, their solution does not directly result in sparsity, but only forces the coefficients to a similar value which may or may not be close to zero. Jenatton et al.~\cite{jenatton2010structured} used the non-convex $\ell_\alpha/\ell_2$ quasi-norm (where $\alpha \in (0,1)$) for structured sparse PCA. Rectangular patterns are rotated to obtain a larger set of convex patterns for defining groups. They showed the benefits of using structured sparsity in image denoising and face recognition tasks. Grbovic et al.~introduced two types of grouping constraints into the Sparse PCA problem to ensure reliability of the resulting groups~\cite{grbovic2012sparse}. Jacob et al.~\cite{jacob2009group} proposed a new penalty function that allowed potentially overlapping groups, whereas, Huang et al.~\cite{huang2011learning} generalized the group sparsity to accommodate arbitrary structures.

While group sparsity accounts for the data structure, it still does not guarantee joint sparsity of the complete PCA basis with respect to the input features. We present Joint Group Sparse PCA (JGSPCA) which forces the basis coefficients corresponding to a group of features to be jointly sparse. Joint sparsity ensures that the complete data be reconstructed from only a sparse set of input features whereas the group sparsity ensures that the structure of the correlated features is maximally preserved.

An important application of Sparse PCA and Group Sparse PCA is data interpretation through dimensionality reduction. However, the proposed Joint Group Sparse PCA (JGSPCA) can also be used for model based compressed sensing. Classical compressed sensing does not assume any prior model over the data and is based on the restricted isometry property (see~\cite{donoho2006compressed} and the references therein). In other words they are not learning based. On the other hand, the proposed JGSPCA algorithm is learning based and is closer to model based compressive sensing theory~\cite{baraniuk2007compressive}.

We validate the proposed JGSPCA algorithm on the problem of compressed hyperspectral imaging and recognition. A hyperspectral image is a data cube comprising two spatial and one spectral dimension. Since the spectra of natural objects is smooth, their variations can be approximated by a few basis vectors~\cite{nascimento2005psychophysical}. Besides, there is a high correlation among neighboring pixels in the spatial domain. In a compact representation of such a data, structure needs to be preserved in the spatial dimension, while sparsity is desirable in the spectral dimension. Pixels from local spatial neighborhood are grouped together, while sparsity is induced in the spectral dimension. This redundancy in the data makes hyperspectral images a good candidate for sparse representation~\cite{chakrabarti2011statistics} as well as compressed sensing~\cite{golbabaee2012hyperspectral}. We present the Joint Group Sparse PCA algorithm in Section~\ref{sec:JGSPCA}. Description of the experimental setup, evaluation protocol, and datasets used in the experiments are given in Section~\ref{sec:exp-JGSPCA}. The results of compressed sensing and recognition experiments are presented in Section~\ref{sec:results}. The chapter is concluded in Section~\ref{sec:conc-JGSPCA}.

\section{Joint Group Sparse PCA}
\label{sec:JGSPCA}


Let $\mathbf{X}={[\mathbf{x}_1,\mathbf{x}_2,\ldots,\mathbf{x}_n]}^{\intercal} \in \mathbb{R}^{n \times p}$ be a data matrix which comprises $n$ observations $\mathbf{x}^i \in\mathbb{R}^p$, where $p$ is the number of features. Assume that the sample mean $\bar{\mathbf{x}} \in \mathbb{R}^{p}$ has been subtracted from all $n$ observations so that the columns of $\mathbf{X}$ are centered. Generally, a PCA basis can be computed by singular value decomposition of the data matrix.
\begin{equation}
\label{eq:svd-JGSPCA}
\mathbf{X} = \mathbf{USV}^{\intercal}~,
\end{equation}
where $\mathbf{V} \in \mathbb{R}^{p \times p}$ are the PCA \emph{basis vectors} (loadings) and $\mathbf{S}$ is the diagonal matrix of \emph{eigenvalues}. $\mathbf{V}$ is an orthonormal basis such that $\mathbf{v}_i^\intercal\mathbf{v}_j=0~\forall~i\neq j$ and $\mathbf{v}_i^\intercal\mathbf{v}_j=1~\forall~i=j$. If $\mathbf{X}$ is low rank, it is possible to significantly reduce its dimensionality by using the $k$ most significant basis vectors. The projection of data $\mathbf{X}$ upon the first $k$ basis vectors of $\mathbf{V}$ gives the \emph{principal components} (scores). An alternative formulation treats PCA as a regression type optimization problem
\begin{equation}
\label{eq:PCAregress-JGSPCA}
\underset{\hat{\mathbf{A}}}{\arg \min} {\|\mathbf{X}-\mathbf{XAA}^{\!\intercal}\|}_F^2 \qquad \text{subject to}\, \mathbf{A}^{\!\intercal}\mathbf{A}=\mathbf{I}_k~,
\end{equation}
where ${\|.\|}_F$ is the Frobenius norm, $\mathbf{A} \in \mathbb{R}^{p \times k}$ is an orthonormal basis $\{\bm{\alpha}_1,\bm{\alpha}_2,$ $\ldots,\bm{\alpha}_k\}$. Here, $\mathbf{A}$ is equivalent to the first $k$ columns of $\mathbf{V}$. Each principal component is derived by a linear combination of all $p$ features and consequently $\bm{\alpha}$ is non-sparse. In order to obtain a sparse PCA basis, a regularization term is usually included in the regression formulation~\eqref{eq:PCAregress-JGSPCA}. Inclusion of a sparse penalty reduces the number of features involved in each linear combination for obtaining the principal components. One way to obtain sparse basis vectors is by imposing the $\ell_0$ constraint upon the regression coefficients (basis vectors)~\cite{zou2006sparse}.
\begin{multline}
\label{eq:SPCACriterion-JGSPCA}
\underset{\hat{\mathbf{A}},\hat{\mathbf{B}}}{\arg \min} {\|\mathbf{X}-\mathbf{XBA}^{\!\intercal}\|}_F^2+ \lambda\sum_{j=1}^{k}{\|\bm{\beta}_j\|}_0\\ \text{subject to} \qquad \mathbf{A}^{\!\intercal}\mathbf{A}=\mathbf{I}_k~,
\end{multline}
where $\mathbf{B} \in \mathbb{R}^{p \times k}$ corresponds to the required sparse basis $\{\bm{\beta}_1,\bm{\beta}_2,\ldots,\bm{\beta}_k\}$. The $\ell_0$-norm regularization term penalizes the number of non-zero coefficients in $\bm{\beta}$, whereas the loss term simultaneously minimizes the reconstruction error ${\|\mathbf{X}-\mathbf{XB}\mathbf{A}^{\!\intercal}\|}_F^2$. If $\lambda$ is zero, the problem reduces to finding the ordinary PCA basis vectors, equivalent to~\eqref{eq:PCAregress-JGSPCA}. When $\lambda$ is large, most coefficients of $\bm{\beta}_j$ will shrink to zero, resulting in sparsity as shown in Figure~\ref{fig:pat1-JGSPCA}.

The above formulation allows us to individually determine informative features. However, it may not account for the structural relationship among multiple features. It is sometimes desirable that the sparsity patterns in the computed basis be similar for correlated group of features. This means that the features should exhibit a sparsity structure which improves the interpretation of their underlying sources. To address this issue, we reconsider our problem from the view of grouping correlated features. The grouping of features can be known either a priori from domain information, or computed directly from the data by utilizing correlation.

Consider the $p$ features are now divided into $g$ mutually exclusive groups. Let $\mathcal{G}_i$ be the set of indices of features corresponding to the $i^{\textrm{th}}$ group. The number of features in the $i^{\textrm{th}}$ group is $p_i=|\mathcal{G}_i|$ such that the total number of features $p=\sum_{i=1}^{g} p_i$. Hence, $\mathbf{X}$ can be considered a horizontal concatenation of $g$ submatrices $[\mathbf{X}_{\centerdot{\mathcal{G}}_1},\mathbf{X}_{\centerdot{\mathcal{G}}_2},...,\mathbf{X}_{\centerdot{\mathcal{G}}_g}]$. Each $\mathbf{X}_{\centerdot\mathcal{G}_i} \in \mathbb{R}^{n \times p_i}$ contains data (columns of $\mathbf{X}$) corresponding to the features of the $i^{\textrm{th}}$ group. The group lasso regularization penalizes $\ell_2$-norm of the coefficients corresponding to a \emph{feature group}~\cite{yuan2006model}. It enforces sparsity on a group of coefficients, instead of individual coefficients. The group lasso constraint can be incorporated into~\eqref{eq:SPCACriterion-JGSPCA}, to achieve the Group Sparse PCA (GSPCA) criterion

\begin{multline}
\label{eq:GSPCACriterion}
\underset{\hat{\mathbf{A}},\hat{\mathbf{B}}}{\arg \min} \|\mathbf{X}-\sum_{i=1}^g \mathbf{X}_{\centerdot\mathcal{G}_i}\mathbf{B}^{\centerdot\mathcal{G}_i}\mathbf{A}^{\!\intercal}\|_F^2 +\lambda\sum_{j=1}^{k}\sum_{i=1}^{g}\eta_i{\|\bm{\beta}^{\centerdot\mathcal{G}_i}_j\|}_2\\ \text{subject to} \qquad \mathbf{A}^{\intercal}\mathbf{A}=\mathbf{I}_k~,
\end{multline}
where $\|.\|_2$ is the Euclidean norm and $\eta_i$ is the weight of the $i^{\textrm{th}}$ group. $\mathbf{B}^{\centerdot\mathcal{G}_i} \in\mathbb{R}^{p_i \times p}$ denotes the submatrix corresponding to the $i^{\textrm{th}}$ group of features in $\mathbf{B}$. The group lasso penalty $\sum_{i=1}^{g}\eta_i{\|\bm{\beta}^{\centerdot\mathcal{G}_i}\|}_2$ induces sparsity at the group level, i.e.~if the coefficients of the $i^{\textrm{th}}$ group are non-zero, the entire $p_i$ features of the group will be selected and vice versa~\cite{friedman2010note}. It is important to note that the factor $\eta_i$ will only affect the regularization penalty for differently sized groups (typically $\eta_i = \sqrt{p_i}$). In case of equally sized groups, the factor can be ignored altogether (or assumed $\eta_i=1$).

Notice that the $\ell_0$ penalty in~\eqref{eq:SPCACriterion-JGSPCA} has been replaced with an $\ell_{2,1}$ penalty in~\eqref{eq:GSPCACriterion}. This formulation can be considered to be a generalized form for group and non-group structured data. A group may even consist of a single feature, if it is not highly correlated with other features. Hence, in the extreme case of an uncorrelated data, each group will contain a single feature, i.e.~$g=p$.

Equation~\eqref{eq:GSPCACriterion} gives a sparse basis which is able to account for the group structure of the data. When the group constraint is enforced, the basis coefficients become sparse in a group-wise manner. Imposing the additional group constraint generally results in reduced sparsity within the feature groups. This phenomenon is illustrated for an example basis in Figure~\ref{fig:patterns-JGSPCA}. Figure~\ref{fig:pat1-JGSPCA} depicts a sparse basis obtained by the SPCA criterion~\eqref{eq:SPCACriterion-JGSPCA} which does not take the group structure into account. Figure~\ref{fig:pat2-JGSPCA} gives a group sparse basis obtained by the GSPCA criterion~\eqref{eq:GSPCACriterion} for the same data. Consider for instance the null coefficients within the groups $\mathcal{G}_i$ of an SPCA basis vector $\bm{\beta}_j$ . As a consequence of enforcing the group constraint, some of the coefficients that were null in the SPCA basis within the groups become non-zero in the GSPCA basis. Since the group sparsity is independently achieved in the basis vectors, each vector is sparse for a different group of features and the complete basis may still end up using all groups of features.

\begin{figure}[!h]
\footnotesize
\centering
\subfigure[SPCA Basis]{\label{fig:pat1-JGSPCA}
\begin{minipage}[b]{0.3\linewidth}
\centering
\includegraphics[trim = 232pt 32pt 240pt 2pt, clip, width=0.85\linewidth]{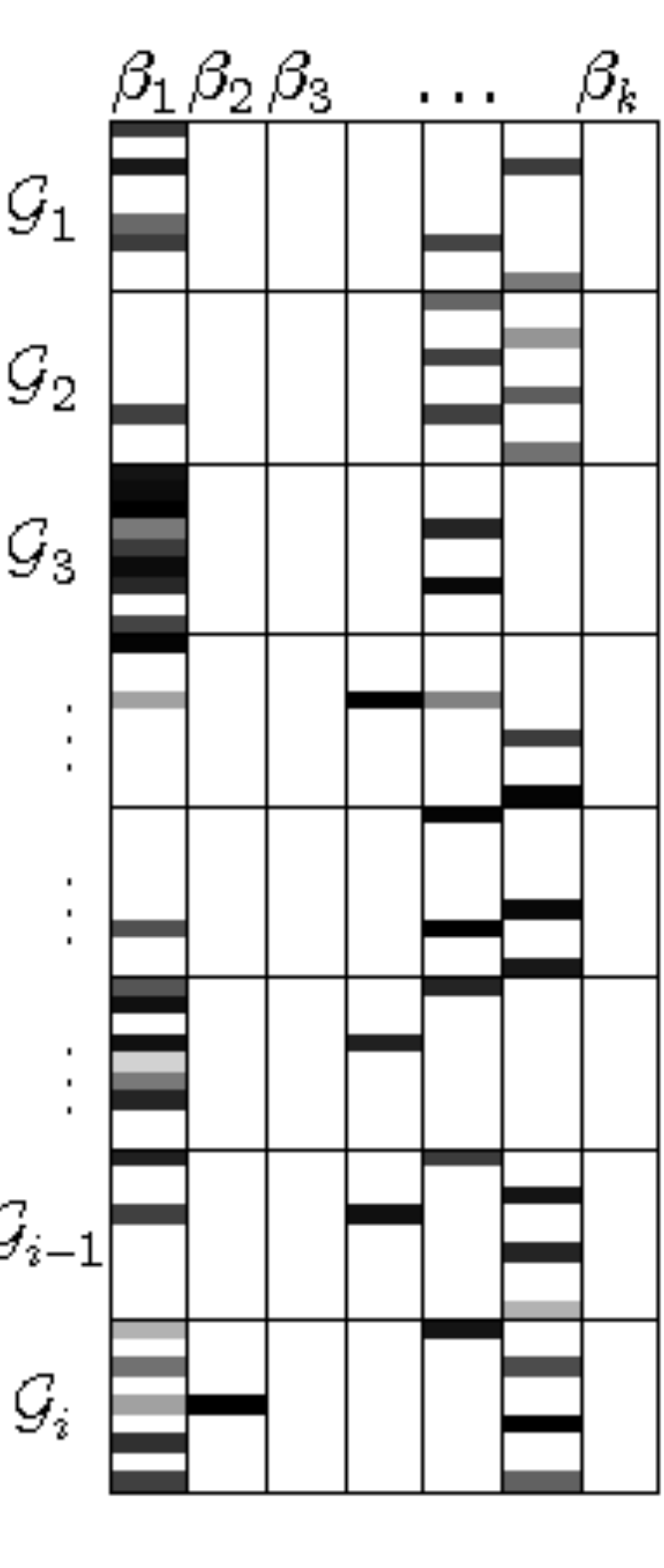}
\end{minipage}} \hfill
\subfigure[GSPCA Basis]{\label{fig:pat2-JGSPCA}
\begin{minipage}[b]{0.3\linewidth}
\centering
\includegraphics[trim = 232pt 32pt 240pt 2pt, clip, width=0.85\linewidth]{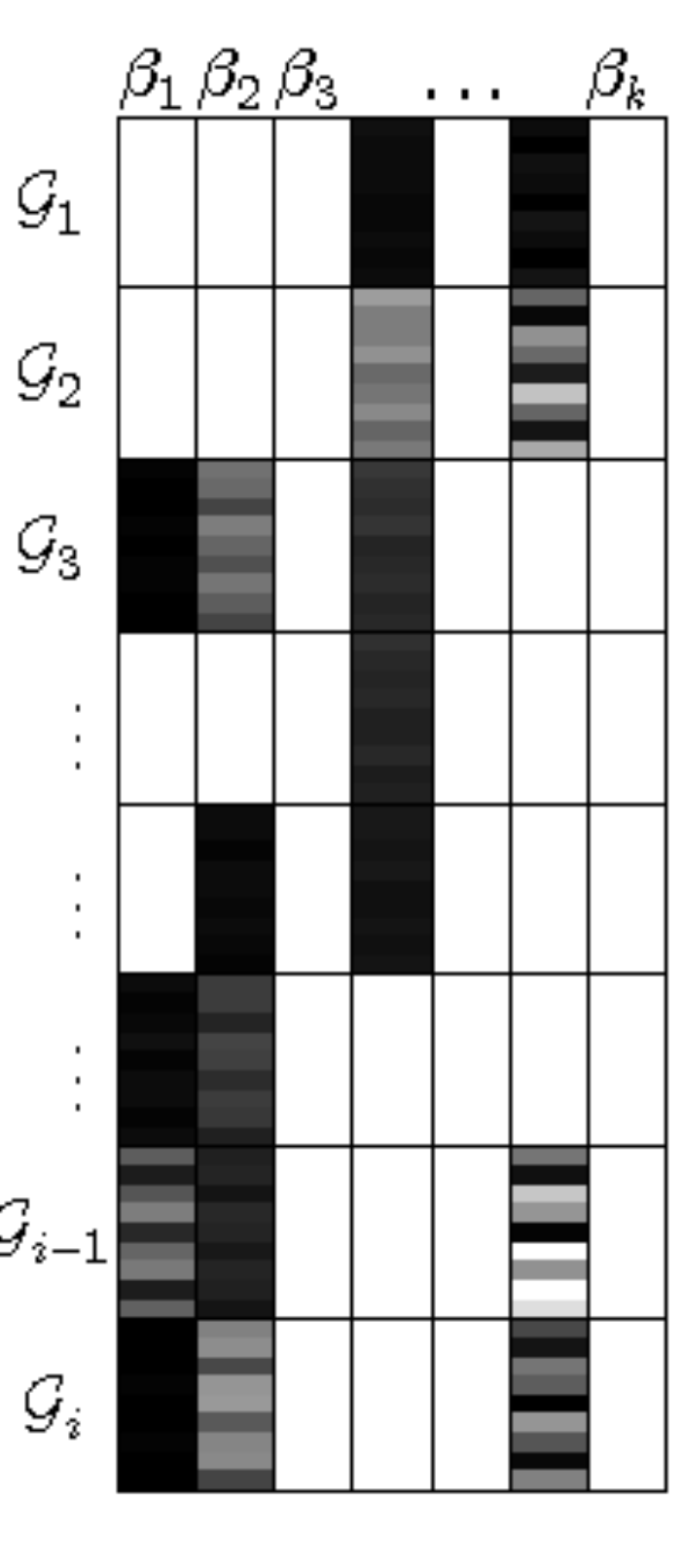}
\end{minipage}} \hfill
\subfigure[JGSPCA Basis]{\label{fig:pat3}
\begin{minipage}[b]{0.3\linewidth}
\centering
\includegraphics[trim = 232pt 32pt 240pt 2pt, clip, width=0.85\linewidth]{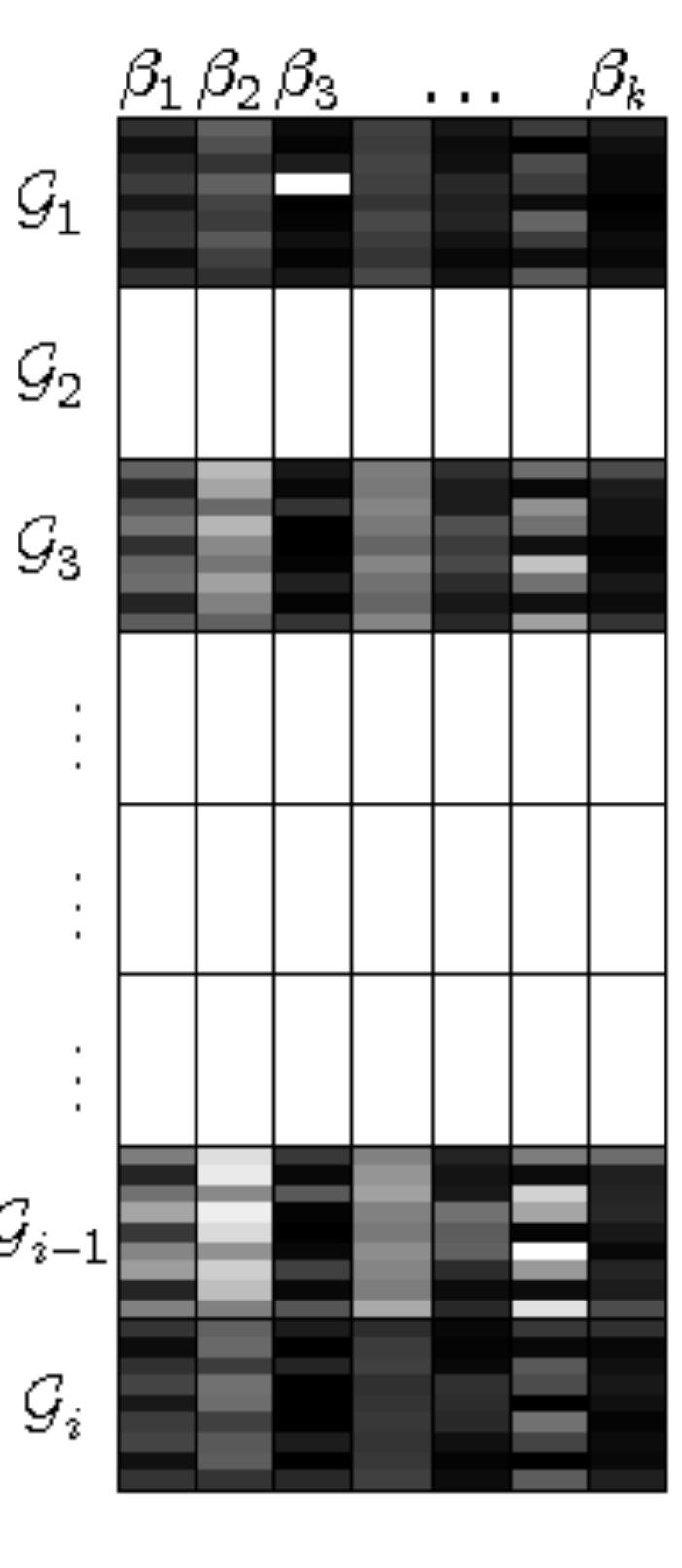}
\end{minipage}}
\caption[Basis vectors sparsity patterns]{This example illustrates the basis vectors $\bm{\beta}_j$ computed on a data $\mathbf{X}$ consisting of 8 feature groups of 9 features each ($g=8, p_i=9, k=7$). Dark rectangles are non-zero coefficients. The group sparsity applies across the groups of features in each basis vector, individually. The joint group sparsity ensures both sparsity among the groups and joint selection of groups across the basis vectors.}
\label{fig:patterns-JGSPCA}
\end{figure}

In several applications, it is desirable to perform feature selection such that the selected features explain the major variation of the data. This is particularly true for data consisting of large number of redundant features or where measurement of features is expensive. To achieve this goal, we expect all basis vectors $\bm{\beta}_j$ to end up using the same \emph{groups of features}. This kind of sparsity is called \emph{joint sparsity}~\cite{duarte2005joint,lee2012subspace}. Joint sparsity is neither considered by SPCA nor GSPCA, since they independently solve~\eqref{eq:SPCACriterion-JGSPCA} and~\eqref{eq:GSPCACriterion} for individual basis vectors $\bm{\beta}_j$. We propose to directly optimize for $\mathbf{B}$ to achieve joint sparsity while simultaneously achieving group sparsity. In other words, the coefficients corresponding to some groups of rows of $\mathbf{B}$ should altogether be null, as shown in Figure~\ref{fig:pat3}. Our proposed joint group sparsity can be obtained by imposing the following regularization penalty
\begin{equation}
\label{eq:ellF1}
\ell_{F_g,1}(\mathbf{B}) = \sum_{i=1}^{g} \eta_i \|\mathbf{B}^{\centerdot\mathcal{G}_i}\|_F~.
\end{equation}
The minimization of $\ell_1$-norm on the Frobenius norm of sub-basis $\mathbf{B}^{\centerdot\mathcal{G}_i}$ will force some of the sub-basis (group of rows of $\mathbf{B}$) to be null. This will result in joint group sparsity over the complete basis. The nullified groups directly correspond to the feature groups of $\mathbf{X}$ with minimum contribution in explaining the data. By including the joint group sparse regularization penalty~\eqref{eq:ellF1} in~\eqref{eq:SPCACriterion-JGSPCA}, the proposed Joint Group Sparse PCA criterion is obtained as

\begin{multline}
\label{eq:JGSPCACriterion-ellF1}
\underset{\hat{\mathbf{A}},\hat{\mathbf{B}}}{\arg \min} \|\mathbf{X}-\sum_{i=1}^{g}\mathbf{X}_{\centerdot\mathcal{G}_i}\mathbf{B}^{\centerdot\mathcal{G}_i}{\mathbf{A}^{\!\intercal}\|}_F^2+\lambda\sum_{i=1}^{g} \eta_g \|\mathbf{B}^{\centerdot\mathcal{G}_i}\|_F\\ \text{subject to} \qquad \mathbf{A}^{\!\intercal}\mathbf{A}=\mathbf{I}_k~,
\end{multline}
For sufficiently large values of $\lambda$, some group of rows of $\mathbf{B}$ will vanish, resulting in a jointly group sparse basis.

Although, the above formulation ensures a joint group sparse basis, simultaneous minimization for $\mathbf{A}$ and $\mathbf{B}$ makes the problem non-convex. If one of the two matrices is known, the problem becomes convex over the second unknown matrix. Hence, a locally convex solution of~\eqref{eq:JGSPCACriterion-ellF1} can be obtained by iteratively minimizing $\mathbf{A}$ and $\mathbf{B}$. Therefore, the joint group sparse PCA formulation in~\eqref{eq:JGSPCACriterion-ellF1} is dissociated into two independent optimization problems. In the first optimization problem, $\mathbf{A}$ is initialized with $\mathbf{V}$ obtained from~\eqref{eq:svd-JGSPCA} and the minimization under the joint group sparsity constraint on $\mathbf{B}$ is formulated as
\begin{equation}
\label{eq:JGSPCA-B}
\underset{\hat{\mathbf{B}}}{\arg \min} {\|\mathbf{XA}-\mathbf{XB}\|}_F^2+\lambda\sum_{i=1}^{g} \eta_i \|\mathbf{B}^{\centerdot\mathcal{G}_i}\|_F~,
\end{equation}
which is similar to a multi-task regularized regression problem~\cite{mairal2010network} with grouping constraints
\begin{equation}
\label{eq:multi-task-regress}
\underset{\hat{\mathbf{W}}}{\arg \min} {\|\mathbf{Q}-\mathbf{XW}\|}_F^2+ \psi(\mathbf{W})~,
\end{equation}
where $\mathbf{Q}=\mathbf{XA}$ is the response matrix, $\mathbf{W}=\mathbf{B}$ is the matrix of regression coefficients and $\psi$ is any convex matrix norm. An optimization problem of the form of~\eqref{eq:multi-task-regress} can be efficiently solved by proximal programming methods~\cite{jenatton2010proximal}.
\begin{theorem}\label{thm:X-XBAt}
The loss term ${\|\mathbf{X}-\sum_{i=1}^{g}\mathbf{X}_{\centerdot\mathcal{G}_i}\mathbf{B}^{\centerdot\mathcal{G}_i}\mathbf{A}^{\!\intercal}\|}_F^2$ in~\eqref{eq:JGSPCACriterion-ellF1} is equivalent to ${\|\mathbf{XA}-\mathbf{XB}\|}_F^2$ given $\mathbf{A}^{\!\intercal}\mathbf{A}=\mathbf{I}_k$.
\end{theorem}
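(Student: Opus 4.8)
The plan is to strip away the group bookkeeping first, then reduce the projection-form loss to the regression-form loss using only the cyclic invariance of the trace together with the orthonormality constraint $\mathbf{A}^{\!\intercal}\mathbf{A}=\mathbf{I}_k$. The whole argument is essentially a block-matrix observation followed by a trace expansion.

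First I would note that the partition of the features into groups is nothing more than a compatible block decomposition of the two matrices: $\mathbf{X}$ is the horizontal concatenation $[\mathbf{X}_{\centerdot\mathcal{G}_1},\ldots,\mathbf{X}_{\centerdot\mathcal{G}_g}]$ of its column blocks, while $\mathbf{B}$ is the vertical stacking of the corresponding row blocks $\mathbf{B}^{\centerdot\mathcal{G}_i}$. By the definition of block matrix multiplication, $\sum_{i=1}^{g}\mathbf{X}_{\centerdot\mathcal{G}_i}\mathbf{B}^{\centerdot\mathcal{G}_i}=\mathbf{XB}$, so the loss term in the statement is simply $\|\mathbf{X}-\mathbf{XBA}^{\!\intercal}\|_F^2$. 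This eliminates the grouping from the problem entirely and leaves a clean comparison between $\|\mathbf{X}-\mathbf{XBA}^{\!\intercal}\|_F^2$ and $\|\mathbf{XA}-\mathbf{XB}\|_F^2$.

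Next I would expand both squared Frobenius norms via $\|\mathbf{Y}\|_F^2=\operatorname{tr}(\mathbf{Y}^{\!\intercal}\mathbf{Y})$, writing $\mathbf{M}=\mathbf{XB}$ for brevity. Expanding $\|\mathbf{X}-\mathbf{M}\mathbf{A}^{\!\intercal}\|_F^2$ and repeatedly using cyclicity, the quadratic term becomes $\operatorname{tr}(\mathbf{A}\mathbf{M}^{\!\intercal}\mathbf{M}\mathbf{A}^{\!\intercal})=\operatorname{tr}(\mathbf{M}^{\!\intercal}\mathbf{M}\,\mathbf{A}^{\!\intercal}\mathbf{A})=\operatorname{tr}(\mathbf{M}^{\!\intercal}\mathbf{M})$, where the orthonormality constraint is precisely what lets $\mathbf{A}^{\!\intercal}\mathbf{A}$ collapse to $\mathbf{I}_k$. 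Carrying out the same expansion for $\|\mathbf{XA}-\mathbf{M}\|_F^2$, I would check that the cross term $-2\operatorname{tr}(\mathbf{A}^{\!\intercal}\mathbf{X}^{\!\intercal}\mathbf{M})$ and the quadratic term $\operatorname{tr}(\mathbf{M}^{\!\intercal}\mathbf{M})$ agree between the two expressions, so that their difference collapses to $\operatorname{tr}(\mathbf{X}^{\!\intercal}\mathbf{X})-\operatorname{tr}(\mathbf{A}^{\!\intercal}\mathbf{X}^{\!\intercal}\mathbf{X}\mathbf{A})=\|\mathbf{X}\|_F^2-\|\mathbf{XA}\|_F^2$.

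The main obstacle, and the point I would be careful to state explicitly, is that this residual is not zero: the two loss terms are \emph{not} literally equal but differ by $\|\mathbf{X}\|_F^2-\|\mathbf{XA}\|_F^2$, which equals the projection residual $\|\mathbf{X}-\mathbf{XAA}^{\!\intercal}\|_F^2$ because $\mathbf{AA}^{\!\intercal}$ is the orthogonal projector onto the column space of $\mathbf{A}$. The sense in which the two are ``equivalent'' is therefore the optimization sense: in the subproblem~\eqref{eq:JGSPCA-B} the factor $\mathbf{A}$ is held fixed while only $\mathbf{B}$ is varied, so this residual is a constant that does not affect the argument of the minimum. Hence minimizing $\|\mathbf{X}-\mathbf{XBA}^{\!\intercal}\|_F^2$ over $\mathbf{B}$ produces exactly the same minimizer $\hat{\mathbf{B}}$ as minimizing $\|\mathbf{XA}-\mathbf{XB}\|_F^2$, which is what legitimizes recasting the $\mathbf{B}$-update as the regularized multi-task regression~\eqref{eq:multi-task-regress}.
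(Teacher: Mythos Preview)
Your argument is correct, and your first step --- collapsing the group sum via the block product identity $\sum_i \mathbf{X}_{\centerdot\mathcal{G}_i}\mathbf{B}^{\centerdot\mathcal{G}_i}=\mathbf{XB}$ --- is exactly the paper's Lemma~1. For the second step the paper takes a different and more informal route: it writes the residual $\mathbf{C}=\mathbf{X}-\mathbf{XBA}^{\!\intercal}$, right-multiplies by $\mathbf{A}$ to obtain $\mathbf{XA}-\mathbf{XB}=\mathbf{CA}$ using $\mathbf{A}^{\!\intercal}\mathbf{A}=\mathbf{I}_k$, and then argues that since $\mathbf{A}$ is held fixed the two residuals are tied through a fixed linear map. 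Your trace expansion is sharper: it makes explicit that the two squared norms are \emph{not} literally equal but differ by the $\mathbf{B}$-independent constant $\|\mathbf{X}\|_F^2-\|\mathbf{XA}\|_F^2=\|\mathbf{X}-\mathbf{XAA}^{\!\intercal}\|_F^2$, and it pinpoints exactly where the orthonormality is consumed (collapsing $\operatorname{tr}(\mathbf{M}^{\!\intercal}\mathbf{M}\,\mathbf{A}^{\!\intercal}\mathbf{A})$ to $\operatorname{tr}(\mathbf{M}^{\!\intercal}\mathbf{M})$). Both routes land on the same conclusion --- equivalence as optimization problems in $\mathbf{B}$ for fixed $\mathbf{A}$, which is what justifies~\eqref{eq:JGSPCA-B} --- but yours makes the nature of that equivalence transparent, whereas the paper's residual manipulation leaves the reader to supply why right-multiplication by a column-orthonormal $\mathbf{A}$ preserves the minimizer.
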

\begin{proof}
The proof of this theorem follows from the specification of the summation $\sum_{i=1}^{g}\mathbf{X}_{\centerdot\mathcal{G}_i}\mathbf{B}^{\centerdot\mathcal{G}_i}\mathbf{A}^{\!\intercal}$.
\begin{lemma}
For non-overlapping feature groups, $\mathcal{G}_i\cap\mathcal{G}_j=\emptyset$ $\forall i\neq j$,
\begin{equation}
\sum_{i=1}^{g}\mathbf{X}_{\centerdot\mathcal{G}_i}\mathbf{B}^{\centerdot\mathcal{G}_i} = \mathbf{X}\mathbf{B}~.
\end{equation}
\end{lemma}
\begin{proof}
The proof follows from the expansion of the sum
\begin{align}
\sum_{i=1}^{g}\mathbf{X}_{\centerdot\mathcal{G}_i}\mathbf{B}^{\centerdot\mathcal{G}_i} &= \mathbf{X}_{\centerdot\mathcal{G}_1}\mathbf{B}^{\centerdot\mathcal{G}_1} + \mathbf{X}_{\centerdot\mathcal{G}_2}\mathbf{B}^{\centerdot\mathcal{G}_2} \ldots \mathbf{X}_{\centerdot\mathcal{G}_g}\mathbf{B}^{\centerdot\mathcal{G}_g}\\
&=[\mathbf{X}_{\centerdot\mathcal{G}_1} \ldots \mathbf{X}_{\centerdot\mathcal{G}_g}] {[{(\mathbf{B}^{\centerdot\mathcal{G}_1})}^\intercal \ldots {(\mathbf{B}^{\centerdot\mathcal{G}_g})}^\intercal]}^\intercal\\
&=\mathbf{X}{(\mathbf{B}^\intercal)}^\intercal=\mathbf{XB}
\end{align}
\end{proof}
Lemma~1 results in simplification of the loss function to~$\|\mathbf{X}-\mathbf{XBA}^{\!\intercal}\|_F^2$. Given orthogonality constraints on $\mathbf{A}$, the minimization of the simplified loss function will require the difference
\begin{align}
\mathbf{X}-\mathbf{XBA}^{\!\intercal} &= \mathbf{C} & \mathbf{C} \in \mathbb{R}^{n\times p} \text{ is a residue matrix, } \mathbf{C}\neq0\\
\mathbf{XA}-\mathbf{XBA}^{\!\intercal}\mathbf{A} &=\mathbf{CA} &\text{multiplying by }\mathbf{A} \\
\mathbf{XA}-\mathbf{XB} &=\mathbf{CA}&\text{since } \mathbf{A}^{\!\intercal}\mathbf{A}=\mathbf{I}_k\\
\mathbf{XA}-\mathbf{XB} &=\mathbf{E}&\text{since } \mathbf{A} \text{ is fixed, } \mathbf{E}=\mathbf{CA} \text{ is also constant}
\label{eq:cor1}
\end{align}
\end{proof}
Once a solution for $\mathbf{B}$ is found in~\eqref{eq:JGSPCA-B}, the next step is to solve the second problem, i.e., optimizing with respect to $\mathbf{A}$. For a known $\mathbf{B}$ the regularization penalty in~\eqref{eq:JGSPCACriterion-ellF1} becomes irrelevant for the optimization with respect to $\mathbf{A}$. Therefore, the following objective function is required to be minimized
\begin{equation}
\label{eq:JGSPCA-A}
\underset{\hat{\mathbf{A}}}{\arg \min} {\|\mathbf{X}-\mathbf{X}\mathbf{B}\mathbf{A}^{\!\intercal}\|}_F^2 \qquad \text{subject to}\, \mathbf{A}^{\!\intercal}\mathbf{A}=\mathbf{I}_k~.
\end{equation}
A closed form solution for minimizing~\eqref{eq:JGSPCA-A} can be obtained by computing a reduced rank procrustes rotation~\cite{zou2006sparse}.
\begin{theorem}\label{thm:procrustes}
$\hat{\mathbf{A}} = \mathbf{\dot{U}\dot{V}}^{\intercal}$ is the closed form solution of~\eqref{eq:JGSPCA-A}.
\end{theorem}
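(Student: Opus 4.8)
The plan is to recognise \eqref{eq:JGSPCA-A} as a \emph{reduced rank Procrustes rotation} and solve it by expanding the Frobenius objective into a trace, discarding the terms that are constant under the orthogonality constraint, and reducing the remainder to a single linear term that an SVD maximises. First I would expand the loss using $\|\mathbf{M}\|_F^2=\operatorname{tr}(\mathbf{M}^\intercal\mathbf{M})$, which gives
\[
\|\mathbf{X}-\mathbf{X}\mathbf{B}\mathbf{A}^{\!\intercal}\|_F^2 = \operatorname{tr}(\mathbf{X}^\intercal\mathbf{X}) - 2\operatorname{tr}(\mathbf{A}^{\!\intercal}\mathbf{X}^\intercal\mathbf{X}\mathbf{B}) + \operatorname{tr}(\mathbf{A}\mathbf{B}^\intercal\mathbf{X}^\intercal\mathbf{X}\mathbf{B}\mathbf{A}^{\!\intercal}).
\]
The first term is independent of $\mathbf{A}$. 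For the third term, the cyclic property of the trace together with the constraint $\mathbf{A}^{\!\intercal}\mathbf{A}=\mathbf{I}_k$ gives $\operatorname{tr}(\mathbf{A}\mathbf{B}^\intercal\mathbf{X}^\intercal\mathbf{X}\mathbf{B}\mathbf{A}^{\!\intercal})=\operatorname{tr}(\mathbf{B}^\intercal\mathbf{X}^\intercal\mathbf{X}\mathbf{B}\,\mathbf{A}^{\!\intercal}\mathbf{A})=\operatorname{tr}(\mathbf{B}^\intercal\mathbf{X}^\intercal\mathbf{X}\mathbf{B})$, which is also constant on the feasible set. Hence minimising the loss is \emph{equivalent} to maximising the cross term $\operatorname{tr}(\mathbf{A}^{\!\intercal}\mathbf{X}^\intercal\mathbf{X}\mathbf{B})$ subject to $\mathbf{A}^{\!\intercal}\mathbf{A}=\mathbf{I}_k$.

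Next I would introduce the thin SVD $\mathbf{X}^\intercal\mathbf{X}\mathbf{B}=\dot{\mathbf{U}}\dot{\mathbf{S}}\dot{\mathbf{V}}^{\!\intercal}$, where $\dot{\mathbf{U}}\in\mathbb{R}^{p\times k}$ has orthonormal columns, $\dot{\mathbf{S}}\in\mathbb{R}^{k\times k}$ is diagonal and nonnegative, and $\dot{\mathbf{V}}\in\mathbb{R}^{k\times k}$ is orthogonal. Substituting and applying cyclicity turns the objective into $\operatorname{tr}(\dot{\mathbf{S}}\,\mathbf{Z})=\sum_{i=1}^{k}\dot{S}_{ii}Z_{ii}$ with $\mathbf{Z}=\dot{\mathbf{V}}^{\!\intercal}\mathbf{A}^{\!\intercal}\dot{\mathbf{U}}$, so the whole problem collapses to a weighted sum of the diagonal entries of $\mathbf{Z}$.

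The crux, and the step I expect to be the main obstacle, is bounding those diagonal entries. I would argue that since $\mathbf{A}$ and $\dot{\mathbf{U}}$ each have orthonormal columns and $\dot{\mathbf{V}}$ is orthogonal, every singular value of $\mathbf{Z}$ is at most $1$: the singular values of $\mathbf{A}^{\!\intercal}\dot{\mathbf{U}}$ are cosines of the principal angles between the two subspaces, hence lie in $[0,1]$, and left multiplication by the orthogonal factor $\dot{\mathbf{V}}^{\!\intercal}$ preserves them. Consequently $|Z_{ii}|=|\mathbf{e}_i^{\!\intercal}\mathbf{Z}\mathbf{e}_i|\le\|\mathbf{Z}\|_{2}\le 1$, and because $\dot{S}_{ii}\ge 0$ we obtain the upper bound $\operatorname{tr}(\dot{\mathbf{S}}\mathbf{Z})\le\sum_{i}\dot{S}_{ii}=\operatorname{tr}(\dot{\mathbf{S}})$.

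Finally I would check attainability to close the argument. The choice $\hat{\mathbf{A}}=\dot{\mathbf{U}}\dot{\mathbf{V}}^{\!\intercal}$ is feasible, since $\hat{\mathbf{A}}^{\!\intercal}\hat{\mathbf{A}}=\dot{\mathbf{V}}\dot{\mathbf{U}}^{\!\intercal}\dot{\mathbf{U}}\dot{\mathbf{V}}^{\!\intercal}=\mathbf{I}_k$, and it yields $\mathbf{Z}=\dot{\mathbf{V}}^{\!\intercal}\dot{\mathbf{V}}\dot{\mathbf{U}}^{\!\intercal}\dot{\mathbf{U}}=\mathbf{I}_k$, which meets the bound $\operatorname{tr}(\dot{\mathbf{S}})$ exactly. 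Therefore $\hat{\mathbf{A}}=\dot{\mathbf{U}}\dot{\mathbf{V}}^{\!\intercal}$ maximises the cross term and hence minimises \eqref{eq:JGSPCA-A}, which is precisely the reduced rank Procrustes rotation of~\cite{zou2006sparse}.
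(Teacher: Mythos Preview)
Your proposal is correct and follows essentially the same route as the paper: expand the Frobenius norm, use $\mathbf{A}^{\!\intercal}\mathbf{A}=\mathbf{I}_k$ to make the quadratic term constant, reduce to maximising $\operatorname{tr}(\mathbf{A}^{\!\intercal}\mathbf{X}^{\!\intercal}\mathbf{X}\mathbf{B})$, substitute the SVD $\mathbf{X}^{\!\intercal}\mathbf{X}\mathbf{B}=\dot{\mathbf{U}}\dot{\mathbf{S}}\dot{\mathbf{V}}^{\!\intercal}$, and conclude $\hat{\mathbf{A}}=\dot{\mathbf{U}}\dot{\mathbf{V}}^{\!\intercal}$. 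Your bounding of the diagonal entries of $\mathbf{Z}=\dot{\mathbf{V}}^{\!\intercal}\mathbf{A}^{\!\intercal}\dot{\mathbf{U}}$ via principal angles, together with the explicit feasibility and attainability checks, is in fact more careful than the paper's somewhat terse assertion that the trace is maximised when $\dot{\mathbf{V}}^{\!\intercal}\mathbf{A}^{\!\intercal}\dot{\mathbf{U}}=\mathbf{I}$.
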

\begin{proof}
We first expand the Frobenius norm
\begin{align}
{\|\mathbf{X}-\mathbf{XB}\mathbf{A}^{\!\intercal}\|}_F^2 &=\textrm{Tr}((\mathbf{X}-\mathbf{XB}\mathbf{A}^{\!\intercal})^{\!\intercal}(\mathbf{X}-\mathbf{XB}\mathbf{A}^{\!\intercal}))\\
&=\textrm{Tr}((\mathbf{X}^{\intercal}-\mathbf{AB}^{\intercal}\mathbf{X}^{\intercal})(\mathbf{X}-\mathbf{XB}\mathbf{A}^{\!\intercal}))\\
&=\textrm{Tr}(\mathbf{X}^{\intercal}\mathbf{X})-\textrm{Tr}(\mathbf{AB}^{\intercal}\mathbf{X}^{\intercal}\mathbf{X})-\textrm{Tr}(\mathbf{X}^{\intercal}\mathbf{XBA}^{\!\intercal})+\textrm{Tr}(\mathbf{AB}^{\intercal}\mathbf{X}^{\intercal}\mathbf{XBA}^{\!\intercal})\\
&=\textrm{Tr}(\mathbf{X}^{\intercal}\mathbf{X})-2\textrm{Tr}(\mathbf{X}^{\intercal}\mathbf{XBA}^{\!\intercal})+\textrm{Tr}(\mathbf{B}^{\intercal}\mathbf{X}^{\intercal}\mathbf{XB})
\label{eq:simplify-frob}
\end{align}
The middle term in \eqref{eq:simplify-frob} arises due to the fact that the trace of a matrix and its transpose are equal. The last term has been simplified to $\mathbf{B}^{\intercal}\mathbf{X}^{\intercal}\mathbf{XB}$ due to orthogonality constraints on $\mathbf{A}$. Therefore, minimizing the loss function requires maximizing the trace of $\mathbf{X}^{\intercal}\mathbf{XBA}^{\!\intercal}$, since it is negative. Assume the SVD of $\mathbf{X}^{\intercal}\mathbf{XB}$ is $\dot{\mathbf{U}}\dot{\mathbf{S}}\dot{\mathbf{V}}^{\intercal}$, then
\begin{align}
\textrm{Tr}(\mathbf{X}^{\intercal}\mathbf{XBA}^{\!\intercal})&=\textrm{Tr}(\dot{\mathbf{U}}\dot{\mathbf{S}}\dot{\mathbf{V}}^{\intercal}\mathbf{A}^{\!\intercal})\\
 &=\textrm{Tr}(\dot{\mathbf{V}}^{\intercal}\mathbf{A}^{\!\intercal}\dot{\mathbf{U}}\dot{\mathbf{S}})
\label{eq:simplify-proc}
\end{align}
Equation~\eqref{eq:simplify-proc} is drawn from the cyclic nature of the trace of a product of matrices. Since, $\dot{\mathbf{S}}$ is a diagonal matrix, $\textrm{Tr}(\dot{\mathbf{V}}^{\intercal}\mathbf{A}^{\!\intercal}\dot{\mathbf{U}}\dot{\mathbf{S}})$ is maximized when the diagonal of $\dot{\mathbf{V}}^{\intercal}\mathbf{A}^{\!\intercal}\dot{\mathbf{U}}$ is maximum. This is true when $\dot{\mathbf{V}}^{\intercal}\mathbf{A}^{\!\intercal}\dot{\mathbf{U}}= \mathbf{I}$. Therefore, after simplification, $\mathbf{A}^{\!\intercal} \dot{\mathbf{U}}=\dot{\mathbf{V}}$ or $\mathbf{A}=\dot{\mathbf{U}}\dot{\mathbf{V}}^\intercal$ is the closed form solution of~\eqref{eq:JGSPCA-A}.
\end{proof}
The alternating optimization process is repeated until convergence or until a specified number of iterations is reached. Algorithm~\ref{alg:JGSPCA} summarizes the procedure for Joint Group Sparse PCA.

\begin{algorithm}[h]  
\caption{Joint Group Sparse PCA}       
\label{alg:JGSPCA}                     
\begin{algorithmic}                    
\Require $\mathbf{X} \in \mathbb{R}^{n \times p}, \{\mathcal{G}_i\}_{i=1}^g, \eta_i, \lambda, j_{\textrm{max}}$
\State $\textbf{Initialize:}~ j \gets 1,$ converge $\gets$ \texttt{false}
\State $\mathbf{USV}^{\intercal} \gets \mathbf{X}$ 
\State $\mathbf{A} \gets \mathbf{V}_{\centerdot \{1:k\}}$ 
\While {$j\le j_{\textrm{max}} \wedge \neg \textrm{converge}$}
\State $\hat{\mathbf{B}} \gets \underset{\mathbf{B}}{\min} {\|\mathbf{XA}-\mathbf{XB}\|}_F^2 + \lambda {\|\mathbf{B}\|}_{F_g,1}$ 
\State $\dot{\mathbf{U}} \dot{\mathbf{S}} \dot{\mathbf{V}}^{\intercal} \gets {\mathbf{X}}^{\intercal} \mathbf{X}\hat{\mathbf{B}}$
\State $\hat{\mathbf{A}} \gets \dot{\mathbf{U}}\dot{\mathbf{V}}^{\intercal}$ 
\If {$\|\mathbf{B}-\hat{\mathbf{B}}\|_F < \epsilon $} 
\State $\textrm{converge} \gets \texttt{true}$
\Else
\State $\mathbf{B} \gets \hat{\mathbf{B}}, \mathbf{A} \gets \hat{\mathbf{A}}$ 
\State $ j \gets j+1$ 
\EndIf
\EndWhile
\Ensure $\hat{\mathbf{A}},\hat{\mathbf{B}}$
\end{algorithmic}
\end{algorithm}

\subsection{Model Tree Search}

It is imperative that the sparsity of a basis is dependent on the regularization penalty parameter $\lambda$. The higher the value of $\lambda$, the lower the cardinality of the basis. We define the group cardinality $r$ of a basis $\mathbf{B}$ as its number of non-zero group of rows (which directly corresponds to the number of feature groups, $r \in [0,g]$). The obtained joint group sparse basis $\mathbf{B}$ has therefore $r$ non-zero feature groups. A conventional grid search over a range of $\lambda$ offers an ill-posed problem, since the group cardinality of a model for a particular value of $\lambda$ is not known apriori.

We propose an intuitive tree search that seeks for the value of $\lambda$ to achieve a model with desired group cardinality over a range $[r_{\textrm{min}},r_{\textrm{max}}]$. Each node of the tree corresponds to a different value of $\lambda$. The tree search explores intelligently selected nodes to identify further nodes that lead to a desired model $\mathcal{M}_r$. We briefly explain the major steps involved in a model tree search (Algorithm~\ref{alg:HRFP}).

\subsubsection{Initialize Root Nodes}

The exploration of models in the tree is executed as illustrated in Figure~\ref{fig:tree}. The tree search begins with a populating values of $\lambda$ at the root nodes. The values are linearly sampled in the range $[\lambda_{\textrm{max}},\lambda_{\textrm{min}}]$.
Then, all root nodes are initialized as active $\mathcal{I}_j=1 \forall j$. At this stage it is relatively cost effective to activate all nodes because the root level has the least number of nodes.

\begin{landscape}
\begin{figure}
\footnotesize
\centering
\includegraphics[width=1\linewidth]{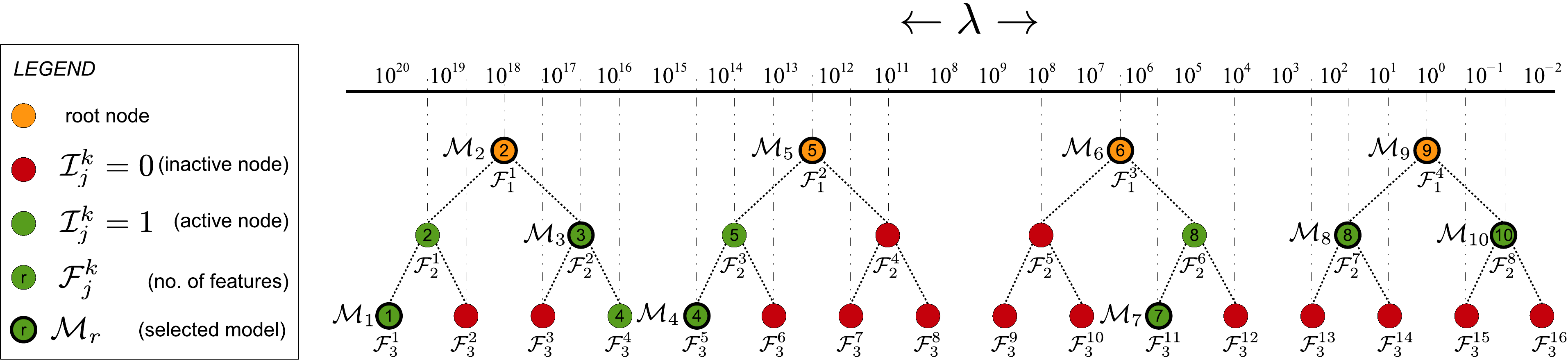}
\caption[An illustration of model tree search]{A three level tree search for compressive sensing models with feature $r=\{1,2,...,10\}$ is presented. The yellow circles denote the root nodes, green circles are the active nodes, whereas the red circles are the inactive nodes. The $k^\textrm{th}$ node at the $j^\textrm{th}$ level is associated to a $\lambda$ parameter value. The tree is initialized with four root nodes which return models for $r=\{2,5,6,9\}$. At the next level, nodes 4 and 5 are deactivated since no new models are expected in between. Observe that fewer features are selected with a higher penalty value and vice versa. Moreover, the number of active nodes continues to decrease at each level, thereby reducing the computational cost of search. When the same number of features occur at multiple nodes, the node with lower $\lambda$ (and lower reconstruction error) is selected. As early as all models corresponding to $r$ are explored, further exploration of the tree is discontinued.}
\label{fig:tree}
\end{figure}
\end{landscape}

\subsubsection{Compute Model For Each Node}

A model $\mathcal{M}$ is learned sequentially at each node $k$ with the node parameter value $\lambda_j^k$, where $j$ is the current level of tree. If the number of features ($r$) at a node is equal to the maximum number of features, the following nodes at the same level are deactivated. This is because any further node search would result in models with the same number of selected features.

\subsubsection{Update Child Nodes}

The number of nodes at each level is equal to twice the number of nodes at the preceding level. Therefore, in between consecutive parent nodes are exactly two child nodes. There can be two possibilities for updating the activation of child nodes.

\begin{enumerate}
\item If the difference between number of features on consecutive parent nodes is more than one, it indicates that a further solution may possibly exist in the child nodes. (both child nodes activated)
\item If the difference between number of features on consecutive parent nodes is less than or equal to one, there is no advantage in searching over their child nodes for further solution. (both child nodes deactivated)
\end{enumerate}

\subsubsection{Check Convergence}

If the current level is lower than the depth of tree, steps 2-3 are repeated, otherwise the search is terminated. The convergence of the tree search is almost always guaranteed, if the relationship between $r$ and $\lambda$ is considered monotonically inverse. In occasional instances, this relationship may not hold true which translates into tree search anomalies. This phenomenon is presented in Figure~\ref{fig:anomalies} which illustrates the relationship between $r$ and $\lambda$. We observed that such anomalies are data dependent and can be catered for by cross-validated model search.

\begin{figure}[!h]
\footnotesize
\centering
\vspace{-5pt}
\subfigure{\includegraphics[trim = 3pt 2pt 30pt 2pt, clip, width=0.37\linewidth]{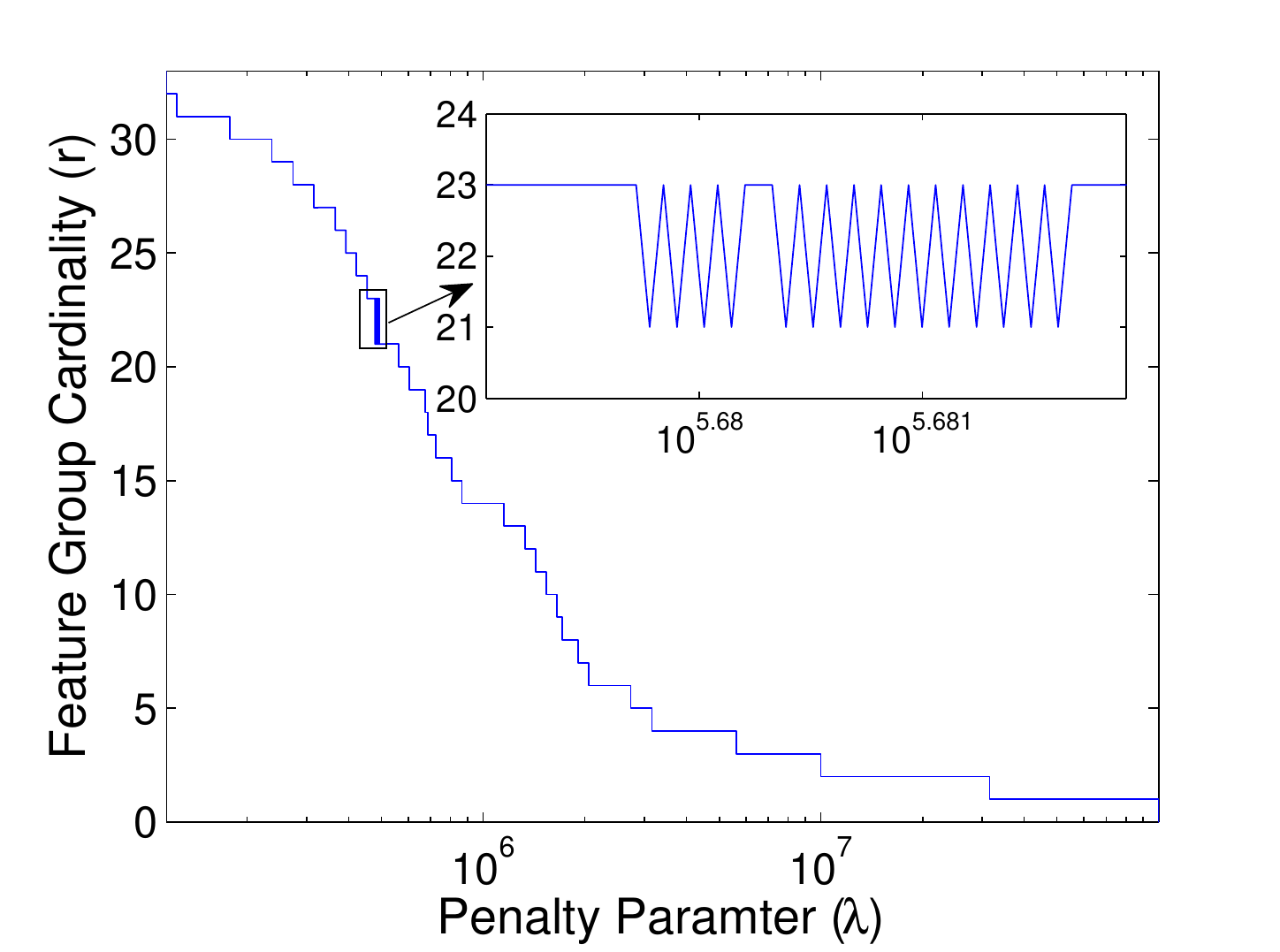}}
\subfigure{\includegraphics[trim = 3pt 2pt 30pt 2pt, clip, width=0.37\linewidth]{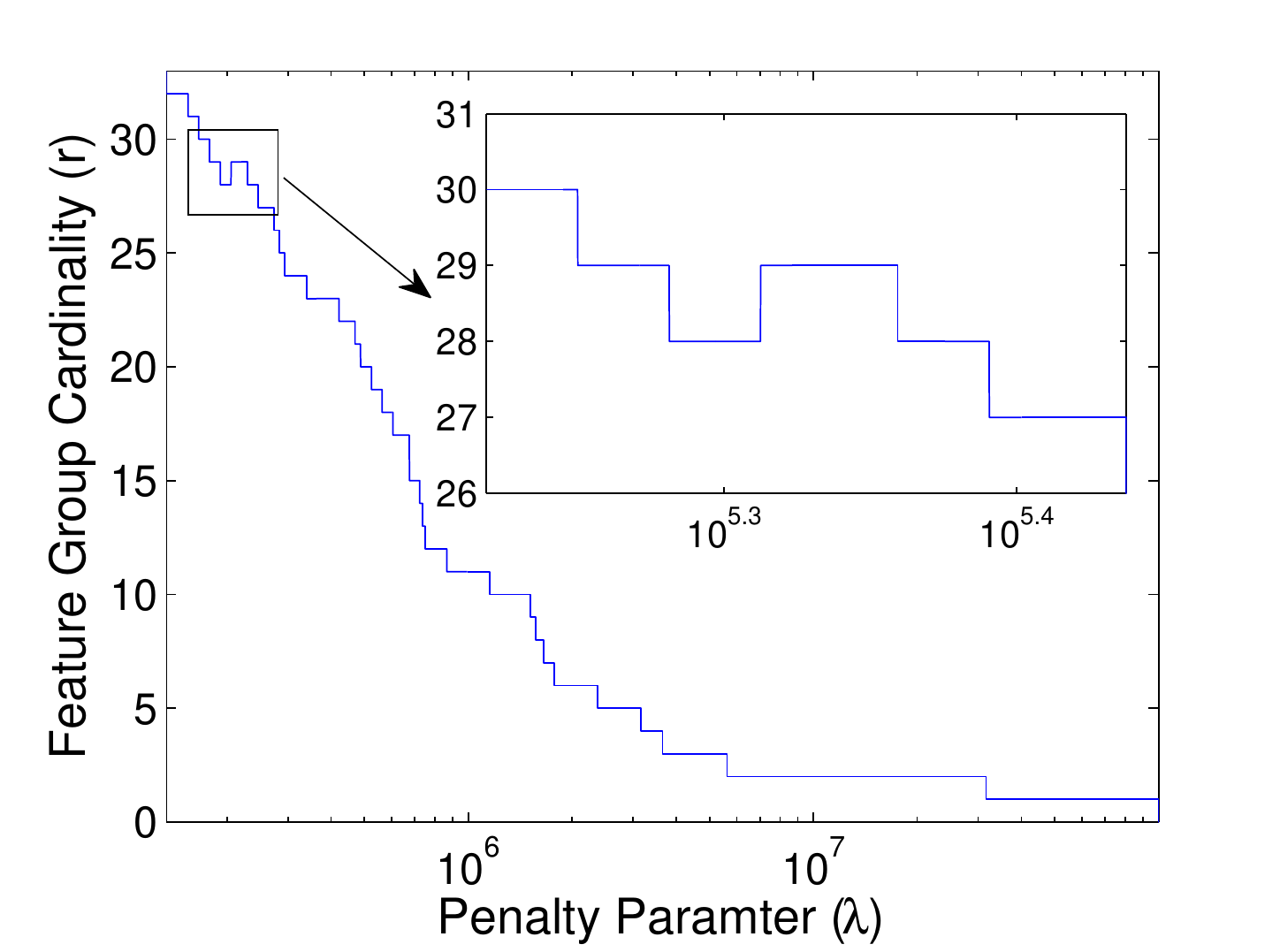}}
\caption[Relationship between $r$ and $\lambda$]{(a) A model with the required feature group cardinality is not found between two $\lambda$. Here, $\mathcal{M}_{22}$ is not found between $\mathcal{M}_{21}$ and $\mathcal{M}_{23}$ . (b) A model with higher group cardinality is found between two monotonically increasing $\lambda$ values. Here, $\mathcal{M}_{29}$ is found between $\lambda=10^{5.3}$ and $\lambda = 10^{5.4}$.}
\label{fig:anomalies}
\end{figure}

\begin{algorithm}[!h]  
\caption{Model Tree Search}          
\label{alg:HRFP}                           
\begin{algorithmic}                    
\Require $\mathbf{X} \in \mathbb{R}^{n \times p}$ \Comment {data matrix}
\State $\{\mathcal{G}_i\}_{i=1}^g$ \Comment {feature group indices sets}
\State $\{\lambda_j\}_{j=1}^d, \lambda_j \in \mathbb{R}^{2^{j-1}b}$ \Comment {regularization parameter values}
\State $d,b$ \Comment {depth of tree, number of root nodes}
\Ensure${\{\mathcal{M}_i\}}_{i=1}^g$ \Comment {set of learned models on $i$ features}
\State \textbf{Initialize:}
\State${\{\mathcal{M}_i\}}_{i=1}^g \gets \emptyset$
\State $\mathcal{I}_j \in \mathbb{R}^{2^{j-1}b}, \mathcal{I}_1 \gets \mathbf{1} $ \Comment{activate root nodes}
\State ${\{\mathcal{F}_j\}}_{j=1}^d, \mathcal{F}_j \in \mathbb{R}^{2^{j-1}b}$ \Comment {selected features at each node}
\State $j \gets 1, r \gets 0$ \Comment {level counter, no. of features in model}
\Repeat \Comment {for each level}
\State $k \gets 1, h \gets \sum \mathcal{I}_j$ \Comment {node counter, no. of active nodes}
\While {$k \le h \wedge r \neq g$}
\State $\mathbf{A},\mathbf{B} \gets $\textsc{Algorithm1}($\mathbf{X}, \lambda_j^k$)
\State $r \gets |\{i | \quad \|\mathbf{B}^{\centerdot\mathcal{G}_i}\|_{F,1}\neq 0\}|$ \Comment {find selected features}
\If {$r>0$} \State $\mathcal{F}_j^k \gets r, \mathcal{M}_r \gets \{\mathbf{A},\mathbf{B}\}$ \Comment {save model}
\EndIf
\State $k \gets k+1$
\EndWhile
\State $\mathcal{I}_{j+1}, {\mathcal{F}}_{j+1} \gets $ \textsc{UpdateChildNodes}($\mathcal{F}_j$)
\State $j \gets j+1$
\State $q \gets |\{i | \quad \mathcal{M}_i \neq \emptyset \}|$ \Comment {number of explored models}
\Until {$j \le d \vee q \neq g$}
\State {--------------------------------------------------------------------}
\Function{UpdateChildNodes}{$\mathcal{F}_j$}
\State \textbf{Initialize:} $\mathcal{I}_{j+1} \gets \mathbf{0},{\mathcal{F}}_{j+1} \gets \emptyset$
\For {$k=1$ to $h-1$}
\If {$\mathcal{F}_j^{k+1}-\mathcal{F}_j^k>1$} \Comment {condition 1}
\State ${\mathcal{F}}_{2k-1}^{j+1} \gets \mathcal{F}_j^k$ \Comment {replicate to left child node}
\State $\mathcal{I}_{j+1}^{2k:2k+1} \gets 1$ \Comment {activate child nodes}
\Else \Comment {condition 2}
\State ${\mathcal{F}}_{j+1}^{2k-1:2k} \gets \mathcal{F}_j^k$ \Comment {replicate to both children}
\EndIf
\EndFor \\
\Return $\mathcal{I}_{j+1}, {\mathcal{F}}_{j+1}$
\EndFunction
\end{algorithmic}
\end{algorithm}

\clearpage

\subsection{Implementation}

We used the Fast-Iterative Shrinkage and Thresholding Algorithm (FISTA) library~\cite{jenatton2010proximal} to solve the optimization problems in~\eqref{eq:SPCACriterion-JGSPCA}, \eqref{eq:GSPCACriterion} and \eqref{eq:JGSPCACriterion-ellF1}. Note that we modified the FISTA library to solve for the $\ell_{F_g,1}$ joint group regularization penalty in~\eqref{eq:ellF1}. All source codes for the algorithms (including modified FISTA library) will soon be released.

\section{Experiments}
\label{sec:exp-JGSPCA}

\subsection{Evaluation Criteria}

The data matrix $\mathbf{X}$ is first created by sampling non-overlapping spatio-spectral volumes of dimension $\sqrt{p_i} \times \sqrt{p_i} \times g, p_i=9~\forall~i$ (after vectorizing) from all training hyperspectral images, where $g$ is the total number of bands. A model is learned from the training data using Algorithm~\ref{alg:HRFP}. At most $g$ models are learned with each algorithm, one for each $r=1,2,...,g$ number of bands. To evaluate compressive sensing performance of the $r^{\textrm{th}}$ learned model $\mathcal{M}_r$ with orthonormal basis $\mathbf{A}$ and the corresponding sparse basis $\mathbf{B}$, the reconstruction error is computed as
\begin{equation}
e_r = \frac{{\|\mathbf{XV}_{\centerdot\{1:k\}}\mathbf{V}_{\centerdot\{1:k\}}^\intercal - \mathbf{XB}\mathbf{A}^{\!\intercal}\|}_F}
                        {{\|\mathbf{XV}_{\centerdot\{1:k\}}\mathbf{V}_{\centerdot\{1:k\}}^\intercal\|}_F}~,
\end{equation}
where $k$ is the number of basis vectors. We choose $k=p$ for all experiments as we are interested in reconstructing complete hyperspectral image cubes. This makes $\mathbf{V}_{\centerdot\{1:k\}}\mathbf{V}_{\centerdot\{1:k\}}^\intercal=1$.

\subsection{Databases}

Hyperspectral imaging has progressed with the recent advances in electronically tunable filters~\cite{gat2000imaging} based hyperspectral cameras for capturing nearby objects. We have used publicly available datasets of indoor/outdoor scenes and faces. A few sample hyperspectral images are shown in Figure~\ref{fig:sample}. A summary of specifications for all hyperspectral datasets used in the experiments is provided in Table~\ref{tab:datasets}. A brief description of each dataset used in our experiments is as follows.


\subsubsection{Harvard Scene Dataset}

A hyperspectral image database of 50 indoor and outdoor scenes under daylight illumination~\cite{chakrabarti2011statistics}. The images were captured using a commercial grade hyperspectral camera (Nuance FX, CRI Inc.), which is based on a liquid crystal tunable filter design. The dataset consists of a diverse range of objects, materials and structures and is a good representative of real world spatio-spectral interactions. The training and testing dataset consist of 10 and 40 images, respectively. All images were spatially resized to $105 \times 141$ pixels.

\newcolumntype{C}{>{\centering\arraybackslash}X}
\begin{table*}[t]
\caption[An overview of hyperspectral image databases]{An overview of hyperspectral image databases used in the experiments. Our newly developed UWA face database is a low noise hyperspectral face database in the visible range.}
\label{tab:datasets}
\centering
\begin{tabularx}{1\linewidth}{|l|C|C|C|C|}
  \hline
  \textbf{Database }    &\textbf{Harvard} &\textbf{CAVE}    & \textbf{CMU}   & \textbf{UWA}    \\ 
  \hline
  Spectral Range (nm)   & 420 - 720       & 410 - 710       & 450 - 1090     & 400 - 720       \\ 
  Number of Bands       & 31 (VIS)        & 31 (VIS)        & 65 (VIS-NIR)   & 33 (VIS)        \\ 
  Spatial Resolution    & 1392$\times$1040& 512$\times$512  & 640$\times$480 & 1024$\times$1024\\ 
  Images/Subjects       & 50              & 32              & 48             & 70              \\ 
  Acquisition Time      & 60 sec          & -               & 8 sec          & 6 sec           \\ 
  Noise Grade           & Low             & Low             & High           & Low             \\ 
  \hline
\end{tabularx}
\end{table*}

\subsubsection{CAVE Scene Dataset}

The CAVE multispectral image database contains true spectral reflectance images of 32 scenes consisting of a variety of objects in an indoor setup~\cite{yasuma2010generalized}. It has 31 band hyperspectral images (420-720nm with 10nm steps) at a resolution of $512 \times 512$ pixels. All images contain the true spectral reflectance of a scene i.e., they are corrected for ambient illumination. We used 20 images for training and 10 images for testing. Each band was spatially resized to $120 \times 120$ pixels.

\subsubsection{CMU Face Dataset}

The CMU hyperspectral face database~\cite{denes2002hyperspectral} contains facial images of 48 subjects captured in multiple sessions over a period of about two months. The images cover both visual and near infrared spectrum and span the spectral range from 450nm to 1090 nm (at 10nm steps). The data was obtained using a prototype CMU-developed spectro-polarimetric camera mainly comprising of Acousto Optical Tunable Filter. For experiments, single sample per subject is used for the training set and the remaining samples make the test set. Specifically, 48 samples were used for training and 103 for testing. All faces are spatially resized to $24\times 21$ pixels after normalization.

%

\begin{figure}[t]
\footnotesize
\centering
\subfigure[Harvard Data]{\includegraphics[width=1\linewidth]{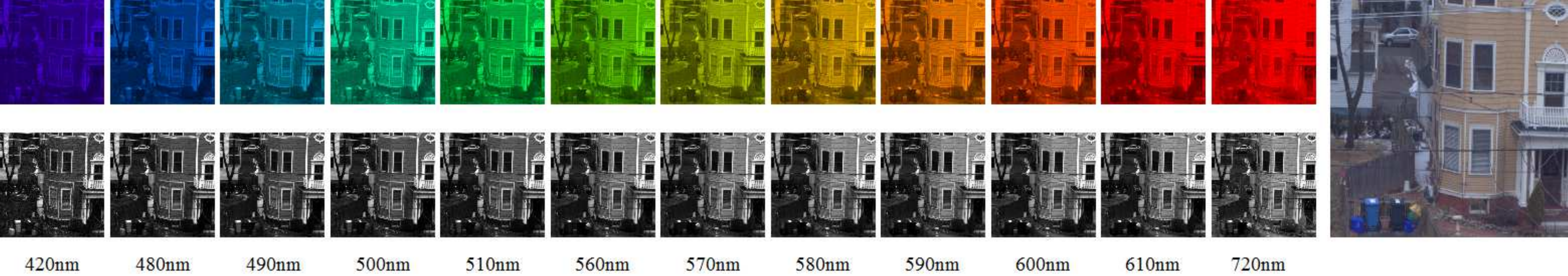}}
\subfigure[CAVE Data]{\includegraphics[width=1\linewidth]{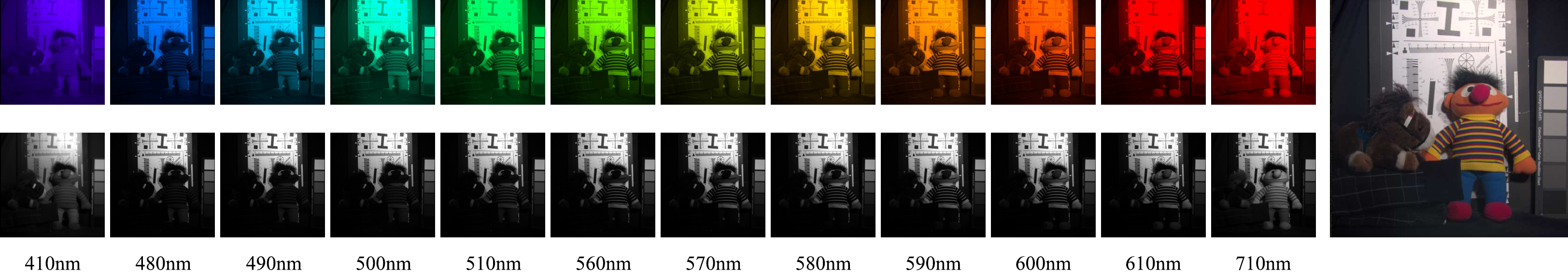}}
\subfigure[CMU Data]{\includegraphics[width=1\linewidth]{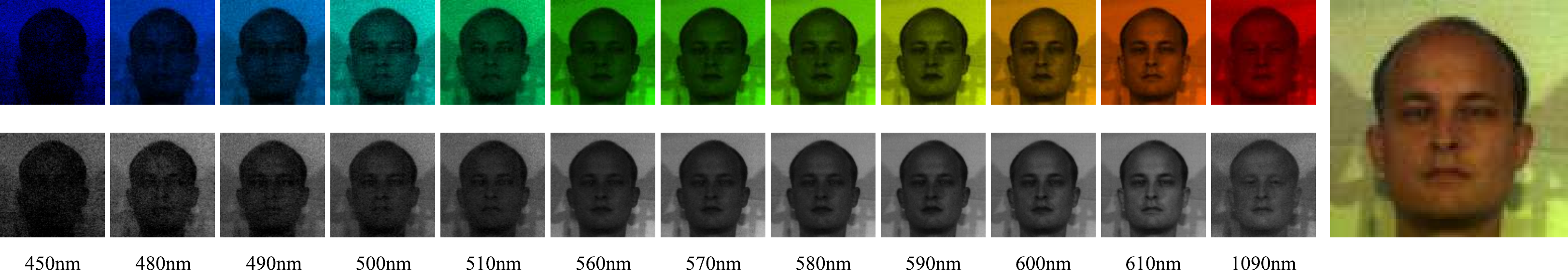}}
\subfigure[UWA Data]{\includegraphics[width=1\linewidth]{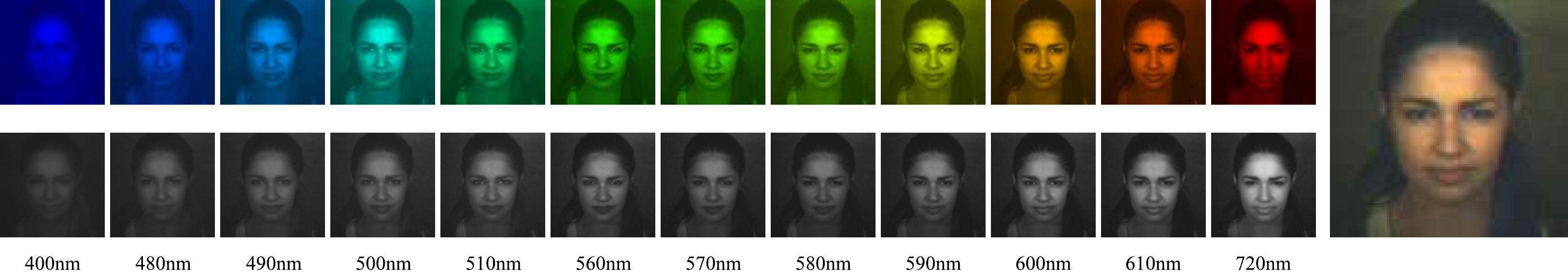}}
\caption[Sample hyperspectral images in different datasets]{Sample hyperspectral images in different datasets. Each image is shown as a series of bands in pseudo color and grayscale (only a subset of bands is shown here). Also shown is their corresponding RGB rendered image}
\label{fig:sample}
\vspace{-10pt}
\end{figure}

\subsubsection{UWA Face Dataset}

The hyperspectral face database collected in our lab contains 110 hyperspectral images of 70 subjects of different ethnicity, gender and age. Each subject was imaged in different sessions, separated by a duration between a week to two months. The system consists of a monochrome machine vision camera with a focusing lens (1:1.4/25mm) followed by a Liquid Crystal Tunable Filter (LCTF) which is tunable in the range of 400-720 nm. The average tuning time of the filter is 50 ms. The filter bandwidth, measured in terms of the \emph{Full Width at Half Maximum (FWHM)} is 7 to 20nm which varies with the center wavelength. The scene is illuminated by twin-halogen lamps on both sides of the subject. The illumination is left partially uncontrolled as it is mixed with indoor lights and occasionally daylight, varying with the time of image capture. For spectral response calibration, the white patch from a standard 24 patch color checker was utilized.

The training and testing sets consist of 70 and 40 images, respectively. The database will soon be made available for research~\footnote{UWA Hyperspectral Face Database:\\http://www.csse.uwa.edu.au/\%7Eajmal/databases.html}.

%

\section{Results and Discussion}
\label{sec:results}

\subsection{Compressed Hyperspectral Imaging}

In the first experiment, we examine the compressive sensing performance of all algorithms (SPCA, GSPCA and JGSPCA) in terms of reconstruction error. In the following text, we refer a feature as the \emph{band} of a hyperspectral image. When the number of bands in a model increases, the reconstruction error should decrease. We expect an algorithm to be relatively better for compressed sensing if it achieves lower reconstruction error with fewer bands sensed.

The reconstruction error on the test data with different algorithms is provided in Figure~\ref{fig:reconError}. Interesting results are obtained for the reconstruction errors on the Harvard and CAVE scene datasets. The first few bands equally explain the data with either SPCA, GSPCA or JGSPCA. When the more bands are added into the model, significant improvement in the reconstruction error is achieved with JGSPCA. We observe that GSPCA alone is only slightly better than SPCA, whereas the JGSPCA consistently achieves lower reconstruction error and outperforms both SPCA and GSPCA. The results on hyperspectral face datasets are slightly different from the datasets of scenes. The JGSPCA consistently outperforms SPCA and GSPCA on both CMU and UWA dataset. It reconstructs the data with lower error from the first band up until the last band on UWA dataset.

\begin{figure}[t]
\footnotesize
\centering
\includegraphics[trim = 3pt 2pt 30pt 2pt, clip, width=0.44\linewidth]{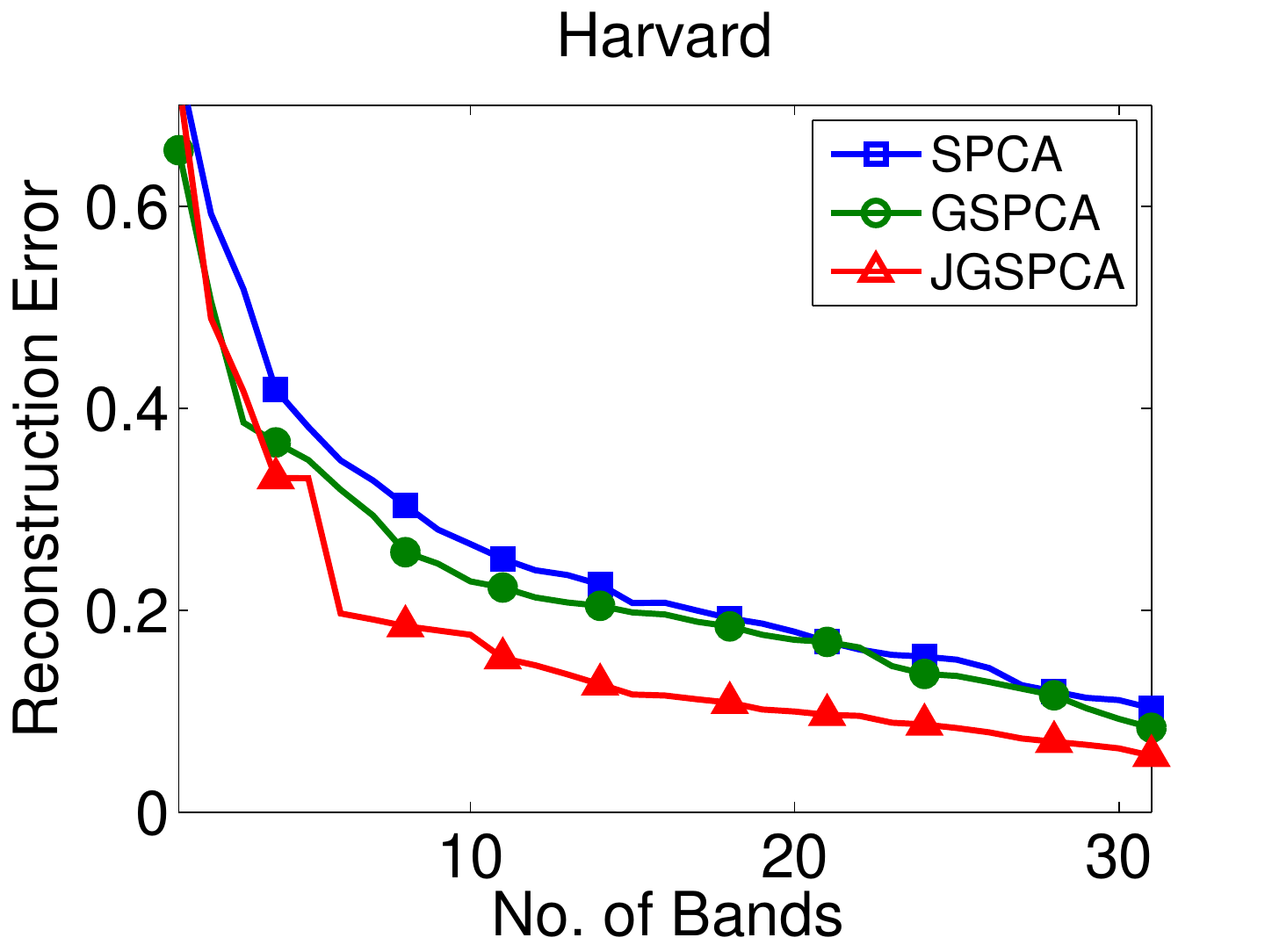} \hfill
\includegraphics[trim = 3pt 2pt 30pt 2pt, clip, width=0.44\linewidth]{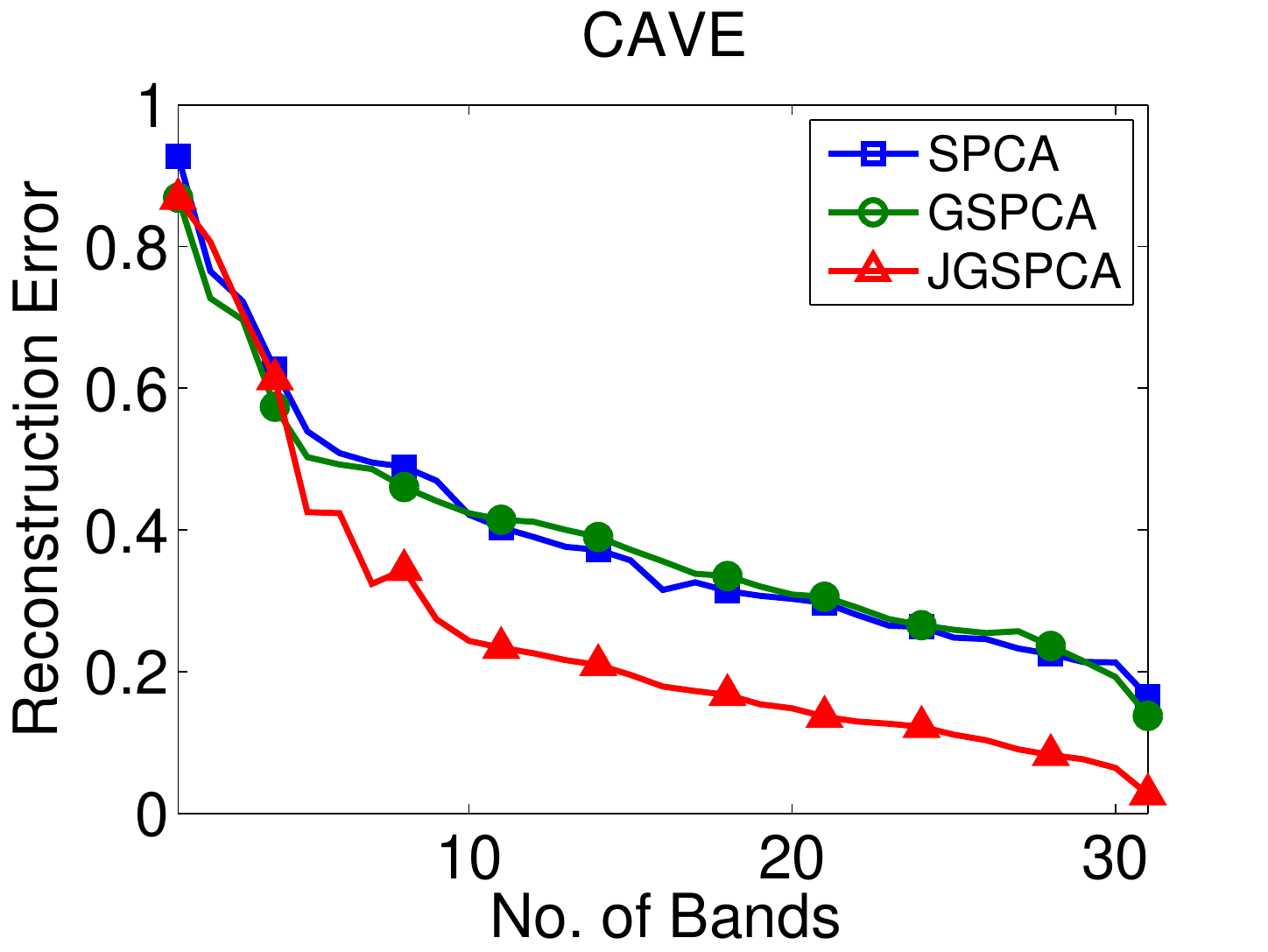}\\ [10pt]
\includegraphics[trim = 3pt 2pt 30pt 2pt, clip, width=0.44\linewidth]{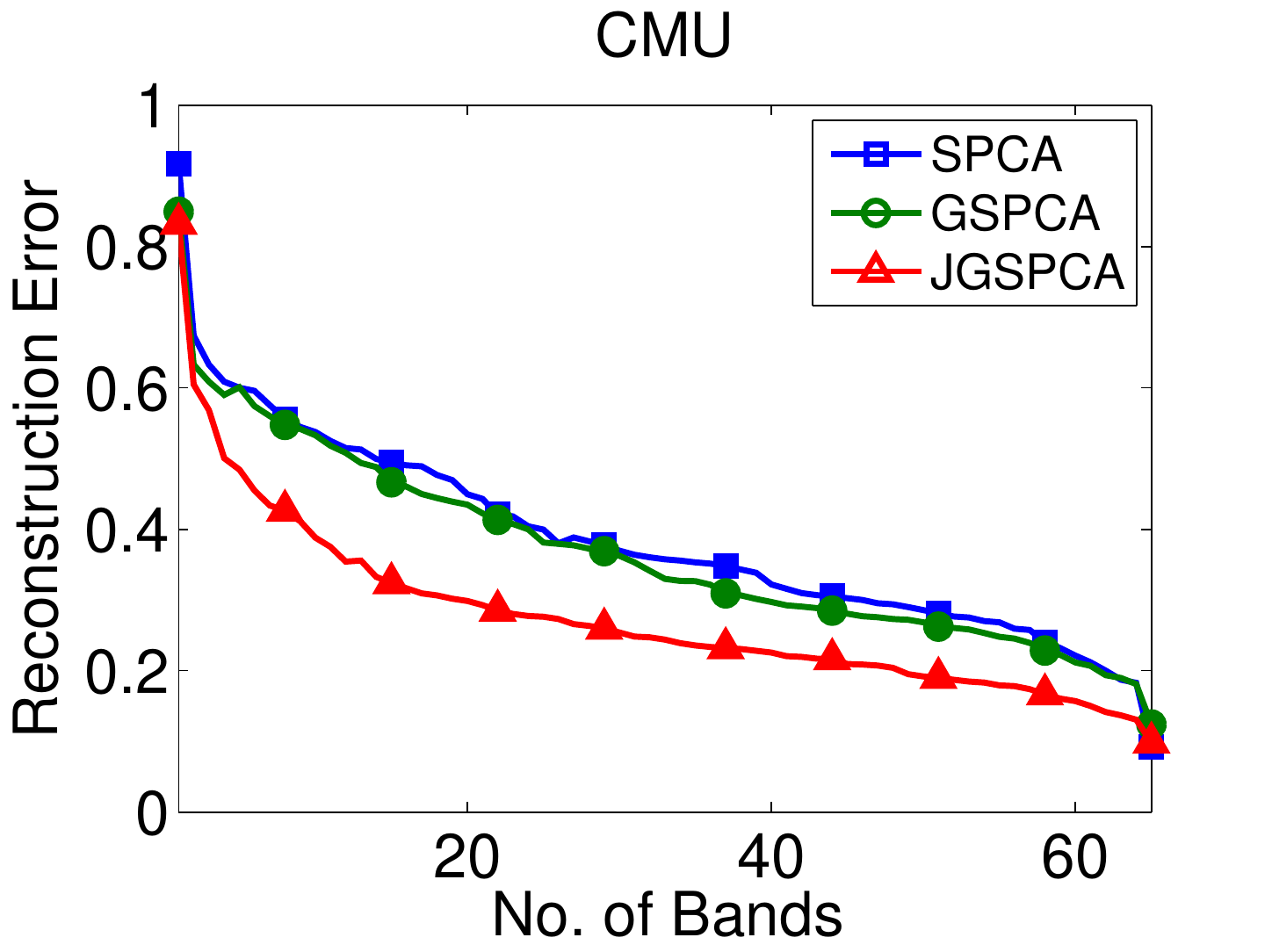}\hfill
\includegraphics[trim = 3pt 2pt 30pt 2pt, clip, width=0.44\linewidth]{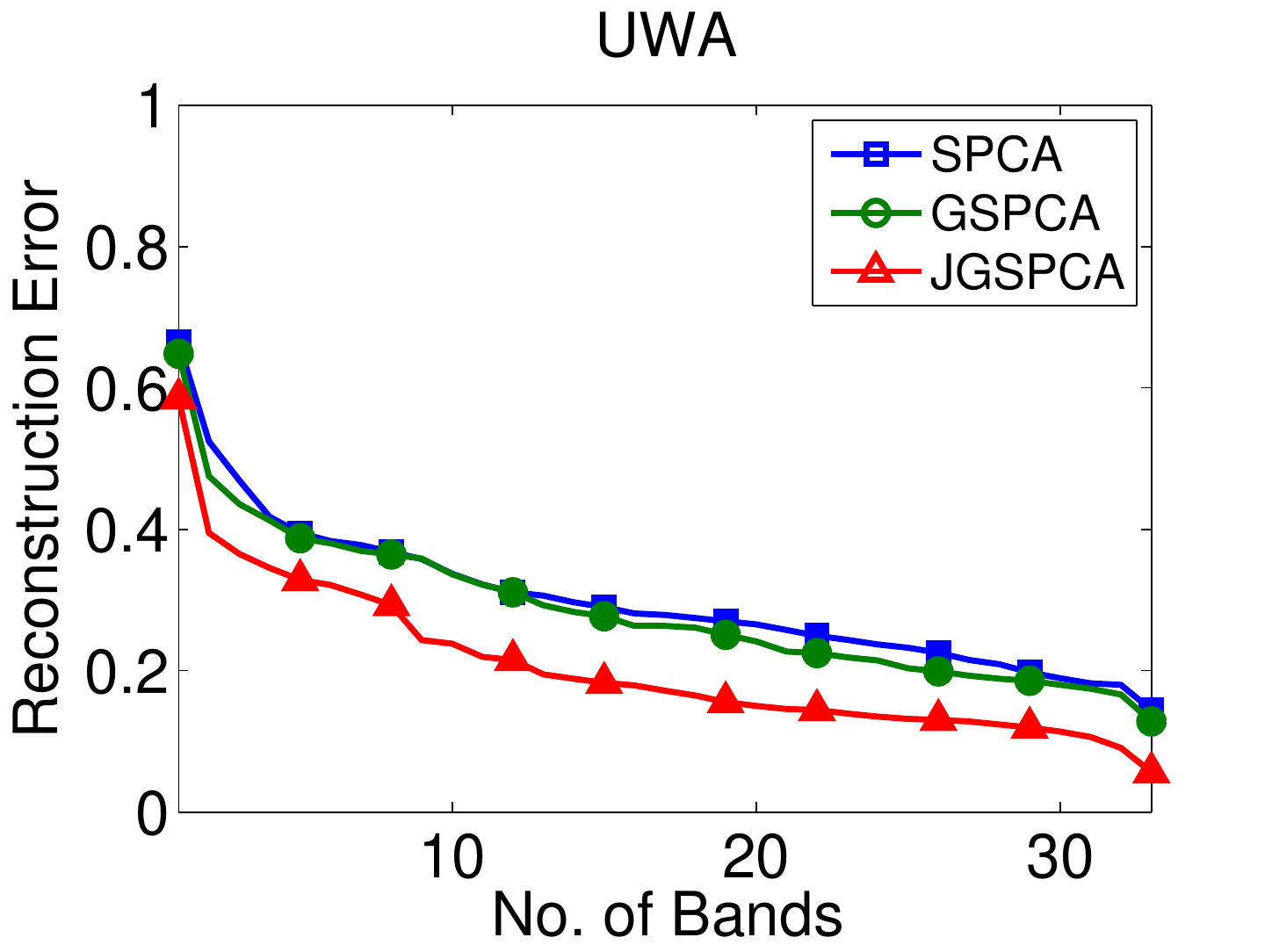}
\caption[Reconstruction errors on all datasets]{Reconstruction errors ($e_r$) on Harvard, CAVE, CMU and UWA datasets}
\label{fig:reconError}
\end{figure}

It is important to note that, in some cases, the first few selected bands are similar regardless of the type of sparsity. Thus, if similar bands are selected, the reconstruction error using those bands may be similar as well. Beyond the first few bands, the proposed JGSPCA is able to identify and select the most informative bands earlier than the other algorithms and hence results in lower reconstruction errors. For instance, the reconstruction error curves on CMU dataset suggest that a number of useful bands are selected by JGSPCA when the number of bands is increased from 1 to 20 which is illustrated by a steep drop in $e_r$ down to 30\%. To reach the same level of $e_r$, SPCA and GSPCA require nearly 45 bands.

The overall trend of reconstruction errors is also related to the variety of objects, and the number of samples used for training in each database. It is difficult to model spatio-spectral variation of complex objects (such as those in CAVE database) with a few bands with a limited training data. On the other hand, the faces are a particular class of objects and can be reconstructed by only a few bands. Moreover, because the image noise is not modeled it is highly unlikely to achieve zero reconstruction error, which is in turn a benefit of sparse modeling techniques. This can be observed in all graphs where reconstruction error exits even when using all the bands.

\begin{table}[!h]
\caption[The number of bands required to achieve a specific reconstruction error]{The number of bands required to achieve a specific reconstruction error. Lower number indicates the superiority of a method in selecting informative bands.}
\label{tab:recError}
\footnotesize
\centering
\begin{tabular}[b]{|l|c|c|c|} 
\multicolumn{4}{c}{Harvard} \\ \hline
\multirow{2}{*}{Method}  & \multicolumn{3}{c|}{$e_r$ (\%)} \\ \cline{2-4}
        &  50\% &  30\% &  10\% \\ \hline
SPCA    &    5  &  16   &   31  \\ \hline
GSPCA   &    5  &  14   &   31  \\ \hline
JGSPCA  &    3  &   6   &   21  \\ \hline
\end{tabular} \hfill
\begin{tabular}[b]{|l|c|c|c|} 
\multicolumn{4}{c}{CAVE} \\ \hline
\multirow{2}{*}{Method}  & \multicolumn{3}{c|}{$e_r$ (\%)} \\ \cline{2-4}
        &  50\% &  30\% &  10\% \\ \hline
SPCA    &   7   &  21   &  31   \\ \hline
GSPCA   &   5   &  21   &  31   \\ \hline
JGSPCA  &   5   &   7   &  26   \\ \hline
\end{tabular} \\
\begin{tabular}[b]{|l|c|c|c|} 
\multicolumn{4}{c}{CMU} \\ \hline
\multirow{2}{*}{Method}  & \multicolumn{3}{c|}{$e_r$ (\%)} \\ \cline{2-4}
        &  50\% &  30\% &  10\% \\ \hline
SPCA    &    14 &  46   &   65  \\ \hline
GSPCA   &    13 &  39   &   65  \\ \hline
JGSPCA  &     4 &  20   &   65  \\ \hline
\end{tabular} \hfill
\begin{tabular}[b]{|l|c|c|c|} 
\multicolumn{4}{c}{UWA} \\ \hline
\multirow{2}{*}{Method} & \multicolumn{3}{c|}{$e_r$ (\%)} \\ \cline{2-4}
        &  50\% &  30\% &  10\% \\ \hline
SPCA    &   2   &   14  &   33  \\ \hline
GSPCA   &   2   &   13  &   33  \\ \hline
JGSPCA  &   1   &    8  &   31  \\ \hline
\end{tabular}
\end{table}

Table~\ref{tab:recError} provides the number of bands required by a model to limit the reconstruction error within an upper bound. As we are interested in achieving low reconstruction error, we restrict to 50\%, 30\% and 10\% error marks for observation. With a limited number of bands, the errors are too high in reconstruction of the hyperspectral data. When more bands are added, JGSPCA clearly selects fewer and better bands to achieve the same reconstruction error mark compared to SPCA and GSPCA. For extremely low reconstruction errors, all methods require roughly the large number of bands, whereas JGSPCA requires comparatively fewer bands.

\begin{figure}[!h]
\footnotesize
\subfigure[A scene from Harvard dataset.]{
\begin{minipage}[c]{0.48\linewidth} \centering
\fbox{\includegraphics[trim = 40pt 86pt 30pt 90pt, clip, width=1\linewidth]{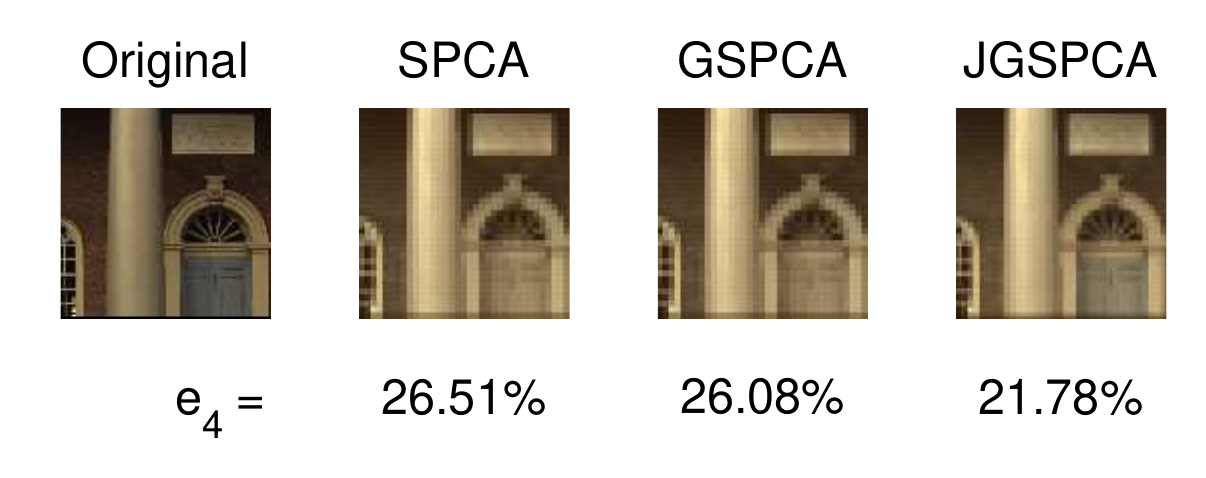}} \\
\fbox{\includegraphics[trim = 40pt 86pt 30pt 90pt, clip, width=1\linewidth]{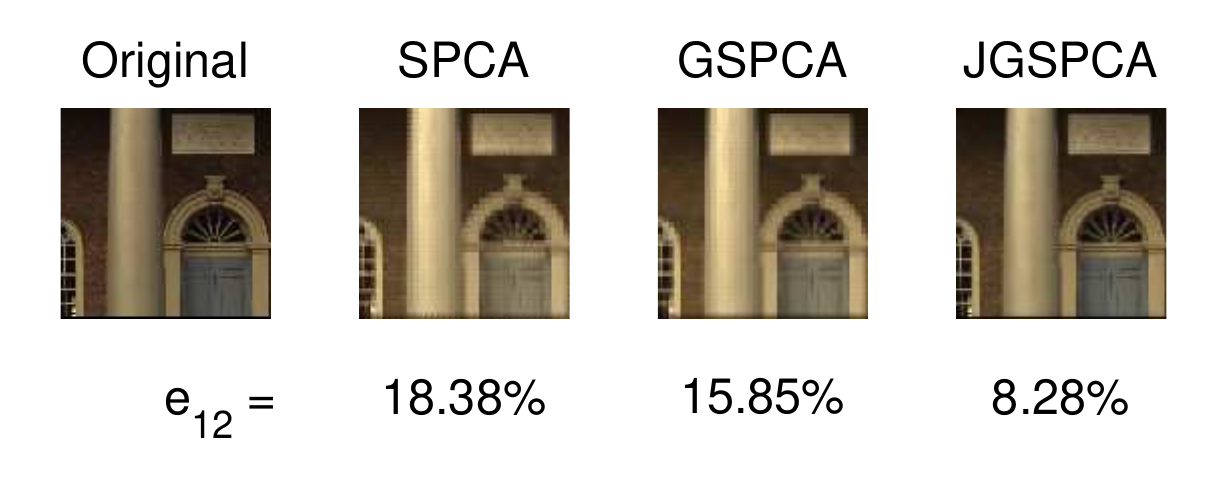}}
\end{minipage}} \vspace{5pt}
\subfigure[A scene from CAVE dataset.]{
\begin{minipage}[c]{0.48\linewidth} \centering
\fbox{\includegraphics[trim = 40pt 86pt 30pt 90pt, clip, width=1\linewidth]{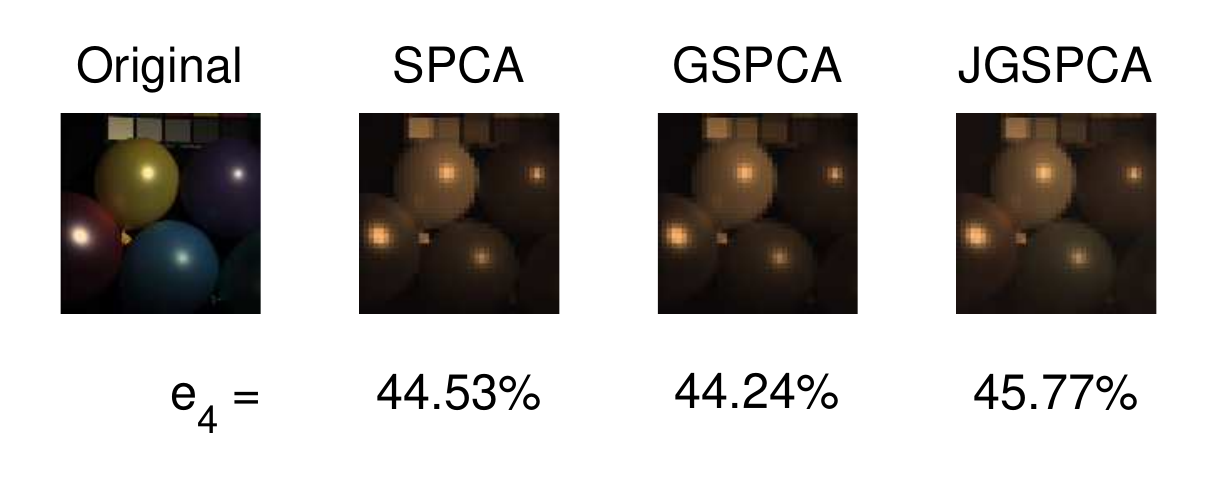}} \\
\fbox{\includegraphics[trim = 40pt 86pt 30pt 90pt, clip, width=1\linewidth]{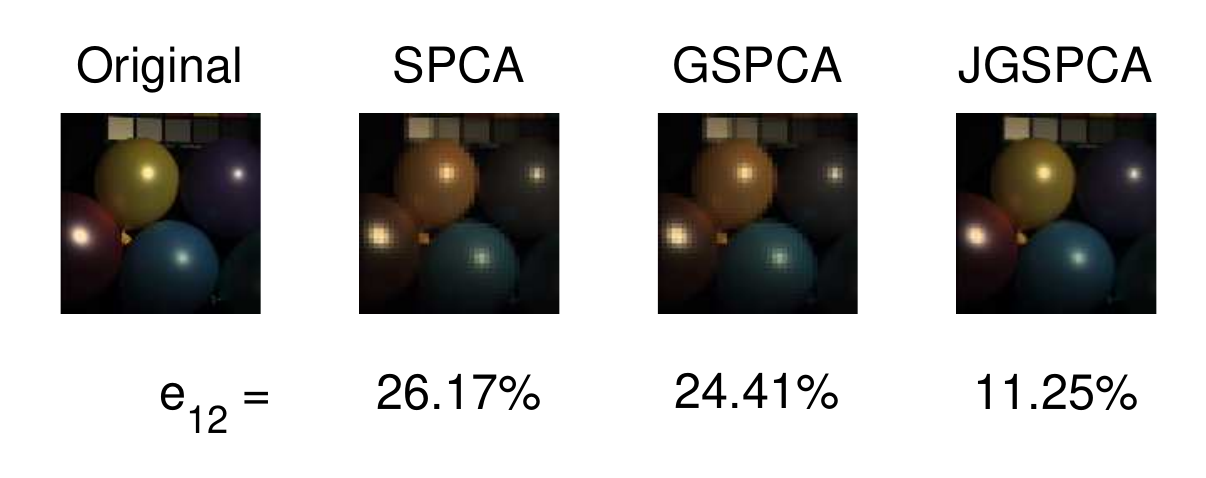}}
\end{minipage}}  \vspace{5pt}\\
\subfigure[A face from UWA dataset.]{
\begin{minipage}[c]{0.48\linewidth} \centering
\fbox{\includegraphics[trim = 40pt 86pt 30pt 90pt, clip, width=1\linewidth]{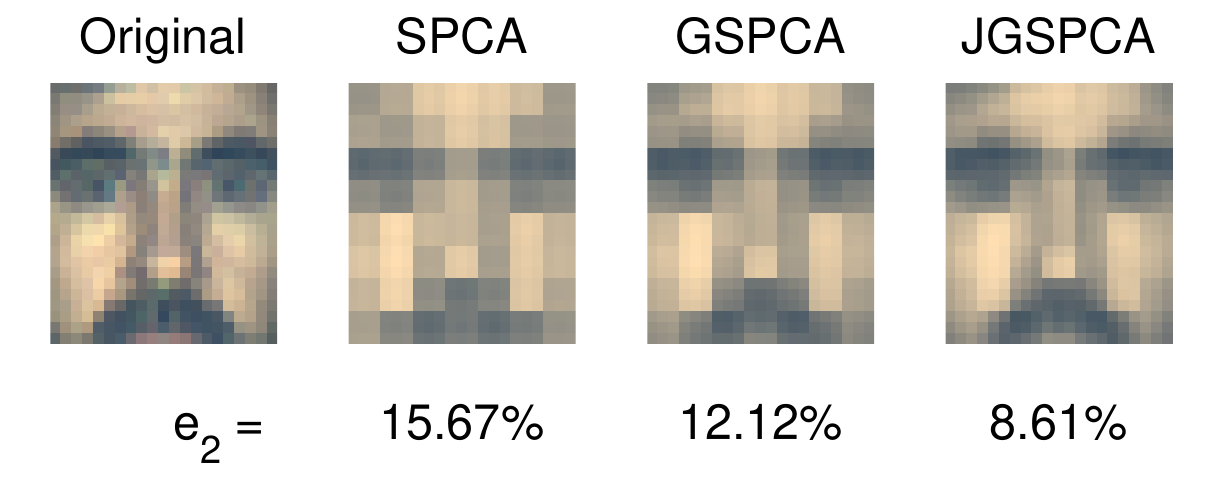}} \\
\fbox{\includegraphics[trim = 40pt 86pt 30pt 90pt, clip, width=1\linewidth]{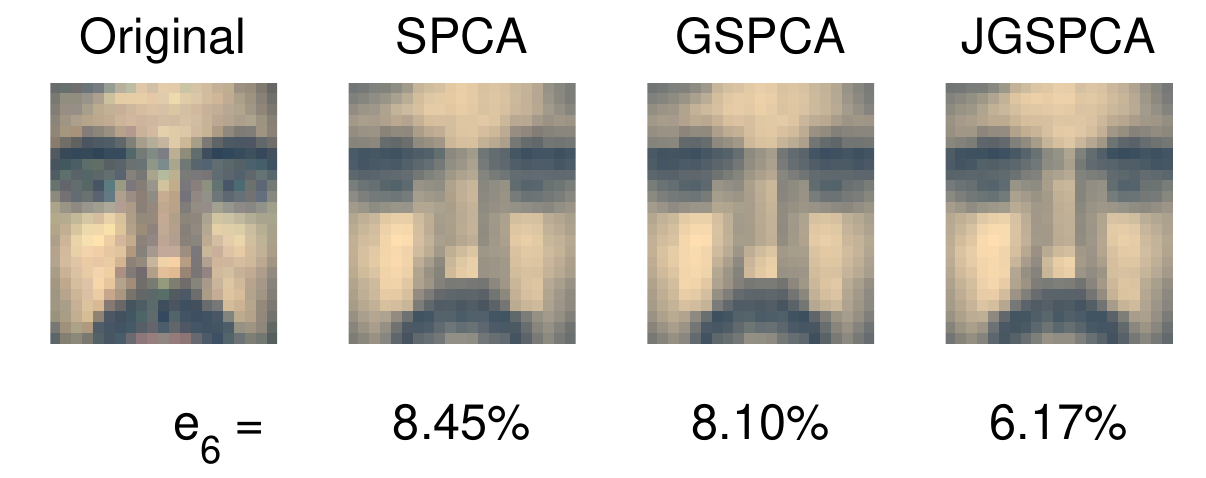}}
\end{minipage}} \vspace{5pt}
\subfigure[A face from CMU dataset.]{
\begin{minipage}[c]{0.48\linewidth} \centering
\fbox{\includegraphics[trim = 40pt 86pt 30pt 90pt, clip, width=1\linewidth]{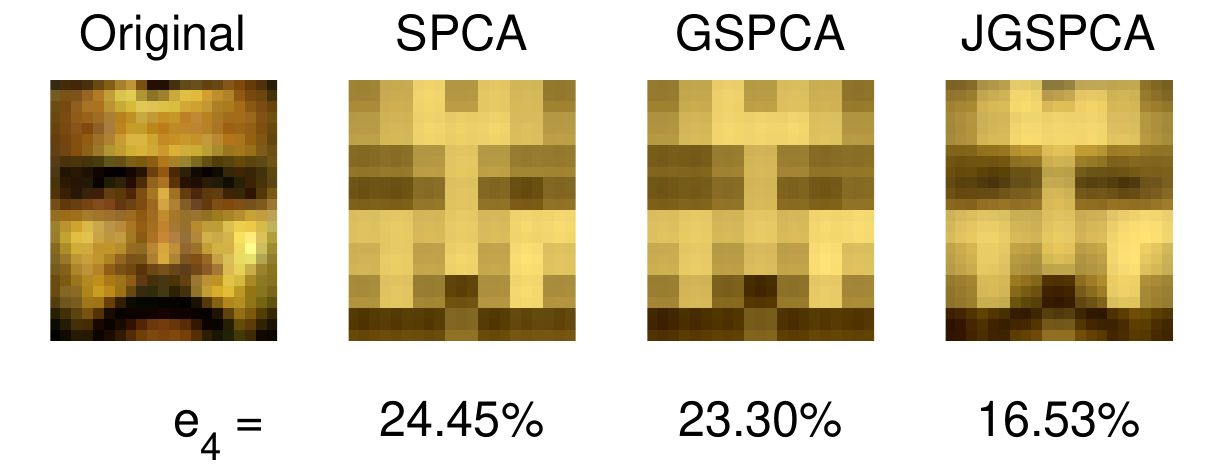}} \\
\fbox{\includegraphics[trim = 40pt 86pt 30pt 90pt, clip, width=1\linewidth]{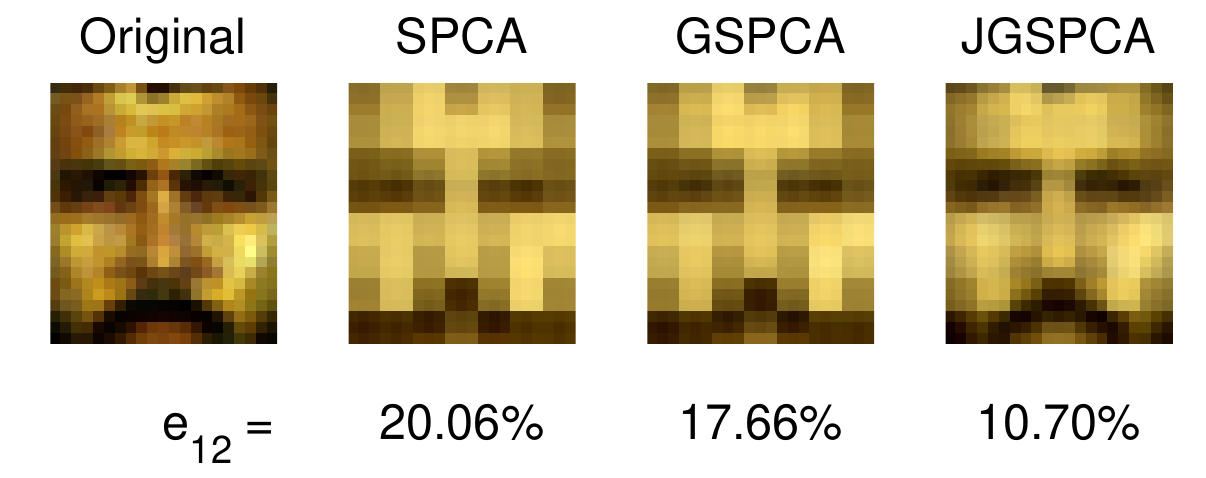}}
\end{minipage}} \vspace{5pt}
\caption[Compressed sensing results of hyperspectral images]{Compressed sensing results of hyperspectral images (rendered as RGB). The results are shown for the same number of bands used for reconstruction of the hyperspectral image using SPCA, GSPCA and JGSPCA. The original images are rendered using all bands of the hyperspectral images. The differences are numerically and visually appreciable in all examples.}
\label{fig:reconResults}
\end{figure}

Figure~\ref{fig:reconResults} shows compressive sensing of two example images using SPCA, GSPCA and JGSPCA methods. The proposed JGSPCA exhibits significantly lower reconstruction errors which can be visually appreciated. The difference is more obvious when using small number of bands for compressed sensing. Overall, on all databases, JGSPCA performs the best, followed by GSPCA and SPCA in compressed hyperspectral imaging.

\subsection{Hyperspectral Face Recognition}

In this experiment, we compare the compressively sensed hyperspectral images using different algorithms for a recognition task. We expect a compressive sensing algorithm to achieve high recognition accuracy by sensing minimum number of bands. We evaluate our proposed JGSPCA algorithm for band selection in hyperspectral face recognition and compare it to SPCA and GSPCA. In order to understand the purpose of this experiment, following points need due consideration
\begin{enumerate}
\item We use several widely accepted classification methods to evaluate the trend of recognition accuracy against compressive sensing of hyperspectral face images. Any other state-of-the-art algorithm may perform better than the chosen baseline algorithms, however the trend is expected to be similar.
\item We assume that the bands that are informative for class separation are the bands that are informative to explanation of the data which is the default criteria in PCA. More relevant criteria such as sparse LDA~\cite{clemmensen2011sparse,merchante2012efficient} are expected to be more supportive of this assumption which can be explored in future.
\end{enumerate}

A model is learned using a single hyperspectral image per subject in the training set which makes the gallery. All remaining hyperspectral images which comprise the test set, serve as the probes. A test hyperspectral image cube is compressively sensed (reconstructed by learned model) and used for classification. Consider a training set $\mathbf{X}$ and test set $\mathbf{Z}$, where each row is a hyperspectral face image. The compressive sensing performance of the $r^{\textrm{th}}$ learned model $\mathcal{M}_r$ in terms of recognition accuracy is computed as
\begin{equation}
a_r =  \textrm{classify}(\mathcal{M}_r,\mathbf{X},\mathbf{Z})~.
\end{equation}

We perform face recognition using Nearest Neighbor (NN), EigenFaces~\cite{turk1991eigenfaces}, Support Vector Machine (SVM)~\cite{guo2000face} and Sparse Representation-based Classification (SRC)~\cite{wright2009robust}. The recognition accuracies from each algorithm are averaged by 3-fold cross validation.

\begin{landscape}
\begin{figure}
\footnotesize
\centering
\includegraphics[trim = 3pt 2pt 30pt 2pt, clip, width=0.24\linewidth]{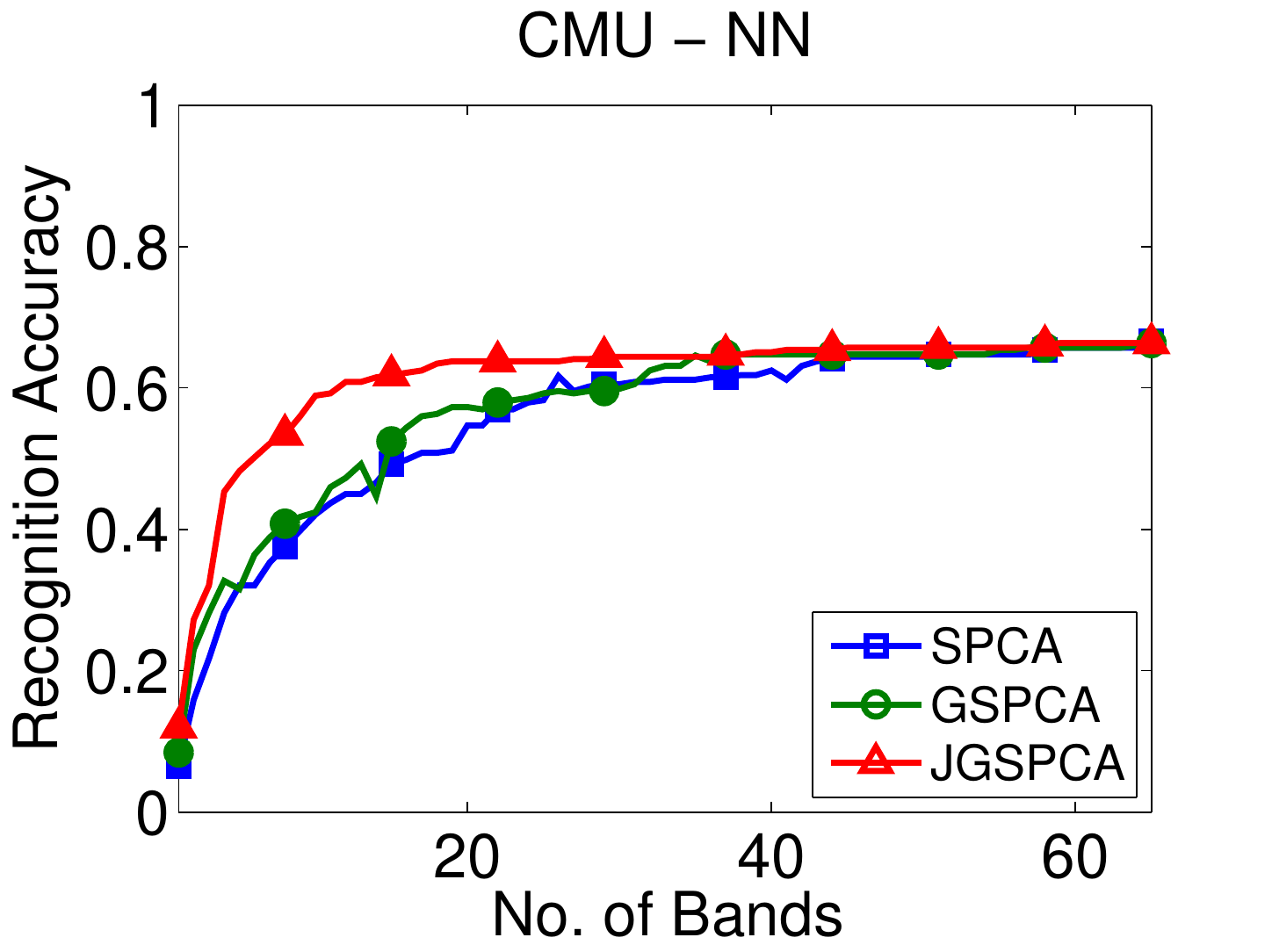}
\includegraphics[trim = 3pt 2pt 30pt 2pt, clip, width=0.24\linewidth]{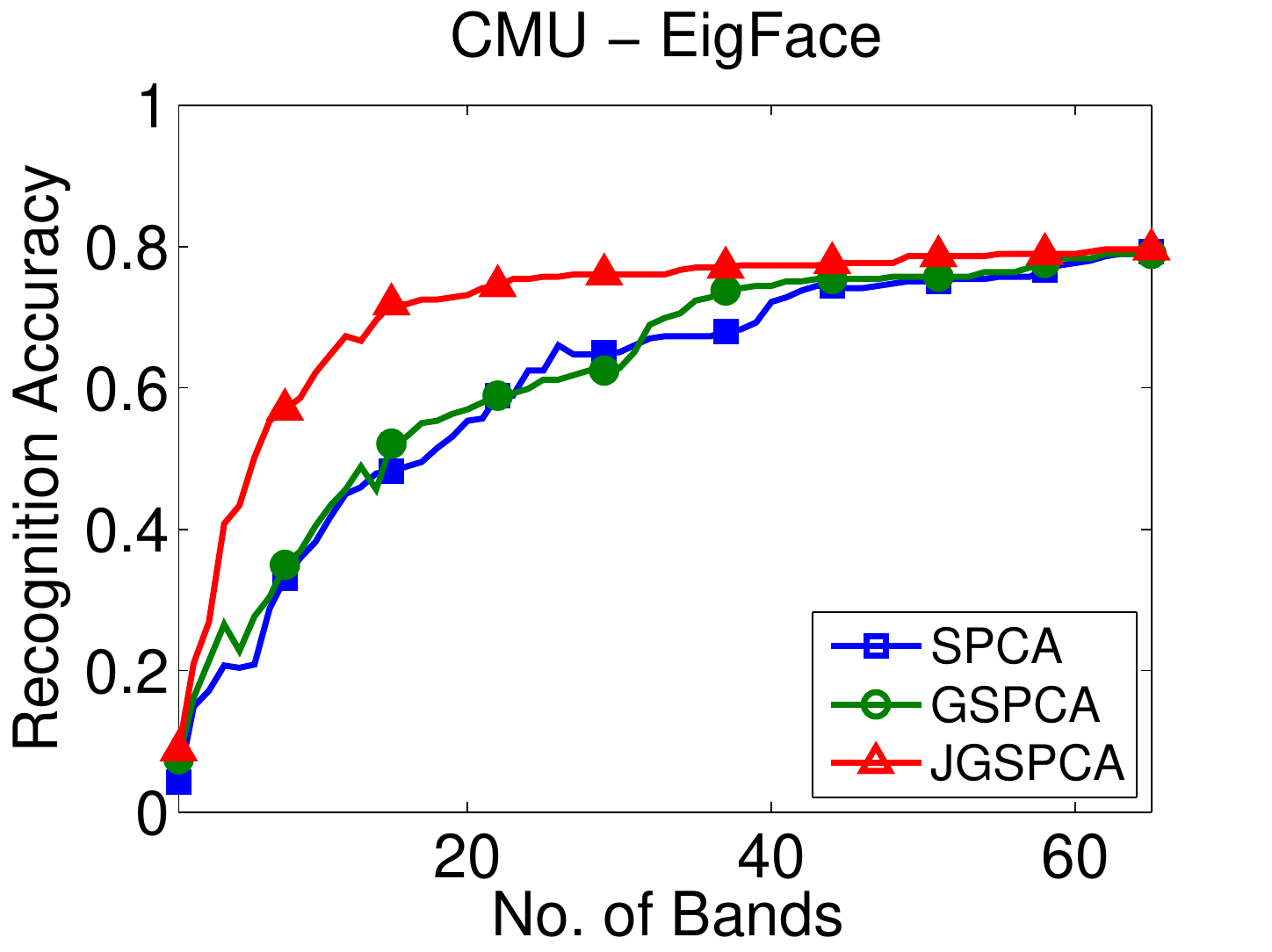}
\includegraphics[trim = 3pt 2pt 30pt 2pt, clip, width=0.24\linewidth]{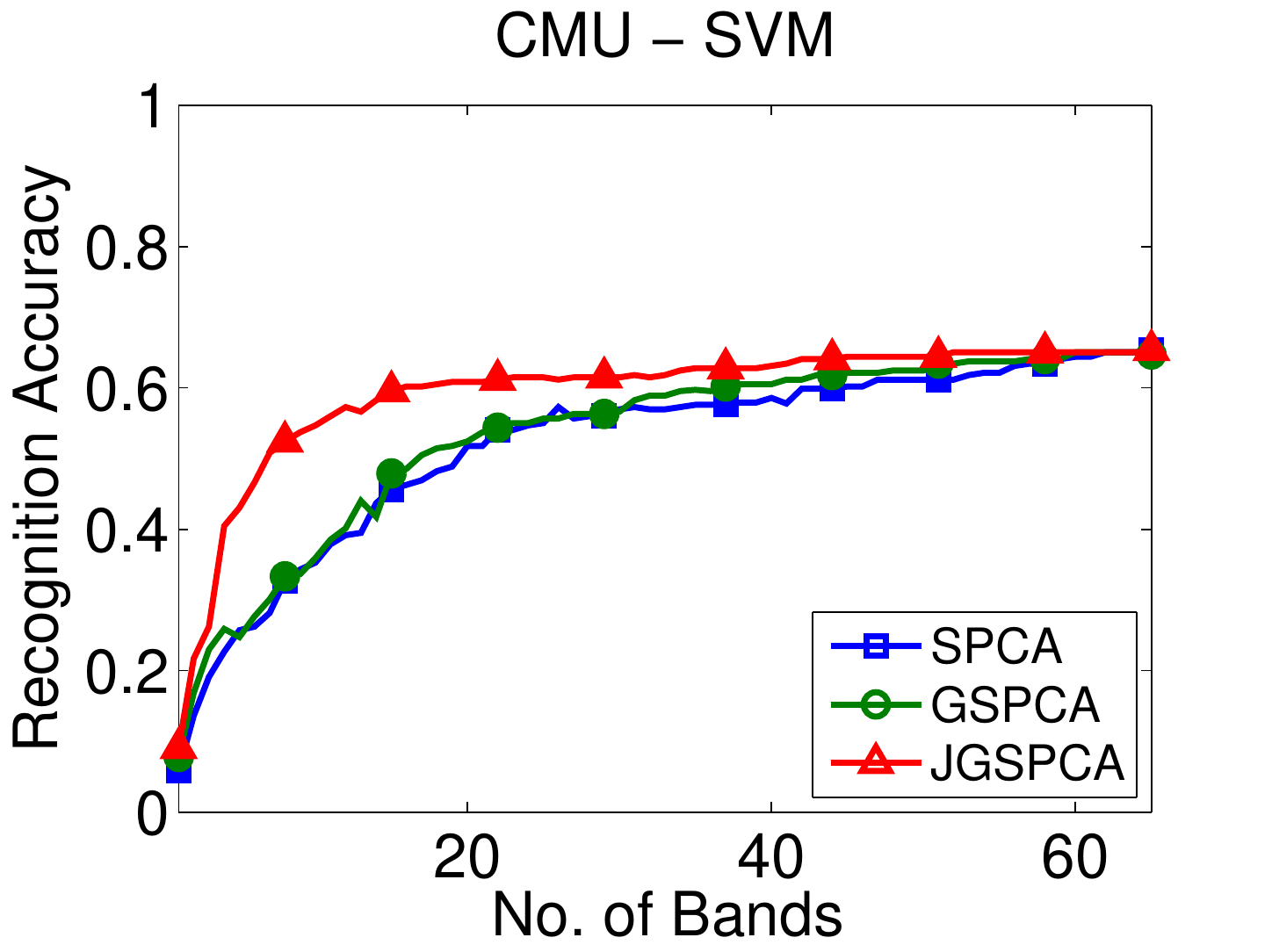}
\includegraphics[trim = 3pt 2pt 30pt 2pt, clip, width=0.24\linewidth]{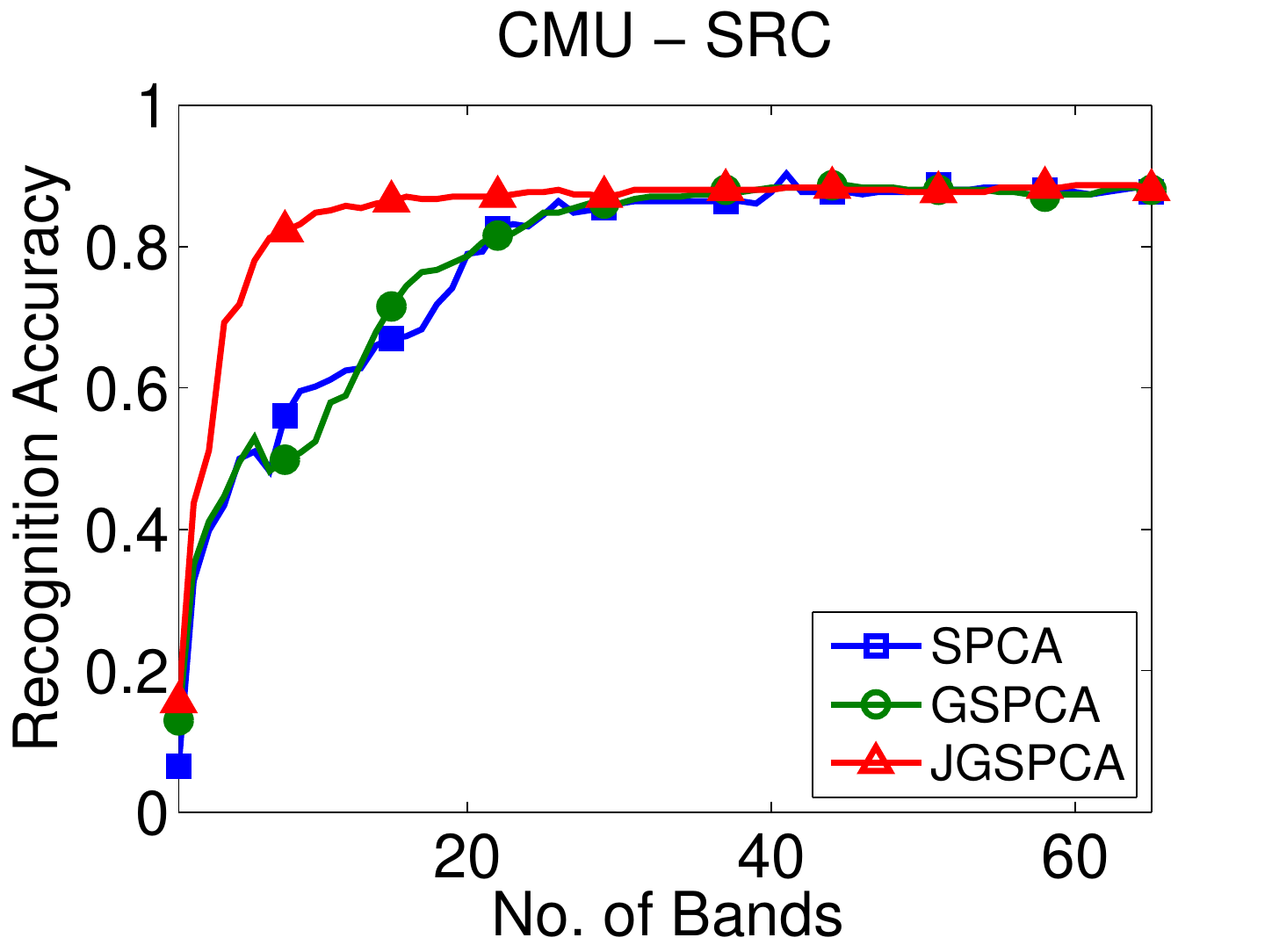}\\ [10pt]
\includegraphics[trim = 3pt 2pt 30pt 2pt, clip, width=0.24\linewidth]{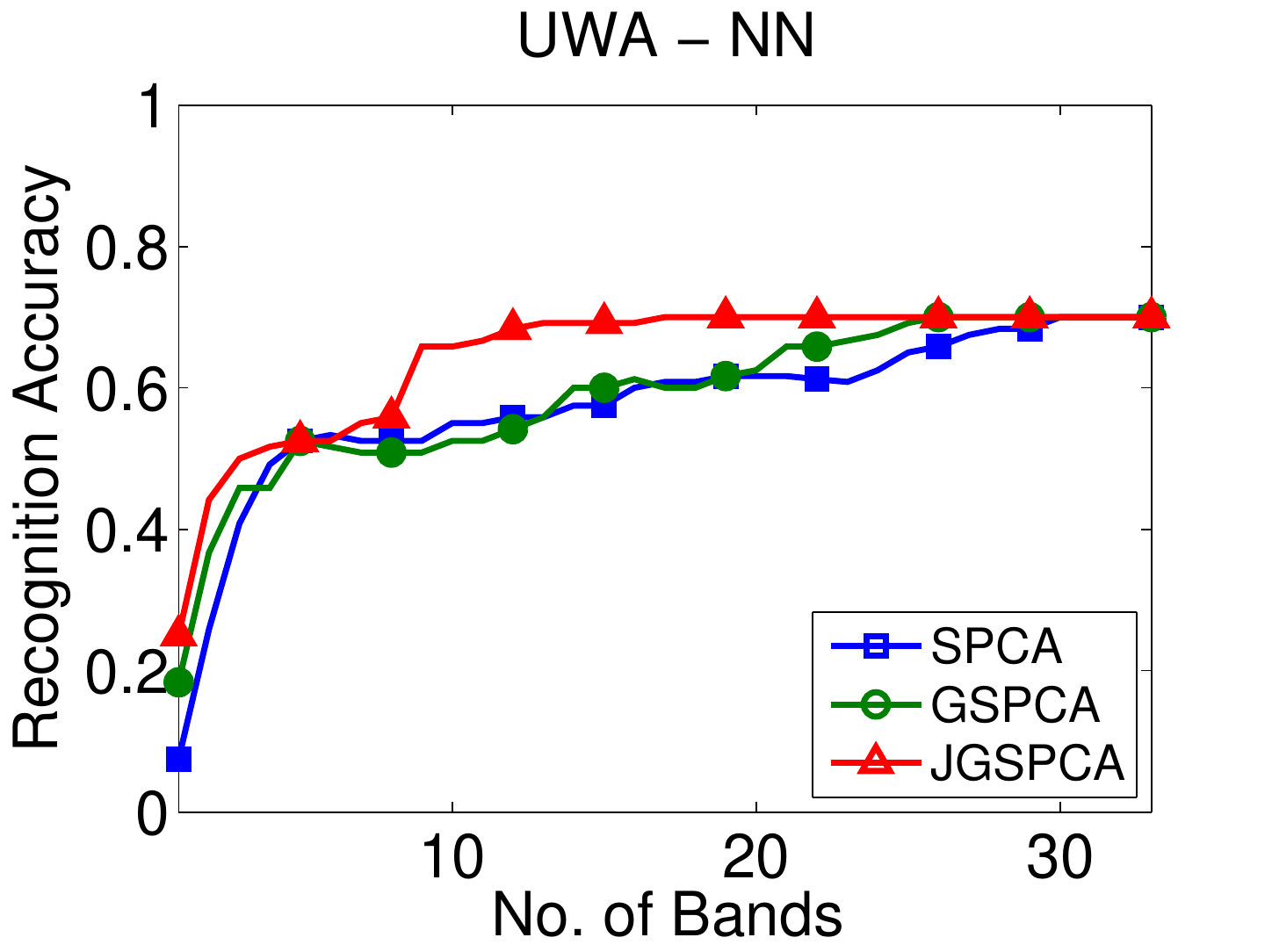}
\includegraphics[trim = 3pt 2pt 30pt 2pt, clip, width=0.24\linewidth]{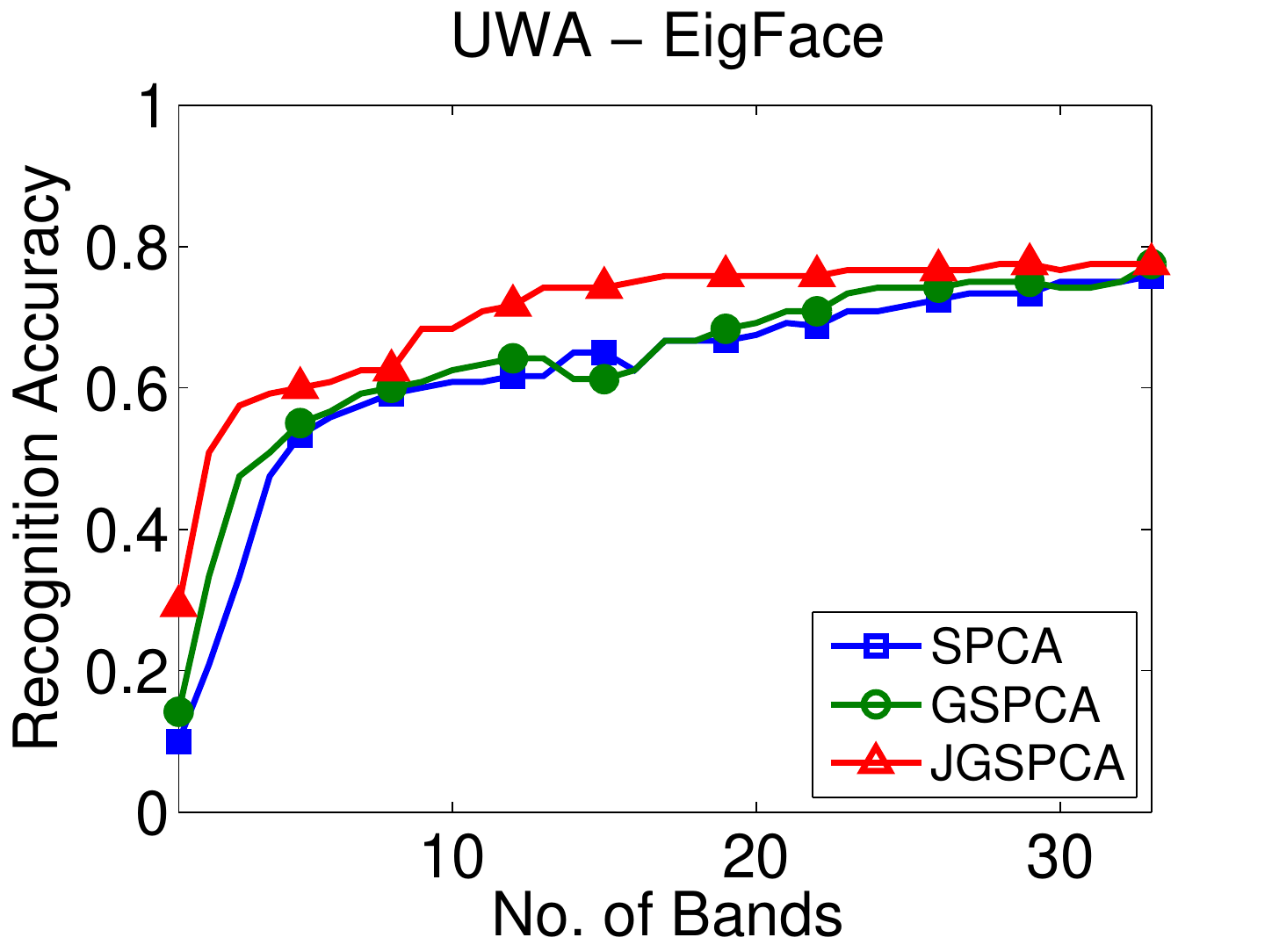}
\includegraphics[trim = 3pt 2pt 30pt 2pt, clip, width=0.24\linewidth]{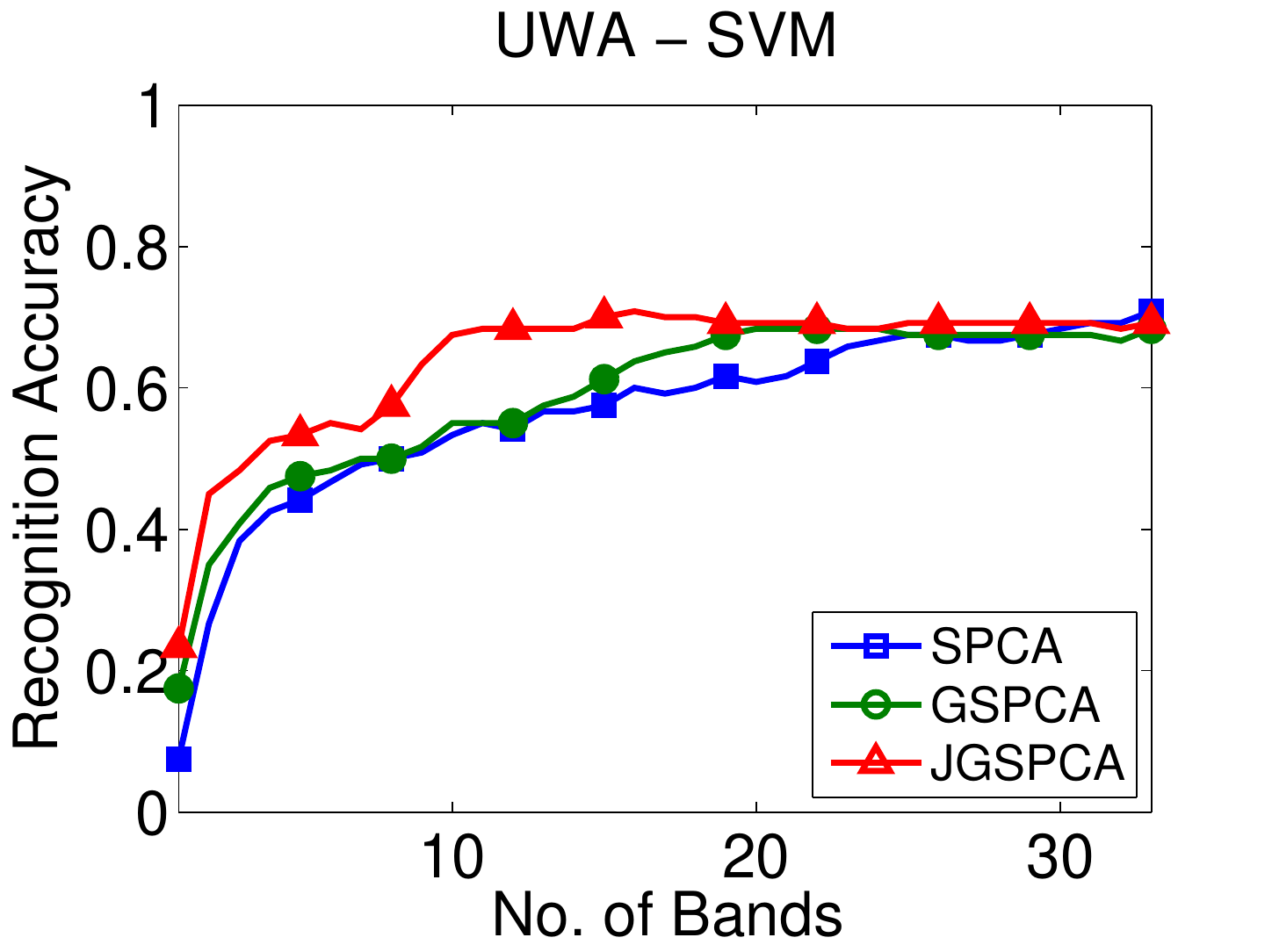}
\includegraphics[trim = 3pt 2pt 30pt 2pt, clip, width=0.24\linewidth]{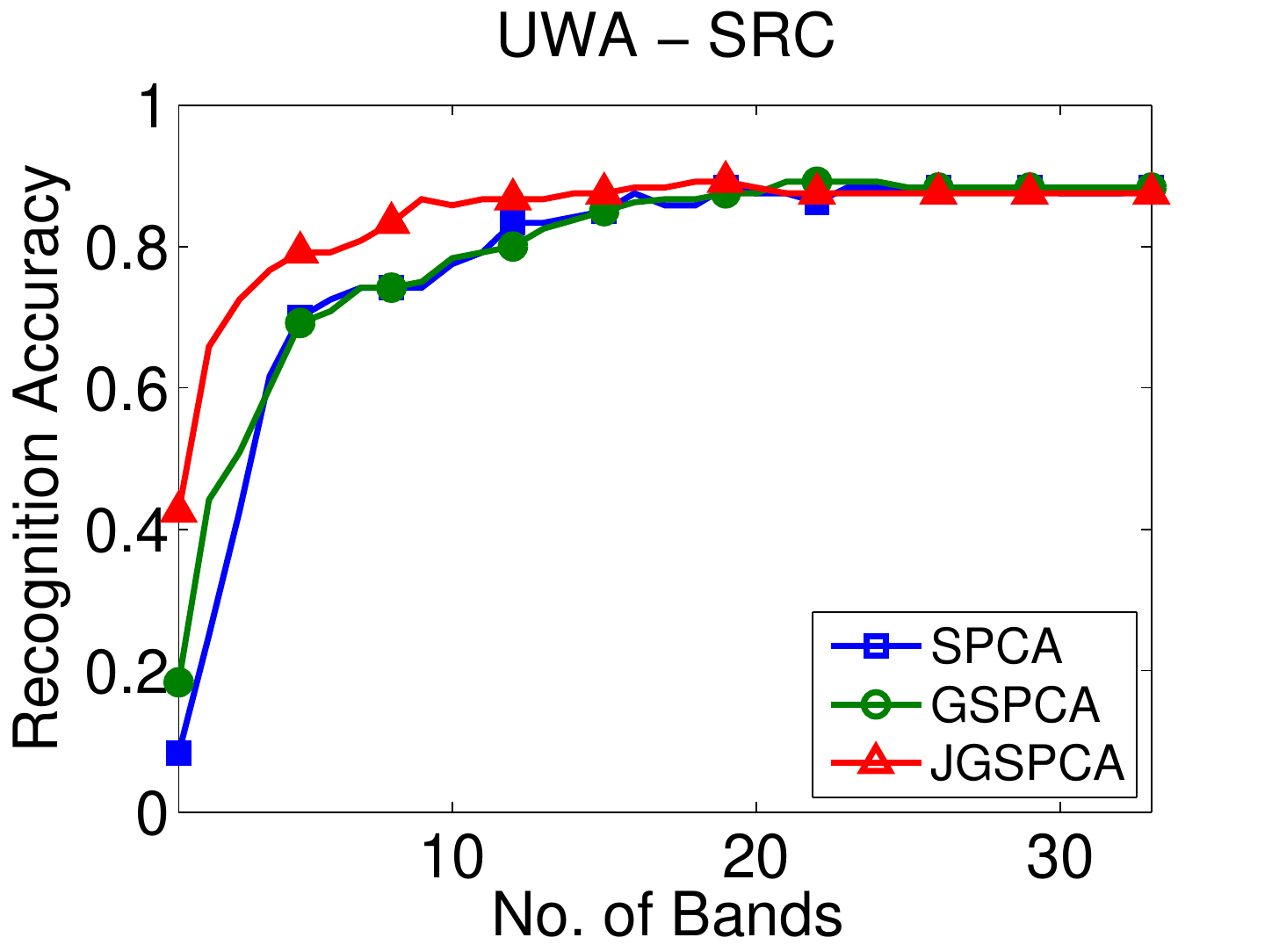}
\caption[Recognition accuracy ($a_r$) versus number of selected bands]{Recognition accuracy ($a_r$) versus number of selected bands on CMU and UWA face datasets. The JGSPCA demonstrates consistently higher recognition accuracy compared to SPCA and GSPCA.}
\label{fig:recogRate}
\end{figure}
\end{landscape}

Figure~\ref{fig:recogRate} shows the recognition accuracy against the number of bands used for reconstruction of test hyperspectral images. It can be easily observed that JGSPCA consistently achieves higher recognition accuracy with fewer bands compared to SPCA and GSPCA on both databases. The consistency of the trend can be observed among different recognition algorithms on the same database. In order to numerically analyze the recognition performance through compressed sensing, we tabulate the number of bands required to achieve a certain recognition accuracy mark. The accuracies are averaged over all recognition algorithms after scaling each algorithm's recognition accuracies between [0,1]. In Table~\ref{tab:recRate}, we are interested in achieving higher recognition accuracies, therefore, we only observe 50\%, 70\% and 90\% accuracy marks. It can be observed that the proposed JGSPCA algorithm achieves higher recognition accuracy by sensing only a few bands compared to SPCA and GSPCA. This implicitly indicates the ability of JGSPCA to select more informative bands for a recognition task.

\begin{table}
\caption[The number of bands required to achieve a specific recognition accuracy]{The number of bands required to achieve a specific recognition accuracy. Lower number indicates the superiority of a method in selecting informative bands.}
\label{tab:recRate}
\footnotesize
\centering
\begin{tabular}[b]{|l|c|c|c|} 
\multicolumn{4}{c}{CMU} \\ \hline
\multirow{2}{*}{Method}  & \multicolumn{3}{c|}{$a_r$ (\%)} \\ \cline{2-4}
        &  50\% &  70\% &  90\% \\ \hline
SPCA    &    9 &  18   &   39 \\ \hline
GSPCA   &    9 &  15   &   32 \\ \hline
JGSPCA  &    4 &   6   &   14 \\ \hline
\end{tabular}
\hfill
\begin{tabular}[b]{|l|c|c|c|} 
\multicolumn{4}{c}{UWA} \\ \hline
\multirow{2}{*}{Method} & \multicolumn{3}{c|}{$a_r$ (\%)} \\ \cline{2-4}
        &  50\% &  70\% &  90\% \\ \hline
SPCA    &   4   &   6  &   23 \\ \hline
GSPCA   &   3   &   6  &   19 \\ \hline
JGSPCA  &   2   &   3  &    9 \\ \hline
\end{tabular}
\end{table}

\section{Conclusion}
\label{sec:conc-JGSPCA}

We presented a Joint Group Sparse PCA algorithm which addresses the problem of finding a few \emph{groups of features} that \emph{jointly} capture most of the variation in the data. Unlike other sparse formulations of PCA, for which all features might still be needed for reconstructing the data, the presented approach requires only a few features to represent the whole data. This property makes the presented formulation most suitable for compressed sensing, in which the main goal is to measure only a few features that capture most significant information. The efficacy of our approach has been demonstrated by experiments on several real-world datasets of hyperspectral images. The proposed methodology is well adaptable to scenarios where the features can be implicitly or explicitly categorized into groups. 

\chapter[Joint Sparse Principal Component Analysis]{Joint Sparse PCA\\for Hyperspectral Ink Mismatch Detection} 

\label{Chapter6} 


Natural and man-made materials exhibit a characteristic response to incident light. As discussed in Chapter~\ref{Chapter1}, humans are metameric to certain colors, i.e.~they are unable to distinguish between two apparently similar colors~\cite{gegenfurtner2003cortical} due to the trichromatic nature of the human visual system. For instance, two blue inks with substantially different spectral responses might look identical to the naked eye. When a document is manipulated with the intention of forgery or fraud, the modifications are often done in such a way that they are hard to catch with a naked eye. In handwritten documents, the forger not only tries to emulate the handwriting of the original writer, but also uses a pen that has a visually similar ink compared to the rest of the note. Hence, analysis of inks is of critical importance in questioned document examination.

The outcome of ink analysis potentially leads to the determination of forgery, fraud, backdating and ink age. Of these, one of the most important tasks is to discriminate between different inks which we term as \emph{ink mismatch detection}. There are two main approaches to distinguish inks, destructive and non-destructive examination. Chemical analysis such as Thin Layer Chromatography (TLC)~\cite{aginsky1993forensic} belongs to the category of destructive testing and can separate a mixture of inks into its constituents. The separation of inks is achieved via capillary action which is a common practice in chemical analysis. There are a few drawbacks to this approach. First, TLC is destructive which means that the originality of the sample is compromised after each repetition of the examination, which is often forbidden by law in the context of forensic case work as it effectively destroys the evidence. Furthermore, the procedure is time consuming because the sample needs to be placed for a certain amount of time before any noticeable differences can be observed in the chromatograph.

An alternative non-destructive approach is to employ spectral imaging to differentiate apparently similar inks. Spectral imaging captures subtle differences in the inks which is valuable for mismatch detection as shown in Figure~\ref{fig:ink-images}. A hyperspectral image (HSI) is a series of discrete narrow-band images in the electro-magnetic spectrum. In contrast to a three channel RGB image, an HSI captures finer detail of a scene in the spectral dimension. Hyperspectral imaging has recently emerged as an efficient non-destructive tool for detection and identification of forensic traces as well as enhancement and restoration of historical documents~\cite{joo2011visual,hedjam2013historical,edelman2012hyperspectral}. It has found good use in forensics for bloodstain analysis, latent print analysis and questioned document examination~\cite{ChemImage}. High fidelity spectral information is very useful, especially, where it is required to distinguish between inks or determine the age of a writing or document.

\begin{figure}[t]
\centering
{\scriptsize
\begin{tabular}{cccccc}
& RGB & 400nm & 520nm & 640nm & 700nm \\
ink 1 &
\fbox{\includegraphics[trim = 514pt 923pt 685pt 762pt, clip, width=0.10\linewidth]{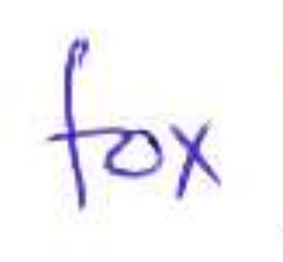}} &
\fbox{\includegraphics[trim = 302pt  20pt 402pt 415pt, clip, width=0.10\linewidth]{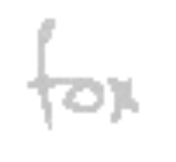}} &
\fbox{\includegraphics[trim = 302pt  20pt 402pt 415pt, clip, width=0.10\linewidth]{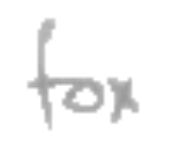}} &
\fbox{\includegraphics[trim = 302pt  20pt 402pt 415pt, clip, width=0.10\linewidth]{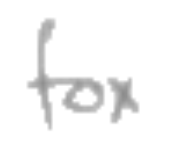}} &
\fbox{\includegraphics[trim = 302pt  20pt 402pt 415pt, clip, width=0.10\linewidth]{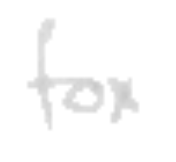}} \\ [2pt]
ink 2 &
\fbox{\includegraphics[trim = 486pt 1363pt 714pt 323pt, clip, width=0.10\linewidth]{chapter_6/Scan1-150}} &
\fbox{\includegraphics[trim = 285pt 310pt 419pt 125pt, clip, width=0.10\linewidth]{chapter_6/MSV_01_01_wbg}} &
\fbox{\includegraphics[trim = 285pt 310pt 419pt 125pt, clip, width=0.10\linewidth]{chapter_6/MSV_01_13_wbg}} &
\fbox{\includegraphics[trim = 285pt 310pt 419pt 125pt, clip, width=0.10\linewidth]{chapter_6/MSV_01_25_wbg}} &
\fbox{\includegraphics[trim = 285pt 310pt 419pt 125pt, clip, width=0.10\linewidth]{chapter_6/MSV_01_31_wbg}}
\end{tabular}}
\caption[Discrimination of inks offered by spectral imaging]{The images highlight the discrimination of inks at different wavelengths offered by spectral imaging. A word written in two different blue inks is shown in this example. Observe that the two inks appear similar at short wavelength and gradually appear different at longer wavelengths.}
\label{fig:ink-images}
\end{figure}

Brauns and Dyer~\cite{brauns2006fourier} developed a hyperspectral imaging system for forgery detection in potentially fraudulent documents in a non-destructive manner. They prepared written documents with blue, black and red inks and later introduced alterations with a different ink of the same color. They used fuzzy c-means clustering to sort ink spectra  into different groups. In Fuzzy clustering, it is possible for an ink spectra to be a member of more than one cluster (or ink) in terms of the degree of association. Their sample data comprised of only two inks for each color. They qualitatively showed that the inks can be separated into two different classes. Their imaging system was based on an interferometer which relies on moving parts for frequency tuning and therefore slows the acquisition process. The small number of inks, absence of quantitative results and slow imaging process collectively limit the applicability of the system to practical ink mismatch detection.

A relatively improved hyperspectral imaging system for the analysis of historical documents in archives was developed by Padaon et al.~\cite{padoan2008quantitative}. It comprised of a CCD camera and tunable light sources. The system provided 70 spectral bands from near-UV through visible to near IR range (365-1100nm in 10nm steps). They highlighted various applications of hyperspectral document imaging including, monitoring paper and ink aging, document enhancement, and distinguishing between inks. The use of narrowband tunable light source may reduce the chances of damage to a document due to excessive heat generated by a strong broadband white light source. However, its extremely slow acquisition time (about 15 minutes)~\cite{klein2008quantitative} consequently results in extended exposure to the light source. Therefore, the benefit gained by a tunable light source may be nullified. Moreover, extended acquisition time limits the productivity of the system in terms of the number of documents that can be processed in a given time. Our proposed system captures hyperspectral images in only a fraction of that time using a \emph{tunable filter}. An electronically tunable filter is fast, precise and has no moving parts.



Hyperspectral document imaging systems from Foster \& Freeman~\cite{FnF} and ChemImage~\cite{ChemImage} are in common use. In these devices, the examiner needs to select a suspected portion of the note for ink mismatch analysis. Above all, an examiner has to search through hundreds of combinations of the different wavelengths to visually identify the differences in inks, which is laborious. For instance, to analyze a 33 band hyperspectral image in an exhaustive search, the total number of band combinations is of the order of $\approx10^{10}$, which is not feasible in time critical scenarios. This procedure is not required in our proposed \emph{automatic} document analysis approach. Moreover, our approach is a quantitative instead of subjective analysis~\cite{hammond2007validation}.

Since, hyperspectral images are densely sampled along the spectral dimension, the neighboring bands are highly correlated. This redundancy in the data makes hyperspectral images a good candidate for sparse representation as well as feature selection~\cite{chakrabarti2011statistics}. Note that acquisition of all bands is time consuming and limits the number of documents that can be scanned in a given time. Moreover, the resulting data is huge and some bands with low energy contain significant system noise. Therefore, it is desirable to select the most informative subset of bands, thereby reducing the acquisition time, and increases the accuracy by getting rid of the noisy bands.

We propose Joint Sparse PCA (JSPCA) that computes a PCA basis by explicitly removing the non-informative bands. The joint sparsity ensures that all basis vectors share the same sparsity structure whereas the complete hyperspectral data can be represented by a sparse linear combination of the bands. We demonstrate the Joint Sparse Band Selection (JSBS) algorithm for hyperspectral ink mismatch detection. We experimentally show that the selected bands yield only fewer combinations to analyze, yet they are informative for ink mismatch detection.



The rest of this chapter is organized as follows. In Section~\ref{sec:method}, we present the proposed ink mismatch detection methodology, the JSBS and SFBS algorithms. In Section~\ref{sec:database} we describe the database specifications, acquisition and normalization. Section~\ref{sec:experiments} provides details of the experimental setup, evaluation protocol, and analysis of the results. The conclusions are presented in Section~\ref{sec:conclusion}.

\section{Ink Mismatch Detection}
\label{sec:method}

Ink mismatch detection is based on the fact that the same inks exhibit similar spectral responses whereas different inks are spectrally dissimilar~\cite{khan2013hyperspectral}. We assume that the spectral responses of the inks are independent of the writing styles of different subjects (which is a spatial characteristic). Thus, unlike works that identify hand writings by the ink-deposition traces~\cite{brink2012writer}, our work solely focuses on the spectral responses of inks for ink discrimination. In the proposed ink mismatch detection framework, the initial objective is to segment handwritten text from the paper. The next task is to select features (bands) from the ink spectra by the proposed band selection technique. Finally, the class membership of each ink pixel is determined by clustering of the ink spectral responses using selected features.


\subsection{Handwritten Text Segmentation}
\label{sec:segment}

Consider a three dimensional hyperspectral image $\mathbf{I} \in \mathbb{R}^{x \times y \times p}$, where $(x,y)$ are the number of pixels in spatial dimension and $p$ is the number of bands in the spectral dimension. The objective is to compute a binary mask $\mathbf{M} \in \mathbb{R}^{x \times y}$ which associates each pixel to the foreground or background. The ink pixels (text) make up the foreground and the blank area of the page is the background.

\begin{figure}[t]
\centering
\subfigure[Document image]{\label{fig:raw-image} \fbox{\includegraphics[width=0.44\linewidth]{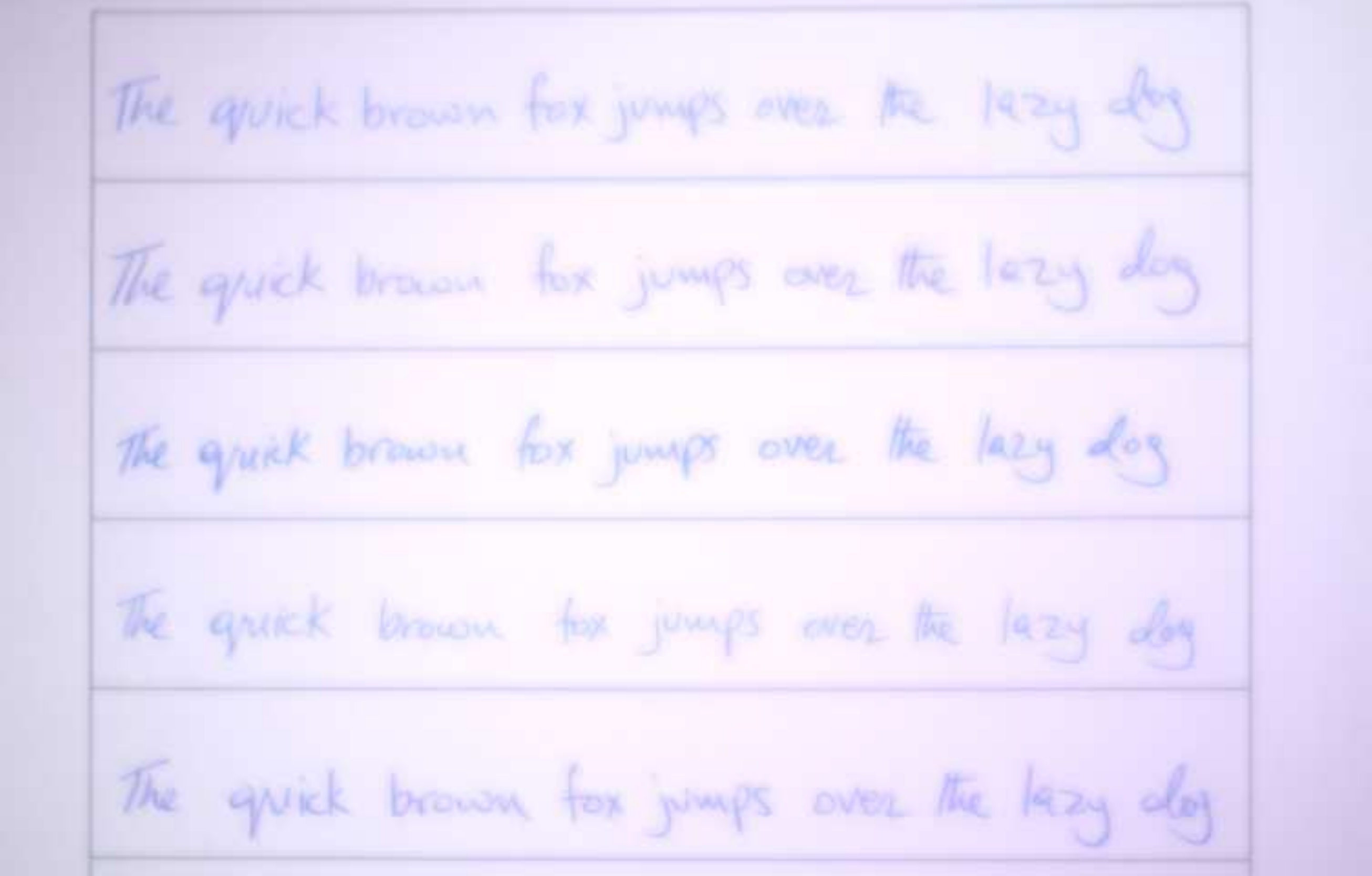}}}
\subfigure[Single Band]{\label{fig:band-640}\fbox{\includegraphics[width=0.44\linewidth]{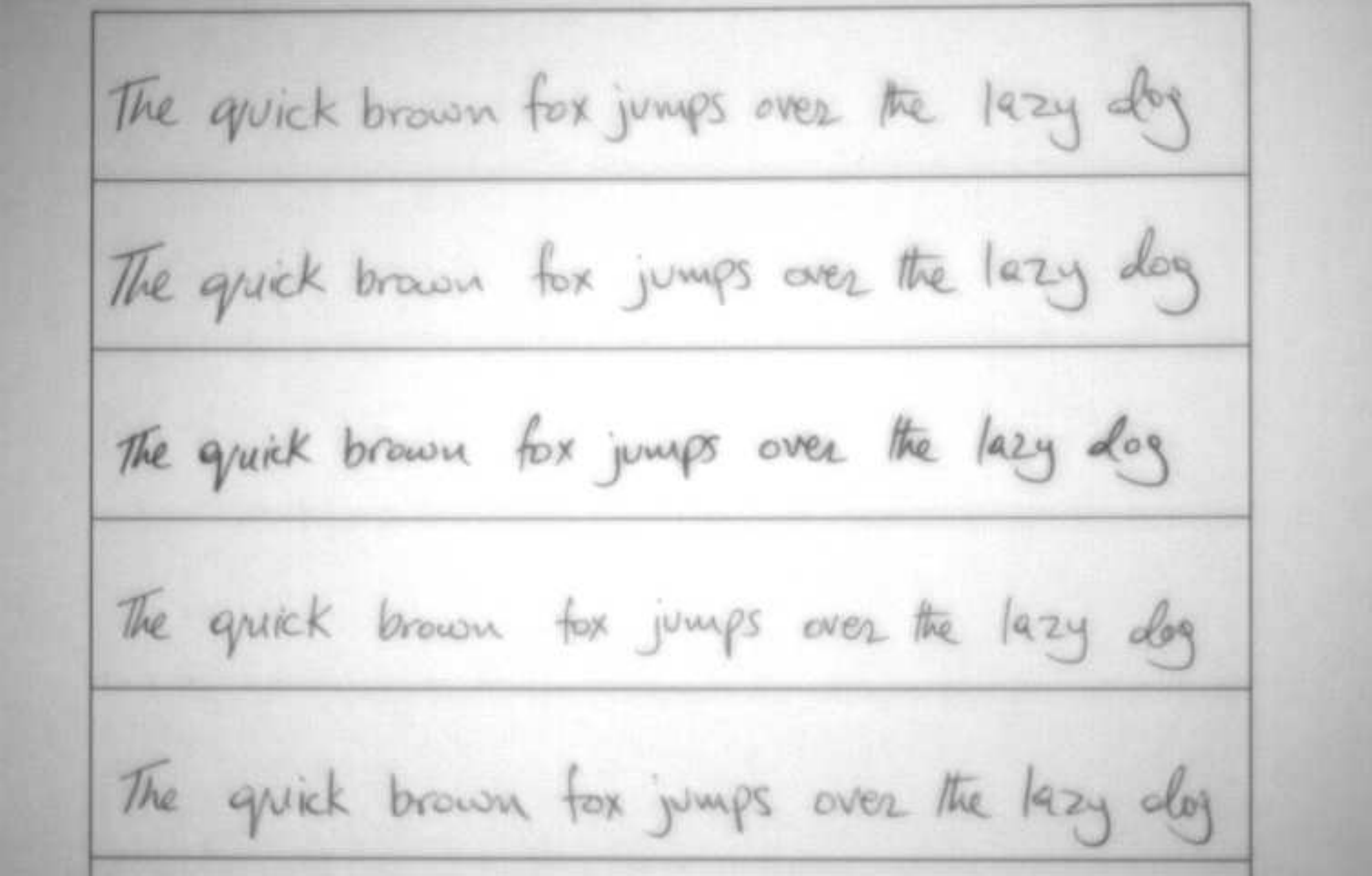}}}\\
\subfigure[Otsu's method]{\label{fig:res-otsu}\fbox{\includegraphics[width=0.44\linewidth]{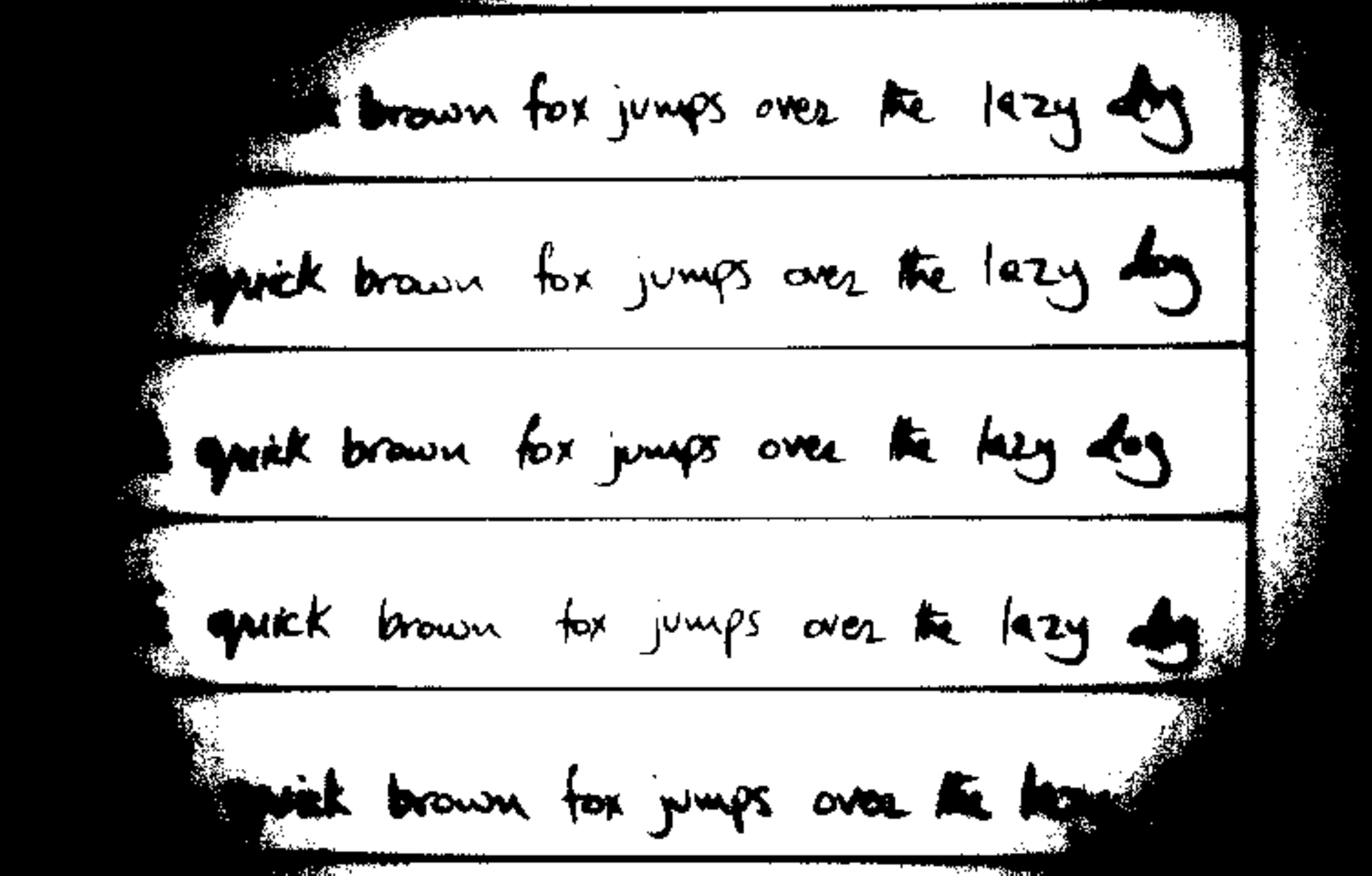}}}
\subfigure[Sauvola's method]{\label{fig:res-sauvola}\fbox{\includegraphics[width=0.44\linewidth]{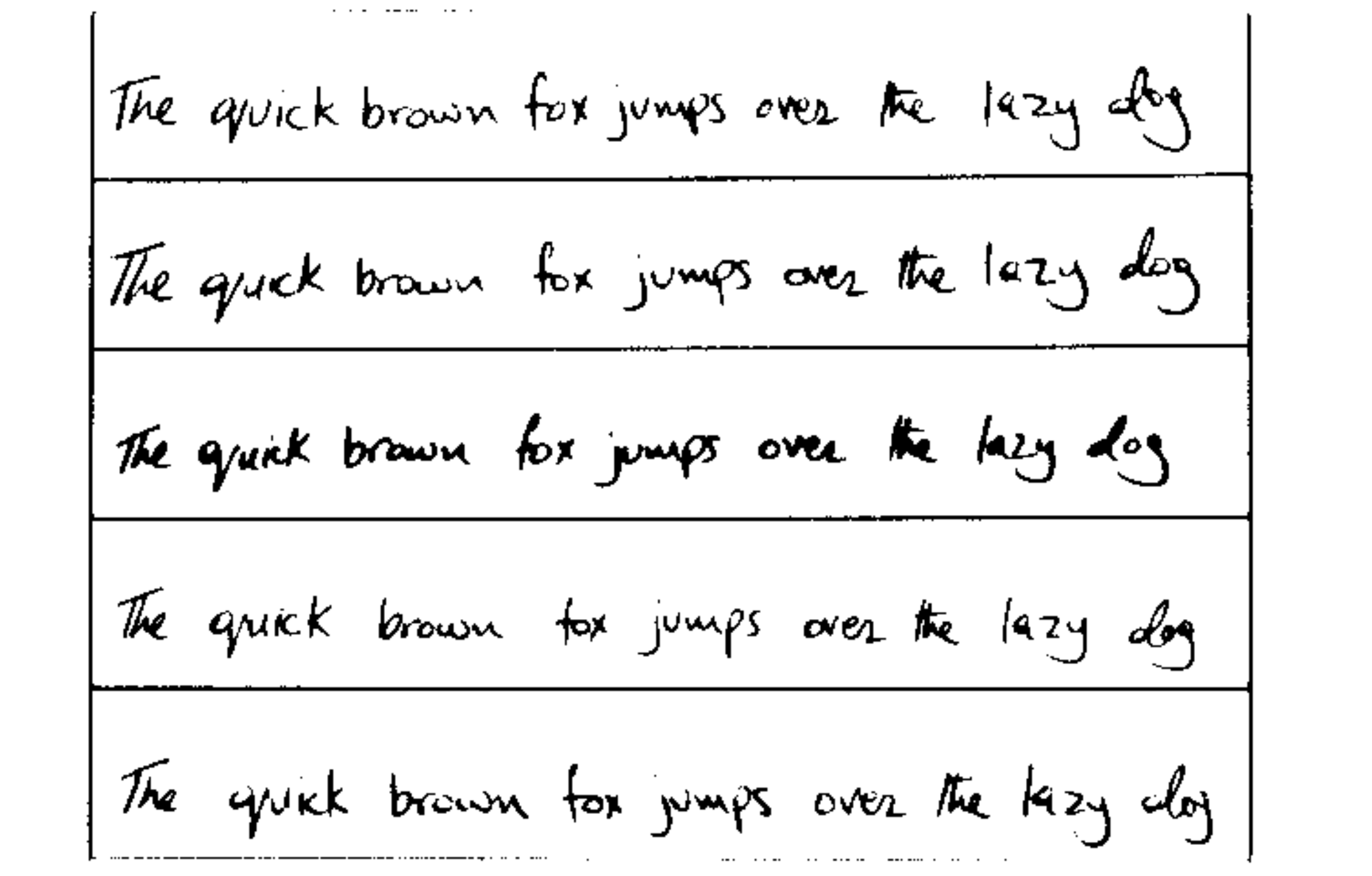}}}
\caption[Hyperspectral document image binarization]{Hyperspectral document image binarization. Notice the high energy in the center of the paper compared to the edges. Local image thresholding is far superior to global image thresholding because of invariance to non-uniform illumination.}
\label{fig:threshold}
\end{figure}

There can be different ways in which the handwritten text can be segmented from the blank paper area. One way is to individually classify the spectral response vectors into ink and non-ink pixels. However, since the spectral responses are modulated by variable illumination, the results may not be optimum. Moreover, it is not possible to train a classifier to cater for all possible kinds of inks from different colored papers.

A better strategy is to binarize spectral bands by well known image thresholding techniques. A global image thresholding method, such as the Otsu~\cite{otsu1975threshold} is ineffective because of the non-uniform illumination over the document (Figure~\ref{fig:raw-image}). A local image thresholding method such as Sauvola~\cite{sauvola2000adaptive} with an efficient integral image based implementation~\cite{shafait2008efficient} more effectively deals with such illumination variations. The Sauvola's method generates a binary mask according to
\begin{equation}
\label{eq:sauvola}
M_{ij} =
\begin{dcases}
1, & \text{if $I_{ij} > \mu_{ij}\left(1+\kappa\left(\frac{\sigma_{ij}}{r-1}\right)\right)$}\\
0, & \text{otherwise}
\end{dcases}
\end{equation}
where
\begin{align*}
\mu_{ij} & = \frac{1}{w^2} \sum_{a=-\frac{w-1}{2}}^{\frac{w+1}{2}} \sum_{b=-\frac{w-1}{2}}^{\frac{w+1}{2}} I(i+a,j+b,c) \\
\sigma_{ij} & = \sqrt{\frac{1}{w^2} \sum_{a=-\frac{w-1}{2}}^{\frac{w+1}{2}} \sum_{b=-\frac{w-1}{2}}^{\frac{w+1}{2}} I(i+a,j+b,c)- \mu_{ij}}
\end{align*}
where ($\mu_{ij},\sigma_{ij}$) are the mean and standard deviation of a $w\times w$ patch centered at pixel ($i,j$). The factors $\kappa$ and $r$ jointly scale the standard deviation term between ($0,1$). The value of $r$ is fixed to the maximum possible standard deviation of the patch which is 128 for an 8-bit image. We empirically found that a patch size $w \times w = 32 \times 32$ and $\kappa=0.15$ give good segmentation results.

Independently applying thresholding to the $p$ bands will result in $p$ different masks which requires reduction to a single mask. Since the intensity of foreground pixels in different spectral bands is variable, the binarization results of each band are not identical. A simple solution is to merge multiple band binarization masks (e.g.~\cite{gatos2008improved}) to get a single binary mask.

An alternative is to apply threshold to a single representative band and propagate the same mask to all the bands. This approach is only applicable if all bands are spatially aligned (which is true in case of stationary document images). We resort to the latter strategy since choosing a representative band for binarization is straightforward. For instance the band with the highest contrast (640nm band, $c=25$) can be chosen (Figure~\ref{fig:band-640}). We observed that selecting any other band in the range [620nm,660nm] resulted in the same mask for various colored inks. Figure~\ref{fig:res-otsu}-\ref{fig:res-sauvola} show the results of Otsu and Sauvola binarization methods. Observe that the Sauvola's method performs better, whereas the Otsu's method returns an inaccurate mask due to non-uniform illumination.

\subsection{Sequential Forward Band Selection}
\label{sec:SFBS}

Feature selection aims to find the feature subset that maximizes a certain performance criteria, generally accuracy~\cite{molina2002feature}. In this section, we aim to find a subset of bands which maximizes ink mismatch detection accuracy (defined later in Section~\ref{sec:IMD}) by a Sequential Forward Band Selection (SFBS) technique. The procedure is described in Algorithm~\ref{alg:SFBS}. Let $\mathbf{X}={[\mathbf{x}_1,\mathbf{x}_2,\ldots,\mathbf{x}_n]}^{\intercal}$ be the matrix of $n$ normalized spectral response vectors, $\mathbf{x} \in \mathbb{R}^{p}$, each corresponding to one ink pixel. In the first step, the mismatch detection accuracy of each of the $p$ bands is computed individually. The band with the highest accuracy is added to the selected band set, $\mathcal{S}$ and removed from the remaining band set $\mathcal{R}$. From the remaining $p-1$ bands in $\mathcal{R}$, one band at a time is combined with $\mathcal{S}$ and the accuracy is observed on the new set. If the accuracy increases, then the added band which maximized accuracy combined with the previously selected bands is retained in $\mathcal{S}$ and removed from $\mathcal{R}$. This process continues until adding another band to $\mathcal{S}$ reduces accuracy. In case all bands are added to $\mathcal{S}$ (which is a rare occurrence), the algorithm will automatically converge.

\begin{algorithm}[h]  
\caption{Sequential Forward Band Selection}          
\label{alg:SFBS}                           
\begin{algorithmic}                    
\Require $\mathbf{X} \in \mathbb{R}^{n \times p}, \mathbf{y} \in \mathbb{Z}^n$
\State $\textbf{Initialize:}~ q \gets 1,$ converge $\gets$ \texttt{false}, $\mathcal{S} \gets \emptyset, \mathcal{R} \gets \{1,2,...,p\}$
\For{$j=1$ to $p$} \Comment{Step 1: find best individual band}
\State $a_j \gets \texttt{IMDaccuracy}(\mathbf{x}_j,\mathbf{y})$
\EndFor
\State $a^{\star} \gets \underset{j}{\max} (a_j),~j^{\star} \gets \underset{j}{\arg \max} (a_j)$ \Comment{best accuracy and band index}
\State $\mathcal{S} \gets \mathcal{S} \cup j^{\star},\; \mathcal{R} \gets \mathcal{R} \setminus j^{\star}$ \Comment{update band sets}
\While {$q\le p \vee \neg \textrm{converge}$} \Comment {Step 2: sequentially add remaining bands}
\For {$k=1$ to $p-q$} \Comment{loop over remaining bands}
\State $\mathcal{T} \gets \mathcal{S} \cup \mathcal{R}(k)$ \Comment{add $k^{\textrm{th}}$ band to the temporary set}
\State $a_k \gets \texttt{IMDaccuracy}(\mathbf{X}_{\centerdot\mathcal{T}},\mathbf{y})$ \Comment{Section~\ref{sec:IMD}}
\EndFor
\State $b^{\star} \gets \underset{k}{\max} (a_k),\;k^{\star} \gets \underset{k}{\arg \max} (a_k)$
\If {$b^{\star}>a^{\star}$} \Comment{Step 3: check if accuracy improved}
\State $\mathcal{S} \gets \mathcal{S} \cup k^{\star},\; \mathcal{R} \gets \mathcal{R} \setminus k^{\star}$ \Comment{update band sets}
\State $q \gets q+1, a^{\star} \gets b^{\star}$
\Else \Comment{accuracy did not improve}
\State{converge $\gets$ \texttt{true}}
\EndIf
\EndWhile
\Ensure $\mathcal{S} \in \mathbb{Z}^q$
\end{algorithmic}
\end{algorithm}

\subsection{Joint Sparse Band Selection}
\label{sec:JSBS}


In Chapter~\ref{Chapter5}, we discussed a way to obtain sparse PCA basis by imposing the $\ell_0$ constraint upon the regression coefficients (basis vectors).
\begin{equation}
\label{eq:SPCACriterion}
\underset{\hat{\mathbf{A}},\hat{\mathbf{B}}}{\arg \min} {\|\mathbf{X}-\mathbf{XBA}^{\!\intercal}\|}_F^2+ \lambda\sum_{j=1}^{k}{\|\bm{\beta}_j\|}_0 \qquad \text{subject to}\, \mathbf{A}^{\!\intercal}\mathbf{A}=\mathbf{I}_k~,
\end{equation}
where $\mathbf{B} \in \mathbb{R}^{p \times k}$ corresponds to the required sparse basis $\{\bm{\beta}_1,\bm{\beta}_2,\ldots,\bm{\beta}_k\}$. When $\lambda$ is large, most coefficients of $\bm{\beta}_j$ will shrink to zero, resulting in sparsity as shown in Figure~\ref{fig:pat1}.

\begin{figure}[!h]
\footnotesize
\centering
\subfigure[SPCA Basis]{\label{fig:pat1}
\begin{minipage}[b]{0.3\linewidth}
\centering
\includegraphics[trim = 232pt 32pt 240pt 2pt, clip, width=0.85\linewidth]{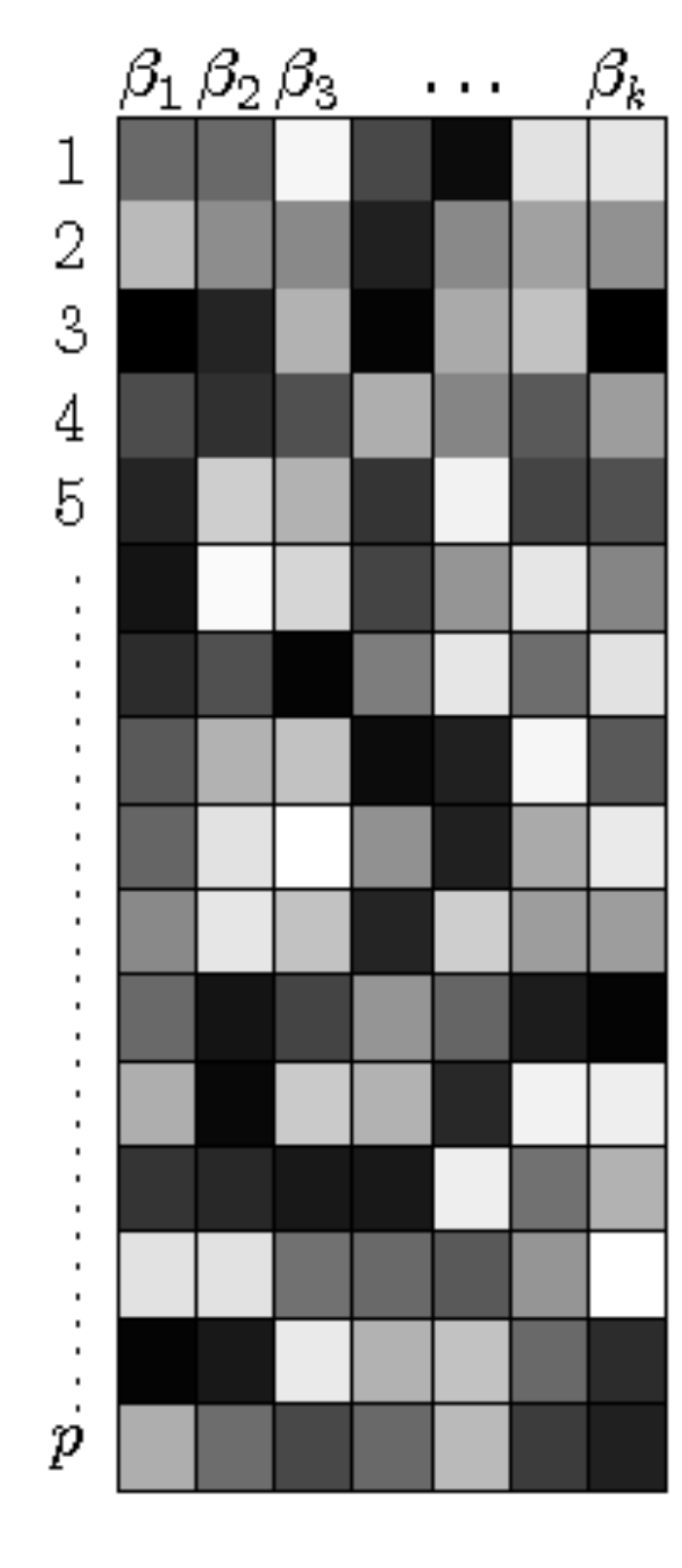}
\end{minipage}}
\subfigure[JSPCA Basis]{\label{fig:pat4}
\begin{minipage}[b]{0.3\linewidth}
\centering
\includegraphics[trim = 232pt 32pt 240pt 2pt, clip, width=0.85\linewidth]{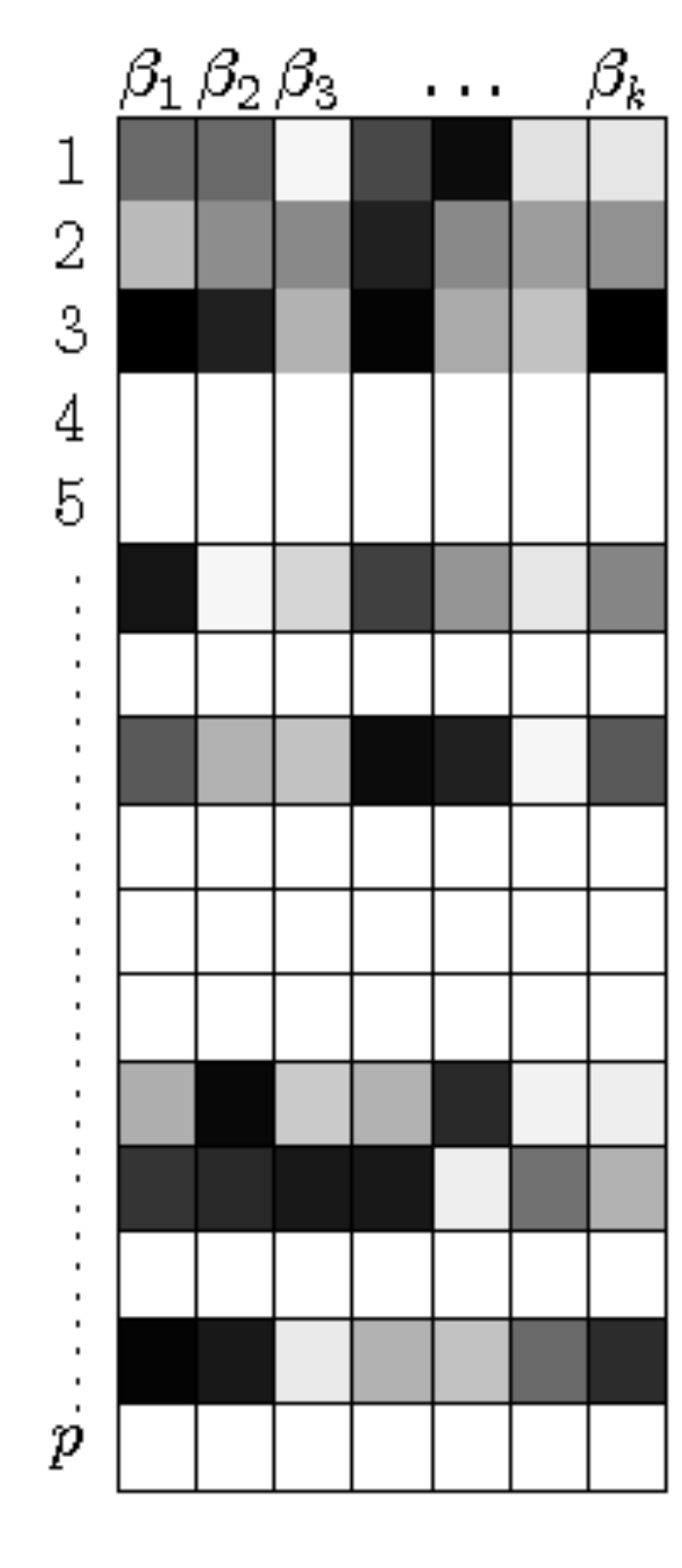}
\end{minipage}}
\caption[Basis vectors sparsity patterns]{This example illustrates basis computed on a pseudo-random data consisting of $p$ bands. Each column is a basis vector $\beta_j$. Dark rectangles are non-zero coefficients. The simple sparsity is unconstrained with respect to the basis and therefore may end up using all bands. Joint sparsity penalizes the rows of the basis and explicitly uses a few bands.}
\label{fig:patterns}
\end{figure}


Since each row of $\mathbf{B}$ corresponds to a particular band, the sparsity within each row of $\mathbf{B}$ may be used to find the relative importance of the bands. If \emph{all} coefficients in a row turn out to be zero, the corresponding band will become irrelevant to the computation of the basis vectors. We therefore suggest a special type of sparsity which enforces most rows of $\mathbf{B}$ to be zero, while the other rows may contain all non-zero coefficients. This type of sparsity among the basis vectors may be called \emph{joint sparsity} (Figure~\ref{fig:pat4}). Note that the existing variants of sparse PCA~\cite{zou2006sparse,jolliffe2003modified,d2007direct,shen2008sparse,journee2010generalized} do not account for this type of joint sparsity, and therefore cannot be used for band selection. We propose a joint sparse PCA formulation which enforces the $\ell_{2,0}$ matrix norm on $\mathbf{B}$ instead of the previously used $\ell_{0}$ vector norm.

\begin{equation}
\label{eq:JSPCACriterion-ell20} \underset{\hat{\mathbf{A}},\hat{\mathbf{B}}}{\arg \min} {\|\mathbf{X}-\mathbf{XBA}^{\!\intercal}\|}_F^2+\lambda\|{\|\bm{\beta}^i\|_2}\|_0 \qquad \text{subject to}\, \mathbf{A}^{\!\intercal}\mathbf{A}=\mathbf{I}_k
\end{equation}
where $\bm{\beta}^i$ is the $i^\textrm{th}$ row of $\mathbf{B}$ ($1<i<p$). The minimization of $\ell_0$-norm over the $\ell_2$-norm of the rows of $\mathbf{B}$ will force most of the rows to be null (for a sufficiently high value of $\lambda$). Although use of $\ell_{2,0}$ penalty gives a joint sparse basis, it makes the minimization non-convex and its solution is NP-hard. In general, the $\ell_0$ norm minimization is relaxed to $\ell_{1}$ norm~\cite{tibshirani1996regression} to reach an approximate solution. Therefore, we solve the following approximation to the joint sparse PCA formulation
\begin{equation}
\label{eq:JSPCACriterion-ell21} \underset{\hat{\mathbf{A}},\hat{\mathbf{B}}}{\arg \min} {\|\mathbf{X}-\mathbf{XBA}^{\!\intercal}\|}_F^2+\lambda\sum_{i=1}^{p} \|\bm{\beta}^i\|_2 \qquad \text{subject to}\, \mathbf{A}^{\!\intercal}\mathbf{A}=\mathbf{I}_k~.
\end{equation}
where the regularization term is often called $\ell_{2,1}$ norm of a matrix which has deemed useful for multi-task learning~\cite{liu2009multi} and feature selection~\cite{zhao2010efficient}.

Although, the above formulation ensures a joint sparse PCA basis, simultaneous optimization of $\mathbf{A}$ and $\mathbf{B}$ makes the problem non-convex. A locally convex solution of~(\ref{eq:JSPCACriterion-ell21}) can be obtained by iteratively minimizing $\mathbf{A}$ and $\mathbf{B}$. The regularization loss term ${\|\mathbf{X}-\mathbf{XBA}^{\!\intercal}\|}_F^2$ in~(\ref{eq:JSPCACriterion-ell21}) is equivalent to minimizing ${\|\mathbf{XA}-\mathbf{XB}\|}_F^2$ given $\mathbf{A}^{\!\intercal}\mathbf{A}=\mathbf{I}_k$, the proof of which has been presented in Chapter~\ref{Chapter5}. A closed form solution for optimizing with respect to $\mathbf{A}$ can be obtained by computing a reduced rank procrustes rotation~\cite{zou2006sparse} which is also proven in Chapter~\ref{Chapter5}.

The alternating minimization process is repeated until convergence or until a specified number of iterations is reached. The obtained joint sparse basis $\mathbf{B}$ has exactly $q<p$ non-zero rows. A reduced band index set $\mathcal{R}$ is computed which contains the indices of the non-zero rows of $\mathbf{B}$. Then all possible subsets $\mathcal{T}$ of the reduced band set $\mathcal{R}$ are tested for ink mismatch detection. This is done by obtaining a reduced data matrix ($\mathbf{X}_{\centerdot\mathcal{T}}$) by taking the columns of $\mathbf{X}$ indexed by each set $\mathcal{T}$ and computing mismatch detection accuracy. The subset $\mathcal{T}$ which maximizes accuracy is chosen as the selected band set $\mathcal{S}$. Algorithm~\ref{alg:JSBS} summarizes the procedure for Joint Sparse Band Selection (JSBS).


\begin{algorithm}[h]  
\caption{Joint Sparse Band Selection}          
\label{alg:JSBS}                           
\begin{algorithmic}                    
\Require $\mathbf{X} \in \mathbb{R}^{n \times p}, \mathbf{y} \in \mathbb{R}^n, \lambda, \epsilon, i_{\textrm{max}}$
\State $\textbf{Initialize:}~ i \gets 1, \textrm{converge}\gets\texttt{false}, \mathcal{S} \gets \emptyset, \mathcal{R} \gets \{1,2,...,p\}$
\State $\mathbf{USV}^{\intercal} \gets \mathbf{X}$ \Comment {SVD of X}
\State $\mathbf{A} \gets \mathbf{V}_{\centerdot \{1:k\}}, \mathbf{B} \gets \mathbf{0} $ \Comment{Initialize $\mathbf{A}$ with $1^\textrm{st}$ $k$ basis vectors and $\mathbf{B}$ with zeros}
\While {$i\le i_{\textrm{max}} \vee \neg \textrm{converge}$}
\State $\hat{\mathbf{B}} \gets \underset{\mathbf{B}}{\min} \|\mathbf{XA}-\mathbf{XB}\|_F^2 + \lambda {\|\mathbf{B}\|}_{2,1}$ \Comment{Step 1: Find $\mathbf{B}$ given $\mathbf{A}$}
\State $\dot{\mathbf{U}} \dot{\mathbf{S}} \dot{\mathbf{V}}^{\intercal} \gets {\mathbf{X}}^{\intercal} \mathbf{X}\hat{\mathbf{B}}$ \Comment{Step 2: Find $\mathbf{A}$ given $\mathbf{B}$}
\State $\hat{\mathbf{A}} \gets \dot{\mathbf{U}}\dot{\mathbf{V}}^{\intercal}$
\If {$\|\mathbf{B}-\hat{\mathbf{B}}\|_F < \epsilon $} \Comment {Step 3: Check convergence}
\State $\textrm{converge} \gets \texttt{true}$
\Else
\State $\mathbf{B} \gets \hat{\mathbf{B}}, \mathbf{A} \gets \hat{\mathbf{A}}$ \Comment {Update $\mathbf{A},\mathbf{B}$}
\State $i \gets i+1$
\EndIf
\EndWhile
\State $\mathcal{R} \gets \{j \in \mathcal{R}: \|\bm{\beta}^j\|_2=0\}$ \Comment {Step 4: Obtain reduced subset of bands}
\State $\mathcal{S} \gets \underset{\mathcal{T}}{\arg \max} \texttt{ IMDaccuracy}(\mathbf{X}_{\centerdot\mathcal{T}},\mathbf{y}), \forall \, \mathcal{T}\subseteq \mathcal{R}$ \Comment{Section~\ref{sec:IMD}}
\Ensure $\mathcal{S} \in \mathbb{Z}^q$
\end{algorithmic}
\end{algorithm}

\subsection{Ink Mismatch Detection Accuracy Computation}
\label{sec:IMD}

In Section~\ref{sec:SFBS} and Section~\ref{sec:JSBS}, we proposed two algorithms for band selection from ink spectral responses. Using the selected bands, a reduced data matrix $\mathbf{Z}$ is used for ink mismatch detection. We cluster the pixels belonging to $g$ inks using k-means algorithm~\cite{jain1999data}. K-means minimizes the squared Euclidean error between the cluster centroid and its members by the following criteria
\begin{equation}
\underset{\hat{\mathcal{C}}}{\arg \min} \sum_{i=1}^{g} \sum_{\mathbf{z}^j \in \hat{\mathcal{C}}_i} {\|\mathbf{z}^j-\bar{\mathbf{z}}_i\|}^2~,
\end{equation}
where $\|.\|^2$ is the squared error between the cluster member $\mathbf{z}^j \in \mathbb{R}^{q}$ and its centroid $\bar{\mathbf{z}}_i$. $\mathbf{z}^j$ is the $j^{th}$ row of matrix $\mathbf{Z}$ which is in the $i^\textrm{th}$ cluster $\hat{\mathcal{C}}_i$. The total number of clusters is $g$ which relates to the number of mixed inks in $\mathbf{Z}$.

Let $\mathbf{Y} \in \mathbb{R}^{n \times g}$ be the ground truth class indicator matrix such that
\begin{equation}
Y_{ij} =
\begin{cases}
1& \text{if $\mathbf{z}^j \in \mathcal{C}_i$}\\
0& \text{otherwise}
\end{cases}
\end{equation}

where $\mathcal{C}_i$ is the ground truth cluster of ink $i$. Also, let $\hat{\mathbf{Y}} \in \mathbb{R}^{n \times g}$ be the class indicator matrix predicted by k-means clustering
\begin{equation}
\hat{Y}_{ij} =
\begin{cases}
1& \text{if $\mathbf{z}^j \in \hat{\mathcal{C}}_i$}\\
0& \text{otherwise}
\end{cases}
\end{equation}

The mismatch detection accuracy of $i^\textrm{th}$ ink class is defined as the number of correctly labeled pixels of $i^\textrm{th}$ ink divided by the number of pixels labeled with $i^\textrm{th}$ ink in either the ground truth labels $\mathbf{y}_i$ or the predicted labels $\hat{\mathbf{y}}_i$~\cite{pascal-voc-2012}. The mismatch detection accuracy is computed as
\begin{equation}
\textrm{accuracy} = \underset{g!}{\max} \frac{1}{g} \sum_{i=1}^{g} \frac{T_i}{T_i + F_i + N_i}~,
\end{equation}
where
\begin{align*}
T_i & = \mathbf{y}_i \wedge \hat{\mathbf{y}}_i \rightarrow \text{no. of correctly labeled pixels of the $i^\textrm{th}$ ink}\\
F_i & = \mathbf{y}_i'\wedge \hat{\mathbf{y}}_i \rightarrow \text{no. of pixels incorrectly labeled as $i^\textrm{th}$ ink}\\
N_i & = \mathbf{y}_i \wedge \hat{\mathbf{y}}_i'\rightarrow \text{no. of incorrectly labeled pixels of $i^\textrm{th}$ ink}
\end{align*}
It is important to note that according to this evaluation metric, the accuracy of a random guess (e.g.~in a two class problem) will be 1/3. This is different to common classification accuracy metrics where the accuracy of a random guess is 1/2. This is because the metric additionally penalizes false negatives which is crucially important in mismatch detection problem.


\section{Writing Ink Hyperspectral Image Database}
\label{sec:database}

Traditionally, document analysis revolves around monochromatic or trichromatic (RGB) imaging, often captured by scanners.
Cameras have now emerged as an alternative to scanners for capturing document images. However, the focus has remained on mono-/tri-chromatic imaging. We outline and discuss the key components of our hyperspectral document imaging system, which offers new challenges and perspectives. We discuss the issues of filter transmittance and spatial/spectral non-uniformity of the illumination and propose possible solutions via pre and post-processing. This section provides an overview of our hyperspectral document imaging system and presents our approach for tackling major challenges specific to hyperspectral document imaging.

\begin{figure}[h]
\footnotesize
\centering
\includegraphics[trim = 4pt 0pt 5pt 0pt, clip,width=0.5\linewidth]{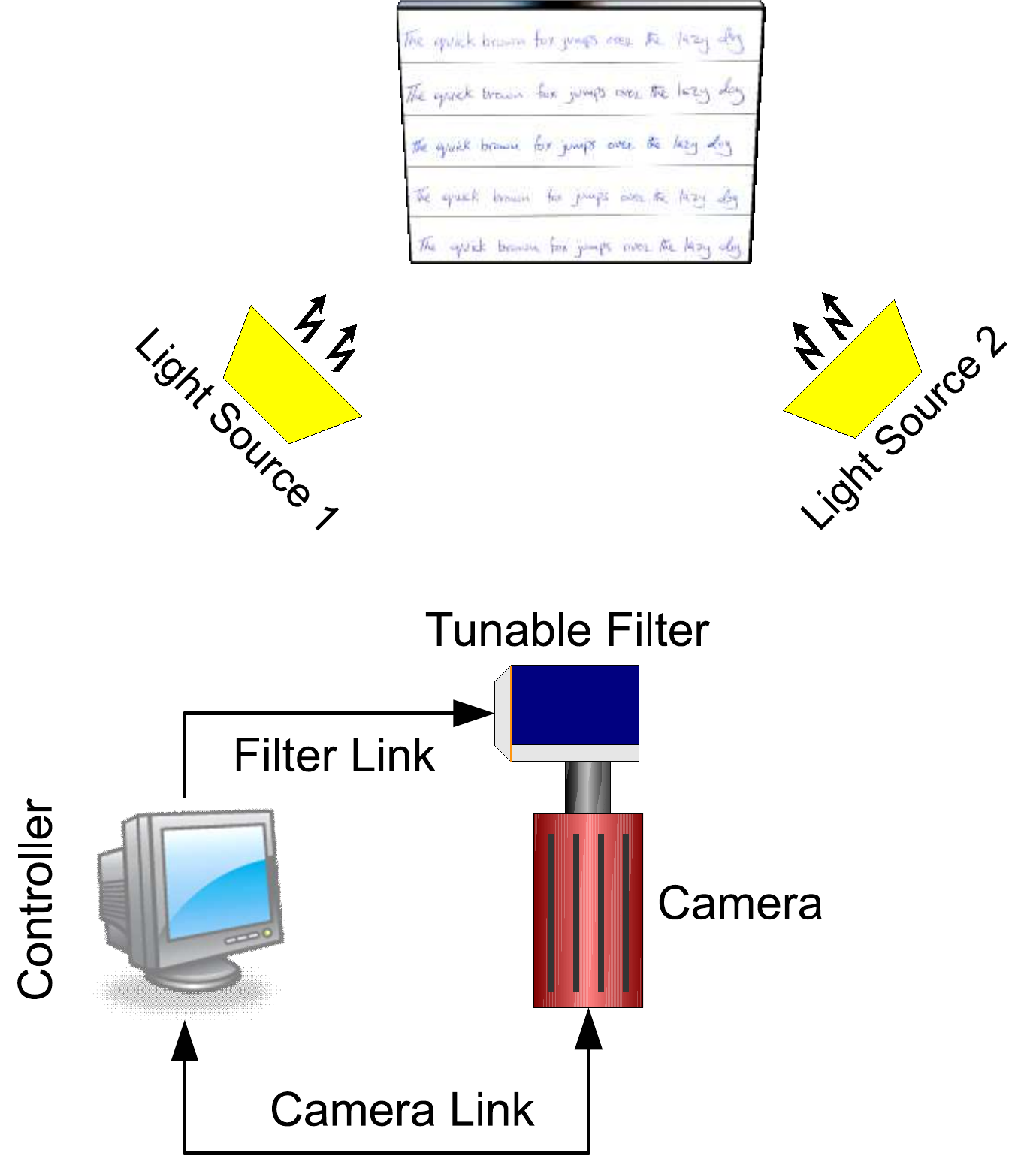}
\caption[Hyperspectral document image acquisition setup]{An illustration of the proposed hyperspectral document image acquisition setup.}
\label{fig:hardware-setup}
\end{figure}

\subsection{Acquisition Setup}
\label{sec:acquisition-setup}


We used the variable exposure imaging setup described in Chapter~\ref{Chapter3}, with halogen illumination over the document. The hyperspectral image of a document is captured in the 400-720nm range at steps of 10 nm which results in a 33 band hyperspectral image. It takes less than 5 seconds to sequentially capture 33 bands of a hyperspectral image. We also collected RGB scanned images at resolutions of 150 and 300 dpi using a flatbed scanner. These RGB images provide baseline information for comparison with HSI. For a fair comparison between RGB and HSI, it is important that their spatial resolutions are similar so that any performance differences can be attributed to the spectral dimension. This is the main reason for selecting low resolution in RGB scans. The RGB images still have the advantage of a flatbed scanning system, i.e.~the illumination is uniformly distributed over the imaging surface.

\subsection{Database Specifications}
\label{sec:data-specs}

We prepared a dataset comprising a total of 70 hyperspectral images of a hand-written note in 10 different inks by 7 subjects. All subjects were instructed to write the sentence, \emph{`The quick brown fox jumps over the lazy dog'}, once in each ink on a white paper. The pens included 5 varieties of blue ink and 5 varieties of blank ink. It was ensured that the pens came from different manufacturers while the inks still appeared visually similar. All efforts were made to avoid prolonged exposure to ambient/daylight by keeping the samples under cover in dark. This is because different inks are likely to undergo a transformation in their spectral properties induced by light. Such an occurrence would, although favor distinguish two different inks, but would bias our analysis. Moreover, all samples were collected from the subjects in one session so that their effective age is the same. 

\subsection{Spectral Normalization}
\label{sec:spectral-norm}

Common illumination sources generate lower spectral power in shorter wavelengths as compared to longer wavelengths. An observation of the LCTF transmittance in Figure~\ref{fig:filter-trans} suggests that the amount of light transmitted is a function of the wavelength such that, the shorter the wavelength $\lambda$, the higher the transmittance and vice versa. Extremely low filter transmittance in (400nm-450nm) results in insufficient energy at the imaging sensor. Finally, the sensor quantum efficiency is also variable with respect to the wavelength as shown in Figure~\ref{fig:basler-QE}. Typically, each band of a hyperspectral image is captured with a fixed exposure time. If imaging is done with fixed exposure setting for each band, the captured hyperspectral image looks as shown in Figure~\ref{fig:sample-cons} which has low energy for the bands corresponding to the blue region of the spectrum.

Since each band is sequentially captured in our imaging technique, we vary exposure time before the acquisition is triggered for each band. This enables compensation for spectral non-uniformity due to filter transmittance and illumination. The exposure times are pre-computed for each band in a calibration step such that a white patch has a flat (uniform) spectral response measured at the sensor. For spectral response calibration, the white patch of a color checker is utilized as a reference. Using variable exposure time in Figure~\ref{fig:camera-exp}, high energy is captured in all bands as shown in Figure~\ref{fig:sample-auto}.

\subsection{Spatial Normalization}
\label{sec:spatial-norm}
In hyperspectral document imaging, the use of a nearby illumination source induces a scalar field over the target image. This means that there is a spatially non-uniform variation in illumination energy. The result is that the pixels near the center of the image are brighter (have higher energy) as compared to the pixels farther away towards the edges. This effect can be seen in Figure~\ref{fig:raw-image}.
Let $\mathbf{I}_{ij}$ be the spectral response at the image pixel $(i,j)$. It can be reasonably assumed here that the non-uniformity in illumination is only a function of pixel coordinates $(i,j)$ and does not depend on the spectral dimension. This assumption will hold for each $(i,j)$ as long as $\mathbf{I}_{ij}$ is not saturated. Hence, normalizing the spectral response vector at each pixel to a unit magnitude will largely compensate for the effect of non-uniform illumination intensity.
\begin{equation}
\label{eq:normalization}
\hat{\mathbf{I}}_{ij} = \frac{\mathbf{I}_{ij}}{\|\mathbf{I}_{ij}\|}~.
\end{equation}


\section{Experiments and Analysis of Results}
\label{sec:experiments}


The UWA writing ink hyperspectral image dataset contains handwritten notes in blue and black inks. It is highly unlikely to perform ink mismatch detection between different colored inks, as they can be easily distinguished by a visual examination. Therefore, we choose to independently perform ink mismatch detection experiments for blue and black inks. Mixed ink handwritten notes are produced from single ink notes by taking individual words of each ink in equal proportion. If the number of mixed inks is unknown, $g$ can theoretically lie in the range $[1,n]$. For the sake of this analysis, we fix $g=2$, i.e., we assume that there are two possible inks in the image. This is a practical assumption in questioned document examination where the original note is written with one ink pen and is suspected to be forged by a second ink pen. In our analysis, five different inks, taken two at a time, results in ten ink combinations, for blue and black color each. In the following experiments, c$_{_{ij}}$ will denote the combination of ink $i$ with ink $j$ such that $i=1,...,5$ and $j=i+1,...,5$. The mismatch detection accuracy is averaged over all samples for each ink combination c$_{_{ij}}$.

\begin{landscape}
\begin{figure}
\footnotesize
\centering
\subfigure[]{\label{fig:illum-SPD}\includegraphics[width=0.3\linewidth]{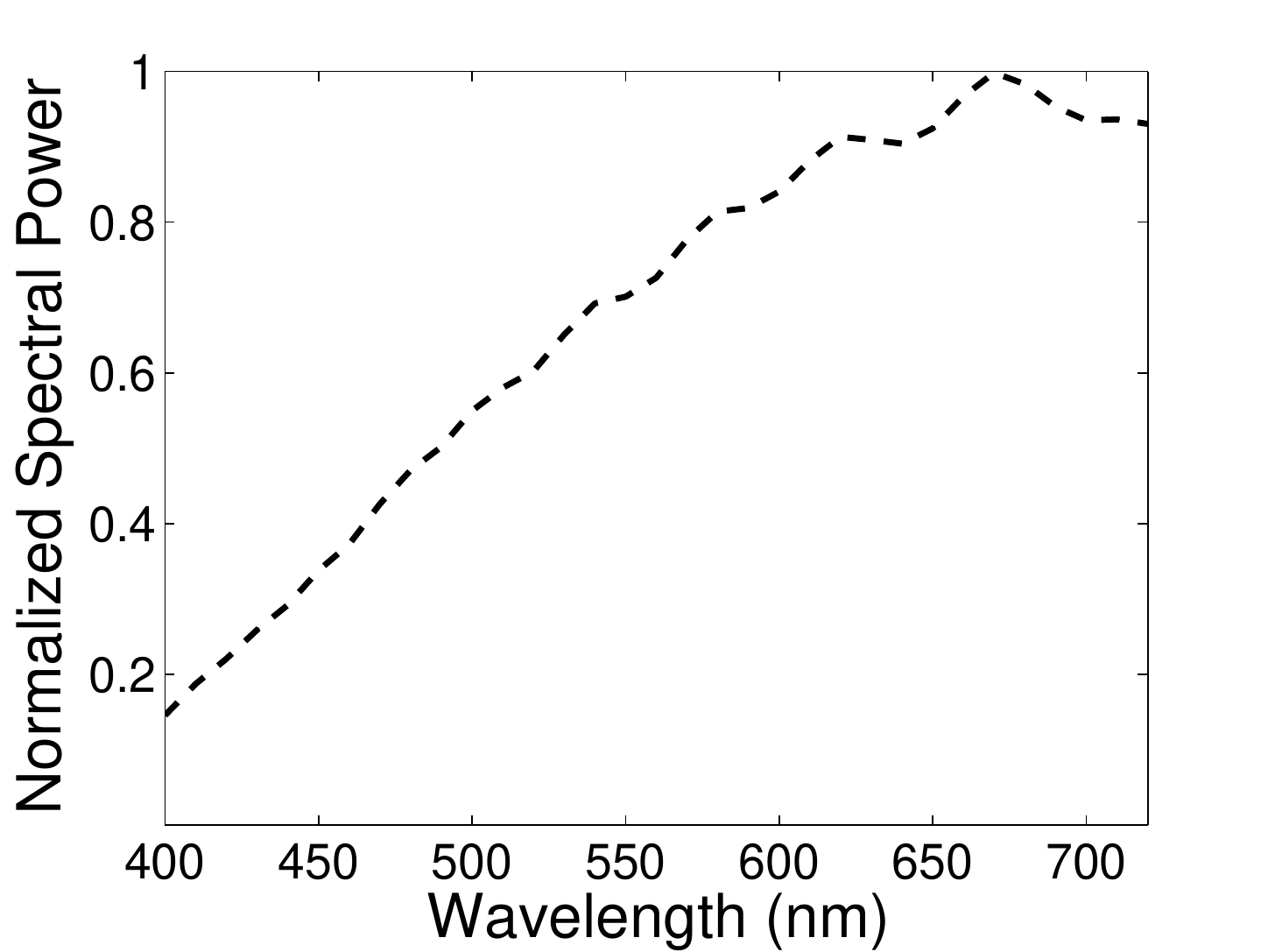}}
\subfigure[]{\label{fig:filter-trans}\includegraphics[trim = 25pt 30pt 5pt 30pt, clip, width=0.33\linewidth]{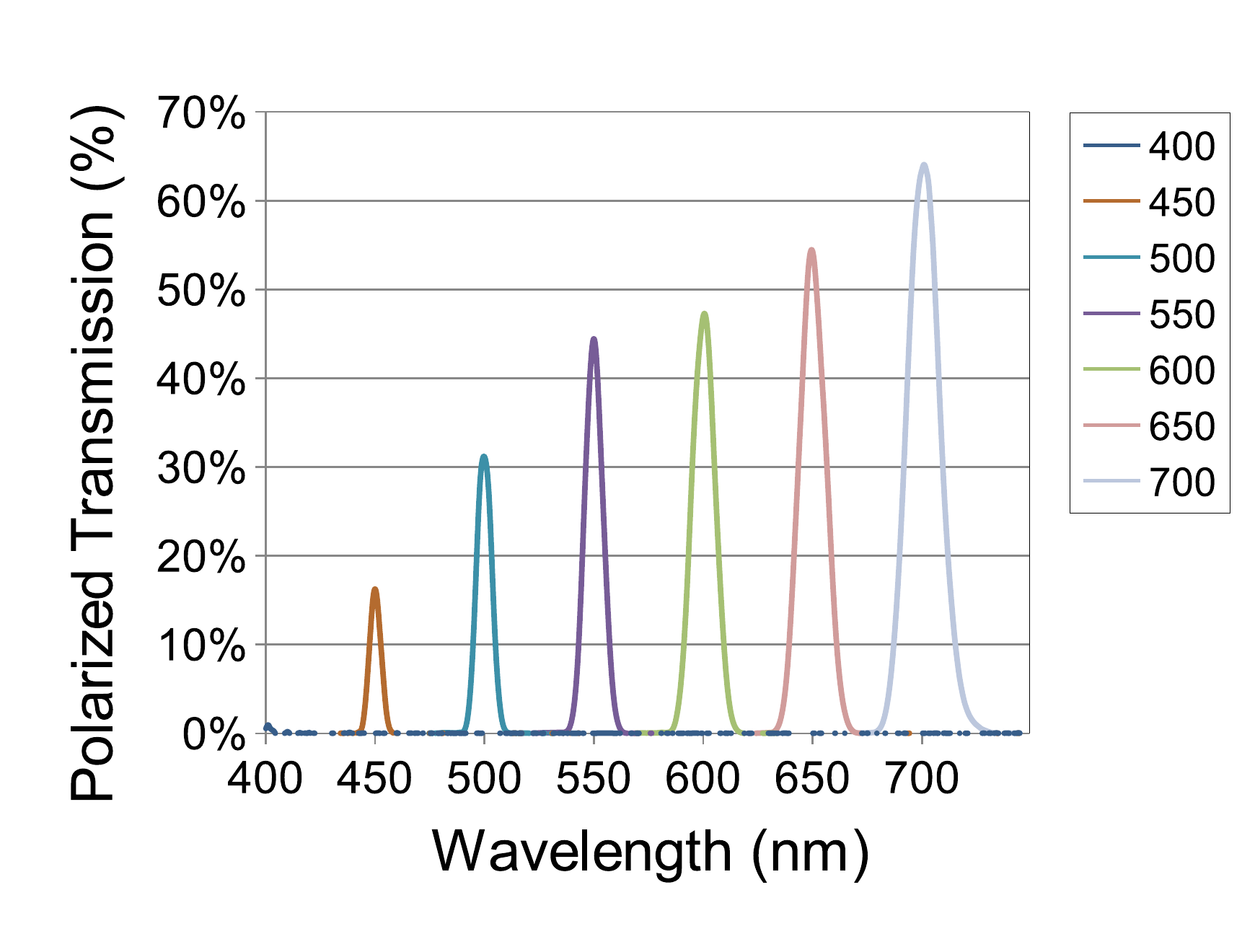}}
\subfigure[]{\label{fig:basler-QE}\includegraphics[width=0.3\linewidth]{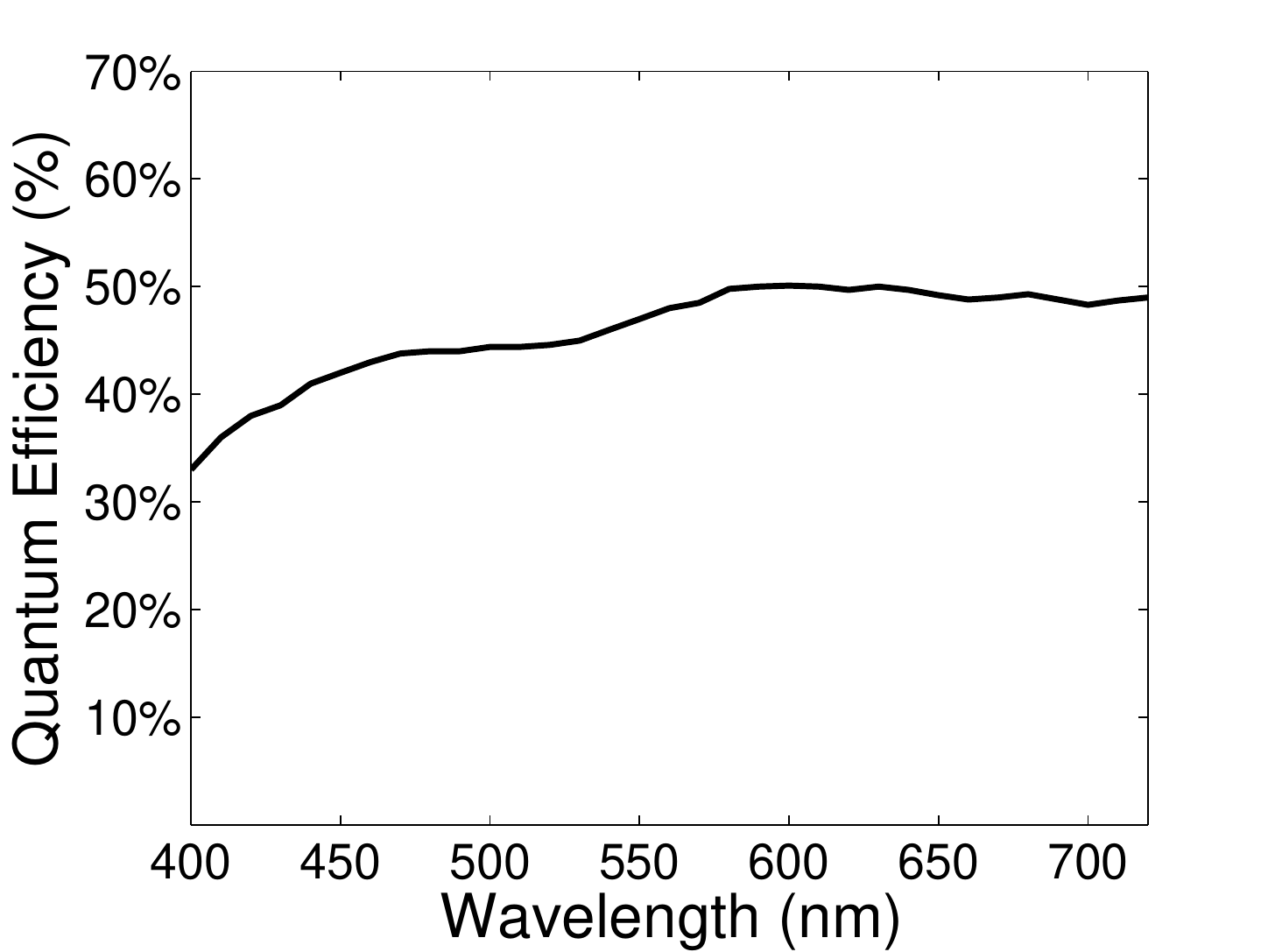}}\\
\subfigure[]{\label{fig:sample-cons}\includegraphics[width=0.34\linewidth]{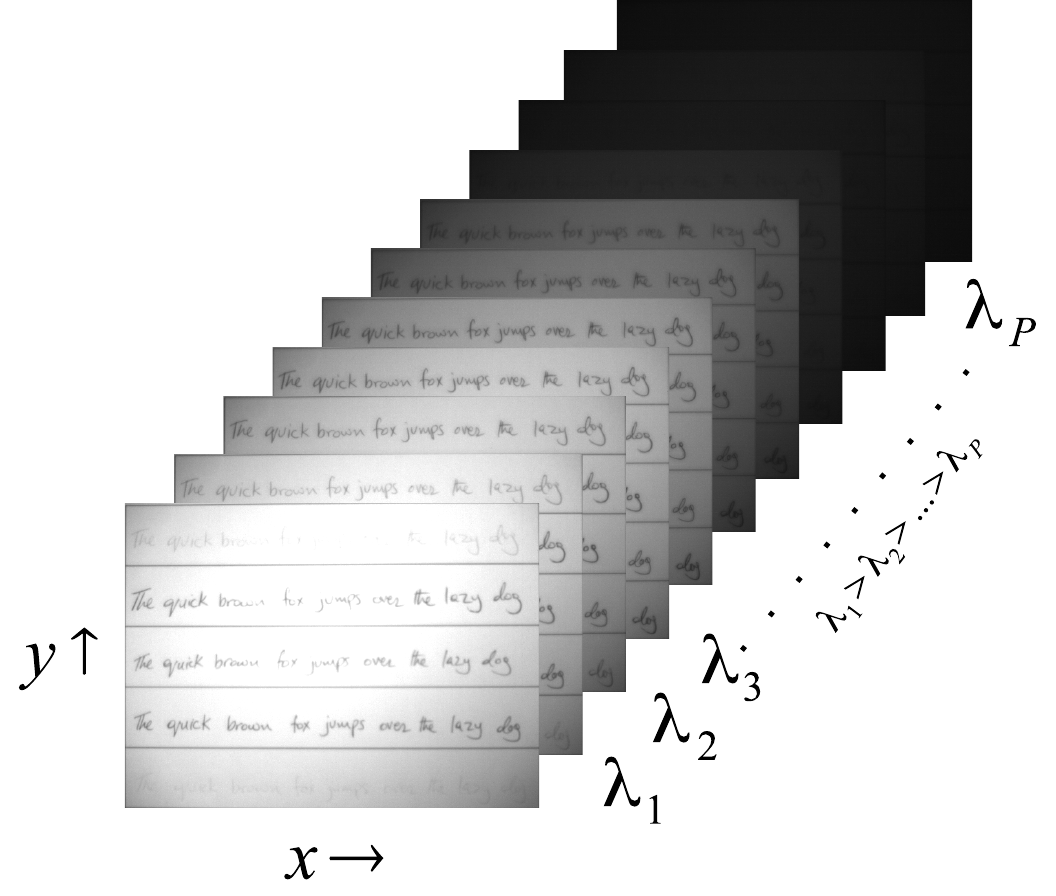}}
\subfigure[]{\label{fig:camera-exp}\includegraphics[width=0.3\linewidth]{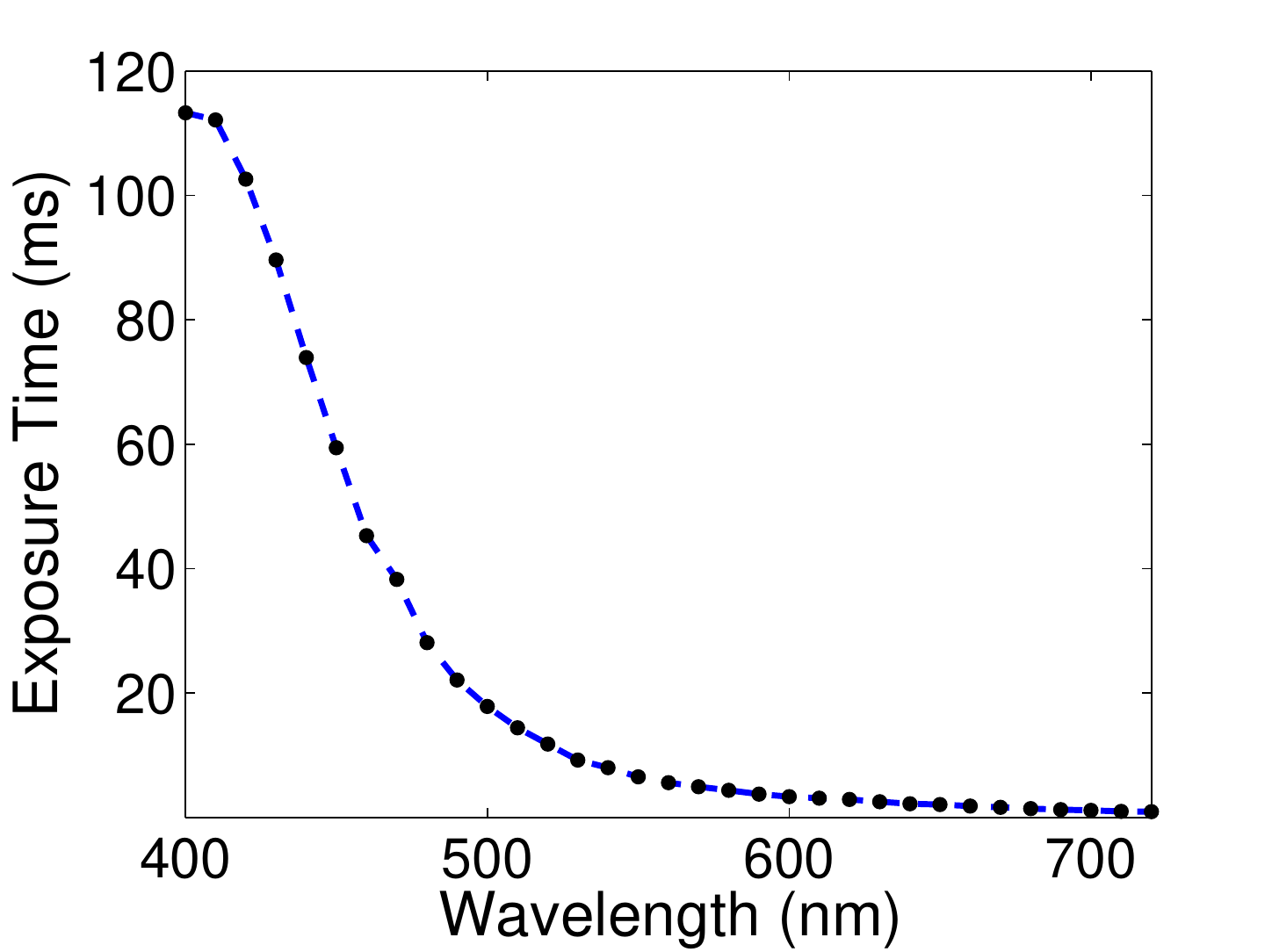}}
\subfigure[]{\label{fig:sample-auto}\includegraphics[width=0.34\linewidth]{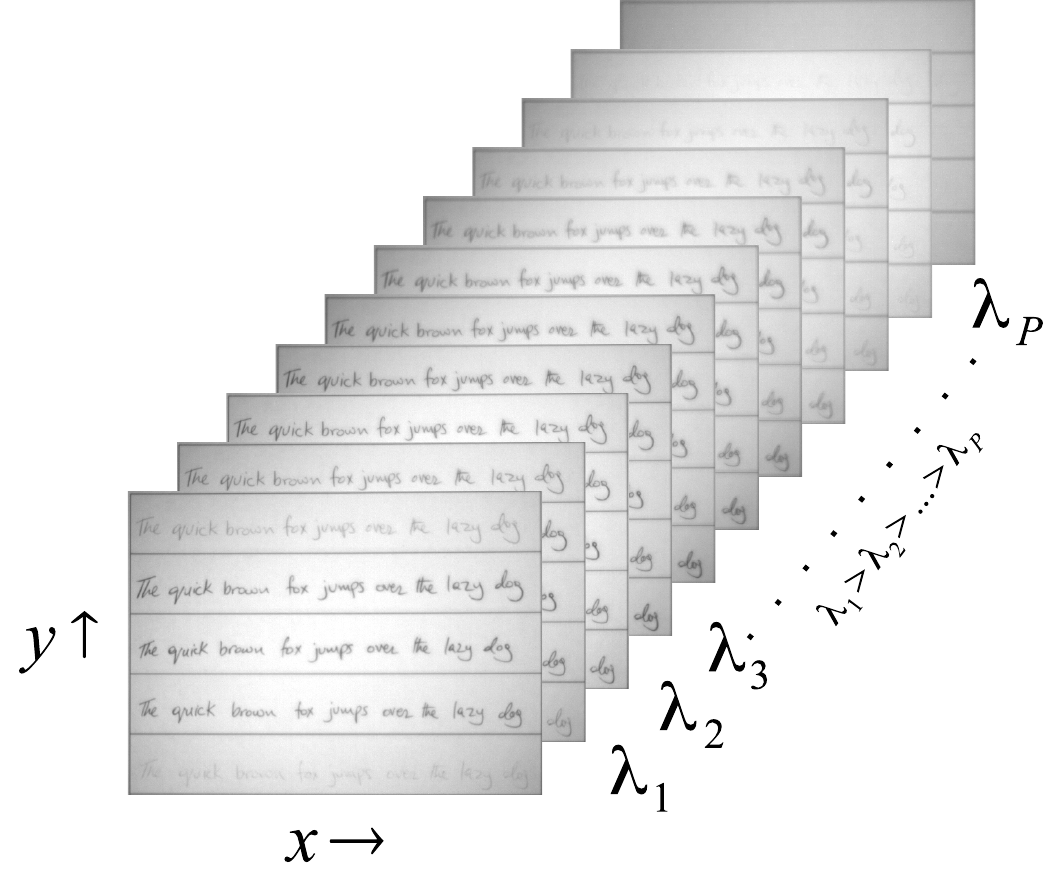}}
\caption[Fixed and variable exposure document imaging]{(a) Spectral power distribution of the illuminant. (b) Transmission functions of the LCTF (only shown for seven different center wavelengths for conciseness). (c) Quantum efficiency of the CCD sensor (d) Image captured with fixed exposure time. (e) Exposure time as a function of wavelength. (f) Image captured with variable exposure time. Observe that the illumination power, filter transmission and sensor quantum efficiency is compensated by variable exposure time.}
\label{fig:syst-response}
\end{figure}
\end{landscape}

\begin{landscape}
\begin{figure}
\centering
\begin{tabular}{cc}
Blue Ink Handwritten Note    & Black Ink Handwritten Note \\
\fbox{\includegraphics[trim = 0pt 5pt 0pt 12pt, clip, width=0.47\linewidth]{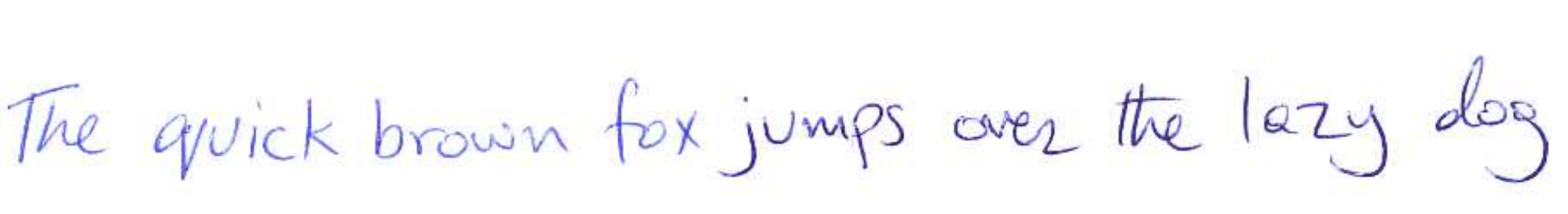}} &
\fbox{\includegraphics[trim = 0pt 0pt 0pt 15pt, clip, width=0.47\linewidth]{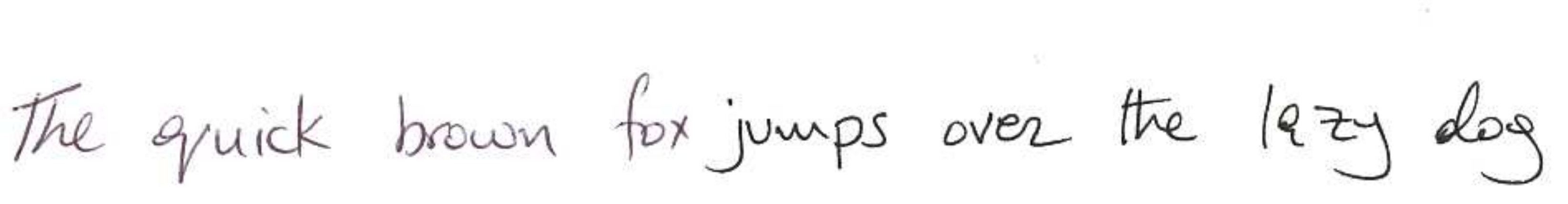}} \\ [2pt]
True Ink Map            & True Ink Map \\
\fbox{\includegraphics[trim = 0pt 5pt 0pt 12pt, clip, width=0.47\linewidth]{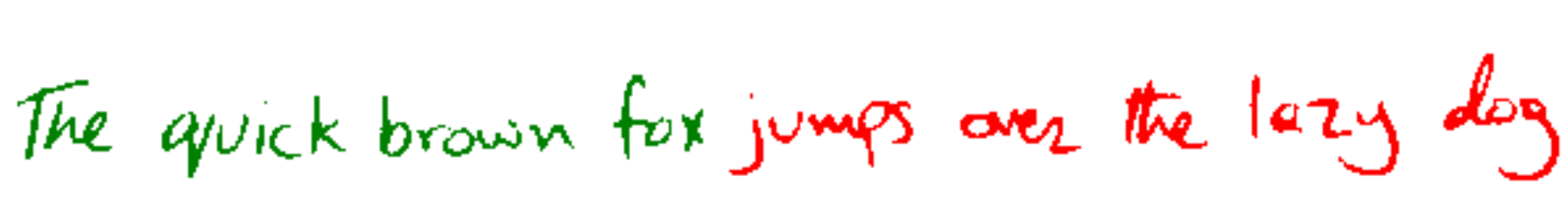}} &
\fbox{\includegraphics[trim = 0pt 0pt 0pt 15pt, clip, width=0.47\linewidth]{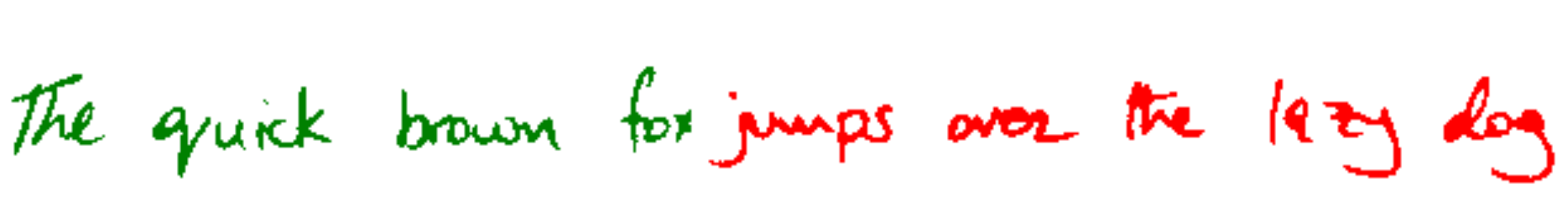}} \\ [2pt]
HS image (raw)          & HS image (raw) \\
\fbox{\includegraphics[trim = 0pt 5pt 0pt 12pt, clip, width=0.47\linewidth]{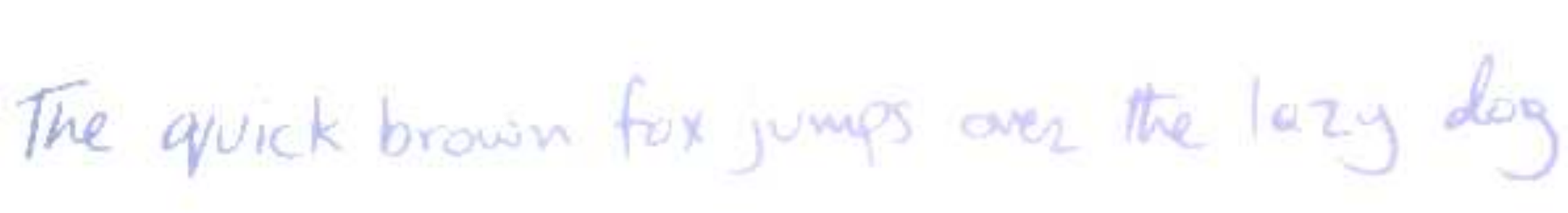}} &
\fbox{\includegraphics[trim = 0pt 0pt 0pt 15pt, clip, width=0.47\linewidth]{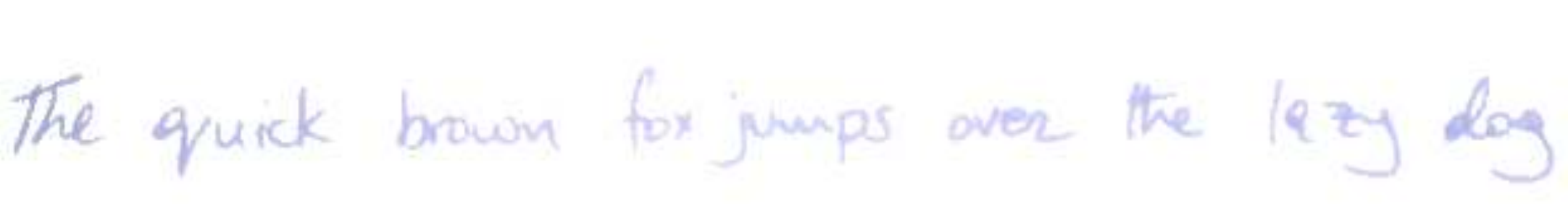}} \\ [2pt]
Result (raw)            & Result (raw) \\
\fbox{\includegraphics[trim = 0pt 5pt 0pt 12pt, clip, width=0.47\linewidth]{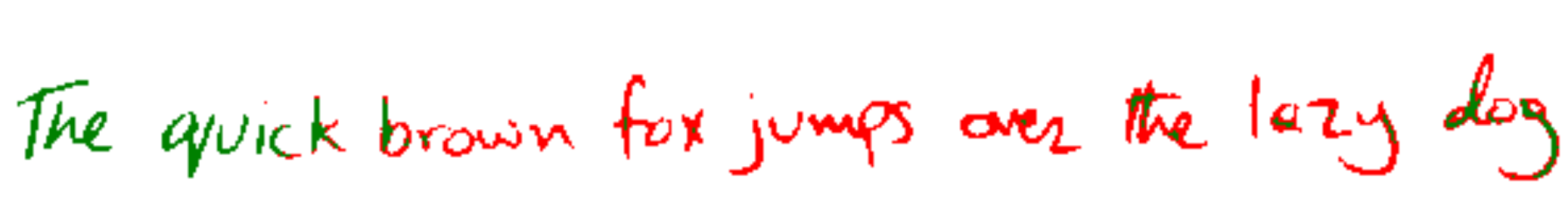}} &
\fbox{\includegraphics[trim = 0pt 0pt 0pt 15pt, clip, width=0.47\linewidth]{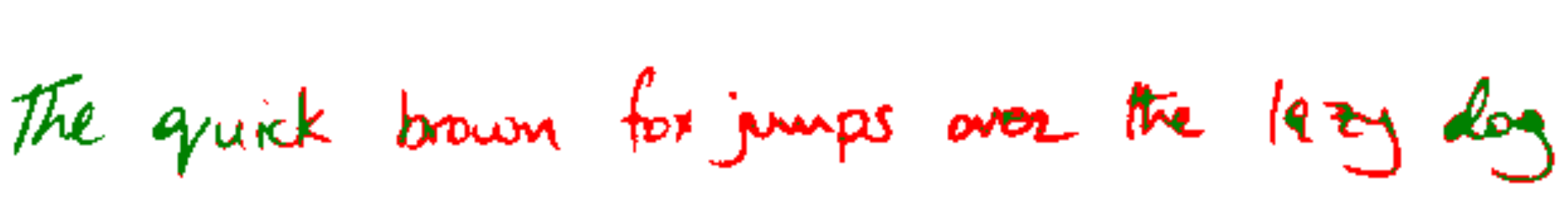}} \\ [2pt]
HS image (norm)         & HS image (norm) \\
\fbox{\includegraphics[trim = 0pt 5pt 0pt 12pt, clip, width=0.47\linewidth]{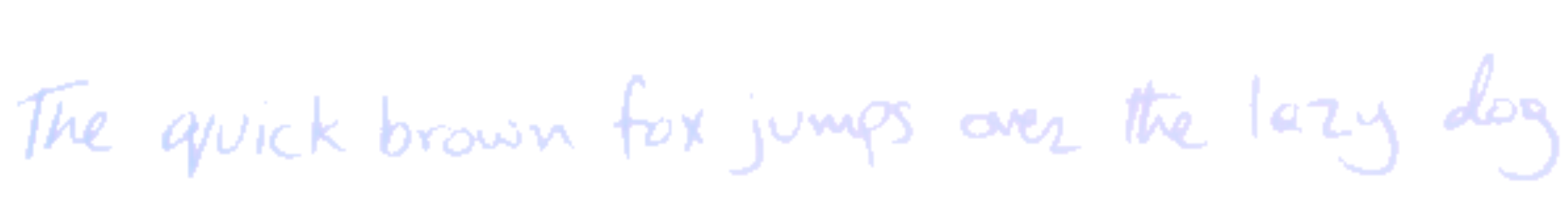}} &
\fbox{\includegraphics[trim = 0pt 0pt 0pt 15pt, clip, width=0.47\linewidth]{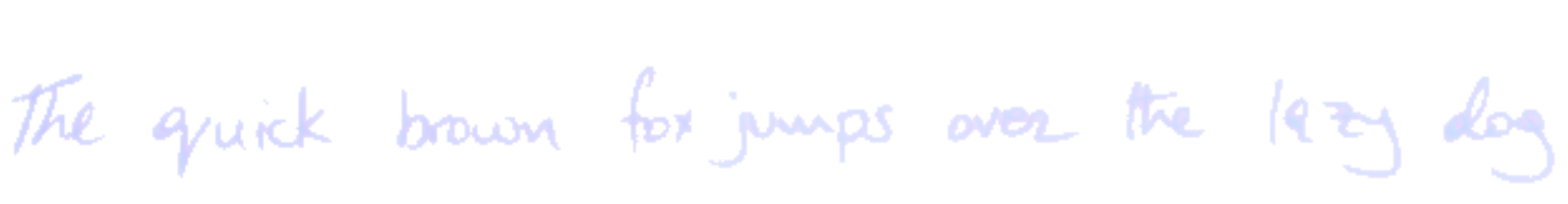}} \\ [2pt]
Result (norm)           & HS image (norm) \\
\fbox{\includegraphics[trim = 0pt 5pt 0pt 12pt, clip, width=0.47\linewidth]{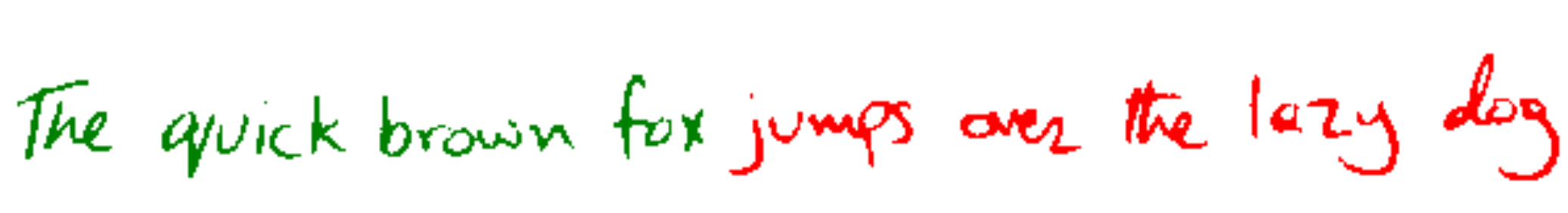}} &
\fbox{\includegraphics[trim = 0pt 0pt 0pt 15pt, clip, width=0.47\linewidth]{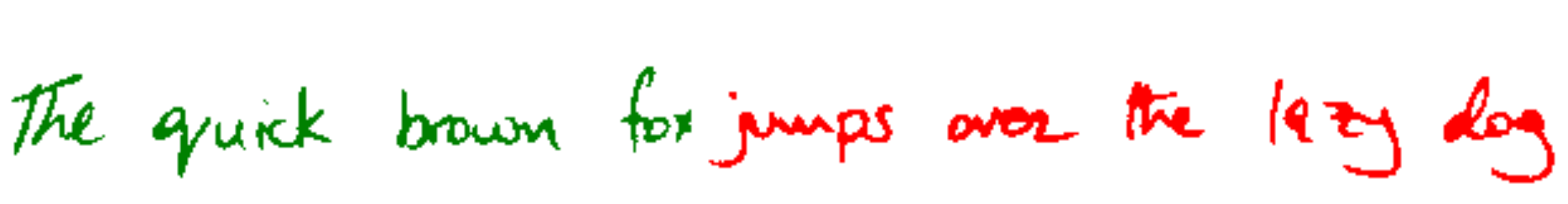}}
\end{tabular}
\caption[Effect of illumination normalization on ink mismatch detection]{An illustration of ink mismatch detection on a blue ink and a black ink image, acquired using adaptive exposure. The ground truth ink pixels are labeled in pseudo colors (red: ink 1, green: ink 2). The spatially non-uniform illumination pattern can be observed in raw HS images, with high energy in the center and low towards the edges. Normalization removes the illumination bias and greatly improves mismatch detection accuracy.}
\label{fig:qual-norm}
\end{figure}
\end{landscape}


We begin by analyzing the efficacy of the proposed hyperspectral document image illumination normalization. Figure~\ref{fig:qual-norm} shows two example handwritten notes in blue and black inks. The images are made by mixing samples of ink 1 and ink 2 of blue and black inks, separately. The original images are shown in RGB for clarity. The ground truth images are labeled in different colors to identify the constituent inks in the note. The spatial non-uniformity of the illumination can be observed from the center to the edges. The mismatch detection results on raw images indicate that the clustering is biased by the illumination intensity, instead of the ink color. After normalization of the raw HS images, it is evident that the illumination variation is highly suppressed. This results in an accurate mismatch detection result that closely follows the ground truth.



We now evaluate ink mismatch detection using RGB images and the effect of different scanning resolutions. Figure~\ref{fig:DPI} shows the average accuracy at two different resolutions for all ink combinations. For most of the blue ink combinations, the preferred choice of resolution is 300 dpi which is superior to 150 dpi in most ink combinations. Interestingly, for black ink combinations, no conclusive evidence is available to support any resolution as the accuracy is within the range of (0.3,0.4). This indicates that RGB images do not carry enough information to differentiate black inks and its accuracy is close to random guess. As a final choice, we resort the 300 dpi RGB images for further comparisons.

\begin{figure}[h]
\centering
\includegraphics[trim = 7pt 3pt 30pt 0pt, clip, width=0.49\linewidth]{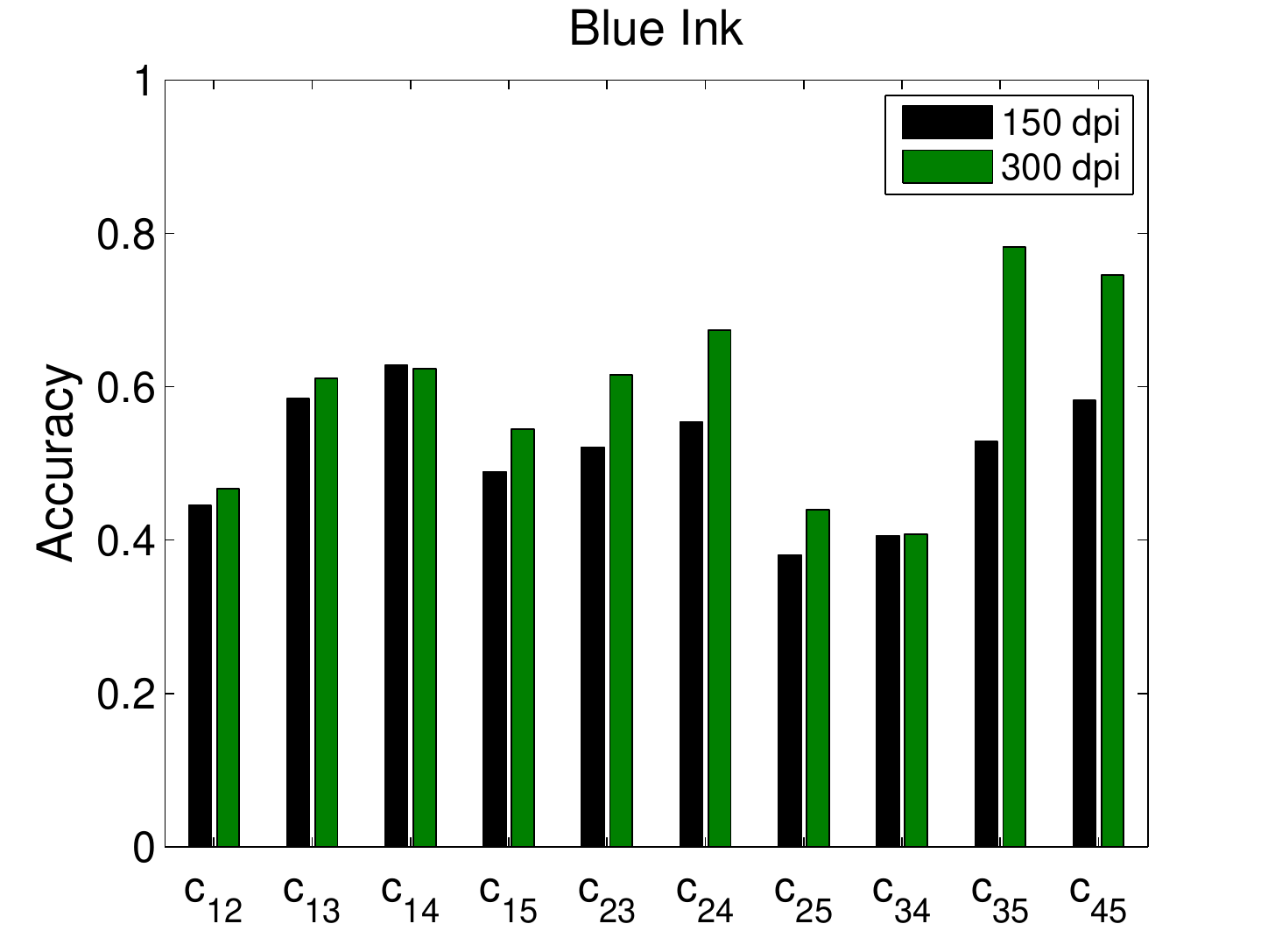}
\includegraphics[trim = 7pt 3pt 30pt 0pt, clip, width=0.49\linewidth]{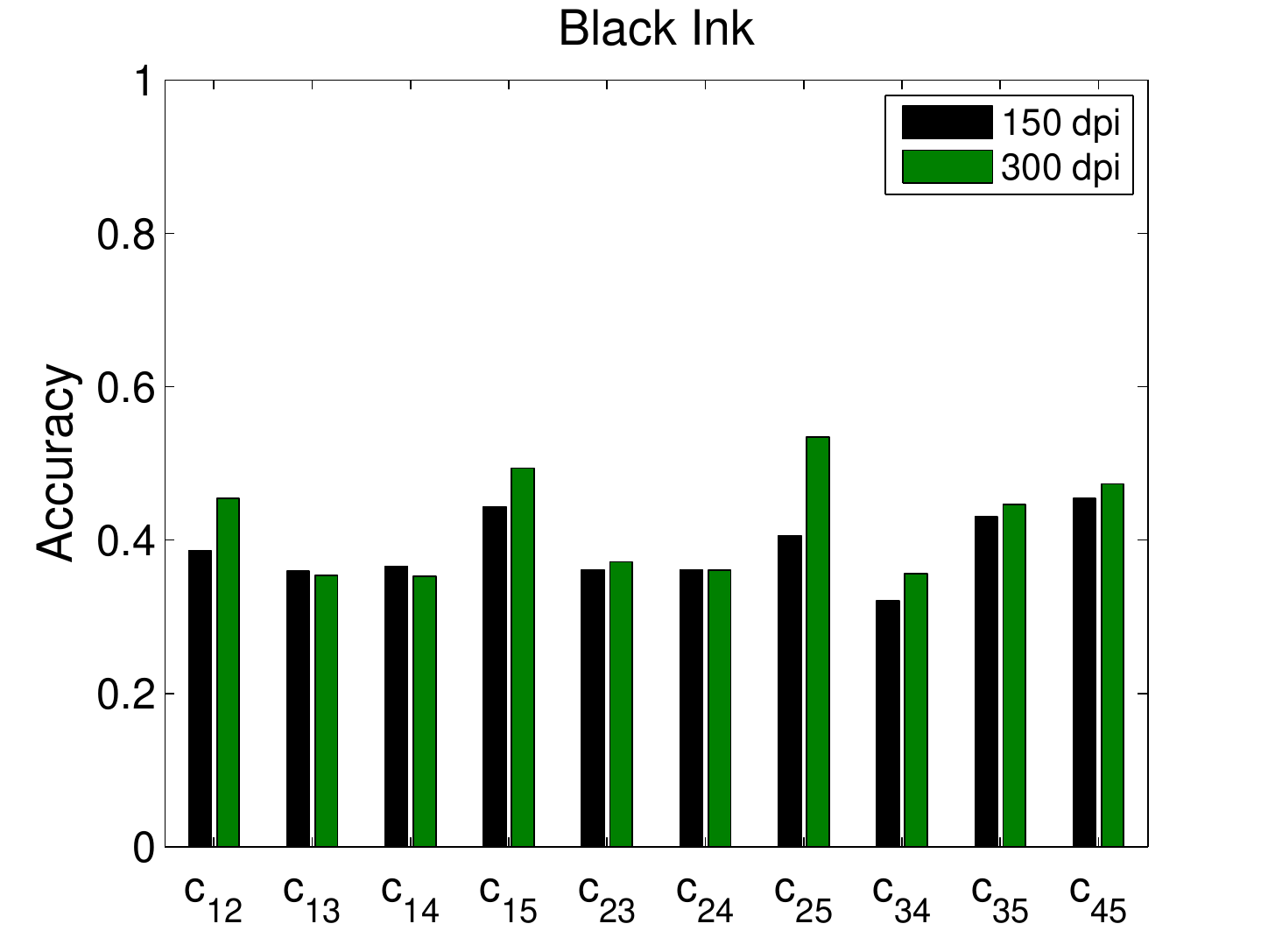}
\caption[Effect of spatial resolution on ink mismatch detection from RGB images]{The effect of spatial resolution on ink mismatch detection from RGB images.}
\label{fig:DPI}
\end{figure}

We now analyze how hyperspectral images (HSI) can be beneficial in ink mismatch detection compared to RGB images. Initially, we use all the bands of hyperspectral image for the analysis. Later, we show how feature selection improves the task of ink mismatch detection. The mismatch detection accuracies of HSI and RGB data are compared in Figure~\ref{fig:RHH}. It can be seen that HSI significantly outperforms RGB in separating most blue ink combinations. This is evident in accurate clustering for ink combinations c$_{_{12}}$, c$_{_{14}}$, c$_{_{25}}$, c$_{_{35}}$ and c$_{_{45}}$ of the blue inks. In the case of black inks, ink 1 is most distinguishable from the other inks resulting in highly accurate mismatch detection for all of its combinations c$_{_{1j}}$ using HSI. As seen previously, RGB images are insufficient for black inks mismatch detection. However, it can be seen that for a few ink combinations, even HSI seems insufficient for discrimination. These results invite further exploration of HSI to find the most informative bands.

\begin{figure}[h]
\centering
\includegraphics[trim = 7pt 3pt 30pt 0pt, clip, width=0.49\linewidth]{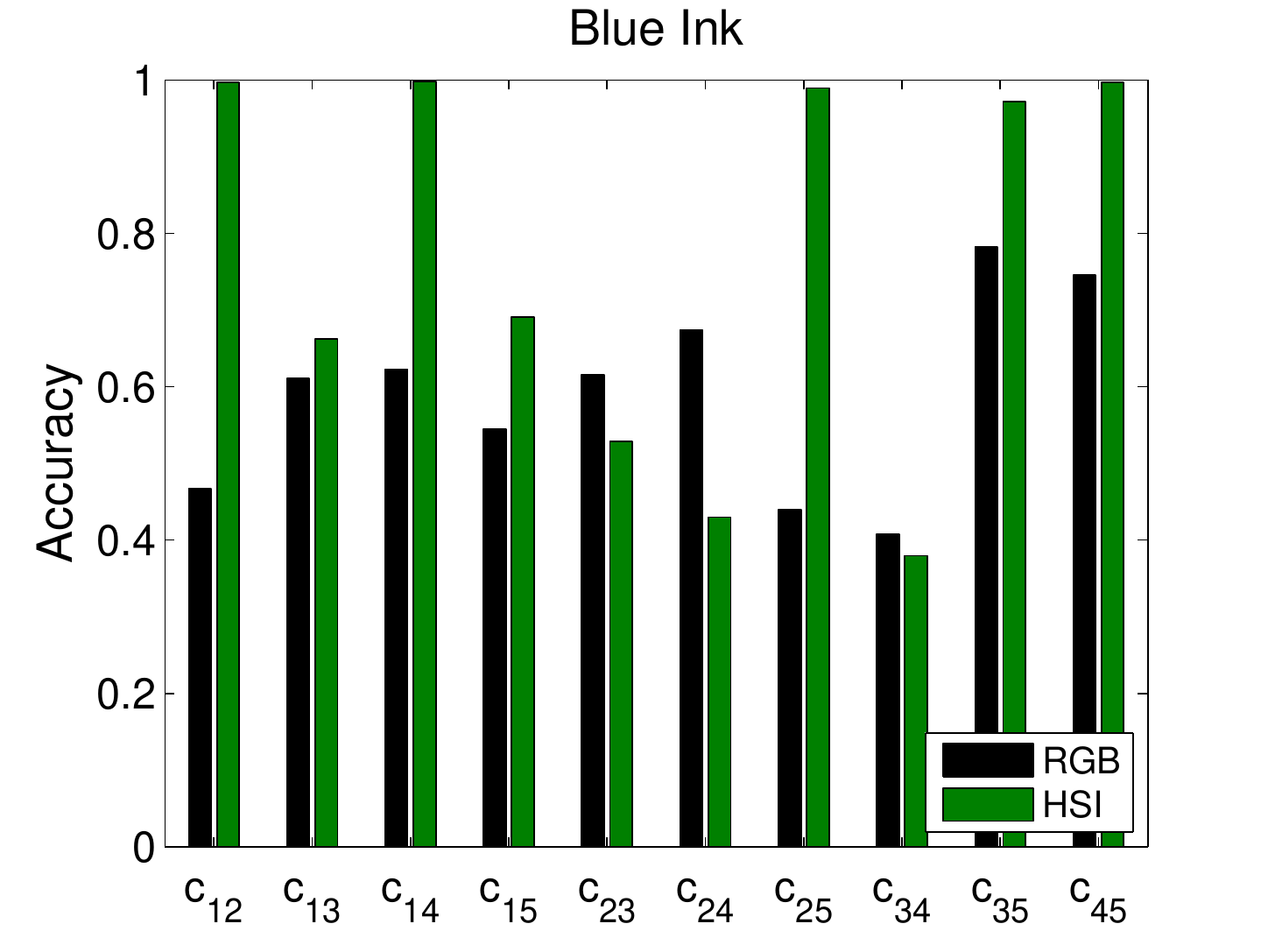}
\includegraphics[trim = 7pt 3pt 30pt 0pt, clip, width=0.49\linewidth]{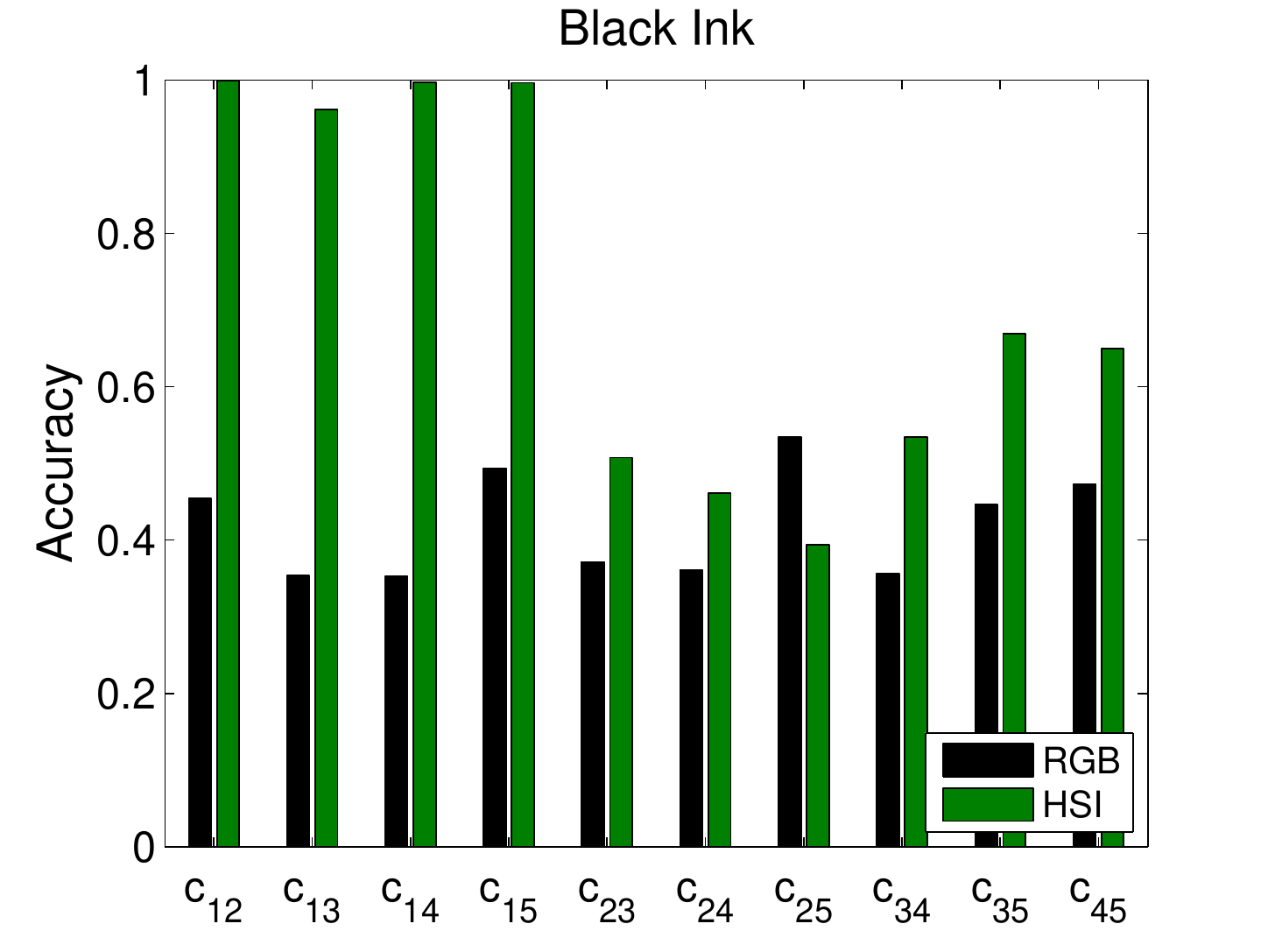}
\caption{Comparison of RGB and HSI image based mismatch detection accuracy.}
\label{fig:RHH}
\end{figure}

Figure~\ref{fig:prop} shows the effect of varying ink proportions on mismatch detection accuracy. Generally, the sentences are composed of 9 words in the database. The ink proportion is varied by mixing words of different inks in different proportion. It is observed that for the lowest proportion, i.e.~a single word in a different ink is least distinguishable. This can be noticed by the drop in accuracy for ink proportions 1:8 and 8:1 in most ink combinations. For highly distinguishable ink-pairs, the trend of accuracy in relation to ink proportion is generally symmetric. Other, less distinguishable ink combinations display a skewed trend. It can be interpreted in the following manner. For a given ink, a minimum quantity of samples are required to distinguish it from another specific ink. For example, given the blue ink combination c$_{_{35}}$, one word of ink 3 is highly distinguishable in a note mainly written with ink 5. Conversely, one word of ink 5 is indistinguishable in a note written in ink 3. This brings forward another research direction of disproportional ink mismatch detection which is practically possible in real life forensic analysis. This is inherently an unbalanced clustering problem, and hence more sophisticated clustering algorithms would be needed to resolve it. In this work, we restrict to equal ink proportions for the rest of the experiments.

\begin{figure}[h]
\centering
\includegraphics[trim = 0pt 40pt 30pt 0pt, clip, width=0.49\linewidth]{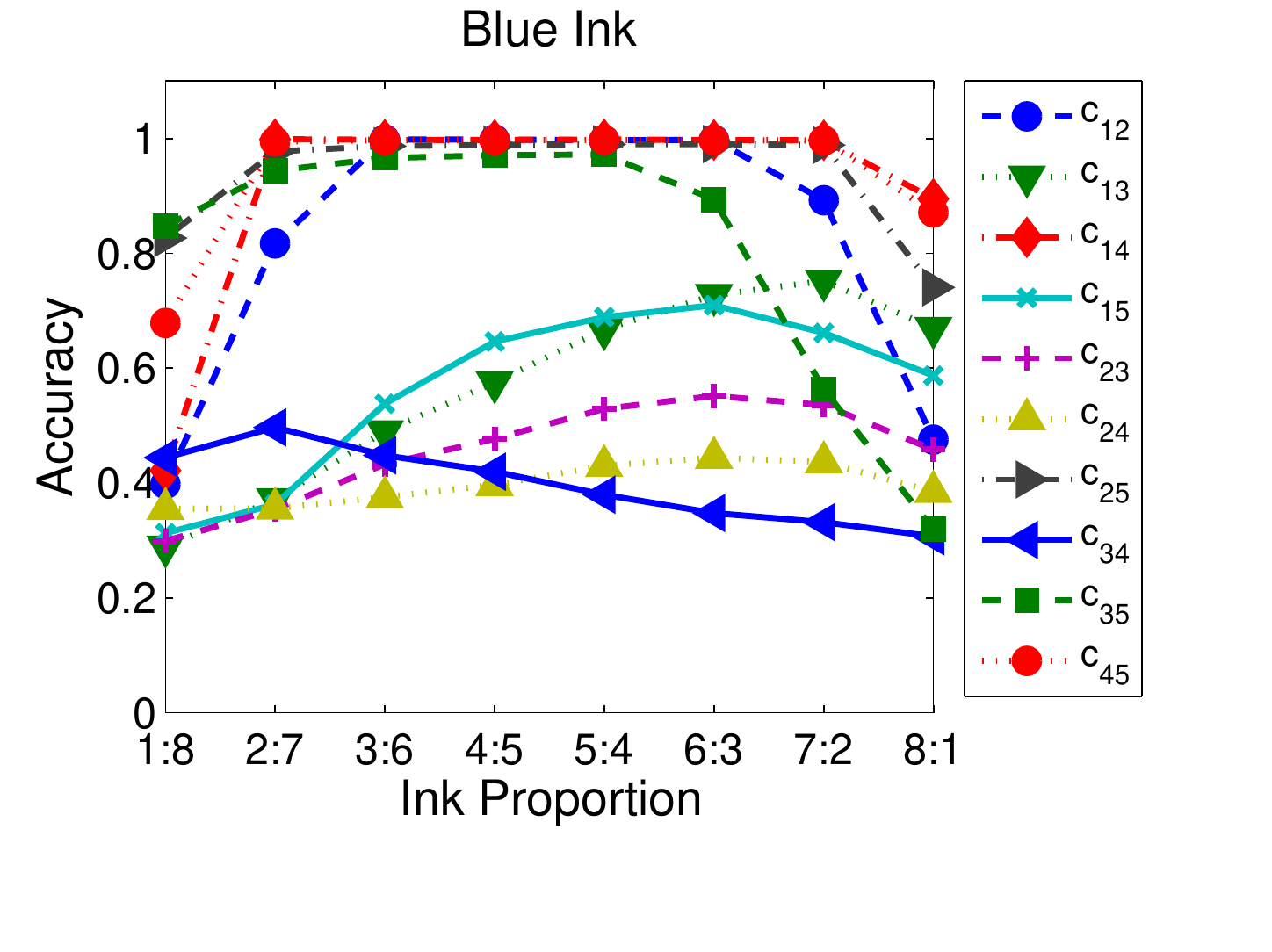}
\includegraphics[trim = 0pt 40pt 30pt 0pt, clip, width=0.49\linewidth]{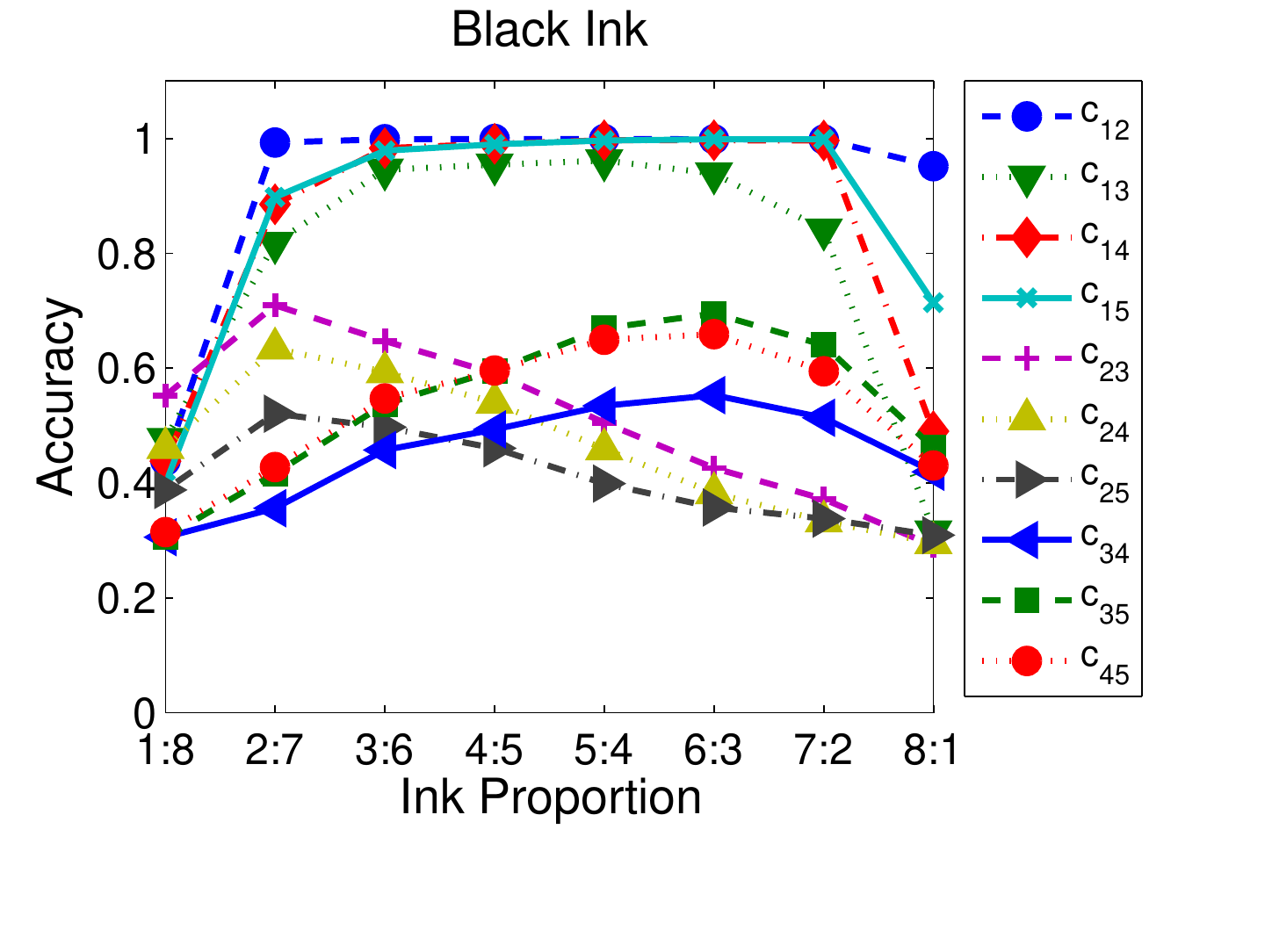}
\caption{The effect of varying ink proportion on mismatch detection accuracy.}
\label{fig:prop}
\end{figure}



To study whether different inks become more distinguishable in different regions of the spectrum, we plot the average normalized spectra of all blue and black inks, in Figure~\ref{fig:spec}. These graphs are the outcome of average spectral response of each ink over all samples in the database. Observe that the ink spectra are more similar in some ranges than in other ranges. It is likely that these inks are better distinguished in different bands in the visible spectrum. In order to evaluate the contribution of sub-ranges to ink discrimination, we divide the hyperspectral data and perform separate experiments in each sub-range. We divide the visible spectrum into three empirical ranges, named as low-visible (400nm-500nm), mid-visible (510nm-590nm) and high-visible range (600nm-720nm). These ranges roughly correspond to the blue, green and red colors and have been empirically selected because no clear sub-categorization of the visible spectrum, is defined in the literature. A close analysis of variability of the ink spectra in these ranges reveals that most of the differences are present in the high-visible range, followed by mid-visible and low-visible ranges.

\begin{figure}[t]
\centering
\includegraphics[trim = 0pt 0pt 30pt 0pt, clip, width=0.49\linewidth]{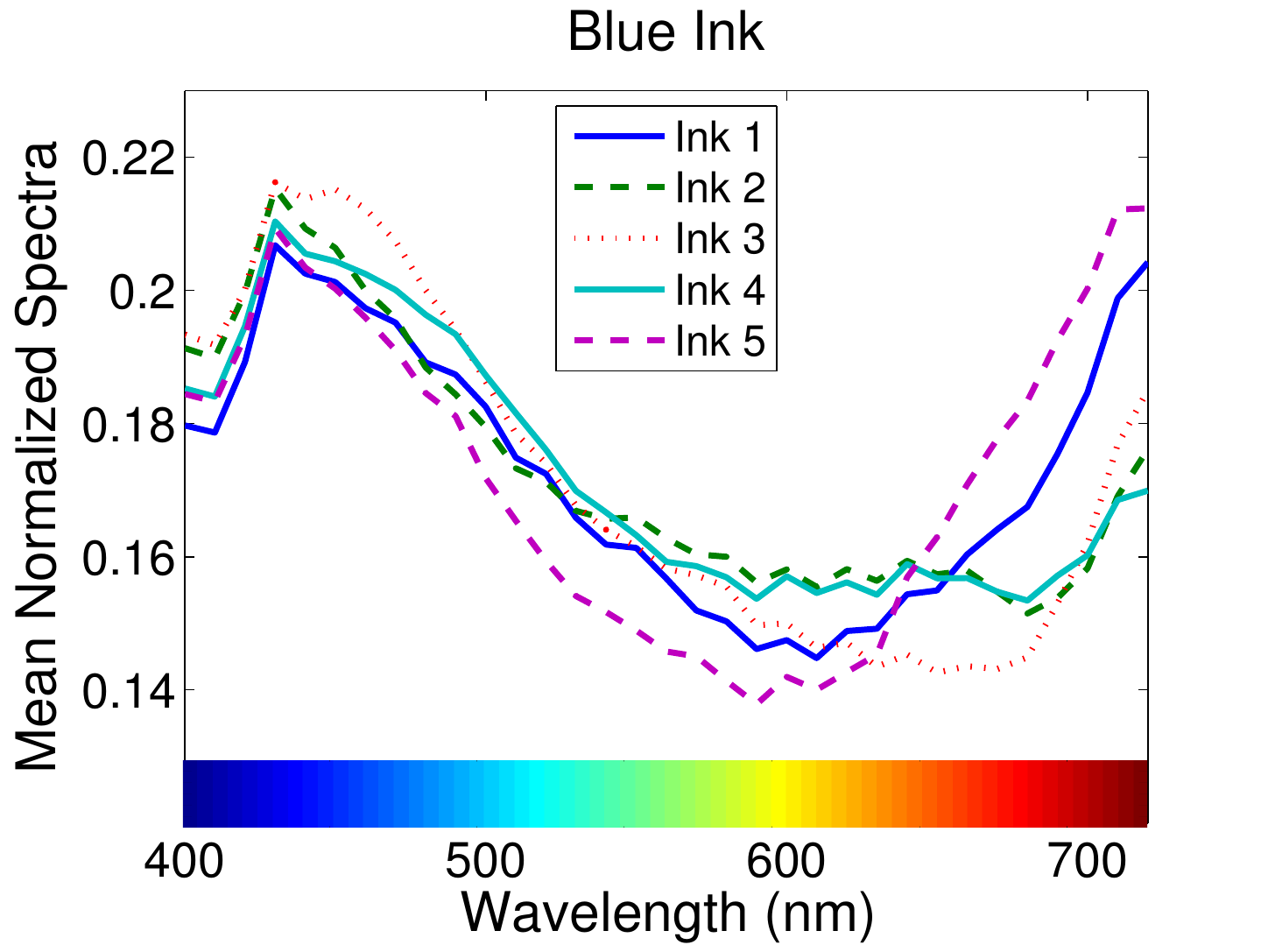}
\includegraphics[trim = 0pt 0pt 30pt 0pt, clip, width=0.49\linewidth]{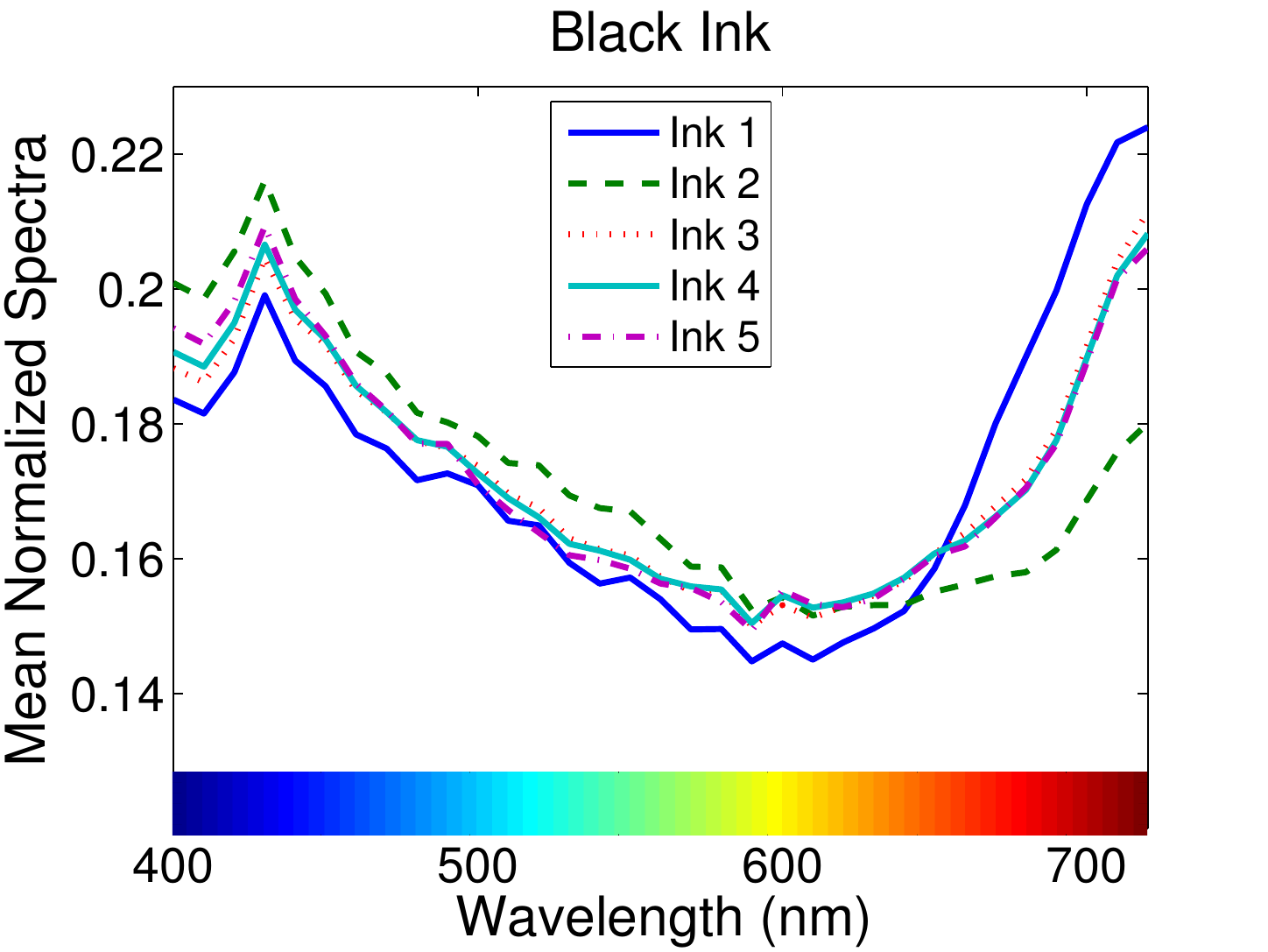}
\caption[Spectra of the blue and black inks under analysis]{Spectra of the blue and black inks under analysis. Note that at some ranges the ink spectra are more distinguished than others.}
\label{fig:spec}
\end{figure}

Figure~\ref{fig:BGR} shows the results of separate experiments in low-visible, mid-visible and high-visible range. Note that for most of the ink combinations, the high-visible range is the most accurate, followed by the mid-visible and the low-visible range respectively. Observe that the black ink combinations c$_{_{34}}$, c$_{_{35}}$ and c$_{_{45}}$ are more distinguished in the low-visible range. This trend can be related back to Figure~\ref{fig:spec} wherein the black inks 3, 4 and 5 are more similar in the high-visible range and dissimilar in the low-visible range.

\begin{figure}[h]
\centering
\includegraphics[trim = 7pt 3pt 30pt 0pt, clip, width=0.49\linewidth]{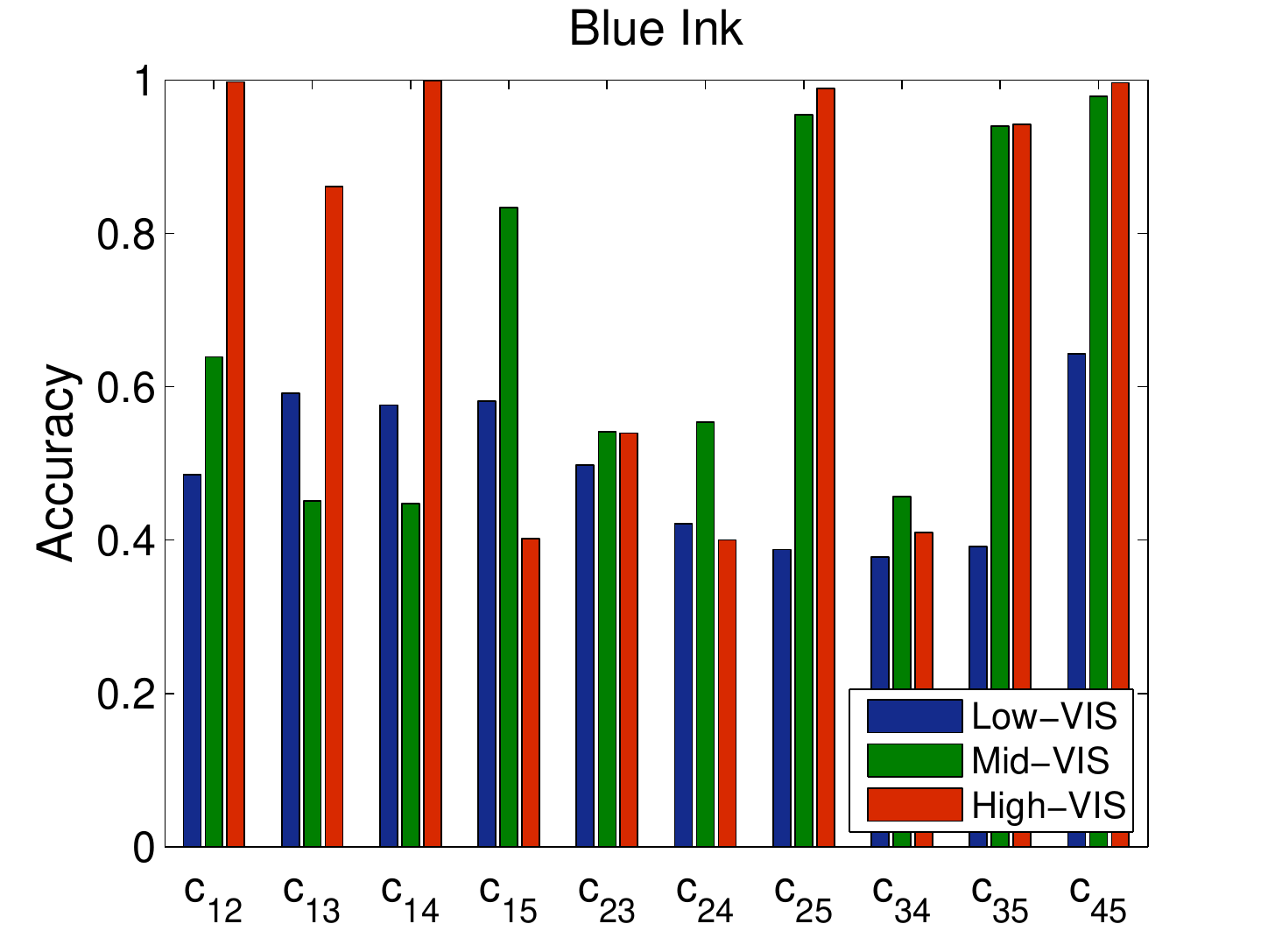}
\includegraphics[trim = 7pt 3pt 30pt 0pt, clip, width=0.49\linewidth]{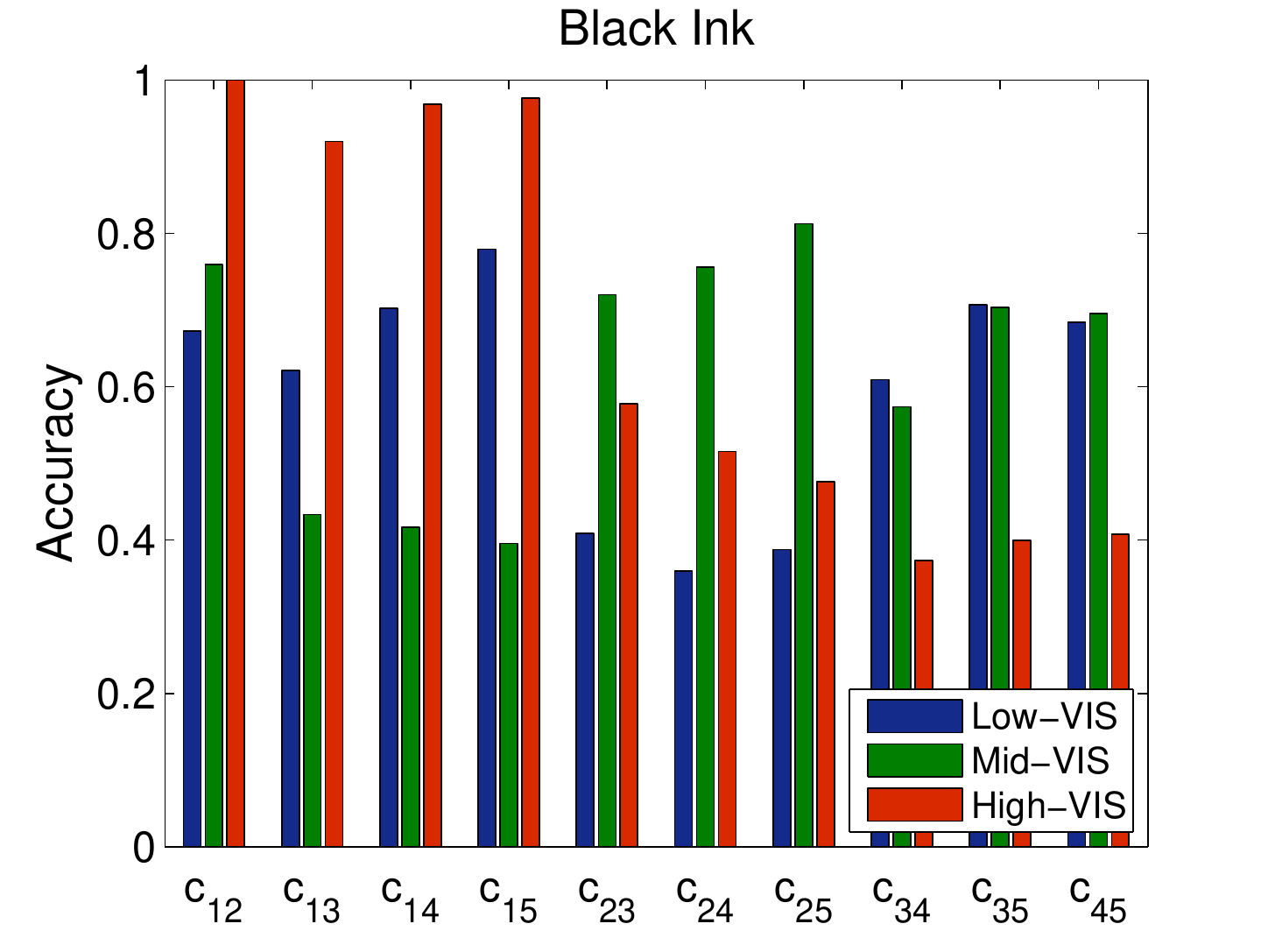}
\caption[HSI wavelength range analysis]{HSI wavelength range analysis. Observe that the high-VIS range performs better than the mid-VIS and low-VIS ranges.}
\label{fig:BGR}
\end{figure}

After ink mismatch detection using all bands of the HSI, we now extend our approach by band selection. In a 10 fold cross validation experiment, a leave two inks out strategy is adopted to avoid bias of the selected features towards particular inks. In each fold, bands are selected from three ink combinations and tested on one ink combination while leaving the remaining six ink combinations that contain either of the two test inks. For example, if the test ink combination is c$_{_{12}}$, all ink combinations containing either ink-1 or ink-2 (i.e. c$_{_{13}}$,c$_{_{14}}$,c$_{_{15}}$,c$_{_{23}}$,c$_{_{24}}$,c$_{_{25}}$) are left out and the bands are selected from (c$_{_{34}}$,c$_{_{35}}$,c$_{_{45}}$). The same protocol is adopted for all test ink combinations.


We now analyze ink mismatch detection using Joint Sparse Band Selection (JSBS) and Sequential Forward Band Selection (SFBS). Table~\ref{tab:res-SFBS-JSBS-blue} gives the bands selected from training data by each technique and their corresponding accuracies on test data for blue inks. The bands selected by JSBS result in higher accuracy except for ink combination c$_{_{15}}$ and c$_{_{24}}$. We can further dissect this result by relating the selected bands to the spectra of ink-1 and ink-5 in Figure~\ref{fig:spec}. It is evident that c$_{_{15}}$ is more differentiable in mid/high visible range. The bands selected by SFBS belong to all ranges whereas JSBS selects bands solely from the high visible range which is the reason for its lower accuracy on c$_{_{15}}$. The spectra of ink-2 and ink-4 is highly similar in the mid/high visible range but slightly dissimilar in the low visible range from which neither technique selected a band. Overall, the average accuracy of JSBS (86.2\%) is better than SFBS (82.1\%) in blue inks mismatch detection.

\begin{table}[h]
\caption[Selected bands and accuracies in blue ink handwritten notes]{Selected bands and mismatch detection accuracies in blue ink handwritten notes.}
\label{tab:res-SFBS-JSBS-blue}
\centering
{\scriptsize
\begin{tabular}{|c|lc|lc|} \hline
\multirow{2}{*}{Fold}                   & \multicolumn{2}{c|}{SFBS} &       \multicolumn{2}{c|}{JSBS}   \\ \cline{2-5}
            & Selected Bands            &       Acc.(\%)   &  Selected Bands           &  Acc. (\%)     \\ \hline
c$_{_{12}}$ & $720,530,460,560        $ & $        99.5 $  &  $610,620,700,710 $       & $\mathbf{99.9}$\\ \hline 
c$_{_{13}}$ & $720,560,490,550,510    $ & $        98.4 $  &  $610,710,720     $       & $\mathbf{99.6}$\\ \hline 
c$_{_{14}}$ & $720,490,550            $ & $\mathbf{99.9}$  &  $620,710         $       & $\mathbf{99.9}$\\ \hline 
c$_{_{15}}$ & $720,490,550,560,510    $ & $\mathbf{81.4}$  &  $660,710         $       & $        72.2 $\\ \hline 
c$_{_{23}}$ & $690,520,630,500,620,470$ & $        50.9 $  &  $720             $       & $\mathbf{56.4}$\\ \hline 
c$_{_{24}}$ & $700,520,500,720        $ & $\mathbf{56.1}$  &  $590,710         $       & $        43.5 $\\ \hline 
c$_{_{25}}$ & $720,550,630,560        $ & $        98.6 $  &  $640,710         $       & $\mathbf{99.1}$\\ \hline 
c$_{_{34}}$ & $690,520,620,500        $ & $        41.2 $  &  $550,610,720     $       & $\mathbf{92.8}$\\ \hline 
c$_{_{35}}$ & $720,490,560,550,510    $ & $        96.1 $  &  $410,620,700     $       & $\mathbf{99.1}$\\ \hline 
c$_{_{45}}$ & $720,490,550            $ & $        99.7 $  &  $400,660,720     $       & $\mathbf{99.8}$\\ \hline 
\multicolumn{2}{|r|}{mean}              & $        82.1 $  & \multicolumn{1}{r|}{mean} & $\mathbf{86.2}$\\ \hline
\end{tabular}}
\end{table}
Table~\ref{tab:res-SFBS-JSBS-black} gives the selected bands and accuracies for black inks. The JSBS consistently outperforms SFBS as it is more accurate for all ink combinations. The lower accuracy of both techniques on combinations arising from ink 3, 4 and 5 is imminent from their highly similar spectra in Figure~\ref{fig:spec}. Interestingly a single band proves sufficient to differentiate inks 3, 4 and 5 from ink 2. Altogether, the average accuracy of JSBS (88.1\%) is much better than SFBS (79.8\%) in black inks mismatch detection.

\begin{table}[h]
\caption[Selected bands and accuracies in black ink handwritten notes]{Selected bands and mismatch detection accuracies in black ink handwritten notes.}
\label{tab:res-SFBS-JSBS-black}
\centering
{\scriptsize
\begin{tabular}{|c|lc|lc|} \hline
\multirow{2}{*}{Fold}   & \multicolumn{2}{c|}{SFBS}              &      \multicolumn{2}{c|}{JSBS}              \\ \cline{2-5}
            & Selected Bands                            & Acc.(\%) & Selected Bands          & Acc.(\%)    \\ \hline
c$_{_{12}}$ & $520,700,530,720,450,500        $ & $\mathbf{100}$ & $710                    $ & $\mathbf{100}$   \\ \hline 
c$_{_{13}}$ & $530,720,460                            $ & $84.1$ & $410,650,680,700        $ & $\mathbf{98.7}$  \\ \hline 
c$_{_{14}}$ & $720,440,520,710,560                    $ & $95.3$ & $400,410,680,700,710    $ & $\mathbf{99.5}$  \\ \hline 
c$_{_{15}}$ & $720,460,520,530,680                    $ & $99.1$ & $420,520,640,650,680,710$ & $\mathbf{99.7}$  \\ \hline 
c$_{_{23}}$ & $700,520,530,460,720,500,470            $ & $81.4$ & $720                    $ & $\mathbf{90.6}$  \\ \hline 
c$_{_{24}}$ & $700,520,500,460,610,530,680,720,440,510$ & $74.3$ & $720                    $ & $\mathbf{87.4}$  \\ \hline 
c$_{_{25}}$ & $700,440,520,500,710,460,550            $ & $79.2$ & $720                    $ & $\mathbf{84.1}$  \\ \hline 
c$_{_{34}}$ & $710,440,550,560                        $ & $58.1$ & $520,700                $ & $\mathbf{68.2}$  \\ \hline 
c$_{_{35}}$ & $710,440,560,550,430,570,720,420        $ & $66.2$ & $430,710,720            $ & $\mathbf{83.4}$  \\ \hline 
c$_{_{45}}$ & $710,440,560,450                        $ & $60.3$ & $420,690,700,710,720    $ & $\mathbf{70.1}$  \\ \hline 
\multicolumn{2}{|r|}{mean}                              & $79.8$ & \multicolumn{1}{r|}{mean} & $\mathbf{88.1}$  \\ \hline
\end{tabular}}
\end{table}
In summary, the SFBS roughly selects bands from each of the low/mid/high visible ranges. The JSBS selects bands solely from high visible range or a combination of low/high or mid/high visible ranges (with few exceptions). This means that JSBS selects more informative bands from complementary ranges resulting in higher accuracy. Moreover, the JSBS consistently selects fewer (or equal) number of bands compared to SFBS for all ink combinations, despite being more accurate, as shown in Figure~\ref{fig:FFS-JSFS}. Thus on average, JSBS selects half as many bands as SFBS while resulting in higher average accuracy. Our band selection gives an insight into how a customized multispectral imaging device with a smaller number of bands can be designed for ink mismatch detection. Bianco et al.~\cite{bianco2012multispectral} developed one such multispectral imaging device by combining an imaging sensor with a mechanical filter wheel. They empirically selected six different filters for the prototype device. Such devices may hugely benefit from the findings of the proposed band selection study in the selection of an optimal combination of filters.

\begin{figure}[h]
\centering
\includegraphics[trim = 7pt 3pt 30pt 0pt, clip, width=0.49\linewidth]{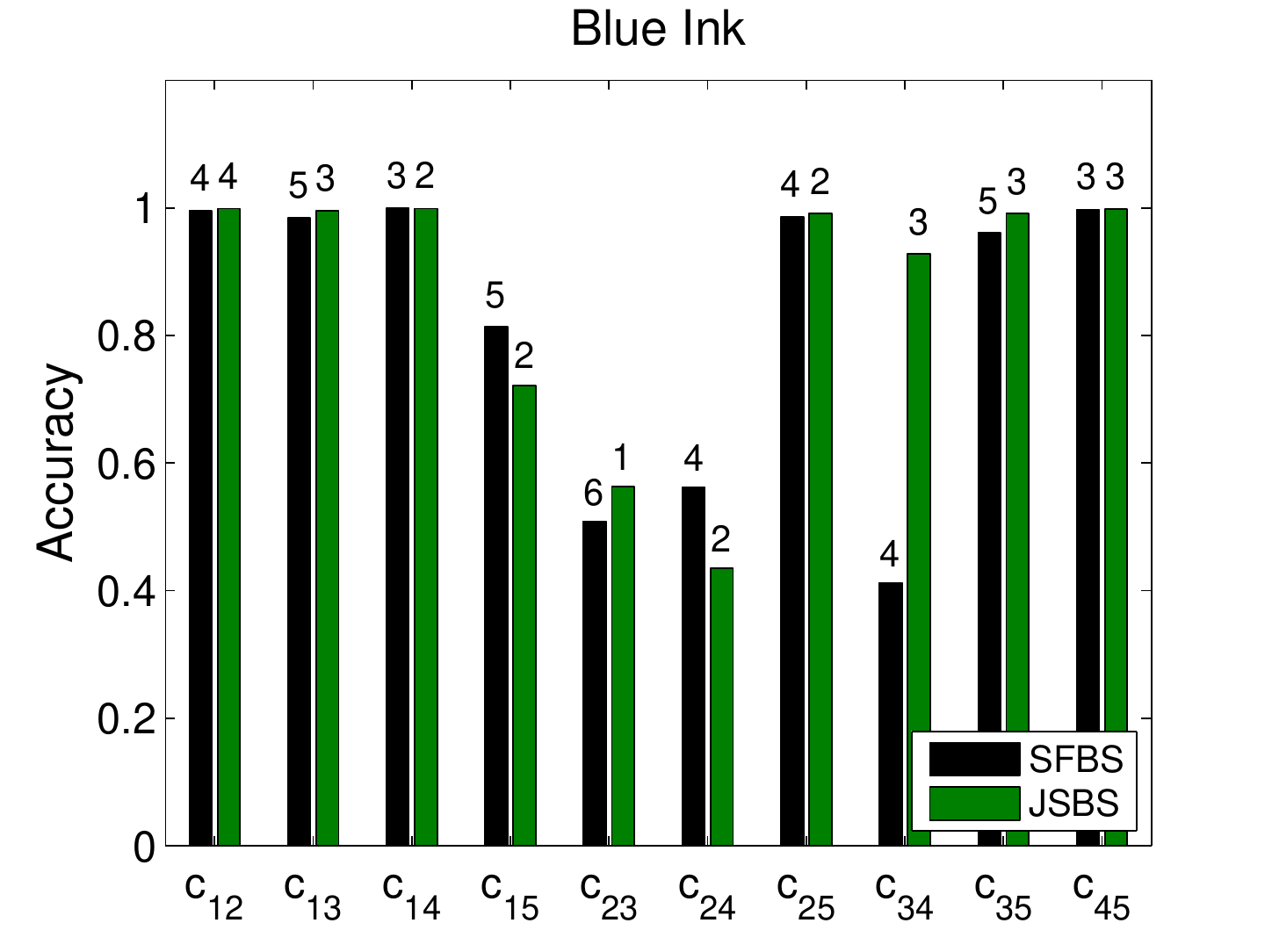}
\includegraphics[trim = 7pt 3pt 30pt 0pt, clip, width=0.49\linewidth]{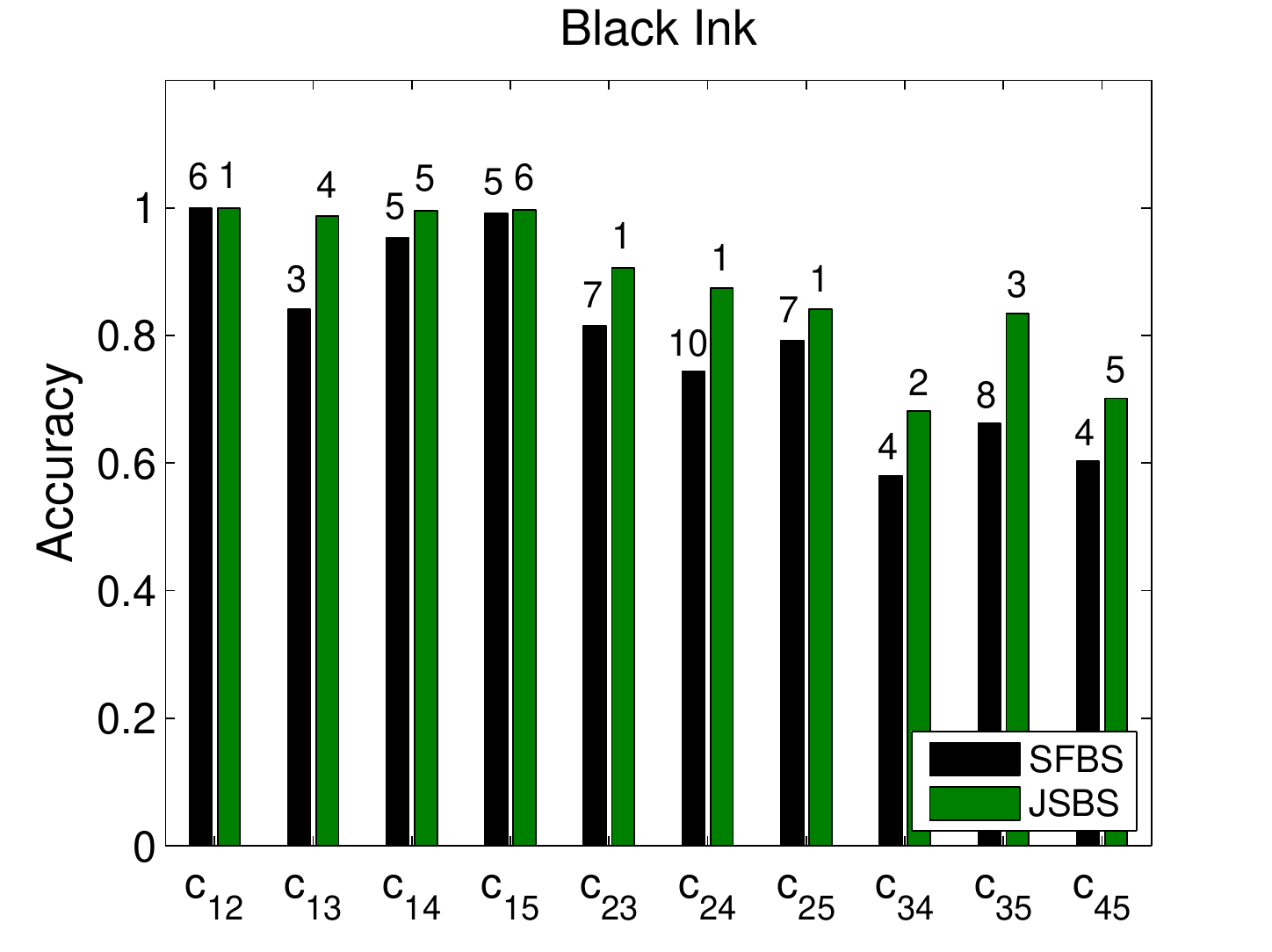}
\caption[Comparison of SFBS and JSBS techniques for ink mismatch detection]{Comparison of SFBS and JSBS techniques for ink mismatch detection. The integers over each bar indicate the selected number of bands. Observe in most cases, JSBS selects less number of bands despite providing better accuracy.}
\label{fig:FFS-JSFS}
\end{figure}

Finally, we qualitatively analyze ink mismatch detection results on example images of blue and black ink combinations. In Figure~\ref{fig:qual-fs}, the original images shown are a combination of blue inks (c$_{_{34}}$) and black inks (c$_{_{45}}$), respectively. We also show original RGB images for better visual analysis. The clustering based on RGB images is unable to group similar ink pixels into the same clusters. Instead, a closer look reveals that all the ink pixels are falsely grouped into one cluster whereas most of the boundary pixels are grouped into another cluster. This means that RGB is not sufficient to discriminate inks in these examples. Mismatch detection based on HSI using all bands also struggles in separating the inks. The result of SFBS is slightly different from HSI-All, indicating that the selected bands are ineffective.

Finally, we see how the accuracy is improved by using only the bands selected by JSBS. The selected bands exhibit a clear advantage over using all the bands or bands selected by SFBS. It can be seen that the majority of the ink pixels are correctly grouped according to ground truth. Mismatch detection of black inks is still a more difficult task compared to blue inks, but much improved in comparison to SFBS. One way to further improve the few mis-classified pixels is to further classify on a word-by-word basis. Recall that currently the spectral responses of inks are separated on a pixel-by-pixel basis, i.e.~without taking the spatial context into account. A word-by-word classification would require learning the possible spatial patterns of forgeries, followed by classification of each word/character as authentic or forged.

\section{Conclusion}
\label{sec:conclusion}

Hyperspectral document imaging has immense potential for forensic document examination. We demonstrated the benefit of hyperspectral imaging in automated ink mismatch detection. The non-informative bands were reduced by the proposed joint sparse band selection technique based on joint sparse PCA. Accurate ink mismatch detection was achieved using joint sparse band selection compared to using all features or using a subset of features selected by sequential forward band selection. We hope that the promising results presented in this work along with the exciting new challenges would trigger more research efforts in the direction of automated hyperspectral document analysis. Our newly developed writing ink hyperspectral image database is publicly available for research.

\begin{landscape}
\begin{figure}
\centering
\begin{tabular}{cc}
Blue Ink Handwritten Note & Black Ink Handwritten Note \\
\fbox{\includegraphics[trim = 0pt 0pt 0pt 20pt, clip, width=0.47\linewidth]{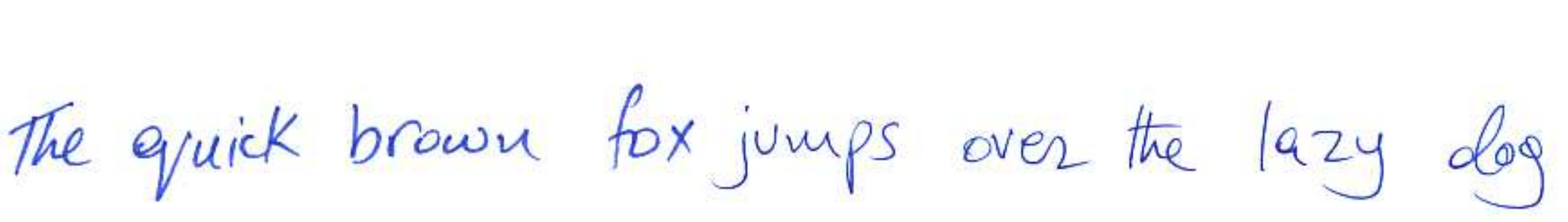}} &
\fbox{\includegraphics[trim = 0pt 0pt 0pt 15pt, clip, width=0.47\linewidth]{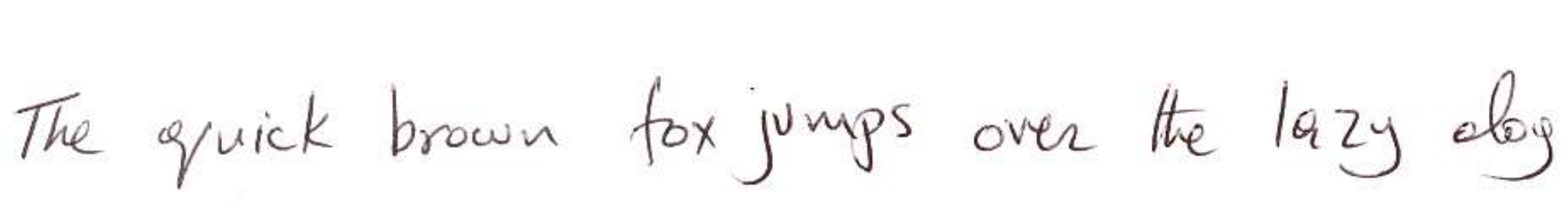}} \\ [2pt]
True Ink Map & True Ink Map \\
\fbox{\includegraphics[trim = 0pt 0pt 0pt 20pt, clip, width=0.47\linewidth]{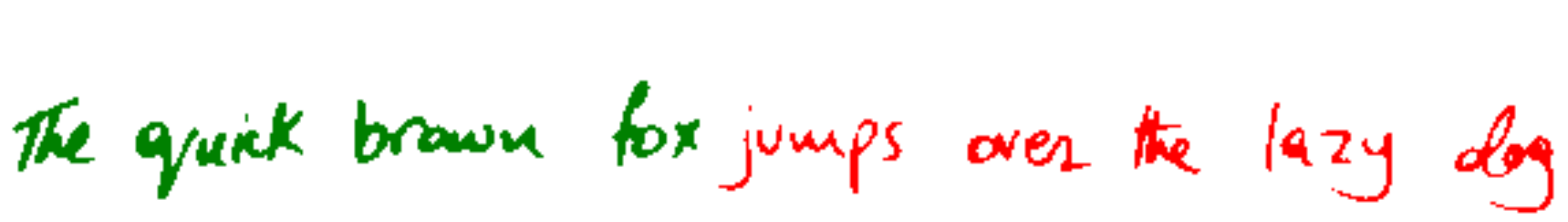}} &
\fbox{\includegraphics[trim = 0pt 0pt 0pt 15pt, clip, width=0.47\linewidth]{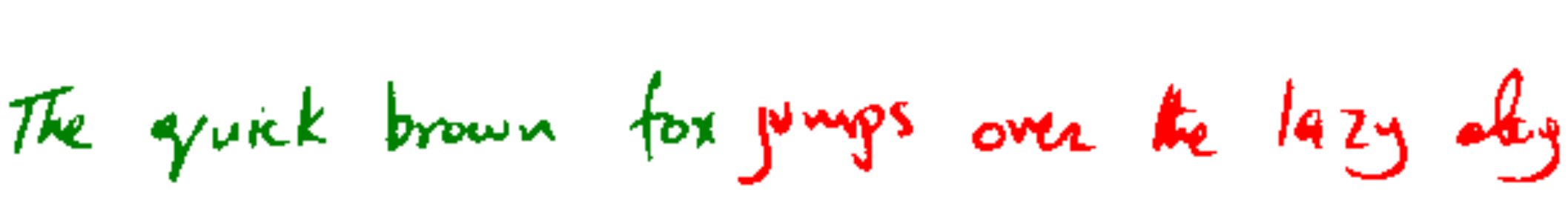}} \\ [2pt]
Result (RGB)        & Result (RGB) \\
\fbox{\includegraphics[trim = 0pt 0pt 0pt 20pt, clip, width=0.47\linewidth]{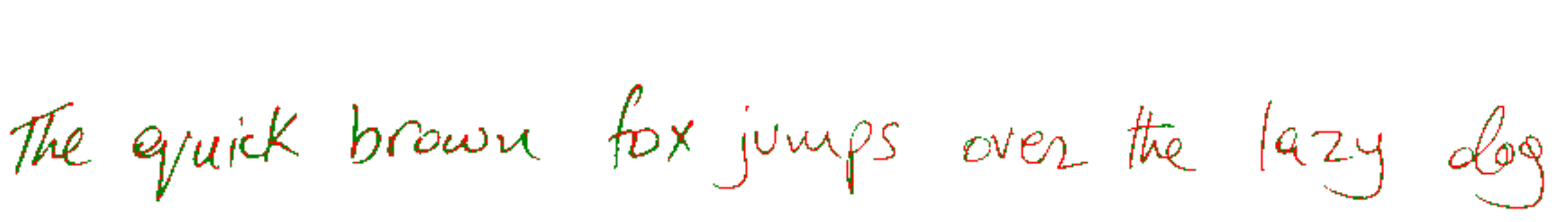}} &
\fbox{\includegraphics[trim = 0pt 0pt 0pt 15pt, clip, width=0.47\linewidth]{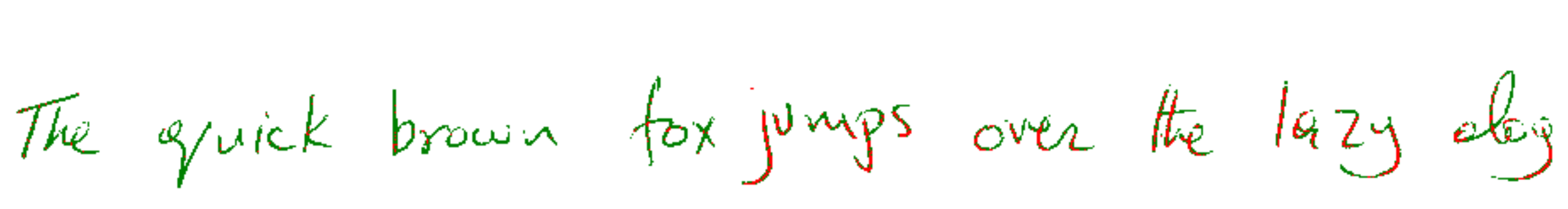}} \\ [2pt]
Result (HSI-All)    & Result (HSI-All) \\
\fbox{\includegraphics[trim = 0pt 0pt 0pt 20pt, clip, width=0.47\linewidth]{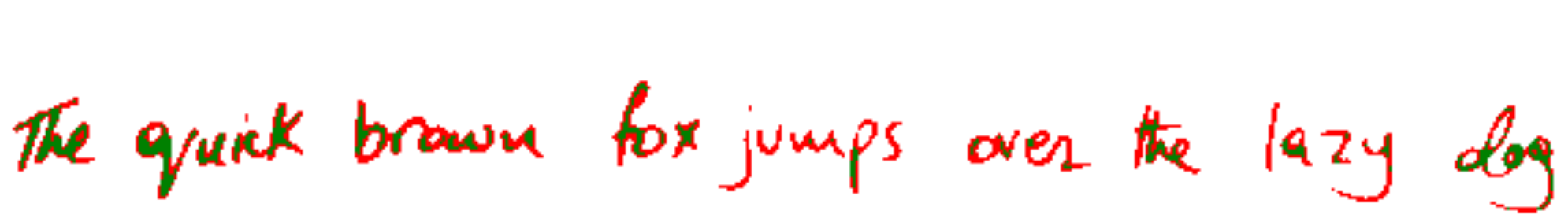}} &
\fbox{\includegraphics[trim = 0pt 0pt 0pt 15pt, clip, width=0.47\linewidth]{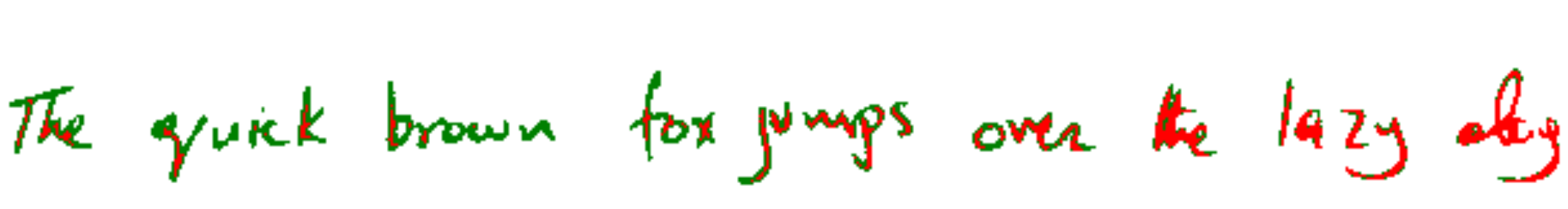}} \\ [2pt]
Result (HSI-SFBS)   & Result (HSI-SFBS) \\
\fbox{\includegraphics[trim = 0pt 0pt 0pt 20pt, clip, width=0.47\linewidth]{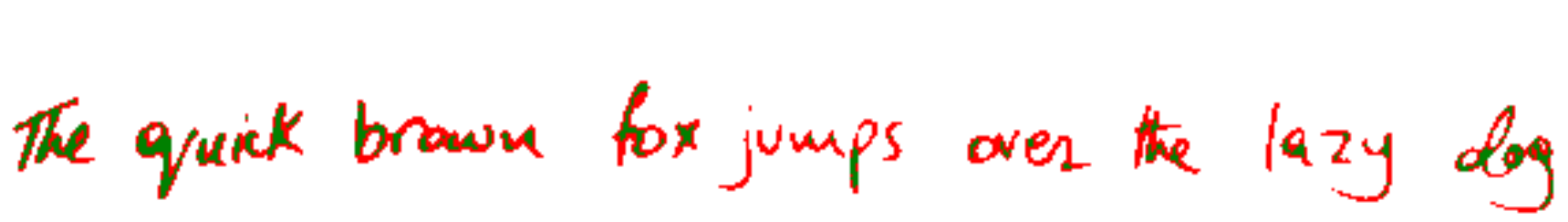}} &
\fbox{\includegraphics[trim = 0pt 0pt 0pt 15pt, clip, width=0.47\linewidth]{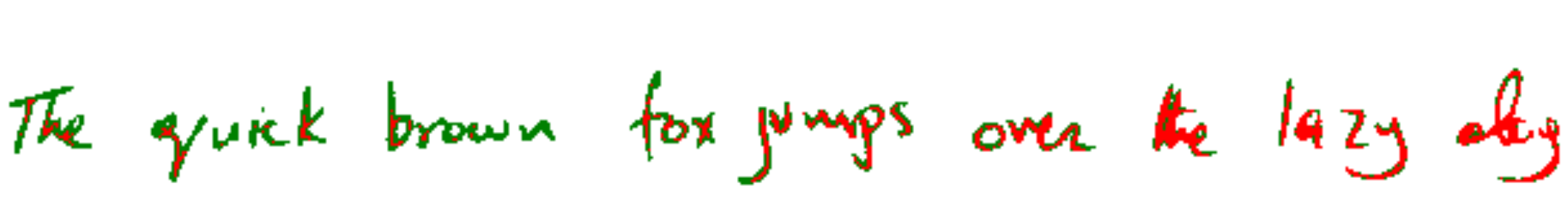}} \\ [2pt]
Result (HSI-JSBS)   & Result (HSI-JSBS) \\
\fbox{\includegraphics[trim = 0pt 0pt 0pt 20pt, clip, width=0.47\linewidth]{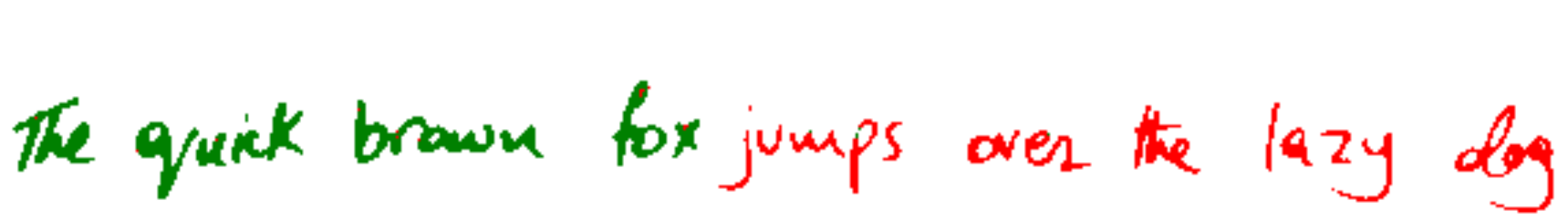}} &
\fbox{\includegraphics[trim = 0pt 0pt 0pt 15pt, clip, width=0.47\linewidth]{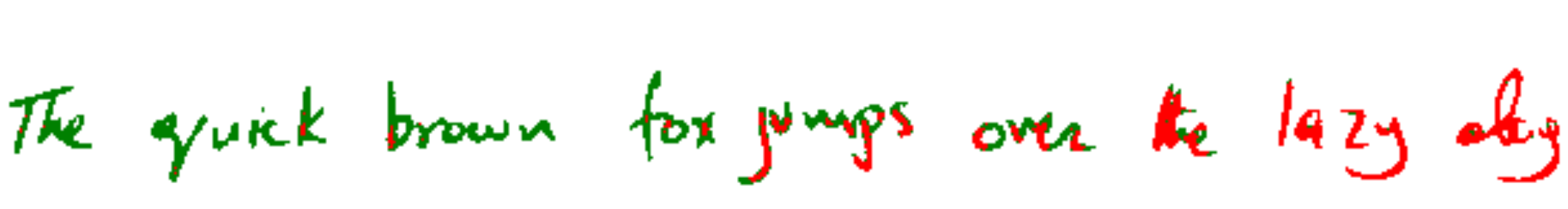}}
\end{tabular}
\caption[Effect of band selection on ink mismatch detection]{Example test images. We purposefully selected two hard cases so that the capability of RGB and HSI based ink mismatch detection is visually appreciable.}
\label{fig:qual-fs}
\end{figure}
\end{landscape}


\chapter{Hyperspectral Palmprint Recognition} 

\label{Chapter7} 


The information present in a human palm has an immense amount of potential for biometric recognition. Information visible to the naked eye includes the principal lines, the wrinkles and the fine ridges which form a unique pattern for every individual~\cite{zhang2012comparative}. These superficial features can be captured using standard imaging devices. High resolution scanners capture the fine ridge pattern of a palm which is generally employed for latent palmprint identification in forensics~\cite{jain2009latent}. The principal lines and wrinkles acquired with low resolution sensors are suitable for security applications like user identification or authentication~\cite{zhang2003online}.

Additional information present in the human palm is the subsurface vein pattern which is indifferent to the palm lines. The availability of such complementary features (palm lines and veins) allows for increased discrimination between individuals. Such features cannot be easily acquired by a standard imaging sensor. Infrared imaging can capture subsurface features due to its capability to penetrate the human skin. 



Figure~\ref{fig:palms} shows a hyperspectral image of a palm as a series of bands at different wavelengths. Observe the variability of information in a palm from shorter to longer wavelengths. The information in nearby bands is highly redundant and the features vary gradually across the spectrum. Due to the large number of bands in a hyperspectral palmprint image, band selection is inevitable for efficient palmprint recognition. The selected bands must be informative for representation to facilitate development of a realtime multispectral palmprint recognition system with a few bands. Moreover, extraction of line-like features from representative bands of a hyperspectral palmprint requires a robust feature extraction scheme which is a challenging task given the multi-modal nature of palm.

Another desirable characteristic of a futuristic biometric system is its non-invasive nature. Contact devices, restrict the hand movement but raise user acceptability issues due to hygiene. On the other hand, biometrics that are acquired with non-contact sensors are user friendly and socially more acceptable~\cite{jain2004introduction}, but introduce challenge of rotation, scale and translation (RST) variation. The misalignments caused by movement of the palm can be recovered by reliable, repeatable landmark detection and subsequent ROI extraction.

\clearpage
In this chapter, an end-to-end framework for hyperspectral palmprint recognition is proposed~\cite{khan2011contour}. The key contributions of this work are
\begin{itemize}
  \item A reliable technique for ROI extraction from non-contact palmprint images.
  \item A robust multidirectional feature encoding for multispectral palmprint representation.
  \item An efficient hash table scheme for compact storage and matching of multi-directional features.
\end{itemize}

\begin{figure}[t]
\renewcommand{\thesubfigure}{\relax}
\subfigure[510nm]{\includegraphics[width=0.09\linewidth]{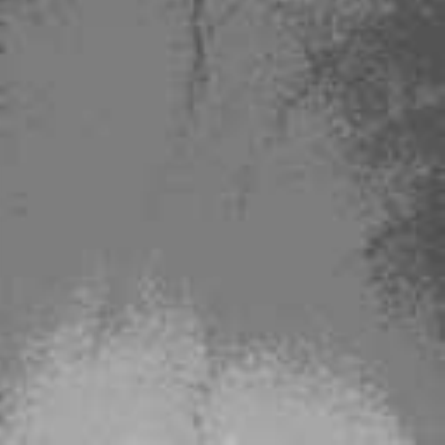}}
\subfigure[540nm]{\includegraphics[width=0.09\linewidth]{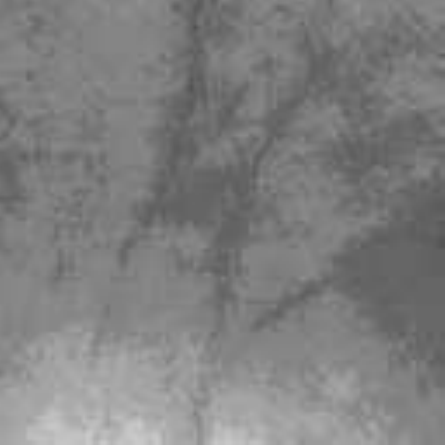}}
\subfigure[570nm]{\includegraphics[width=0.09\linewidth]{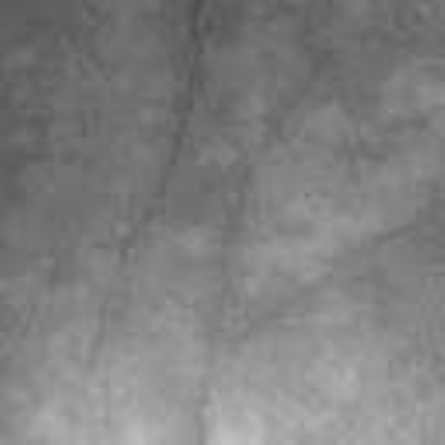}}
\subfigure[600nm]{\includegraphics[width=0.09\linewidth]{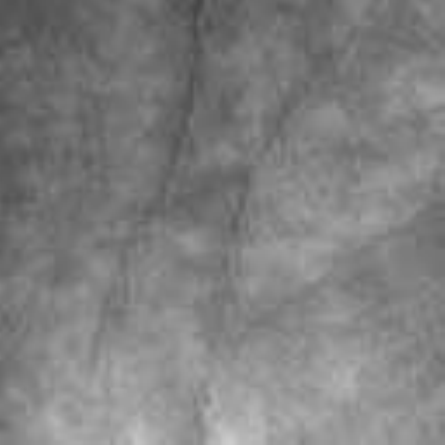}}
\subfigure[630nm]{\includegraphics[width=0.09\linewidth]{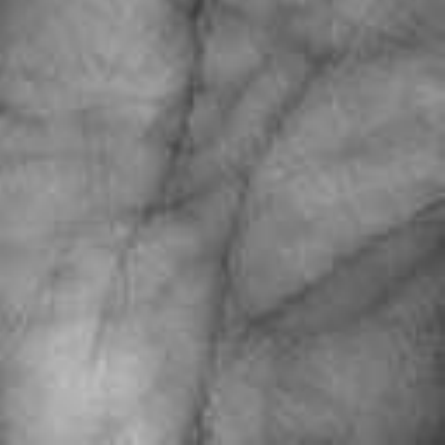}}
\subfigure[660nm]{\includegraphics[width=0.09\linewidth]{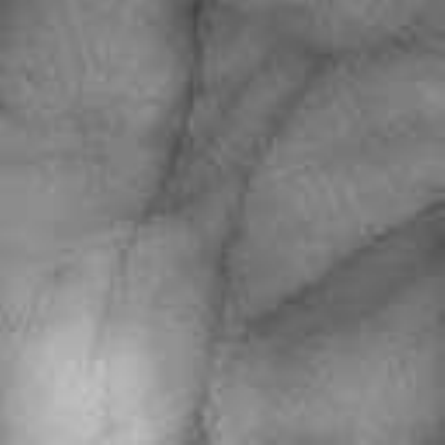}}
\subfigure[690nm]{\includegraphics[width=0.09\linewidth]{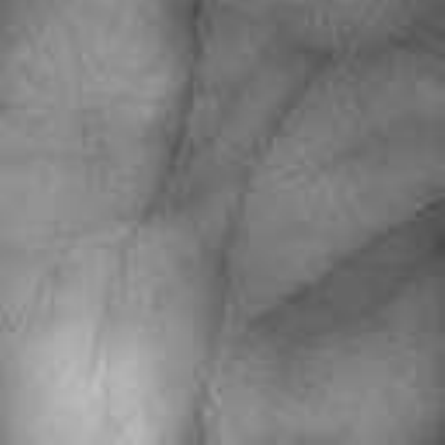}}
\subfigure[720nm]{\includegraphics[width=0.09\linewidth]{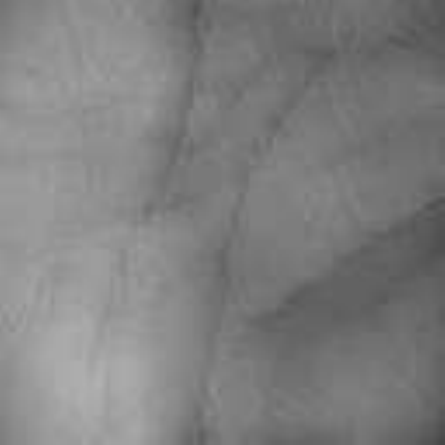}}
\subfigure[750nm]{\includegraphics[width=0.09\linewidth]{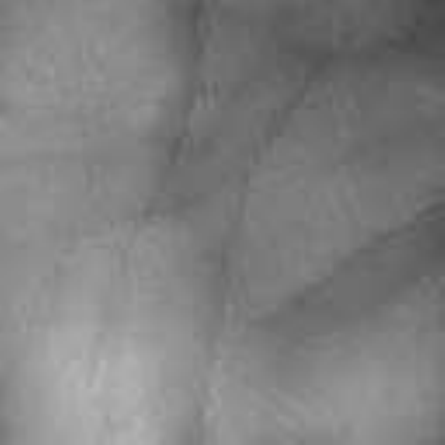}}
\subfigure[780nm]{\includegraphics[width=0.09\linewidth]{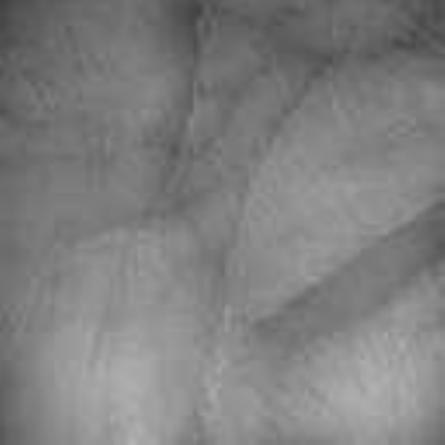}} \\ [9pt]
\subfigure[810nm]{\includegraphics[width=0.09\linewidth]{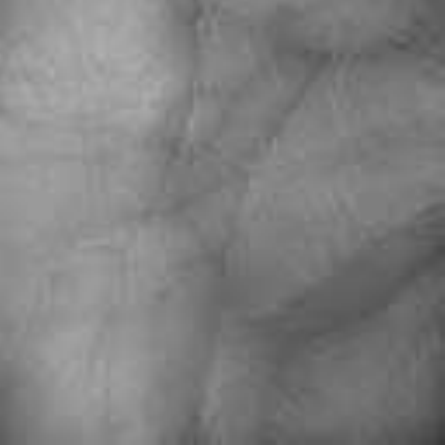}}
\subfigure[840nm]{\includegraphics[width=0.09\linewidth]{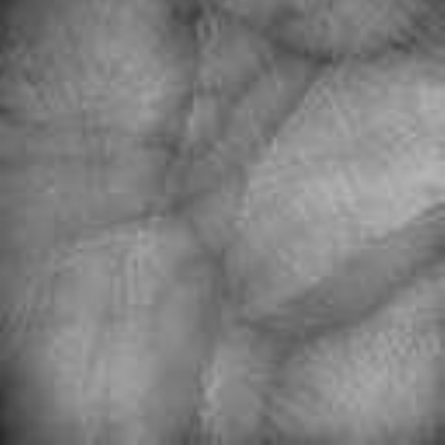}}
\subfigure[870nm]{\includegraphics[width=0.09\linewidth]{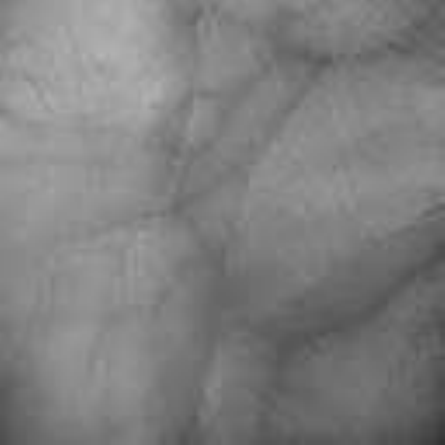}}
\subfigure[900nm]{\includegraphics[width=0.09\linewidth]{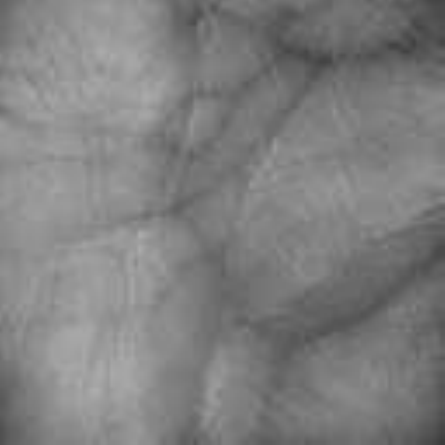}}
\subfigure[930nm]{\includegraphics[width=0.09\linewidth]{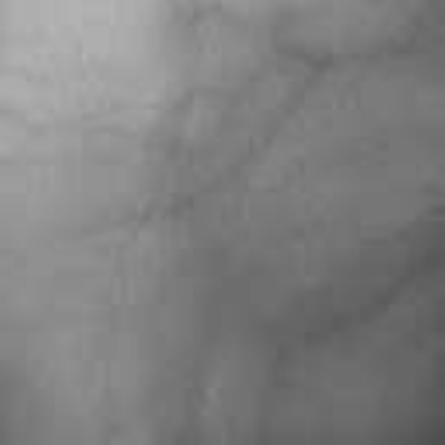}}
\subfigure[960nm]{\includegraphics[width=0.09\linewidth]{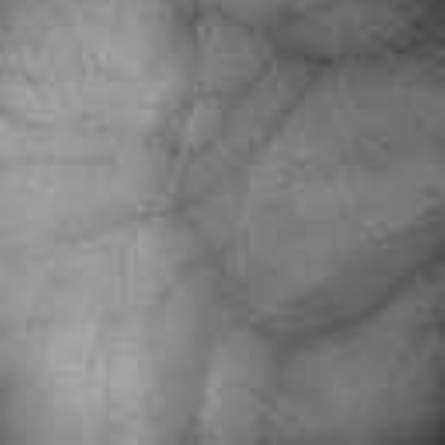}}
\subfigure[990nm]{\includegraphics[width=0.09\linewidth]{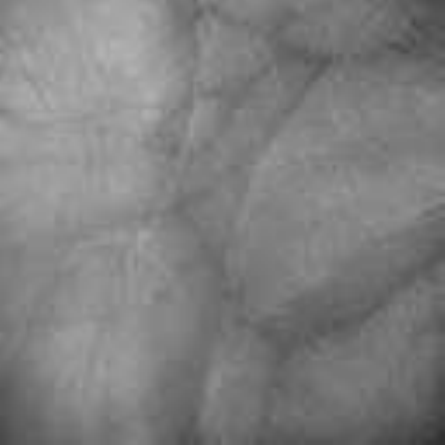}}
\subfigure[1020nm]{\includegraphics[width=0.09\linewidth]{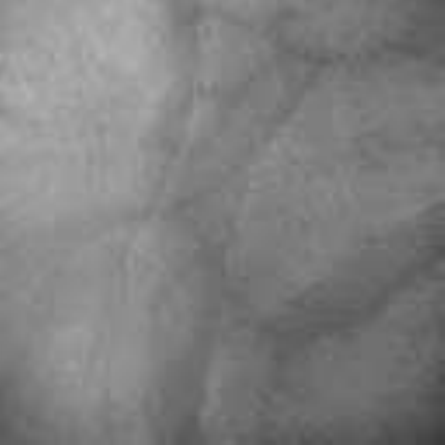}}
\subfigure[1050nm]{\includegraphics[width=0.09\linewidth]{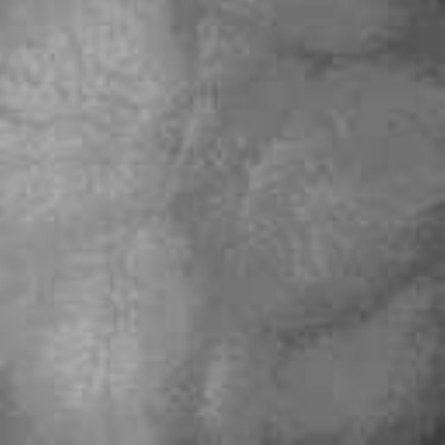}}
\subfigure[1080nm]{\includegraphics[width=0.09\linewidth]{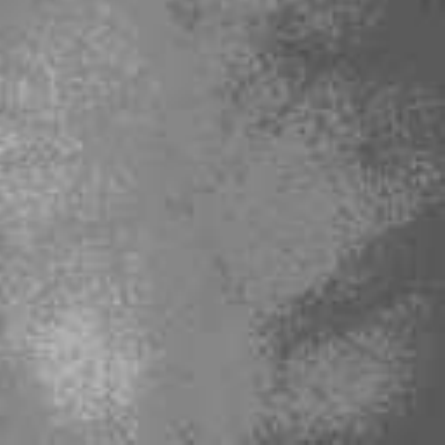}}
\caption[Examples of palmprint features in multiple bands]{Examples of palmprint features in multiple bands of a hyperspectral image (510 to 1080nm with 30nm steps. The first and last few bands are highly corrupted with system noise and barely capture features of the palm.}
\label{fig:palms}
\end{figure}

\section{Related Work}
\label{sec:work}

In the past decade, biometrics such as the iris~\cite{boyce2006multispectral}, face~\cite{di2010studies,pan2003face} and fingerprint~\cite{rowe2007multispectral} have been investigated using multispectral images for improved accuracy. Recently, there has been an increased interest in multispectral palmprint recognition~\cite{meraoumia2013efficient,mittal2012rank,luo2012multispectral,mistani2011multispectral,kekre2011palmprint} \cite{tahmasebi2011novel,zhang2011online,kisku2010multispectral,guo2010feature,zhang2010online} \cite{xu2010multispectral,han2008multispectral,hao2008multispectral,hao2007comparative}. In general, palmprint recognition approaches can be categorized into line-like feature detectors, subspace learning methods and texture based coding techniques~\cite{kong2009survey}. These three categories are not mutually exclusive and their combinations are also possible. Line detection based approaches commonly extract palm lines using edge detectors. Huang et al.~\cite{huang2008palmprint} proposed a palmprint verification technique based on principal lines. The principal palm lines were extracted using a modified finite Radon transform and a binary edge map was used for representation. However, recognition based solely on palm lines proved insufficient due to their sparse nature and the possibility of different individuals to have highly similar palm lines~\cite{zhang2011online}. Although, line detection can extract palm lines effectively, it may not be equally useful for the extraction of palm veins due to their low contrast and broad structure.

A subspace projection captures the global characteristics of a palm by projecting to the most varying (in case of PCA) or the most discriminative (in case of LDA) dimensions. Subspace projection methods include eigenpalm~\cite{lu2003palmprint}, which globally projects palm images to a PCA space, or fisherpalm~\cite{wu2003fisherpalms} which projects to an LDA space. However, the finer local details are not well preserved and modeled by such subspace projections. Wang et al.~\cite{wang2007fusion} fused palmprint and palmvein images and proposed the \emph{Laplacianpalm} representation. Unlike the eigenpalm~\cite{lu2003palmprint} or the fisherpalm~\cite{wu2003fisherpalms}, the \emph{Laplacianpalm} representation attempts to preserve the local characteristics as well while projecting onto a subspace. Xu et al.~\cite{xu2010multispectral} represented multispectral palmprint images as quaternion and applied quaternion PCA to extract features. A nearest neighbor classifier was used for recognition using quaternion vectors. The quaternion model did not prove useful for representing multispectral palm images and demonstrated low recognition accuracy compared to the state-of-the-art techniques. The main reason is that subspaces learned from misaligned palms are unlikely to generate accurate representation of each identity.

Orientation codes extract and encode the orientation of lines and have shown state-of-the-art performance in palmprint recognition~\cite{zhang2012comparative}. Examples of orientation codes include the Competitive Code (CompCode)~\cite{kong2004competitive}, the Ordinal Code (OrdCode)~\cite{sun2005ordinal} and the Derivative of Gaussian Code (DoGCode)~\cite{wu2006palmprint}. In the generic form of orientation coding, the response of a palm to a bank of directional filters is computed such that the resulting directional subbands correspond to specific orientations of line. Then, the dominant orientation index from the directional subbands is extracted at each point to form the orientation code. CompCode~\cite{kong2004competitive} employs a directional bank of Gabor filters to extract the orientation of palm lines. The orientation is encoded into a binary code and matched directly using the Hamming distance. The OrdCode~\cite{sun2005ordinal} emphasizes the ordinal relationship of lines by comparing mutually orthogonal filter pairs to extract the feature orientation at a point. The DoGCode~\cite{wu2006palmprint} is a compact representation which only uses vertical and horizontal gaussian derivative filters to extract feature orientation. Orientation codes can be binarized for efficient storage and fast matching unlike other representations which require floating point data storage and computations. Another important aspect of multispectral palmprints is the combination of different bands which has been investigated with data, feature, score and rank-level fusion.

\clearpage

Multispectral palmprints are fused at image level generally using multi-resolution transforms, such as Wavelet and Curvelet. Han et al.~\cite{han2008multispectral} used a three level Wavelet fusion strategy for combining multispectral palmprint images. After fusion, CompCode was used for feature extraction and matching. Their results showed that the Wavelet fusion of multispectral palm images is only useful for blurred source images. Hao et al.~\cite{hao2008multispectral} used various image fusion techniques and the OLOF representation for multispectral palmprint recognition. The best recognition performance was achieved when the Curvelet transform was used for band fusion. Kisku et al.~\cite{kisku2010multispectral} proposed Wavelet based band fusion and Gabor Wavelet feature representation for multispectral palm images. To reduce the dimensionality, feature selection was performed using the Ant Colony Optimization (ACO) algorithm~\cite{dorigo1997ant} and classified by normalized correlation and SVM. However, the Gabor Wavelet based band fusion could not improve palmprint recognition performance compared to the Curvelet fusion with OLOF~\cite{hao2008multispectral}. Kekre et al.~\cite{kekre2011palmprint} proposed a hybrid transform by kronecker product of DCT and Walsh transforms, which better describes the energy in the local regions of a multispectral palm. A subset of the regions was selected by comparison with the mean energy map and stored as features for matching. It is observed that fusion of multispectral palmprints is cumbersome due to multimodal nature of palm. A single fused palm image is a compromise of the wealth of complementary information present in different bands, which results in below par recognition performance.

Fusion of spectral bands has been demonstrated at feature level. Luo et al.~\cite{luo2012multispectral} used feature level band fusion for multispectral palmprints. Specifically, a modification of CompCode was combined with the original CompCode and features from the pair of less correlated bands were fused. The results indicated an improvement over image level fusion and were comparable to match-score level fusion. Zhou and Kumar~\cite{zhou2010contactless} encoded palm vein features by enhancement of vascular patterns and using the Hessian phase information. They showed that a combination of various feature representations can be used for achieving improved performance based on palmvein images. Mittal et al.~\cite{mittal2012rank} investigated fuzzy and sigmoid features for multispectral palmprints and a rank-level fusion of scores using various strategies. It was observed that a nonlinear fusion function at rank-level was effective for improved recognition performance. Tahmasebi et al.~\cite{tahmasebi2011novel} used Gabor kernels for feature extraction from multispectral palmprints and a rank-level fusion scheme for fusing the outputs from individual band comparisons. One drawback of rank-level fusion is that it assigns fixed weights to the rank outputs of spectral bands, which results in sub-optimal performance.

Zhang et al.~\cite{zhang2010online} compared palmprint matching using individual bands and reported that the red band performed better than the near infrared, blue and green bands. A score level fusion of these bands achieved superior performance compared to any single band. Another joint palmline and palmvein approach for multispectral palmprint recognition was proposed by Zhang et al.~\cite{zhang2011online}. They designed separate feature extraction methodologies for palm line and palm vein and later used score level fusion for computing the final match. The approach yielded promising results, albeit at the cost of increased complexity. A comparison of different fusion strategies indicates that a score level fusion of multispectral bands is promising and most effective compared to a data, feature or rank-level fusion.

It is worth mentioning that a simple extension of the existing palmprint representations to multispectral palmprints may not fully preserve the features that appear in different bands. For example, the representation may not be able to extract both lines and vein features from different bands. Moreover, existing research suggests that a score level fusion of multispectral bands is promising compared to a data level fusion using multi-resolution transforms.

In this chapter, we propose Contour Code, a novel orientation and binary hash table based encoding for robust and efficient multispectral palmprint recognition. Unlike existing orientation codes, which apply a directional filter bank directly to a palm image, we propose a two stage filtering approach to extract only the robust directional features. We develop a unified methodology for the extraction of multispectral (the line and vein) features. The Contour Code is binarized into an efficient hash table structure that only requires indexing and summation operations for simultaneous one-to-many matching with an embedded score level fusion of multiple bands. The Contour Code is compared to three existing methods~\cite{kong2004competitive}\cite{sun2005ordinal}\cite{wu2006palmprint} in palmprint recognition.


\section{Region of Interest Extraction}
\label{sec:roi_ext}

The extraction of a reliable ROI from contact free palmprint images is a major challenge and involves a series of steps. To extract ROI from a palmprint, it is necessary to define some reference \emph{landmarks} from within the palm which can normalize its relative movement. The landmark detection must be accurate and repeatable to ensure that the same ROI is extracted from different planar views of a palm. Among the features commonly used in hand geometry analysis, the valleys between the fingers are a suitable choice for landmarks due to their invariance to hand movement.

\subsection{Preprocessing}

The input band of a hand image is first thresholded to get a binary image (see  Figure~\ref{fig:hand_gs} and~\ref{fig:hand_tr}). Smaller objects, not connected to the hand, that appear due to noise are removed using binary pixel connectivity. Morphological closing is carried out with a square structuring element to normalize the contour of the hand. Finally, any holes within the hand pixels are filled using binary hole filling. These operations ensure accurate and successful landmarks localization. The resulting preprocessed binary image $\mathbf{B}$ is shown in Figure~\ref{fig:hand_bw}.

\begin{figure}[h]
\begin{center}
\begin{minipage}[b]{0.86\linewidth}
\subfigure[]{\label{fig:hand_gs}\includegraphics[trim = 80pt 5pt 30pt 0pt, clip, width=0.315\linewidth]{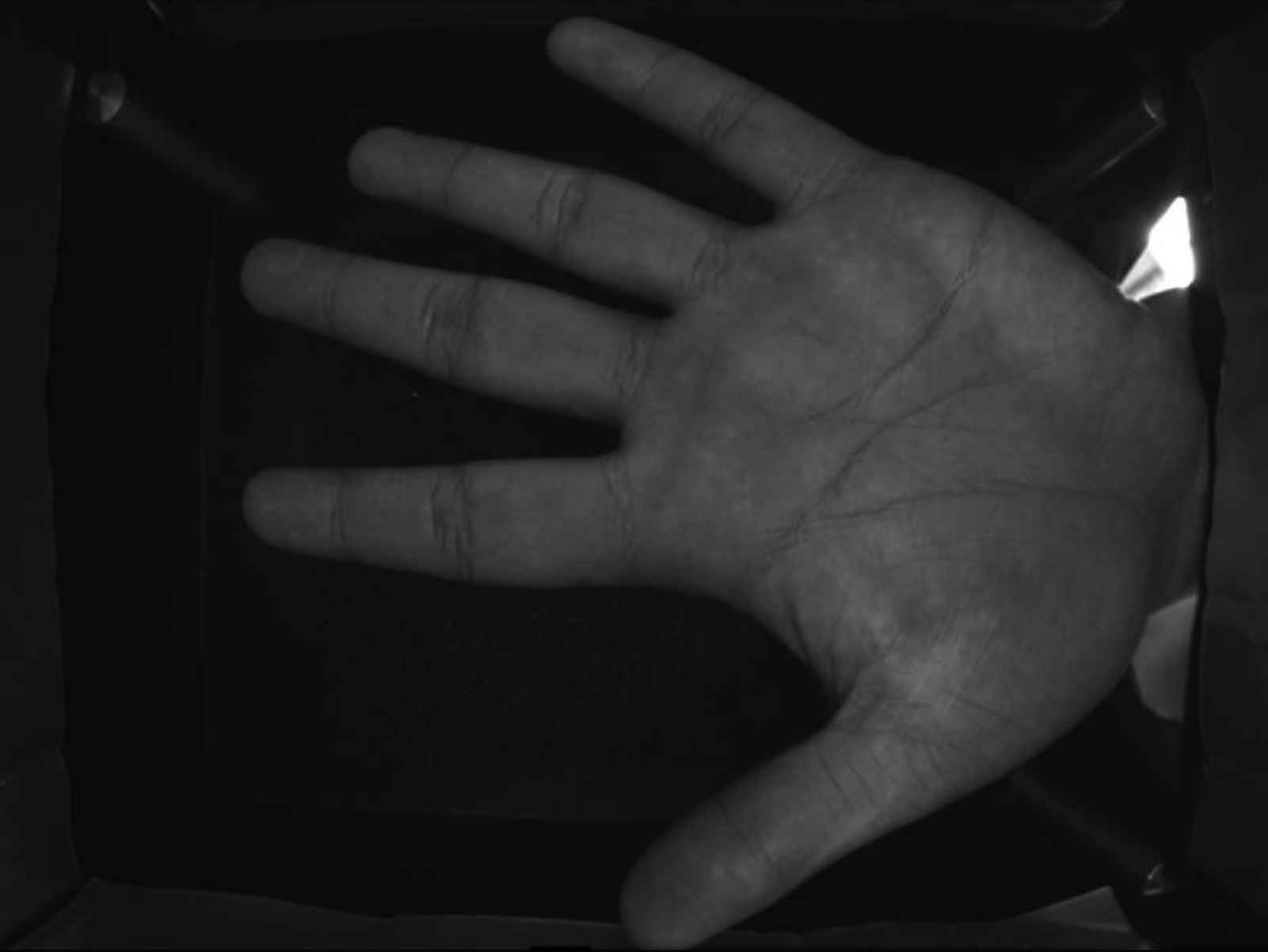}}\hspace{0.05pt}
\subfigure[]{\label{fig:hand_tr}\includegraphics[trim = 80pt 5pt 30pt 0pt, clip, width=0.315\linewidth]{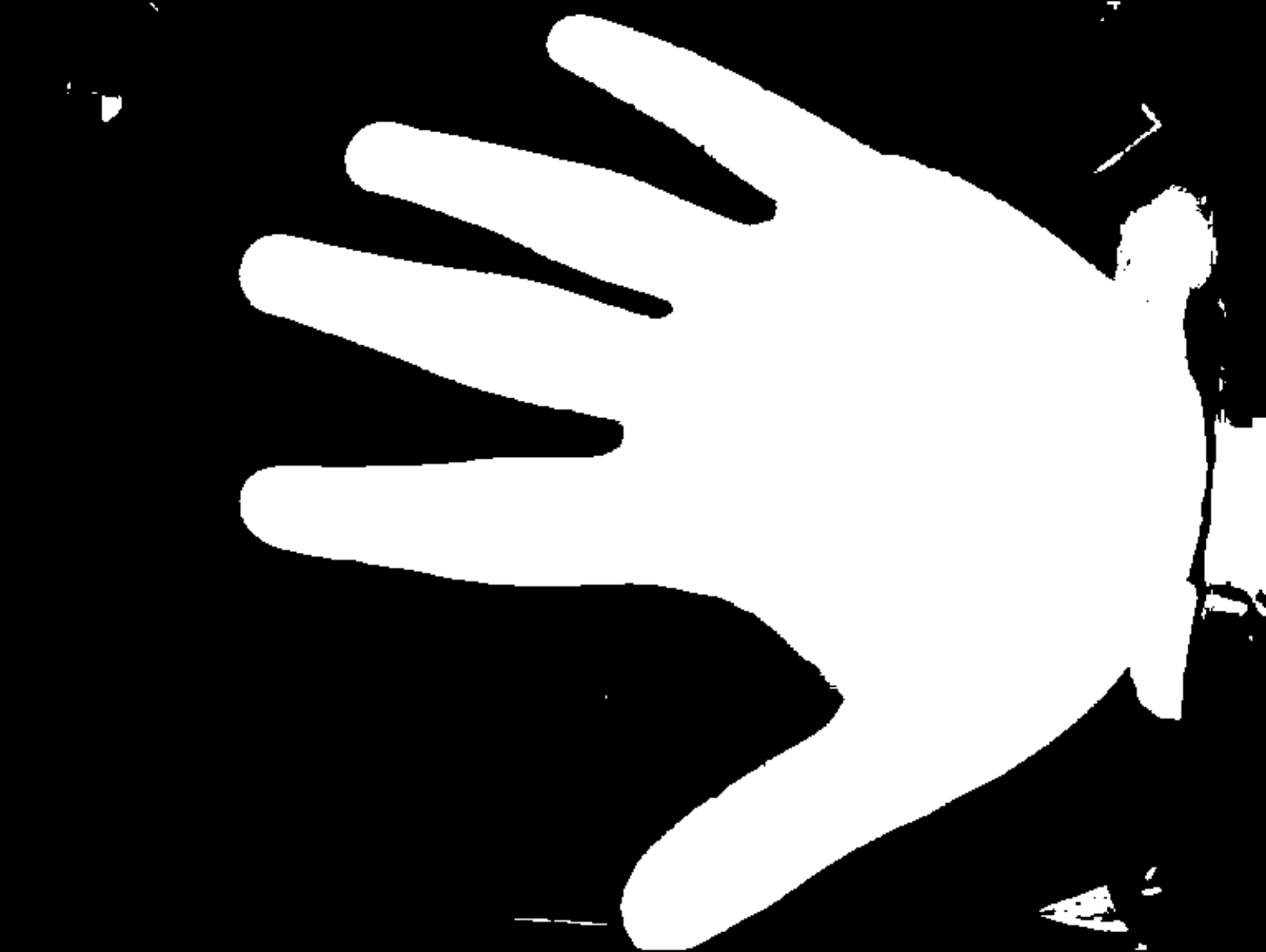}}\hspace{0.05pt}
\subfigure[]{\label{fig:hand_bw}\includegraphics[trim = 80pt 5pt 30pt 0pt, clip, width=0.315\linewidth]{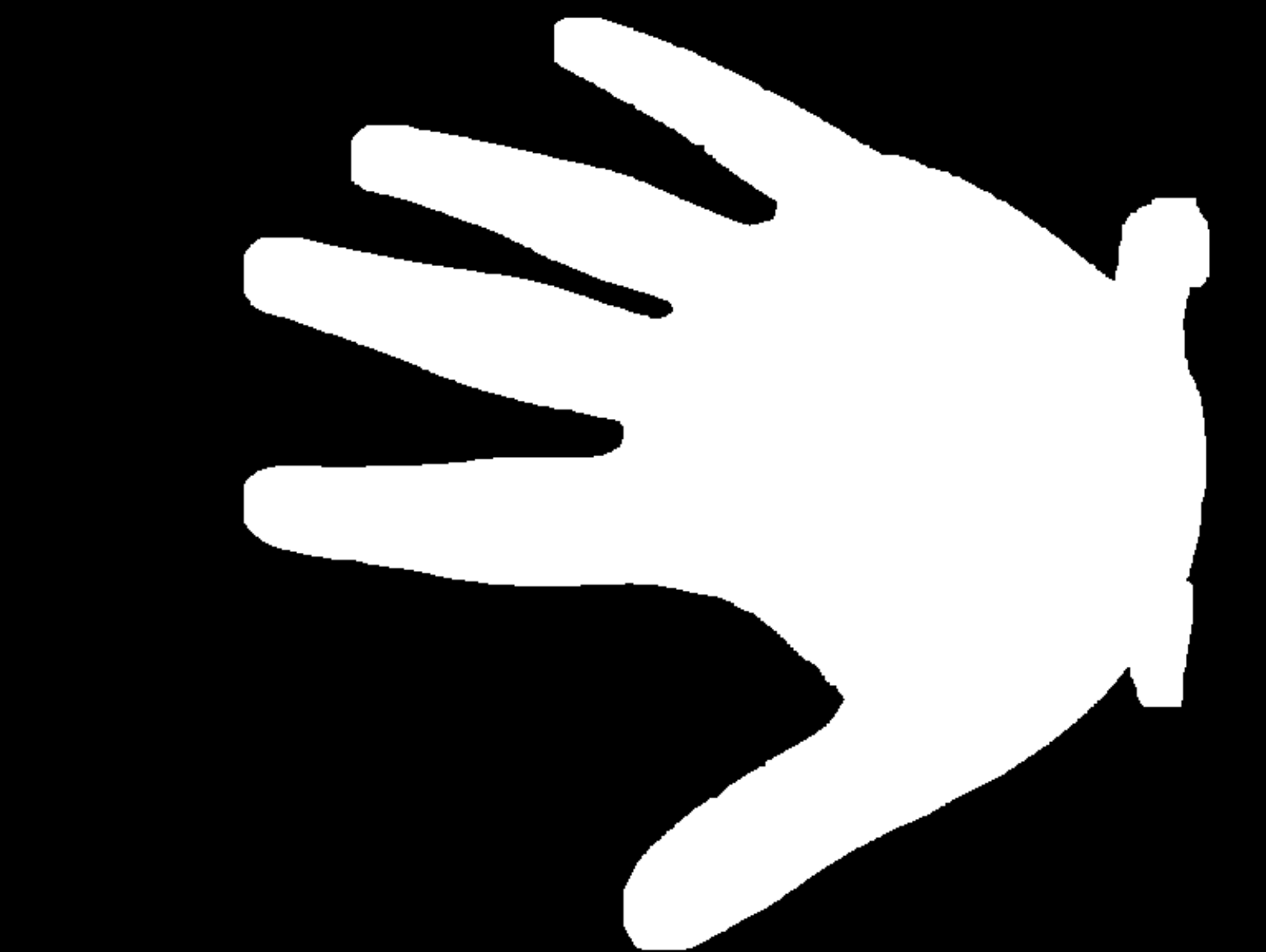}}\\
\end{minipage}
\end{center}
\caption[Hand image preprocessing]{Hand image preprocessing. (a) Input image. (b) Thresholded image with noise. (c) Preprocessed image with reduced noise.}
\label{fig:hand_pp}
\vspace{-10pt}
\end{figure}

\subsection{Localization of Landmarks}

Given $\mathbf{B}$, in which the foreground pixels correspond to the hand, the localization proceeds as follows. In a column wise search (Figure~\ref{fig:hand_mp}), the binary discontinuities, i.e. the edges of the fingers are identified. From the edges, the gaps between fingers are located in each column and the mid points of all the finger gaps are computed. Continuing inwards along the valley of the fingers, the column encountered next to the last column should contain hand pixels (Figure~\ref{fig:hand_kpc}). The search is terminated when four such valley closings are recorded. This column search succeeds when the hand rotation is within $\pm90^{\circ}$ in the image plane.

\begin{figure}[h]
\begin{center}
\begin{minipage}[b]{0.8\linewidth}
\subfigure[]{\label{fig:hand_mp}\includegraphics[trim = 80pt 5pt 20pt 0pt, clip, width=0.31\linewidth]{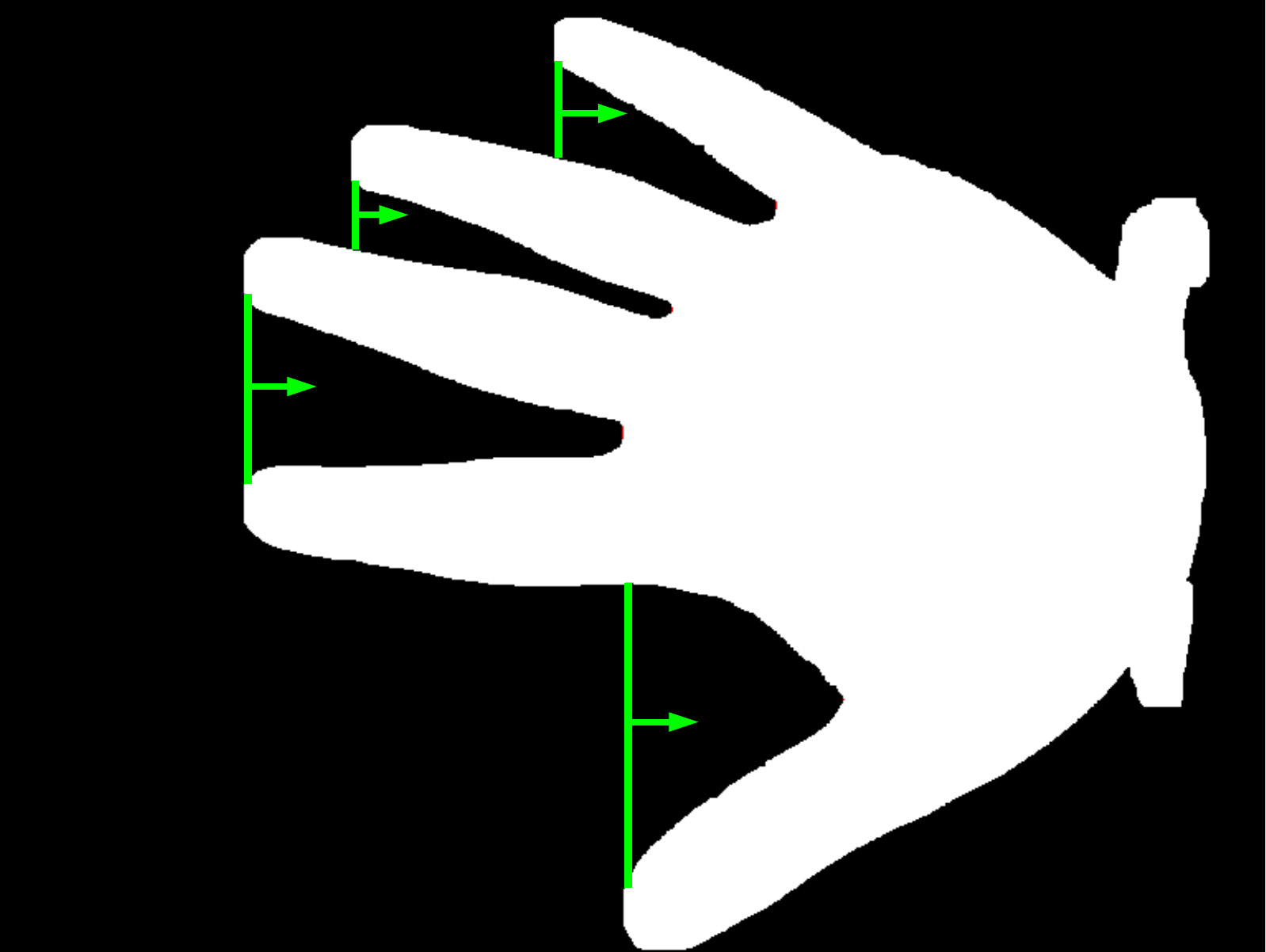}}\hspace{0.05pt}
\subfigure[]{\label{fig:hand_kpc}
\begin{minipage}[b]{0.16\linewidth}
\fbox{\includegraphics[trim = 264pt 264pt 177pt 70pt, clip, width=1\linewidth]{chapter_7/hand_kpc}}\\
\fbox{\includegraphics[trim = 222pt 230pt 215pt 106pt, clip, width=1\linewidth]{chapter_7/hand_kpc}}\\
\fbox{\includegraphics[trim = 209pt 183pt 233pt 151pt, clip, width=1\linewidth]{chapter_7/hand_kpc}}
\end{minipage}}\hspace{0.05pt}
\subfigure[]{\label{fig:hand_mid}\fbox{\includegraphics[trim = 80pt 130pt 150pt 0pt, clip, width=0.315\linewidth]{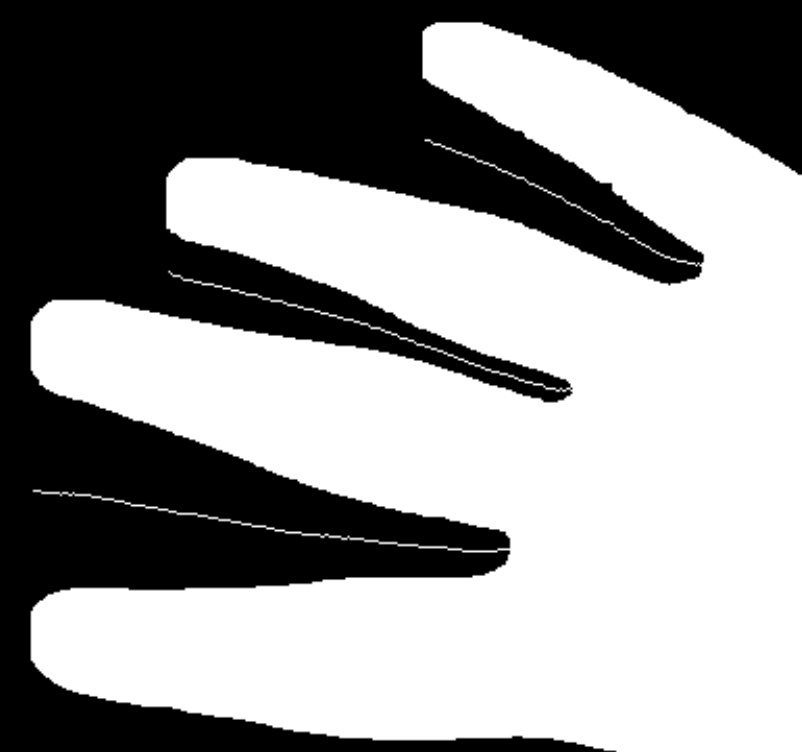}}}\hspace{0.05pt}
\subfigure[]{\label{fig:hand_kpf}
\begin{minipage}[b]{0.16\linewidth}
\fbox{\includegraphics[trim = 264pt 264pt 177pt 70pt, clip, width=1\linewidth]{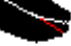}}\\
\fbox{\includegraphics[trim = 222pt 230pt 215pt 106pt, clip, width=1\linewidth]{chapter_7/hand_kpf}}\\
\fbox{\includegraphics[trim = 209pt 183pt 233pt 151pt, clip, width=1\linewidth]{chapter_7/hand_kpf}}
\end{minipage}}\hspace{0.05pt}
\end{minipage}
\end{center}
\caption[Landmarks localization in a hand image]{Landmarks localization in a hand image acquired with a non-contact sensor. (a) Initialization of search for mid points (green). (b) Search termination (red). (c) Located mid points. (d) Polynomial extrapolation of selected mid points (red) to find the landmarks.}
\label{fig:hand_lm}
\vspace{-5pt}
\end{figure}

The mid points corresponding to the four valleys are clustered and the one corresponding to the index-thumb valley is discarded. Due to the natural contour of the fingers, it can be observed that the mid points do not follow a linear trend towards the valley as shown in Figure~\ref{fig:hand_mid}. Therefore, a second order polynomial is fitted to the mid points of each valley excluding the last few points which tend to deviate from the path towards the landmark. The estimated polynomial is then used to predict the last few mid points (10\%) towards the valley as shown in Figure~\ref{fig:hand_kpf}. The final location of a landmark is the last encountered background pixel in the sequence of extrapolated points. The procedure is summarized in Algorithm~\ref{alg}.

\begin{algorithm}[h]  
\caption{Localization of Landmarks}          
\label{alg}                           
\begin{algorithmic}                    
\Require $\mathbf{B}$ \Comment{binary Image}
\Ensure $\mathbf{P} \in \mathbb{R}^{3\times 2}$ \Comment{landmark coordinates}
\State $\textbf{Initialize:}~ \mathcal{M} \gets \emptyset$ \Comment{set of mid point coordinates in $\mathbf{B}$}
\State $l\gets0, c\gets2$ \Comment{landmark counter, start search from column `2'}
\While{$l<4$} \Comment{maximum of four landmarks}
\State $\mathcal{I} \gets \{i | \quad B_{i,c} = 1\}$ \Comment{foreground pixel locations in column $\mathbf{C}$}
\State $\mathcal{D} \gets \{j | \quad \mathcal{I}^{j+1}-\mathcal{I}^j > 1\}$ \Comment{discontinuities in column $\mathbf{C}$}
\For{$u=1$ to $|\mathcal{D}|$}
\State $x_1 \gets \mathcal{I}^{\mathcal{D}^u}, x_2 \gets \mathcal{I}^{\mathcal{D}^{u+1}}, x \gets \frac{x_1+x_2}{2}$ \Comment {mid point location}
\State $\mathcal{M} \gets \mathcal{M} \cup (x,c)$
\If{$|\mathbf{B}_c^{\centerdot x_1+1:x_2-1} \vee \mathbf{B}_{c-1}^{\centerdot x_1+1:x_2-1}|=\sum \mathbf{B}_c^{\centerdot x_1+1:x_2-1}$}
\State $l\gets l+1$ \Comment{valley closed}
\EndIf
\EndFor
\State $c\gets c+1$
\EndWhile
\State $\mathcal{P} \gets$ \textsc{cluster}$(\mathcal{M},l)$ \Comment{group midpoints into $l$ clusters}
\For{$j=1$ to $3$}
\State $\zeta_j\gets$ \textsc{fitpoly}($\mathcal{P}_j^{\centerdot 90\%},2$) \Comment{fit $2^\textrm{nd}$ order polynomial to midpoints of $j^\textrm{th}$ landmark}
\State $\mathcal{P}_j^{\centerdot 10\%} \gets$ \textsc{evalpoly}($\mathcal{P}_j^{\centerdot 10\%},\zeta_j$) \Comment{extrapolate midpoints using $\zeta_j$}
\State $\mathbf{P}_j(x,y) \gets \underset{x,y}{\arg}(\underset{y}{\arg \max} I_{\mathcal{P}_j(x,y)}=0)$ \Comment{coordinates of the last background pixel}
\EndFor
\end{algorithmic}
\end{algorithm}

\begin{figure}[h]
\centering
\begin{minipage}[b]{0.8\linewidth}
\subfigure[]{\label{fig:roi_scale}\includegraphics[trim = 80pt 10pt 20pt 0pt, clip, width=0.49\linewidth]{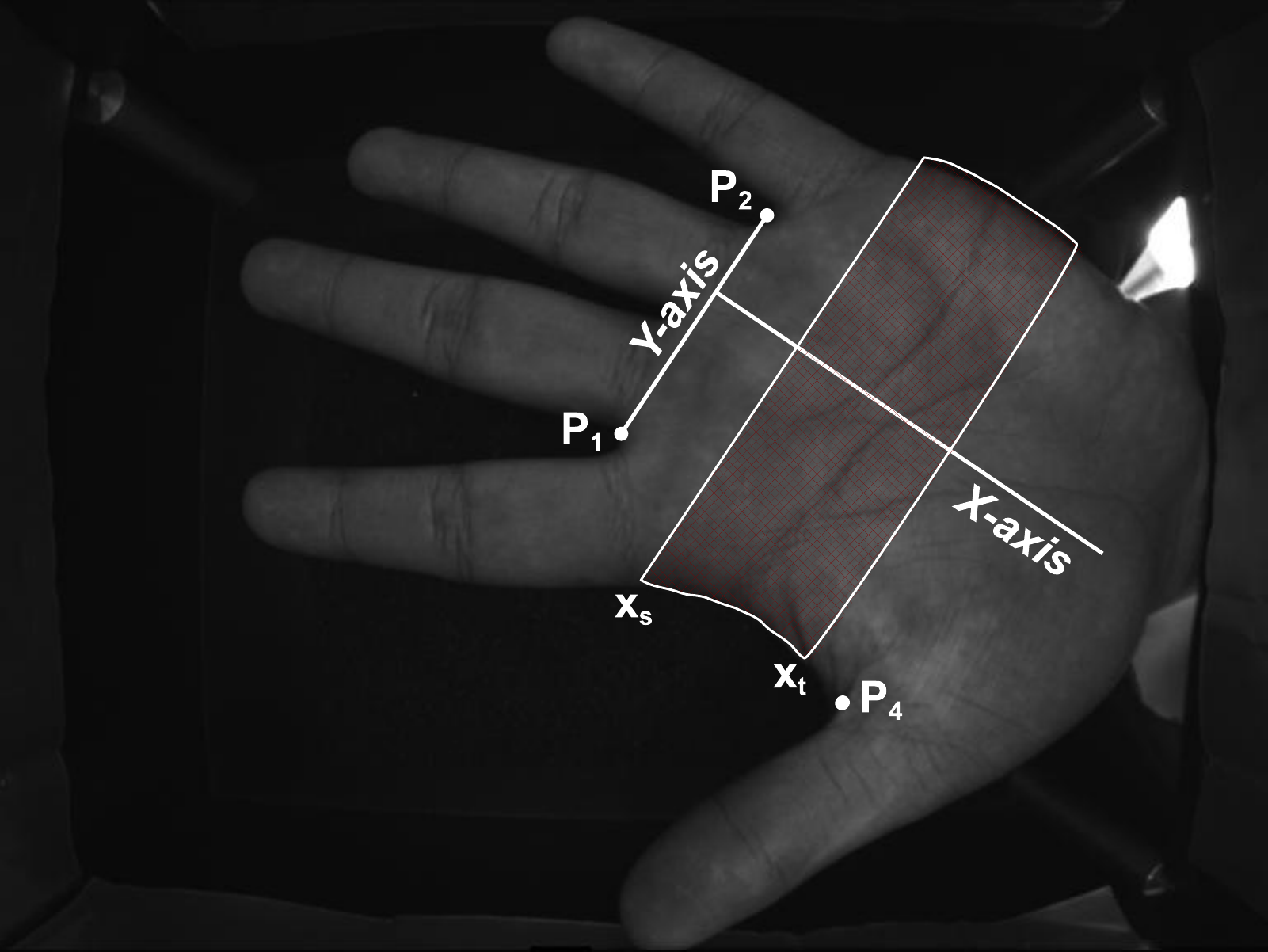}}\hspace{1pt}
\subfigure[]{\label{fig:roi_extract}\includegraphics[trim = 80pt 10pt 20pt 0pt, clip, width=0.49\linewidth]{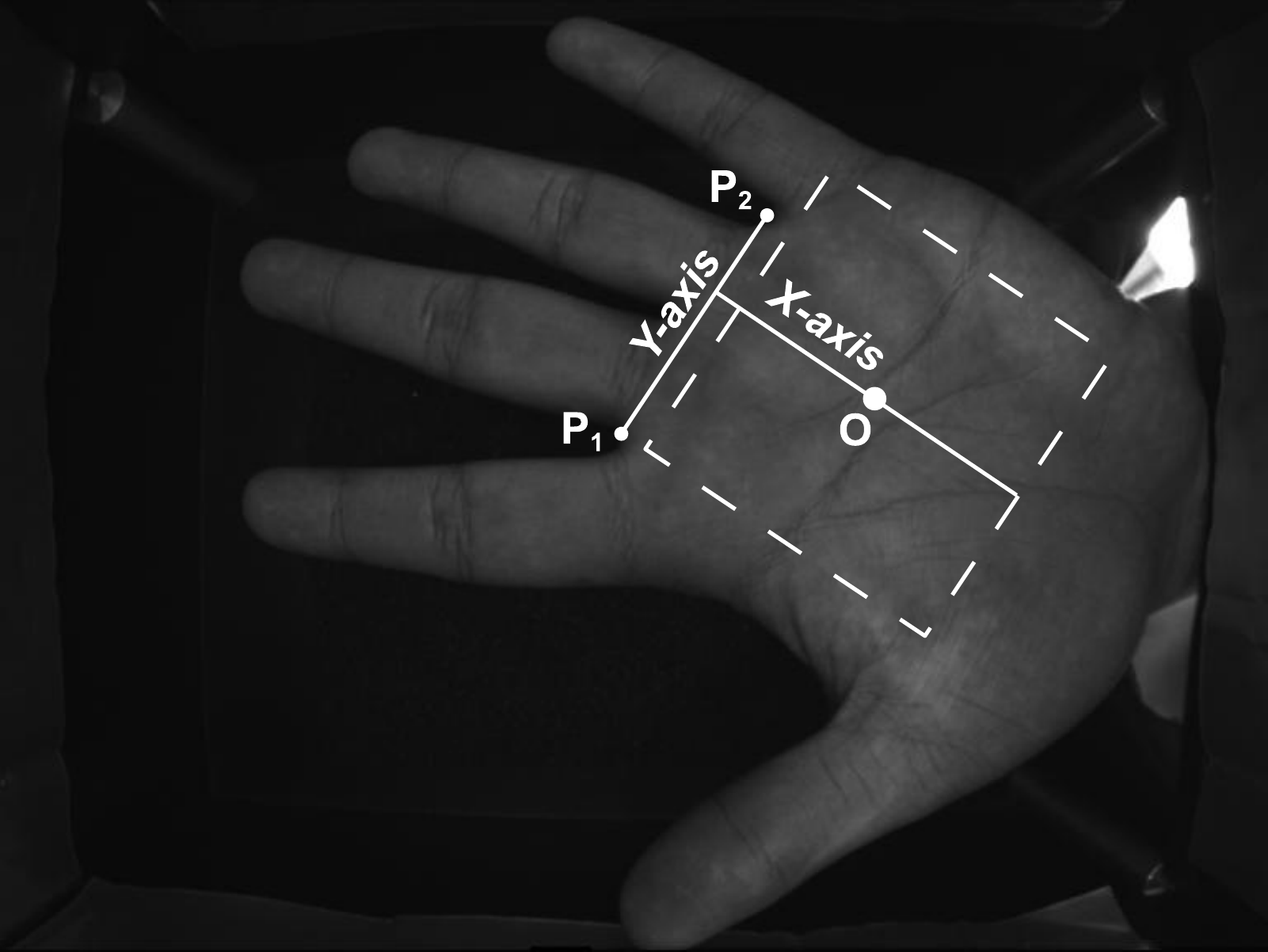}}
\end{minipage}
\caption[ROI extraction based on detected landmarks]{ROI extraction based on detected landmarks. The points $P_1,P_2$ define the \emph{Y-axis}. The \emph{X-axis} is orthogonal to the \emph{Y-axis} at 2/3 distance from $P_1$ to $P_2$. (a) Average palm width $\bar{w}$ is computed in the range $(x_{s},x_{t})$ (b) The distance of origin $O$ of the ROI is proportional to the average palm width found in (a).}
\label{fig:roi_all}
\vspace{-5pt}
\end{figure}

\subsection{ROI Extraction}

Using the three landmarks, an RST invariant ROI can be extracted (see Figure~\ref{fig:roi_all}). The landmarks ($P_1$,$P_2$) form a reference \emph{Y-axis}. We fix the \emph{X-axis} at two-thirds of the distance from $P_1$ to $P_2$ so that the ROI is centered over the palm. The automatically estimated palm width $\bar{w}$ serves as the scale identifier to extract a scale invariant ROI. To compute $\bar{w}$, we find the average width of the palm from point $x_{s}$ to $x_{t}$. To keep $x_{s}$ from being very close to the fingers and affected by their movement we set a safe value of $x_{s}=P_1(x)+(P_2(y)-P_1(y))/3$. Moreover, we set $x_{t}=P_{4}(x)-(P_2(y)-P_1(y))/12$. Note that the scale computation is not sensitive to the values of $x_{s}$ and $x_{t}$ as the palmwidth is averaged over the range $(x_{s},x_{t})$.

In our experiments, the ROI side length is scaled to $70\%$ of $\bar{w}$ and extracted from the input image using an affine transformation. The same region is extracted from the remaining bands of the multispectral palm.

\subsection{Inter-band Registration}
Since the bands of the multispectral images were sequentially acquired, minor hand movement can not be ruled out. Therefore, an inter-band registration of the ROIs based on the maximization of \emph{Mutual Information} is carried out. The approach has shown to be effective for registering multispectral palmprints~\cite{hao2008multispectral}. Since, there is negligible rotational and scale variation within the consecutive bands, we limit the registration search space to only $\pm2$ pixels translations along both dimensions. The registered ROI of each band is then downsampled to $32\times32$ pixels using bicubic interpolation. This resampling step has several advantages. First, it suppresses the inconsistent lines and noisy regions. Second, it reduces the storage requirement for the final Contour Code. Third, it significantly reduces the time required for the extraction of features and matching.

\section{Contour Code}
\label{sec:cnt_code}

\subsection{Multidirectional Feature Encoding}

The nonsubsampled contourlet transform (NSCT)~\cite{da2006nonsubsampled} is a multi-directional expansion with improved directional frequency localization properties and efficient implementation compared to its predecessor, the contourlet transform~\cite{do2005contourlet}. It has been effective in basic image processing applications such as image denoising and enhancement. Here, we exploit the directional frequency localization characteristics of the NSCT for multidirectional feature extraction from palmprint images.

An ROI of a band $\mathbf{I} \in \mathbb{R}^{m\times n}$ is first convolved with a nonsubsampled bandpass pyramidal filter $(\mathbf{F}_{p})$ which captures the details in the palm at a single scale as shown in Figure~\ref{fig:ContourCode}. This filtering operation allows only the robust information in a palm to be passed on to the subsequent directional decomposition stage.
\begin{equation}
\bm{\rho} = \mathbf{I} \ast \mathbf{F}_{p}~.
\end{equation}
The band pass filtered component $(\bm{\rho}) \in \mathbb{R}^{m \times n}$ of the input image is subsequently processed by a nonsubsampled directional filter bank $(\mathbf{F}_{d})$, comprising $2^{k}$ directional filters.
\begin{equation}
\bm{\Psi} = \bm{\rho} \ast \mathbf{F}_{d}^{i}~,
\end{equation}
where $\bm{\Psi} \in \mathbb{R}^{m \times n \times 2^k}$ is the set of directional subbands. Each directional subband covers an angular region of $\pi/2^{k}$ radians. For example, a third order directional filter bank ($k=3$) will result in $8$ directional filtered subbands. The combination of pyramidal and directional filters determine the capability to capture line like features. We perform a detailed experimental analysis of pyramidal-directional filters in Section~\ref{sec:filter}.

\begin{figure}[h]
\centering
\includegraphics[width=0.7\linewidth]{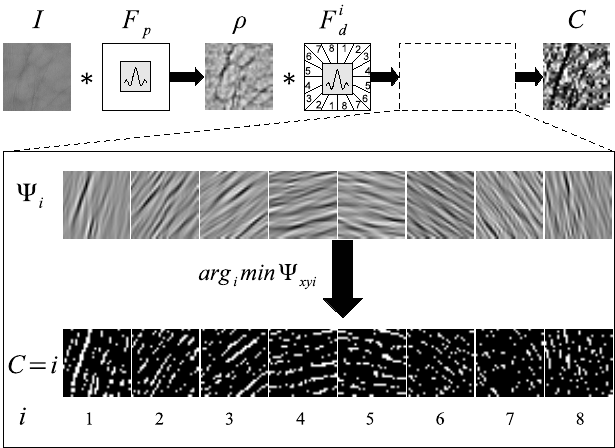}
\caption[Extraction of the Contour Code representation]{Extraction of the Contour Code representation. $\bm{\rho}$ is the pyramidal bandpass subband and $\bm{\Psi}_{i} \in \mathbb{R}^{m \times n}$ are the bandpass directional subbands. The images $\mathbf{C}=i$ represent the dominant points existing in the $i^{th}$ directional subband. The intensities in $\mathbf{C}$ correspond to $i=1,2,\ldots,8$ (from dark to bright).}
\label{fig:ContourCode}
\end{figure}

Generally, both the line and vein patterns appear as dark intensities in a palm and correspond to a negative filter response. The orientation of a feature is determined by the coefficient corresponding to the minimum peak response among all directional subbands at a specific point. Let $\Psi_{xyi}$ denote the coefficient at point $(x,y)$ in the $i^{th}$ directional subband where $i=1,2,3,\ldots,2^{k}$. We define a rule similar to the competitive rule~\cite{kong2004competitive}, to encode the dominant orientation at each $(x,y)$.
\begin{equation}
C_{xy}=\underset{i}{\arg \min}\,(\Psi_{xyi})~,
\end{equation}
where $\mathbf{C}$ is the Contour Code representation of $\mathbf{I}$. Similarly, $\mathbf{C}$ is computed for all bands of the multispectral image of a palm. An example procedure for a single band of a palm is shown in Figure~\ref{fig:ContourCode}.

\begin{landscape}
\begin{figure}[h]
\centering
\includegraphics[width=1\linewidth]{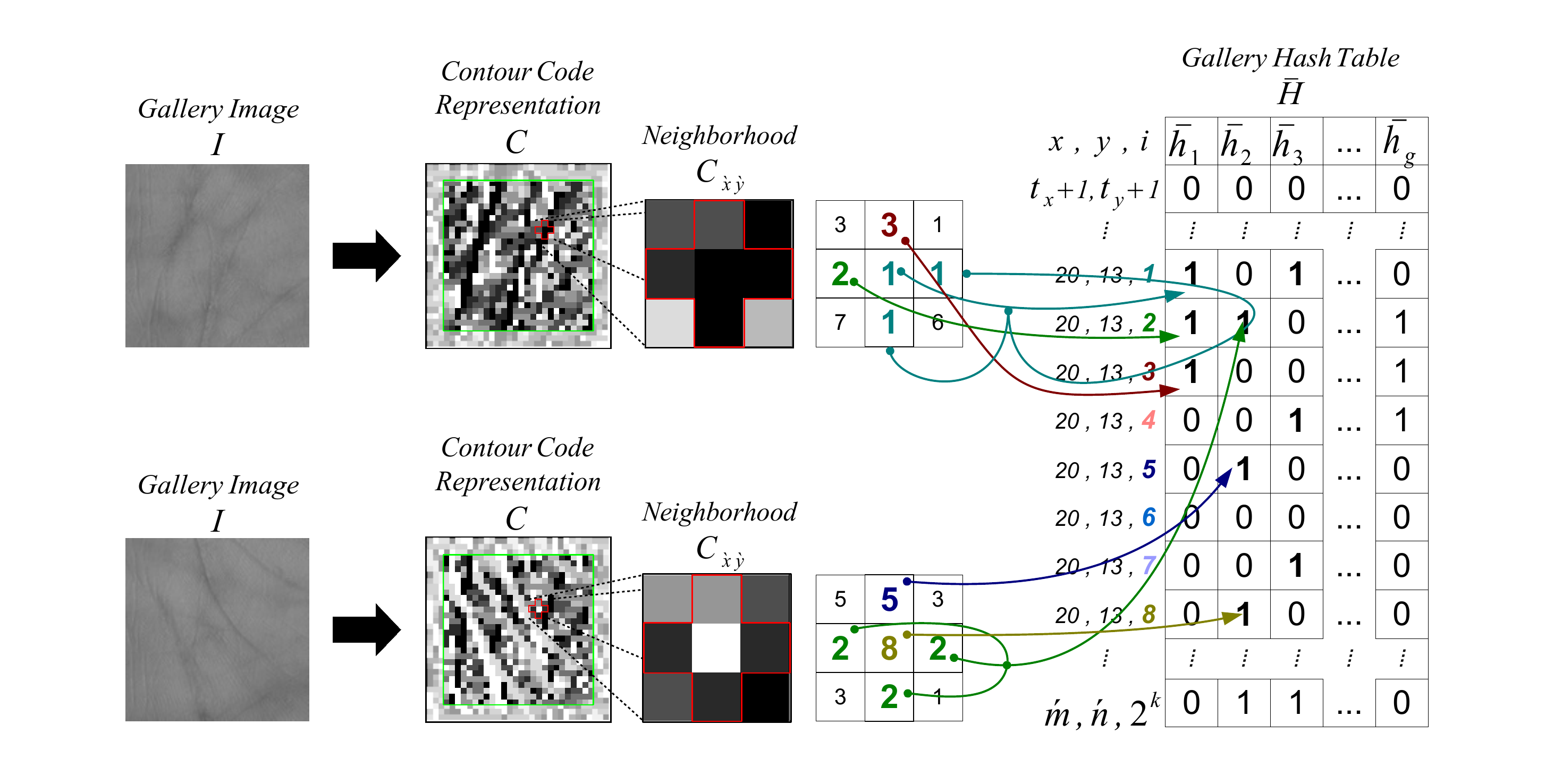}
\caption[Illustration of binary hash table encoding]{Illustration of binary hash table encoding. A gallery image $\mathbf{I}$ is first converted to its Contour Code representation $\mathbf{C}$. We show how a single point $C_{\acute{x}\acute{y}}$ is encoded into the binary hash table based on its \emph{z-connected} neighborhood (z=4 in this example). For $\bar{\mathbf{h}}_{1}$, bins 1,2 and 3 are encoded as 1 and the rest as 0. Whereas for $\bar{\mathbf{h}}_{2}$, bins 2, 5 and 8 are encoded as 1 and the remaining are set as 0.}
\label{fig:hash_pop}
\end{figure}
\end{landscape}

\subsection{Binary Hash Table Encoding}

A code with $2^{k}$ orientations requires a minimum of $k$ bits for encoding. However, we binarize the Contour Code using $2^{k}$ bits to take advantage of a fast binary code matching scheme. Unlike other methods which use the Angular distance~\cite{kong2004competitive} or the Hamming distance~\cite{hao2008multispectral}, we propose an efficient binary hash table based Contour Code matching. Each column of the hash table refers to a palm's binary hash vector derived from its Contour Code representation. Within a column, each hash location $(x,y)$ has $2^{k}$ bins corresponding to each orientation. We define a hash function so that each point can be mapped to the corresponding location and bin in the hash table. For a pixel $(x,y)$, the hash function assigns it to the $i^{th}$ bin according to
\begin{equation}
\label{eq:hash}
H^{xyi} =
\begin{dcases}
1, & i=C_{xy}\\
0, & \textrm{otherwise}
\end{dcases}
\end{equation}
where $\mathbf{H}$ is the binarized form of a Contour Code representation $\mathbf{C}$.

Since hands are naturally non-rigid, it is not possible to achieve a perfect 1-1 correspondences between all the orientations of two palm images taken at different instances of time. It is, therefore, intuitive to assign multiple orientations to a hash location $(x,y)$ based on its neighborhood. Therefore, the hash function is blurred so that it assigns for each hash bin in $\bar{\mathbf{H}}$ with all the orientations at $(x,y)$ and its neighbors.
\begin{equation}
\bar{H}^{xyi} =
\begin{dcases}
1, & i \in C_{\acute{x}\acute{y}}\\
0, & \textrm{otherwise}
\end{dcases}
\end{equation}
where ($\acute{x},\acute{y}$) is the set of $(x,y)$ and its \emph{z-connected} neighbors. The extent of the blur neighborhood $(z)$, determines the flexibility/rigidity of matching. Less blurring will result in a small number of points matched, however, with high confidence. On the other hand, too much blur will result in a large number of points matched but with low confidence.

\begin{figure}[h]
\centering
\subfigure[]{\label{fig:blur-no}\includegraphics[width=0.49\linewidth]{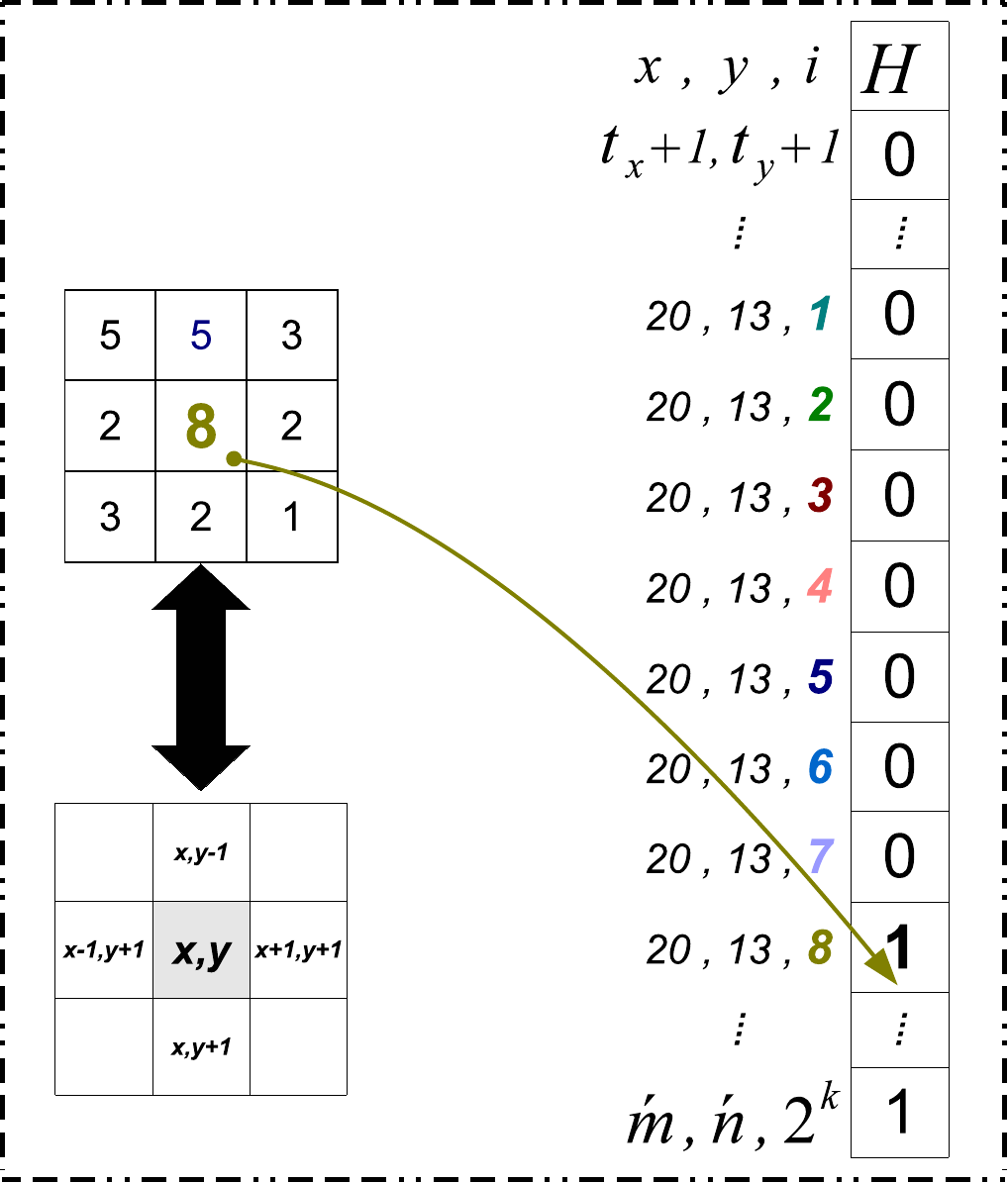}}\hspace{1pt}
\subfigure[]{\label{fig:blur-plus}\includegraphics[width=0.49\linewidth]{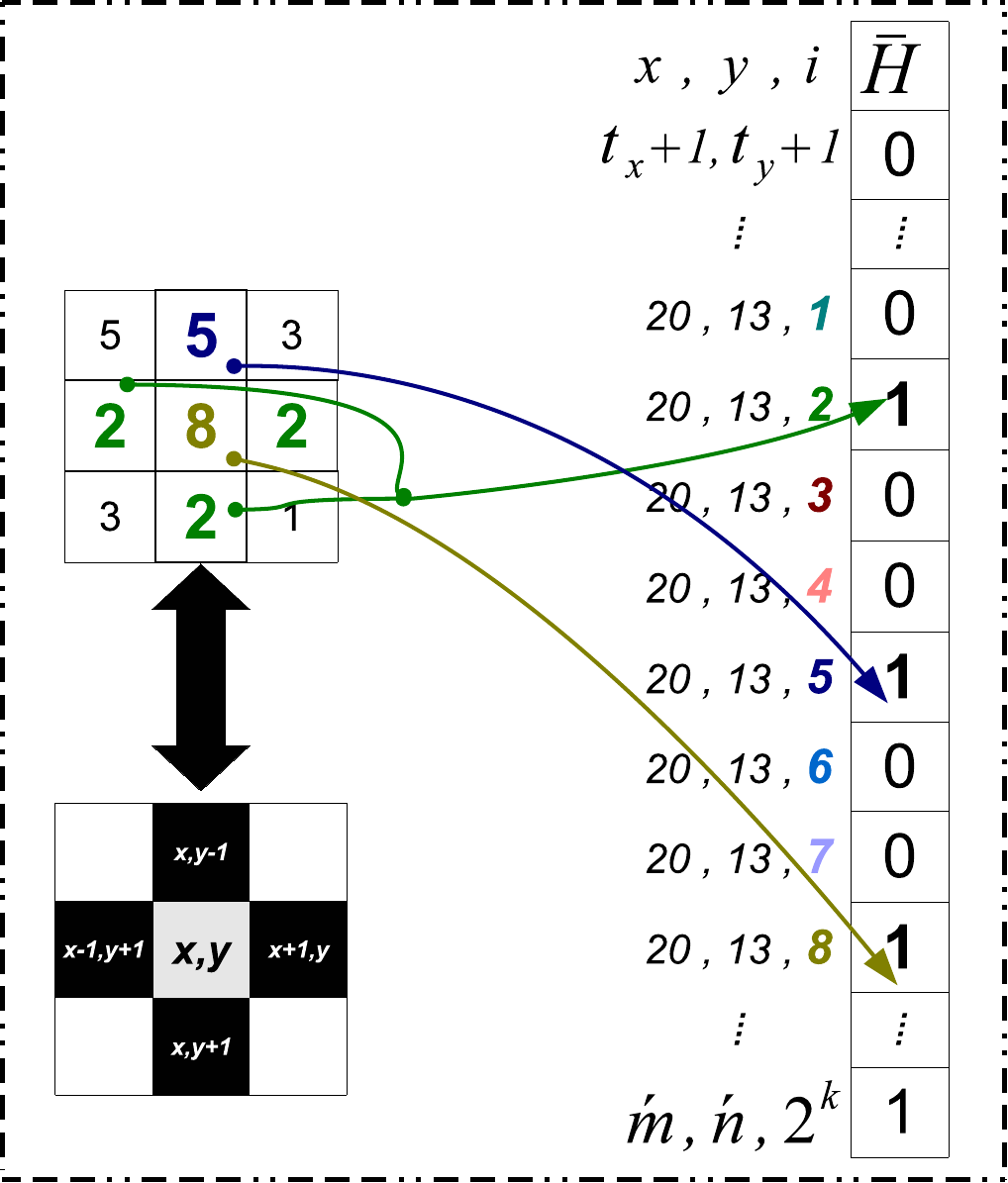}}
\caption[Binary hash table encoding different forms of blurring]{Binary hash table encoding with (a) no blurring, (b) a \emph{4-connected} neighborhood blurring.}
\label{fig:blur-eg}
\end{figure}

Since, the hash table blurring depends on a certain neighborhood as opposed to a single pixel, it robustly captures crossover line orientations. A detailed example of Contour Code binarization using the blurred hash function is given in Figure~\ref{fig:hash_pop}. Figure~\ref{fig:blur-eg} illustrates hash table encoding without blurring and with a {\em 4-connected} neighborhood blurring. The discussion of an appropriate blur neighborhood is presented later in Section~\ref{sec:params}.

\begin{figure}[!h]
\centering
\includegraphics[width=1\linewidth]{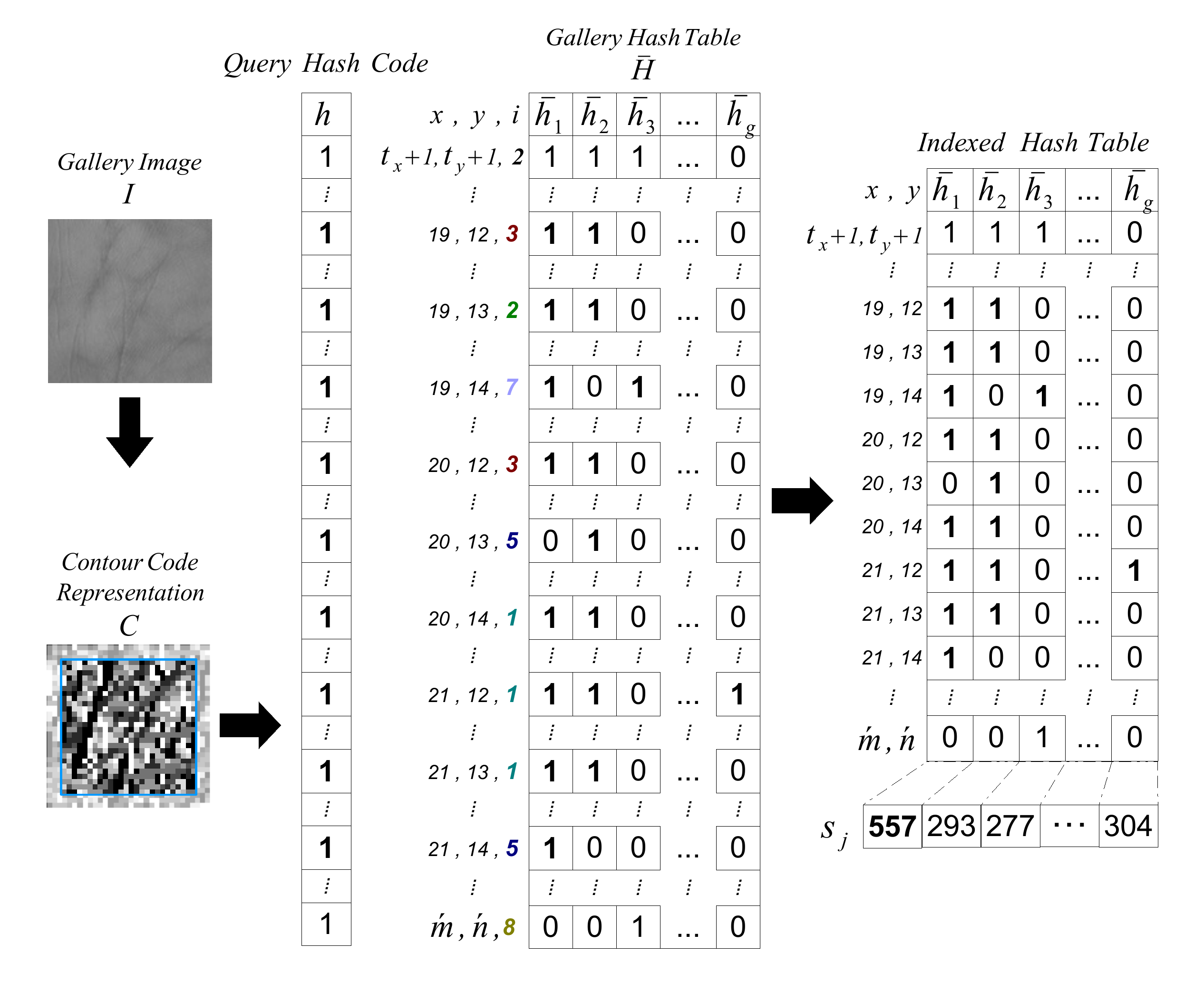}
\caption[Illustration of the Contour Code matching]{Illustration of the Contour Code matching. A query image is first transformed into its binary encoding without blurring to get $\mathbf{h}$, which is subsequently used to index the rows of the gallery hash table $\bar{\mathbf{H}}$. The columns of indexed hash table are summed up to obtain the match scores with the gallery hash table. In the above example, the query image is best matched to $\bar{\mathbf{h}}_{1}$ as it has more $1$s in the indexed hash table resulting in $s_{1}$ to be the maximum match score.}
\label{fig:hash_mtch}
\end{figure}

\subsection{Matching}

The binary hash table facilitates simultaneous one-to-many matching for palmprint identification. Figure~\ref{fig:hash_mtch} illustrates the process. The Contour Code representation $\mathbf{C}$ of a query image is first converted to the binary form $\mathbf{h}$ using equation~\eqref{eq:hash}. No blurring is required now, since it has already been performed offline on all the gallery images. The match scores $(\mathbf{s} \in \mathbb{R}^g)$ between query hash code $\mathbf{h}\in \mathbb{R}^{2^kmn}$ and gallery hash table $\bar{\mathbf{H}} \in \mathbb{R}^{2^kmn \times g}$, is computed as
\begin{align}
\label{eq:mtch}
\mathcal{X} &= \{xyi | \quad \underset{xyi}{\arg} H^{xyi}=1\}\\
s_j&=\|\bar{\mathbf{H}}_j^{\mathcal{X}}\|_0
\end{align}
where ${\|.\|}_0$ is the $\ell_{0}$-norm which is the number of non-zero hash entries in $\bar{\mathbf{H}}$ for all bin locations where $H^{xyi}=1$. The hash table, after indexing, produces a relatively sparse binary matrix which can be efficiently summed. Since, the $\ell_{0}$-norm of a binary vector is equivalent to the summation of all the vector elements, equation \eqref{eq:mtch} can be rewritten as
\begin{equation}
s_j=\sum \{\bar{\mathbf{H}}_j^{\mathcal{X}}\}~.
\end{equation}

\subsubsection{Translated Matches}
Apart from the blurring, which caters for minor orientation location errors within an ROI, during matching, the query image is shifted $\pm t_{x}$ pixels horizontally and $\pm t_{y}$ pixels vertically to cater for the global misalignment between different ROIs. The highest score among all translations is considered as the final match. The class of a query palm is determined by the gallery image $n$ corresponding to the best match.
\begin{equation}
\textrm{class}=\underset{j}{\arg \max} (S_{xyj})~,
\end{equation}
where $\mathbf{S} \in \mathbb{R}^{g\times 2t_x+1 \times 2t_y+1}$ is the matching score matrix of a query image with all gallery images and under all possible translations.

We present two variants of matching and report results for both in Section~\ref{sec:exp}. When the bands are translated in synchronization, it is referred to as \emph{Synchronous Translation Matching} (denoted by {ContCode-STM}) and when the bands are independently translated to find the best possible match, it is referred to as \emph{Asynchronous Translation Matching} (denoted by {ContCode-ATM}).

Due to translated matching, the overlapping region of two matched images reduces to ($\acute{m}=m-2t_{x}$,$\acute{n}=n-2t_{y}$) pixels. Therefore, only the central $\acute{m}\times\acute{n}$ region of a Contour Code is required to encode in the hash table, further reducing the storage requirement. Consequently during matching, a shifting window of $\acute{m}\times\acute{n}$ of the query image is matched with the hash table. We selected $t_{x},t_{y}=3$, since no improvement in match score was observed for translation beyond $\pm3$ pixels. The indexing part of the matching is independent of the database size and only depends on the matched ROI size $(\acute{m}\times\acute{n})$. Additionally, we vertically stack the hash table entries of all the bands to compute the aggregate match score in a single step for {ContCode-STM}. Thus the score level fusion of bands is embedded within the hash table matching. In the case of {ContCode-ATM}, each band is matched separately and the resulting scores are summed. Since, a match score is always an integer, no floating point comparisons are required for the final decision.

In a verification (1-1 matching) scenario, the query palm may be matched with the palm samples of the claimed identity only. Thus the effective width of the hash table becomes equal to the number of palm samples of the claimed identity but its height remains the same $(2^{k}\acute{m}\acute{n})$.

\section{Experiments}
\label{sec:exp}


\subsection{Databases}
\label{sec:data}

We used the PolyU-MS\footnote{\tiny{PolyU Multispectral Palmprint Database}~ \url{http://www.comp.polyu.edu.hk/~biometrics/MultispectralPalmprint/MSP.htm}}, PolyU-HS\footnote{\tiny{PolyU Hyperspectral Palmprint Database}~ \url{http://www4.comp.polyu.edu.hk/~biometrics/HyperspectralPalmprint/HSP.htm}} and CASIA-MS\footnote{\tiny{CASIA Multispectral Palmprint Database}~\url{http://www.cbsr.ia.ac.cn/MS_Palmprint_Database.asp}} palmprint databases in our experiments. All databases contain low resolution ($<$150 \emph{dpi}) palm images stored as 8-bit grayscale images per band. Several samples of each subject were acquired in two different sessions. Detailed specifications of the databases are given in Table~\ref{tab:databases}.
\begin{table}[h]
\caption{Specifications of the PolyU-MS, PolyU-HS and CASIA-MS databases.}
\label{tab:databases}
\centering
\begin{tabular}{|l|c|c|c|} \hline
Database                        &   PolyU-MS    &   PolyU-HS    &   CASIA-MS        \\ \hline
Sensor                          &   contact 	&   contact     &  non-contact      \\
Identities                      &   500         &   380         &   200             \\
Samples per identity            &   12          &   11-14       &   6               \\
Total samples                   &   6000        &   5240        &   1200            \\
Bands per sample                &   4           &   69          &   6               \\
\multirow{2}{*}{Wavelength(nm)} &   470, 525,   &   420-1100    &   460, 630, 700,  \\
                                &   660, 880    & (10 nm steps) &   850, 940, White \\ \hline
\end{tabular}
\end{table}

The PolyU-HS database was collected with the aim to find the minimum number of bands required for designing a multispectral palmprint recognition system rather than utilizing the complete set of hyperspectral bands. We reduced the number of bands of the PolyU-HS database from 69 to 4 using the compressed hyperspectral imaging method proposed in Chapter~\ref{Chapter5}. We observed that any additional bands did not significantly improve palmprint recognition accuracy. The four most informative bands were 770, 820, 910 and 920nm.

\subsection{ROI Extraction Accuracy}
\label{sec:roi_acc}

The ROIs in the PolyU-MS and PolyU-HS database are already extracted according to~\cite{zhang2003online}. The CASIA-MS database was acquired using a non-contact sensor and contains large RST variations. The accuracy of automatic landmark localization was determined by comparison with a manual identification of landmarks for 200 multispectral images in the CASIA-MS database. Manual selections were averaged over the six bands to minimize human error. In this section, the displacement, rotation and scale variation of the automatically detected landmarks from the manually marked ground is computed and analyzed.

We define the localization error as the Chebyshev distance between the manually selected and the automatically extracted landmarks. The localization error $(e_{\ell})$, between two landmarks is computed as
\begin{equation}
e_{\ell}=\max(|x-\bar{x}|,|y-\bar{y}|)\times \hat{S}~,
\end{equation}
where $(x,y)$, $(\bar{x},\bar{y})$ correspond to the manually and automatically identified landmark coordinates respectively. The $e_{\ell}$ is 1 if the located landmark falls within the first 8 neighboring pixels, 2 if it falls in the first 24 neighboring pixels and so on. Due to scale variation, the $e_{\ell}$ between different palm images could not be directly compared. For example, a localization error of 5 pixels in a close-up image may correspond to only a 1 pixel error in a distant image. To avoid this, we normalize $e_{\ell}$ by determining it at the final size of the ROI. The normalization factor $\hat{S}$, is the side length of ROI at the original scale, $0.7\times \bar{w}$ divided by the final side length of the ROI ($m=32$).

The absolute rotation error $e_{\theta}$ between two palm samples is defined as
\begin{equation}
e_{\theta}=|\theta-\bar{\theta}|~,
\end{equation}
where, $\theta$ and $\bar{\theta}$ correspond to the angle of rotation of a palm computed from manual and automatic landmarks respectively. Finally, the percentage scale error $e_{s}$ is defined as
\begin{equation}
e_{s}=\left(\max\left(\frac{w}{\bar{w}},\frac{\bar{w}}{w}\right)-1\right)\times100 \%~,
\end{equation}
where $w$ is the manually identified palm width averaged over three measurements, while $\bar{w}$ is automatically calculated.

\begin{figure}[h]
\begin{center}
\subfigure[]{\label{fig:cum_trans}\includegraphics[trim = 0pt 5pt 50pt 20pt, clip, width=0.32\linewidth]{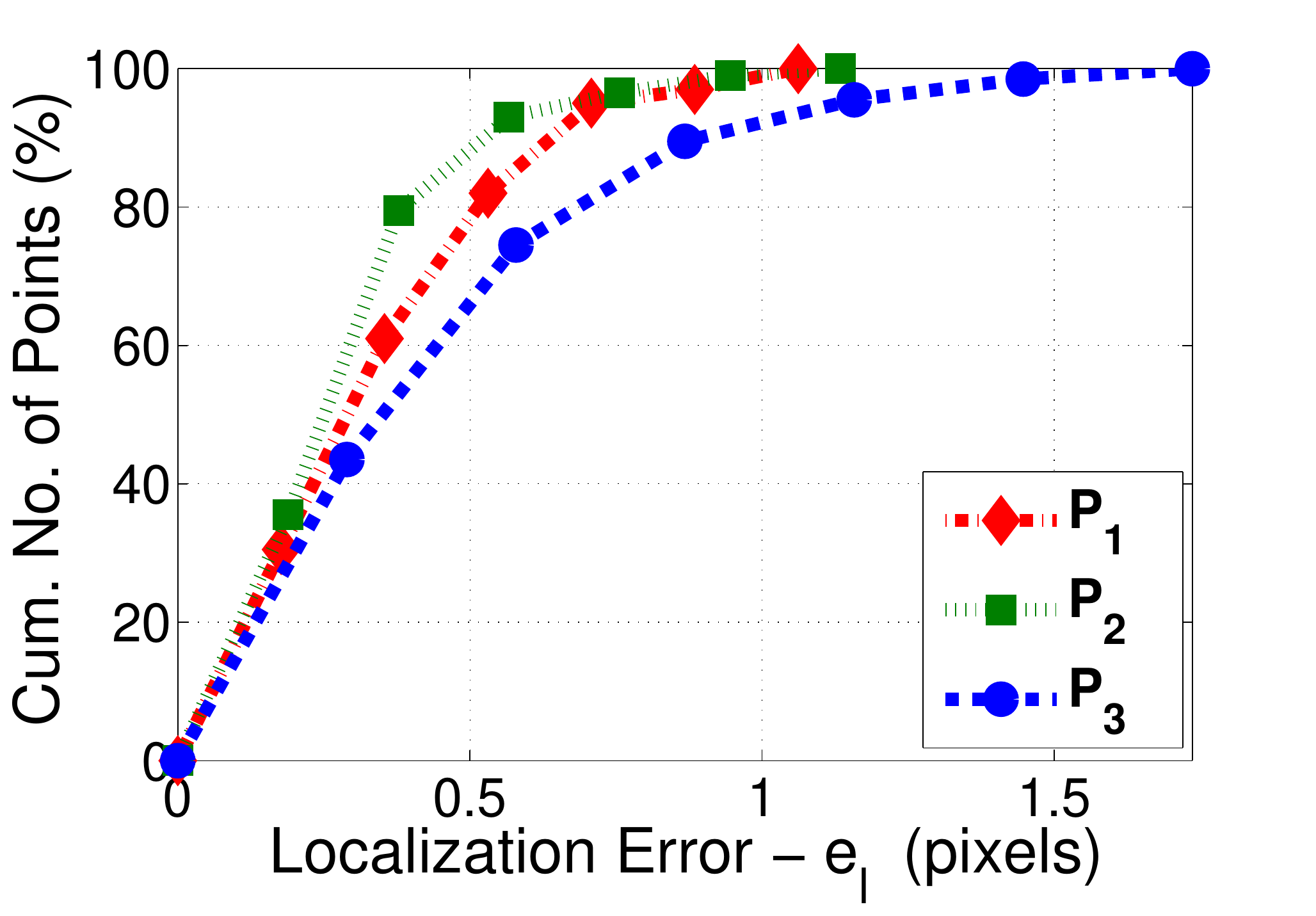}}\hspace{1pt}
\subfigure[]{\label{fig:cum_rot}\includegraphics[trim = 0pt 5pt 50pt 20pt, clip, width=0.32\linewidth]{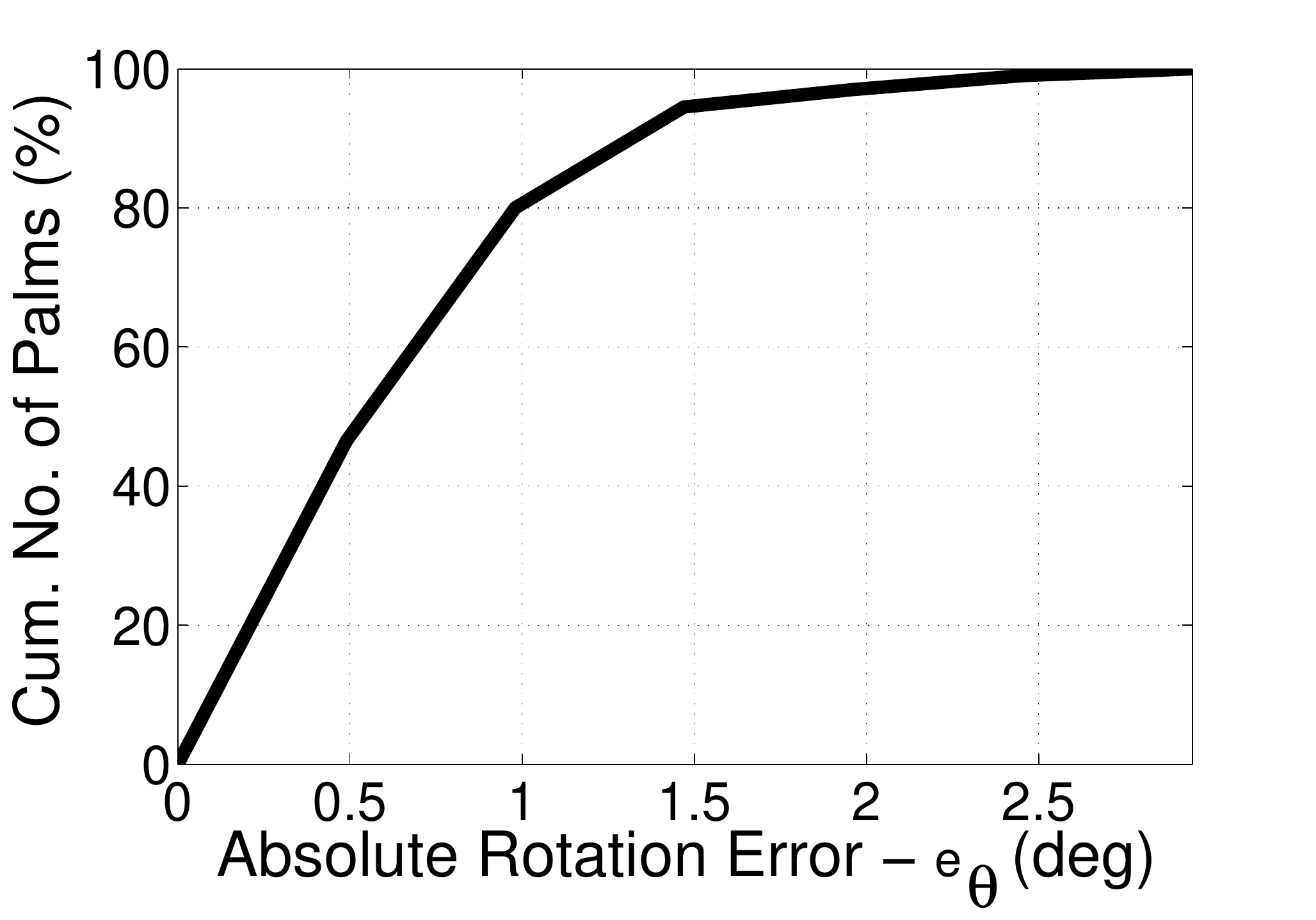}}\hspace{1pt}
\subfigure[]{\label{fig:cum_scale}\includegraphics[trim = 0pt 5pt 50pt 20pt, clip, width=0.32\linewidth]{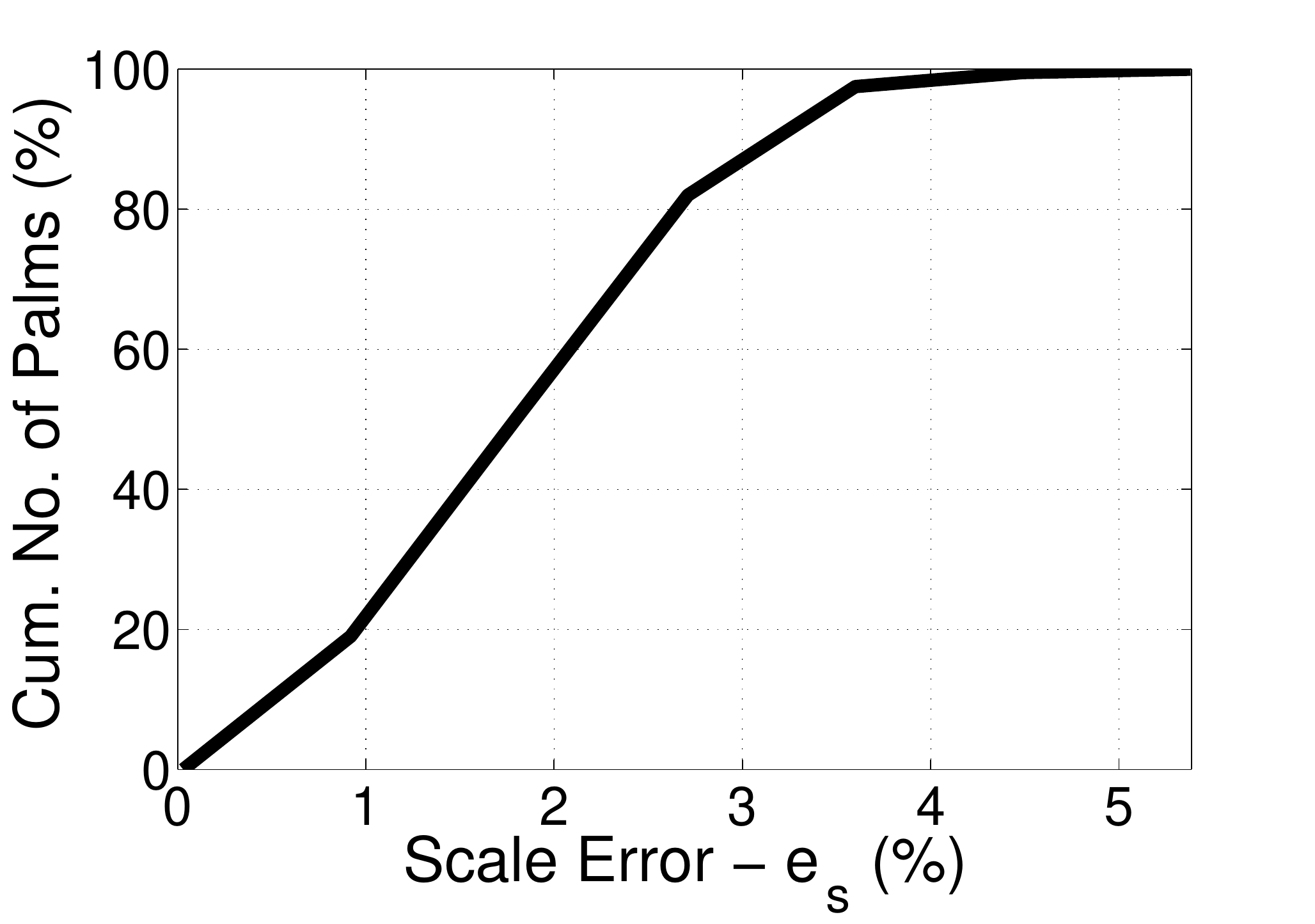}}
\end{center}
\caption[Evaluation of ROI extraction accuracy]{Evaluation of ROI extraction accuracy. (a) Cumulative percentage of landmarks within a localization error $(L_{e})$. (b) Cumulative percentage of palm samples within an absolute rotation error $(\theta_{e})$. (c) Cumulative percentage of palm samples within a scale error $(S_{e})$.}
\label{fig:cum_tsr}
\end{figure}

Figure~\ref{fig:cum_trans} shows the error, in pixels, of landmarks $P_1,P_2,P_3$. It can be observed that the three landmarks are correctly located within an $e_{\ell}\le2$ for all of the samples. It is important to emphasize that $P_1,P_2$ which are actually used in the ROI extraction are more accurately localized ($e_{\ell}\simeq$ 1 pixel) compared to $P_3$. Thus, our ROI extraction is based on relatively reliable landmarks. Figure~\ref{fig:cum_rot} shows the absolute rotation error of the automatically extracted ROIs. All the ROIs were extracted within a $e_{\theta} \le 3^{\circ}$. The proposed ROI extraction is robust to rotational variations given the overall absolute rotational variation, $(\mu_{\theta},\sigma_{\theta})\!=\!(20.7^{\circ},7.24^{\circ})$ of the CASIA-MS database. The proposed technique is able to estimate the palm scale within an error of $\pm5.4\%$ (Figure~\ref{fig:cum_scale}).

\begin{figure}[!h]
\centering
\begin{minipage}[b]{0.8\linewidth}
\subfigure{\includegraphics[trim = 130pt 10pt 30pt 0pt, clip,width=0.235\linewidth]{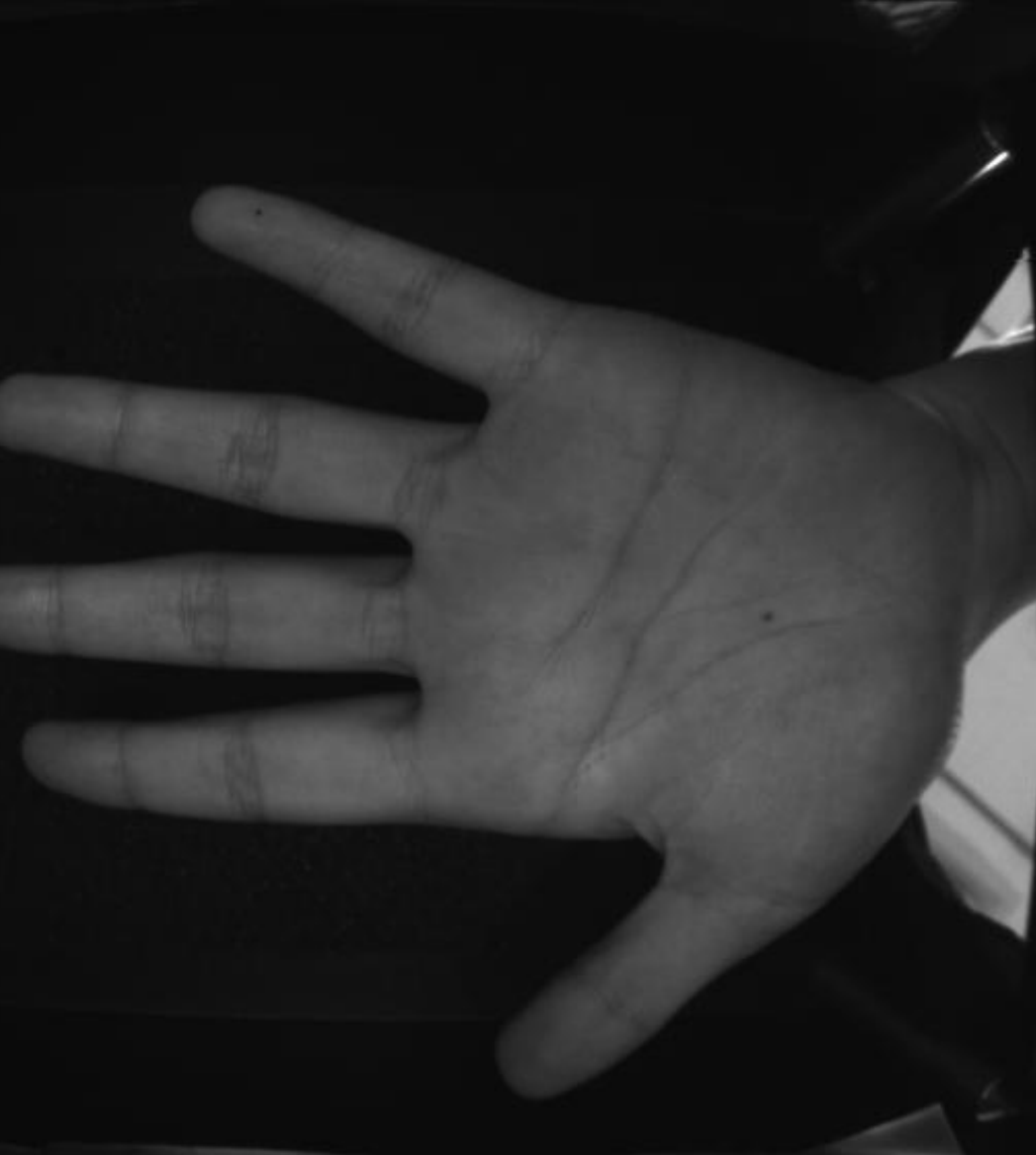}}\hspace{0.1pt}
\subfigure{\includegraphics[trim = 130pt 10pt 30pt 0pt, clip,width=0.235\linewidth]{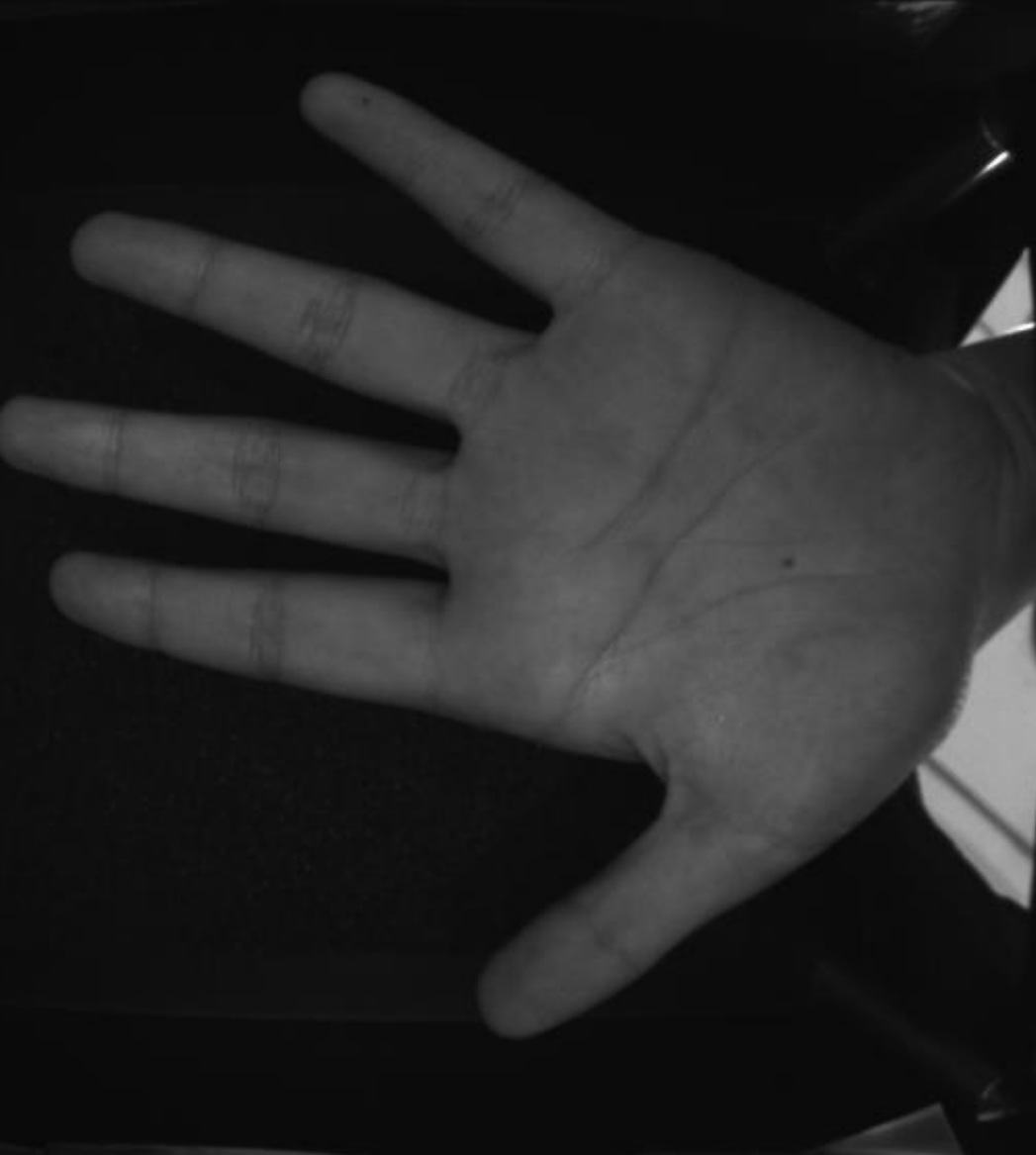}}\hspace{0.1pt}
\subfigure{\includegraphics[trim = 130pt 10pt 30pt 0pt, clip,width=0.235\linewidth]{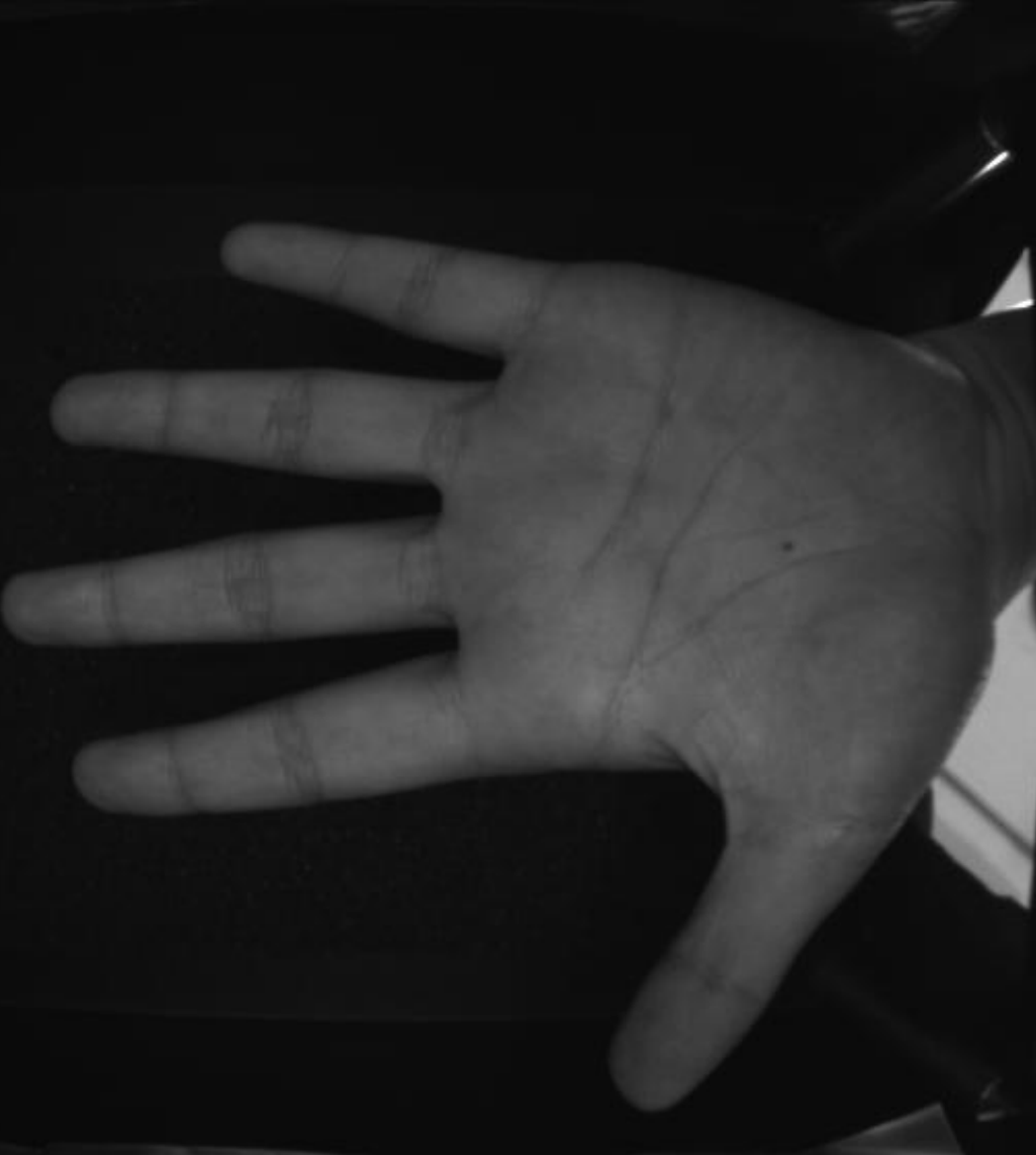}}\hspace{0.1pt}
\subfigure{\includegraphics[trim = 130pt 10pt 30pt 0pt, clip,width=0.235\linewidth]{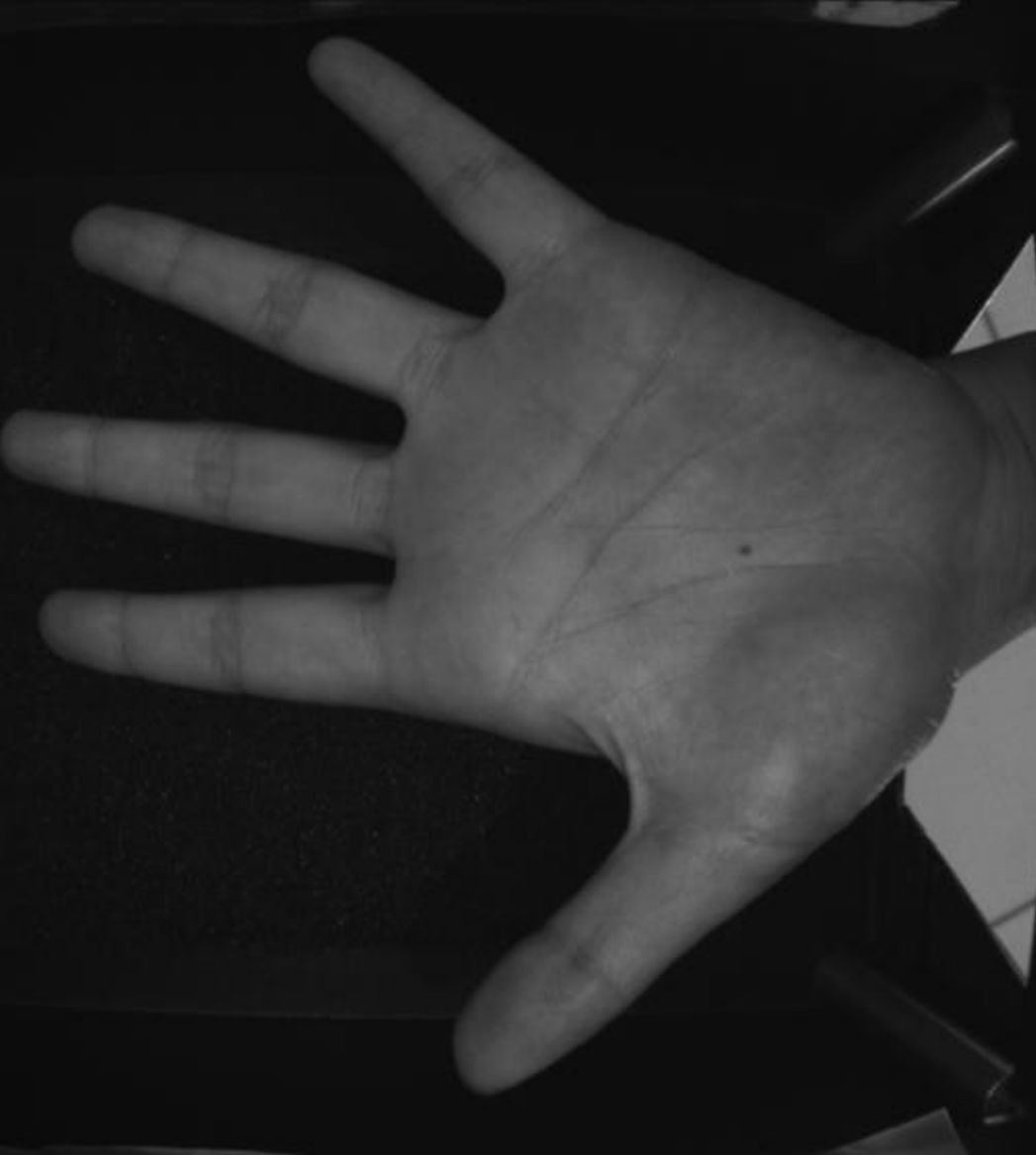}}\\
\subfigure{\includegraphics[width=0.235\linewidth]{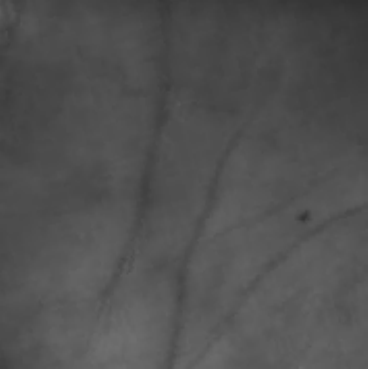}}\hspace{0.1pt}
\subfigure{\includegraphics[width=0.235\linewidth]{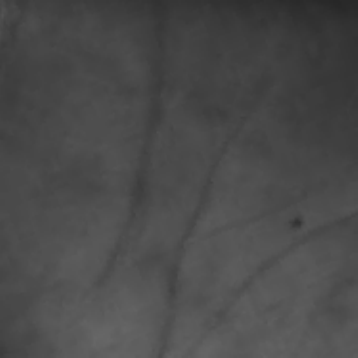}}\hspace{0.1pt}
\subfigure{\includegraphics[width=0.235\linewidth]{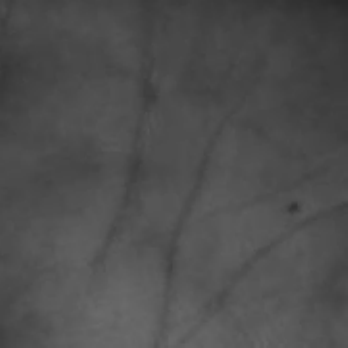}}\hspace{0.1pt}
\subfigure{\includegraphics[width=0.235\linewidth]{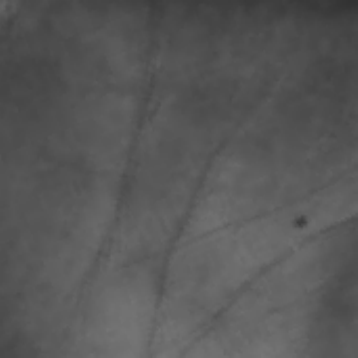}}
\end{minipage}
\caption[Extracted ROIs for a variety of hand movements]{Extracted ROIs for a variety of hand movements between different samples of a person. Despite the movements in the hand and fingers, the ROIs are consistently extracted from the same region.}
\label{fig:ROIs}
\end{figure}

ROI extraction from palm images acquired with non-contact sensors must overcome the challenges of RST variation. The proposed technique addresses these challenges except for excessive out-of-plane rotations or palm deformations. Figure~\ref{fig:fail} shows examples of images that resulted in a low match score due to out-of-plane rotation and excessive deformation of the second (probe) palm. We noted that these two anomalies are a major source of error in matching. In such cases, additional information is required to correct the out-of-plane rotation error and remove the non-rigid deformations. One solution is to use 3D information~\cite{li2010efficient,zhang2009palmprint}. However, the discussion of such techniques is out of the scope of this work.

\begin{figure}[!h]
\begin{center}
\begin{minipage}[b]{0.23\linewidth}
\includegraphics[trim = 64pt 0pt 0pt 0pt, clip, width=1\linewidth]{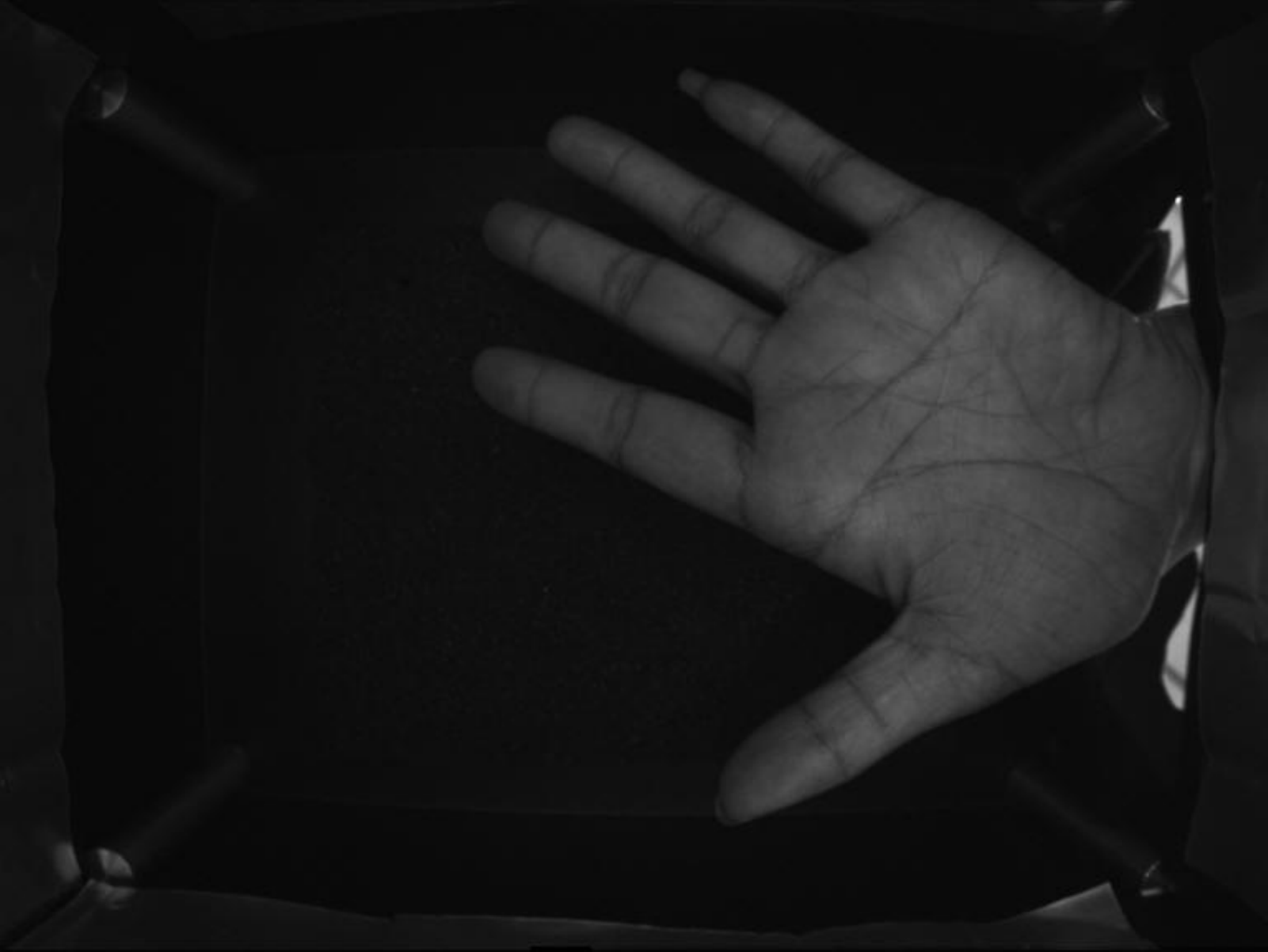}\\
\subfigure[]{\includegraphics[trim = 64pt 0pt 0pt 0pt, clip, width=1\linewidth]{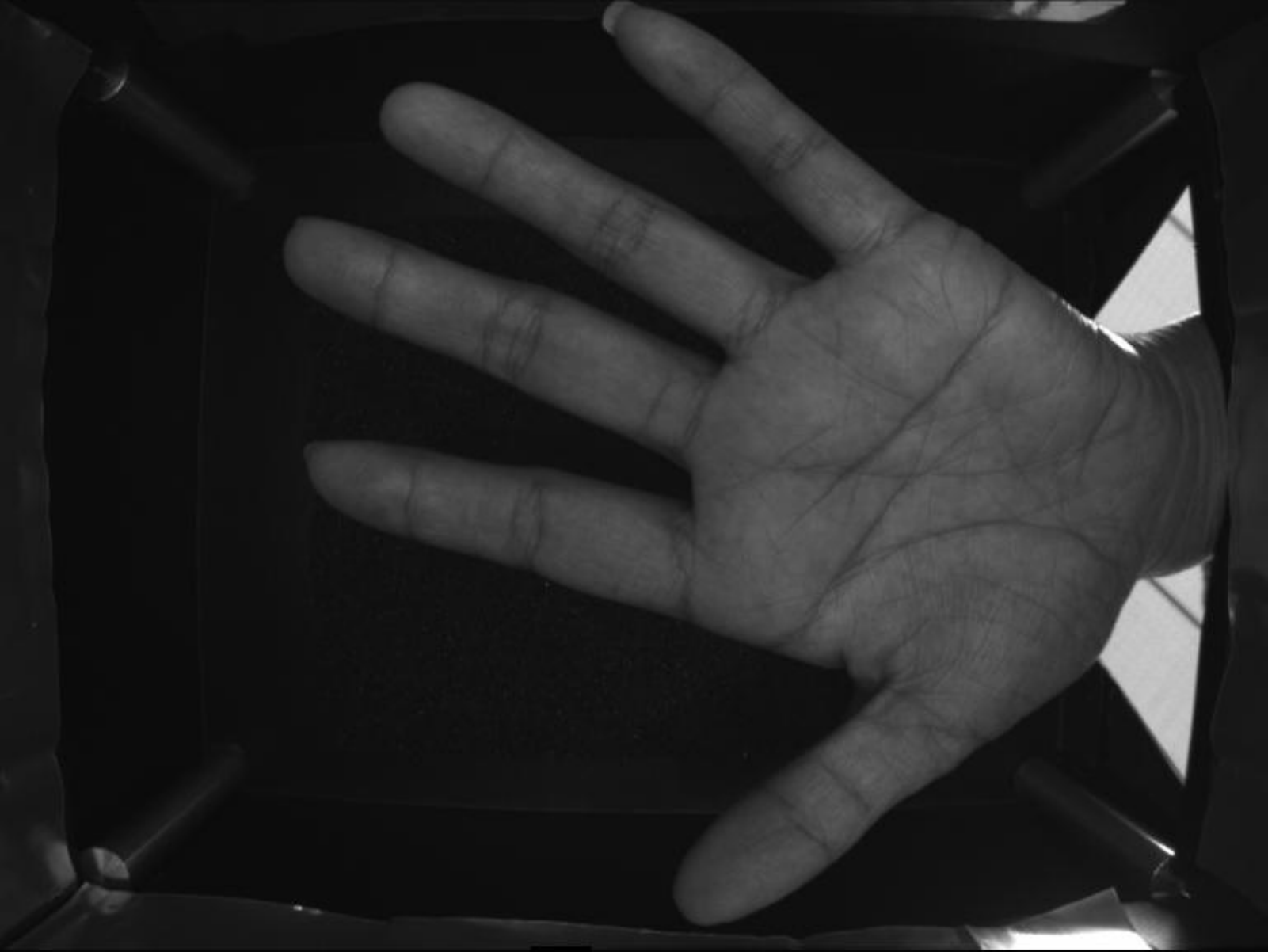}}
\end{minipage}\hspace{0.5pt}
\begin{minipage}[b]{0.23\linewidth}
\includegraphics[trim = 0pt 0pt 12pt 0pt, clip, width=1\linewidth]{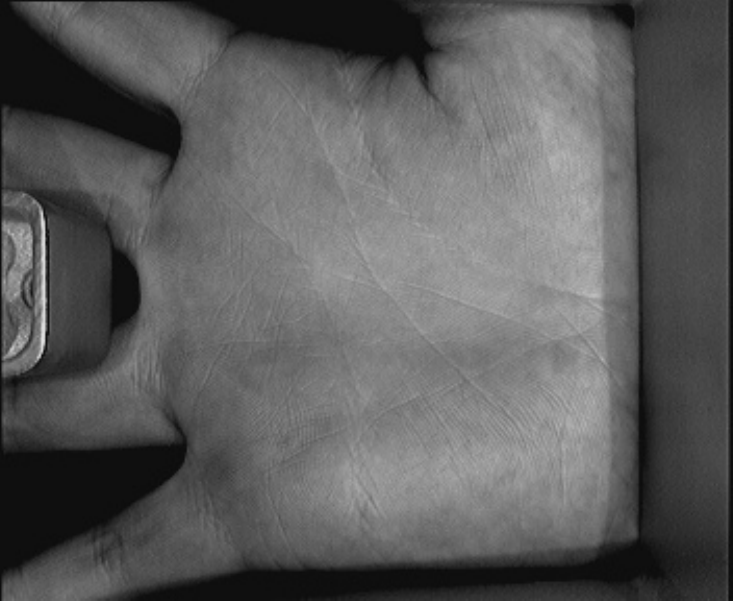}\\
\subfigure[]{\includegraphics[trim = 0pt 0pt 12pt 0pt, clip, width=1\linewidth]{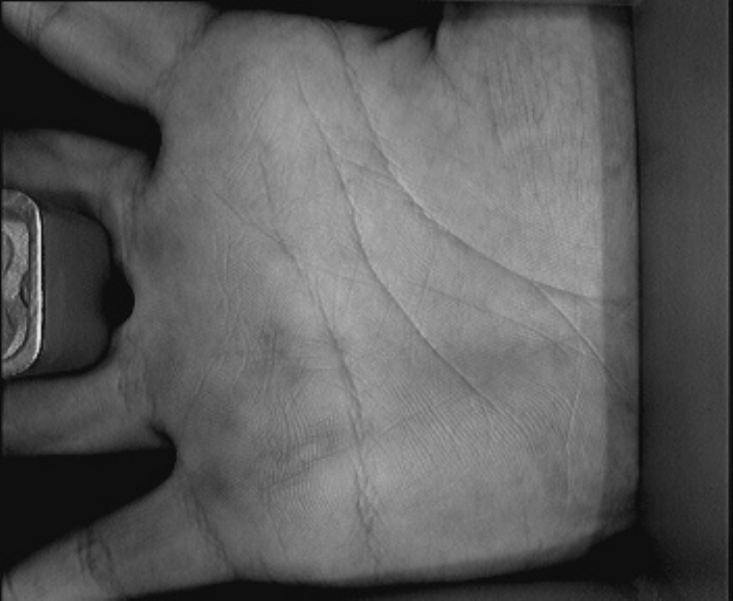}}
\end{minipage}
\end{center}
\caption[Examples of improper hand presentation]{Examples of improper hand presentation. (a) A palm correctly presented to a non-contact sensor (top), and out of plane rotated (bottom), resulting in incorrect estimation of the palm scale. (b) A palm correctly presented to a contact sensor (top), and deformed due to excessive pressure (bottom). This may result in a different ROI and with inconsistent spacing between the palm lines.}
\label{fig:fail}
\end{figure}

\subsection{Parameter Analysis}
\label{sec:params}
In this section, we examine the effects of various parameters including, ROI size, number of Contour Code orientations, hash table blurring neighborhood and pyramidal-directional filter pair combination. All other parameters are kept fixed during the analysis of a particular parameter. These experiments are performed on a sample subset (50\%) of the PolyU-MS database comprising equal proportion of images from the $1^{st}$ and $2^{nd}$ session. The same optimal parameters were used for the PolyU-HS and CASIA-MS database. For the purpose of analysis, we use the {ContCode-ATM} technique.

\begin{figure}[h]
\centering
\subfigure[]{\label{fig:params-dim}\includegraphics[trim = 0pt 0pt 10pt 5pt, clip, width=0.3\linewidth]{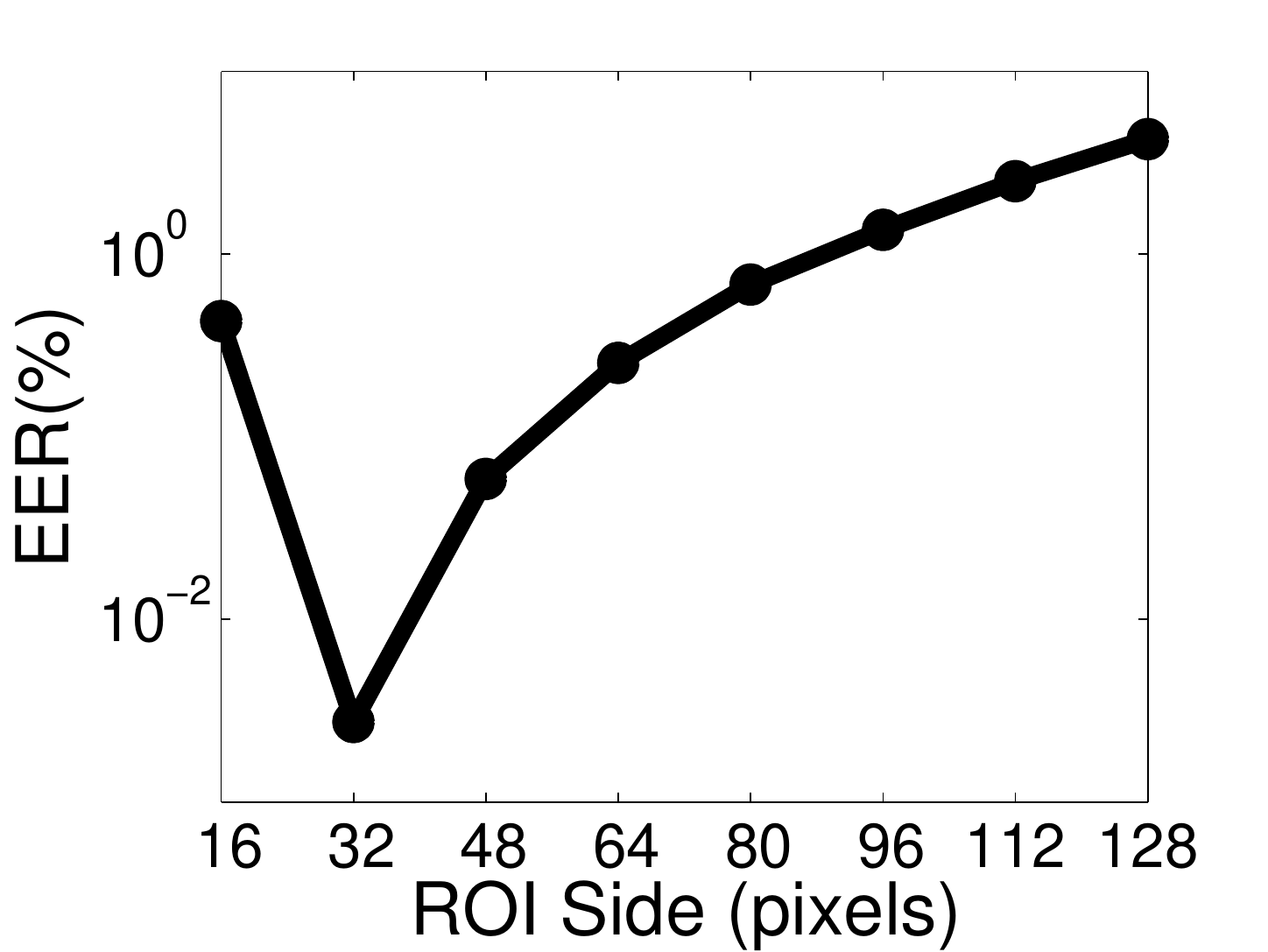}}
\subfigure[]{\label{fig:params-orient}\includegraphics[trim = 0pt 0pt 10pt 5pt, clip, width=0.3\linewidth]{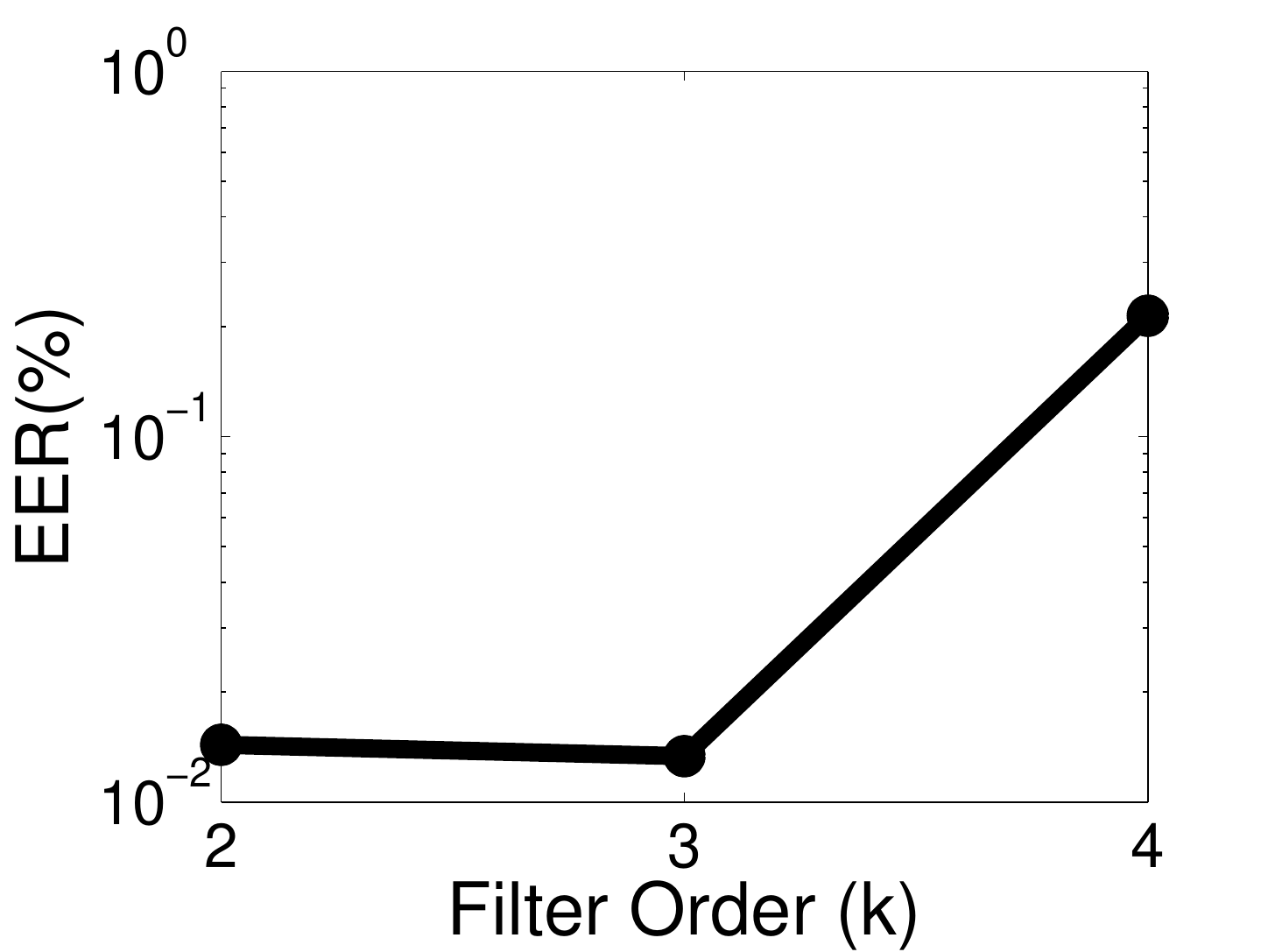}}
\subfigure[]{\label{fig:params-blur}\includegraphics[trim = 0pt 0pt 10pt 5pt, clip, width=0.3\linewidth]{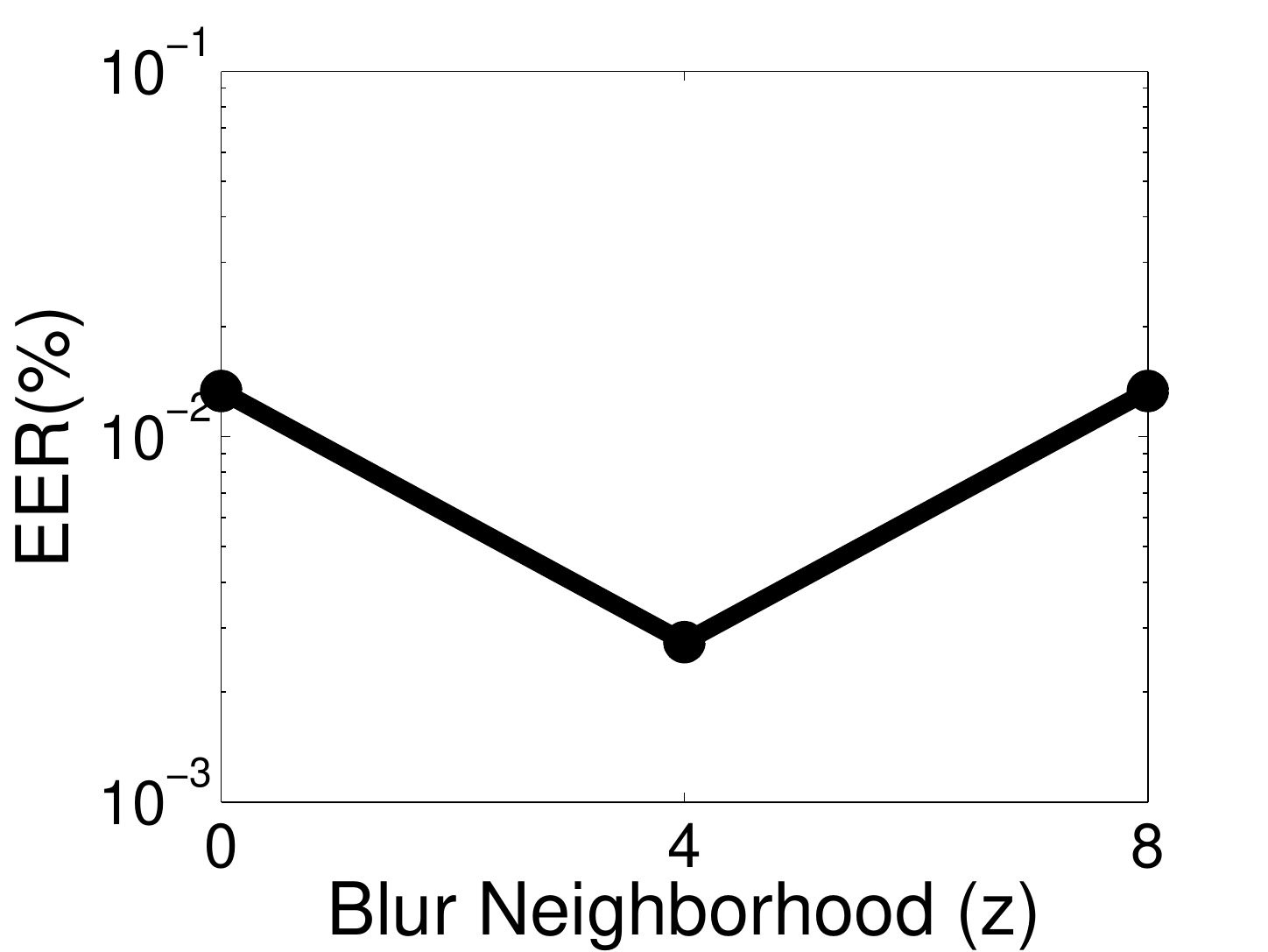}}
\caption[Analysis of parameters]{Analysis of parameters (a) ROI dimensions $(m,n)$ (b) Filter Order $(k)$ (c) Blur Neighbourhood}
\label{fig:params}
\end{figure}

\subsubsection{ROI Dimension}
Palmprint features have varying length and thickness. A larger ROI size may include unnecessary detail and unreliable features. A smaller ROI, on the other hand, may leave out important discriminative information. We empirically find the best ROI size by testing square $(m=n)$ ROI regions of side ${16,32,48,\ldots,128}$. The results in Figure~\ref{fig:params-dim} show that a minimum EER is achieved at 32 after which the EER increases. We use an ROI of $(m,n)=(32,32)$ in all our remaining experiments. Note that a peak performance at such a small ROI is in fact favorable for the efficiency of the Contour Code representation and subsequent matching.

\subsubsection{Orientation Fidelity}
The orientations of the dominant feature directions at points are quantized into a certain number of bins in the Contour Code. A greater number of bins offers better fidelity but also increases the size of the Contour Code representation and its sensitivity to noise. The minimum number of orientation bins that can achieve maximum accuracy is naturally a preferred choice. The orientations are quantized into $2^{k}$ bins, where $k$, the order of the directional filter determines the number of orientations. Figure~\ref{fig:params-orient} shows the results of our experiments for $k={2,3,4}$. We observe that $k=3$ (i.e.~$2^3=8$ directional bins) minimizes the EER. Although, there is a small improvement from $k=2$ to $k=3$, we prefer the latter as it provides more orientation fidelity which supports the process of hash table blurring. Therefore, we set $k=3$ in all our remaining experiments.

\subsubsection{Hash Table Blur Neighborhood}
Hash table blurring is performed to cater for small misalignments given that the palm is not a rigid object. However, too much blur can result in incorrect matches. We analyzed three different neighborhood types for blurring. The results are reported in Figure~\ref{fig:params-blur} which show that the lowest error rate is achieved with the \emph{4-connected} blur neighborhood. This neighborhood is, therefore, used in all the following experiments.

\subsubsection{Pyramidal-Directional Filter Pair}
\label{sec:filter}

An appropriate pyramidal and directional filter pair combination is critical to robust feature extraction. It is important to emphasize that the ability of a filter to capture robust line like features in a palm should consequently be reflected in the final recognition accuracy. Hence, we can regard the pyramidal-directional filter combination with the lowest EER as the most appropriate.

\begin{figure}[h]
\centering
\subfigure{\includegraphics[trim = 60pt 10pt 60pt 5pt, clip,width=0.19\linewidth]{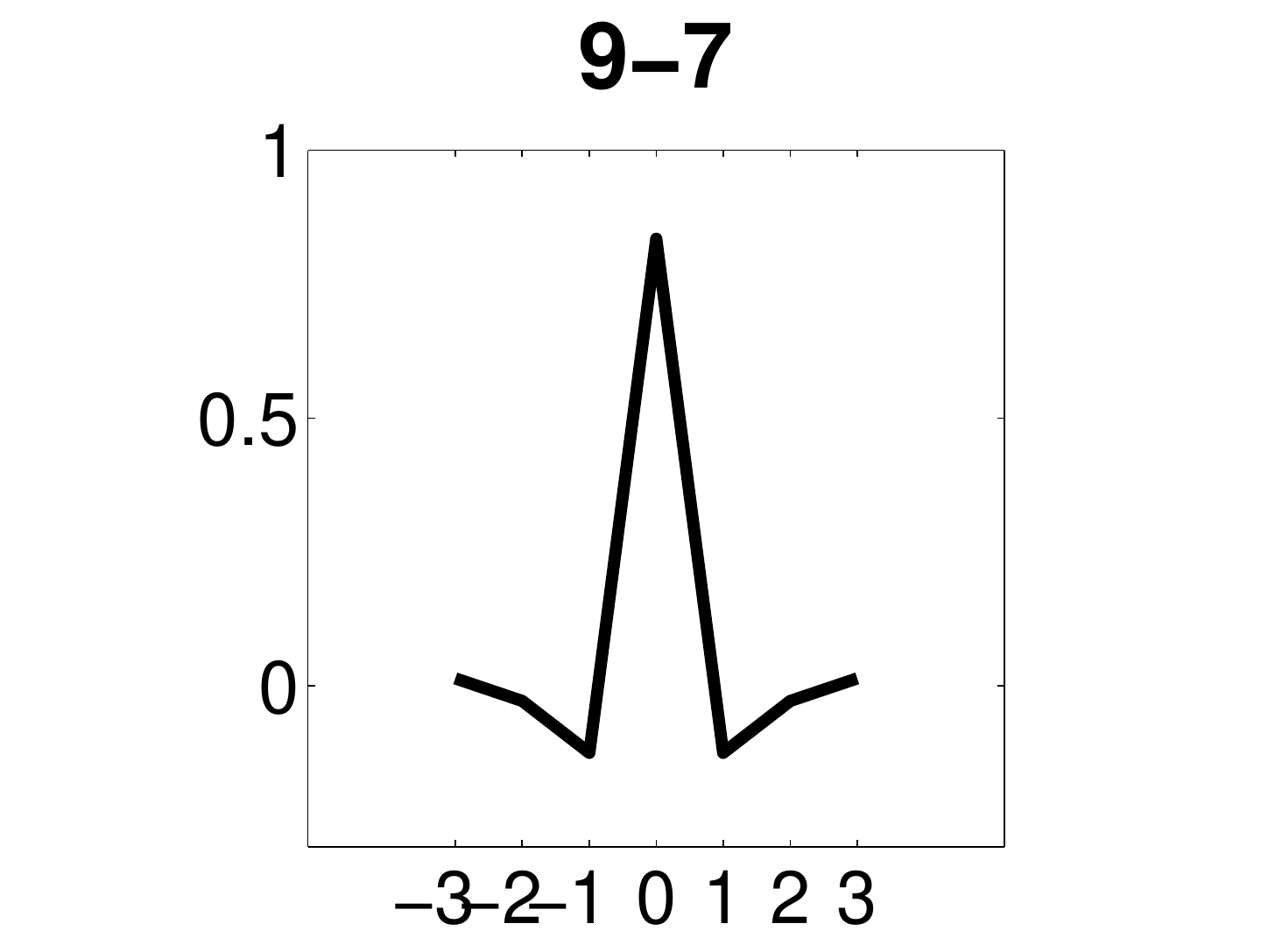}}
\subfigure{\includegraphics[trim = 60pt 10pt 60pt 5pt, clip,width=0.19\linewidth]{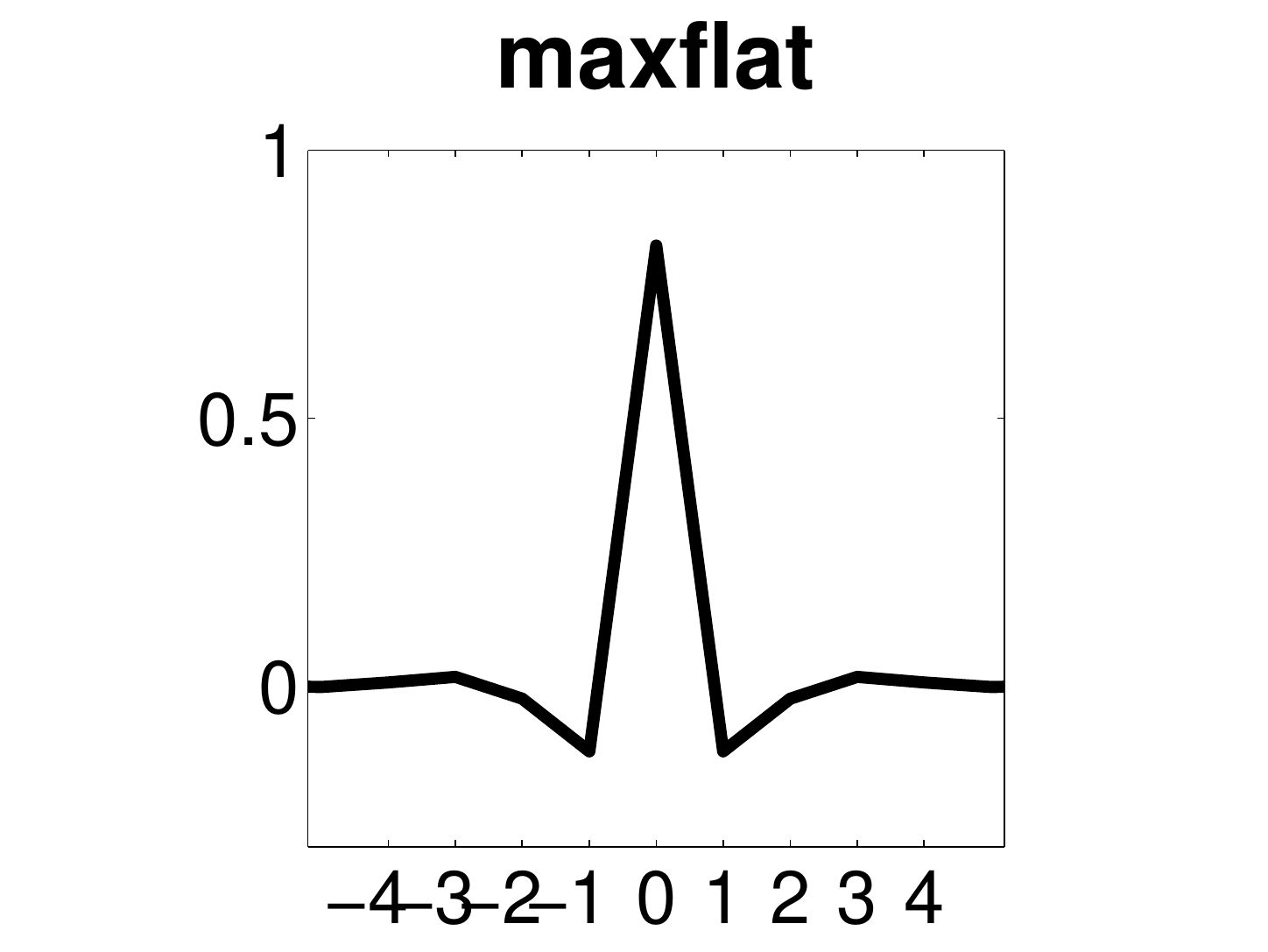}}
\subfigure{\includegraphics[trim = 60pt 10pt 60pt 5pt, clip,width=0.19\linewidth]{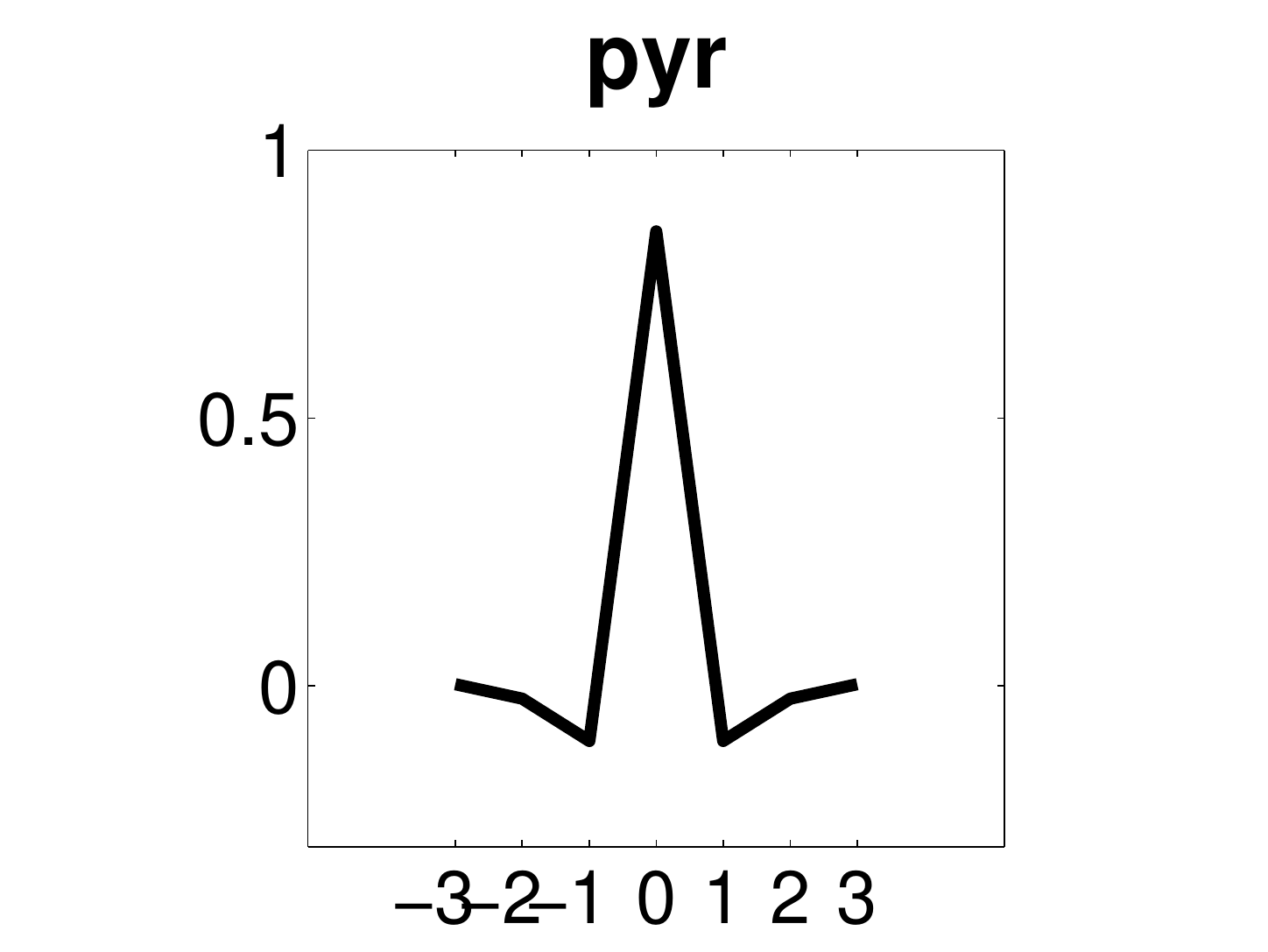}}
\subfigure{\includegraphics[trim = 60pt 10pt 60pt 5pt, clip,width=0.19\linewidth]{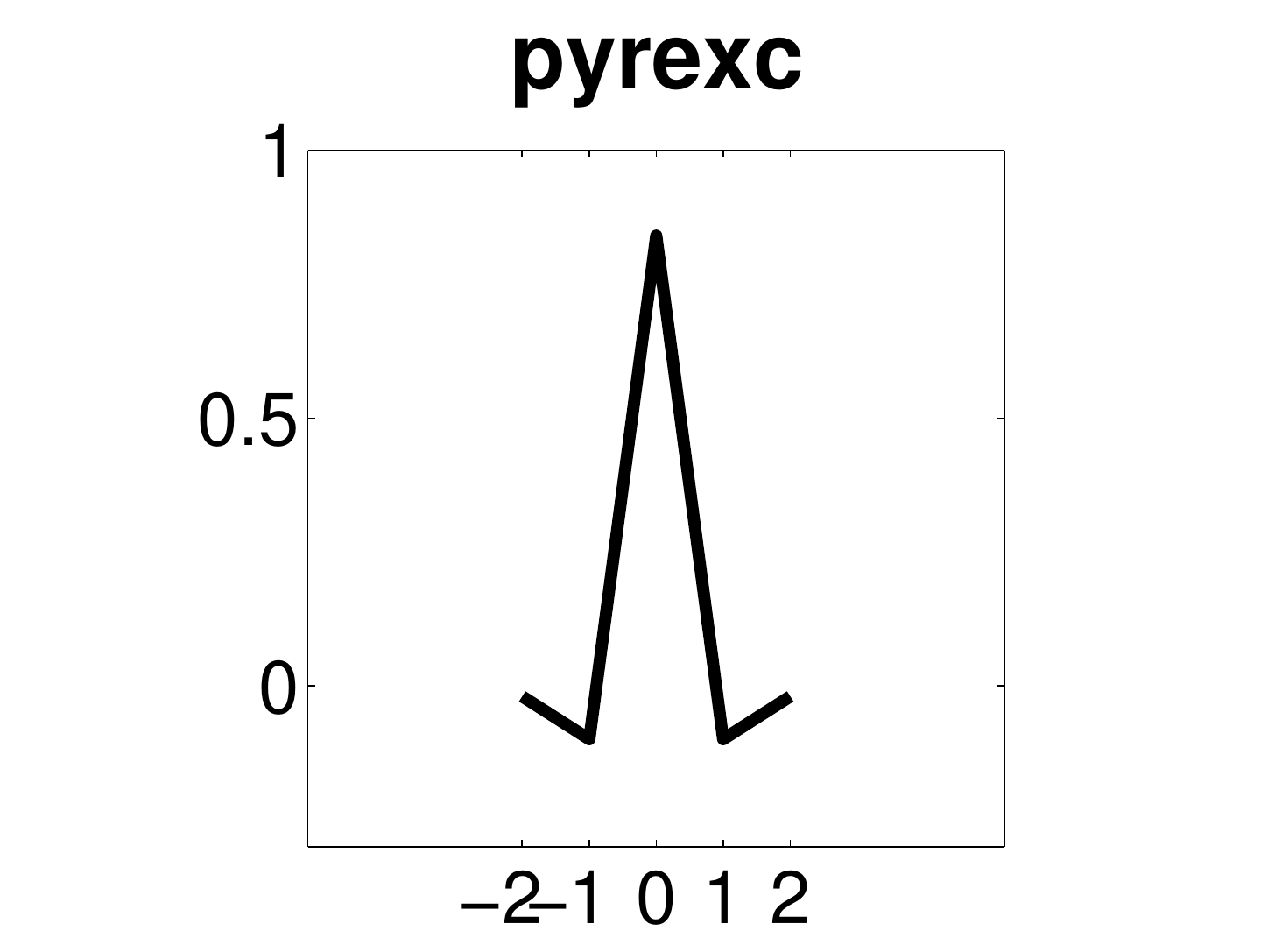}}
\caption[NSCT pyramidal highpass filters]{NSCT pyramidal highpass filters~\cite{da2008geometrical}. Planar profile of the 2D filters are shown for better visual comparison. (a) Filters from 9-7 1-D prototypes. (b) Filters derived from 1-D using maximally flat mapping function with 4 vanishing moments. (c) Filters derived from 1-D using maximally flat mapping function with 2 vanishing moments. (d) Similar to pyr but exchanging two highpass filters.}
\label{fig:pfilters}
\end{figure}
\begin{figure}[h]
\centering
\subfigure{\includegraphics[trim = 60pt 10pt 60pt 5pt, clip,width=0.19\linewidth]{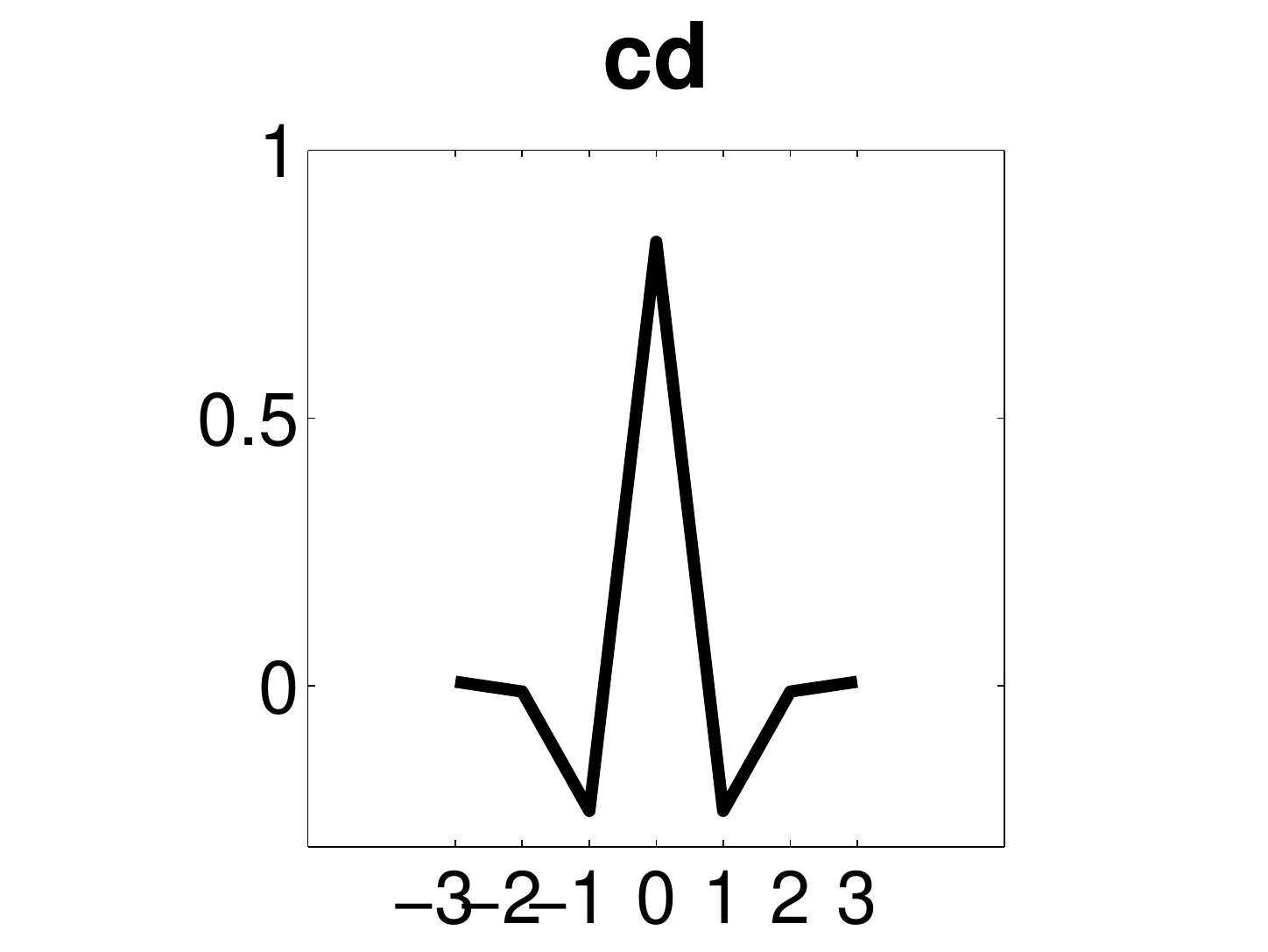}}
\subfigure{\includegraphics[trim = 60pt 10pt 60pt 5pt, clip,width=0.19\linewidth]{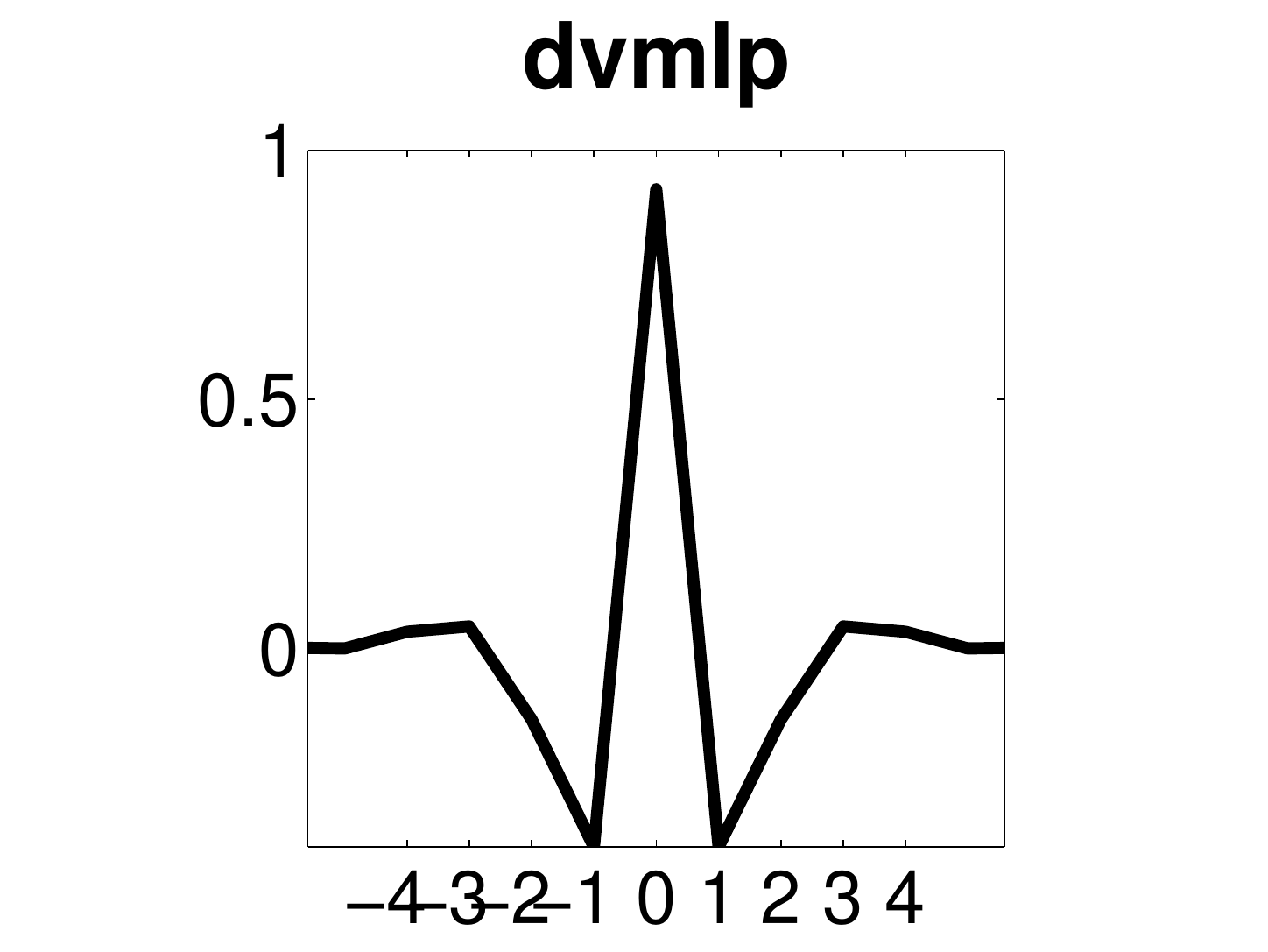}}
\subfigure{\includegraphics[trim = 60pt 10pt 60pt 5pt, clip,width=0.19\linewidth]{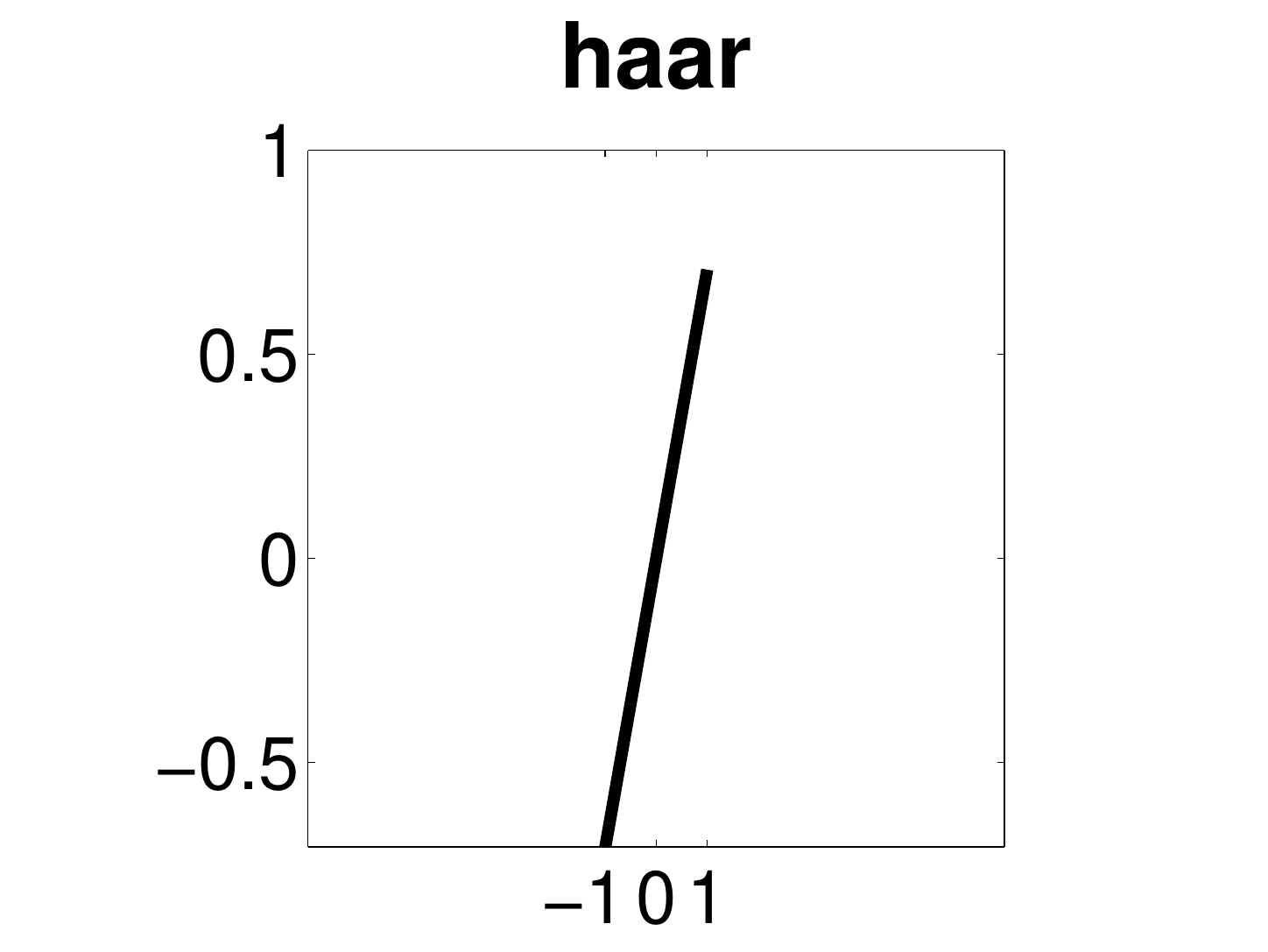}}
\subfigure{\includegraphics[trim = 60pt 10pt 60pt 5pt, clip,width=0.19\linewidth]{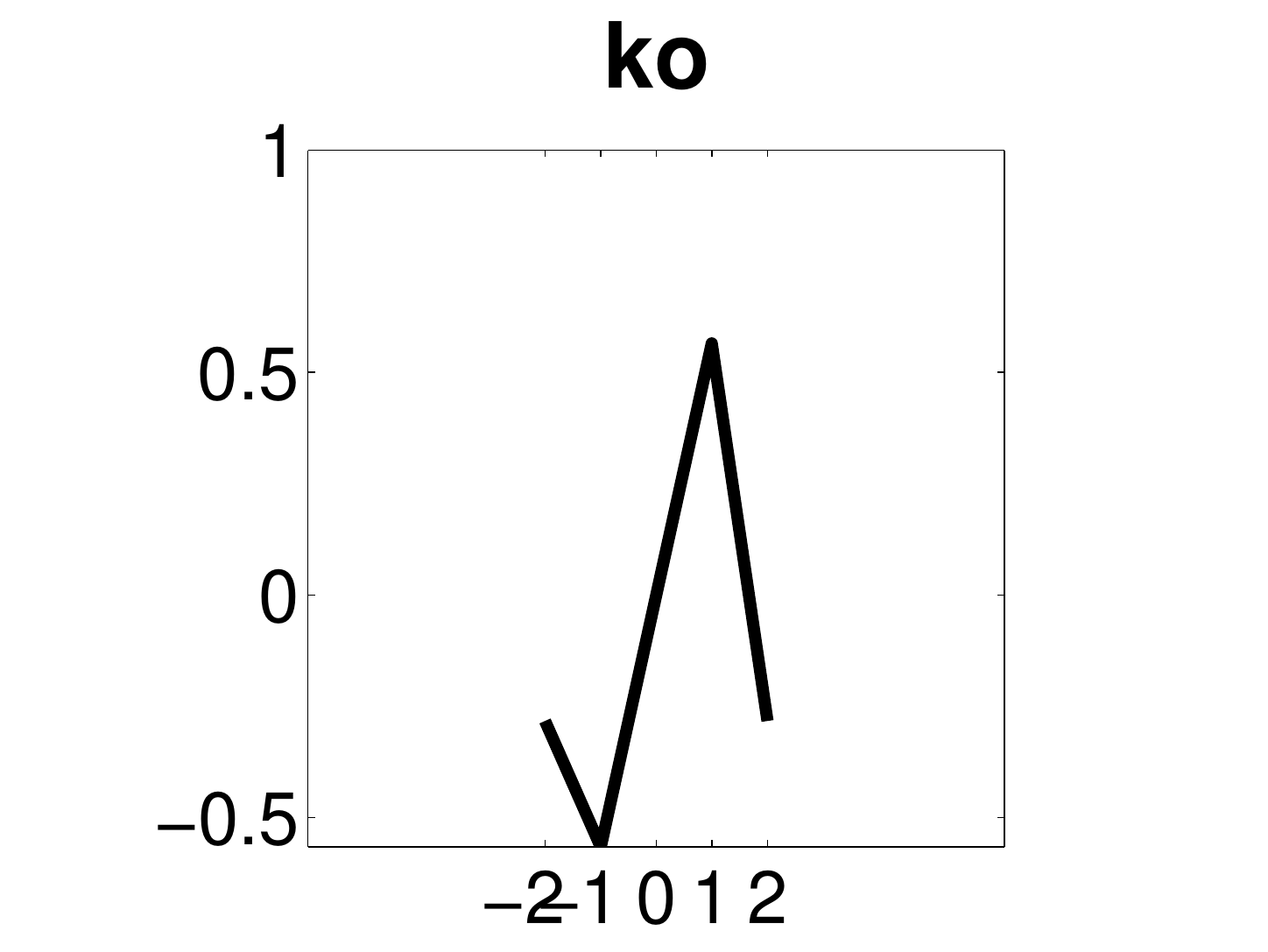}}
\subfigure{\includegraphics[trim = 60pt 10pt 60pt 5pt, clip,width=0.19\linewidth]{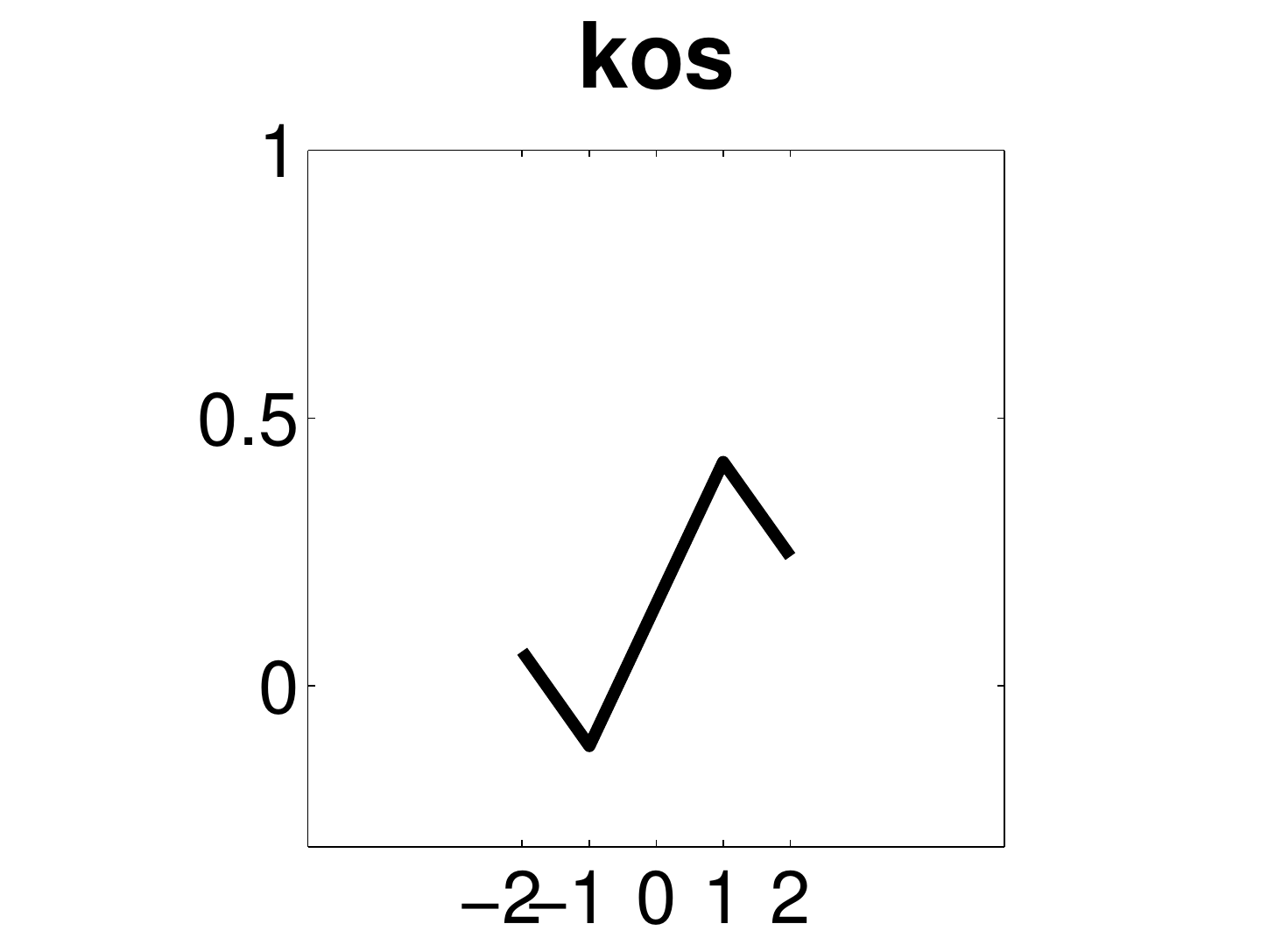}}\\
\subfigure{\includegraphics[trim = 60pt 10pt 60pt 5pt, clip,width=0.19\linewidth]{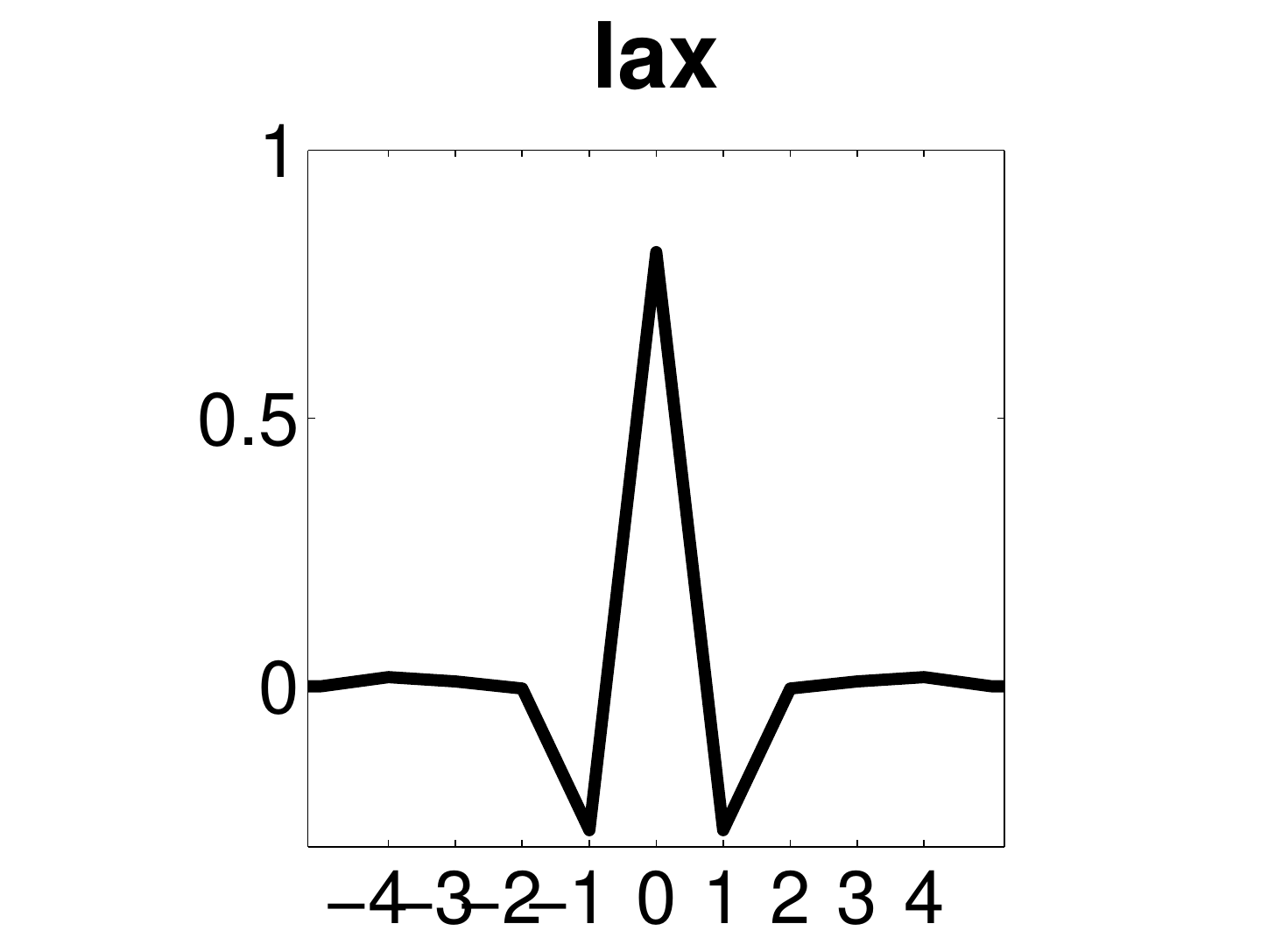}}
\subfigure{\includegraphics[trim = 60pt 10pt 60pt 5pt, clip,width=0.19\linewidth]{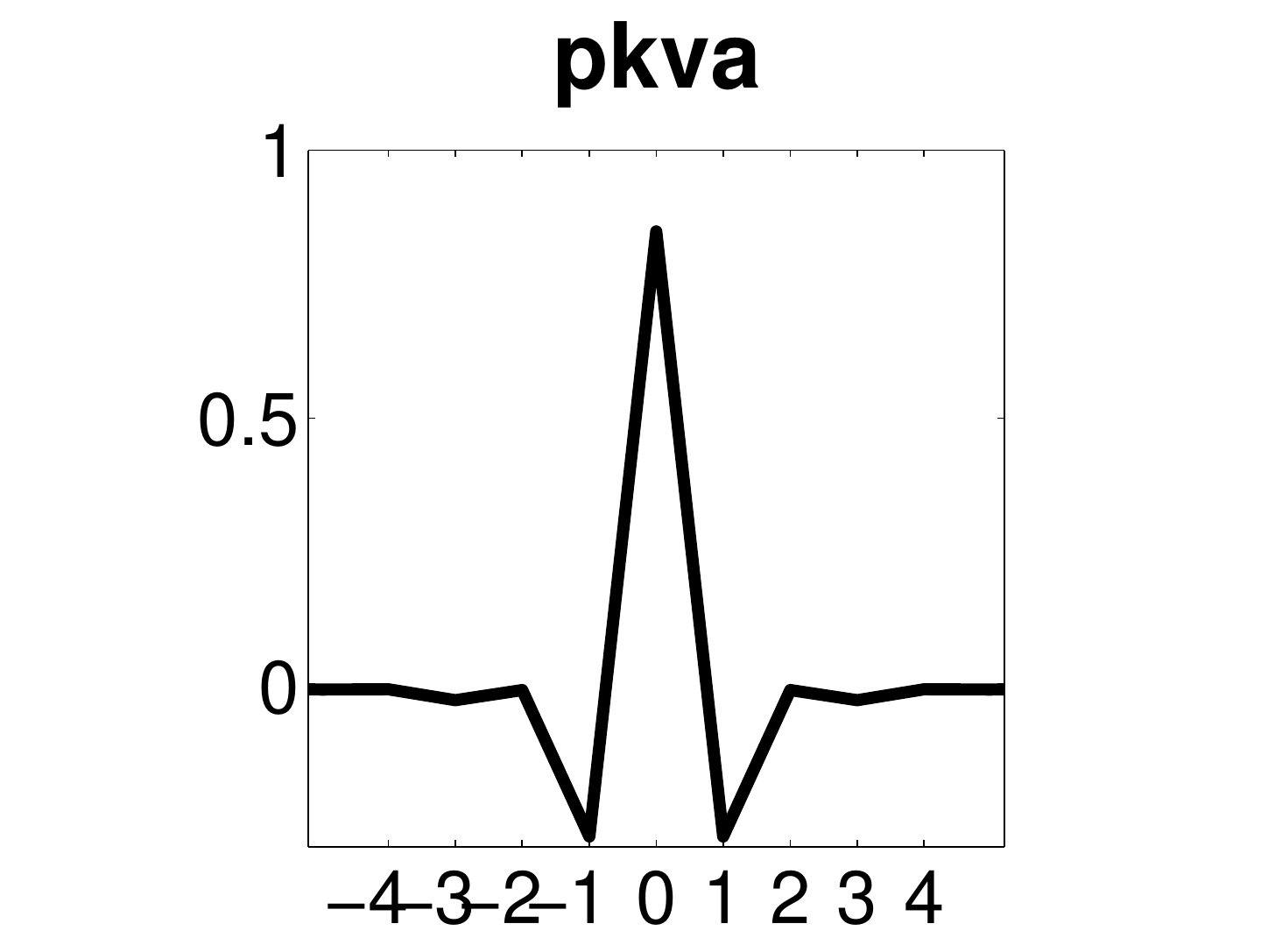}}
\subfigure{\includegraphics[trim = 60pt 10pt 60pt 5pt, clip,width=0.19\linewidth]{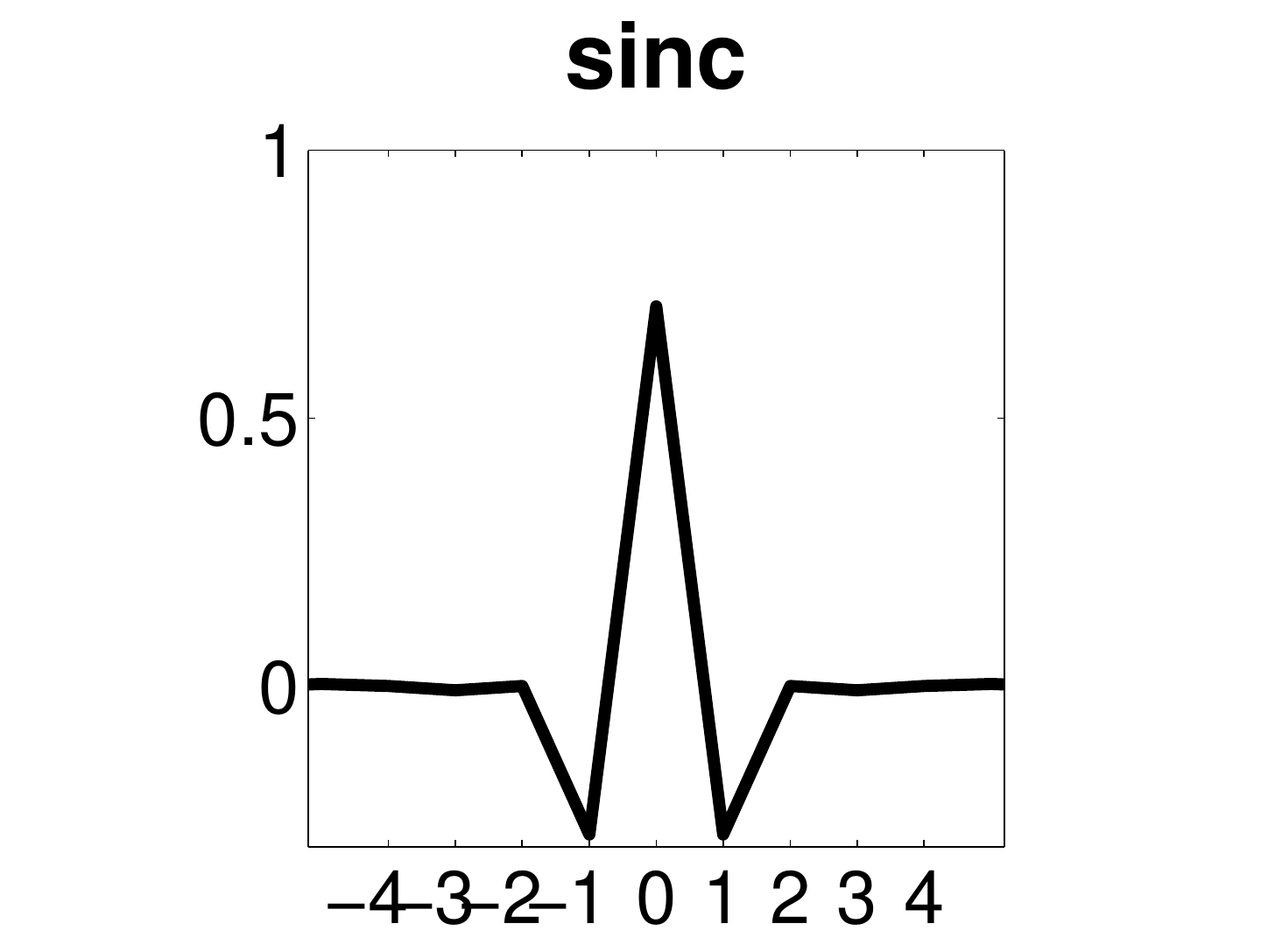}}
\subfigure{\includegraphics[trim = 60pt 10pt 60pt 5pt, clip,width=0.19\linewidth]{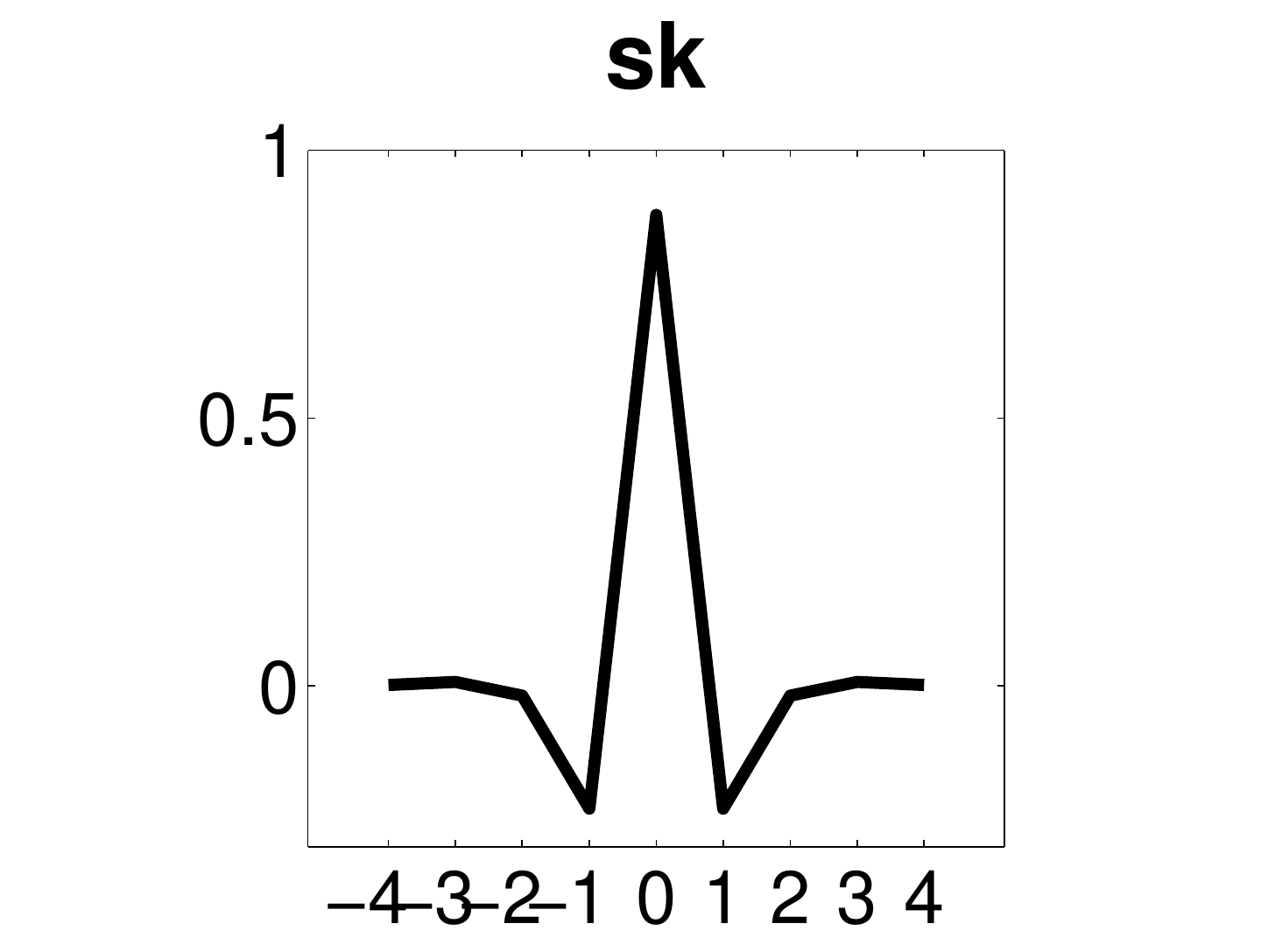}}
\subfigure{\includegraphics[trim = 60pt 10pt 60pt 5pt, clip,width=0.19\linewidth]{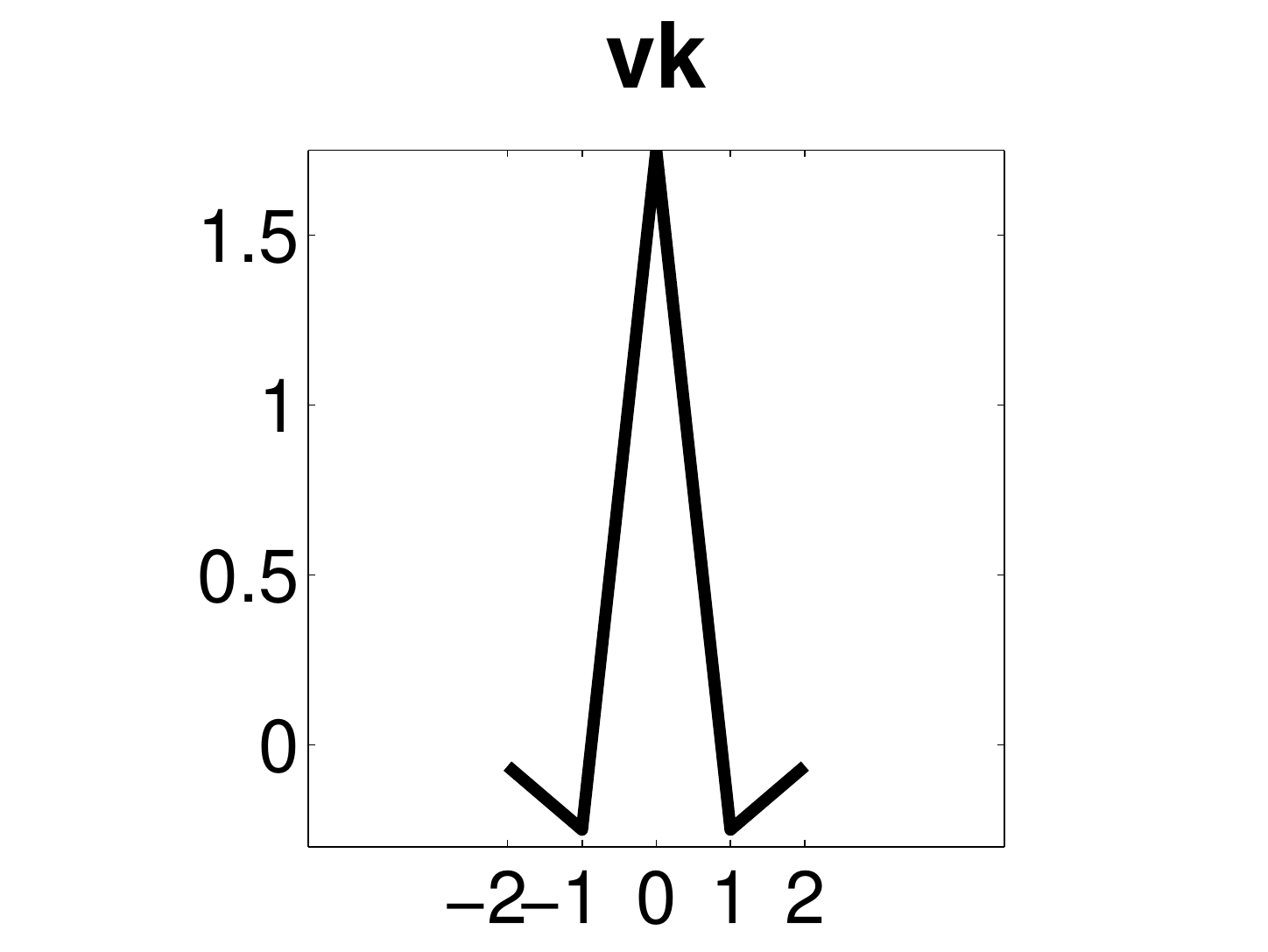}}
\caption[NSCT directional highpass filters]{NSCT directional highpass filters. (a) cd: 7 and 9 McClellan transformed by Cohen and Daubechies~\cite{cohen1992biorthogonal}. (b) dvmlp: regular linear phase biorthogonal filter with 3 dvm~\cite{da2005bi}. (c) haar: the `Haar' filter~\cite{haar1910theorie}. (d) ko: orthogonal filter from Kovacevic (e) kos: smooth `ko' filter. (f) lax: $17\times17$ by Lu, Antoniou and Xu~\cite{lu1998direct}. (g) pkva: ladder filters by Phong et al.~\cite{phoong1995new}. (h) sinc: ideal filter. (i) sk: $9\times9$ by Shah and Kalker~\cite{shah1994ladder}. (j) vk: McClellan transform of filter from the VK book~\cite{vetterli1995wavelets}.}
\label{fig:dfilters}
\end{figure}

We tested all possible pairs of $4$ pyramidal and $10$ directional filters available in the Nonsubsampled Contourlet Toolbox \footnote{The Nonsubsampled Contourlet Toolbox \url{http://www.mathworks.com/matlabcentral/fileexchange/10049}}. These filters exhibit characteristic response to line-like features. The pyramidal filters used at the first stage are shown in Figure~\ref{fig:pfilters} and the directional filters used at the second stage are shown in Figure~\ref{fig:dfilters}. The EER is normalized by dividing with the maximum EER in all filter combinations. As shown in Figure~\ref{fig:Filt_PolyU_MS}, the \emph{pyrexc} filter consistently achieves lower EER compared to the other three pyramidal decomposition filters for most combinations of the directional filters. Moreover, the \emph{sinc} filter exhibits the lowest EER which shows its best directional feature capturing characteristics over a broad spectral range. Based on these results, we select the \emph{pyrexc-sinc} filter combination for Contour Code representation in all experiments. Close inspection of the pyramidal filters shows that {\em pyr} and {\em pyrexc} are quite similar to each other. Similarly, {\em sinc}, {\em pkva} and {\em sk} directional filters have approximately the same shape. Therefore, it is not surprising to see that all combinations of these two pyramidal filters with the three directional filters give significantly better performance compared to the remaining combinations.

\begin{landscape}
\begin{figure}[h]
\centering
\includegraphics[trim = 20pt 100pt 38pt 80pt, clip,width=0.8\linewidth]{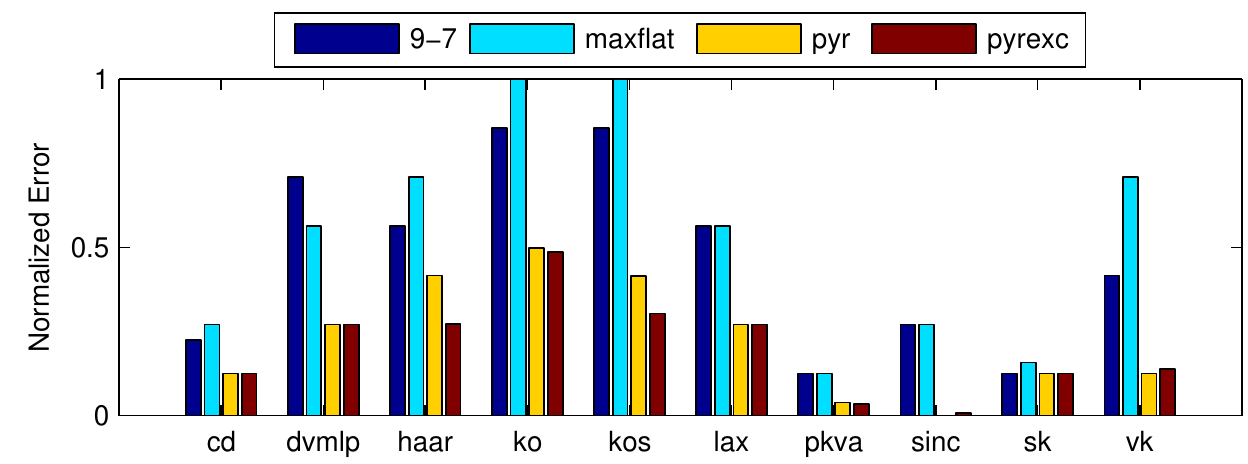}
\caption[Normalized EERs for various pyramidal-directional filter pairs]{Normalized EERs for various pyramidal-directional filter pairs. The \emph{9-7} and \emph{maxflat} pyramidal filters can be safely ruled out due to their relatively poor performance overall. The \emph{pyrexc} filter performs equal or better than the \emph{pyr} filter for atleast 8 out of the 10 combinations with the directional filters. Thus, the \emph{pyrexc} filter is an obvious choice for the pyramidal filter stage. For the directional filter stage, the \emph{sinc} filter clearly outperforms all other directional filters, particularly in combination with the \emph{pyrexc} pyramidal filter. From this analysis, the \emph{sinc-pyrexc} filter pair is used in the Contour Code representation.}
\label{fig:Filt_PolyU_MS}
\end{figure}
\end{landscape}

\subsection{Verification Experiments}
\label{sec:roc}

Verification experiments are performed on PolyU-MS, PolyU-HS and CASIA-MS databases. In all cases, we follow the protocol of~\cite{zhang2012comparative}, where session based experiments are structured to observe the recognition performance. Our evaluation comprises five verification experiments to test the proposed technique. The experiments proceed by matching

\noindent
\emph{Exp.1:} individual bands of palm irrespective of the session (all vs.~all).\\
\emph{Exp.2:} multispectral palmprints acquired in the $1^{\textrm{st}}$ session.\\
\emph{Exp.3:} multispectral palmprints acquired in the $2^{\textrm{nd}}$ session.\\
\emph{Exp.4:} multispectral palmprints of the $1^{\textrm{st}}$ session to the $2^{\textrm{nd}}$ session.\\
\emph{Exp.5:} multispectral palmprints irrespective of the session (all vs.~all).

%
%

In all cases, we report the ROC curves, which depict False Rejection Rate (FRR) versus the False Acceptance Rate (FAR). We also summarize the Equal Error Rate (EER), and the Genuine Acceptance Rate (GAR) at 0.1\% FAR and compare performance of the proposed Contour Code with the CompCode~\cite{kong2004competitive}, OrdCode~\cite{sun2005ordinal} and DoGCode~\cite{wu2006palmprint}. Note that we used our implementation of these methods as their code is not publicly available. Unless otherwise stated, we use a \emph{4-connected} blur neighborhood for gallery hash table encoding and the matching is performed in ATM mode followed by score-level fusion of bands.

%
%

\subsubsection{Experiment 1}

compares the relative discriminant capability of individual bands. We compare the performance of individual bands of all databases using {ContCode-ATM}. Figure~\ref{fig:roc_Exp1} shows the ROC curves of the individual bands and Table~\ref{tab:res_Exp1} lists their EERs. In the PolyU-MS database, the 660nm band gives the best performance indicating the presence of more discriminatory features. A logical explanation could be that the 660nm wavelength partially captures both the line and vein features making this band relatively more discriminative. In the PolyU-HS database, the 820nm and 920nm have the lowest errors followed by 770nm and 900nm. In CASIA-MS database, the most discriminant information is present in the 460nm, 630nm and 940nm bands which are close competitors.


\subsubsection{Experiment 2}

analyzes the variability in the palmprint data acquired in the $1^{\textrm{st}}$ session. Figure~\ref{fig:roc_Exp2} compares the ROC curves of {ContCode-ATM} with three other techniques on the all databases. It is observable that the CompCode and the OrdCode show intermediate performance close to {ContCode-ATM}. The DoGCode exhibits a drastic degradation of accuracy implying its inability to sufficiently cope with the variations of PolyU-HS and CASIA-MS data. Overall, the CompCode and ContCode-ATM perform better on all databases while the latter performs the best.

\begin{landscape}
\begin{figure}[t]
\centering
\begin{minipage}[b]{1\linewidth}
\subfigure{\label{fig:roc_PolyU_MS_Exp1}\includegraphics[trim = 0pt 0pt 10pt 0pt, clip, width=0.32\linewidth]{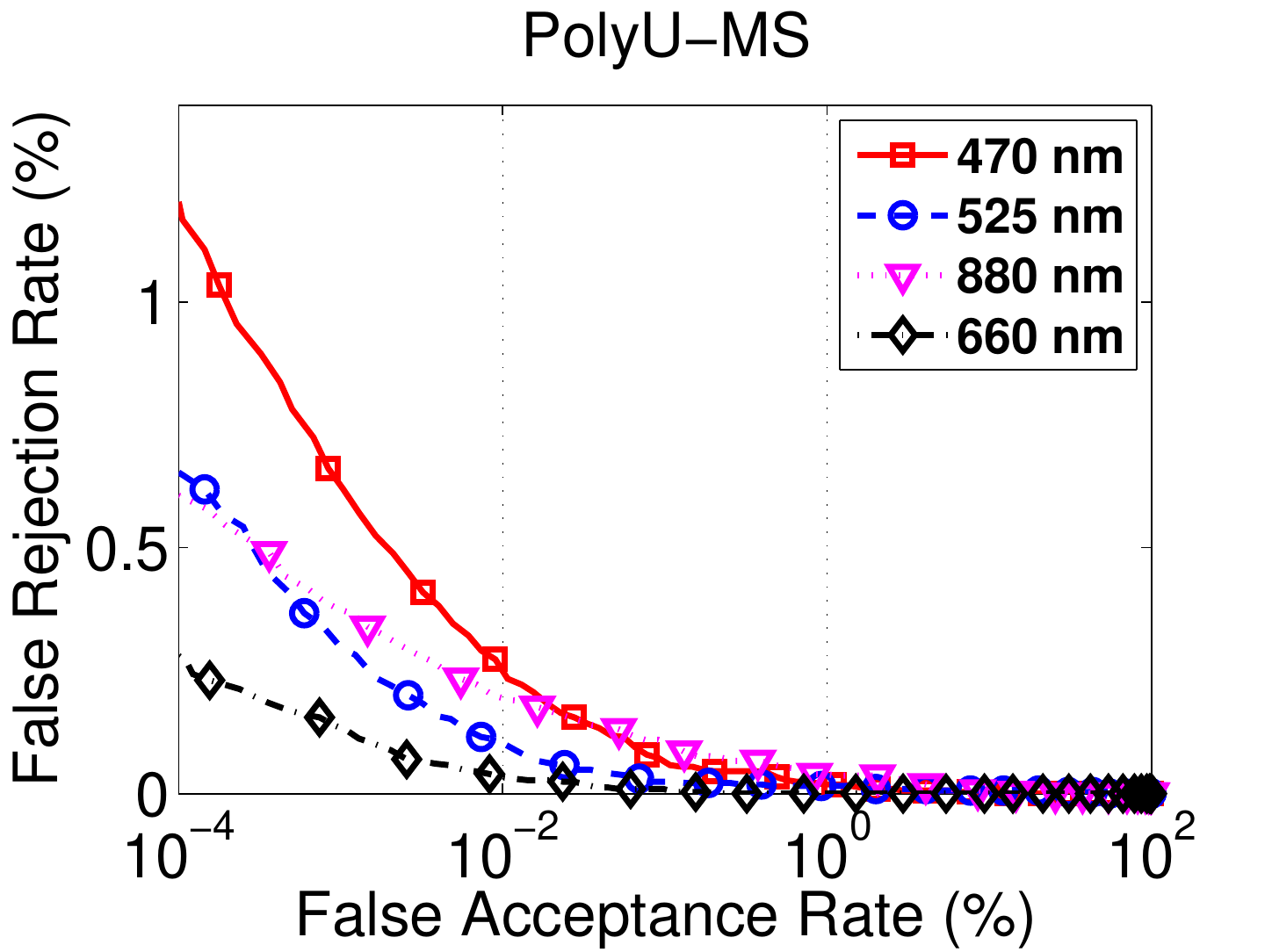}}
\subfigure{\label{fig:roc_PolyU_HS_Exp1}\includegraphics[trim = 0pt 0pt 10pt 0pt, clip, width=0.32\linewidth]{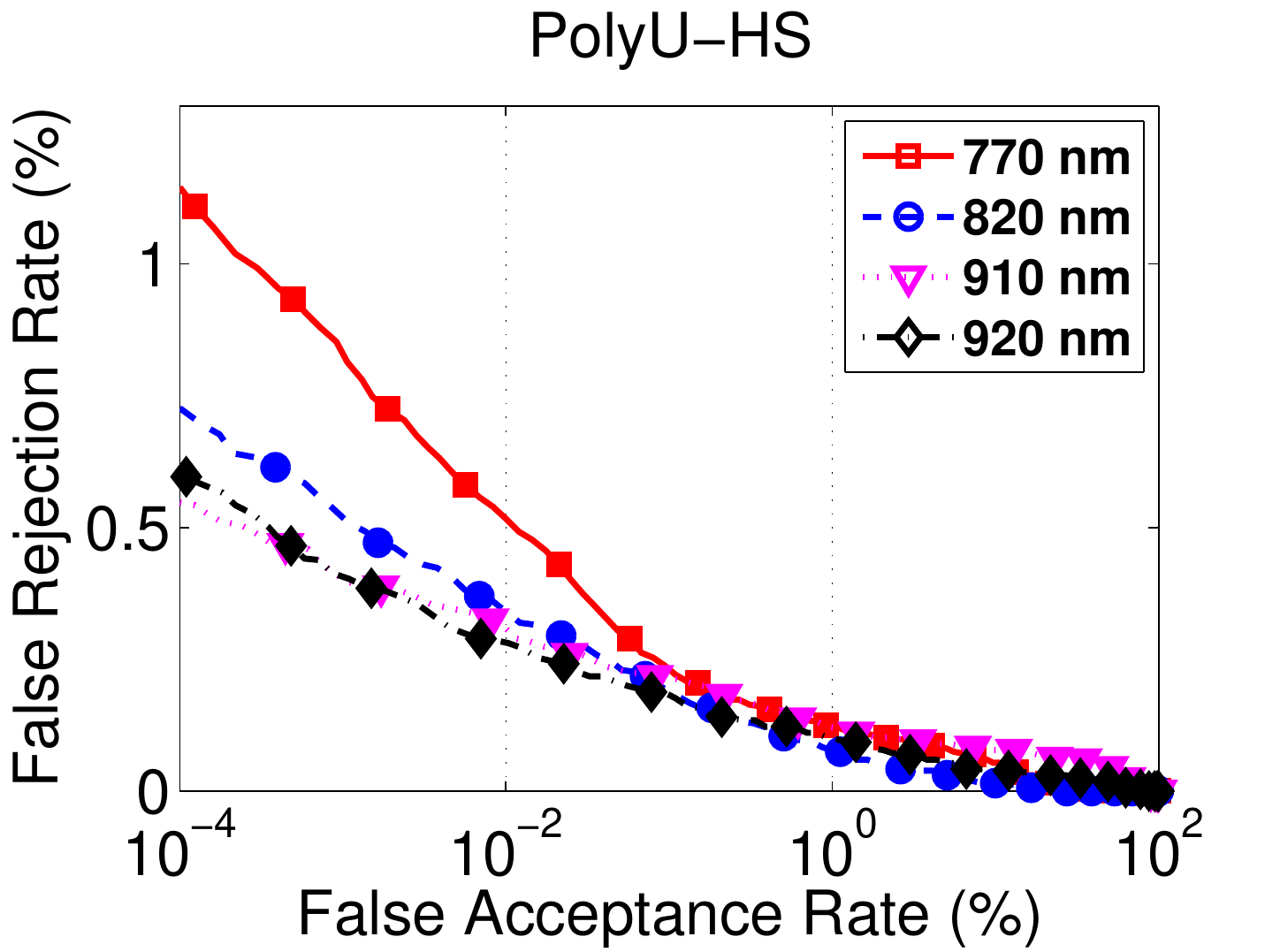}}
\subfigure{\label{fig:roc_CASIA_MS_Exp1}\includegraphics[trim = 0pt 0pt 10pt 0pt, clip, width=0.32\linewidth]{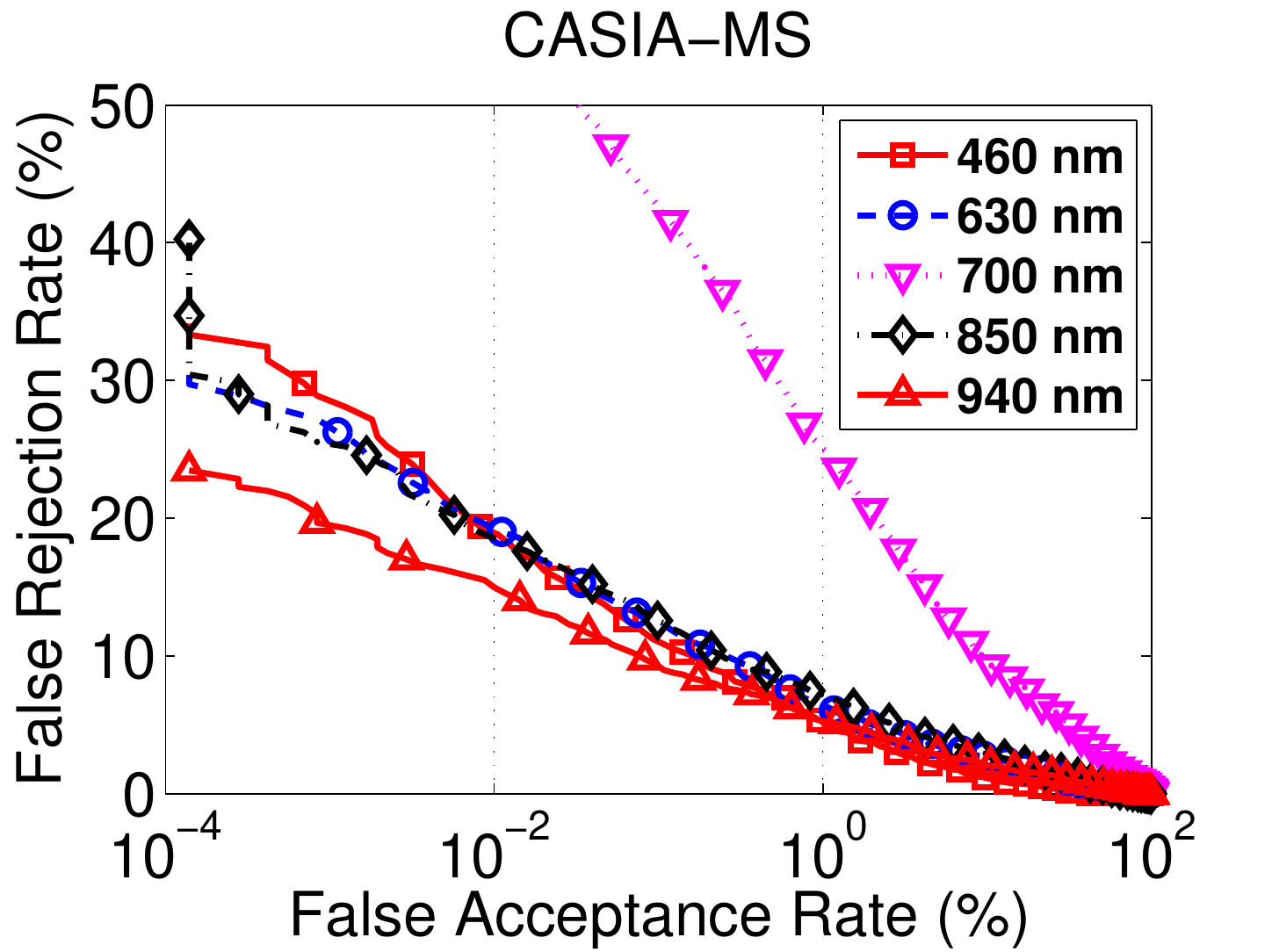}}
\end{minipage}
\caption{Exp.1: ROC curves of {ContCode-ATM} on individual bands}
\label{fig:roc_Exp1}
\end{figure}
\begin{table}[h]
\caption{Individual band performance of {ContCode-ATM}}
\label{tab:res_Exp1}
\footnotesize
\centering
\begin{tabular}[t]{|l|cc|}
\multicolumn{3}{c}{PolyU-MS} \\ \noalign{\smallskip}\hline
Band    & GAR(\%)   &  EER(\%)  \\ \hline
470 nm  & 99.94     &   0.0784  \\
525 nm  & 99.98     &   0.0420  \\
660 nm  & 99.99     &   0.0242  \\
880 nm  & 99.90     &   0.1030  \\ \hline
\end{tabular}
\begin{tabular}[t]{|l|cc|}
\multicolumn{3}{c}{PolyU-HS} \\ \noalign{\smallskip}\hline
Band    & GAR(\%)   &  EER(\%)  \\ \hline
770 nm  & 99.77     &   0.1876  \\
820 nm  & 99.81     &   0.1579  \\
900 nm  & 99.79     &   0.1931  \\
920 nm  & 99.82     &   0.1559  \\ \hline
\end{tabular}
\begin{tabular}[t]{|l|cc|}
\multicolumn{3}{c}{CASIA-MS} \\ \noalign{\smallskip}\hline
Band    & GAR(\%)   &  EER(\%)  \\ \hline
460 nm  & 88.95     &   2.9246  \\
630 nm  & 87.79     &   3.9065  \\
700 nm  & 57.35     &   9.7318  \\
850 nm  & 87.45     &   4.1398  \\
940 nm  & 90.73     &   3.4769  \\ \hline
\end{tabular}
\end{table}
\end{landscape}

\begin{landscape}
\begin{figure}[t]
\centering
\begin{minipage}[b]{1\linewidth}
\subfigure{\label{fig:roc_PolyU_MS_Exp2}\includegraphics[trim = 0pt 0pt 10pt 0pt, clip, width=0.32\linewidth]{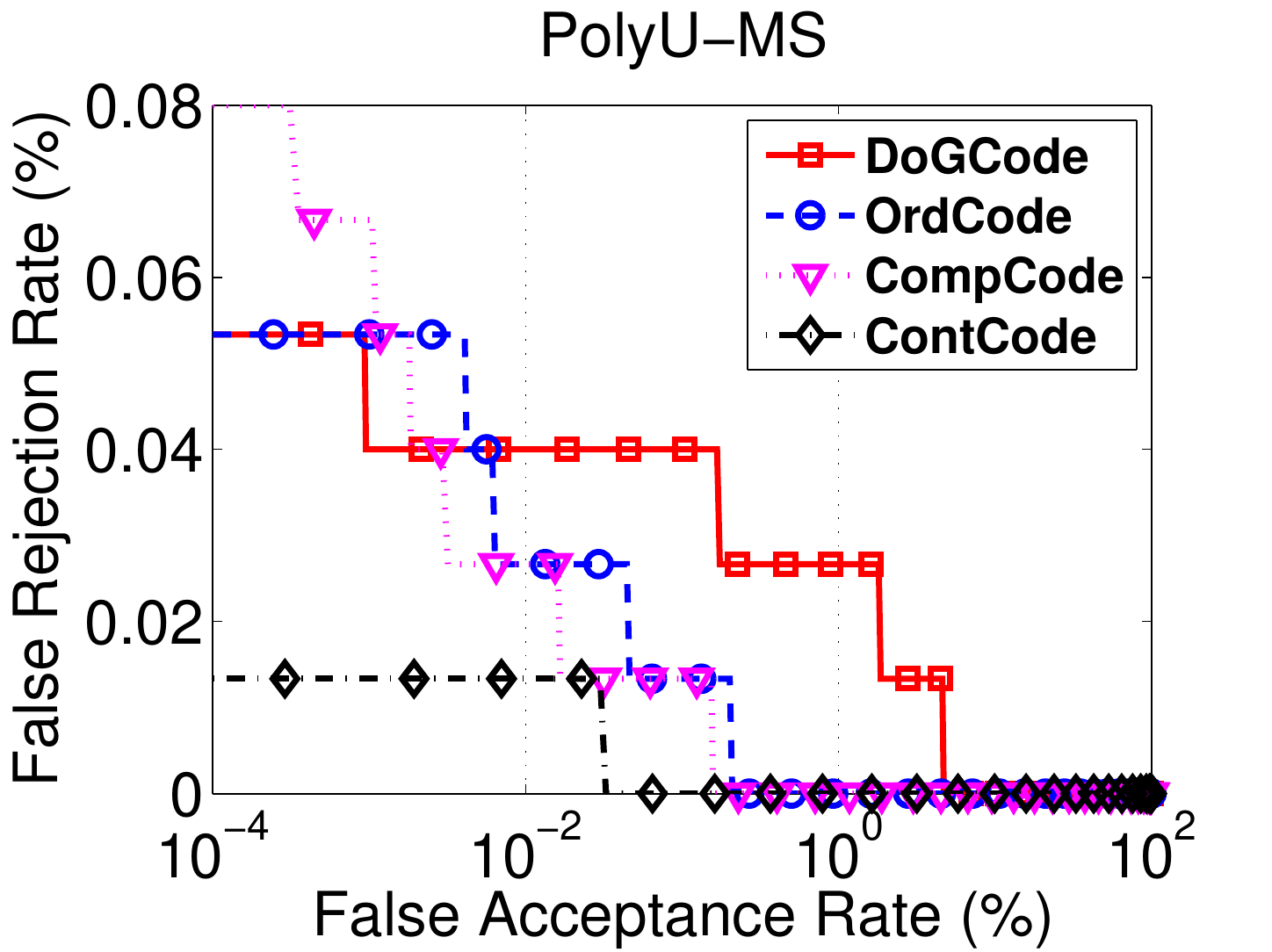}}
\subfigure{\label{fig:roc_PolyU_HS_Exp2}\includegraphics[trim = 0pt 0pt 10pt 0pt, clip, width=0.32\linewidth]{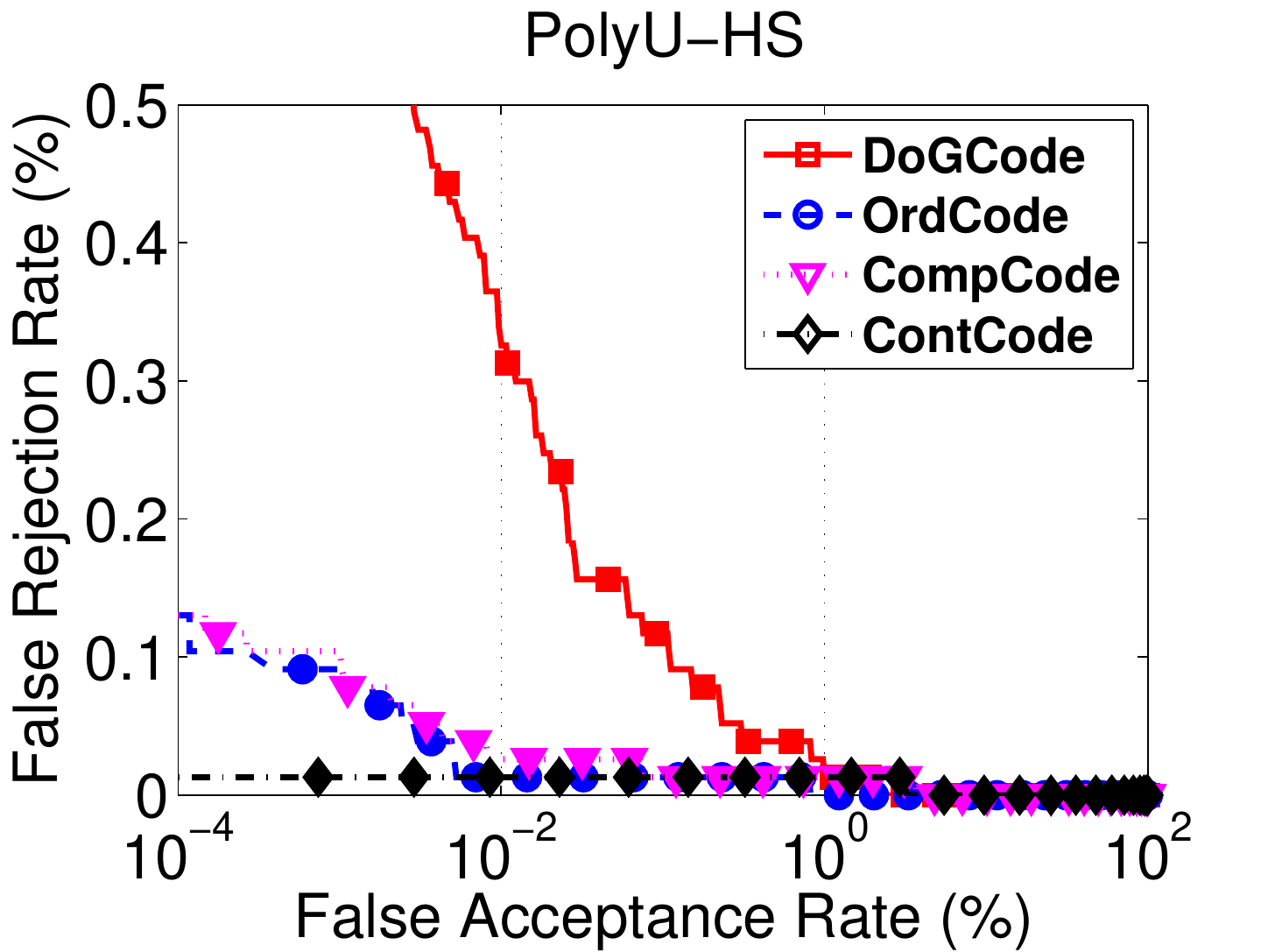}}
\subfigure{\label{fig:roc_CASIA_MS_Exp2}\includegraphics[trim = 0pt 0pt 10pt 0pt, clip, width=0.32\linewidth]{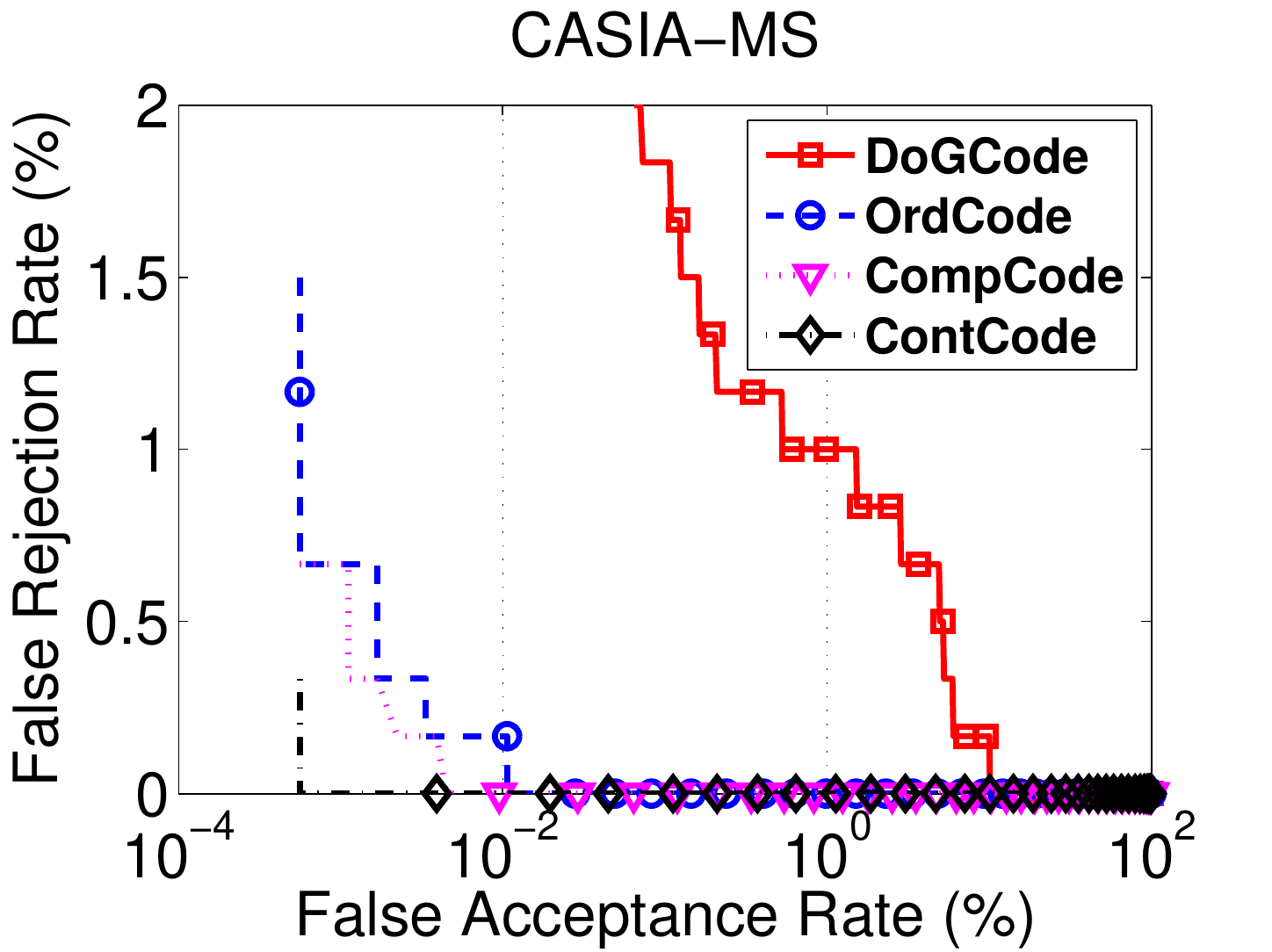}}
\end{minipage}
\caption{Exp.2: Matching palmprints of $1^{\textrm{st}}$ session.}
\label{fig:roc_Exp2}
\end{figure}
\begin{figure}[t]
\centering
\begin{minipage}[b]{1\linewidth}
\subfigure{\label{fig:roc_PolyU_MS_Exp3}\includegraphics[trim = 0pt 0pt 10pt 0pt, clip, width=0.32\linewidth]{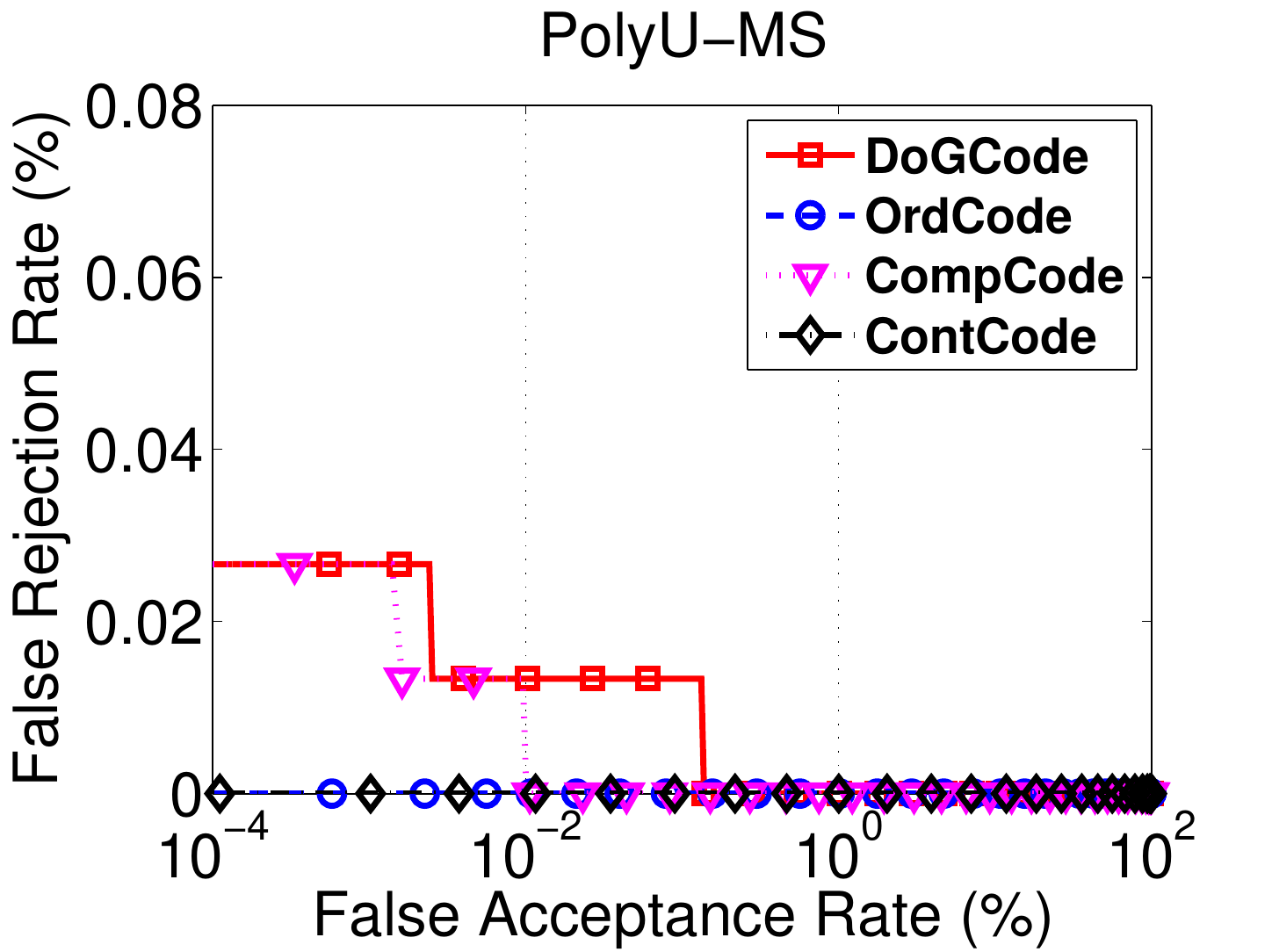}}
\subfigure{\label{fig:roc_PolyU_HS_Exp3}\includegraphics[trim = 0pt 0pt 10pt 0pt, clip, width=0.32\linewidth]{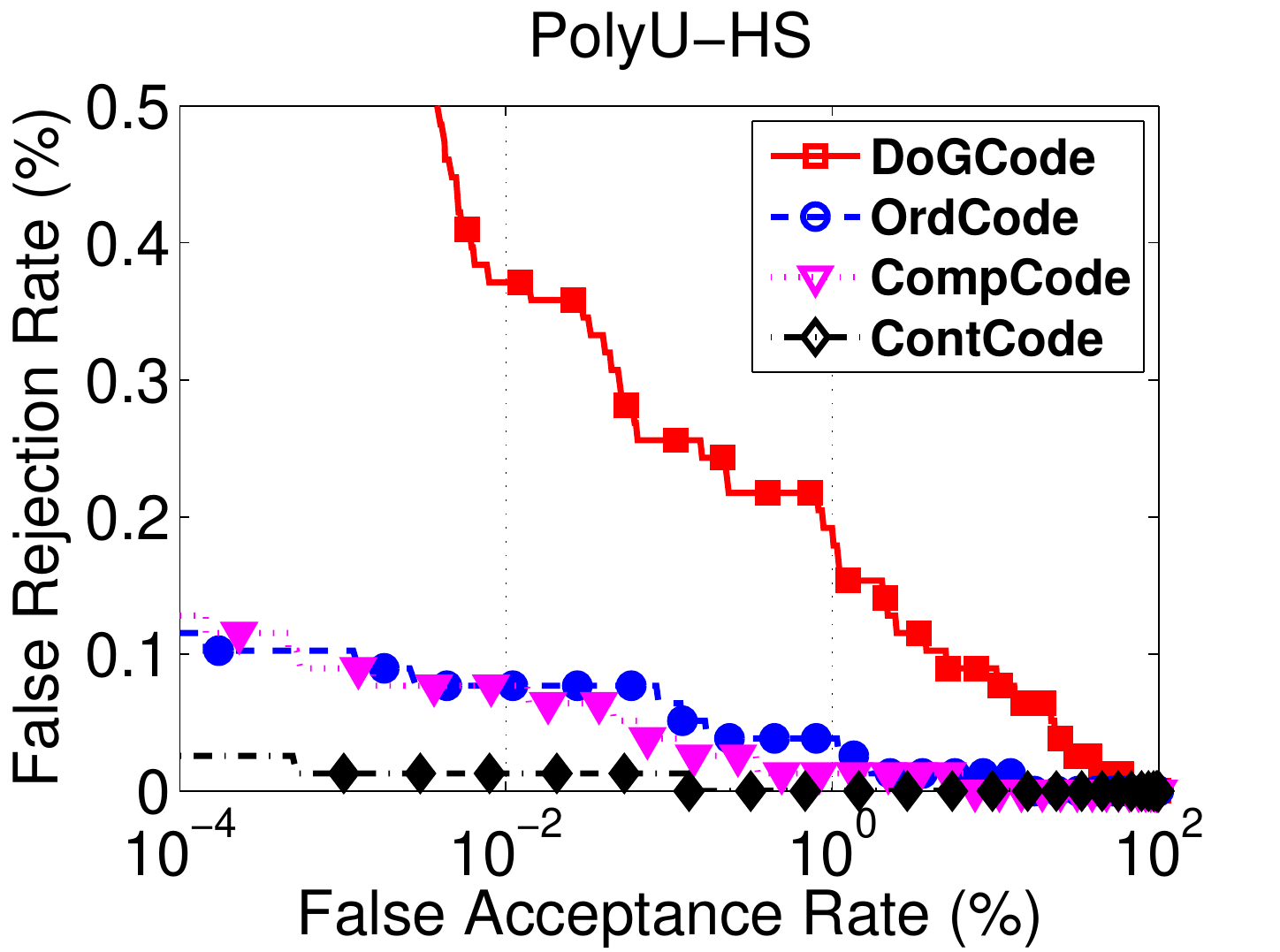}}
\subfigure{\label{fig:roc_CASIA_MS_Exp3}\includegraphics[trim = 0pt 0pt 10pt 0pt, clip, width=0.32\linewidth]{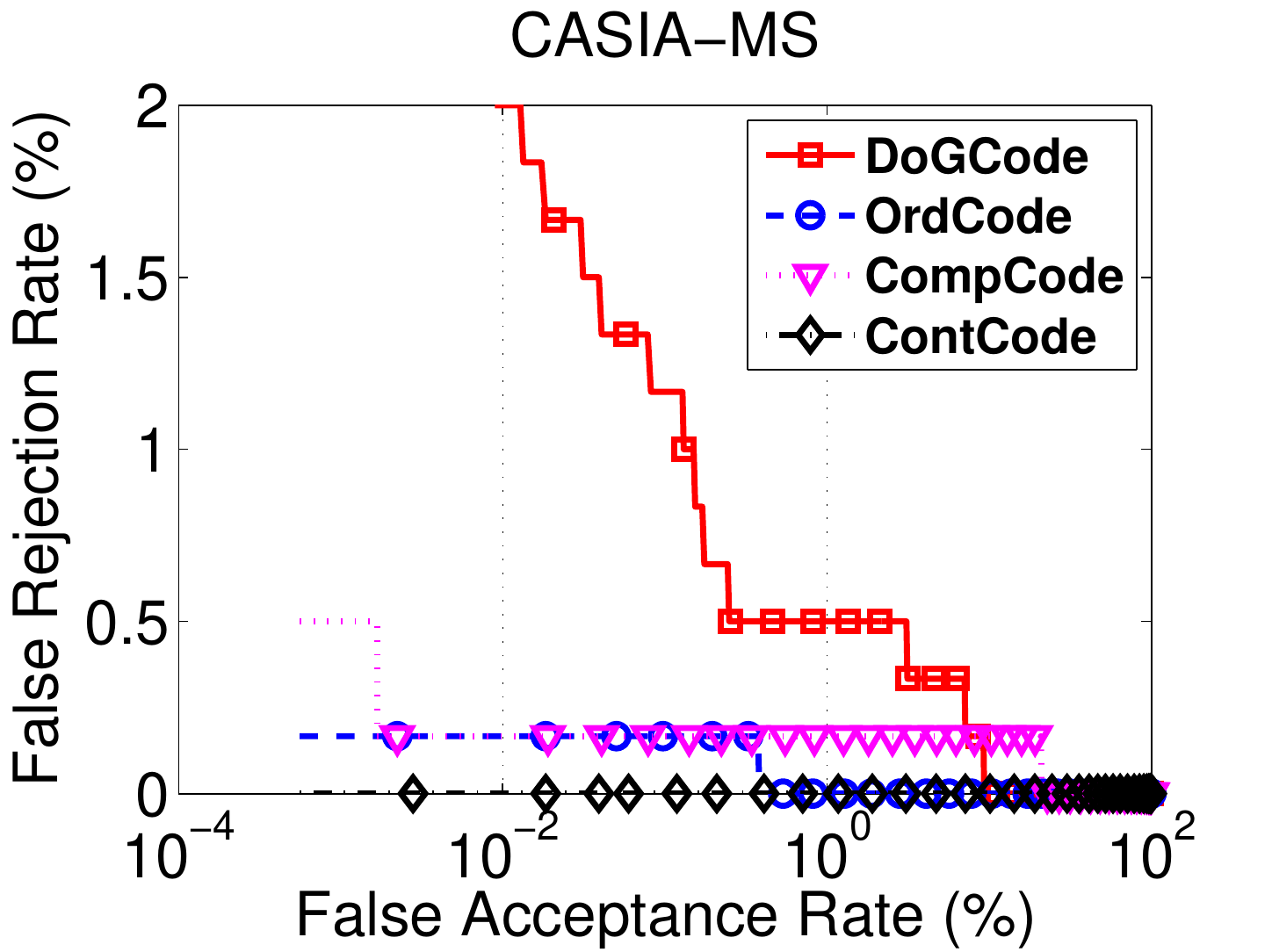}}
\end{minipage}
\caption{Exp.3: Matching palmprints of $2^{\textrm{nd}}$ session.}
\label{fig:roc_Exp3}
\end{figure}
\end{landscape}

\subsubsection{Experiment 3}

analyzes the variability in the palmprint data acquired in the $2^{\textrm{nd}}$ session. This allows for a comparison with the results of \emph{Exp.2} to analyze the intra-session variability. Therefore, only the palmprints acquired in the $2^{\textrm{nd}}$ session are matched. Figure~\ref{fig:roc_Exp3} compares the ROC curves of the {ContCode-ATM} with the other techniques. The small improvement in verification performance on the images of $2^{\textrm{nd}}$ session can be attributed to the better quality of images and increased user familiarity with the acquisition system. A similar trend is observed in verification performance of all techniques as in \emph{Exp.2}.


\subsubsection{Experiment 4}

is designed to mimic a real life verification scenario where variation in image quality or subject behavior over time exists. This experiment analyzes the inter-session variability of multispectral palmprints. Therefore, all images from the $1^{\textrm{st}}$ session are matched to all images of the $2^{\textrm{nd}}$ session. Figure~\ref{fig:roc_Exp4} compares the ROC curves of the {ContCode-ATM} with three other techniques. Note that the performance of other techniques is relatively lower for this experiment compared to \emph{Exp.2} and \emph{Exp.3} because this is a difficult scenario due to the intrinsic variability in image acquisition protocol and the human behavior over time. However, the drop in performance of {ContCode-ATM} is the minimum. Therefore, it is fair to deduce that {ContCode-ATM} is relatively robust to the image variability over time.

\subsubsection{Experiment 5}

evaluates the overall verification performance by combining images from both sessions allowing a direct comparison to existing techniques whose results are reported on the same databases. All images in the database are matched to all other images, irrespective of the acquisition session which is commonly termed as an ``all versus all'' experiment. Figure~\ref{fig:roc_Exp5} compares the ROC curves of the {ContCode-ATM} with three other techniques in the all versus all scenario. Similar to the previous experiments, the {ContCode-ATM} consistently outperforms all other techniques.

The results of \emph{Exp.2} to \emph{Exp.5} are summarized in Table~\ref{tab:res_Exp} for the all databases. The {ContCode-ATM} consistently outperforms the other methods in all experiments. Moreover, CompCode is consistently the second best performer except for very low FAR values in \emph{Exp.4} and \emph{Exp.5} on the PolyU-MS database (see Figure~\ref{fig:roc_Exp4} and Figure~\ref{fig:roc_Exp5}). It is interesting to note the OrdCode performs better than the DoGCode on all databases.

\begin{landscape}
\begin{figure}[t]
\centering
\begin{minipage}[b]{1\linewidth}
\subfigure{\label{fig:roc_PolyU_MS_Exp4}\includegraphics[trim = 0pt 0pt 10pt 0pt, clip, width=0.32\linewidth]{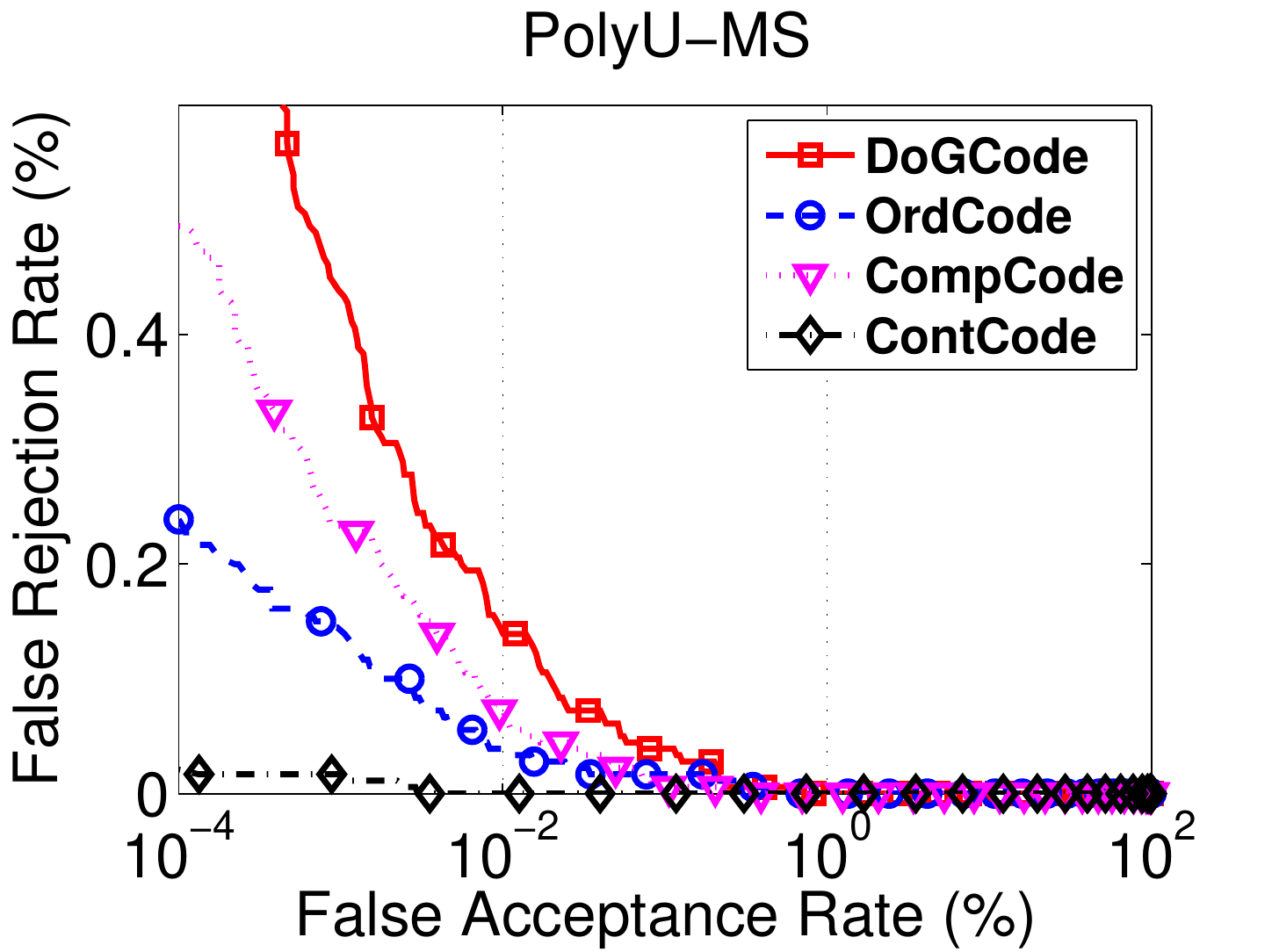}}
\subfigure{\label{fig:roc_PolyU_HS_Exp4}\includegraphics[trim = 0pt 0pt 10pt 0pt, clip, width=0.32\linewidth]{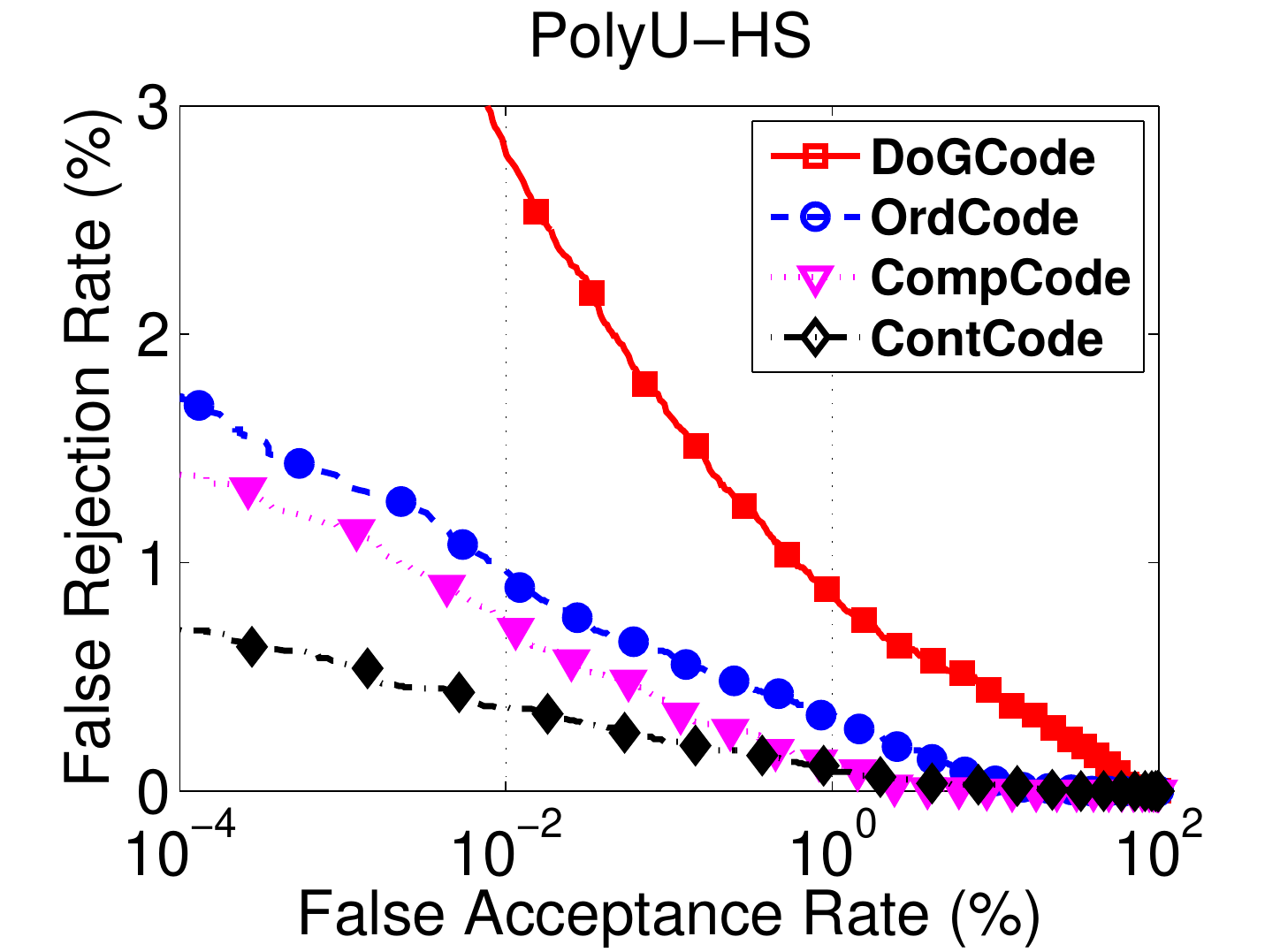}}
\subfigure{\label{fig:roc_CASIA_MS_Exp4}\includegraphics[trim = 0pt 0pt 10pt 0pt, clip, width=0.32\linewidth]{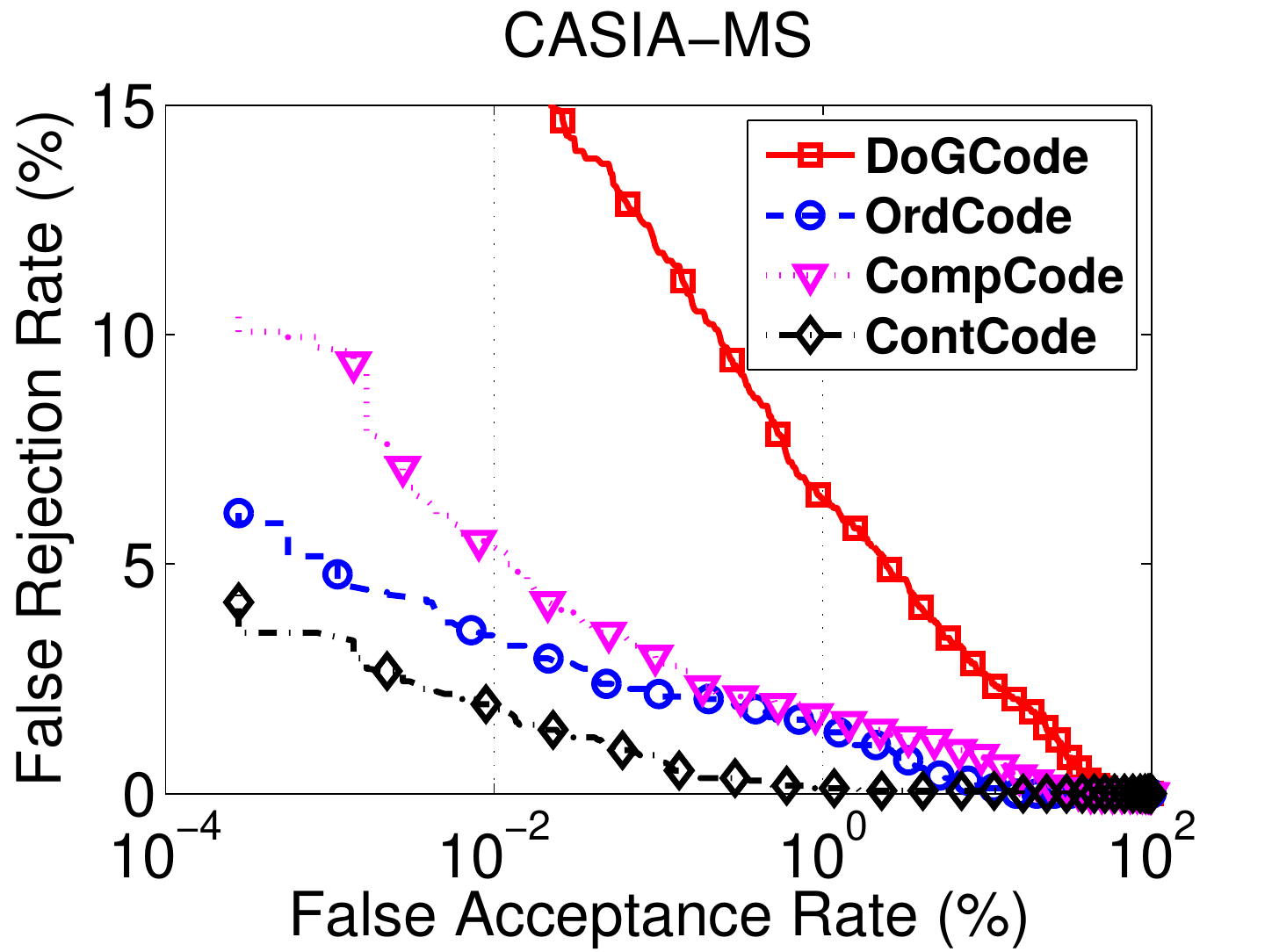}}
\end{minipage}
\caption[Exp.4: Matching palmprints of the $1^{\textrm{st}}$ session to the $2^{\textrm{nd}}$ session]{Exp.4: Matching palmprints of the $1^{\textrm{st}}$ session to the $2^{\textrm{nd}}$ session. The verification performance is low relative to \emph{Exp.2} and \emph{Exp.3}.}
\label{fig:roc_Exp4}
\end{figure}
\begin{figure}[t]
\centering
\begin{minipage}[b]{1\linewidth}
\subfigure{\label{fig:roc_PolyU_MS_Exp5}\includegraphics[trim = 0pt 0pt 10pt 0pt, clip, width=0.32\linewidth]{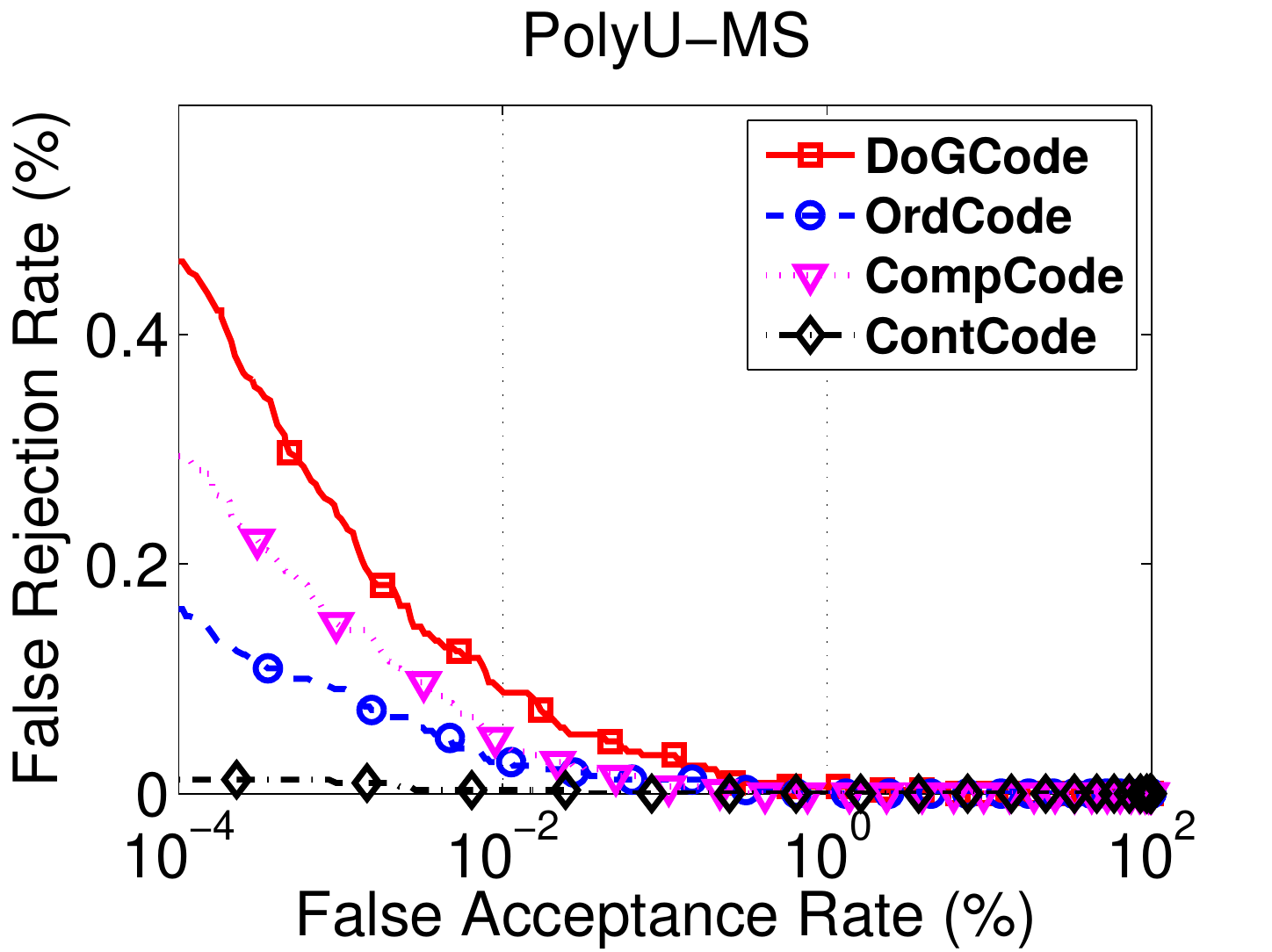}}
\subfigure{\label{fig:roc_PolyU_HS_Exp5}\includegraphics[trim = 0pt 0pt 10pt 0pt, clip, width=0.32\linewidth]{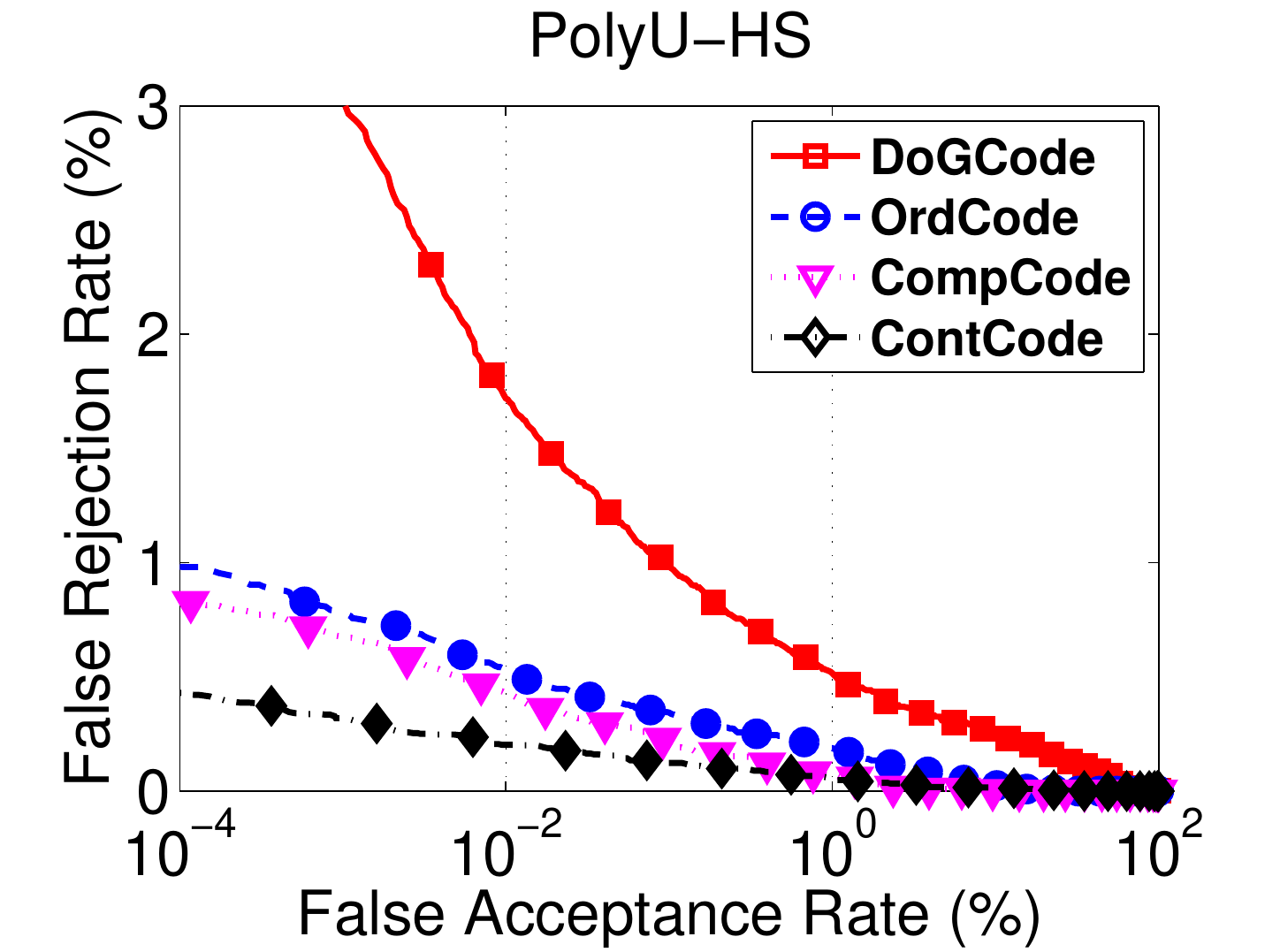}}
\subfigure{\label{fig:roc_CASIA_MS_Exp5}\includegraphics[trim = 0pt 0pt 10pt 0pt, clip, width=0.32\linewidth]{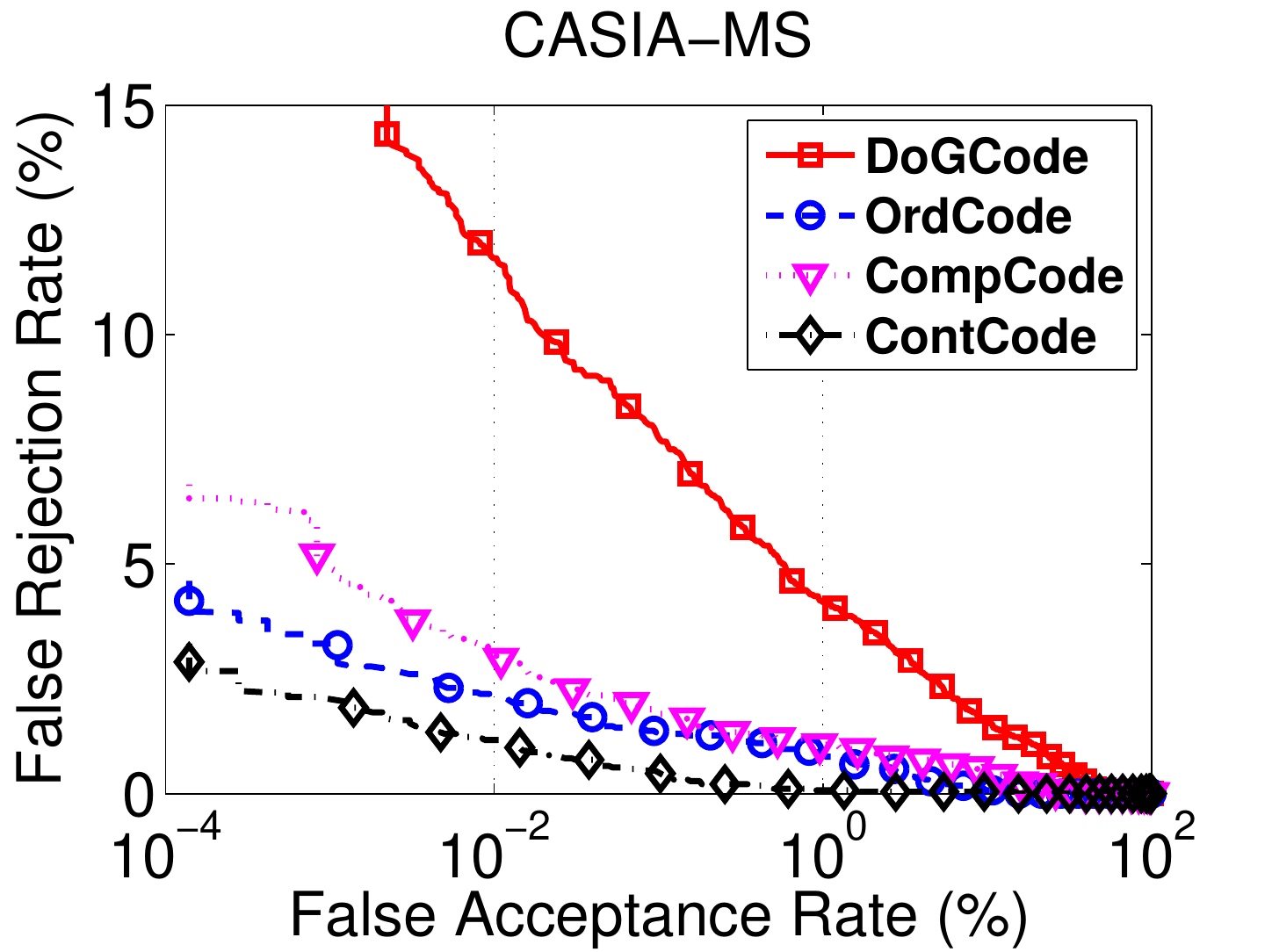}}
\end{minipage}
\caption{Exp.5: Matching palmprints irrespective of the acquisition session.}
\label{fig:roc_Exp5}
\end{figure}
\end{landscape}

\begin{table*}[h]
\caption{Summary of verification results for \emph{Exp.2} to \emph{Exp.5}}
\footnotesize
\label{tab:res_Exp}
\centering
\begin{tabular}{|l|c|c|c|c|c|} \hline
\multicolumn{2}{|c|}{} & \multicolumn{4}{c|}{PolyU-MS} \\ \cline{3-6}
\multicolumn{2}{|c|}{} & {DoGCode} & {OrdCode}  & {CompCode}  & {ContCode-ATM}  \\ \hline
\multirow{2}{*}{Exp.2}  & EER(\%)  & 0.0400 & 0.0267 & 0.0165 & 0.0133  \\
                        & GAR(\%)  & 99.96  & 99.99  & 99.99  & 100.00  \\ \hline
\multirow{2}{*}{Exp.3}  & EER(\%)  & 0.0133 & 0      & 0.0098 & 0       \\
                        & GAR(\%)  & 99.99  & 100.00 & 100.00 & 100.00  \\ \hline
\multirow{2}{*}{Exp.4}  & EER(\%)  & 0.0528 & 0.0247 & 0.0333 & 0.0029  \\
                        & GAR(\%)  & 99.96  & 99.98  & 99.99  & 100.00  \\ \hline
\multirow{2}{*}{Exp.5}  & EER(\%)  & 0.0455 & 0.0212 & 0.0263 & 0.0030  \\
                        & GAR(\%)  & 99.97  & 99.99  & 99.99  & 100.00  \\ \hline

\multicolumn{2}{|c|}{} & \multicolumn{4}{c|}{PolyU-HS} \\ \cline{3-6}
\multicolumn{2}{|c|}{}  & {DoGCode} & {OrdCode}  & {CompCode}  & {ContCode-ATM}  \\ \hline
\multirow{2}{*}{Exp.2}  & EER(\%)  & 0.1084 & 0.0130 & 0.0261 & 0.0130 \\
                        & GAR(\%)  & 99.88  & 99.99  & 99.99  & 99.99  \\ \hline
\multirow{2}{*}{Exp.3}  & EER(\%)  & 0.2269 & 0.0768 & 0.0512 & 0.0128 \\
                        & GAR(\%)  & 99.74  & 99.94  & 99.96  & 99.99 \\ \hline
\multirow{2}{*}{Exp.4}  & EER(\%)  & 0.8912 & 0.4318 & 0.2623 & 0.1847 \\
                        & GAR(\%)  & 98.35  & 99.40  & 99.62  & 99.78  \\ \hline
\multirow{2}{*}{Exp.5}  & EER(\%)  & 0.5965 & 0.2599 & 0.1669 & 0.1213 \\
                        & GAR(\%)  & 99.02  & 99.66  & 99.78  & 99.86  \\ \hline

\multicolumn{2}{|c|}{} & \multicolumn{4}{c|}{CASIA-MS} \\ \cline{3-6}
\multicolumn{2}{|c|}{} & {DoGCode} & {OrdCode}  & {CompCode}  & {ContCode-ATM}  \\ \hline
\multirow{2}{*}{Exp.2}  & EER(\%)  & 1.000  & 0.1667 & 0.0140 & 0       \\
                        & GAR(\%)  & 98.00  & 99.67  & 100.00 & 100.00  \\ \hline
\multirow{2}{*}{Exp.3}  & EER(\%)  & 0.6667 & 0.1667 & 0.1667 & 0.0011  \\
                        & GAR(\%)  & 98.50  & 99.83  & 99.83  & 100.00  \\ \hline
\multirow{2}{*}{Exp.4}  & EER(\%)  & 3.8669 & 1.2778 & 0.6667 & 0.2778  \\
                        & GAR(\%)  & 87.70  & 97.39  & 97.72  & 99.61   \\ \hline
\multirow{2}{*}{Exp.5}  & EER(\%)  & 2.8873 & 0.8667 & 0.4993 & 0.2000  \\
                        & GAR(\%)  & 92.01  & 98.37  & 98.60  & 99.76   \\ \hline

\end{tabular}
\end{table*}

We also compare performance of the Contour Code with that of the other methods reported in the literature. The results can be directly compared because all methods have reported EERs for the ``all versus all'' scenario. Table~\ref{tab:res_PolyU_MS_all} compares the EERs of various methods on the PolyU-MS database. The proposed {ContCode-STM} achieves an EER of $0.0061\%$ whereas in the case of {ContCode-ATM} the EER reduces to $0.0030\%$ on the PolyU-MS database. The proposed {ContCode-ATM} thus achieves a $75\%$ reduction in EER compared to nearest competitor, the CompCode. The error rates are extremely low (0.06 in 1000 chance of error) and indicate the viability of using multispectral palmprints in a high security system.

\begin{table}[h]
\caption{Comparative performance on the PolyU-MS database.}
\footnotesize
\label{tab:res_PolyU_MS_all}
\centering
\begin{tabular}{|l|c|c|} \hline
\multirow{2}{*}{Method}                                     & \multicolumn{2}{c|}{EER (\%)} \\ \cline{2-3}
                                                            & \emph{No Blur}&\emph{With Blur}\\ \hline
*Palmprint and Palmvein~\cite{zhang2011online}              &   0.0158      &       -       \\
*CompCode-wavelet fusion~\cite{han2008multispectral}        &   0.0696      &       -       \\
*CompCode-feature-level fusion~\cite{luo2012multispectral}  &   0.0151      &       -       \\
*CompCode-score-level fusion~\cite{zhang2010online}         &\textbf{0.0121}&       -       \\
OrdCode~\cite{sun2005ordinal}                               &   0.0248      &   0.0212      \\
DoGCode~\cite{wu2006palmprint}                              &   0.0303      &   0.0455      \\
{ContCode-STM}                                              &   0.0182      &   0.0061      \\
{ContCode-ATM}                                              &   0.0182      &\textbf{0.0030}\\ \hline
\end{tabular}
\\
{\scriptsize *Results taken from published papers using the ``all versus all'' protocol.}
\end{table}

Table~\ref{tab:res_PolyU_HS_all} provides the EERs of all methods with and without blur on PolyU-HS database. It can be observed that the other algorithms do not favor blur in matching. However, the proposed {ContCode-ATM} always improves with the introduction of blur. The EER reduction in case of {ContCode-ATM} is nearly 5\% from its nearest competitor CompCode without blur. Notice that the difference in EER among all algorithms is relatively small compared to that of these algorithms on PolyU-MS database.

\begin{table}[h]
\caption{Comparative performance on the PolyU-HS database.}
\footnotesize
\label{tab:res_PolyU_HS_all}
\centering
\begin{tabular}{|l|c|c|} \hline
\multirow{2}{*}{Method}             & \multicolumn{2}{c|}{EER (\%)}     \\ \cline{2-3}
                                    & \emph{No Blur} &\emph{With Blur}  \\ \hline
DoGCode~\cite{wu2006palmprint}      &  0.4679        &  0.5965          \\
OrdCode~\cite{sun2005ordinal}       &  0.1751        &  0.2599          \\
CompCode~\cite{kong2004competitive} & \textbf{0.1267}&  0.1669          \\
{ContCode-STM}                      &  0.1522        &  0.1448          \\
{ContCode-ATM}                      &  0.1371        &\textbf{0.1213}   \\ \hline
\end{tabular}
\end{table}

Table~\ref{tab:res_CASIA_MS_All} compares the EERs of various methods reported on the CASIA-MS database. Both variants of the Contour Code outperform other methods and achieve an EER reduction of 60\% compared to the nearest competitor, CompCode. The EER of Contour Code on the CASIA-MS database is higher relative to the PolyU-MS database because the former was acquired using a non-contact sensor. Interestingly, the performance of DoGCode and OrdCode deteriorates with blurring. While in the case of CompCode and ContCode, the performance improves with the introduction of blur. However, the improvement in the proposed ContCode is much larger. Another important observation is that the ATM mode of matching always performs better than the STM mode and is thus the preferable choice for multispectral palmprint matching.

\begin{table}[h]
\caption{Comparative performance on the CASIA-MS database.}
\footnotesize
\label{tab:res_CASIA_MS_All}
\begin{center}
\begin{tabular}{|l|c|c|} \hline
\multirow{2}{*}{Method}                                 & \multicolumn{2}{c|}{EER (\%)}     \\ \cline{2-3}
                                                        & \emph{No Blur}    &\emph{With Blur}\\ \hline
*Wavelet fusion with ACO~\cite{kisku2010multispectral}  &   3.125           &   -           \\
*Curvelet fusion with OLOF~\cite{hao2008multispectral}  &   0.50$\dag$      &   -           \\
OrdCode~\cite{sun2005ordinal}                           &   0.5667          &   0.8667      \\
DoGCode~\cite{wu2006palmprint}                          &   1.9667          &   2.8873      \\
CompCode~\cite{kong2004competitive}                     &   0.8667          &   0.4993      \\
{ContCode-STM}                                          &   0.6279          &   0.2705      \\
{ContCode-ATM}                                          &   \textbf{0.4333} &\textbf{0.2000}\\ \hline
\end{tabular}
\end{center}
\scriptsize{*Results taken from published papers.\\
$^\dag$This result was reported on a database of 330 hands whereas only a subset of 200 hands has been made public}
\end{table}

\subsection{Identification Experiments}
\label{sec:cmc}
We perform identification experiments using 5-fold cross validation and report Cumulative Match Characteristics (CMC) curves and rank-1 identification rates. In each fold, we randomly select one multispectral palmprint image per subject to form the gallery and treat all the remaining images as probes. So, identification is based on a single multispectral image in the gallery for any probe subject. The identification rates are then averaged over the five folds. This protocol is followed in all databases.

The CMC curves, for comparison with three other techniques, on all databases are given in Figure~\ref{fig:cmc} and the identification results are summarized in Table~\ref{tab:res_ident}. The {ContCode-ATM} achieved an average of $99.88\%$ identification rate on the CASIA-MS database 99.91\% on PolyU-HS database and $100\%$ on the PolyU-MS database. The proposed {ContCode-ATM} clearly demonstrates better identification performance in comparison to state-of-the-art techniques.

\begin{landscape}
\begin{figure}[t]
\centering
\begin{minipage}[b]{1\linewidth}
\subfigure{\label{fig:cmc_PolyU_MS}\includegraphics[trim = 0pt 0pt 10pt 0pt, clip, width=0.32\linewidth]{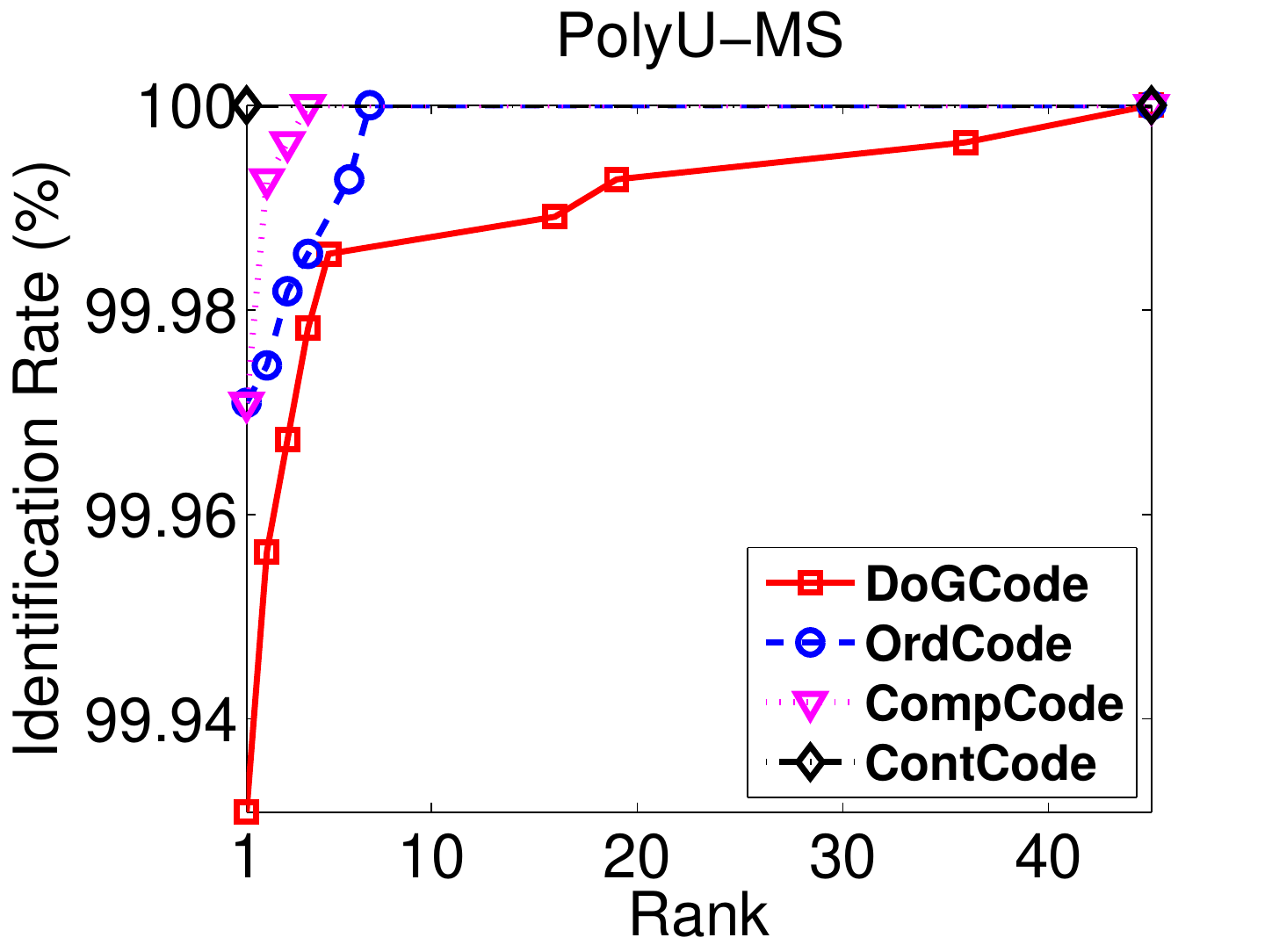}}
\subfigure{\label{fig:cmc_PolyU_HS}\includegraphics[trim = 0pt 0pt 10pt 0pt, clip, width=0.32\linewidth]{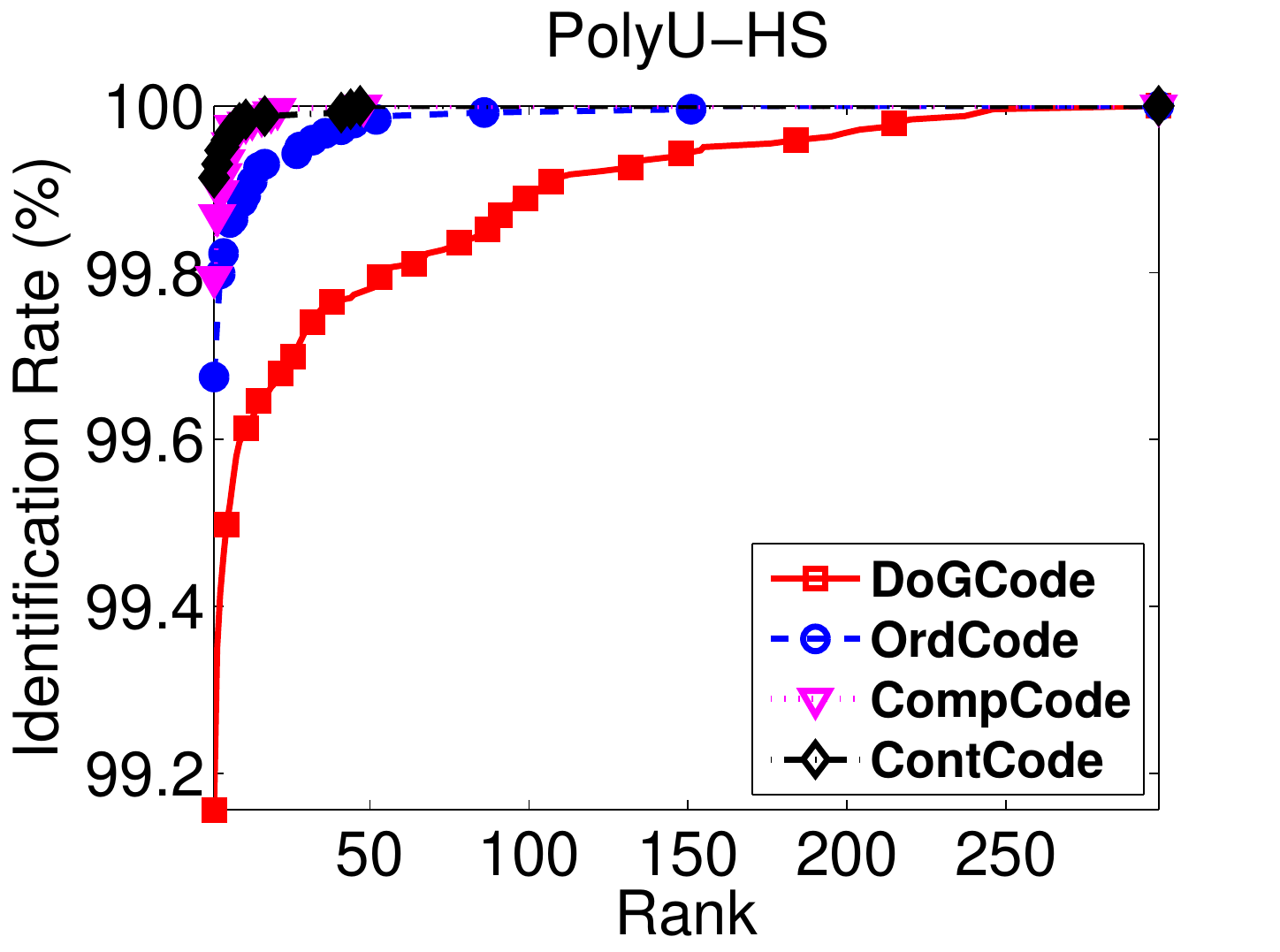}}
\subfigure{\label{fig:cmc_CASIA_MS}\includegraphics[trim = 0pt 0pt 10pt 0pt, clip, width=0.32\linewidth]{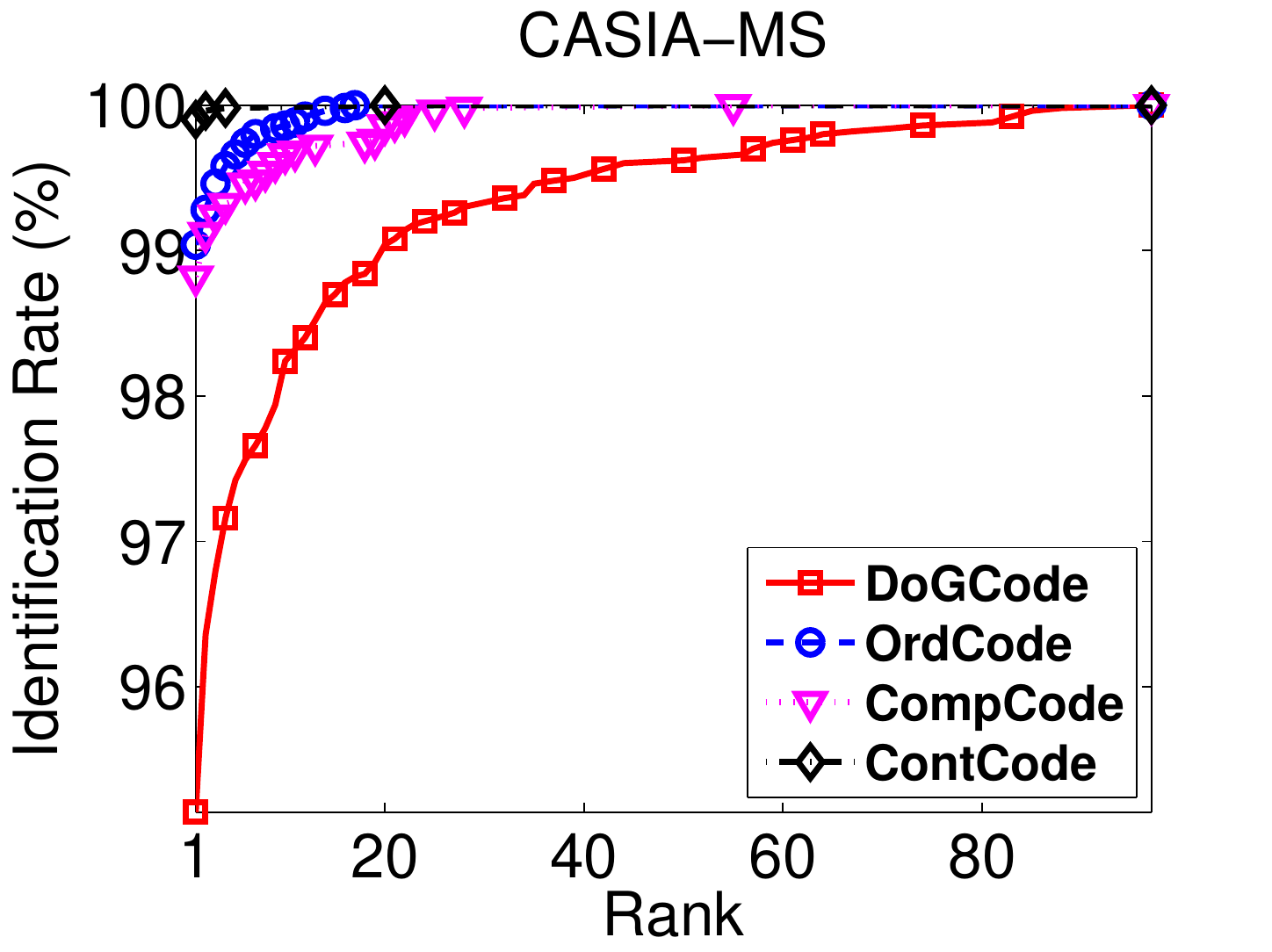}}
\end{minipage}
\caption{CMC curves for the identification experiment on all databases.}
\label{fig:cmc}
\end{figure}
\begin{table}[h]
\caption[Recognition rates on the PolyU-MS, PolyU-HS and CASIA-MS datasets]{Comparison of rank-1 recognition rates and standard deviations on the CASIA-MS, PolyU-HS and PolyU-MS databases. Recognition rates are averaged over 5-folds.}
\footnotesize
\label{tab:res_ident}
\centering
\begin{tabular}{|l|c|c|c|} \hline
Method                              &   PolyU-MS (\%) &   PolyU-HS (\%)   &   CASIA-MS (\%)  \\ \hline
OrdCode~\cite{sun2005ordinal}       &   99.93$\pm$0.05&   99.16$\pm$0.23  &   99.02$\pm$0.11 \\
DoGCode~\cite{wu2006palmprint}      &   99.97$\pm$0.04&   99.67$\pm$0.06  &   95.08$\pm$0.75 \\
CompCode~\cite{kong2004competitive} &   99.97$\pm$0.03&   99.79$\pm$0.07  &   99.52$\pm$0.11 \\
{ContCode-ATM}                      &   100.0$\pm$0   &   99.91$\pm$0.04  &   99.88$\pm$0.08 \\ \hline
\end{tabular}
\end{table}
\end{landscape}

\subsection{Efficiency}

The computational complexity of matching is critical in practical identification scenarios because databases can be quite large. The binary Contour Code matching has been designed to carry out operations that are of low computational complexity. It comprises an indexing part whose complexity is independent of the database size. It depends on the Contour Code size and is, therefore, fixed for a given size. The indexing operation results in a relatively sparse binary matrix whose column wise summation can be efficiently performed. Summing a column of this matrix calculates the match score with an embedded score level fusion of the multispectral bands of an individual (in STM mode). A MATLAB implementation on a $2.67$ GHz machine with 8 GB RAM can perform over $70,000$ matches per second per band (using a single CPU core). The Contour Code extraction takes only $43$ms per band. In terms of memory requirement, the Contour Code takes only 676 bytes per palm per band.

\section{Conclusion}
\label{sec:conc}

We presented Contour Code, a novel multidirectional representation and binary hash table encoding for robust and efficient multispectral palmprint recognition. An automatic technique was designed for the extraction of a region of interest from palm images acquired with noncontact sensors. Unlike existing methods, we reported quantitative results of ROI extraction by comparing the automatically extracted ROIs with manually extracted ground truth. The Contour Code exhibited robust multispectral feature capturing capability and consistently outperformed existing state-of-the-art techniques in various experimental setups using PolyU-MS, PolyU-HS and CASIA-MS palmprint databases. Binary encoding of the Contour Code in a hash table facilitated simultaneous matching to the database and score level fusion of the multispectral bands in a single step (in STM mode), with no requirement for score normalization before fusion. The Contour Code is a generic orientation code for line-like features and can be extended to other biometric traits including fingerprints and finger-knuckle prints. All MATLAB codes of this work are available at \url{www.sites.google.com/site/zohaibnet/Home/codes}. 

\chapter{Conclusion} 

\label{Chapter8} 



This thesis proposed hyperspectral imaging and analysis methods for sparse reconstruction and recognition from hyperspectral images. A method was proposed to adaptively recover the spectral reflectance from an LCTF based hyperspectral imaging system. An automatic exposure adjustment technique was proposed for compensating the bias of various optical factors involved in the system. Experiments were performed on an in house developed and a publicly available database of a variety of objects in simulated and real illumination conditions. It was observed that the identification of the illuminant a priori, is particularly useful for estimating illuminant sources with a smooth spectral power distribution. The findings also suggest that automatic exposure adjustment based imaging followed by color constancy improves spectral reflectance recovery under different illuminations.

A self similarity based descriptor was proposed for cross spectral alignment of hyperspectral images. The proposed descriptor was designed to be robust to the spectral variation between bands and sensitive to the spatial misalignment. Experiments were conducted on hyperspectral face images with inter-band misalignment due to movement of subjects. The results indicated significant reduction in the mean registration error by the proposed self similarity based registration.

An important contribution of the thesis was the Joint Group Sparse PCA algorithm which addressed the problem of finding informative bands from spatio-spectral data where pixels are spatially related. The efficacy of proposed approach was demonstrated by experiments on four hyperspectral image datasets. The results showed that the proposed method outperforms Sparse PCA and Group Sparse PCA algorithms by achieving lower reconstruction error in compressed hyperspectral imaging and higher accuracy in hyperspectral face recognition. This thesis also proposed a Joint Sparse PCA algorithm for band selection from spectral only data which has no spatial relationship. Accurate ink mismatch detection was achieved by Joint Sparse Band Selection compared to using all features or using a subset of features selected by Sequential Forward Band Selection.

A novel multidirectional representation and binary encoding technique (Contour Code) was presented for robust and efficient hyperspectral palmprint recognition. An automatic technique was designed for the extraction of a region of interest from palm images acquired with non-contact sensors. The proposed method exhibits robust multispectral feature capturing capability and consistently outperformed existing state-of-the-art techniques in recognition experiments on multiple palmprint databases. The Contour Code is a generic orientation code for line-like features and can be extended to other biometric traits including fingerprints and finger-knuckle prints.

\section{Future Work}

Recovery of spectral reflectance is of crucial importance to a spectral imaging system. The proposed spectral reflectance recovery method has been shown to be effective for single illumination source in the same scene. In real world scenarios, the statistics of a scene may change from one spatial location to another because of the presence of multiple illuminations at different locations. In such scenarios, it is expected that spectral reflectance recovery may be possible with localized methods which is a promising direction of future work.

The proposed Joint Group Sparse PCA methodology is well adaptable to scenarios where the features can be implicitly or explicitly categorized into non-overlapping groups. The case of overlapping groups where features may be related to more than one group is still unresolved. Moreover, extension to discriminative group sparse problems will require a discriminative criterion to be incorporated into the formulation. Joint Group Sparse LDA is one promising direction for future work.

Hyperspectral ink mismatch detection is a promising direction in forensic document analysis. There is further room for improvement in hard to separate ink combinations, which encourages further research. Another interesting problem that requires deeper investigation is the case of highly disproportionate mixtures of inks. It is expected that the results presented in this thesis will trigger more research efforts in the direction of automated hyperspectral document analysis. The collected database is publicly available for research. 

\bibliography{Zohaib_PhD_Thesis}

\end{document}